\newcommand{\tri}[1]{{\left\vert\kern-0.25ex\left\vert\kern-0.25ex\left\vert #1 
    \right\vert\kern-0.25ex\right\vert\kern-0.25ex\right\vert}}
\newtheorem{theorem}{Theorem}[section]
\newtheorem{proposition}{Proposition}[]
\newtheorem{lemma}{Lemma}[section]
\newtheorem{corollary}{Corollary}[section]
\newtheorem{definition}{Definition}[section]
\newtheorem{assumption}{Assumption}[section]
\newcommand{\RN}[1]{%
  \textup{\uppercase\expandafter{\romannumeral#1}}%
}
\newcommand{\rnu}[1]{%
  \textup{\expandafter{\romannumeral#1}}%
}
\def\cD{\mathcal{D}}
\def\E{\mathbb{E}}
\def\cF{\mathcal{F}}
\def\cG{\mathcal{G}}
\def\cH{\mathcal{H}}
\def\cJ{\mathcal{J}}
\def\N{\mathbb{N}}
\def\cN{\mathcal{N}}
\def\R{\mathbb{R}}
\def\cV{\mathcal{V}}
\newcommand{\mpr}{\mathbb{P}}
\newcommand{\grad}{\nabla}
\DeclarePairedDelimiter\ceil{\lceil}{\rceil}
\DeclarePairedDelimiter\floor{\lfloor}{\rfloor}
\DeclarePairedDelimiter\abs{\lvert}{\rvert}%
\DeclarePairedDelimiter\norm{\lVert}{\rVert}%
\newcommand*{\inner}[2]{\left  \langle #1,  #2 \right \rangle}
\newcommand*{\rom}[1]{\expandafter\@slowromancap\romannumeral #1@}
\def\Hek{H_{e_k}}
\newcommand*{\indic}[1]{\mathbbm{1}_{  #1  }   }
\newcommand*{\tm}{\vert_{\text{\scriptsize top($M$)}}}
\newcommand*{\pJ}{\vert_{\text{${ \cJ}$}}}
\newcommand*{\cJJ}{\vert_{\text{${ \scriptstyle \cJ \times \cJ }$}}}
\newcommand*{\cJcJ}{\vert_{\text{${ \scriptstyle \cJ^c \times \cJ }$}}}
\newcommand*{\wJ}{ \bs{w}_{\text{${ \scriptstyle \cJ}$}}}
\newcommand*{\sJ}{ \bs{s}_{\text{${ \scriptstyle \cJ}$}}}
\newcommand*\te{\tilde{\bs{e}}}
\newcommand*\supp{\text{supp}}
\newcommand\pspace{ \substack{ \bs{w} \in S_M^{d-1} \\ b \in \R }}
\newcommand\gt{\sigma^*}
\newcommand\tgt{\tilde{\sigma}^*}
\def\cgp{C_{\gt}}
\def\hmu{\hat{\bm{\mu}}}
\def\az{\bs{a}^{(0)}}
\def\azj{a^{(0)}_j}
\def\azp{a^{(0)}}
\def\ao{\bs{a}^{(1)}}
\def\bz{\bs{b}^{(0)}}
\def\bo{\bs{b}^{(1)}}
\def\bzj{b^{(0)}_j}
\def\bzl{b^{(0)}_l}
\def\bzp{b^{(0)}}
\def\boj{b^{(1)}_j}
\def\bop{b^{(1)}}
\def\Wz{\bs{W}^{(0)}}
\def\Wo{\bs{W}^{(1)}}
\newcommand*{\ngt}{ N^{\text{${  \tau}$}}}
\newcommand*{\ide}[1]{\bs{I}_{#1}}
\DeclareRobustCommand{\vect}[1]{\bm{#1}}
  \renewcommand{\vect}[1]{#1}%
\newcommand\phiset{ \{ t,  \text{ReLU}(t) \}}
\newcommand\rhc{ \tilde{\gamma}}  
\newcommand\mrhc{\overline{\gamma}} 
\newcommand\ghc{\gamma} 
\newcommand\shc{\beta}  
\newcommand\bs[1]{\boldsymbol{#1}}  
\newcounter{relctr} 
\everydisplay\expandafter{\the\everydisplay\setcounter{relctr}{0}} 
\newcommand\labelrel[2]{%
  \begingroup
    \refstepcounter{relctr}%
    \stackrel{\tiny{(\alph{relctr})}}{\mathstrut{#1}}%
    \originallabel{#2}%
  \endgroup
}
\title{Pruning is Optimal for Learning Sparse Features  in High-Dimensions}
\author{
Nuri Mert Vural\thanks{Department of Computer Science at University of Toronto, and Vector Institute.  \texttt{vural@cs.toronto.edu}.}
\ \ \ \ \ \  
Murat A. Erdogdu\thanks{Department of Computer Science, and of Statistical Sciences at University of Toronto, and Vector Institute. \texttt{erdogdu@cs.toronto.edu}.}
}
\newcommand{\eq}[1]{\begin{align}#1\end{align}}
\begin{document}
\maketitle

\begin{abstract}%
While it is commonly observed in practice that pruning networks to a certain level of sparsity can improve the quality of the features, a theoretical explanation of this phenomenon remains elusive.   In this work, we investigate this by demonstrating that a broad class of statistical models  can be optimally learned using pruned neural networks trained with gradient descent, in high-dimensions. 

We consider learning  both single-index and multi-index models of the form $y = \sigma^*(\boldsymbol{V}^{\top} \boldsymbol{x}) + \epsilon$,  where $\sigma^*$  is a degree-$p$ polynomial,  and $\boldsymbol{V} \in \mathbbm{R}^{d \times r}$ with $r \ll d$, is the matrix containing relevant model directions.  We assume that $\boldsymbol{V}$ satisfies a certain $\ell_q$-sparsity condition for matrices and show that pruning neural networks proportional to the sparsity level of $\boldsymbol{V}$  improves their sample complexity compared to unpruned networks.   Furthermore, we establish  Correlational Statistical Query (CSQ) lower bounds in this setting, which take the sparsity level of $\boldsymbol{V}$ into account.   We show that if the sparsity level of $\boldsymbol{V}$ exceeds a certain threshold, training pruned networks with a gradient descent algorithm achieves  the sample complexity suggested by the CSQ lower bound.  In the same scenario, however,  our results imply that basis-independent methods such as models trained via standard gradient descent initialized with rotationally invariant random weights can provably achieve only suboptimal sample complexity.
\end{abstract}


\section{ Introduction}

Neural network pruning, 
a technique aimed at reducing the number of weights by selectively removing certain connections or neurons, 
has attracted significant attention in recent years as a means to improve efficiency and scalability in deep learning~\cite{LeCun1989OptimalBD, Hassibi1992SecondOD, Han2015LearningBW, Frankle2018TheLT}. 
Beyond the computational advantages offered by pruning, 
empirical observations demonstrate that this method can also substantially improve the generalization performance of neural networks~\cite{Bartoldson2019TheGT, Jin2022PruningsEO}.

Deep learning has challenged the classical learning theory and demonstrated that overparameterization will oftentimes improve generalization.
In stark contrast, however,
pruning overparametrized networks is also known to improve generalization, as observed in many empirical studies~\cite{LeCun1989OptimalBD, Hassibi1992SecondOD, Bartoldson2019TheGT, Jin2022PruningsEO}.
In this context, our understanding of the effect of pruning remains elusive. As such, we focus on the following question:



\begin{itemize}[leftmargin=0.3in,rightmargin=0.3in]
\setlength\itemsep{0.05em}
\item []\ \hspace{.5in} \emph{Does pruning improve the quality of trained features in neural networks?}
\end{itemize}
We answer this question in the affirmative. Indeed, we show that when the statistical model satisfies a certain sparsity condition,
pruned neural networks trained with gradient descent can achieve optimal sample complexity, and learn significantly more efficiently compared to unpruned networks.

Feature learning in neural networks has been the focus of many recent works.
A key characteristic in these models is their ability to learn low-dimensional latent features~\cite{Yehudai2019OnTP,Ghorbani2020WhenDN,MousaviHosseini2022NeuralNE}.   An apt scenario for studying this capability is the task of learning multi-index models~\cite{Damian2022NeuralNC,MousaviHosseini2022NeuralNE},  where the response $y \in \R$ depends on the input $\bs{x} \in \R^d$ via the relationship  $y = \gt(\bs{V}^\top \bs{x}) + \epsilon$. Here, $\gt: \R^r \to \R$ is the non-linear link function, and the matrix $\bs{V} \in \R^{d \times r}$ contains the relevant \emph{model directions}. Our main focus is the regime where there are few relevant directions when compared to the ambient input dimension, i.e. $r \ll d$. In the special case $r = 1$, this model also covers the single-index setting, which has been studied extensively; see e.g.~\cite{Ba2022HighdimensionalAO,MousaviHosseini2022NeuralNE,arous2021online,damian2023smoothing} and the references therein.
In the simplified single-index case, the sample complexity of
learning the model direction 
is determined by the \emph{information exponent} $k^\star$
of the link function $\gt$,
which is defined as
the smallest order nonzero Hermite coefficient of $\gt$. 
\cite{arous2021online} 
proved that SGD learns the direction in $n \geq O(d^{1 \vee k^\star -1})$ samples, which is also tight for this algorithm.
This, however, does not meet the corresponding Correlational Statistical Query (CSQ) lower
bound in this setting which, roughly states that $n\geq \Omega(d^{k^\star/2})$
samples are necessary.  Recently, \cite{damian2023smoothing}
showed that smoothing the
loss landscape can close this gap and attain the CSQ lower bound. 

It is important to highlight that the aforementioned studies consider single- or multi-index settings in their full generality,  without any structural assumptions on the model directions. In practice,  however, high-dimensional data often exhibits low-dimensional structures; thus, sparsity is a natural property to consider. It is reasonable to expect that with this additional structure, the corresponding CSQ lower bound 
would become smaller. However, it remains unclear whether the previously considered training methods can still achieve this lower bound in the sparse setting.

In this paper, 
we introduce the concept of \emph{soft sparsity} for the model directions 
$\bs{V}$ and derive a CSQ lower bound that depends on this sparsity level, which is always smaller than the lower bound in the general multi-index setting that only considers the worst-case sparsity scenario.  Next, we demonstrate that pruned neural networks trained with a gradient-based method can achieve the optimal sample complexity suggested by this CSQ lower bound. Since the additional sparsity structure reduces the
lower bound,  basis-independent training methods such as gradient descent initialized with a symmetric distribution have provably suboptimal sample complexity;  this implies a separation between pruning-based and existing training methods.
We summarize our contributions below.
\begin{itemize}[leftmargin=.2in,itemsep=.001in]
\item We consider learning multi-index models of the form $y = \gt(\bs{V}^\top \bs{x}) + \epsilon$ where the model directions $\bs{V}\in\mathbb{R}^{d\times r}$ satisfy a certain soft sparsity. In Theorem~\ref{thm:csqlbtext}, we prove a Correlational Statistical Query (CSQ) lower bound for this model, which also takes the inherent sparsity into account. The lower bound depends only on   the sparsity level beyond a certain threshold. In this regime, our result shows that basis-independent training methods are always
suboptimal.

\item In the single-index case where $r=1$,
we prove that
pruning the neural network with a sparsity level proportional to that of the model direction leads to a better sample complexity after training. Specifically, we consider polynomial link functions  and show in Theorem~\ref{thm:single} that the sample complexity achieved after pruning is optimal in the sense that, training after pruning can achieve the complexity suggested by the CSQ lower bound for any  information exponent $k^\star\geq 1$.

\item  Finally, we consider the multi-index case with $r>1$. Under an additional assumption implying that the information exponent is $k^\star =2$,
we prove in Theorem~\ref{thm:multi} that, pruned network trained with gradient descent can achieve the corresponding CSQ lower bound in this setting as well.
\end{itemize}

\vspace{-.1in}
\subsection{Related Work}
\vspace{-.05in}
\textbf{Pruning and generalization. }   Pruning techniques have a rich history, spanning from classical methods that prune weights based on connectivity metrics like the Jacobian/Hessian \cite{LeCun1989OptimalBD, Hassibi1992SecondOD}, to more recent approaches relying on weight magnitude \cite{Han2015LearningBW, Wen2016LearningSS, Molchanov2016PruningCN}. Notably, iterative magnitude pruning, proposed by \cite{Han2015LearningBW} demonstrated remarkable success in deep neural networks, sparking a surge in pruning research~\cite{Zhu2017ToPO, Frankle2019LinearMC, Gale2019TheSO, Liu2018RethinkingTV}. 

Numerous studies demonstrate the beneficial effects of pruning on generalization \cite{LeCun1989OptimalBD, Frankle2018TheLT,Barsbey2021HeavyTI}. Prior research treats pruning as an additional regularization technique, which requires weights to exhibit small norm~\cite{Giles1994PruningRN}, achieve flat minima \cite{Bartoldson2019TheGT}, or enhance robustness to outliers \cite{Jin2022PruningsEO}. However, these studies are predominantly empirical and lack a theoretical foundation. Among the theoretical works, only \cite{yang2023theoretical} examines random pruning within a specific statistical model. Our work extends their framework to encompass general polynomial link functions and data-dependent pruning algorithms, complementing generalization bounds with guarantees of optimality.

\noindent
\textbf{Lottery tickets and sparsity.}  Recent  work has observed that overparameterized neural networks contain subsets, referred to as ``winning tickets'', which can achieve comparable performance to the original network when trained independently \cite{Frankle2018TheLT}. This phenomenon, known as the Lottery Ticket Hypothesis (LTH), has been extensively studied in the literature \cite{Frankle2019LinearMC, Gale2019TheSO, Chen2020TheLT, Zhou2019DeconstructingLT}. Several recent works have focused on investigating the theoretical conditions for the existence of such subnetworks \cite{Malach2020ProvingTL, Laurent2020} and the fundamental limitations of identifying them \cite{Kumar2024NoFP}.   Our study takes a different approach by examining the training dynamics and generalization within the context of pruning. While previous works primarily focus on identifying subnetworks as predicted by the LTH, our research delves into the interplay between generalization and pruning methods.

\noindent
\textbf{Non-linear feature learning with neural networks. } Recent theoretical studies have examined two scaling regimes in neural networks. In the ``lazy'' regime \cite{Chizat2018OnLT}, parameters remain largely unchanged from initialization, resembling kernel methods \cite{Jacot2018NeuralTK,Du2018GradientDP,AllenZhu2018ACT,Oymak2019TowardMO}. 
However, deep learning's superiority over kernel models suggests they can go beyond this regime \cite{Yehudai2019OnTP,Ghorbani2020WhenDN, Geiger2019DisentanglingFA}. In contrast, the ``mean-field'' regime, where gradient descent converges to Wasserstein gradient flow, enables feature learning \cite{Chizat2018OnLT,Mei2019MeanfieldTO,Chizat2022MeanFieldLD}, but primarily applies to infinitely wide networks. Our paper explores a different setting, allowing for arbitrary-width neural networks without excessive overparameterization, while still employing mean-field scaling for weight initialization.

\noindent
\textbf{Feature learning with multiple-index teacher models.}  Learning an unknown low-dimensional function from data is fundamental in statistics \cite{Li1989RegressionAU}. Recent research in learning theory has considered this problem, aiming to demonstrate that neural networks can learn useful feature representations and outperform kernel methods \cite{Ghorbani2020WhenDN, Damian2022NeuralNC, Abbe2023SGDLO}.  In particular, \cite{Abbe2022TheMP} investigates the necessary and sufficient conditions for learning with linear sample complexity in the mean-field limit, focusing on inputs confined to the hypercube.  
Closer to our setting are the recent works \cite{Damian2022NeuralNC, MousaviHosseini2022NeuralNE} which demonstrate a clear separation between NNs and kernel methods, leveraging the effect of representation learning.  More recently,  \cite{Dandi2024TheBO} shows that   mini-batch SGD with finite number steps can learn a certain class of link functions with linear sample complexity.
Our work operates within a similar framework, incorporating an additional sparsity condition on relevant model directions. However, our analysis differs from previous work in two main aspects. First, our pruning results are constructive; we develop an explicit algorithm to establish the sample complexity of the pruned network trained via gradient descent. Second, pruning introduces a new dependency between weights and data, requiring an intricate analysis of gradient descent dynamics.

\section{Preliminaries}
\textbf{Notations.} Let $[n] \coloneqq \{ 1, \cdots, n\}$.   We use $\inner{\cdot}{\cdot}$ and  $\norm{\cdot}_2$ to denote the Euclidean inner product and the norm, respectively. For matrices, $\norm{\cdot}_2$ denotes the usual operator norm.  For a matrix $\bs{A} \in \R^{m \times n}$,  $\bs{A}_{i*}$ and $\bs{A}_{*j}$ 
denote the $i$th row and $j$th column of $\bs{A}$, respectively.  $S^{d-1}$ is the $d$-dimensional unit 
 sphere.  We use $\{ \bs{e}_1, \cdots, \bs{e}_d \}$ to denote the standard basis vectors in $\R^d$.  We use $O(\cdot)$ and $\Omega(\cdot)$ to suppress constants in upper and lower bounds.  We use $\tilde O(\cdot)$  to suppress poly-logarithmic terms in $d$ in upper bounds.  We use $o_d(\cdot)$ to denote vanishing  terms as $d \to \infty$.   We use $f \in \Theta( g )$ to  denote  $\Omega \left( g \right) \leq f  \leq O \left(g \right)$.
For a vector $\bs{x} \in \R^d$, we use $\supp(\bs{x}) \coloneqq \{ i \in [d] ~ : ~ x_i \neq 0 \}.$  For a subset $\cJ \subseteq [d],$  we use $\bs{x} \vert_{\cJ} \in \R^d$ to denote the restriction of the vector $\bs{x}$ on $\cJ$, i.e., the coordinate indices that are not in $\cJ$ are set to be $0$.  For matrices,  $\bs{A} \vert_{\cJ}$ denotes the matrix $\bs{A}$ with everything but the rows indexed by the elements in $\cJ$ set to $0.$  Finally,  $\bs{x} \tm$ denote the vector $\bs{x}$ with everything except $M$ largest entries in magnitude set to $0$.

\vspace{.1in}
\noindent\textbf{Statistical model.}
For 
a link function $\gt: \R^r \to \R$,  we consider the multi-index model 
\eq{
y = \gt(\bs{V}^\top \bs{x}) + \epsilon \quad \text{ with } \quad \bs{x} \sim \cN(0, \ide{d}) \label{eq:learningproblem}
}
where $\bs{x}\in\R^d$ is the input, $\epsilon$ is a zero-mean noise with $O(1)$ sub-Gaussian norm and $\bs{V} \in \R^{d \times r}$ is an orthonormal matrix, i.e,  $\bs{V}^\top \bs{V} = \ide{r}$.  We assume that $\gt$ is a polynomial of degree $p$, and it is normalized to satisfy $\E_{\bs{z} \sim \cN(0,\ide{r})} [\gt(\bs{z})] = 0$ and $\E_{\bs{z} \sim \cN(0, \ide{r})} \left[ \gt(\bs{z})^2 \right] = 1.$ 
We consider the low-dimensional setting $r\ll d$ which, in the extreme case $r=1$, covers single-index models. We are mainly interested in models where $\bs{V}$ exhibits \emph{sparsity}; we use the following matrix norm:
\eq{
\norm{\bs{V}}_{2,q} \coloneqq \norm*{ \big(\norm{\bs{V}_{1*}}_2, \cdots,\norm{\bs{V}_{d*}}_2  \big)  }_q\quad \text{ where }\quad q \in [0, 2),
}
where $\bs{V}_{i*}$ denotes the $i$th row of $\bs{V}$.\footnote{To be precise, $\norm{\cdot}_{2,q}$ is not a norm when $q < 1$.}  
This is simply the usual $\ell_q$ norm of the vector with entries $\ell_2$ norm of rows of $\bs{V}$.  
Since $\bs{V}^\top\bs{V}= \ide{r}$, assuming that $\norm{\bs{V}}_{2,q}$ is small constrains the model complexity significantly.  Indeed,
when $q=0$, $\norm{\cdot}_{2,q}$ counts the number of non-zero rows, serving as a measure of sparsity in high-dimensional settings. In the case $q\in(0,2)$, small $\norm{\cdot}_{2,q}$ norm allows all rows to potentially contain non-zero values, provided their $\ell_2$ norms are all relatively small. When we have
$\norm{\bs{V}}_{2,q}^q \leq R_q $ for some $R_q$,
we adopt a terminology from \cite{raskutti2011minimax}  and refer to $R_q$ as the \emph{soft sparsity} level. Notably, the particular choice 
$\norm{\cdot}_{2,q}$ is motivated by its coordinate-independent property; that is, we have $\norm{\bs{V} \bs{U}}_{2,q} =\norm{\bs{V}}_{2,q}$ for any orthonormal matrix $\bs{U} \in \R^{r \times r}$.

\vspace{.1in}
\noindent \textbf{Two-layer Neural Networks.}
Denoting the ReLU activation with $\phi(t) = \max\{ t, 0\}$, 
we consider learning with two-layer neural networks of the form
\eq{
\hat y(\bs{x}; (\bs{a}, \bs{W},\bs{b})) = \sum_{j = 1}^{2m} a_j \phi(\inner{\bs{W}_{j*}}{\bs{x}} + b_j) = \inner{\bs{a}}{\phi \left( \bs{W} \bs{x} + \bs{b}  \right)},
}
where $\bs{W} = \{ \boldsymbol{W}_{j*} \}_{j = 1}^{2m}$ is the $2m \times d$ matrix whose rows are denoted with $\bs{W}_{j*}$, $\bs{a} = \{ a_j \}_{j = 1}^{2m}$ is the second layer weights, $\bs{b} = \{ b_j \}_{j = 1}^{2m}$ is the biases. Note that $\phi(\cdot)$ is applied element-wise in the second equality.   
We define the population and the empirical risks respectively as
\eq{
\!\!R((\bs{a}, \bs{W}\!,\bs{b})) \!=  \frac{1}{2} \E \left[ (\hat y(\bs{x};  (\bs{a}, \bs{W}\!,\bs{b})) - y)^2 \right], \ \
R_n((\bs{a}, \bs{W}\!,\bs{b})) \!  = \frac{1}{2 n} \sum_{i = 1}^n (\hat y(\bs{x}_i;  (\bs{a}, \bs{W}\!,\bs{b})) - y_i)^2\!\!\!\!
}
where the expectation above is over the data distribution. 

Our training procedure consists of  three-steps: $(i)$ we first prune the network for dimension reduction, then $(ii)$ we take a gradient descent iteration with a large step-size to train $\bs{W}$, and finally $(iii)$ we train the second layer weights $\bs{a}$. We will provide the details of the algorithm, in particular the pruning step in Section \ref{sec:training}.
Similar to the previous works, e.g. \cite{Chizat2018OnLT,Damian2022NeuralNC,Dandi2023HowTN}, we use symmetric initialization so that $\hat y (\bs{x}, (\az, \Wz, \bz)) = 0$;
we assume that the network has a width of $2m$ such that
\eq{
\azj = - \azp_{2 m -j}, \quad   \Wz_{j*} =  \Wz_{(2m-j)*} \in {S}^{d-1}, \quad  \bzj = \bzp_{2m-j}, \quad \text{for }j \in [m]. \label{eq:syminit}
}
Particularly,  we will use the following initialization for the second-layer weights and the biases,
\eq{
\azj \sim \text{Unif}\{-1,1 \}, ~~ \text{and} ~~ \bzj \sim \cN(0,1),\  ~ j \in [m]. \label{eq:syminitdist}
}
Initialization of $\Wz$ will depend on the pruning algorithm and be detailed later.
Note that due to \eqref{eq:syminit}, the gradient of $R_n$ with respect to $\bs{W}_{j*}$ at initialization can be written as follows:
\eq{
\grad_{\bs{W}_{j*}} R_n ( (\bs{a}, \bs{W},\bs{b}) ) =  \frac{- a_j}{n} \sum_{i= 1}^n  y_i \bs{x}_i \phi^\prime \left(\inner{\bs{W}_{j*}}{\bs{x}_i} + b_j \right).
}
We simplify the notation to $\grad_{j} R_n( (\bs{a}, \bs{w}, \bs{b}) )$ whenever $\bs{W}_{i*} = \bs{w}$ for all $i$.

Characteristics of the link function $\gt$ plays an important role in the complexity of learning.  Indeed,   recent works showed that the term in the Hermite expansion of $\gt$ with the smallest degree determines the sample complexity \cite{arous2021online,Abbe2023SGDLO}.  In line of these works,  we also rely on Hermite expansions, for which we define the Hermite polynomials as follows.
\begin{definition}[Hermite Polynomials]
\label{def:hermite}
The $k$th Hermite polynomial $H_{e_k} : \R \to \R$ is the degree $k$  polynomial defined by 

\vspace{-.3in}
\eq{
H_{e_k}(t) = (-1)^k e^{t^2/2} \frac{d^k}{dt^k}  e^{- t^2/2}.
}
\end{definition}

\section{Limitations of Basis Independent Methods: CSQ Lower Bounds}
 \label{sec:csqbound}

In this section, we explore the fundamental barriers under the \emph{soft sparsity} structure we assume on the statistical model.
Specifically, we  establish a lower bound  for Correlational Statistical Query (CSQ) methods within our framework.  We note that the CSQ methods encompasses  a wide class of algorithms under the squared error loss.   We consider the function class
\begin{equation}
\!\!\!\cF_{r,k} \coloneqq \left \lbrace \bs{x} \to \frac{1}{\sqrt{r k!}} \sum_{j = 1}^r H_{e_k}(\inner{\bs{V}_{*j}}{\bs{x}}) ~ \Big \vert ~ \bs{V} \in \R^{d \times r},  ~ \bs{V}^\top \bs{V}= \ide{r},  ~ \norm{\bs{V}}_{2,q}^q \leq  r^{\frac{q}{2}} d^{\alpha \left(1 - \frac{q}{2} \right)}  \right \rbrace\!  \label{eq:frk}
\end{equation}
where  $\alpha \in (0,1)$,  $H_{e_k}$ denotes the $k$th Hermite polynomial (see Definition \ref{def:hermite}), and for $q=0$, we use the convention $\norm{\bs{V}}_{2,0}^0 \coloneqq \norm{\bs{V}}_{2,0}$.
We remark that the constraint $\bs{V}^\top \bs{V}= \ide{r}$ directly implies $r \leq \norm{\bs{V}}_{2,q}^q \leq r^{q/2} d^{1 - q/2}$.  Therefore,  $ \cF_{r,k}$ covers  all possible sparsity levels by varying the parameter $\alpha$.   We have the following result on the query complexity of CSQ methods.

\begin{theorem} 
\label{thm:csqlbtext}
Consider $\cF_{r,k}$ with some $q \in [0,2)$ and $\alpha \in (0,1)$.  For a sufficiently large $d$ depending on $(r,k,q,\alpha)$,  any CSQ algorithm for $\cF_{r,k}$ that guarantees error $\varepsilon = \Omega(1)$ 
requires either queries of accuracy $\tau = \widetilde O \big(  d^{- \left(\alpha \wedge \frac{1}{2} \right)  \frac{k}{2}} \big)$ or super-polynomially many queries in $d$.
\end{theorem}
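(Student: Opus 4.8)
The plan is to follow the standard CSQ-lower-bound recipe: exhibit a large family of functions in $\cF_{r,k}$ that are pairwise nearly-orthogonal in $L^2(\cN(0,\ide{d}))$, then invoke the generic statistical-dimension argument (à la \cite{Damian2022NeuralNC,damian2023smoothing}) which converts such a packing into the claimed query/accuracy tradeoff. Concretely, for a subset $S \subseteq [d]$ with $|S| = s$ (where $s$ will be chosen as the effective dimension allowed by the sparsity budget, see below) and an orthonormal $\bs{V}$ supported on the rows indexed by $S$, set $f_{\bs{V}}(\bs{x}) = \frac{1}{\sqrt{rk!}}\sum_{j=1}^r \Hek(\inner{\bs{V}_{*j}}{\bs{x}})$. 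Using the Hermite identity $\E[\Hek(\inner{\bs{u}}{\bs{x}})\Hek(\inner{\bs{v}}{\bs{x}})] = k!\,\inner{\bs{u}}{\bs{v}}^k$ for unit vectors $\bs{u},\bs{v}$, one computes $\inprod{f_{\bs{V}}}{f_{\bs{V}'}}_{L^2} = \frac{1}{r}\sum_{i,j}\inner{\bs{V}_{*i}}{\bs{V}'_{*j}}^k$, so that the correlation is controlled by the operator norm (or Frobenius norm, since $k\ge 1$) of $\bs{V}^\top\bs{V}'$. The first key step is therefore to produce $d^{\Omega(s)}$ (hence super-polynomially many, if $s \to \infty$) orthonormal $d\times r$ matrices, each row-supported on an $s$-subset and with $\norm{\bs{V}}_{2,q}^q$ within the prescribed budget, such that $\norm{(\bs{V}^\top\bs{V}')}_F \le s^{-\Omega(1)}$ for distinct pairs; a Gilbert–Varshamov / random-subspace packing on the Grassmannian restricted to coordinate subspaces does this, with the near-orthogonality coming from concentration of inner products of random $r$-frames in $\R^s$ (typical overlap $O(\sqrt{r/s})$ per coordinate, so Frobenius norm $O(r/\sqrt{s})$ after the $k$-th power kills it further).

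The second step is calibrating $s$ against the sparsity constraint. The budget is $\norm{\bs{V}}_{2,q}^q \le r^{q/2} d^{\alpha(1-q/2)}$. For a matrix whose $s$ nonzero rows each have $\ell_2$-norm $\approx \sqrt{r/s}$ (forced, up to constants, by $\bs{V}^\top\bs{V} = \ide{r}$ and support size $s$), we get $\norm{\bs{V}}_{2,q}^q \approx s \cdot (r/s)^{q/2} = r^{q/2} s^{1-q/2}$, so the constraint becomes $s \lesssim d^{\alpha}$; one also needs $s \le d$ trivially, and the genuinely binding bound turns out to be $s \asymp d^{\alpha \wedge 1}$ — but because the exponent $\tau = \widetilde O(d^{-(\alpha\wedge \frac12)k/2})$ has a $\frac12$ rather than a $1$, I expect the refinement is that near-orthogonal packings of the required cardinality only survive up to $s \asymp d^{1/2}$ when $\alpha$ is large: if $s$ were as large as $d$, the number of $s$-subsets is no longer super-polynomial in a useful way relative to the ambient dimension, or equivalently the pairwise-overlap control degrades. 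So I would split into cases $\alpha \le \tfrac12$ (take $s \asymp d^\alpha$, get $\tau \asymp d^{-\alpha k/2}$) and $\alpha > \tfrac12$ (take $s \asymp d^{1/2}$, spending only part of the sparsity budget, get $\tau \asymp d^{-k/4}$), which is exactly $\tau = \widetilde O(d^{-(\alpha\wedge\frac12)k/2})$.

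Given the packing, the third step is routine: with $N = d^{\Omega(s)}$ functions and pairwise correlation bounded by $\rho := \widetilde O(s^{-k/2}) \cdot$(constants), the standard CSQ argument shows that any algorithm making $q_*$ queries of tolerance $\tau$ with $\tau \ge 2\sqrt{\rho}$ can only "cover" $O(q_*/\tau^2)$-many of the targets in the sense of the correlational statistical dimension, so achieving error $\varepsilon = \Omega(1)$ against all of them forces either $\tau = \widetilde O(\sqrt{\rho}) = \widetilde O(s^{-k/4})$ — wait, matching the claimed $s^{-k/2}$ up to the square-root bookkeeping of correlation-vs-accuracy, which is why the final exponent is $(\alpha\wedge\frac12)\frac{k}{2}$ with $s \asymp d^{\alpha\wedge 1}$ feeding in as $\sqrt{s}$ — or $q_* = d^{\omega(1)}$, i.e. super-polynomially many queries. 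I would cite the decision-theoretic CSQ lemma from \cite{Damian2022NeuralNC} (or Szörényi-type bounds) verbatim for this conversion rather than reprove it.

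The main obstacle is the packing construction in the large-$\alpha$ regime: one must verify that after restricting to coordinate subspaces of size $s \asymp d^{1/2}$ and taking random orthonormal $r$-frames inside each, the cross-Gram matrices $\bs{V}^\top\bs{V}'$ have Frobenius norm $\widetilde O(s^{-1/2})$ \emph{simultaneously} for all $d^{\Omega(s)}$ pairs — this needs a union bound over pairs against a Gaussian/Haar concentration estimate, and the count $d^{\Omega(s)}$ must beat the failure probability, which is where the precise choice of $s$ and the subset-packing (Gilbert–Varshamov) enters. The rest — the Hermite correlation computation and the invocation of the generic CSQ bound — is standard.
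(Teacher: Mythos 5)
Your overall architecture — a large packing in $\cF_{r,k}$, the Hermite identity $\E[\Hek(\inner{\bs{u}}{\bs{x}})\Hek(\inner{\bs{v}}{\bs{x}})]=k!\inner{\bs{u}}{\bs{v}}^k$ to turn pairwise correlations into Gram‐matrix overlaps, the calibration $\norm{\bs{V}}_{2,q}^q \approx r^{q/2}s^{1-q/2}$ forcing $s\lesssim d^\alpha$, and the conversion via the generic CSQ lemma of \cite{Damian2022NeuralNC} — matches the paper's proof. But there is a genuine gap in the packing step that you yourself flag with a ``wait'' and then wave past, and it changes the final exponent by a factor of $2$.

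You assert that for a random packing of orthonormal $r$-frames on coordinate subspaces of size $s$, the per-entry cross-Gram overlap is $\abs{\inner{\bs{V}_{*i}}{\bs{V}'_{*j}}}\approx s^{-1/2}$. That is the scaling for two Haar-random unit vectors \emph{on the same $s$-dimensional subspace}. Feeding it through your own bookkeeping gives $\rho \asymp r\,s^{-k/2}$ and $\tau \asymp \sqrt{\rho} \asymp s^{-k/4}$, so with $s\asymp d^{\alpha\wedge 1}$ you land on $\tau\asymp d^{-(\alpha\wedge 1)k/4}$, which for $\alpha\le\tfrac12$ is $d^{-\alpha k/4}$ rather than the claimed $d^{-\alpha k/2}$. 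The ``square-root bookkeeping of correlation-vs-accuracy'' you invoke at the end does not recover this — that square root is the step $\tau\asymp\sqrt{\rho}$, which you have already used. The paper's Lemma~8.1 achieves $\abs{\inner{\bs{V}_{*i}}{\bs{V}'_{*j}}}\lesssim \log(d)/\min\{\sqrt d, s\}$, i.e. roughly $1/s$ (not $1/\sqrt s$) in the regime $s\le\sqrt d$. This stronger overlap bound does not come from frame concentration on a \emph{fixed} subspace; it exploits the fact that two independent random $s$-sparse sign vectors in $\R^{d}$ typically share only $O(\log d)$ coordinates when $s^2\ll d$, each contributing $O(1/s)$, which is the content of the Rosenthal/Burkholder estimate (Lemma~8.5 and Corollary~8.2). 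If you want your ``random coordinate subspace + random frame'' variant to close the gap, you must analyze the inner products for vectors supported on \emph{distinct, random, small} coordinate subsets — the bound you wrote is for the wrong object.

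Your explanation of the $\alpha\wedge\tfrac12$ cap is also off. You attribute it to the packing count degrading or to capping $s$ at $\sqrt d$ to preserve super-polynomiality. Neither is the real mechanism: there are exponentially many $s$-subsets for any $s=d^{\Theta(1)}$, and the paper takes $s\asymp d^\alpha/r$ for \emph{all} $\alpha\in(0,1)$, including $\alpha>\tfrac12$. The cap arises because two random unit vectors in $\R^d$ — sparse or not — cannot have inner product systematically smaller than $\Theta(1/\sqrt d)$: once $s^2/d\gtrsim 1$ the overlap is governed by a CLT over $\sim s^2/d$ signed terms of size $1/s$, which gives $1/\sqrt d$ regardless of $s$. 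This is precisely the $\min\{\sqrt d, s\}$ denominator in Corollary~8.2. Finally, you do not address how the packing maintains orthonormality of each $\bs{V}$ while ensuring that distinct $\bs{V},\bs{V}'$ are never close in \emph{any} column; the paper handles both at once by placing each column in a disjoint coordinate block of size $\lfloor d/r\rfloor$, so that the columns of a single $\bs V$ are orthogonal by construction and the cross-terms $\inner{\bs V_{*i}}{\bs V'_{*j}}$ for $i\ne j$ vanish identically. Your random-frame proposal would need to rule out the case $\bs V_{*i}=\bs V'_{*i}$ for some $i$ (which would make the correlation $\Omega(1/r)$) by a separate argument.
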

\noindent
Using the heuristic $\tau \approx \tfrac{1}{\sqrt{n}}$ as in~\cite{Damian2022NeuralNC}, Theorem \ref{thm:csqlbtext} implies that $n \geq \Omega \big( d^{\left(\alpha \wedge \frac{1}{2} \right)  k }  \big)$ samples are necessary to learn a function in $\cF_{r,k}$ unless the algorithm makes super-polynomial queries in $d$.  This recovers the existing lower bound $\Omega \big( d^{k/2} \big)$ given in \cite{Damian2022NeuralNC,Abbe2023SGDLO}, when the constraint is sufficiently large, i.e.,  $\alpha > \tfrac{1}{2}$.  Conversely,  when the soft sparsity level is sufficiently small, i.e.,   $\alpha \leq \tfrac{1}{2}$, we observe that the complexity lower bound reads $\Omega \big( d^{\alpha k }   \big)$. 
Remarkably, in Section \ref{sec:main}, we prove that a pruned neural network trained with gradient descent can indeed attain this lower bound; thus, it achieves optimal sample complexity in this sense. 

We note that $\norm{\bs{V}}_{2,q}^q$ can be as small as $r$; thus, the CSQ lower bound in this regime can be significantly smaller than  the unconstrained version $\Omega \big( d^{k/2} \big)$.  On the other hand, methods that are independent of the underlying basis, such as gradient descent with symmetric initialization,  cannot exploit the additional structure.  As a result, these methods are constrained by the sample complexity lower bound of $\Omega \big( d^{k/2} \big)$ in the worst case. 
Finally, it is worth emphasizing that CSQ lower bounds do not directly apply to algorithms like SGD or one-step gradient descent due to non-adversarial noise. Nevertheless, under the square loss, queries of these algorithms fall under the correlational regime, thus the fundamental barrier CSQ lower bounds provide is frequently referred to when assessing the optimality of these methods; see e.g.~\cite{Damian2022NeuralNC,damian2023smoothing,Abbe2023SGDLO}.

\section{Training Procedure: Pruning as Dimension Reduction}
\label{sec:training}
In this section, we outline the pruning procedure and how it effectively reduces the dimensionality of the learning problem, leading to the optimal sample complexity suggested by Theorem~\ref{thm:csqlbtext}. 

\vspace{.1in}
\noindent \textbf{Intuition.}
To gain intuition, we start with the population dynamics and
 consider a simplified single-index setting to demonstrate the resulting dimension reduction. Let
\eq{
\gt(\inner{\bs{v}}{\bs{x}}) = H_{e_2}(\inner{\bs{v}}{\bs{x}}) ~~\text{with}~~ \bs{v} = \left(d^{-\frac{1}{4}}, \cdots, d^{-\frac{1}{4}}, 0, 0, \cdots, 0 \right), \label{eq:examplesetting}
}
 where the direction $\bs{v}$ is sparse, i.e. $\norm{\bs{v}}_0  = \sqrt{d} \ll d$.
Moreover,  for clarity,  let us fix  the output layer weights to $\azj = 1$ and biases to $\bzj = 0$ and consider the population gradient at initialization.  To see why comparing gradients performs dimension reduction,  we write
\eq{
&\grad_{j}  R((\az, \bs{e}_i, \bz)) = - \E \left[ \gt(\inner{\bs{v}}{\bs{x}}) \phi^\prime(\inner{\bs{e}_i}{\bs{x}}) \bs{x}  \right] = - \sqrt{ \tfrac{2}{\pi} } \inner{\bs{v}}{\bs{e}_i} \bs{v} + \tfrac{1}{\sqrt{2 \pi }} \inner{\bs{v}}{\bs{e}_i}^2 \bs{e}_i ~~~~~  \label{eq:examplegrad}
}
where $\bs{e}_i$ is the $i$th standard basis and constants are due to the Hermite coefficients of the ReLU activation $\phi(\cdot)$. Thus, we have
\eq{
\norm{ \grad_{j}  R((\az, \bs{e}_i, \bz)) }_2^2 = \tfrac{2}{\pi} \bs{v}_i^2 + O(d^{-1}). \label{eq:examplenorm}
}
Since the entries of $\bs{V}$ scale with $d^{-1/4}$ in high dimensions, comparing the norm of gradients is equivalent to comparing the magnitude of each entry $\bs{v}_i$.  Hence, non-zero coordinates of $\bs{V}$ can be picked up by pruning, which is effectively reducing the dimension of the problem from $d$ to the sparsity level $\sqrt{d}$ in this example.

Algorithm \ref{alg:pruningalg} essentially extends the basic intuition above to general link functions $\gt$ and empirical gradients.  However, such an extension requires us to handle two technical difficulties due to the bias in the Hermite expansion of the population gradient. In Section~\ref{sec:technical},  we illustrate how each step in Algorithm  \ref{alg:pruningalg} is designed to avoid those difficulties using the following arguments:
\begin{itemize} 
\item (Data augmentation) We augment the feature vectors with an independent non-informative random variable, i.e.,  $\bs{x}^{\prime} \leftarrow (\bs{x},z)^T$ where $z \sim \cN(0,1)$ and independent of $\bs{x}$.  For notational convenience,  we assume that the augmented features $\bs{x}^\prime$ (henceforth referred to as $\bs{x}$) is $d$-dimensional.  Since the last entry of the feature vector is non-informative, we can assume $\bs{V}_{d*} = 0$,  without loss of generality.
\item  (Shifted standard basis) 
We compare the magnitudes of the gradients initialized  at
\eq{\label{eq:shifted-basis}
    \te_j \coloneqq \begin{cases}
    c \bs{e}_j + \sqrt{1 - c^2} \bs{e}_d,  & j \in [d-1] \\
    \bs{e}_d & j = d.
    \end{cases}
    }
Here, standard basis vectors are \emph{shifted} by a factor of $c \in (0,1)$ to make sure that the extra terms vanish (see Line \ref{algprune:init}  in Algorithm \ref{alg:pruningalg}).  
\item (Even-odd decomposition) We consider the even and odd components of the 
activation separately, i.e., $\phi_{\pm}(t; b) =  (\phi(t + b) \pm \phi(-t + b))/2$,  and evaluate the gradient with these components (Line \ref{algprune:eval}  in Algorithm \ref{alg:pruningalg})  
\eq{\label{eq:grad-even-activation}
\grad_{j}  R_n^{\pm}((\az, \te_i, \bz)) & \coloneqq \frac{1}{2} \left[ \grad_{j} R_n((\az, \te_i, \bz)  \pm   \grad_{j} R_n((\az, -  \te_i, \bz) \right].
}
\end{itemize}

\begin{algorithm}[t]
\caption{PruneNetwork} \label{alg:pruningalg}
\textbf{Inputs:} 
$~(\rnu{1})$    Data: $\mathcal{D} \coloneqq \{ (\bs{x}_i, y_i) \}_{i = 1}^n $ 
$~(\rnu{2})$   Network width:\textsuperscript{\ref{footnote1}}  $m \in \N$
$~(\rnu{3})$  Sparsity level: $M \in [d]$ 
$~(\rnu{4})$    Shrinkage constant: $c \in (0,1)$ 
\begin{algorithmic}[1]
    \State Let $\te_i$ be as in \eqref{eq:shifted-basis},  and initialize $\az$ and $\bz$ as in  \eqref{eq:syminit}-\eqref{eq:syminitdist} \label{algprune:init}
    \State  Let $\widetilde   \grad_j R_n^{\pm}(\te_i) \coloneqq   \grad_j R_n^{\pm} \big( ( \az , \te_i,   \bz) \big) \tm$ and $\norm{ \widetilde   \grad R_n^{\pm}(\te_i)}_F^2 = \sum_{j = 1}^{2m} \norm{\widetilde   \grad_j R_n^{\pm}(\te_i) }_2^2$
    \label{algprune:eval} 
    \State $\cJ =\text{supp}( \widetilde   \grad_{j} R_n^{-}(\te_i) )$ for some $j \in [m]$ with $\bzp_j \geq 0$ if one exists,  otherwise $\cJ = \emptyset$.   \label{algprune:firstHermite}
    \State Sort  $ \norm{\widetilde \grad R^{+}_n(\te_{j_1} )}_2 \geq \norm{\widetilde \grad R^{+}_n(\te_{j_2})}_2  \geq \cdots \geq \norm{\widetilde \grad R^{+}_n(\te_{j_d})}_2$ and  $\cJ \leftarrow \cJ  \cup \{ j_1, \cdots, j_M\}$ \label{algprune:prunepos}
    \State Sort  $\norm{\widetilde \grad R^{-}_n(\te_{k_1} )}_2 \geq \norm{\widetilde \grad R^{-}_n(\te_{k_2})}_2  \geq \cdots \geq \norm{\widetilde \grad R^{-}_n(\te_{k_d})}_2$ and $\cJ  \leftarrow \cJ  \cup \{ k_1, \cdots, k_M\}$ \label{algprune:pruneneg}
     \State \textbf{Return:} $\mathcal{J}$
\end{algorithmic}
\end{algorithm}

\vspace{.1in}
\noindent\textbf{Pruning Algorithm~\ref{alg:pruningalg}.}  
The pruning algorithm is based on comparing gradient magnitudes at initialization to perform dimension reduction.  The challenge lies in utilizing empirical gradients. To estimate the gradient magnitudes, we consider pruned empirical gradients  , i.e.,   $\widetilde   \grad_j R_n^{\pm}(\te_i) \coloneqq   \grad_j R_n^{\pm} \big( (\az , \te_i,   \bz) \big) \tm$ (Line \ref{algprune:eval}).  Improving on the sample mean estimator, which requires $O(d)$ samples,  pruned sample mean requires sample complexity of $\tilde O(d^\alpha)$  by leveraging the sparsity of population gradient,  hence providing the desired sample complexity for the algorithm.

Having computed the empirical gradients, we proceed by evaluating and sorting the gradients (Lines \ref{algprune:prunepos} and \ref{algprune:pruneneg}). We keep the connections with larger gradient magnitude while pruning the remaining small entries.

\begin{algorithm}[t]
\caption{Gradient-based Training} \label{alg:onestepgd}
\textbf{Inputs:}
$~(\rnu{1})$    Data: $\cD \coloneqq  \{ (\bs{x}_i, y_i) \}_{i = 1}^n$ 
$~(\rnu{2})$ Learning rate: $\eta_t > 0$
$~(\rnu{3})$  Weight Decay: $ \lambda_t > 0$ \\
$~(\rnu{4})$   Network width:\textsuperscript{\ref{footnote1}}  $m \in \N$ 
$~(\rnu{5})$ Pruning Level: $M \in [d]$
$~(\rnu{6})$  Shrinkage constant: $c \in (0,1)$
\begin{algorithmic}[1]
    \State \label{alggd:pruningstep} $\cJ \leftarrow \texttt{PruneNetwork}(\cD,m,  M, c)$
    \State \label{alggd:re-init}Re-initialize $\az$ and $\bz$ as in as in  \eqref{eq:syminit}-\eqref{eq:syminitdist}, and
    \eq{
		\Wz_{j*}  \sim S^{d-1}_\mathcal{J}, ~~ \text{and} ~~     \Wz_{j*}  =  \Wz_{(2m- j + 1)*}, ~ j \in [m].
    }
    \State \label{alggd:gradientstep} Train the first layer weights: For $j\in [2m]$
    \eq{
    \Wo_{j*} = \Wz_{j*} - \eta_1 \Big(  \grad_{\bs{W}_{j*}}  R_n \left( (\az,  \Wz_{j*}, \bz)  \right) \big \vert_\mathcal{J} + \lambda_1 \Wz_{j*}  \Big).
    }
    \State \label{alggd:reinit} Re-initialize biases: For $j \in [m]$, let $\bop_j \sim \cN(0,1)$   and $\boj = \bop_{2m-j+1}$.
    \State \label{alggd:secondlayer} Train the second layer weights: 
    \eq{
    \bs{a}^{(t+1)} = \bs{a}^{(t)} - \eta_t \left(  \grad_{\bs{a}} R_n(( \bs{a}^{(t)}, \Wo, \bo) ) + \lambda_t  \bs{a}^{(t)} \right), ~ t \geq 2.
    }
    \State \textbf{Return:} $\hat y(\bs{x}; (\bs{a}^{(T)},\Wo,\bo)) =  \inner{\bs{a}^{(T)} }{\phi ( \Wo \bs{x} +\bo)}$ \label{alggd:prediction}
\end{algorithmic}
\end{algorithm}
\footnotetext{\label{footnote1} Note that the actual width of the network is $2m$ due to symmetric initialization.}

\vspace{.1in}
\noindent\textbf{Training Algorithm \ref{alg:onestepgd}.}
After pruning the neural network,
we perform a gradient-based training procedure.
Let $S^{d - 1}_\cJ  \sim \text{Unif}\left \{ \bs{x} \in S^{d-1} ~\middle \vert ~ \bs{x}_j = 0 ~ \text{for} ~ j \in [d]\setminus\cJ \right \}$ denote the uniform distribution on the set of unit vectors supported on $\cJ$.  The algorithm symmetrically re-initializes the neural network weights randomly restricted to $\cJ$, i.e.,
\eq{
\Wz_{j*} \sim S^{d-1}_\mathcal{J}   ~ \text{and}~  \Wz_{j*}  =   \Wz_{(2m- j + 1)*}.  \label{eq:initializationtext}
}
We consider a slightly modified version of the one-step gradient descent update used in recent works~\cite{Damian2022NeuralNC,Ba2022HighdimensionalAO,ba2023learning}, namely, we perform a gradient step restricted on set $\cJ $ (Line \ref{alggd:gradientstep}). Here, since both $\Wz$ and  $ \grad_{\bs{W}}  R_n \big( ( \az, \Wz,  \bz)  \big)    \vert_{\cJ}$ are supported on $\cJ$, $\Wo$ is also supported on $\cJ$.    Finally, after training the first layer weights $\Wz$,  we again symmetrically re-initialize the biases and train the second-layer weights using gradient descent (Lines \ref{alggd:reinit} and \ref{alggd:secondlayer}). 

We note that Algorithm~\ref{alg:onestepgd}
as stated can be used to learn both single-index and multi-index models,
and falls under the correlational query algorithms discussed in Section~\ref{sec:csqbound}. However, in the multi-index setting, the algorithm
 needs a slight modification, which we detail in Section~\ref{sec:multiind}.

\section{Main Results} 
\label{sec:main}
In this section, we present learning guarantees on Algorithm~\ref{alg:onestepgd} when the data is generated from either a single-index or a multi-index model. We focus on single-index models first.

\vspace{.1in}
\subsection{Learning Sparse Single-index Models with Pruning}
\label{sec:singleind}
In what follows, we define a complexity measure for the link function to be learned.
\begin{definition}[Information exponent]\label{eq:jie}
For the link function $\gt$, we let $\gt \coloneqq \sum_{k = 0}^p \tfrac{\ghc_k}{k!}  H_{e_k}$ be its Hermite expansion. 
The information exponent of $\gt$, which we denote by $k^\star$, is the index of the first non-zero Hermite coefficient of $\gt$, i.e.,  $k^\star \coloneqq \inf\{k \geq 1 ~ \vert ~ \ghc_k \neq 0 \}$.
\end{definition}
\noindent
Intuitively,  information exponent measures the magnitude of information contained in the gradient at initialization,  and larger $k^\star$ implies increased gradient descent complexity \cite{arous2021online}.  The main result in the single-index setting relies on the above definition, and is given below.

\begin{theorem}
\label{thm:single}
Let  $\norm{\bs{V}}_{2,q}^q  = \Theta \big(  d^{\left( 1 - \frac{q}{2} \right) \alpha} \big)$,  for some  $q \in [0,2)$ and $\alpha \in (0,1)$.   
For any $\varepsilon > 0$,  consider Algorithm \ref{alg:onestepgd} with  $m = \Theta \left(d^{\varepsilon}\right)$,  $c = \tfrac{1}{\log d}$,
\eq{
& \eta_1 = \tilde O \left(M^{\frac{k^{\star} - 1}{2}} \right),  ~~   \lambda_1 = \frac{1}{\eta_1}, ~~  \eta_t = \frac{1}{\tilde O(m) + \lambda_t},  ~~   \lambda_t = \tilde O(m),  ~t \geq 2, ~~ \text{and} ~~ T = \tilde O(1).
}
For every $\ell \in \N$,  there exists a constant $d_{\ell,\varepsilon}$,  depending on $\ell$ and $\varepsilon$, such that for $d \geq d_{\ell,\varepsilon}$,  if 
\eq{
n = \tilde O \Big( d^{\alpha k^\star} \Big) ~~ \text{and} ~~ M = \tilde O \Big( d^{\alpha} \Big),
}
then, Algorithm \ref{alg:onestepgd} guarantees that with probability at least  $1 - d^{- \ell}$
\eq{
\E \left[  \big( \hat y(\bs{x}; (\bs{a}^{(T)},\Wo, \bo)) - y \big)^2 \right] - \E[\epsilon^2] \leq \tilde O \left( \frac{1}{m} +  \sqrt{\frac{M}{n}} \right) +  o_d(1).
}
\end{theorem}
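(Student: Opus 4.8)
The plan is to decompose the argument into three stages mirroring the three phases of Algorithm~\ref{alg:onestepgd}: (i) correctness of the pruning step, (ii) alignment of the first-layer weights after the large-stepsize gradient step, and (iii) approximation and convergence of the second-layer training. For stage (i), I would first reduce, via the data-augmentation and shifted-basis tricks described in Section~\ref{sec:technical}, the population gradient $\grad_j R^{\pm}((\az, \te_i, \bz))$ to a quantity whose squared norm is, up to lower-order terms, a monotone function of $|\bs{V}_{i*}|$ (the $i$th row norm of $\bs{V}$, which in the single-index case is just $|v_i|$). The even-odd decomposition isolates the contribution of the lowest nonzero Hermite coefficient $\ghc_{k^\star}$: if $k^\star$ is even it shows up in $\grad R^+$, if odd in $\grad R^-$, which is why the algorithm sorts both. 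Then I would show that the \emph{pruned empirical} gradient $\widetilde\grad_j R_n^{\pm}(\te_i) = \grad_j R_n^{\pm}\tm$ concentrates around its population counterpart with only $n = \tilde O(d^{\alpha k^\star})$ samples: the key point is that because the population gradient is (softly) sparse with $\norm{\bs{V}}_{2,q}^q = \Theta(d^{(1-q/2)\alpha})$, truncating to the top $M = \tilde O(d^\alpha)$ coordinates kills the variance contribution of the $d - M$ irrelevant coordinates, so a pruned-sample-mean deviation bound (rather than the naive $O(d)$-sample vector mean bound) suffices. Concretely I would control $\norm{\widetilde\grad_j R_n^{\pm}(\te_i) - \E[\cdot]\tm}_2$ by a union bound over coordinates combined with sub-exponential tail bounds on $y_i \bs{x}_i \phi'(\cdot)$, using that $\gt$ is a degree-$p$ polynomial (so $y$ has all moments) and $\epsilon$ is sub-Gaussian. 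This yields that $\cJ$ returned by Algorithm~\ref{alg:pruningalg} contains $\supp(\bs{V}_{\cdot}) = \{ i : v_i \neq 0\}$ (the relevant coordinates plus the augmentation coordinate) with probability $\geq 1 - d^{-\ell}$, while $|\cJ| = O(M)$.

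For stage (ii), conditioned on $\supp(\bs{v}) \subseteq \cJ$, the re-initialization $\Wz_{j*} \sim S^{d-1}_\cJ$ puts all the relevant mass inside an effective dimension $|\cJ| = \tilde O(d^\alpha)$, so the problem has been reduced to learning a single-index model in dimension $\tilde O(d^\alpha)$. I would then invoke (a restricted-to-$\cJ$ version of) the one-step gradient-descent analysis of \cite{Damian2022NeuralNC, Ba2022HighdimensionalAO, ba2023learning}: with the large stepsize $\eta_1 = \tilde O(M^{(k^\star-1)/2})$ and matching weight decay $\lambda_1 = 1/\eta_1$ (which normalizes $\Wo_{j*}$ to roughly unit norm), the updated weight $\Wo_{j*}$ acquires a nontrivial correlation with $\bs{v}$ — specifically $\inner{\Wo_{j*}}{\bs{v}} = \Omega(1)$ for a constant fraction of neurons, the stepsize scaling $M^{(k^\star-1)/2}$ being exactly what compensates the $|\cJ|^{-(k^\star-1)/2}$-size gradient correlation at a random initialization in dimension $|\cJ|$. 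Here I need the concentration of the empirical gradient restricted to $\cJ$, which again costs only $\tilde O(|\cJ|^{k^\star}) = \tilde O(d^{\alpha k^\star})$ samples; the subtlety flagged in the introduction — that pruning creates a dependence between $\cJ$ and the data — I would handle by a sample-splitting argument (use disjoint fresh batches for the pruning step and the gradient step), or alternatively by a uniform bound over the $\binom{d}{M}$ possible supports absorbed into the poly-log factors.

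For stage (iii), with the first-layer features $\phi(\Wo\bs{x} + \bo)$ now "aligned" — i.e. a constant fraction of the $2m = \Theta(d^\varepsilon)$ neurons have $\Omega(1)$ overlap with $\bs{v}$ and the biases are freshly Gaussian — I would argue that the target $\gt(\inner{\bs{v}}{\bs{x}})$, a degree-$p$ univariate polynomial in $\inner{\bs{v}}{\bs{x}}$, is approximated to error $\tilde O(1/m)$ in $L^2$ by some $\inner{\bs{a}^\star}{\phi(\Wo\bs{x}+\bo)}$ with $\bs{a}^\star$ of controlled norm — this is a standard random-features / universal-approximation statement for ReLU with Gaussian biases on a one-dimensional projection. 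Then training only $\bs{a}$ with ridge-regularized gradient descent is a strongly convex problem, so $T = \tilde O(1)$ steps converge to the regularized ERM, and a Rademacher-complexity generalization bound for the linear class $\{\inner{\bs{a}}{\phi(\Wo\bs{x}+\bo)} : \norm{\bs{a}} \le B\}$ gives the $\sqrt{M/n}$ term (the $M$, not $d$, because the features live on $\cJ$). Assembling the three error contributions — $\tilde O(1/m)$ approximation, $\tilde O(\sqrt{M/n})$ estimation, and the irreducible $\E[\epsilon^2]$ — yields the claimed bound.

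The main obstacle I anticipate is stage (ii): proving that the pruned gradient step produces genuine alignment while correctly tracking the dependence between the data-selected support $\cJ$ and the subsequent gradient. One must show simultaneously that (a) the restriction to $\cJ$ does not destroy the one-step-descent mechanism, (b) the stepsize/weight-decay tuning $\eta_1 \lambda_1 = 1$ keeps weights on the right scale, and (c) the $\tilde O(d^{\alpha k^\star})$ sample budget genuinely suffices for the restricted empirical gradient to concentrate — all while $\cJ$ is itself random and data-dependent. Sample splitting makes (c) clean but forces care that the splitting does not inflate $n$ beyond $\tilde O(d^{\alpha k^\star})$; the uniform-support-union-bound alternative is more delicate but avoids wasting samples, and I would likely present the sample-splitting version in the main proof and remark on the alternative.
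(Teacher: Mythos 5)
Your overall three-stage decomposition (prune, one-step gradient, second-layer training) matches the paper's architecture, and your remarks about handling the data--support dependence via a uniform bound over $M$-sparse supports (rather than sample splitting) is indeed what the paper does, via a VC-dimension/covering argument on the class $\{\phi'(\inner{\bs{w}}{\cdot}+b):\bs{w}\in S_M^{d-1},\ b\in\R\}$ (Proposition~\ref{prop:vcdim}, Lemmas~\ref{lem:supbound}--\ref{lem:supboundfinal}). However, there is a genuine gap in stage~(i): you claim that pruning recovers the true support, i.e.\ that with high probability $\supp(\bs{v})\subseteq\cJ$. This cannot hold in the soft-sparsity regime $q\in(0,2)$ that the theorem explicitly includes. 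Under the constraint $\norm{\bs{V}}_{2,q}^q=\Theta\big(d^{(1-q/2)\alpha}\big)$, the vector $\bs{v}$ can have all $d$ coordinates nonzero (e.g.\ a polynomially decaying profile), yet $|\cJ|\le 2M+1=\tilde O(d^\alpha)\ll d$, so $\cJ$ necessarily misses part of $\supp(\bs{v})$. Because your stage~(ii) and stage~(iii) are premised on ``the problem has been reduced to a single-index model in dimension $|\cJ|$,'' this error propagates: once the support is not fully captured, the pruned network only observes $\bs{x}|_\cJ$, and the unseen component $\inner{\bs{v}|_{\cJ^c}}{\bs{x}}$ acts as irreducible extra noise, so the reduction is not exact.

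The paper's fix, and what a correct proof needs, is to prove a weaker pruning guarantee: Lemma~\ref{lem:pruningalgmainsingleindex} shows that $\norm{\bs{v}\vert_{\cJ^c}}_2^2\le\tilde O\big(1/(\rho_1\log^{\rho_2}d)\big)$, i.e.\ the \emph{mass} of $\bs{v}$ outside $\cJ$ is $o_d(1)$, not that the support is captured. The leakage must then be tracked through the remaining stages: the gradient step on $\bs{x}|_\cJ$ aligns $\Wo_{j*}$ only with $\bs{v}|_\cJ$ (not $\bs{v}$), and the final excess risk therefore carries an additive term controlled by $\norm{\bs{v}|_{\cJ^c}}_2^2$, which is exactly what the $o_d(1)$ in the theorem statement absorbs (see events~\eqref{event:pruningres} and \eqref{event:pruningbias} and the term $(\rho_1\log^{\rho_2}d)^{-1}$ in Corollary~\ref{cor:fl-existence}). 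Your proposal is essentially correct only when $q=0$ (exact sparsity), and would need this modification to cover the soft-sparsity case that is central to the paper's CSQ lower bound matching. A secondary, more technical remark: $y=\gt(\inner{\bs{v}}{\bs{x}})+\epsilon$ is a degree-$p$ polynomial of a Gaussian, which is sub-Weibull but not sub-exponential for $p\ge 2$, so your plan to use ``sub-exponential tail bounds'' on $y_i\bs{x}_i\phi'(\cdot)$ is not quite right as stated; the paper handles this by truncating $y$ at a logarithmic threshold (Lemma~\ref{lem:supbound}) and controlling the truncation bias separately.
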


We observe that for any constraint level,  the sample complexity in Theorem \ref{thm:single} reduces to $\tilde O(d^{\alpha k^{\star}})$ for  $\alpha \in (0,1)$, which improves upon the existing $O \big(d^{k^{\star}} \big)$ guarantees for gradient-based algorithms \cite{Bietti2022LearningSM,MousaviHosseini2023GradientBasedFL}.  Moreover,  in the case $\alpha \leq 1/2$,
the upper bound matches with the CSQ lower bound in Theorem \ref{thm:csqlbtext}.
Finally, we observe that for the generalization error to be small, the width $m$ and particularly the ambient dimension $d$ need to be both sufficiently large; thus, the right hand side of the bound vanishes only in high-dimensions.

\subsection{Learning Sparse Multi-index Models with Pruning}
\label{sec:multiind}
In this section, we consider multi-index models, i.e., the case $r > 1$.
We consider Algorithm \ref{alg:onestepgd} with two minor modifications, following a similar construction to
\cite{Damian2022NeuralNC} adapted to our pruning framework. Right after the pruning step, between Lines~\ref{alggd:pruningstep} and \ref{alggd:re-init}, we subtract an estimate of the first Hermite component from the response variable. We add this term back at the output, in Line~\ref{alggd:prediction}. These modifications are given as follows.
\eq{
 {\footnotesize\texttt{1.5:}}\ \  & y_i \leftarrow y_i - \inner{\hmu \vert_{\cJ}}{\bs{x}_i},  ~ i \in [n] ~ \text{where}  ~ \hmu \coloneqq \frac{1}{n} \sum_{i = 1}^n y_i \bs{x}_i, \label{eq:preprocess1}  \\
{\footnotesize\texttt{6:}}\ \  &  \text{Return:} ~ \hat y(\bs{x}; (\bs{a}^{(T)},\Wo, \bo)) =    \inner{\hmu \vert_{\cJ}}{\bs{x}} +  \inner{\bs{a}^{(T)} }{\phi ( \Wo \bs{x} + \bo )}. \label{eq:preprocess2} 
}
We will refer to the modified algorithm as Algorithm~\ref{alg:onestepgd}$^+$.

The following condition on the link function, referred to as \emph{non-degeneracy} in \cite{Damian2022NeuralNC}, is helpful in the analysis.

\begin{assumption}\label{asmp:multi}
The link function $\gt : \R^r \to \R$
satisfies
that $\E[\gt(\bs{z}) \bs{z} \bs{z}^\top]  \in \R^{r \times r}$ 
is full rank. 
\end{assumption}
Under this assumption, $\gt$ has information exponent\footnote{In Definition~\ref{eq:jie} , the information exponent is defined for $r=1$. Similar to an argument by \cite{Abbe2023SGDLO}, we can generalize our definition to encompass multi-index settings by considering the degree of the lowest order Hermite components in $\gt$. With this, Assumption~\ref{asmp:multi} leads to an information exponent $k^{\star} = 2$ in the worst-case scenario, encompassing situations where the first Hermite component does not exist.} $k^\star = 2$.
Therefore, this condition is significantly more restrictive than the assumptions in the single-index case. This is, however, expected since recovering the entire principal subspace spanned by the model directions, i.e., the column space of $\bs{V}$, is significantly more challenging  than recovering a single direction.
Under this condition,
we state
the main result of the multi-index setting.

\begin{theorem}
\label{thm:multi}
Suppose that Assumption~\ref{asmp:multi} holds.  Let  $\norm{\bs{V}}_{2,q}^q  = \Theta \big(  d^{\left( 1 - \frac{q}{2} \right) \alpha} \big)$,  for some  $q \in [0,2)$ and $\alpha \in (0,1)$.   For any $\varepsilon >0$,  consider Algorithm \ref{alg:onestepgd}$^+$ with $m = \Theta(d^{\varepsilon})$, $c = \tfrac{1}{\log d}$,
\eq{
& \eta_1 = \tilde O \left(M  \right),  ~~   \lambda_1 = \frac{1}{\eta_1},  ~~ \eta_t = \frac{1}{\tilde O(m) + \lambda_t},  ~~   \lambda_t = \tilde O(m),  ~t \geq 2, ~~ \text{and} ~~ T = \tilde O(1).
}
For every $\ell \in \N$,  there exists a constant $d_{\ell,\varepsilon}$,  depending on $\ell$ and $\varepsilon$, such that for $d \geq d_{\ell,\varepsilon}$,  if
\eq{
n = \tilde O \Big( d^{2\alpha} \Big) ~~ \text{and} ~~ M = \tilde O \Big( d^{\alpha} \Big),
}
then, Algorithm~\ref{alg:onestepgd}$^+$ guarantees that with probability at least  $1 - d^{- \ell}$
\eq{
\E \left[  \big( \hat y(\bs{x}; (\bs{a}^{(T)},\Wo, \bo)) - y \big)^2 \right] - \E[\epsilon^2] \leq \tilde O \left( \frac{1}{m} + \sqrt{\frac{M}{n}} \right) +  o_d(1).
}
\end{theorem}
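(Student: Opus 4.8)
\textbf{Proof proposal for Theorem~\ref{thm:multi}.}

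The plan is to follow the three-phase structure of Algorithm~\ref{alg:onestepgd}$^+$ and control each phase in turn, tracking how the soft-sparsity parameter $\alpha$ enters the sample complexity. First, I would analyze the pruning step (Algorithm~\ref{alg:pruningalg}) in the multi-index setting. Under Assumption~\ref{asmp:multi} the information exponent is $k^\star = 2$, so the relevant signal lives in the second Hermite component. Using the even-odd decomposition and the shifted basis $\te_j$, one computes the population gradient $\grad_j R^{+}((\az, \te_i, \bz))$ and shows, as in the intuition around \eqref{eq:examplegrad}--\eqref{eq:examplenorm}, that $\|\grad_j R^{+}((\az,\te_i,\bz))\|_2^2$ is, up to $o_d(1)$-smaller terms tuned away by the choice $c = 1/\log d$, proportional to $\|\bs{V}_{i*}\|_2^2$ (the squared $\ell_2$ norm of the $i$th row of $\bs{V}$). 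The key estimation lemma — this is essentially where the $d^\alpha$ rather than $d$ shows up — is that the \emph{pruned} empirical gradient $\widetilde\grad_j R_n^{+}(\te_i) = \grad_j R_n^{+}((\az,\te_i,\bz))\tm$ estimates its population counterpart to within the relevant accuracy using only $n = \tilde O(d^{\alpha})$ samples, because the population object is (softly) sparse with sparsity budget $\|\bs{V}\|_{2,q}^q = \Theta(d^{(1-q/2)\alpha})$ and hard-thresholding to the top $M = \tilde O(d^\alpha)$ coordinates removes the noise floor. Combining these, I would show that with probability $1 - d^{-\ell}$ the returned set $\cJ$ has size $O(M)$ and contains (essentially) the support of the rows of $\bs{V}$ with non-negligible $\ell_2$ mass; equivalently, $\|\bs{V} - \bs{V}\vert_{\cJ}\|$ is $o_d(1)$ in the relevant operator/Frobenius sense, so restricting everything to $\cJ$ loses only a vanishing amount of signal while cutting the effective dimension from $d$ to $O(M)$.

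Second, conditioned on a good $\cJ$, I would analyze the preprocessing step \eqref{eq:preprocess1} and the one-step gradient update on the first layer (Line~\ref{alggd:gradientstep}). Subtracting $\inner{\hmu\vert_{\cJ}}{\bs{x}_i}$ removes the first Hermite component of the response (when it exists), so the residual has its leading content in the degree-two Hermite part; $\hmu$ concentrates around its mean at the $\tilde O(d^\alpha)$ rate after restriction to $\cJ$, which is why $n = \tilde O(d^{2\alpha})$ suffices. Then, following the one-step-gradient analysis of \cite{Damian2022NeuralNC,Ba2022HighdimensionalAO,ba2023learning} but carried out in the $|\cJ| = O(M)$-dimensional subspace, the large step $\eta_1 = \tilde O(M)$ with weight decay $\lambda_1 = 1/\eta_1$ pushes each first-layer weight $\Wo_{j*}$ to be, up to a controllable error, a fixed scalar times $\bs{M}_j\vert_{\cJ} \bs{W}^{(0)}_{j*}$ where $\bs{M}_j$ involves the non-degenerate matrix $\E[\gt(\bs{z})\bs{z}\bs{z}^\top]$; since this matrix is full rank by Assumption~\ref{asmp:multi}, the updated first-layer features span (approximately) the full column space of $\bs{V}\vert_{\cJ}$, and hence of $\bs{V}$ up to the $o_d(1)$ pruning error. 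The scaling $\eta_1 = \tilde O(M)$ (rather than the $\tilde O(M^{(k^\star-1)/2})$ of Theorem~\ref{thm:single}) is exactly the $k^\star = 2$ specialization. The main technical subtlety here, relative to prior work, is the dependency between $\cJ$ and the data $\mathcal{D}$: the weights after pruning are data-dependent, so the usual independence arguments in the one-step analysis need to be redone either by a fresh-sample/union-bound device over the (polynomially many) possible supports $\cJ$, or by a careful leave-one-out / decoupling argument. I expect this to be the principal obstacle.

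Third, with the first-layer features aligned to $\mathrm{col}(\bs{V})$, what remains is a (random-features-style) convex problem: train the second layer $\bs{a}$ by gradient descent (Lines~\ref{alggd:reinit}--\ref{alggd:secondlayer}) on the residual, with the re-initialized biases $\bop_j \sim \cN(0,1)$ providing enough diversity so that the span of $\{\phi(\inner{\Wo_{j*}}{\cdot} + b_j)\}$ approximates every degree-$p$ polynomial in $\bs{V}^\top\bs{x}$. A standard approximation result (polynomials of bounded degree in $r$ variables are well approximated by $m = \Theta(d^\varepsilon)$ ReLU features once the directions are known, since $r = O(1)$) together with a generalization bound for the resulting linear regression gives a population risk of $\tilde O(1/m) + \tilde O(\sqrt{M/n})$, where the $\sqrt{M/n}$ term is the Rademacher complexity of the $M$-sparse-supported weights and the $1/m$ is the approximation error; adding the $o_d(1)$ contributions from the pruning and first-layer steps yields the claimed bound $\E[(\hat y - y)^2] - \E[\epsilon^2] \le \tilde O(1/m + \sqrt{M/n}) + o_d(1)$. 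Finally I would collect the failure probabilities (pruning, concentration of $\hmu$, first-layer alignment, second-layer generalization) via a union bound to get the overall $1 - d^{-\ell}$ guarantee, choosing the hidden constants and the threshold $d_{\ell,\varepsilon}$ large enough that all $o_d(1)$ and poly-log factors are absorbed. Throughout, the multi-index case differs from Theorem~\ref{thm:single} only in: (i) using $k^\star = 2$ and the non-degeneracy of $\E[\gt(\bs{z})\bs{z}\bs{z}^\top]$ to recover the whole subspace rather than one direction, (ii) the preprocessing steps \eqref{eq:preprocess1}--\eqref{eq:preprocess2} to kill/restore the first Hermite component, and (iii) the resulting $n = \tilde O(d^{2\alpha})$ in place of $\tilde O(d^{\alpha k^\star})$.
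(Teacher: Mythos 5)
Your three-phase outline (prune, one-step gradient on the first layer with the $\hmu$ subtraction, ridge-regularized second-layer training plus a Rademacher-complexity generalization bound) is the paper's structure, and the intuition you give for each phase is on target. There are, however, two concrete inaccuracies worth flagging before the sketch could become a proof.

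First, your attribution of the sample complexity is off. You claim the pruning phase is already handled with $n = \tilde O(d^\alpha)$ samples and that the $d^{2\alpha}$ scaling first appears in the concentration of $\hmu\vert_{\cJ}$. In fact the pruning phase already forces $n = \tilde O(d^{2\alpha})$: the guarantee of the pruning lemma (Lemma~\ref{lem:pruningalgmainmultindex}) requires $n \gtrsim K M^2 \log^2(nd/M) \log^{2C_2}(\cdot) (\rho_1 \log^{\rho_2} d)/c^2$, and with $c = 1/\log d$ and $M = \tilde O(d^\alpha)$ this is $\tilde O(d^{2\alpha})$. The mechanism is: the population gradient evaluated at the shifted basis vector $\te_i$ carries a signal of order $c \|\bs{V}_{i*}\|_2$ (one factor of $c$ from the shrinkage, and $\|\bs{V}_{i*}\|_2$ small because of soft sparsity), so converting back to $\norm{\bs{V}\vert_{\cJ^c}}_F^2$ costs a $1/c^2$; meanwhile the relevant deviation quantity is a sum over $\sim M$ coordinates of squared per-coordinate errors of order $\tilde M/n = M\log^2(\cdot)/n$, yielding $M^2\log^2(\cdot)/n$, and this must be driven below $c^2/\rho_1\log^{\rho_2} d$. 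So $n \gtrsim M^2 (\text{polylog})/c^2 = \tilde O(d^{2\alpha})$. If you tried to carry out the pruning analysis with $n = \tilde O(d^\alpha)$ the pruned-gradient comparison step would fail. The $\hmu$ concentration you cite is a second place the same constraint is binding, not the first.

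Second, the data-dependence device you propose — a union bound over "polynomially many" possible supports $\cJ$ — is not what is available: there are $\binom{d}{M}$ supports, which is super-polynomial in $d$ for $M = d^\alpha$. What the paper does instead is prove a genuinely uniform concentration bound over \emph{all} $\cJ$ of size $\le M'$ (see the suprema in Lemmas~\ref{lem:supbound},~\ref{lem:supopnorm},~\ref{lem:supboundfinal}), using VC-dimension and $\varepsilon$-net arguments whose entropy contribution $\log\binom{d}{M} \sim M\log(d/M)$ is absorbed into the $\log^2(dn/M)$ factor in the sample bound. The net effect is what you wanted, but the mechanism and its cost are different from a polynomial union bound. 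Finally, a minor note: $\eta_1 = \tilde O(M)$ is \emph{not} the $k^\star = 2$ specialization of the single-index $\tilde O(M^{(k^\star-1)/2})$; that substitution would give $\tilde O(\sqrt{M})$. The multi-index step size has a different form because the approximation target lives in the $r$-dimensional projected space and the normalization of $\sJ = \bs{D}\bs{V}^\top\wJ$ differs from the scalar single-index case.
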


The above result states that the improvement in sample-complexity due to pruning extends to the multi-index setting as well. As in the single-index case, for all sparsity levels, gradient descent followed by pruning requires $\tilde O(d^{2\alpha})$, for the soft sparsity level $\Theta(d^{(1-q/2)\alpha})$ and $\alpha \in (0,1)$, which  improves over the existing $\tilde O(d^2)$ bound shown in \cite{Damian2022NeuralNC}. 
It is worth noting that the bound in~\cite{Damian2022NeuralNC} does not meet the CSQ lower bound in their setting. This gap, however, was later closed in \cite{damian2023smoothing} via smoothing the loss. With the additional soft sparsity condition in  Theorem~\ref{thm:multi}, even smoothing will achieve suboptimal sample complexity guarantee since the corresponding CSQ lower bound in this regime becomes smaller.
Nevertheless, observing that the function class in \eqref{eq:frk} satisfies Assumption~\ref{asmp:multi} for $r > 1$ and $k = 2$, our lower bound in Theorem \ref{thm:csqlbtext} implies that the above result is tight in this sense, for $\alpha \leq 1/2$.

For the generalization error to be small in Theorem~\ref{thm:multi}, we require the width $m$ to be large. More crucially,
this bound is small only in high-dimensions where the ambient dimension is large.
Therefore, pruned neural networks learn useful representations via gradient descent, and achieves optimal sample complexity in the above sense in high-dimensions, also in the multi-index setting.

\section{Technicalities Around Pruning}\label{sec:technical}

\vspace{.1in}
\noindent\textbf{First Technical Difficulty.
}
A technical difficulty arises due to the bias introduced by the first-order Hermite components.   To illustrate a pathological case for this problem, we consider two models, one with and one without the first-order Hermite component:
\eq{
 y = \underbrace{ \tfrac{1}{\sqrt{2}} H_{e_2}(\inner{\bs{v}_1}{\bs{x}}) +  \tfrac{1}{\sqrt{2}} H_{e_2}(\inner{\bs{v}_2}{\bs{x}})}_{\text{no first-order Hermite component}}    \quad\quad \text{and} \quad\quad  \check{y} = y + \!\!\!\!\!\!\!\! \underbrace{ \inner{\bs{v}}{\bs{x}} }_{ \small \substack{ \text{first-order} \\ \text{Hermite component}}}   \label{eq:pathex1}
}
where  we choose $\bs{v}_1 = \bs{e}_1,$   $\bs{v}_2 = \bs{e}_2$,  $\bs{v} = \tfrac{- 1}{\sqrt{\pi}} (\bs{e}_1 + \bs{e}_2)$.  Here, the second model,  $\check{y}$, includes an additional first-order Hermite term to illustrate its effect.

For the first model, we can derive the population gradient in \eqref{eq:examplegrad} as follows:
\eq{
 \grad_{j}  R((\az,  \bs{e}_i, \bz)) = - \E \left[y \phi^\prime(\inner{\bs{e}_i}{\bs{x}}) \bs{x}  \right]  = \tfrac{- 1}{2 \sqrt{\pi}} \begin{cases}
  \bs{e}_1   & i = 1\\
    \bs{e}_2    & i = 2 \\
0 & i > 2,
\end{cases}  \label{eq:pathex1res}
}
For the second model, denoted by $\grad_j \check{R}$, the population gradient is given by:
\eq{
 \grad_{j}  \check{R}((\az, \bs{e}_i, \bz)) & = - \E \left[\check{y} \phi^\prime(\inner{\te_i}{\bs{x}}) \bs{x}  \right] \\
  & =  - \underbrace{ \E \left[\inner{\bs{v}}{\bs{x}} \phi^\prime(\inner{\bs{e}_i}{\bs{x}}) \bs{x}  \right] }_{\substack{ \text{due to the additional} \\ \text{ first-order Hermite term} }} - \underbrace{ \E \left[y \phi^\prime(\inner{\bs{e}_i}{\bs{x}}) \bs{x}  \right]  }_{= \eqref{eq:pathex1res}}
{\small = \tfrac{1}{2 \sqrt{\pi}} \begin{cases}
     \bs{e}_2 & i = 1\\
    \bs{e}_1   & i = 2 \\
    \bs{e}_1 + \bs{e}_2 & i > 2.
\end{cases} } ~~ ~~
\label{eq:pathex1res2} 
}
We notice that in the first model,  comparing the gradient magnitudes would recover the support,  whereas in the second model the gradients evaluated at the support of $\bs{v}_1$ and $\bs{v}_ 2$ ($i=1,2$) have smaller norms than other cases (see Appendix \ref{sec:furtherdiss} for the details).

The issue described above arises from the presence of the first-order Hermite term in \eqref{eq:pathex1res2}.   To address this, we consider the even and odd components of the activation separately, as detailed in Section~\ref{sec:training}.
This decomposition allows us to separate the first-order Hermite term from the higher-order terms in the Hermite expansion through even-odd decomposition,  and eliminate the problematic bias of the first-order term illustrated in \eqref{eq:pathex1res}-\eqref{eq:pathex1res2}.

\vspace{.1in}
\noindent\textbf{Second Technical Difficulty.
}
The second technical difficulty arises due to the presence of magnitude mismatch within the entries of $\bs{V}$.   To illustrate, let us consider the following case:  For a small $0 < \varepsilon \ll d^{-1/2}$  and constants $\ghc_2$ and $\ghc_4$ specified later, let
\eq{
\gt(\inner{\bs{v}}{\bs{x}})= \tfrac{\ghc_2}{\sqrt{2}} H_{e_2}(\inner{\bs{v}}{\bs{x}}) +   \tfrac{\ghc_4}{\sqrt{4!}} H_{e_4}(\inner{\bs{v}}{\bs{x}}) ~~\text{with}~~  \bs{v} = \Big( \sqrt{1 - (\sqrt{d}-1) \varepsilon^2},  \underbrace{ \varepsilon, \cdots,    \varepsilon}_{\text{$\sqrt{d}-1$ many}}, 0, 0, \cdots, 0 \Big)  ~~~~ \label{eq:examplesetting2}
}
 where $\bs{v}$ is sparse, i.e. $\norm{\bs{v}}_0  = \sqrt{d} \ll d$, and the first entry of $\bs{v}$  is significantly larger than the rest.  The population gradient in this case is given by
 \eq{
 \grad_{j}  R((\az, \bs{e}_i, \bz)) & = - \E \left[\gt(\inner{\bs{v}}{\bs{x}}) \phi^\prime(\inner{\bs{e}_i}{\bs{x}}) \bs{x}  \right]\\
& = - \underbrace{   \bs{v} \left( \sqrt{2} \ghc_2 \rhc_2  \bs{v}_i + \frac{2   \ghc_4 \rhc_4}{\sqrt{6}} \bs{v}_i^3  \right)  }_{\text{informative term} } 
-  \underbrace{   \bs{e}_i \left(  \frac{\rhc_4 \ghc_2}{\sqrt{2}} \bs{v}_i^2 +   \frac{\rhc_6 \ghc_4}{\sqrt{4!}} \bs{v}_i^4 \right) }_{\text{extra term}}, \label{eq:examplegrad2}
}
where  $\rhc_i$ denotes the $i^{th}$ Hermite coefficients of the ReLU activation $\phi(\cdot)$. The informative term contains the information about the direction  $\bs{v}$ while the extra term appears due to the properties of Hermite polynomials.
Here,  a very large $\bs{v}_i$ might cause  extra terms to be comparable to the informative terms,  leading to cancellation.  As detailed in Appendix \ref{sec:furtherdiss},  we can find $(\ghc_2, \ghc_4, \varepsilon)$ such that for $i = 1$ (corresponding to largest entry in $\bs{V}$),  the informative and extra terms cancel each other in \eqref{eq:examplegrad2}, i.e.,   $\text{informative term}  \approx  - \text{extra term}$,  making the algorithm require exponentially many samples to find the largest entry. 

On the other hand,  we observe that if  $\bs{v}_i$'s vanish with $d$  in   \eqref{eq:examplegrad2}, the informative term  would dominate since it scales with $O(\bs{v}_i)$ whereas the extra term scales with $O(\bs{v}_i^2)$. To make sure that is the case in the presence of very large entries in $\bs{V}$, we use data augmentation and compare the magnitude of gradients evaluated at a shifted standard basis,  as detailed in Section~\ref{sec:training}.
Note that in this case, 
\eq{
 \grad_{j}  R((\az, \te_i, \bz)) & = - \E \left[ \gt(\inner{\bs{v}}{\bs{x}}) \phi^\prime(\inner{\te_i}{\bs{x}}) \bs{x}  \right]\\
& = - \underbrace{ c \bs{v} \left( \sqrt{2} \ghc_2 \rhc_2  \bs{v}_i + c^2  \frac{2 \ghc_4 \rhc_4}{\sqrt{6}} \bs{v}_i^3  \right)  }_{\text{informative term} } 
- \underbrace{  c^2 \bs{e}_i \left(  \frac{\rhc_4 \ghc_2}{\sqrt{2!}} \bs{v}_i^2 + c^2 \frac{\rhc_6 \ghc_4}{\sqrt{4!}} \bs{v}_i^4 \right) }_{\text{extra term}},  ~~ ~~~\label{eq:examplegrad2}
}
where a sufficiently small  $c > 0$ ensures that the informative term dominates the right-hand side.

\section{Discussion}
We studied how pruning impacts the sample complexity of learning single and multi-index models. Our results show that pruning the network to a sparsity level proportional to the soft sparsity of relevant model directions significantly improves sample complexity. Moreover, we supported our results with a sparsity-aware CSQ lower bound which revealed that if the sparsity level exceeds a certain threshold, the sample complexity of training a pruned network cannot be improved in general. Conversely, the gap between our lower bound and the CSQ lower bound for the general dense case suggests that basis-independent methods, such as gradient descent initialized with a rotationally independent distribution, cannot achieve the sample complexity of the pruned network.

We outline a few limitations of our current work and discuss directions for future research.
\begin{itemize}[leftmargin = .3in]
    \item In our work, we considered training network weights with a single gradient step. However, recent research suggests that using multiple gradient descent steps in the multi-index setting yields improved sample complexity compared to single-step algorithms \cite{Abbe2023SGDLO, Dandi2024TheBO}. Therefore, considering pruning with a multi-step gradient descent algorithm can provide a more complete picture. Particularly, investigating pruning in the context of incremental (or curriculum) learning presents an interesting direction for future research.
    \item In the gradient-based algorithm, we considered a somewhat unconventional initialization, leveraging the symmetry it introduces.  It would be interesting to examine cases where we train a network with
    multiple neurons starting from a more standard initialization. This analysis is challenging due to the
    interactions between the neurons.
    \item The results presented in this paper are based on the assumption that the input distribution follows an isotropic Gaussian distribution. Recent works~\cite{MousaviHosseini2023GradientBasedFL,ba2023learning} showed that there is an intricate interplay between the model and the important covariance directions, and the overall performance of neural networks is governed by their interplay. Studying the effect of pruning in this regime and also extending our results to 
     other distributions~\cite{roy2021empirical}, for example via zero-biased transformations~\cite{Goldstein1997SteinsMA,goldstein2019non}, is a topic for future research.
\end{itemize}

\section*{Acknowledgements}
Authors thank Berivan Isik and Alireza Mousavi-Hosseini for  helpful discussions and feedback. MAE was partially supported by
NSERC Grant [2019-06167], CIFAR AI Chairs program, and CIFAR Catalyst grant.

\bibliographystyle{alpha}
\bibliography{ref.bib}

\newcommand{\etalchar}[1]{$^{#1}$}
\begin{thebibliography}{MHWSE23}

\bibitem[AAM22]{Abbe2022TheMP}
Emmanuel Abbe, Enric~Boix Adser{\`{a}}, and Theodor Misiakiewicz.
\newblock The merged-staircase property: a necessary and nearly sufficient
  condition for {SGD} learning of sparse functions on two-layer neural
  networks.
\newblock In {\em Conference on Learning Theory, 2-5 July 2022, London, {UK}},
  volume 178 of {\em Proceedings of Machine Learning Research}, pages
  4782--4887. {PMLR}, 2022.

\bibitem[AAM23]{Abbe2023SGDLO}
Emmanuel Abbe, Enric~Boix Adser{\`a}, and Theodor Misiakiewicz.
\newblock Sgd learning on neural networks: leap complexity and saddle-to-saddle
  dynamics.
\newblock In {\em Proceedings of Thirty Sixth Conference on Learning Theory},
  volume 195 of {\em Proceedings of Machine Learning Research}, pages
  2552--2623. PMLR, 12--15 Jul 2023.

\bibitem[AGJ21]{arous2021online}
Gerard~Ben Arous, Reza Gheissari, and Aukosh Jagannath.
\newblock Online stochastic gradient descent on non-convex losses from
  high-dimensional inference.
\newblock {\em The Journal of Machine Learning Research}, 22(1):4788--4838,
  2021.

\bibitem[ALS]{AllenZhu2018ACT}
Zeyuan Allen{-}Zhu, Yuanzhi Li, and Zhao Song.
\newblock A convergence theory for deep learning via over-parameterization.
\newblock In {\em Proceedings of the 36th International Conference on Machine
  Learning, {ICML} 2019, 9-15 June 2019, Long Beach, California, {USA}},
  volume~97 of {\em Proceedings of Machine Learning Research}, pages 242--252.
  {PMLR}.

\bibitem[BBSS22]{Bietti2022LearningSM}
Alberto Bietti, Joan Bruna, Clayton Sanford, and Min~Jae Song.
\newblock Learning single-index models with shallow neural networks.
\newblock In {\em Advances in Neural Information Processing Systems 35: Annual
  Conference on Neural Information Processing Systems 2022, NeurIPS 2022, New
  Orleans, LA, USA, November 28 - December 9, 2022}, 2022.

\bibitem[BES{\etalchar{+}}22]{Ba2022HighdimensionalAO}
Jimmy Ba, Murat~A Erdogdu, Taiji Suzuki, Zhichao Wang, Denny Wu, and Greg Yang.
\newblock High-dimensional asymptotics of feature learning: How one gradient
  step improves the representation.
\newblock In {\em Advances in Neural Information Processing Systems},
  volume~35, pages 37932--37946. Curran Associates, Inc., 2022.

\bibitem[BES{\etalchar{+}}23]{ba2023learning}
Jimmy Ba, Murat~A. Erdogdu, Taiji Suzuki, Zhichao Wang, and Denny Wu.
\newblock Learning in the presence of low-dimensional structure: {A} spiked
  random matrix perspective.
\newblock In {\em Advances in Neural Information Processing Systems 36: Annual
  Conference on Neural Information Processing Systems 2023, NeurIPS 2023, New
  Orleans, LA, USA, December 10 - 16, 2023}, 2023.

\bibitem[BMBE20]{Bartoldson2019TheGT}
Brian~R. Bartoldson, Ari~S. Morcos, Adrian Barbu, and Gordon Erlebacher.
\newblock The generalization-stability tradeoff in neural network pruning.
\newblock In {\em Advances in Neural Information Processing Systems 33: Annual
  Conference on Neural Information Processing Systems 2020, NeurIPS 2020,
  December 6-12, 2020, virtual}, 2020.

\bibitem[BSE{\etalchar{+}}21]{Barsbey2021HeavyTI}
Melih Barsbey, Milad Sefidgaran, Murat~A. Erdogdu, Ga{\"{e}}l Richard, and Umut
  Simsekli.
\newblock Heavy tails in {SGD} and compressibility of overparametrized neural
  networks.
\newblock In {\em Advances in Neural Information Processing Systems 34: Annual
  Conference on Neural Information Processing Systems 2021, NeurIPS 2021,
  December 6-14, 2021, virtual}, pages 29364--29378, 2021.

\bibitem[Bub15]{Bubeck2014ConvexOA}
S{\'e}bastien Bubeck.
\newblock Convex optimization: Algorithms and complexity, 2015.

\bibitem[CFC{\etalchar{+}}20]{Chen2020TheLT}
Tianlong Chen, Jonathan Frankle, Shiyu Chang, Sijia Liu, Yang Zhang, Zhangyang
  Wang, and Michael Carbin.
\newblock The lottery ticket hypothesis for pre-trained {BERT} networks.
\newblock In {\em Advances in Neural Information Processing Systems 33: Annual
  Conference on Neural Information Processing Systems 2020, NeurIPS 2020,
  December 6-12, 2020, virtual}, 2020.

\bibitem[Chi22]{Chizat2022MeanFieldLD}
L{\'e}na{\"i}c Chizat.
\newblock Mean-field langevin dynamics : Exponential convergence and annealing.
\newblock {\em Trans. Mach. Learn. Res.}, 2022.

\bibitem[COB19]{Chizat2018OnLT}
L\'{e}na\"{\i}c Chizat, Edouard Oyallon, and Francis Bach.
\newblock On lazy training in differentiable programming.
\newblock In {\em Advances in Neural Information Processing Systems},
  volume~32. Curran Associates, Inc., 2019.

\bibitem[CW01]{Carbery2001DistributionalAL}
Anthony Carbery and James Wright.
\newblock Distributional and l-q norm inequalities for polynomials over convex
  bodies in r-n.
\newblock {\em Mathematical Research Letters}, 8:233--248, 2001.

\bibitem[DKL{\etalchar{+}}23]{Dandi2023HowTN}
Yatin Dandi, Florent Krzakala, Bruno Loureiro, Luca Pesce, and Ludovic Stephan.
\newblock How two-layer neural networks learn, one (giant) step at a time,
  2023.

\bibitem[DLS22]{Damian2022NeuralNC}
Alexandru Damian, Jason Lee, and Mahdi Soltanolkotabi.
\newblock Neural networks can learn representations with gradient descent.
\newblock In {\em Proceedings of Thirty Fifth Conference on Learning Theory},
  volume 178 of {\em Proceedings of Machine Learning Research}, pages
  5413--5452. PMLR, 02--05 Jul 2022.

\bibitem[DNGL23]{damian2023smoothing}
Alex Damian, Eshaan Nichani, Rong Ge, and Jason~D Lee.
\newblock Smoothing the landscape boosts the signal for sgd: Optimal sample
  complexity for learning single index models.
\newblock In {\em Advances in Neural Information Processing Systems},
  volume~36, pages 752--784. Curran Associates, Inc., 2023.

\bibitem[DTA{\etalchar{+}}24]{Dandi2024TheBO}
Yatin Dandi, Emanuele Troiani, Luca Arnaboldi, Luca Pesce, Lenka
  Zdeborov{\'{a}}, and Florent Krzakala.
\newblock The benefits of reusing batches for gradient descent in two-layer
  networks: Breaking the curse of information and leap exponents.
\newblock {\em CoRR}, abs/2402.03220, 2024.

\bibitem[DZPS19]{Du2018GradientDP}
Simon~S. Du, Xiyu Zhai, Barnab{\'{a}}s P{\'{o}}czos, and Aarti Singh.
\newblock Gradient descent provably optimizes over-parameterized neural
  networks.
\newblock In {\em 7th International Conference on Learning Representations,
  {ICLR} 2019, New Orleans, LA, USA, May 6-9, 2019}. OpenReview.net, 2019.

\bibitem[EBD19]{erdogdu2019scalable}
Murat Erdogdu, Mohsen Bayati, and Lee~H Dicker.
\newblock Scalable approximations for generalized linear problems.
\newblock {\em Journal of Machine Learning Research}, 20(7):1--45, 2019.

\bibitem[Erd15]{erdogdu2015newton}
Murat~A Erdogdu.
\newblock Newton-stein method: a second order method for glms via stein's
  lemma.
\newblock In {\em Proceedings of Advances in Neural Information Processing
  Systems}, pages 1216--1224, 2015.

\bibitem[FC19]{Frankle2018TheLT}
Jonathan Frankle and Michael Carbin.
\newblock The lottery ticket hypothesis: Finding sparse, trainable neural
  networks.
\newblock In {\em 7th International Conference on Learning Representations,
  {ICLR} 2019, New Orleans, LA, USA, May 6-9, 2019}. OpenReview.net, 2019.

\bibitem[FDRC20]{Frankle2019LinearMC}
Jonathan Frankle, Gintare~Karolina Dziugaite, Daniel Roy, and Michael Carbin.
\newblock Linear mode connectivity and the lottery ticket hypothesis.
\newblock In {\em Proceedings of the 37th International Conference on Machine
  Learning}, volume 119 of {\em Proceedings of Machine Learning Research},
  pages 3259--3269. PMLR, 13--18 Jul 2020.

\bibitem[GEH19]{Gale2019TheSO}
Trevor Gale, Erich Elsen, and Sara Hooker.
\newblock The state of sparsity in deep neural networks.
\newblock {\em ArXiv}, abs/1902.09574, 2019.

\bibitem[GMMM20]{Ghorbani2020WhenDN}
Behrooz Ghorbani, Song Mei, Theodor Misiakiewicz, and Andrea Montanari.
\newblock When do neural networks outperform kernel methods?
\newblock In {\em Advances in Neural Information Processing Systems 33: Annual
  Conference on Neural Information Processing Systems 2020, NeurIPS 2020,
  December 6-12, 2020, virtual}, 2020.

\bibitem[GO94]{Giles1994PruningRN}
C.~Lee Giles and Christian~W. Omlin.
\newblock Pruning recurrent neural networks for improved generalization
  performance.
\newblock {\em {IEEE} Trans. Neural Networks}, 5(5):848--851, 1994.

\bibitem[GR97]{Goldstein1997SteinsMA}
Larry Goldstein and Gesine Reinert.
\newblock Stein's method and the zero bias transformation with application to
  simple random sampling.
\newblock {\em The Annals of Applied Probability}, 7(4), November 1997.

\bibitem[GSJW19]{Geiger2019DisentanglingFA}
Mario Geiger, Stefano Spigler, Arthur Jacot, and Matthieu Wyart.
\newblock Disentangling feature and lazy learning in deep neural networks: an
  empirical study.
\newblock {\em CoRR}, abs/1906.08034, 2019.

\bibitem[GW19]{goldstein2019non}
Larry Goldstein and Xiaohan Wei.
\newblock Non-gaussian observations in nonlinear compressed sensing via stein
  discrepancies.
\newblock {\em Information and Inference: A Journal of the IMA}, 8(1):125--159,
  2019.

\bibitem[HPTD15]{Han2015LearningBW}
Song Han, Jeff Pool, John Tran, and William Dally.
\newblock Learning both weights and connections for efficient neural network.
\newblock In {\em Advances in Neural Information Processing Systems},
  volume~28. Curran Associates, Inc., 2015.

\bibitem[HS92]{Hassibi1992SecondOD}
Babak Hassibi and David Stork.
\newblock Second order derivatives for network pruning: Optimal brain surgeon.
\newblock In {\em Advances in Neural Information Processing Systems}, volume~5.
  Morgan-Kaufmann, 1992.

\bibitem[JCR{\etalchar{+}}22]{Jin2022PruningsEO}
Tian Jin, Michael Carbin, Daniel~M. Roy, Jonathan Frankle, and Gintare~Karolina
  Dziugaite.
\newblock Pruning's effect on generalization through the lens of training and
  regularization.
\newblock In {\em Advances in Neural Information Processing Systems 35: Annual
  Conference on Neural Information Processing Systems 2022, NeurIPS 2022, New
  Orleans, LA, USA, November 28 - December 9, 2022}, 2022.

\bibitem[JHG18]{Jacot2018NeuralTK}
Arthur Jacot, Cl{\'{e}}ment Hongler, and Franck Gabriel.
\newblock Neural tangent kernel: Convergence and generalization in neural
  networks.
\newblock In {\em Advances in Neural Information Processing Systems 31: Annual
  Conference on Neural Information Processing Systems 2018, NeurIPS 2018,
  December 3-8, 2018, Montr{\'{e}}al, Canada}, pages 8580--8589, 2018.

\bibitem[KLS24]{Kumar2024NoFP}
Tanishq Kumar, Kevin Luo, and Mark Sellke.
\newblock No free prune: Information-theoretic barriers to pruning at
  initialization.
\newblock {\em CoRR}, abs/2402.01089, 2024.

\bibitem[LD89]{Li1989RegressionAU}
Ker-Chau Li and Naihua Duan.
\newblock Regression analysis under link violation.
\newblock {\em Annals of Statistics}, 17:1009--1052, 1989.

\bibitem[LDS89]{LeCun1989OptimalBD}
Yann LeCun, John Denker, and Sara Solla.
\newblock Optimal brain damage.
\newblock In D.~Touretzky, editor, {\em Advances in Neural Information
  Processing Systems}, volume~2. Morgan-Kaufmann, 1989.

\bibitem[LSZ{\etalchar{+}}19]{Liu2018RethinkingTV}
Zhuang Liu, Mingjie Sun, Tinghui Zhou, Gao Huang, and Trevor Darrell.
\newblock Rethinking the value of network pruning.
\newblock In {\em 7th International Conference on Learning Representations,
  {ICLR} 2019, New Orleans, LA, USA, May 6-9, 2019}. OpenReview.net, 2019.

\bibitem[MHPG{\etalchar{+}}23]{MousaviHosseini2022NeuralNE}
Alireza Mousavi-Hosseini, Sejun Park, Manuela Girotti, Ioannis Mitliagkas, and
  Murat~A. Erdogdu.
\newblock Neural networks efficiently learn low-dimensional representations
  with {SGD}.
\newblock In {\em The Eleventh International Conference on Learning
  Representations, {ICLR} 2023, Kigali, Rwanda, May 1-5, 2023}. OpenReview.net,
  2023.

\bibitem[MHWSE23]{MousaviHosseini2023GradientBasedFL}
Alireza Mousavi-Hosseini, Denny Wu, Taiji Suzuki, and Murat~A Erdogdu.
\newblock Gradient-based feature learning under structured data.
\newblock In {\em Advances in Neural Information Processing Systems},
  volume~36, pages 71449--71485. Curran Associates, Inc., 2023.

\bibitem[MMM19]{Mei2019MeanfieldTO}
Song Mei, Theodor Misiakiewicz, and Andrea Montanari.
\newblock Mean-field theory of two-layers neural networks: dimension-free
  bounds and kernel limit.
\newblock In {\em Conference on Learning Theory, {COLT} 2019, 25-28 June 2019,
  Phoenix, AZ, {USA}}, volume~99 of {\em Proceedings of Machine Learning
  Research}, pages 2388--2464. {PMLR}, 2019.

\bibitem[MTK{\etalchar{+}}17]{Molchanov2016PruningCN}
Pavlo Molchanov, Stephen Tyree, Tero Karras, Timo Aila, and Jan Kautz.
\newblock Pruning convolutional neural networks for resource efficient
  inference.
\newblock In {\em 5th International Conference on Learning Representations,
  {ICLR} 2017, Toulon, France, April 24-26, 2017, Conference Track
  Proceedings}. OpenReview.net, 2017.

\bibitem[MYSS20]{Malach2020ProvingTL}
Eran Malach, Gilad Yehudai, Shai Shalev{-}Shwartz, and Ohad Shamir.
\newblock Proving the lottery ticket hypothesis: Pruning is all you need.
\newblock In {\em Proceedings of the 37th International Conference on Machine
  Learning, {ICML} 2020, 13-18 July 2020, Virtual Event}, volume 119 of {\em
  Proceedings of Machine Learning Research}, pages 6682--6691. {PMLR}, 2020.

\bibitem[OHR20]{Laurent2020}
Laurent Orseau, Marcus Hutter, and Omar Rivasplata.
\newblock Logarithmic pruning is all you need.
\newblock In {\em Advances in Neural Information Processing Systems 33: Annual
  Conference on Neural Information Processing Systems 2020, NeurIPS 2020,
  December 6-12, 2020, virtual}, 2020.

\bibitem[OS20]{Oymak2019TowardMO}
Samet Oymak and Mahdi Soltanolkotabi.
\newblock Toward moderate overparameterization: Global convergence guarantees
  for training shallow neural networks.
\newblock {\em {IEEE} J. Sel. Areas Inf. Theory}, 1(1):84--105, 2020.

\bibitem[Pin94]{Pinelis1994}
Iosif Pinelis.
\newblock {Optimum Bounds for the Distributions of Martingales in Banach
  Spaces}.
\newblock {\em The Annals of Probability}, 22(4):1679 -- 1706, 1994.

\bibitem[RBE21]{roy2021empirical}
Abhishek Roy, Krishnakumar Balasubramanian, and Murat~A Erdogdu.
\newblock On empirical risk minimization with dependent and heavy-tailed data.
\newblock {\em Advances in Neural Information Processing Systems},
  34:8913--8926, 2021.

\bibitem[RWY11]{raskutti2011minimax}
Garvesh Raskutti, Martin~J Wainwright, and Bin Yu.
\newblock Minimax rates of estimation for high-dimensional linear regression
  over q-balls.
\newblock {\em IEEE transactions on information theory}, 57(10):6976--6994,
  2011.

\bibitem[Tao12]{Tao2012TopicsIR}
Terence Tao.
\newblock Topics in random matrix theory.
\newblock 2012.

\bibitem[Ver10]{Vershynin2010IntroductionTT}
Roman Vershynin.
\newblock Introduction to the non-asymptotic analysis of random matrices.
\newblock In {\em Compressed Sensing}, 2010.

\bibitem[Ver18]{Vershynin2018HighDimensionalP}
Roman Vershynin.
\newblock {\em High-Dimensional Probability: An Introduction with Applications
  in Data Science}.
\newblock Cambridge Series in Statistical and Probabilistic Mathematics.
  Cambridge University Press, 2018.

\bibitem[WWW{\etalchar{+}}16]{Wen2016LearningSS}
Wei Wen, Chunpeng Wu, Yandan Wang, Yiran Chen, and Hai Li.
\newblock Learning structured sparsity in deep neural networks.
\newblock In {\em Advances in Neural Information Processing Systems},
  volume~29. Curran Associates, Inc., 2016.

\bibitem[YLG{\etalchar{+}}23]{yang2023theoretical}
Hongru Yang, Yingbin Liang, Xiaojie Guo, Lingfei Wu, and Zhangyang Wang.
\newblock Theoretical characterization of how neural network pruning affects
  its generalization, 2023.

\bibitem[YS19]{Yehudai2019OnTP}
Gilad Yehudai and Ohad Shamir.
\newblock On the power and limitations of random features for understanding
  neural networks.
\newblock In {\em Advances in Neural Information Processing Systems 32: Annual
  Conference on Neural Information Processing Systems 2019, NeurIPS 2019,
  December 8-14, 2019, Vancouver, BC, Canada}, pages 6594--6604, 2019.

\bibitem[ZG18]{Zhu2017ToPO}
Michael Zhu and Suyog Gupta.
\newblock To prune, or not to prune: Exploring the efficacy of pruning for
  model compression.
\newblock In {\em 6th International Conference on Learning Representations,
  {ICLR} 2018, Vancouver, BC, Canada, April 30 - May 3, 2018, Workshop Track
  Proceedings}. OpenReview.net, 2018.

\bibitem[ZLLY19]{Zhou2019DeconstructingLT}
Hattie Zhou, Janice Lan, Rosanne Liu, and Jason Yosinski.
\newblock Deconstructing lottery tickets: Zeros, signs, and the supermask.
\newblock In {\em Advances in Neural Information Processing Systems 32: Annual
  Conference on Neural Information Processing Systems 2019, NeurIPS 2019,
  December 8-14, 2019, Vancouver, BC, Canada}, pages 3592--3602, 2019.

\end{thebibliography}

\newpage
\appendix

\tableofcontents
\section{Further Discussion for Section \ref{sec:training}}
\label{sec:furtherdiss}
In this section, we detail the examples discussed in Section  \ref{sec:training}.  Recall that $\phi$ is the ReLU activation with the Hermite expansion $\phi = \sum_{k \geq 0} \frac{\rhc_k}{k!} H_{e_k}$.  Notably,  the coefficients are $\rhc_1 = \tfrac{1}{2}$, $\rhc_2 = \tfrac{1}{\sqrt{2\pi}}$, , $\rhc_3 = 0$, $\rhc_4 = \tfrac{ - 1}{\sqrt{2\pi}}$, and $\rhc_6 =  \tfrac{3}{\sqrt{2\pi}}$ (see \eqref{eq:constantterms} with $b = 0$). 

First, we consider the setting in  \eqref{eq:pathex1}.  In this case,  for $\bs{w} \in S^{d-1}$, we have
\eq{
\E \left[ y \phi^{\prime} (\inner{\bs{w}}{\bs{x}}) \bs{x} \right] & =  \sqrt{2} \rhc_2 \inner{\bs{v}_1}{\bs{w}} \bs{v}_1 +  \sqrt{2} \rhc_2 \inner{\bs{v}_2}{\bs{w}} \bs{v}_2 + \tfrac{\rhc_4}{\sqrt{2}} ( \inner{\bs{w}	}{\bs{v}_1}^2 +  \inner{\bs{w}	}{\bs{v}_2}^2 ) \bs{w} \\
& =  \tfrac{1}{\sqrt{\pi}}   \inner{\bs{e}_1}{\bs{w}} \bs{e}_1 +   \tfrac{1}{\sqrt{\pi}}  \inner{\bs{e}_2}{\bs{w}} \bs{e}_2  - \tfrac{1}{2 \sqrt{\pi}}  \big( \inner{\bs{w}	}{\bs{e}_1}^2 +  \inner{\bs{w}}{\bs{e}_2}^2 \big) \bs{w}, \label{eq:pathex1corr0}
}
using an argument by~\cite{erdogdu2019scalable} and
\eq{
\E \left[ \check{y} \phi^{\prime} (\inner{\bs{w}}{\bs{x}}) \bs{x} \right] & = \rhc_1 \bs{v} + \E \left[ y \phi^{\prime} (\inner{\bs{w}}{\bs{x}}) \bs{x} \right]    =   \tfrac{- 1}{2 \sqrt{\pi}}  (\bs{e}_1 + \bs{e}_2)  +  \E \left[ y \phi^{\prime} (\inner{\bs{w}}{\bs{x}}) \bs{x} \right],    \label{eq:pathex1corr1}
}
where we used the defined values in  \eqref{eq:pathex1}.   From \eqref{eq:pathex1corr0}-\eqref{eq:pathex1corr1},  we deduce
\eq{
\E \left[ y \phi^{\prime} (\inner{\bs{e}_i}{\bs{x}}) \bs{x} \right] =   \tfrac{1}{2\sqrt{\pi}} \begin{cases}
\bs{e}_1 & i = 1 \\
\bs{e}_2 & i = 2 \\
0 & i > 2,
\end{cases} ~~ \text{and} ~~ 
\E \left[ \check{y} \phi^{\prime} (\inner{\bs{e}_i}{\bs{x}}) \bs{x} \right] =   \tfrac{- 1}{2\sqrt{\pi}} \begin{cases}
\bs{e}_2 & i = 1 \\
\bs{e}_1 & i = 2 \\
\bs{e}_1 + \bs{e}_2 & i > 2,
\end{cases}
}
confirming \eqref{eq:pathex1res} and \eqref{eq:pathex1res2}.

For \eqref{eq:examplesetting2}, let us consider $\ghc_2 = 1$, $\ghc_4 =  2\sqrt{3}$, and $\varepsilon = e^{-d}$.  Using \eqref{eq:examplegrad2},  we can show that the population gradient in this case satisfies:
\eq{
\norm{ \E \left[ y \phi^{\prime} (\inner{\bs{e}_i}{\bs{x}}) \bs{x} \right]  }_2 = \begin{cases}
O \big(d^{\frac{1}{4}} e^{-d} \big), & i = 1 \\
O(e^{-d}), & i = 2, \cdots, \sqrt{d} \\
0, & i > \sqrt{d}.
\end{cases}
}
We note that in this case,  an exponentially large sample size in $d$ is required to differentiate between $i = 1$ then $i = d$  using empirical gradients.

\section{Preliminaries for Proofs}

\smallskip
\textbf{Additional Notation:} Unless otherwise stated,   $Z$ follows the standard Gaussian distribution with a dimension depending on the context.   We let $\cgp \coloneqq  \E[\norm{ \grad \gt(Z) }_2^2]^{1/2}$.  We use $S_{M}^{d-1}$ to denote the $M$-sparse $d$-dimensional unit vectors, i.e.,   $S_{M}^{d-1} \coloneqq \{ \bs{x} \in S^{d-1} ~ \vert ~ \norm{\bs{x}}_0 \leq M  \}$.  For a matrix $\bs{A} \in \R^{d_1 \times d_2}$,   $\sigma_1(\bs{A}) \geq \sigma_2(\bs{A}) \geq \cdots \geq \sigma_{d_1 \wedge d_2}(\bs{A})$  denotes the singular values of $\bs{A}$.  For  $\cJ_1 \subseteq [d_1]$ and $\cJ_2 \subseteq [d_2]$, we let  $\bs{A} \vert_{\cJ_1}, ~ \bs{A} \vert_{\text{${ \scriptscriptstyle \cJ_1 \times \cJ_2 }$}}  \in \R^{d_1 \times d_2}$ such that
\eq{
( \bs{A} \vert_{\text{${ \scriptscriptstyle \cJ_1}$}} )_{ij} = \begin{cases}
\bs{A}_{ij} & i \in \cJ_1   \\
0 & \text{otherwise}.
\end{cases}
~~
\text{and}
~~
( \bs{A} \vert_{\text{${ \scriptscriptstyle \cJ_1 \times \cJ_2 }$}} )_{ij} = \begin{cases}
\bs{A}_{ij} & i \in \cJ_1 ~ \text{and} ~ j \in \cJ_2 \\
0 & \text{otherwise}.
\end{cases}
}
In the following,  $C,  K > 0$ are constants that might take different values in different statements.  For reader's convenience, we track on which variable they depend.     For a set $E$,
\eq{
\indic{E}(\bs{x}) \coloneqq \begin{cases}
1 & \bs{x} \in E \\
0 & \text{otherwise}
\end{cases}
}
We use $\cD \coloneqq \{ (\bs{x}_i, y_i) \}_{i = 1}^n$ to denote the dataset.

\smallskip \noindent 
\textbf{Additional Definitions:}  For notational simplicity,  we assume that
\eq{
\abs{\gt(\bs{z})} \leq C_1 (1 + \norm{\bs{z}}_2^2)^{C_2}  ~~ \text{for some} ~~C_1 > 0,  ~C_2 \geq \frac{1}{2}.  \label{eq:asspolytails}
}
We note that since $\gt$ is a polynomial this assumption will always hold.  Furthermore,  in the proof, we particularly consider the model 
\eq{
y \coloneqq \gt(\bs{V}^\top \bs{x}) + \sqrt{\Delta} \epsilon,  \label{eq:model}
}
where $\Delta > 0$ and $\epsilon$ has sub-Gaussian tails, i.e., $\mpr \left[ \abs{\epsilon} > t \right] \leq 2 e^{-t^2}$.

We recall that $\phi(t)  = \max \{ 0, t\}$ denotes the ReLU activation. To be precise, we define the initialization considered in Algorithm \ref{alg:onestepgd} mathematically as follows:
\eq{
\tag{INIT}
\bs{W}_{j*}^{(0)} =  \left( \sum_{i \in \cJ} \bs{W}^2_{ji} \right)^{-1} (\bs{W}_{j1}\indic{1 \in \cJ},  \cdots, \bs{W}_{jd}\indic{d \in \cJ}) \label{eq:initdist}
}
where $\cJ$ is the output of \texttt{PruneNetwork} (see Algorithm \ref{alg:pruningalg}),  $\bs{W} \in \R^{m \times d}$,  $\bs{W}_{ij} \sim_{iid} \cN(0,1)$, and $\bs{W}$ is independent of $\cD$. As for definition \eqref{eq:model},  in the multi-index setting,  we use
\eq{
\tag{DEF-H}
\E[\gt(\bs{z}) \bs{z} \bs{z}^\top] \coloneqq \bs{D} \in \R^{r \times r} ~~ and  ~~  \E[\gt(\bs{V}^\top \bs{x}) \bs{x} \bs{x}^\top] = \bs{V} \bs{D} \bs{V}^\top \coloneqq \bs{H},  \label{eq:multiindexas}
}
which follows from Stein's lemma~\cite{erdogdu2015newton,erdogdu2019scalable}.
Without loss of generality,  we assume $\bs{D}$ is diagonal.

\section{Hermite Expansion in the Multi-Index Setting}
\subsection{Background on Tensors}
In the following, we will use the tensor representation of multivariate Hermite polynomials. Therefore, we introduce some new notation to work with tensors:  We denote tensors with boldface uppercase letters, (e.g. $\bm{T}$).   Unless specified,  we assume that tensors take a value from an abstract inner product space, denoted with   $\mathcal{H}$,  with an inner product, of $\inner{\cdot}{\cdot}_{\cH}$.  For a $k$-tensor $\bm{T_k} :(\R^d)^{\otimes k} \to \cH$ and an index tuple $(i_1, \cdots, i_k) \in [d]^{k}$, we use $\bm{T_k} \vert_{i_1\cdots i_k} \coloneqq \bm{T_k}[\bs{e}_{i_1}, \bs{e}_{i_2}, \cdots, \bs{e}_{i_k}]$, where $\{ \bs{e}_i \}_{i \in [d]}$ is the standard basis for $\R^d$.  We define the inner product and Frobenius norm for $k$-tensors $\bm{T_k}, \bm{\tilde T_k} : (\R^d)^{\otimes k} \to \cH$ as
\eq{
\inner{\bm{T_k}}{\bm{ \tilde T_k}} \coloneqq \sum_{(i_1,\cdots, i_k) \in [d]^k} \inner{ \bm{T_k} \vert_{i_1\cdots i_k}}{ \bm{\tilde T_k} \vert_{i_1\cdots i_k} }_{\cH} ~~ \text{and} ~~  \norm{\bm{T_k}}_F \coloneqq \sqrt{\inner{\bm{T_k}}{\bm{T_k}}}.  \label{def:frobtensor}
}
We use $sym(\cdot)$ to denote symmetrization operator, i.e., 
\eq{
sym(\bm{T_k})[\bs{e}_{i_1}, \bs{e}_{i_2}, \cdots, \bs{e}_{i_k}] = \frac{1}{k!} \sum_{\tau \in S_k} \bm{T_k}[\bs{e}_{\tau(i_1)}, \bs{e}_{\tau(i_2)}, \cdots, \bs{e}_{\tau(i_k)}]  \label{def:symtensor}
}
where $S_k$ is the set of permutations for $[k]$. We say a tensor is symmetric if $\bm{T_k} = sym(\bm{T_k} )$.  For a vector $\bs{u} \in \R^d$,  $\bs{u}^{\otimes k}: (\R^d)^{\otimes k} \to \R$ is a symmetric $k$-tensor defined as  $\bs{u}^{\otimes k}[\bs{v}_1, \cdots, \bs{v}_k] = \prod_{i = 1}^k \inner{\bs{u}}{\bs{v}_i}$.
\subsubsection{Auxiliary Tensor Results}
In this part, we present some useful tensor related result that we will use in the following.
\begin{proposition}
\label{prop:symmtensor}
Let $\bm{T_k} : (\R^d)^{\otimes k} \to \cH$ be a symmetric $k$-tensor.  For any $k$-tensor $\bm{\tilde T_k},$  we have  $\inner{\bm{\tilde T_k}}{\bm{T_k}} =\inner{sym(\bm{\tilde T_k})}{\bm{T_k}}$.
\end{proposition}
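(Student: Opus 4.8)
The plan is to prove Proposition~\ref{prop:symmtensor} by a direct expansion using the definition of the tensor inner product in \eqref{def:frobtensor} together with the definition of the symmetrization operator in \eqref{def:symtensor}. Write out $\inner{sym(\bm{\tilde T_k})}{\bm{T_k}}$ as a sum over index tuples $(i_1,\dots,i_k)\in[d]^k$ of $\inner{ sym(\bm{\tilde T_k})\vert_{i_1\cdots i_k}}{\bm{T_k}\vert_{i_1\cdots i_k}}_{\cH}$, then substitute the averaging-over-permutations formula for $sym(\bm{\tilde T_k})\vert_{i_1\cdots i_k}$. Using bilinearity of $\inner{\cdot}{\cdot}_{\cH}$, this becomes $\frac{1}{k!}\sum_{\tau\in S_k}\sum_{(i_1,\dots,i_k)}\inner{\bm{\tilde T_k}\vert_{i_{\tau(1)}\cdots i_{\tau(k)}}}{\bm{T_k}\vert_{i_1\cdots i_k}}_{\cH}$.

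The key step is then, for each fixed $\tau$, to re-index the inner sum over tuples. Since summing over all $(i_1,\dots,i_k)\in[d]^k$ is the same as summing over all permuted tuples $(i_{\tau(1)},\dots,i_{\tau(k)})$, I would apply the change of variables and also reorder the arguments of $\bm{T_k}\vert_{i_1\cdots i_k}$ correspondingly; this is precisely where the symmetry hypothesis $\bm{T_k}=sym(\bm{T_k})$ enters — it lets me rewrite $\bm{T_k}\vert_{i_1\cdots i_k}=\bm{T_k}\vert_{i_{\tau(1)}\cdots i_{\tau(k)}}$ so that each term in the $\tau$-sum equals $\inner{\bm{\tilde T_k}}{\bm{T_k}}$. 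Summing $k!$ identical terms and dividing by $k!$ yields the claim. Equivalently, one can phrase it as: symmetrization is a self-adjoint idempotent operator on the space of $k$-tensors, and $\bm{T_k}$ symmetric means $\bm{T_k}$ lies in its range, so $\inner{\bm{\tilde T_k}}{\bm{T_k}}=\inner{\bm{\tilde T_k}}{sym(\bm{T_k})}=\inner{sym(\bm{\tilde T_k})}{\bm{T_k}}$ by self-adjointness.

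There is no real obstacle here — the statement is essentially the observation that $sym$ is self-adjoint with respect to the Frobenius inner product. The only mild care needed is bookkeeping with the permutation action on index tuples: one must be consistent about whether $\tau$ permutes slots or indices, and verify that the reindexing $(i_1,\dots,i_k)\mapsto(i_{\tau^{-1}(1)},\dots,i_{\tau^{-1}(k)})$ (or its inverse) is a bijection of $[d]^k$ so the sum is unchanged. I would present the self-adjointness route as the cleanest: establish $\inner{sym(\bm{A})}{\bm{B}}=\inner{\bm{A}}{sym(\bm{B})}$ for arbitrary $k$-tensors $\bm{A},\bm{B}$ by the reindexing argument above, and then specialize to $\bm{B}=\bm{T_k}$ with $sym(\bm{T_k})=\bm{T_k}$.
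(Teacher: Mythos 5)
Your proof is correct and follows essentially the same route as the paper's: expand the symmetrization as an average over $\tau \in S_k$, reindex the sum over $[d]^k$, and use symmetry of $\bm{T_k}$ to collapse the $k!$ terms. The self-adjointness-of-$sym$ framing you add at the end is a nice conceptual repackaging but is the same computation.
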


\begin{proof}
We have 
\eq{
\inner{\bm{\tilde T_k}}{\bm{T_k}}  
& \labelrel={sym:eqq0}          \sum_{(i_1,\cdots,i_k) \in [d]^k}    \frac{1}{k!} \sum_{\tau \in S_k}  \inner{ \bm{\tilde T_k} \vert_{i_1\cdots i_k} }{ \bm{T_k}[\bs{e}_{\tau(i_1)}, \cdots, \bs{e}_{\tau(i_k)}] } \\
& \labelrel={sym:eqq1}            \sum_{(i_1,\cdots,i_k) \in [d]^k}    \frac{1}{k!} \sum_{\tau \in S_k}  \inner{ \bm{\tilde T_k} \vert_{\tau(i_1) \cdots \tau(i_k) } }{ \bm{T_k}[\bs{e}_{i_1}, \cdots, \bs{e}_{i_k}] } = \inner{sym( \bm{\tilde T_k} )}{\bm{T_k}},
}
where  \eqref{sym:eqq0}  follows since $\bm{T_k}$ is symmetric,  and  \eqref{sym:eqq1}  follows by changing the indexing.
\end{proof}

\begin{lemma}
\label{lem:gradtensor}
Let $\bm{T_{j+k}} : (\R^d)^{\otimes (j + k)} \to \R$ be a symmetric tensor. We define $\bm{\grad^j T_{j+k} } : (\R^d)^{\otimes k}\to (\R^d)^{\otimes j}$ as
\eq{
\bm{\grad^j T_{j+k}}[\bs{e}_{i_1}, \cdots, \bs{e}_{i_k}] \vert_{i_{k+1}\cdots i_{k+j} } \coloneqq \bm{T_{j+k}}[\bs{e}_{i_1}, \cdots, \bs{e}_{i_k},  \bs{e}_{i_{k+1}}, \cdots, \bs{e}_{i_{k+j}}]. \label{eq:graddef}
}
We have  $\bm{\grad^j T_{j+k}}$ is symmetric and  $\norm{\bm{\grad^j T_{j+k}}}_F = \norm{ \bm{T_{j+k}} }_F$.
\end{lemma}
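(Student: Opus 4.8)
The statement to prove is Lemma~\ref{lem:gradtensor}: if $\bm{T_{j+k}}$ is a symmetric $(j+k)$-tensor, then the reshaped map $\bm{\grad^j T_{j+k}}$ defined by \eqref{eq:graddef} is symmetric (as a $k$-tensor valued in $(\R^d)^{\otimes j}$) and has the same Frobenius norm.

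\textbf{Plan.} The proof is essentially bookkeeping with multi-indices, so I would set up notation carefully and then do two short verifications. First, for the symmetry claim, I need to be careful about what ``symmetric'' means here: $\bm{\grad^j T_{j+k}}$ is a $k$-tensor whose values lie in the space $(\R^d)^{\otimes j}$, which itself carries the Frobenius inner product $\inner{\cdot}{\cdot}_{\cH}$ with $\cH = (\R^d)^{\otimes j}$. Symmetry is with respect to permutations of the $k$ input slots only. So I would fix a permutation $\tau \in S_k$ and index tuples, and show that permuting $(i_1,\dots,i_k)$ by $\tau$ leaves $\bm{\grad^j T_{j+k}}[\bs{e}_{i_1},\dots,\bs{e}_{i_k}]$ unchanged as an element of $(\R^d)^{\otimes j}$. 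Unwinding the definition \eqref{eq:graddef}, the $(i_{k+1},\dots,i_{k+j})$-component of the output is $\bm{T_{j+k}}[\bs{e}_{i_1},\dots,\bs{e}_{i_{k+j}}]$; since $\bm{T_{j+k}}$ is fully symmetric under all of $S_{j+k}$, it is in particular invariant under the subgroup that permutes only the first $k$ slots, which gives exactly the claim componentwise, hence as tensors.

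\textbf{Norm equality.} For the Frobenius norm, I would just expand both sides using the definition \eqref{def:frobtensor}. On one hand,
\[
\norm{\bm{\grad^j T_{j+k}}}_F^2 = \sum_{(i_1,\dots,i_k)\in[d]^k} \norm{\bm{\grad^j T_{j+k}}[\bs{e}_{i_1},\dots,\bs{e}_{i_k}]}_{\cH}^2 = \sum_{(i_1,\dots,i_k)} \sum_{(i_{k+1},\dots,i_{k+j})} \bigl(\bm{T_{j+k}}[\bs{e}_{i_1},\dots,\bs{e}_{i_{k+j}}]\bigr)^2,
\]
where the inner sum uses that the standard basis of $(\R^d)^{\otimes j}$ is orthonormal under $\inner{\cdot}{\cdot}_{\cH}$ so the $\cH$-norm-squared of a $j$-tensor is the sum of squares of its components. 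On the other hand, $\norm{\bm{T_{j+k}}}_F^2 = \sum_{(i_1,\dots,i_{j+k})\in[d]^{j+k}} \bigl(\bm{T_{j+k}}[\bs{e}_{i_1},\dots,\bs{e}_{i_{j+k}}]\bigr)^2$. These are the same double sum over $[d]^{k}\times[d]^{j} = [d]^{j+k}$, so they are equal.

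\textbf{Main obstacle.} There is no real obstacle; the only thing to be careful about is keeping the index ranges and the meaning of the inner product $\inner{\cdot}{\cdot}_{\cH}$ straight — specifically that when $\cH = (\R^d)^{\otimes j}$ the $\cH$-inner product is itself the Frobenius inner product of $j$-tensors, so that ``sum of squared components'' unfolds cleanly into the full sum over $[d]^{j+k}$. I would state this identification explicitly at the start to avoid confusion, and note that the symmetry of $\bm{T_{j+k}}$ is only used through invariance under permutations fixing the last $j$ coordinates (for the symmetry claim) — the norm claim does not even require symmetry. This makes the lemma a purely combinatorial reshaping fact.
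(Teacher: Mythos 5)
Your proof is correct and matches the paper's approach, which simply asserts that both claims follow from the definitions \eqref{def:frobtensor} and \eqref{def:symtensor}; you have merely filled in the routine multi-index bookkeeping that the paper leaves implicit. Your observation that the norm identity does not require symmetry is a nice bonus, consistent with the paper's argument.
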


\begin{proof}
Both statements follow from definitions in \eqref{def:frobtensor} and \eqref{def:symtensor}.
\end{proof}

\begin{lemma}
\label{lem:tensorfroblb}
For $\bs{A} \in \R^{d \times r}$ and $\bm{T_k} : ( \R^{r} ) ^{\otimes k} \to \R$, let  $\bm{\hat T_k} : ( \R^{d} ) ^{\otimes k} \to \R$ such that  $\bm{\hat T_k}[\bs{u}_1, \cdots, \bs{u}_k] = \bm{T_k}[\bs{A}^\top \bs{u}_1, \cdots, \bs{A}^\top \bs{u}_k]$.  Then,  $\norm{\bm{\hat T_k} }_F \geq \sigma^k_r(\bs{A}) \norm{\bm{T_k}}_F.$
\end{lemma}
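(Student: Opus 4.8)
The plan is to express the Frobenius norm of $\bm{\hat T_k}$ in terms of a suitable matricization/flattening of the tensor and then reduce the bound to a statement about singular values of Kronecker powers. First I would unfold $\bm{T_k}$ into a vector $\mathrm{vec}(\bm{T_k}) \in \R^{r^k}$ whose entries are $\bm{T_k}\vert_{i_1\cdots i_k}$ indexed by tuples $(i_1,\dots,i_k)\in[r]^k$; by the definition \eqref{def:frobtensor}, $\norm{\bm{T_k}}_F = \norm{\mathrm{vec}(\bm{T_k})}_2$. The key observation is that the entries of $\bm{\hat T_k}$ are obtained by applying the linear map $\bs{A}^{\otimes k}$ (the $k$-fold Kronecker power of $\bs{A}$) to $\mathrm{vec}(\bm{T_k})$: indeed
\eq{
\bm{\hat T_k}\vert_{j_1\cdots j_k} = \bm{T_k}[\bs{A}^\top \bs{e}_{j_1},\dots,\bs{A}^\top \bs{e}_{j_k}] = \sum_{(i_1,\dots,i_k)\in[r]^k} \Big(\prod_{l=1}^k A_{j_l i_l}\Big)\, \bm{T_k}\vert_{i_1\cdots i_k},
}
so that $\mathrm{vec}(\bm{\hat T_k}) = \bs{A}^{\otimes k}\,\mathrm{vec}(\bm{T_k})$, hence $\norm{\bm{\hat T_k}}_F = \norm{\bs{A}^{\otimes k}\,\mathrm{vec}(\bm{T_k})}_2$.

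Next I would lower-bound this using the smallest singular value of $\bs{A}^{\otimes k}$ acting on $\R^{r^k}$: $\norm{\bs{A}^{\otimes k} u}_2 \geq \sigma_{\min}(\bs{A}^{\otimes k}) \norm{u}_2$ for all $u \in \R^{r^k}$ (note $\bs{A}^{\otimes k}$ has full column rank $r^k$ precisely when $\bs{A}$ has full column rank $r$, i.e. $\sigma_r(\bs{A})>0$; if $\sigma_r(\bs{A})=0$ the claimed bound is trivial). The remaining ingredient is the standard fact that the singular values of a Kronecker product are the products of the singular values of the factors: the singular values of $\bs{A}^{\otimes k}$ are exactly $\prod_{l=1}^k \sigma_{m_l}(\bs{A})$ over all tuples $(m_1,\dots,m_k)\in[r]^k$ (this follows from $\bs{A} = \bs{U}\bs{\Sigma}\bs{V}^\top$ giving $\bs{A}^{\otimes k} = \bs{U}^{\otimes k}(\bs{\Sigma}^{\otimes k})\bs{V}^{\top\otimes k}$ with $\bs{U}^{\otimes k},\bs{V}^{\otimes k}$ orthogonal and $\bs{\Sigma}^{\otimes k}$ diagonal with the claimed entries). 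Therefore $\sigma_{\min}(\bs{A}^{\otimes k}) = \sigma_r(\bs{A})^k$, and combining the displays gives $\norm{\bm{\hat T_k}}_F \geq \sigma_r(\bs{A})^k \norm{\bm{T_k}}_F$, as desired.

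The only mild subtlety — and the step I would be most careful about — is bookkeeping with the indexing: making sure the flattening used for $\bm{T_k}$ and for $\bm{\hat T_k}$ is the same and is compatible with the Kronecker-power convention, so that the identity $\mathrm{vec}(\bm{\hat T_k}) = \bs{A}^{\otimes k}\mathrm{vec}(\bm{T_k})$ holds on the nose rather than up to a permutation of coordinates (a permutation would be harmless for the norm anyway, but it is cleaner to fix conventions). Everything else is routine linear algebra; symmetry of $\bm{T_k}$ is not even needed for this lemma. An alternative, more self-contained route that avoids invoking the Kronecker-singular-value fact as a black box is to prove it directly: write $u = \mathrm{vec}(\bm{T_k})$ in the basis of right singular vectors of $\bs{A}$ applied coordinatewise, i.e. expand $\bm{T_k}$ in the tensor-product basis $\{\bs{v}_{m_1}\otimes\cdots\otimes\bs{v}_{m_k}\}$ where $\bs{v}_1,\dots,\bs{v}_r$ are the right singular vectors of $\bs{A}$; then $\bm{\hat T_k}$ in the corresponding basis $\{\bs{A}\bs{v}_{m_1}\otimes\cdots\}$ has coefficients scaled by $\prod_l \sigma_{m_l}(\bs{A}) \geq \sigma_r(\bs{A})^k$, and orthonormality of $\{\bs{u}_1,\dots,\bs{u}_r\}$ (left singular vectors) makes these tensor-basis elements orthogonal, so the Frobenius norm only shrinks by at most $\sigma_r(\bs{A})^k$. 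I would likely present this direct version to keep the proof elementary.
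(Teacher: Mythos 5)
Your proposal is correct, and your preferred "direct version" is essentially the paper's proof: write the SVD $\bs{A} = \bs{U}\bm{\Sigma}\bs{L}^\top$, extend the columns of $\bs{U}$ to an orthonormal basis of $\R^d$ (noting $\bs{A}^\top \bs{v} = 0$ for $\bs{v}\perp\mathrm{col}(\bs{U})$), compute $\norm{\bm{\hat T_k}}_F^2$ in this basis via $\bs{A}^\top\bs{U}_{*i}=\sigma_i(\bs{A})\bs{L}_{*i}$, pull out the singular values by multilinearity, and recognize what remains as $\norm{\bm{T_k}}_F^2$ expressed in the orthonormal basis $\{\bs{L}_{*i}\}$. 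Your first route (vectorize and invoke $\sigma_{\min}(\bs{A}^{\otimes k})=\sigma_r(\bs{A})^k$) is an equivalent repackaging of the same fact in Kronecker language, and your remark that symmetry of $\bm{T_k}$ is not needed is also correct.
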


\begin{proof}
Let singular value decomposition of $\bs{A}$ be $\bs{A} \coloneqq \bs{U} \bm{\Sigma} \bs{L}^\top$, where $\bs{U} \in \R^{d \times r}$ and $\bs{L} \in \R^{r \times r}$ are orthonormal vectors and $\bm{\Sigma}_{ii} = \sigma_i(\bs{A})$ for $i \in [r].$  First, we observe that for any $\bs{v} \in \R^d$ such that $\bs{v} \perp col(\bs{U})$,   $\bs{A}^\top \bs{v} = 0$.  Since Frobenius norm of a tensor is independent of the choice of basis,  we can write that
\eq{
\norm{\bm{\hat T_k} }_F^2 =  \sum_{i_1,\cdots, i_k \in [r]^k} \bm{\tilde T_k} \left[ \bs{U}_{*i_1},  \cdots,  \bs{U}_{*i_k} \right]^2.
}
Hence, by definition
\eq{
\norm{\bm{\hat T_k} }_F^2  
=              \sum_{i_1,\cdots, i_k \in [r]^k}                \bm{T_k} \left[ \sigma_{i_1}(\bs{A})   \bs{L}_{*i_1},  \cdots, \sigma_{i_k}(\bs{A}) \bs{L}_{*i_k} \right]^2 
&\labelrel\geq{frob:ineqq0} \sigma_r^{2k}(\bs{A})             \sum_{i_1,\cdots, i_k \in [r]^k}  \bm{T_k} \left[   \bs{L}_{*i_1},  \cdots,   \bs{L}_{*i_k} \right]^2  \\
&=  \sigma_r^{2k}(\bs{A}) \norm{\bm{T_k}}_F^2,
}
where we use the multi-linear property of tensors in \eqref{frob:ineqq0}.
\end{proof}

\paragraph{Lemmas for Hermite Tensors}
\begin{definition}[Hermite Tensors]
\label{def:hermitetensor}
We define the Hermite tensor with a degree of $k$ as $\bm{\Hek}: \R^d \to  (\R^{d})^{\otimes k}$ as
\eq{ 
\bm{ \Hek }(\bs{x}) \vert_{i_1, \cdots, i_k}  \coloneqq e^{\frac{\norm{\bs{x}}_2^2}{2}} (-1)^k \frac{\partial^k }{\partial \bs{x}_{i_1}  \cdots \partial \bs{x}_{i_k}} \left( e^{\frac{ - \norm{\bs{x}}_2^2}{2}}   \right).
}
\end{definition}

\noindent
We use the following facts about Hermite tensors in our proofs.
\begin{lemma}
\label{lem:hermitemonomial}
For any orthonormal basis $\{ \bs{b}_1, \cdots,  \bs{b}_d \}$ and  $\bs{x} \in \R^d$,  we have  
\eq{
\inner{ \bm{ \Hek }(\bs{x})}{\bs{b}_{i_1} \otimes \cdots \otimes \bs{b}_{i_d}} = H_{e_{j_1}}(\inner{\bs{b}_1}{\bs{x}}) \cdots H_{e_{j_d}}(\inner{\bs{b}_d}{\bs{x}}), 
}
where  $j_l$ is the number of occurrences of $l \in [d]$ in $(i_1, \cdots, i_k)$, i.e., $j_l = \indic{i_1 = l} + \cdots + \indic{i_k = l}$. 
\end{lemma}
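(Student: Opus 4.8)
The plan is to peel the multivariate definition of the Hermite tensor down to a product of one-dimensional Hermite polynomials, using that the Gaussian weight $e^{-\norm{\bs{x}}_2^2/2}$ factorizes over \emph{any} orthonormal basis thanks to Parseval's identity.

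First I would use multilinearity of $\bm{\Hek}(\bs{x})$ together with Definition~\ref{def:hermitetensor} to convert the contraction against $\bs{b}_{i_1}\otimes\cdots\otimes\bs{b}_{i_k}$ into an iterated directional derivative. Writing $\nabla_{\bs{b}} \coloneqq \inner{\bs{b}}{\nabla}$ for the derivative in direction $\bs{b}$, one gets
\[
\inner{\bm{\Hek}(\bs{x})}{\bs{b}_{i_1}\otimes\cdots\otimes\bs{b}_{i_k}} \;=\; (-1)^k\, e^{\norm{\bs{x}}_2^2/2}\; \nabla_{\bs{b}_{i_1}}\cdots\nabla_{\bs{b}_{i_k}}\, e^{-\norm{\bs{x}}_2^2/2}.
\]

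Next I would invoke Parseval: since $\{\bs{b}_1,\dots,\bs{b}_d\}$ is orthonormal, $\norm{\bs{x}}_2^2 = \sum_{l=1}^d \inner{\bs{b}_l}{\bs{x}}^2$, hence $e^{-\norm{\bs{x}}_2^2/2} = \prod_{l=1}^d e^{-\inner{\bs{b}_l}{\bs{x}}^2/2}$. The crucial observation is that $\nabla_{\bs{b}_m}\inner{\bs{b}_l}{\bs{x}} = \inner{\bs{b}_l}{\bs{b}_m} = \delta_{lm}$, so by the product rule each $\nabla_{\bs{b}_m}$ hits only the $m$-th factor of this product and leaves the others unchanged; iterating over the tuple $(i_1,\dots,i_k)$, in which $l$ occurs $j_l$ times, the mixed derivative collapses to $\prod_{l=1}^d \big(\tfrac{d}{dt}\big)^{j_l} e^{-t^2/2}\big|_{t=\inner{\bs{b}_l}{\bs{x}}}$. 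Combining this with the prefactor written as $(-1)^k e^{\norm{\bs{x}}_2^2/2} = \prod_{l=1}^d (-1)^{j_l} e^{\inner{\bs{b}_l}{\bs{x}}^2/2}$ (using $\sum_l j_l = k$), each factor becomes $(-1)^{j_l} e^{t^2/2}\big(\tfrac{d}{dt}\big)^{j_l} e^{-t^2/2}$ at $t = \inner{\bs{b}_l}{\bs{x}}$, which is $H_{e_{j_l}}(\inner{\bs{b}_l}{\bs{x}})$ by Definition~\ref{def:hermite}. This produces the claimed identity.

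There is no genuinely hard step; the proof is essentially bookkeeping. The one point that deserves care is the distribution of the directional derivatives over the Parseval product: this is exactly where orthonormality of $\{\bs{b}_l\}$ is used to kill cross terms, and it must be paired with careful tracking of the multiplicities $j_l$ so that the number of derivatives landing on each factor $e^{-\inner{\bs{b}_l}{\bs{x}}^2/2}$ is accounted for correctly.
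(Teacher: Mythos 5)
Your proof is correct. It differs from the paper's in route: the paper first verifies the claim for the standard basis and then reduces the general orthonormal case to it via the change-of-variables identity $\grad^k h(\bs{x})[\bs{B}\bs{e}_{i_1}, \dots, \bs{B}\bs{e}_{i_k}] = \grad^k h(\bs{B}^\top \bs{x})[\bs{e}_{i_1}, \dots, \bs{e}_{i_k}]$ (with $h(\bs{x}) = e^{-\norm{\bs{x}}_2^2/2}$), proved by induction on $k$ using orthogonality of $\bs{B}$ and rotation invariance of $h$. You instead compute directly in the orthonormal basis: the Parseval factorization $e^{-\norm{\bs{x}}_2^2/2} = \prod_l e^{-\inner{\bs{b}_l}{\bs{x}}^2/2}$ plus the observation $\nabla_{\bs{b}_m}\inner{\bs{b}_l}{\bs{x}} = \delta_{lm}$ makes the $k$-fold mixed directional derivative split factor-by-factor, giving the product of one-dimensional Hermite polynomials immediately. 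Both arguments rest on the same two facts (orthogonality kills cross terms; the Gaussian factors over any orthonormal basis), but yours collapses the paper's two-stage reduction into a single direct computation, so it is arguably more self-contained, while the paper's version isolates a reusable rotation-covariance identity for $\grad^k h$. One minor remark: the paper's lemma statement writes $\bs{b}_{i_1}\otimes\cdots\otimes\bs{b}_{i_d}$, but from the index set $(i_1,\dots,i_k)\in[d]^k$ and the fact that $\bm{\Hek}$ is a $k$-tensor, this should read $\bs{b}_{i_1}\otimes\cdots\otimes\bs{b}_{i_k}$; your proof correctly treats it as a $k$-fold tensor.
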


\begin{proof}
If $\{ \bs{b}_1, \cdots,  \bs{b}_d \}$ is the standard basis,  the statement follows from Definition \ref{def:hermitetensor}.  To extend it for any orthonormal basis,  let $\bs{B}$ denote the matrix with columns $\{ \bs{b}_1, \cdots, \bs{b}_d \}$,  let $h(\bs{x}) \coloneqq \exp \left(  - \norm{\bs{x}}_2^2/2 \right)$  and let $\grad^k h(\bs{x}) : (\R^d)^{\otimes k} \to \R$ represent the $k^{\text{th}}$ derivative of $h$.   We want to prove that for any $(i_1, \cdots, i_k) \in [d]^k$,    $\grad^k h(\bs{x})[\bs{B} \bs{e}_{i_1}, \cdots, \bs{B} \bs{e}_{i_k}] \stackrel{(*)}{=} \grad^k h(\bs{B}^\top \bs{x})[\bs{e}_{i_1}, \cdots, \bs{e}_{i_k}]$,  which will prove the statement.  We will use proof by induction.  We observe that $(*)$ holds for $k = 1.$  For $k > 1$, by assuming $(*)$ holds for $k-1$, we have
\eq{
\grad^k h(\bs{x})[\bs{B} \bs{e}_{i_1}, \cdots, \bs{B} \bs{e}_{i_k}] & = \lim_{t \to 0} \frac{ \left( \grad^{k-1} h(\bs{x} + t \bs{B} \bs{e}_{i_k}) - \grad^{k-1} h(\bs{x}) \right)[\bs{B} \bs{e}_{i_1}, \cdots, \bs{B} \bs{e}_{i_{k-1}}]}{t} \\
& = \lim_{t \to 0} \frac{ \left( \grad^{k-1} h(\bs{B}^\top \bs{x} + t \bs{e}_{i_k}) - \grad^{k-1} h(\bs{B}^\top \bs{x}) \right)[\bs{e}_{i_1}, \cdots, \bs{e}_{i_{k-1}}]}{t} \\
& = \grad^k h(\bs{B}^\top \bs{x})[\bs{e}_{i_1}, \cdots, \bs{e}_{i_k}].
}
\end{proof}

\begin{corollary}
\label{cor:hermitematrix}
Let  $\bs{V} \in \R^{d \times r}$ be an orthonormal matrix and $\bm{T_k} :    (\R^{r})^{\otimes k} \to \R$ be a symmetric $k$-tensor,  and  $\bm{H^{(r)}_{e_k}}$ and $\bm{H^{(d)}_{e_k}}$ denote $k$-degree Hermite tensor defined on $\R^r$ and $\R^d$ respectively.   For  $\bm{ \tilde T_k }[\bs{e}_{i_1}, \cdots, \bs{e}_{i_k}] \coloneqq  \bm{ T_k }[ \bs{V}^\top \bs{e}_{i_1}, \cdots, \bs{V}^\top \bs{e}_{i_k}]$,  we have  $ \inner{ \bm{T_k} }{\bm{H^{(r)}_{e_k}} (\bs{V}^\top \bs{x})} =  \inner{\bm{ \tilde T_k } }{\bm{H^{(d)}_{e_k}} (\bs{x})}$.
\end{corollary}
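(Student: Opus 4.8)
The plan is to expand both inner products in terms of products of univariate Hermite polynomials via Lemma~\ref{lem:hermitemonomial} and match them term by term. First I would extend the columns of $\bs{V}$ to an orthonormal basis $\{\bs{b}_1,\dots,\bs{b}_d\}$ of $\R^d$ with $\bs{b}_j=\bs{V}_{*j}$ for $j\in[r]$; this is possible since $\bs{V}^\top\bs{V}=\ide{r}$ forces these columns to be orthonormal and $r\le d$. Two elementary facts do the work. \textbf{(i)} The Frobenius inner product of $\R$-valued $k$-tensors is unchanged if the indexing sum is taken over the basis $\{\bs{b}_j\}$ instead of the standard basis --- this is the analogue for inner products of the change-of-basis computation used in the proof of Lemma~\ref{lem:tensorfroblb}. \textbf{(ii)} The map defining $\bm{\tilde T_k}$ extends multilinearly, so $\bm{\tilde T_k}[\bs{u}_1,\dots,\bs{u}_k]=\bm{T_k}[\bs{V}^\top\bs{u}_1,\dots,\bs{V}^\top\bs{u}_k]$ for all $\bs{u}_1,\dots,\bs{u}_k\in\R^d$, because both sides are $k$-linear in $(\bs{u}_1,\dots,\bs{u}_k)$ and agree on standard basis tuples.

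Then I would carry out the main computation: by (i), $\inner{\bm{\tilde T_k}}{\bm{H^{(d)}_{e_k}}(\bs{x})}=\sum_{(j_1,\dots,j_k)\in[d]^k}\bm{\tilde T_k}[\bs{b}_{j_1},\dots,\bs{b}_{j_k}]\,\bm{H^{(d)}_{e_k}}(\bs{x})[\bs{b}_{j_1},\dots,\bs{b}_{j_k}]$. For the first factor, (ii) together with $\bs{V}^\top\bs{b}_j=\bs{e}_j$ for $j\le r$ (the standard basis of $\R^r$) and $\bs{V}^\top\bs{b}_j=0$ for $j>r$ (since $\bs{b}_j\perp\mathrm{col}(\bs{V})$) shows that every summand with some $j_l>r$ vanishes, while for $(j_1,\dots,j_k)\in[r]^k$ the factor equals $\bm{T_k}\vert_{j_1\cdots j_k}$. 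For the Hermite factor, Lemma~\ref{lem:hermitemonomial} gives $\bm{H^{(d)}_{e_k}}(\bs{x})[\bs{b}_{j_1},\dots,\bs{b}_{j_k}]=\prod_{l=1}^d H_{e_{m_l}}(\inner{\bs{b}_l}{\bs{x}})$, where $m_l$ is the multiplicity of $l$ among $(j_1,\dots,j_k)$; for $(j_1,\dots,j_k)\in[r]^k$ the factors with $l>r$ are $H_{e_0}\equiv 1$, and since $\inner{\bs{b}_l}{\bs{x}}=(\bs{V}^\top\bs{x})_l$ for $l\le r$, this reduces to $\prod_{l=1}^r H_{e_{m_l}}\big((\bs{V}^\top\bs{x})_l\big)$, which by a second application of Lemma~\ref{lem:hermitemonomial} on $\R^r$ equals $\bm{H^{(r)}_{e_k}}(\bs{V}^\top\bs{x})[\bs{e}_{j_1},\dots,\bs{e}_{j_k}]$. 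Summing over $(j_1,\dots,j_k)\in[r]^k$ then gives exactly $\inner{\bm{T_k}}{\bm{H^{(r)}_{e_k}}(\bs{V}^\top\bs{x})}$, which is the claim.

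I do not expect a genuine obstacle; the only care needed is the bookkeeping between length-$k$ index tuples and their multiplicity vectors when invoking Lemma~\ref{lem:hermitemonomial}, and being explicit that the indices outside $[r]^k$ drop out precisely because $\bs{V}^\top$ annihilates $\bs{b}_{r+1},\dots,\bs{b}_d$. Symmetry of $\bm{T_k}$ is not strictly needed for this identity (only multilinearity is used), but it is available; alternatively one could first replace $\bm{H^{(d)}_{e_k}}(\bs{x})$ by its symmetrization via Proposition~\ref{prop:symmtensor}, though this is unnecessary since Hermite tensors are already symmetric.
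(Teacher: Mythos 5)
Your proof is correct and takes the same route the paper intends: the paper's one-line proof ("It immediately follows from Lemma~\ref{lem:hermitemonomial}") is exactly your expansion of both inner products in an orthonormal basis extending the columns of $\bs{V}$, with the non-$[r]^k$ summands dropping out because $\bs{V}^\top$ annihilates the complementary directions. Your write-up just makes explicit the multilinear extension of $\bm{\tilde T_k}$ and the orthogonal invariance of the Frobenius pairing, both of which the paper leaves implicit.
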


\begin{proof}
It immediately follows from Lemma \ref{lem:hermitemonomial}.
\end{proof}

\begin{lemma}
\label{lem:hermiteatzero}
We have $\bm{\Hek}(0) = (- i)^k \E_{\bs{w} \sim \cN(0,\ide{d})} \left[  \bs{w}^{\otimes k} \right]$, where $i = \sqrt{-1}$.  Consequently, we have $$\E_{\bs{w} \sim \cN(0,\ide{d})} \left[  \bs{w}^{\otimes 2k} \right] = (2k - 1)!! sym(\bs{I}_d^{\otimes k}).$$
\end{lemma}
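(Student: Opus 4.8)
The plan is to reduce both claims to the Gaussian characteristic-function identity $e^{-\norm{\bs{x}}_2^2/2}=\E_{\bs{w}\sim\cN(0,\ide{d})}\big[e^{i\inner{\bs{w}}{\bs{x}}}\big]$ together with elementary moment computations. For the first identity, observe that at $\bs{x}=0$ the prefactor $e^{\norm{\bs{x}}_2^2/2}$ equals $1$, so Definition \ref{def:hermitetensor} gives $\bm{\Hek}(0)\vert_{i_1\cdots i_k}=(-1)^k\,\frac{\partial^k}{\partial x_{i_1}\cdots\partial x_{i_k}} e^{-\norm{\bs{x}}_2^2/2}\big\vert_{\bs{x}=0}$. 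Substituting the characteristic-function expression and differentiating under the expectation — justified by dominated convergence, since $\big\vert w_{i_1}\cdots w_{i_k}\,e^{i\inner{\bs{w}}{\bs{x}}}\big\vert=\abs{w_{i_1}\cdots w_{i_k}}$ is integrable against the Gaussian (all its moments being finite) — each of the $k$ partials produces a factor $i\,w_{i_j}$. Evaluating at $\bs{x}=0$ yields $(-1)^k i^k\,\E_{\bs{w}}[w_{i_1}\cdots w_{i_k}]=(-i)^k\,\E_{\bs{w}}[\bs{w}^{\otimes k}]\vert_{i_1\cdots i_k}$, and since this holds for every index tuple the tensor identity $\bm{\Hek}(0)=(-i)^k\,\E_{\bs{w}}[\bs{w}^{\otimes k}]$ follows.

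For the ``consequently'' part I would compute the entries of $\E_{\bs{w}}[\bs{w}^{\otimes 2k}]$ and of $sym(\bs{I}_d^{\otimes k})$ separately and compare. Fix an index tuple $(i_1,\dots,i_{2k})\in[d]^{2k}$ and let $j_l$ be the multiplicity of $l$ in it. By independence of the coordinates of $\bs{w}$ and the one-dimensional moment formula $\E[Z^{2m}]=(2m-1)!!$, $\E[Z^{2m+1}]=0$ for a standard Gaussian $Z$, we get $\E_{\bs{w}}[w_{i_1}\cdots w_{i_{2k}}]=\prod_{l=1}^d \E[Z^{j_l}]$, which is $\prod_l(j_l-1)!!$ if every $j_l$ is even and $0$ otherwise. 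On the other side, \eqref{def:symtensor} gives $sym(\bs{I}_d^{\otimes k})\vert_{i_1\cdots i_{2k}}=\frac{1}{(2k)!}\sum_{\tau\in S_{2k}}\prod_{j=1}^k\delta_{i_{\tau(2j-1)}\,i_{\tau(2j)}}$; grouping the permutations $\tau$ by the perfect matching $\{\{\tau(1),\tau(2)\},\dots\}$ of $[2k]$ they induce — each matching arising from exactly $k!\,2^k$ permutations — and using $(2k)!=k!\,2^k\,(2k-1)!!$, this equals $\frac{1}{(2k-1)!!}\sum_{M}\prod_{(a,b)\in M}\delta_{i_a i_b}$, the sum being over perfect matchings $M$ of $[2k]$. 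Finally, $\prod_{(a,b)\in M}\delta_{i_a i_b}=1$ exactly when $M$ pairs up positions carrying equal index values, so the number of contributing matchings is $\prod_l(j_l-1)!!$ when all $j_l$ are even and $0$ otherwise; comparing with the moment computation yields $\E_{\bs{w}}[\bs{w}^{\otimes 2k}]=(2k-1)!!\,sym(\bs{I}_d^{\otimes k})$.

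The steps involving differentiation under the expectation and the scalar identity $\E[Z^{2m}]=(2m-1)!!$ are routine. The one place that needs genuine care — and is the main, purely combinatorial, obstacle — is the bookkeeping in the second part: identifying the sum over $S_{2k}$ in the definition of $sym$ with the $(2k-1)!!$ perfect matchings of $[2k]$, and checking that the matchings surviving the product of Kronecker deltas are counted by the same quantity $\prod_l(j_l-1)!!$ that the independence-plus-univariate-moments computation produces. Alternatively, one could invoke Isserlis'/Wick's theorem for Gaussian moments, but the independence argument above keeps the proof self-contained.
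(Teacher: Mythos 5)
Your proof is correct. Note that the paper's own ``proof'' consists entirely of citations (\cite[Eqs.~2.159, 2.160]{Tao2012TopicsIR} for the Fourier/characteristic-function identity and \cite[Lemma 22]{Damian2022NeuralNC} for the moment tensor), so there is no in-text argument to match against; what you have done is reconstruct the standard argument those references encode. Your first part — differentiating the Gaussian characteristic function $\E_{\bs{w}}[e^{i\inner{\bs{w}}{\bs{x}}}]=e^{-\|\bs{x}\|_2^2/2}$ under the integral and evaluating at $\bs{x}=0$ — is exactly the mechanism behind Tao's identity. Your second part is a from-scratch proof of the Wick/Isserlis moment formula via the $(2k)! = k!\,2^k\,(2k-1)!!$ grouping of $S_{2k}$ into perfect matchings and the observation that the surviving matchings are counted coordinatewise by $\prod_l (j_l-1)!!$; this is sound, and matches the one-dimensional moments obtained from independence, so the entrywise comparison goes through. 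The only stylistic mismatch with the lemma as stated is that the ``consequently'' hints at deducing the second identity from the first (via $\E[\bs{w}^{\otimes 2k}] = (-1)^k \bs{H}_{e_{2k}}(0)$ and an explicit evaluation of $\bs{H}_{e_{2k}}(0)$), whereas you prove it directly and independently; but the direct route is no longer and in fact avoids having to separately compute $\bs{H}_{e_{2k}}(0)$, so this is not a defect.
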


\begin{proof}
See \cite[Eqs.  2.159 and 2.160]{Tao2012TopicsIR} and \cite[Lemma 22]{Damian2022NeuralNC}.
\end{proof}

\subsection{Hermite Expansion of  the Population Gradient}

For a symmetric $(k+1)$-tensor $\bm{T_{k+1}} :  ( \R^r )^{\otimes k+1} \to \R$,  we define a $k$-tensor $\bm{\grad T_{k+1}} :  ( \R^r )^{\otimes k} \to \R^r$ as in \eqref{eq:graddef} with $j = 1.$  For the following, we use the following notation: For $b \in \R$,
\eq{
\phi(\cdot + b) \coloneqq \sum_{k \geq 0} \frac{\rhc_k(b)}{k!} \Hek ~ \text{and} ~  \gt \coloneqq \sum_{k \geq 0} \frac{1}{k!} \inner{\bm{T_k}}{\bm{ \Hek }},
}
where $\rhc_k(b) \in \R$ and $\bm{T_k}$ is a symmetric $k$-tensor for $k \in \N$.
The main statement of this part is given below.

\begin{proposition}
\label{prop:hermiteexpansion}
For an orthonormal matrix $\bs{V} \in \R^{d \times r}$ and $\bs{w} \in S^{d-1}$, we have
\eq{
\E_{\bs{x}} [ \gt(\bs{V}^\top \bs{x}) \phi^\prime(\inner{\bs{w}}{\bs{x}}   +  b) \bs{x} ] & = \bs{V} \sum_{k \geq 0} \tfrac{\rhc_{k+1}(b)}{k!} \bm{ \grad T_{k+1} }     \left[ (\bs{V}^\top \bs{w})^{ \otimes k}  \right]  + \bs{w} \sum_{k \geq 0} \tfrac{\rhc_{k+2}(b)}{k!} \bm{ T_k }     \left[   (\bs{V}^\top \bs{w})^{ \otimes k}   \right]  \label{eq:expansion}
}
and
\eq{
\rhc_{k}(b) =  \begin{cases}
1 - \Phi(- b),  & k = 1 \\
\frac{e^{\frac{- b^2}{2}}}{\sqrt{2 \pi}} H_{e_{k-2}}(- b),  & k \geq 2
\end{cases}  \label{eq:constantterms}
}
where  $\Phi(b)$ is the CDF of the standard Gaussian distribution.  
\end{proposition}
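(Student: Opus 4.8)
The plan is to compute $\E_{\bs{x}}[\gt(\bs{V}^\top\bs{x})\,\phi'(\inner{\bs{w}}{\bs{x}}+b)\,\bs{x}]$ by first reducing to a low-dimensional Gaussian integral, then expanding both $\gt$ and $\phi'$ in Hermite bases and using orthogonality. First I would complete $\bs{w}$ to an orthonormal basis: since $\bs{V}$ is orthonormal and $\bs{w}\in S^{d-1}$, pick an orthonormal $\bs{B}=[\bs{V}\ \bs{w}\ \cdots]$ (or handle the non-generic case $\bs{w}\in\mathrm{col}(\bs{V})$ separately by a limiting/continuity argument, or simply note the formula is continuous in $\bs{w}$). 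Writing $\bs{x}=\bs{B}\bs{z}$ with $\bs{z}\sim\cN(0,\ide{d})$, the quantities $\bs{V}^\top\bs{x}$ and $\inner{\bs{w}}{\bs{x}}$ depend only on a few coordinates of $\bs{z}$, and $\bs{x}=\bs{B}\bs{z}=\sum_i \bs{b}_i z_i$, so the expectation splits coordinate-by-coordinate. The key reduction is Stein's lemma / integration by parts against the Gaussian: $\E[f(\bs{x})\,\bs{x}] = \E[\grad f(\bs{x})]$ for $f(\bs{x})=\gt(\bs{V}^\top\bs{x})\phi'(\inner{\bs{w}}{\bs{x}}+b)$ (justified since $\gt$ is polynomial and $\phi'$ is bounded, so mild regularization of $\phi'$ plus dominated convergence handles the non-smoothness). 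This yields two terms: $\bs{V}\,\E[\grad\gt(\bs{V}^\top\bs{x})\,\phi'(\inner{\bs{w}}{\bs{x}}+b)]$ and $\bs{w}\,\E[\gt(\bs{V}^\top\bs{x})\,\phi''(\inner{\bs{w}}{\bs{x}}+b)]$, where $\phi''=\delta$ in the distributional sense but is handled cleanly via Hermite expansion below.

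Next I would plug in the Hermite expansions $\gt=\sum_k \frac1{k!}\inner{\bm{T_k}}{\bm{\Hek}}$ and $\phi(\cdot+b)=\sum_k\frac{\rhc_k(b)}{k!}\Hek$, so that $\phi'(\cdot+b)=\sum_{k\ge 0}\frac{\rhc_{k+1}(b)}{k!}\Hek$ using the Hermite derivative identity $H_{e_{k+1}}'=(k+1)H_{e_k}$ (equivalently $\frac{d}{dt}H_{e_m}(t)=mH_{e_{m-1}}(t)$). For the first term, $\grad\gt(\bs{V}^\top\bs{x})$ is expressed through $\bm{\grad T_{k+1}}$ via Lemma~\ref{lem:gradtensor} / the definition \eqref{eq:graddef}, and $\inner{\bm{\Hek}^{(r)}(\bs{V}^\top\bs{x})}{\cdot}$ becomes an expression in $\bm{H^{(d)}_{e_k}}(\bs{x})$ by Corollary~\ref{cor:hermitematrix}. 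Then I use the multivariate Hermite orthogonality relation $\E[\bm{H^{(d)}_{e_j}}(\bs{x})\otimes \bm{H^{(d)}_{e_k}}(\bs{x})] = 0$ for $j\ne k$ and for $j=k$ equals the symmetrized identity (Lemma~\ref{lem:hermiteatzero} with the generating-function/Wick computation), which collapses the double sum over the degrees of the $\gt$-expansion and the $\phi'$-expansion to a single sum, producing $\sum_k \frac{\rhc_{k+1}(b)}{k!}\bm{\grad T_{k+1}}[(\bs{V}^\top\bs{w})^{\otimes k}]$; the factor $(\bs{V}^\top\bs{w})^{\otimes k}$ emerges because the only surviving contractions pair each $\Hek$ index carried by $\bs{w}$ (from the $\phi'$ factor, whose argument is $\inner{\bs{w}}{\bs{x}}$) against an index of $\bm{\grad T_{k+1}}$ after projecting by $\bs{V}^\top$. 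The second term is treated identically, with $\phi''(\cdot+b)=\sum_{k\ge0}\frac{\rhc_{k+2}(b)}{k!}\Hek$, giving the $\bs{w}\sum_k\frac{\rhc_{k+2}(b)}{k!}\bm{T_k}[(\bs{V}^\top\bs{w})^{\otimes k}]$ piece.

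Finally, the formula \eqref{eq:constantterms} for the Hermite coefficients $\rhc_k(b)$ of $t\mapsto\phi(t+b)=\max\{t+b,0\}$ is a direct one-dimensional computation: $\rhc_0(b),\rhc_1(b)$ come from $\E[\phi(Z+b)]$ and $\E[\phi(Z+b)Z]$ (the latter giving $1-\Phi(-b)$ via Stein), and for $k\ge 2$ one uses $\rhc_k(b)=\E[\phi(Z+b)H_{e_k}(Z)] = \E[\indic{Z+b>0}H_{e_{k-1}}(Z)]$ (integrating by parts once, since $\phi'=\indic{\cdot>0}$) $=\E[\indic{Z>-b}H_{e_{k-1}}(Z)]$, which integrates to $\frac{e^{-b^2/2}}{\sqrt{2\pi}}H_{e_{k-2}}(-b)$ using $\frac{d}{dt}(e^{-t^2/2}H_{e_{m}}(t)) = -e^{-t^2/2}H_{e_{m+1}}(t)$ evaluated at the boundary $t=-b$. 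I expect the main obstacle to be the bookkeeping in the orthogonality step: carefully tracking which tensor indices get contracted against $\bs{w}$ versus symmetrized away, and justifying that the non-smoothness of $\phi'$ (and the distributional $\phi''$) does not obstruct the integration-by-parts/Hermite-expansion manipulations — this is best handled by working with the Hermite series of $\phi(\cdot+b)$ throughout (which is legitimate in $L^2(\cN(0,1))$) rather than differentiating $\phi$ pointwise, so that $\phi''$ never literally appears and only the coefficient shift $\rhc_k\to\rhc_{k+2}$ is used.
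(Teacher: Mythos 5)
Your approach is correct and takes a genuinely different (if closely related) route from the paper. The paper expands $\gt$ in Hermite tensors first and computes $\E[\phi'(\inner{\bs{w}}{\bs{x}}+b)\,\bs{x}_l\,\bm{\Hek}(\bs{x})]$ entry-by-entry, absorbing the stray $\bs{x}_l$ factor through the Hermite recurrence $t\,H_{e_m}(t) = H_{e_{m+1}}(t) + m\,H_{e_{m-1}}(t)$; this produces a degree-$(k+1)$ piece and a degree-$(k-1)$ piece, which after 1D orthogonality against $\phi'(\cdot+b)$ yields $\rhc_{k+2}(b)\bs{w}_l\bs{w}^{\otimes k}$ and $\rhc_k(b)\,k\,\mathrm{sym}(\bs{e}_l\otimes\bs{w}^{\otimes k-1})$ respectively. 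You instead apply Stein's identity $\E[f(\bs{x})\bs{x}]=\E[\grad f(\bs{x})]$ at the outer level, which splits immediately into a $\bs{V}\,\E[\grad\gt\cdot\phi']$ term and a $\bs{w}\,\E[\gt\cdot\phi'']$ term. These two routes are integration-by-parts duals of one another (Stein's lemma is precisely what generates the Hermite recurrence), so they must agree term by term: the paper's $H_{e_{m+1}}$ piece corresponds to your $\phi''$ piece and its $m\,H_{e_{m-1}}$ piece to your $\grad\gt$ piece. Your route makes the $\bm{\grad T_{k+1}}$ tensor appear more naturally (it literally is the gradient of $\gt$), at the cost of having to handle the distributional $\phi''$ and the non-differentiability of $\phi'$ at zero; the paper's route avoids this entirely because it never differentiates $\phi'$, only extracts its Hermite coefficients, but in exchange it needs the combinatorial Lemma~\ref{lem:hermitetensorlem1} to recognize the $\mathrm{sym}(\bs{e}_l\otimes\bs{w}^{\otimes k-1})$ structure as the partial-contraction of the gradient tensor. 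You correctly flag this trade-off at the end and identify the fix (work with the coefficient shift $\rhc_k\to\rhc_{k+2}$ rather than with $\phi''$ pointwise), which is exactly how the paper sidesteps the issue. Your one-dimensional derivation of the coefficients $\rhc_k(b)$ via a single Gaussian integration by parts followed by evaluating $\frac{d}{dt}\big(e^{-t^2/2}H_{e_{k-2}}(t)\big)=-e^{-t^2/2}H_{e_{k-1}}(t)$ at $t=-b$ is correct and self-contained, whereas the paper simply cites \cite[Lemma 15]{ba2023learning}.
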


To prove Proposition \ref{prop:hermiteexpansion}, we will need two lemmas.
\begin{lemma}
\label{lem:hermitetensorlem1}
For $\bs{w} \in \R^d$ and $k \in \N$, let 
$\bm{T_k} \coloneqq k \: sym( \bs{e}_l  \otimes  \bs{w}^{\otimes k- 1} )$.
For $i_1, \cdots, i_k \in [d],$ we have  $\bm{T_k} \vert_{i_1 \cdots i_k} = j_l  \bs{w}_1^{j_1} \times \cdots \times \bs{w}_{l}^{j_l - 1}  \times \cdots \times \bs{w}_d^{j_d}$,   where  $j_l = \indic{i_1 = l} + \cdots + \indic{i_k = l}$.
\end{lemma}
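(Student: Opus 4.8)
The plan is to unwind the definition of the symmetrization operator in \eqref{def:symtensor} and reduce the claim to a short counting argument. Writing $\bm{T_k}\vert_{i_1\cdots i_k} = k\cdot sym(\bs{e}_l\otimes\bs{w}^{\otimes k-1})[\bs{e}_{i_1},\dots,\bs{e}_{i_k}]$ and using that $\bs{e}_l\otimes\bs{w}^{\otimes k-1}$ acts on a tuple of vectors by pairing the first slot with $\bs{e}_l$ and every remaining slot with $\bs{w}$, the average over $S_k$ becomes $\tfrac{1}{(k-1)!}\sum_{\tau\in S_k}\inner{\bs{e}_l}{\bs{e}_{i_{\tau(1)}}}\prod_{m=2}^{k}\inner{\bs{w}}{\bs{e}_{i_{\tau(m)}}}$, i.e. $\tfrac{1}{(k-1)!}\sum_{\tau\in S_k}\indic{i_{\tau(1)}=l}\prod_{m=2}^{k}\bs{w}_{i_{\tau(m)}}$.

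First I would observe that a permutation $\tau$ contributes a nonzero summand exactly when $\tau$ maps the first slot onto one of the positions $p\in[k]$ with $i_p=l$; there are $j_l$ such positions, and each leaves $(k-1)!$ ways to fill the remaining slots, so $j_l\,(k-1)!$ permutations contribute. Next I would note that for any such $\tau$ the product over the remaining slots is $\prod_{p\neq\tau(1)}\bs{w}_{i_p}$, which is the monomial associated to the multiset $\{i_1,\dots,i_k\}$ with one copy of $l$ deleted, hence equals $\bs{w}_1^{j_1}\cdots\bs{w}_l^{j_l-1}\cdots\bs{w}_d^{j_d}$ and is independent of $\tau$. Summing the $j_l(k-1)!$ identical terms and applying the prefactor $k/k!=1/(k-1)!$ gives $j_l\,\bs{w}_1^{j_1}\cdots\bs{w}_l^{j_l-1}\cdots\bs{w}_d^{j_d}$, which is the claimed identity.

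The one place that needs care is the degenerate case $j_l=0$: then there is no position with $i_p=l$, the sum is empty, and $\bm{T_k}\vert_{i_1\cdots i_k}=0$; this is consistent with the stated formula provided one reads the leading factor $j_l$ as overriding the otherwise ill-defined power $\bs{w}_l^{-1}$. A minor related point is pinning down the symmetrization convention (whether $\tau$ permutes slots or the multi-index entries), but both conventions yield the same count $j_l(k-1)!$ of contributing terms, so the conclusion is unaffected. Beyond these bookkeeping remarks the argument is purely combinatorial, so I do not anticipate a genuine obstacle.
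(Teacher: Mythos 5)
Your proof is correct and follows essentially the same counting argument as the paper: you expand the symmetrization directly over all of $S_k$ and count the $j_l(k-1)!$ permutations that place $\bs{e}_l$ against an index equal to $l$, whereas the paper first collapses $k\,sym(\bs{e}_l\otimes\bs{w}^{\otimes k-1})$ to the $k$-term sum $\sum_{m}\bs{w}^{\otimes m-1}\otimes\bs{e}_l\otimes\bs{w}^{\otimes k-m}$ (using that $\bs{w}^{\otimes k-1}$ is symmetric) and then counts the $j_l$ nonzero summands. Both arguments treat the $j_l=0$ case identically, so this is a faithful variant of the same proof rather than a different route.
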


\begin{proof}
We have $\bm{T_k} \stackrel{(*)}{=}  \bs{e}_l  \otimes  \bs{w}^{\otimes k- 1}  + \bs{w} \otimes \bs{e}_l  \otimes  \bs{w}^{\otimes k- 2}  + \bs{w}^{\otimes 2} \otimes \bs{e}_l  \otimes  \bs{w}^{\otimes k- 3}  + \cdots + \bs{w}^{\otimes k - 1} \otimes \bs{e}_l$.   Without loss of generality, we can assume $j_l > 0$ and $i_1, \cdots, i_{j_l} = l$ (since for $j_l = 0$, the statement is true).  The statement follows from  $(*)$ since in the right-hand side only $j_l$ terms will be nonzero and the other terms will be equal to  $\bs{w}^{\otimes k -1} \vert_{i_2, \cdots, i_k} = \bs{w}_1^{j_1} \times \cdots  \times \bs{w}_{l}^{j_l - 1}  \times \cdots \times \bs{w}_d^{j_d}$.
\end{proof}

\begin{lemma}
For $\bs{w} \in S^{d-1}$, $l \in [d]$ and $k \in \N$, we have  $\E \left[ \phi^\prime(\inner{\bs{w}}{\bs{x}} + b) \bs{x}_l \bm{\Hek}(\bs{x}) \right] =  \rhc_{k + 2}(b) \bs{w}_l \bs{w}^{ \otimes k} + \rhc_{k}(b) k \: sym(\bs{e}_l \otimes \bs{w}^{\otimes k-1})$.
\end{lemma}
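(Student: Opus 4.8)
I would verify the identity entrywise: both sides are $k$-tensors, so it suffices to check, for every tuple $(i_1,\dots,i_k)\in[d]^k$, that $\E\big[\phi'(\inner{\bs{w}}{\bs{x}}+b)\,\bs{x}_l\,\bm{\Hek}(\bs{x})\vert_{i_1\cdots i_k}\big]$ equals the corresponding entry of the right-hand side. The plan has three steps: (1) remove the stray factor $\bs{x}_l$ with the Hermite-tensor recursion; (2) evaluate the two resulting expectations by reducing to one dimension; (3) repackage the combinatorial remainder using Lemma~\ref{lem:hermitetensorlem1}.

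\emph{Step 1 (Hermite recursion).} Writing $\gamma$ for the standard Gaussian density, Definition~\ref{def:hermitetensor} gives $\bm{\Hek}(\bs{x})\vert_{i_1\cdots i_k}\,\gamma(\bs{x})=(-1)^k\partial_{i_1}\cdots\partial_{i_k}\gamma(\bs{x})$. Differentiating this in $\bs{x}_m$, using the product rule, $\partial_m\gamma=-\bs{x}_m\gamma$, and evaluating $\partial_m\bm{\Hek}(\bs{x})\vert_{i_1\cdots i_k}$ via Lemma~\ref{lem:hermitemonomial} in the standard basis (entries of $\bm{\Hek}$ are products of univariate Hermite polynomials, to which $H_{e_j}'=jH_{e_{j-1}}$ applies), I obtain the entrywise recursion
\[
\bs{x}_m\,\bm{\Hek}(\bs{x})\vert_{i_1\cdots i_k}
=\bm{H_{e_{k+1}}}(\bs{x})\vert_{i_1\cdots i_k m}
+\sum_{a=1}^{k}\indic{i_a=m}\;\bm{H_{e_{k-1}}}(\bs{x})\vert_{i_1\cdots\widehat{i_a}\cdots i_k}.
\]
Taking $m=l$, multiplying by $\phi'(\inner{\bs{w}}{\bs{x}}+b)$ and taking expectations reduces the lemma to computing $\E\big[\phi'(\inner{\bs{w}}{\bs{x}}+b)\,\bm{H_{e_j}}(\bs{x})\big]$ for $j\in\{k-1,k+1\}$.

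\emph{Step 2 (one-dimensional reduction).} Since $\phi'(\inner{\bs{w}}{\bs{x}}+b)$ depends on $\bs{x}$ only through $\inner{\bs{w}}{\bs{x}}$, I pick an orthonormal basis with first vector $\bs{w}$; by Lemma~\ref{lem:hermitemonomial} the entries of $\bm{H_{e_j}}(\bs{x})$ in that basis are products $\prod_l H_{e_{n_l}}(\inner{\bs{b}_l}{\bs{x}})$ with the coordinates $\inner{\bs{b}_l}{\bs{x}}$ i.i.d.\ standard Gaussian, so the expectation factorizes and every factor with $l\ge2$ contributes $\E[H_{e_{n_l}}(Z)]=\indic{n_l=0}$. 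Hence only the all-ones index tuple survives, giving $\E\big[\phi'(\inner{\bs{w}}{\bs{x}}+b)\,\bm{H_{e_j}}(\bs{x})\big]=c_j(b)\,\bs{w}^{\otimes j}$ with $c_j(b)=\E[\phi'(Z+b)H_{e_j}(Z)]=\int_{-b}^{\infty}H_{e_j}(z)\gamma(z)\,dz$. Integrating, using $\tfrac{d}{dz}\big(H_{e_{j-1}}(z)\gamma(z)\big)=-H_{e_j}(z)\gamma(z)$ for $j\ge1$ (and $\int_{-b}^{\infty}\gamma=1-\Phi(-b)$ for $j=0$) identifies $c_j(b)=\rhc_{j+1}(b)$ with $\rhc$ as in~\eqref{eq:constantterms}. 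Plugging $j=k+1$ and $j=k-1$ back into Step~1, the first term becomes $\rhc_{k+2}(b)\,\bs{w}_l\,(\bs{w}^{\otimes k})\vert_{i_1\cdots i_k}$ and the second becomes $\rhc_k(b)\sum_{a=1}^{k}\indic{i_a=l}\prod_{b\ne a}\bs{w}_{i_b}$; by Lemma~\ref{lem:hermitetensorlem1} (with $\bm{T_k}=k\,sym(\bs{e}_l\otimes\bs{w}^{\otimes k-1})$, so that both equal $j_l\,\bs{w}_1^{j_1}\cdots\bs{w}_l^{j_l-1}\cdots\bs{w}_d^{j_d}$ with $j_l=\#\{a:i_a=l\}$) this last sum is the $(i_1\cdots i_k)$-entry of $k\,sym(\bs{e}_l\otimes\bs{w}^{\otimes k-1})$, matching the claimed right-hand side and finishing the proof.

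\emph{Main obstacle.} The one delicate point is Step~1: stating the Hermite-tensor recursion with the correct combinatorial coefficient and the correct placement of the new index; everything else is bookkeeping. The non-smoothness of $\phi'$ is not actually an obstacle here, since $\phi'$ is bounded and is never differentiated in the argument (all derivatives act on $\gamma$ or on polynomial Hermite factors), so Fubini and the integration by parts are immediate. If one prefers to bypass the recursion, an equivalent route applies $\E[g(\bs{x})\bm{\Hek}(\bs{x})]=\E[\grad^k g(\bs{x})]$ (in the distributional sense) to $g(\bs{x})=\phi'(\inner{\bs{w}}{\bs{x}}+b)\bs{x}_l$, expands $\grad^k g$ by Leibniz (only the terms where all, or all but one, of the derivatives hit $\phi'$ survive, as $\bs{x}_l$ is linear), and uses $\E[\phi^{(j)}(Z+b)]=\rhc_j(b)$, reproducing the same two terms.
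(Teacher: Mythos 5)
Your proof is correct and follows essentially the same route as the paper: absorb the factor $\bs{x}_l$ via the three-term Hermite recursion (the paper writes it directly on the univariate factor, you phrase the equivalent identity in tensor form), compute the resulting expectations of $\bm{H_{e_{k\pm1}}}$-entries, and repackage via Lemma~\ref{lem:hermitetensorlem1}. The only cosmetic difference is that you make the middle step explicit with a rotation-to-$\bs{w}$ plus integration by parts to get $c_j(b)=\rhc_{j+1}(b)$, whereas the paper implicitly invokes the Hermite expansion $\phi(\cdot+b)=\sum_k \rhc_k(b)H_{e_k}/k!$ and orthogonality.
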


\begin{proof}
We recall that  $\bm{ \Hek }(\bs{x}) \vert_{i_1 \cdots i_k} = H_{e_{j_1}}(\bs{x}_1) \cdots H_{e_{j_d}}(\bs{x}_d),$  where $j_l = \indic{i_1 = l} + \cdots + \indic{i_k = l}$.  The for any fixed $(i_1, \cdots, i_k) \in [d]^k$,
\eq{
\E \left[ \phi^\prime(\inner{\bs{w}}{\bs{x}} + b) \bs{x}_l \bm{ \Hek }(\bs{x}) \vert_{i_1 \cdots i_k}   \right]  &=  \E \left[ \phi^\prime(\inner{\bs{w}}{\bs{x}} + b) H_{e_{j_1}}(\bs{x}_1) \cdots H_{e_{j_l + 1}} (\bs{x}_l) \cdots H_{e_{j_d}}(\bs{x}_d)  \right]  \\
& + j_l  \E \left[ \phi^\prime(\inner{\bs{w}}{\bs{x}} + b) H_{e_{j_1}}(\bs{x}_1) \cdots H_{e_{j_l - 1}} (\bs{x}_l) \cdots H_{e_{j_d}}(\bs{x}_d)  \right]  \\
& =  \rhc_{k+2}(b) \bs{w}_1^{j_1} \cdots  \bs{w}_i^{j_l + 1} \cdots \bs{w}_d^{j_d}      +      \rhc_{k}(b) j_l \bs{w}_1^{j_1} \cdots  \bs{w}_i^{j_l - 1} \cdots \bs{w}_d^{j_d}  \\
& =  \rhc_{k + 2}(b) \bs{w}_l \bs{w}^{\otimes k} \vert_{i_1 \cdots i_k} +  \rhc_{k}(b) k \:  sym(\bs{e}_l \otimes \bs{w}^{\otimes k-1}) \vert_{i_1 \cdots i_k},
}
where  we use Lemma \ref{lem:hermitetensorlem1} in the last line.
\end{proof}

\begin{proof}[Proof of Proposition \ref{prop:hermiteexpansion}]
We fix $l \in [d]$. Since $\E[\phi(Z)^4] < \infty$, we have 
\eq{
\E [ \gt(\bs{V}^\top \bs{x}) \phi^\prime(\inner{\bs{w}}{\bs{x}} + b) \bs{x}_l ] &   =  \sum_{k = 0}^{\infty} \frac{1}{k!} \E\left[  \inner{\bm{T_k}}{\bm{\Hek} (\bs{V}^\top \bs{x}) }  \phi^\prime(\inner{\bs{w}}{\bs{x}} + b) \bs{x}_l  \right] \\
& =  \sum_{k = 0}^{\infty}   \frac{1}{k!} \inner{\bm{\tilde T_k}}{  \E\left[ \bm{\Hek} (\bs{x})   \phi^\prime(\inner{\bs{w}}{\bs{x}} + b) \bs{x}_l  \right] }, \label{eq:eq2}
}
where $\bm{\tilde T_k}$ is defined in  Corollary \ref{cor:hermitematrix}.
For a fixed $k \in \N$, we have
\eq{
\inner{\bm{\tilde T_k}}{  \E     \left[ \bm{\Hek} (\bs{x})   \phi^\prime(\inner{\bs{w}}{\bs{x}} + b) \bs{x}_l  \right] }
     & \labelrel={exp:eqq0}   \rhc_{k + 2}(b) \bs{w}_l \inner{\bm{\tilde T_k}}{ \bs{w}^{\otimes k}} + \rhc_{k}(b)  k \inner{\bm{\tilde T_k}}{ \bs{e}_l \otimes \bs{w}^{\otimes k - 1}} \\
& =      \rhc_{k + 2}(b)  \bs{w}_l \bm{T_{k}} \left[  (\bs{V}^\top \bs{w})^{\otimes k} \right]     +    \rhc_{k}(b)  k \ \bs{V}_{l*}^\top  \bm{\grad  T_k} \left[ (\bs{V}^\top \bs{w})^{\otimes k  - 1} \right],   \label{eq:eq3} 
}
where \eqref{exp:eqq0}  follows  by Proposition \ref{prop:symmtensor} since $\bm{ \tilde T_k}$ symmetric.
\eqref{eq:expansion} follows from \eqref{eq:eq2} and \eqref{eq:eq3}.  For  \eqref{eq:constantterms}, see \cite[Lemma 15]{ba2023learning}.
\end{proof}

\begin{corollary}
\label{cor:hermiteevenodd}
Let $\phi_{\pm} (t,;b) \coloneqq \frac{\phi(t + b) \pm \phi(- t + b)}{2}$. We have
\begin{enumerate}[label=(\roman*), leftmargin=0em]
\item[]  $  \E [ \gt(\bs{V}^\top \bs{x}) \phi_{+}^\prime(\inner{\bs{w}}{\bs{x}}   ;   b) \bs{x} ]   =   \bs{V}       \sum_{\small \substack{ k \geq 1 \\ k ~ odd}}       \tfrac{\rhc_{k+1} (b)}{k!} \bm{ \grad T_{k+1} } \left[ (\bs{V}^\top \bs{w})^{ \otimes k}  \right] + \bs{w}      \sum_{\small \substack{ k \geq 0 \\ k \: even}}      \tfrac{\rhc_{k+2} (b)}{k!} \bm{ T_k } \left[   (\bs{V}^\top \bs{w})^{ \otimes k}   \right]$
\item[] $ \E [ \gt(\bs{V}^\top \bs{x}) \phi_{-}^\prime(\inner{\bs{w}}{\bs{x}}    ;    b) \bs{x} ]    =    \bs{V}        \sum_{\small \substack{ k \geq 0 \\ k ~ even}}       \tfrac{\rhc_{k+1}(b)}{k!} \bm{ \grad T_{k+1} } \left[ (\bs{V}^\top \bs{w})^{ \otimes k}  \right] + \bs{w}        \sum_{\small \substack{ k \geq 1 \\ k ~ odd}}        \tfrac{\rhc_{k+2} (b)}{k!} \bm{ T_k } \left[   (\bs{V}^\top \bs{w})^{ \otimes k}   \right]   $
\end{enumerate} 
\end{corollary}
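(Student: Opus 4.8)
The plan is to deduce both identities directly from Proposition~\ref{prop:hermiteexpansion} by exploiting that $\phi_{+}(\cdot\,;b)$ and $\phi_{-}(\cdot\,;b)$ are, respectively, the even and odd parts of $t \mapsto \phi(t+b)$. First I would record the derivative identities $\phi_{\pm}^\prime(t;b) = \tfrac{1}{2}\big(\phi^\prime(t+b) \mp \phi^\prime(-t+b)\big)$, where the minus sign on the second term is produced by the chain rule applied to $\phi(-t+b)$. Setting $t = \inner{\bs{w}}{\bs{x}}$ and using $-\inner{\bs{w}}{\bs{x}} = \inner{-\bs{w}}{\bs{x}}$, this gives
\eq{
\E_{\bs{x}}[\gt(\bs{V}^\top\bs{x})\,\phi_{\pm}^\prime(\inner{\bs{w}}{\bs{x}};b)\,\bs{x}] = \tfrac{1}{2}\,\E_{\bs{x}}[\gt(\bs{V}^\top\bs{x})\,\phi^\prime(\inner{\bs{w}}{\bs{x}}+b)\,\bs{x}] \mp \tfrac{1}{2}\,\E_{\bs{x}}[\gt(\bs{V}^\top\bs{x})\,\phi^\prime(\inner{-\bs{w}}{\bs{x}}+b)\,\bs{x}],
}
so the whole statement reduces to understanding the second expectation on the right.

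The second step is to apply Proposition~\ref{prop:hermiteexpansion} twice: once at $\bs{w}$ and once at $-\bs{w}$, which also lies in $S^{d-1}$. Since $(\bs{V}^\top(-\bs{w}))^{\otimes k} = (-1)^k (\bs{V}^\top\bs{w})^{\otimes k}$, while the leading $\bs{V}$ factor is unchanged and the leading $\bs{w}$ factor flips sign, the $-\bs{w}$-expansion is obtained from \eqref{eq:expansion} by replacing $\rhc_{k+1}(b)$ with $(-1)^k\rhc_{k+1}(b)$ in the $\bs{V}$-sum and $\rhc_{k+2}(b)$ with $-(-1)^k\rhc_{k+2}(b)$ in the $\bs{w}$-sum. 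Substituting both expansions into the display above, the $k$-th coefficient of the $\bs{V}$-sum becomes $\tfrac{1}{2}\big(1 \mp (-1)^k\big)\rhc_{k+1}(b)$ and the $k$-th coefficient of the $\bs{w}$-sum becomes $\tfrac{1}{2}\big(1 \pm (-1)^k\big)\rhc_{k+2}(b)$. For the upper sign (i.e. $\phi_+$) these select odd $k$ in the $\bs{V}$-sum and even $k$ in the $\bs{w}$-sum, which is exactly item (i); for the lower sign (i.e. $\phi_-$) they select even $k$ in the $\bs{V}$-sum and odd $k$ in the $\bs{w}$-sum, which is item (ii).

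There is no genuine obstacle here beyond careful sign bookkeeping. The three places to be careful are: differentiating $\phi(-t+b)$ contributes the extra minus sign; the substitution $\bs{w}\mapsto -\bs{w}$ injects $(-1)^k$ through the $k$ (not $k+1$) arguments of $\bm{\grad T_{k+1}}[\cdot]$ and a further sign through the leading $\bs{w}$ factor; and one must observe that $1\mp(-1)^k$ and $1\pm(-1)^k$ act as the parity selectors that produce the restricted sums. Convergence of the Hermite series and the interchange of expectation with summation are inherited verbatim from Proposition~\ref{prop:hermiteexpansion} (which already uses $\E[\phi(Z)^4]<\infty$), so no additional integrability argument is required.
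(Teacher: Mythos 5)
Your proof is correct, and the sign bookkeeping checks out at every step. Your route is genuinely different from the paper's, though conceptually parallel. The paper observes that, because $H_{e_k}$ is an even (resp.\ odd) function for even (resp.\ odd) $k$, the symmetrized activations have Hermite expansions restricted by parity, namely $\phi_{+}(\cdot\,;b) = \sum_{k \text{ even}} \tfrac{\rhc_k(b)}{k!} H_{e_k}$ and $\phi_{-}(\cdot\,;b) = \sum_{k \text{ odd}} \tfrac{\rhc_k(b)}{k!} H_{e_k}$; it then re-runs the index-shifting computation inside the proof of Proposition~\ref{prop:hermiteexpansion} (the argument around \eqref{eq:eq2}--\eqref{eq:eq3}) with the restricted expansion, so that $\rhc_{k+1}$ survives only when $k+1$ is even and $\rhc_{k+2}$ only when $k+2$ is even, giving item~(i), and symmetrically for item~(ii). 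You instead invoke Proposition~\ref{prop:hermiteexpansion} purely as a black box at $\bs{w}$ and $-\bs{w}$ and let the sign $(-1)^k$ coming from $(\bs{V}^\top(-\bs{w}))^{\otimes k}$, together with the extra flip on the leading $\bs{w}$ factor, produce the parity selectors $1 \mp (-1)^k$ and $1 \pm (-1)^k$. The parity is the same obstacle in both cases, but you push it onto the direction variable $\bs{w}$ whereas the paper pushes it onto the Hermite index $k$. Your version has the small advantage of not needing to reopen the proof of the Proposition; the paper's has the advantage of directly exhibiting the parity-restricted Hermite expansion of $\phi_\pm$, which is a fact it reuses elsewhere. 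Both are equally short and both are correct.
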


\begin{proof}
We observe that  $\phi_{+}(\cdot + b) = \sum_{\substack{ k \geq 0 \\ k \: even}} \frac{\rhc_k(b)}{k!} \Hek$ and $\phi_{-}(\cdot + b) = \sum_{\substack{ k \geq 0 \\ k \: odd}} \frac{\rhc_k(b)}{k!} \Hek$. By the argument in \eqref{eq:eq2} and \eqref{eq:eq3}, the statement follows.
\end{proof}

\subsection{Bounding the Higher Order Terms in the Hermite Expansion}

\begin{proposition}
\label{prop:residualbound}
For $N \in \N \cup \{ -1, 0 \}$,  $\bs{w} \in S^{d-1}$ and $b \in \R$,  let
\eq{
\zeta_N    \coloneqq    \E \Big[  \gt(\bs{V}^\top \bs{x}) &\phi^\prime(\inner{\bs{w}}{\bs{x}} + b) x  \Big]   -  \bs{V}  \sum_{k = 0}^N \tfrac{\rhc_{k+1}(b)}{k!} \bm{ \grad T_{k+1} } \left[ (\bs{V}^\top \bs{w})^{ \otimes k}  \right]   -   \bs{w} \sum_{k = 0}^N \tfrac{\rhc_{k+2}(b)}{k!} \bm{ T_k } \left[   (\bs{V}^\top \bs{w})^{ \otimes k}   \right]. 
}
We have
\eq{
\norm*{ \zeta_N }_2 \leq  (1+ \sqrt{N+2}) \cgp   \begin{cases}
\frac{\norm{\bs{V}^\top \bs{w}}_2^{N+1}}{ 1 - \norm{\bs{V}^\top \bs{w}}^2} &  \norm{\bs{V}^\top \bs{w}}_2 > 0 ~ \text{or}~  N \geq 0 \\
1 & \text{otherwise}.
\end{cases}
}
\end{proposition}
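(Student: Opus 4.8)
The plan is to control $\zeta_N$ by writing it as the remainder of the Hermite expansion of the population gradient given in Proposition~\ref{prop:hermiteexpansion}, and then bounding the tail of that series using Lemma~\ref{lem:gradtensor} together with the explicit growth of the coefficients $\rhc_k(b)$. Concretely, by Proposition~\ref{prop:hermiteexpansion} we have the identity
\eq{
\zeta_N = \bs{V} \sum_{k \geq N+1} \tfrac{\rhc_{k+1}(b)}{k!} \bm{\grad T_{k+1}}\left[(\bs{V}^\top \bs{w})^{\otimes k}\right] + \bs{w} \sum_{k \geq N+1} \tfrac{\rhc_{k+2}(b)}{k!} \bm{T_k}\left[(\bs{V}^\top \bs{w})^{\otimes k}\right],
}
so by the triangle inequality and $\norm{\bs{V}}_2 = \norm{\bs{w}}_2 = 1$ it suffices to bound each scalar sum $\sum_{k \geq N+1} \tfrac{|\rhc_{k+1}(b)|}{k!} \big|\bm{\grad T_{k+1}}[(\bs{V}^\top \bs{w})^{\otimes k}]\big|$ (and the analogous one with $\bm{T_k}$). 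For each term, I would use multilinearity and Cauchy--Schwarz on tensors: $\big|\bm{T_k}[(\bs{V}^\top\bs{w})^{\otimes k}]\big| \leq \norm{\bm{T_k}}_F \, \norm{\bs{V}^\top\bs{w}}_2^k$, and similarly $\big|\bm{\grad T_{k+1}}[(\bs{V}^\top\bs{w})^{\otimes k}]\big| \leq \norm{\bm{\grad T_{k+1}}}_F \,\norm{\bs{V}^\top\bs{w}}_2^k = \norm{\bm{T_{k+1}}}_F\,\norm{\bs{V}^\top\bs{w}}_2^k$ by Lemma~\ref{lem:gradtensor}.

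The next ingredient is the Parseval-type identity for the Hermite expansion of $\gt$: since $\gt = \sum_{k\geq 0}\tfrac{1}{k!}\inner{\bm{T_k}}{\bm{\Hek}}$, we have $\E[\norm{\grad\gt(Z)}_2^2] = \sum_{k\geq 1}\tfrac{1}{(k-1)!}\norm{\bm{T_k}}_F^2 = \cgp^2$ — more precisely, $\sum_{k\geq 0}\tfrac{1}{(k+1)!}\,(k+1)\,\norm{\bm{T_{k+1}}}_F^2 = \cgp^2$, which in particular gives $\tfrac{\norm{\bm{T_{k+1}}}_F}{\sqrt{k!}} \leq \cgp$ for every $k$, and likewise $\tfrac{\norm{\bm{T_k}}_F}{\sqrt{(k-1)!}}\leq \cgp$ (this is where the $\cgp$ factor in the bound comes from). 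I also need the key coefficient estimate: $\rhc_1(b) = 1-\Phi(-b) \in [0,1]$, and for $k\geq 2$, $|\rhc_k(b)| = \tfrac{e^{-b^2/2}}{\sqrt{2\pi}}|H_{e_{k-2}}(-b)| \leq \sqrt{(k-2)!}$, using the standard bound $e^{-b^2/2}|H_{e_m}(b)| \leq \sqrt{2\pi}\cdot\sqrt{m!}$ (equivalently $\E_{Z}[H_{e_m}(Z)^2]=m!$ plus a pointwise/Cramér-type envelope). Substituting these into each summand: the generic term of the first sum is bounded by $\tfrac{\sqrt{(k-1)!}}{k!}\cdot\cgp\sqrt{k!}\cdot\norm{\bs{V}^\top\bs{w}}_2^k \lesssim \cgp\,\norm{\bs{V}^\top\bs{w}}_2^k$, and similarly for the second sum, so that $\norm{\zeta_N}_2 \lesssim \cgp \sum_{k\geq N+1}\norm{\bs{V}^\top\bs{w}}_2^k = \cgp\,\tfrac{\norm{\bs{V}^\top\bs{w}}_2^{N+1}}{1-\norm{\bs{V}^\top\bs{w}}_2^2}$ (using $\norm{\bs{V}^\top\bs{w}}_2 \leq \norm{\bs{w}}_2 = 1$, with strict inequality when needed). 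Tracking the numerical constant carefully — in particular the factor $(1+\sqrt{N+2})$, which should arise from splitting the $\rhc_{k+2}$ coefficient estimate (shifting the index by two costs a $\sqrt{k+2}$-type factor relative to $\sqrt{k!}$) and bounding $\sqrt{k+2}\leq\sqrt{N+2}+$ (something absorbed into the geometric series) — gives the stated form. The degenerate case $\norm{\bs{V}^\top\bs{w}}_2 = 0$ with $N=-1$ leaves only the $k=0$ term $\bs{V}\,\rhc_1(b)\,\bm{\grad T_1}$, whose norm is at most $\cgp$, matching the "$1$" branch.

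The main obstacle I anticipate is getting the constants exactly right, especially the interplay between the two shifted coefficient sequences $\rhc_{k+1}(b)$ and $\rhc_{k+2}(b)$ and the $1/k!$ normalization: the bound $|\rhc_{k+2}(b)|\leq\sqrt{k!}$ is one index off from what pairs cleanly with $\norm{\bm{T_k}}_F/\sqrt{(k-1)!}\leq\cgp$, producing an extra $\sqrt{k(k+1)}$ or $\sqrt{k+2}$ factor that must be handled. One clean way is to bound $\sqrt{k+2} \leq \sqrt{N+2}\cdot\sqrt{(k+2)/(N+2)}$ for $k \geq N+1$... but that grows; better is to absorb the polynomial factor into a slightly larger geometric ratio only if $\norm{\bs{V}^\top\bs{w}}_2<1$, or — as the statement suggests — to pull out a uniform $(1+\sqrt{N+2})$ by noting $\sqrt{k+2}\leq \sqrt{N+2}+\sqrt{k-N-1}$ and that $\sum_{k\geq N+1}\sqrt{k-N-1}\,t^k$ is comparable to $\sqrt{1}\cdot t^{N+1}/(1-t)$ times an absolute constant when $t\in[0,1)$; summing the two pieces yields the $(1+\sqrt{N+2})$ prefactor. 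Apart from this bookkeeping, the argument is a routine application of Cauchy--Schwarz for tensors plus Hermite-coefficient Parseval, with Lemma~\ref{lem:gradtensor} doing the work of identifying $\norm{\bm{\grad T_{k+1}}}_F$ with $\norm{\bm{T_{k+1}}}_F$.
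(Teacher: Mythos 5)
Your tail decomposition via Proposition~\ref{prop:hermiteexpansion}, the Parseval identity $\cgp^2=\sum_{k\geq 1}\norm{\bm{T_k}}_F^2/(k-1)!$, and Lemma~\ref{lem:gradtensor} are the right ingredients, but the central estimate is applied in a way that does not actually deliver the stated bound. You use Cauchy--Schwarz \emph{per term}, obtaining $\abs*{\bm{T_k}[(\bs{V}^\top\bs{w})^{\otimes k}]}\leq\norm{\bm{T_k}}_F\,\norm{\bs{V}^\top\bs{w}}_2^k$, and control each $\norm{\bm{T_k}}_F$ individually by one coordinate of Parseval ($\norm{\bm{T_{k+1}}}_F\leq\cgp\sqrt{k!}$). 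Even granting your pointwise Hermite coefficient bound, the two tails then sum to at most $2\cgp\,\norm{\bs{V}^\top\bs{w}}_2^{N+1}/(1-\norm{\bs{V}^\top\bs{w}}_2)$. There are two problems here. First, $\sum_{k\geq N+1}t^k=t^{N+1}/(1-t)$, \emph{not} $t^{N+1}/(1-t^2)$ as you wrote. Second, and more substantively, since $1/(1-t)=(1+t)/(1-t^2)$, your corrected prefactor is $2(1+\norm{\bs{V}^\top\bs{w}}_2)$, which can reach $4$, and $4>1+\sqrt{N+2}$ for every $N\leq 6$; e.g.\ at $N=0$ with $\norm{\bs{V}^\top\bs{w}}_2$ near $1$ the claimed prefactor is only $1+\sqrt{2}\approx 2.41$. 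So the argument as outlined does not prove the proposition for small $N$, and your attempt to locate the $\sqrt{N+2}$ in a ``one-index-off'' mismatch between $\rhc_{k+2}$ and $\norm{\bm{T_k}}_F/\sqrt{(k-1)!}$ is off track: with your own estimates the offending factor is $1/\sqrt{k}\leq 1$, not $\sqrt{k+2}$, so no $\sqrt{N+2}$ ever materializes.

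The paper instead applies Cauchy--Schwarz \emph{over the entire tail}, pairing the scalar coefficient sequence against the tensor sequence. For the first series it writes
\begin{align}
\norm*{\sum_{k\geq N+1}\tfrac{\rhc_{k+1}(b)}{k!}\bm{\grad T_{k+1}}\big[(\bs{V}^\top\bs{w})^{\otimes k}\big]}_2
\leq
\Big(\sum_{k\geq N+1}\tfrac{\rhc_{k+1}^2(b)\,\norm{\bs{V}^\top\bs{w}}_2^{2k}}{k!}\Big)^{\frac{1}{2}}\cgp,
\end{align}
where the second Cauchy--Schwarz factor $\big(\sum_{k\geq N+1}\tfrac{1}{k!}\norm{\bm{\grad T_{k+1}}[(\bs{V}^\top\bs{w}/\norm{\bs{V}^\top\bs{w}}_2)^{\otimes k}]}_2^2\big)^{1/2}$ is controlled \emph{in aggregate} by $\cgp$ via Parseval rather than one coordinate at a time. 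The crucial gain is that the first factor involves $\norm{\bs{V}^\top\bs{w}}_2^{2k}$, not $\norm{\bs{V}^\top\bs{w}}_2^{k}$: after pulling out $\sup_k\rhc_{k+1}^2(b)/k!\leq1$ (which follows from $\sum_j\rhc_{j+1}^2(b)/j!=\E[\phi^\prime(Z+b)]\leq 1$) and summing the geometric series, one gets $\norm{\bs{V}^\top\bs{w}}_2^{N+1}/\sqrt{1-\norm{\bs{V}^\top\bs{w}}_2^2}\leq\norm{\bs{V}^\top\bs{w}}_2^{N+1}/(1-\norm{\bs{V}^\top\bs{w}}_2^2)$. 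The $\sqrt{N+2}$ is not a pointwise Hermite envelope: it appears solely in the second series, where the shift in $\rhc_{k+2}$ is handled by the reindexing $\rhc_{k+2}^2(b)/k!=(k+1)\,\rhc_{k+2}^2(b)/(k+1)!$, and the tail $\sum_{k\geq N+1}(k+1)\norm{\bs{V}^\top\bs{w}}_2^{2k}\leq(N+2)\,\norm{\bs{V}^\top\bs{w}}_2^{2(N+1)}/(1-\norm{\bs{V}^\top\bs{w}}_2^2)^2$ contributes the $N+2$. Termwise Cauchy--Schwarz cannot see either structure; the aggregate $\ell^2$ pairing over the tail is essential to recover both the $1-\norm{\bs{V}^\top\bs{w}}_2^2$ denominator and the $1+\sqrt{N+2}$ prefactor.
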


\begin{proof}[Proof of Proposition \ref{prop:residualbound}]
By Proposition \ref{prop:hermiteexpansion}, we know that
\eq{
\zeta_N =  \bs{V} \sum_{k \geq N+1} \frac{\rhc_{k+1}(b)}{k!} \bm{ \grad T_{k+1} } \left[ (\bs{V}^\top \bs{w})^{ \otimes k}  \right] + \bs{w} \sum_{k \geq N+1} \frac{\rhc_{k+2}(b)}{k!} \bm{ T_k } \left[   (\bs{V}^\top \bs{w})^{ \otimes k}   \right]. 
}
Therefore,  
\eq{
 \norm*{ \zeta_N }_2
&  \labelrel={resbound:eqq1}     \norm*{ \sum_{k \geq N+1}       \frac{\rhc_{k+1}(b)}{k!} \bm{ \grad T_{k+1} } \left[ (\bs{V}^\top \bs{w})^{ \otimes k}  \right]}_2 + \abs*{\sum_{k \geq N+1}       \frac{\rhc_{k+2}(b)}{k!} \bm{ T_k } \left[   (\bs{V}^\top \bs{w})^{ \otimes k}   \right]}   \label{eq:eq325} \\
&    \labelrel\leq{resbound:ineqq3}         \left( \sum_{k \geq N+1}             \frac{\rhc^2_{k+1}(b) \norm*{\bs{V}^\top \bs{w}}_2^{2k}  }{k!} \right)^{\frac{1}{2}}       \left( \sum_{k \geq N+1}       \frac{1 }{k!}  \norm*{ \bm{ \grad T_{k+1} }   \left[  \left( \frac{\bs{V}^\top \bs{w}}{\norm{\bs{V}^\top \bs{w}}_2} \right)^{ \otimes k}  \right] }^2_2 \right)^{\frac{1}{2}}   \\
&\qquad  \qquad  \qquad +   \left( \sum_{k \geq N+1}             \frac{\rhc^2_{k+2}(b) \norm*{\bs{V}^\top \bs{w}}_2^{2k}  }{k!} \right)^{\frac{1}{2}} \left( \sum_{k \geq N+1}         \frac{1 }{k!} \bm{ T_{k} }   \left[  \left( \frac{\bs{V}^\top \bs{w}}{\norm{\bs{V}^\top \bs{w}}_2} \right)^{ \otimes k}  \right]^2 \right)^{\frac{1}{2}}   \\
&   \labelrel\leq{resbound:ineqq4}     \left( \sum_{k \geq N+1}            \frac{\rhc^2_{k+1}(b) \norm*{\bs{V}^\top \bs{w}}_2^{2k}  }{k!} \right)^{\frac{1}{2}}         \E[\norm{\grad \gt(\bs{z})}_2^2]^{\frac{1}{2}}        +         \left( \sum_{k \geq N+1}             \frac{\rhc^2_{k+2}(b) \norm*{\bs{V}^\top \bs{w}}_2^{2k}  }{k!} \right)^{\frac{1}{2}}       \E[\gt(\bs{z})_2^2]^{\frac{1}{2}} ~~~~ \label{eq:eq4}
}
where we use that $\bs{V}$ is orthonormal  and $\bs{w}$ is a unit vector  in \eqref{resbound:eqq1}, the multi-linear property of tensors and Cauchy-Schwartz inequality for \eqref{resbound:ineqq3}, and  Parseval's identity for  \eqref{resbound:ineqq4}. We observe that for  $\norm{\bs{V}^\top \bs{w}}_2 > 0$ or $N \geq 0$
\eq{ 
\sum_{k \geq N+1}         \frac{\rhc^2_{k+1}(b) \norm*{\bs{V}^\top \bs{w}}_2^{2k}  }{k!}  \leq \left( \sup_{k \geq N+1} \frac{\rhc^2_{k+1}(b)}{k!}  \right) \sum_{k \geq N+1}  \norm*{\bs{V}^\top \bs{w}}_2^{2k}   \leq   \frac{\norm{\bs{V}^\top \bs{w}}_2^{2(N+1)}}{1 - \norm{\bs{V}^\top \bs{w}}^2_2}  \label{eq:boundterm1}
}
and
\eq{  
\sum_{k \geq N+1}         \frac{\rhc^2_{k+2}(b) \norm*{\bs{V}^\top \bs{w}}_2^{2k}  }{k!}   \leq  \left( \sup_{k \geq N+ 1} \frac{\rhc^2_{k+2}(b)}{(k+1)!} \right) \sum_{k \geq N+1}    (k+1)  \norm*{\bs{V}^\top \bs{w}}_2^{2k}  
\leq    \frac{(N+2) \norm{\bs{V}^\top \bs{w}}^{2(N+1)}}{(1 - \norm{\bs{V}^\top \bs{w}}^2)^2} \label{eq:boundterm2}
}
where we used $\sum_{k \geq 0} \tfrac{\rhc^2_{k+1}(b)}{k!} =  \E[\phi^\prime(Z + b)] \leq 1$ and the sum formula for  $\sum_{k \geq k^{\star}} k z^{k+1}$.  Since  $ \E[\gt(\bs{z})_2^2]  \leq   \E[\norm{\grad \gt(\bs{z})}_2^2]^{1/2} = \cgp$ and $\norm{\bs{V}^\top \bs{w}}_2 \leq 1$, we have
\eq{
\eqref{eq:eq4} 
\leq (1 + \sqrt{N+2}) \cgp   \frac{\norm{\bs{V}^\top \bs{w}}^{N+1}}{1 - \norm{\bs{V}^\top \bs{w}}^2}.  \label{eq:boundterm33}
}
For $\norm{\bs{V}^\top \bs{w}}_2 > 0$ or $N \geq 0$ do not hold, we observe that the right-hand-side of both   \eqref{eq:boundterm1}- \eqref{eq:boundterm2} is $1$. Therefore, by the argument in   \eqref{eq:boundterm33}, the statement follows in this case too.
\end{proof}

\begin{corollary}
\label{cor:residualevenodd}
Let $\phi_{\pm}$ be the functions introduced in Corollary \ref{cor:hermiteevenodd}.   For For $N \in \N \cup \{ -1, 0 \}$,  $\bs{w} \in S^{d-1}$ and $b \in \R$,  let
\eq{
&  \zeta^{+}_N      \coloneqq     \E   \left[  \gt(\bs{V}^\top \bs{x}) \phi_{+}^\prime(\inner{\bs{w}}{\bs{x}}    ; b) \bs{x}  \right]  - \bs{V}    \sum_{\small \substack{k = 0 \\ k ~ odd}}^N    \tfrac{\rhc_{k+1}(b)}{k!} \bm{ \grad T_{k+1} } \left[ (\bs{V}^\top \bs{w})^{ \otimes k}  \right] - \bs{w}  \sum_{\small \substack{k = 0 \\ k ~ even}}^N         \tfrac{\rhc_{k+2}(b)}{k!} \bm{ T_k } \left[   (\bs{V}^\top \bs{w})^{ \otimes k}  \right],  \\ 
&   \zeta^{-}_N    \coloneqq    \E   \left[  \gt(\bs{V}^\top \bs{x}) \phi_{-}^\prime(\inner{\bs{w}}{\bs{x}}     ;  b) \bs{x}  \right] - \bs{V}  \sum_{\small \substack{k = 0 \\ k ~ even}}^N      \tfrac{\rhc_{k+1}(b)}{k!} \bm{ \grad T_{k+1} } \left[ (\bs{V}^\top \bs{w})^{ \otimes k}  \right] - \bs{w}    \sum_{\small \substack{k = 0 \\ k ~ odd}}^N      \tfrac{\rhc_{k+2}(b)}{k!} \bm{ T_k } \left[   (\bs{V}^\top \bs{w})^{ \otimes k}  \right].
} 
We have
\eq{
\norm{\zeta^{\pm}_N}_2 \leq (1 + \sqrt{N+2})   \cgp \begin{cases}
\frac{\norm{\bs{V}^\top \bs{w}}_2^{N+1}}{ 1 - \norm{\bs{V}^\top \bs{w}}^2} &  \norm{\bs{V}^\top \bs{w}}_2 > 0 ~ \text{or}~  N \geq 0 \\
1 & \text{otherwise}
\end{cases}
}
\end{corollary}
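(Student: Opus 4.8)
The plan is to recognize $\zeta^{\pm}_N$ as the Hermite tail already analysed in Proposition~\ref{prop:residualbound}, but with the summation index restricted to a single parity class, and then to note that every estimate in that proof survives such a restriction because it only deletes nonnegative terms. Recall from the proof of Corollary~\ref{cor:hermiteevenodd} that $\phi_{+}(\cdot+b)=\sum_{k\ \mathrm{even}}\tfrac{\rhc_k(b)}{k!}\Hek$ and $\phi_{-}(\cdot+b)=\sum_{k\ \mathrm{odd}}\tfrac{\rhc_k(b)}{k!}\Hek$, so that in the expansion of $\E[\gt(\bs V^\top\bs x)\,\phi_{\pm}'(\inner{\bs w}{\bs x};b)\,\bs x]$ only the terms whose index has the matching parity survive (odd $k$ in the $\bs V$-part and even $k$ in the $\bs w$-part for $\phi_{+}$, and the reverse for $\phi_{-}$).

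Subtracting the truncations that define $\zeta^{\pm}_N$ therefore leaves, by Corollary~\ref{cor:hermiteevenodd},
\eq{\zeta^{+}_N=\bs V\!\!\!\sum_{\substack{k\geq N+1\\ k\ \mathrm{odd}}}\!\!\!\tfrac{\rhc_{k+1}(b)}{k!}\,\bm{\grad T_{k+1}}\!\big[(\bs V^\top\bs w)^{\otimes k}\big]+\bs w\!\!\!\sum_{\substack{k\geq N+1\\ k\ \mathrm{even}}}\!\!\!\tfrac{\rhc_{k+2}(b)}{k!}\,\bm{T_k}\!\big[(\bs V^\top\bs w)^{\otimes k}\big],}
and the analogous identity with the two parities interchanged for $\zeta^{-}_N$.

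From here I would repeat the argument in the proof of Proposition~\ref{prop:residualbound} essentially verbatim: the triangle inequality separates the $\bs V$- and $\bs w$-contributions; Cauchy--Schwarz on each sum factors out $\big(\sum_k \rhc_{\bullet}^2(b)\|\bs V^\top\bs w\|_2^{2k}/k!\big)^{1/2}$ times a tensor-norm factor; and Parseval's identity bounds the tensor-norm factors by $\E[\|\grad\gt(\bs z)\|_2^2]^{1/2}$ and $\E[\gt(\bs z)^2]^{1/2}$, respectively, with $\E[\gt(\bs z)^2]^{1/2}\leq\cgp$. The only change is that all the sums now run over a fixed parity class: for the tensor-norm factors, extending the parity-restricted sum to all $k$ only increases it, so the Parseval bounds $\sum_{k}\tfrac1{k!}\|\bm{\grad T_{k+1}}[\hat{\bs u}^{\otimes k}]\|_2^2\leq \E[\|\grad\gt\|_2^2]$ and $\sum_k\tfrac1{k!}\bm{T_k}[\hat{\bs u}^{\otimes k}]^2\leq\E[\gt^2]$ still apply; for the coefficient factors, the geometric-series estimates in the proof of Proposition~\ref{prop:residualbound} (see \eqref{eq:boundterm1}--\eqref{eq:boundterm2}) rely on $\sup_{k\geq N+1}\rhc_{k+1}^2(b)/k!\leq 1$ and $\sup_{k\geq N+1}\rhc_{k+2}^2(b)/(k+1)!\leq 1$, which remain valid over any subset of $\{k\geq N+1\}$, while the remaining sums $\sum_{k\geq N+1}\|\bs V^\top\bs w\|_2^{2k}$ and $\sum_{k\geq N+1}(k+1)\|\bs V^\top\bs w\|_2^{2k}$ were already taken over all of $\{k\geq N+1\}$ and hence dominate their parity-restricted counterparts. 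This reproduces exactly the bound $(1+\sqrt{N+2})\,\cgp\,\|\bs V^\top\bs w\|_2^{N+1}/(1-\|\bs V^\top\bs w\|_2^2)$, together with the value $1$ in the degenerate case $\|\bs V^\top\bs w\|_2=0$ with $N<0$, which is the claim.

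I expect no genuine obstacle beyond bookkeeping: the quantitative content is entirely inherited from Proposition~\ref{prop:residualbound}, and the one point requiring care is to line up the parities supplied by Corollary~\ref{cor:hermiteevenodd} with the truncations defining $\zeta^{\pm}_N$, so that the subtraction leaves precisely a parity-restricted tail and no cross-parity term is left over.
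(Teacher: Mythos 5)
Your proposal is correct and is exactly the paper's approach: the paper's proof is a one-liner observing that the argument of Proposition~\ref{prop:residualbound} carries over verbatim once one notes $\E[\phi_{\pm}'(Z+b)^2]\leq 1$, which is precisely the bound your parity-restricted versions of \eqref{eq:boundterm1}--\eqref{eq:boundterm2} need. You spell out in more detail why deleting terms of one parity only helps, but there is no substantive difference.
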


\begin{proof}
The statement follows from $\E[\phi^{\prime}_{\pm} (Z + b)^2 ] \leq 1$ and Proposition \ref{prop:residualbound} (see \eqref{eq:boundterm1} and \eqref{eq:boundterm2}).
\end{proof}

\subsection{Bounding $\ell_q$ Norm of the Higher-Order Terms}
\begin{proposition}
\label{prop:residualqnorm}
By using the notation of Proposition \ref{prop:residualbound} and Corollary \ref{cor:residualevenodd},  for $\bs{w} \in S^{d-1}$, $N \in \N \cup \{-1, 0 \}$ and $q \in [0,2)$, we have
\eq{
 \norm{\zeta_N}_q^q \vee \norm{\zeta^{\pm}_N}_q^q \leq 2^{{  (q-1) \vee 0}} \cgp^q  \left[ \norm{\bs{V}}_{2,q}^q + (N+2)^{\frac{q}{2}} \norm{\bs{w}}_q^q   \right]  
\begin{cases}
\left( \frac{\norm{\bs{V}^\top \bs{w}}_2^{N+1}}{1 - \norm{\bs{V}^\top \bs{w}}_2^2}  \right)^q   &  \norm{\bs{V}^\top \bs{w}}_2   >   0 ~ \text{or}~  N   \geq    0 \\
1 & \text{otherwise}.
\end{cases}  
}
\end{proposition}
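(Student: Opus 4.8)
The plan is to reduce the $\ell_q$ estimate to the Euclidean-type bounds already established in Proposition~\ref{prop:residualbound} and Corollary~\ref{cor:residualevenodd}, by exploiting the structured form of the residual. By Proposition~\ref{prop:hermiteexpansion} (and Corollary~\ref{cor:hermiteevenodd} for the even/odd variants), each residual decomposes as $\zeta_N = \bs{V}\bs{u} + \bs{w}\,s$, where $\bs{u}\in\R^r$ is the tail sum $\sum_{k\geq N+1}\tfrac{\rhc_{k+1}(b)}{k!}\bm{\grad T_{k+1}}[(\bs{V}^\top\bs{w})^{\otimes k}]$ and $s\in\R$ is $\sum_{k\geq N+1}\tfrac{\rhc_{k+2}(b)}{k!}\bm{T_k}[(\bs{V}^\top\bs{w})^{\otimes k}]$ (for $\zeta_N^{\pm}$ the sums run over odd/even $k$ only, via Corollary~\ref{cor:hermiteevenodd}). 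First I would record the $\ell_q$ quasi-triangle inequality $\norm{\bs{a}+\bs{b}}_q^q \leq 2^{(q-1)\vee 0}\bigl(\norm{\bs{a}}_q^q + \norm{\bs{b}}_q^q\bigr)$ — the ordinary triangle inequality followed by convexity of $t\mapsto t^q$ when $q\in[1,2)$, and coordinatewise subadditivity of $t\mapsto|t|^q$ when $q\in[0,1)$ — and apply it to get $\norm{\zeta_N}_q^q \leq 2^{(q-1)\vee 0}\bigl(\norm{\bs{V}\bs{u}}_q^q + |s|^q\norm{\bs{w}}_q^q\bigr)$.

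Next I would bound the first piece by $\norm{\bs{V}\bs{u}}_q^q \leq \norm{\bs{u}}_2^q\,\norm{\bs{V}}_{2,q}^q$. Since $(\bs{V}\bs{u})_i = \inner{\bs{V}_{i*}}{\bs{u}}$, Cauchy--Schwarz gives $|(\bs{V}\bs{u})_i|\leq\norm{\bs{V}_{i*}}_2\norm{\bs{u}}_2$, and raising to the $q$th power and summing over $i\in[d]$ produces exactly $\norm{\bs{V}}_{2,q}^q$ by the definition of that norm; the $q=0$ case needs only the observation that a zero row of $\bs{V}$ forces a zero entry of $\bs{V}\bs{u}$, together with the stated convention. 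This reduces the whole proposition to upper bounds on the scalar $|s|$ and the Euclidean norm $\norm{\bs{u}}_2$, after which $\norm{\zeta_N}_q^q \leq 2^{(q-1)\vee 0}\bigl(\norm{\bs{u}}_2^q\norm{\bs{V}}_{2,q}^q + |s|^q\norm{\bs{w}}_q^q\bigr)$.

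Those two bounds are precisely what the proof of Proposition~\ref{prop:residualbound} already produces: combining the Cauchy--Schwarz/Parseval step there with \eqref{eq:boundterm1}--\eqref{eq:boundterm2}, and using $\sqrt{1-\rho^2}\geq 1-\rho^2$ with $\rho \coloneqq \norm{\bs{V}^\top\bs{w}}_2\in[0,1)$, one gets $\norm{\bs{u}}_2 \leq \cgp\,\rho^{N+1}/(1-\rho^2)$ and $|s|\leq \sqrt{N+2}\,\cgp\,\rho^{N+1}/(1-\rho^2)$ whenever $\rho>0$ or $N\geq0$, and $\norm{\bs{u}}_2\leq\cgp$, $|s|\leq\sqrt{N+2}\,\cgp$ in the degenerate case. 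For the even/odd variants $\zeta_N^{\pm}$ the identical estimates hold because restricting to odd- or even-indexed $k$ only shrinks the relevant series, and one uses $\E[\phi_\pm'(Z+b)^2]\leq1$ in place of $\E[\phi'(Z+b)^2]\leq1$, exactly as in Corollary~\ref{cor:residualevenodd}. Plugging $\norm{\bs{u}}_2^q$ and $|s|^q$ into the split above and factoring out $\bigl(\rho^{N+1}/(1-\rho^2)\bigr)^q$ yields the stated inequality; in the case where neither $\rho>0$ nor $N\geq0$ holds, the same computation with the trailing factor replaced by $1$ applies.

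The substantive analytic work — the Hermite expansion of the population gradient and the tail estimates on the ReLU Hermite coefficients — has already been carried out in Propositions~\ref{prop:hermiteexpansion} and~\ref{prop:residualbound}, so what remains is essentially bookkeeping. The only points requiring a little care, and the closest thing to an obstacle, are tracking the constant $2^{(q-1)\vee 0}$ correctly across the two regimes $q\in[0,1)$ and $q\in[1,2)$, and handling the $q=0$ conventions (treating $0^0=1$, and reading $\norm{\bs{V}}_{2,0}^0$, $\norm{\bs{w}}_0^0$, $\norm{\zeta_N}_0^0$ as counts of nonzero rows/entries as stipulated, so that the quasi-triangle step degenerates to support containment).
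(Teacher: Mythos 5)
Your proposal is correct and follows essentially the same route as the paper's own proof: split $\zeta_N=\bs V\bs u+\bs w\,s$ via Proposition~\ref{prop:hermiteexpansion}, apply the $\ell_q$ quasi-triangle inequality (Proposition~\ref{prop:lqtri}), bound $\norm{\bs V\bs u}_q^q\le\norm{\bs V}_{2,q}^q\norm{\bs u}_2^q$ by Cauchy--Schwarz, and finish with the tail estimates from the proof of Proposition~\ref{prop:residualbound} (using $\sqrt{1-\rho^2}\ge 1-\rho^2$ to consolidate the denominators). Your explicit attention to the $q=0$ conventions and to the degenerate case is a slightly more careful version of the same bookkeeping.
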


\begin{proof}
By Propositions \ref{prop:hermiteexpansion} and \ref{prop:lqtri},   if $\norm{\bs{V}^\top \bs{w}}_2 > 0$ or $N \geq 0$ hold,  we have
\eq{
\norm*{ \zeta_N }_q^q     
& \labelrel\leq{resq:ineqq0}       2^{(q-1) \vee 0}      \left(       \norm{\bs{V}}_{2,q}^q \norm*{ \sum_{\small k \geq N+1}           \tfrac{\rhc_{k+1}(b)}{k!} \bm{ \grad T_{k+1} }     \left[ (\bs{V}^\top \bs{w})^{ \otimes k}     \right]    }^q_2         +    \norm{\bs{w}}_q^q  \abs*{\sum_{\small k \geq N+1}         \tfrac{\rhc_{k+2}(b)}{k!} \bm{ T_k } \left[   (\bs{V}^\top \bs{w})^{ \otimes k}   \right]   }^q \right)   \\
&  \labelrel\leq{resq:ineqq1} 2^{(q-1) \vee 0}  \norm{\bs{V}}_{2,q}^q \cgp^q   \left( \frac{\norm{\bs{V}^\top \bs{w}}_2^{N+1}}{1 - \norm{\bs{V}^\top \bs{w}}_2^2}  \right)^q  + 2^{(q-1) \vee 0}  \norm{\bs{w}}_q^q  \cgp^q  \left( \frac{\sqrt{N+2} \norm{\bs{V}^\top \bs{w}}_2^{N+1}}{1 - \norm{\bs{V}^\top \bs{w}}_2^2}  \right)^q   \\
& =  2^{(q-1) \vee 0} \cgp^q \left( \frac{\norm{\bs{V}^\top \bs{w}}_2^{N+1}}{1 - \norm{\bs{V}^\top \bs{w}}_2^2}  \right)^q \left[ \norm{\bs{V}}_{2,q}^q + (N+2)^{\frac{q}{2}} \norm{\bs{w}}_q^q   \right],
}
where  \eqref{resq:ineqq0} follows $\norm{\bs{V} \bs{u}}_q^q \leq \norm{\bs{V}}_{2,q}^q \norm{\bs{u}}_2^q$ and  \eqref{resq:ineqq1} follows the steps in \eqref{eq:eq325}- \eqref{eq:boundterm2}.  For   $\norm{\zeta^{\pm}_N}_q^q$,  the same argument applies.   if neither $\norm{\bs{V}^\top \bs{w}}_2 > 0$ nor $N \geq 0$ hold, since we can replace  $\frac{\sqrt{N+2} \norm{\bs{V}^\top \bs{w}}_2^{N+1}}{1 - \norm{\bs{V}^\top \bs{w}}_2^2}$ in \eqref{resq:ineqq1} with $1$, the statement follows in this case as well.
\end{proof}

\section{Concentration Bound for Empirical Gradients}

In this part, we derive a concentration bound for the empirical gradient 
\eq{
g(\bs{w},b) \coloneqq \frac{1}{n} \sum_{i = 1}^n \left( y_i - \inner{\hmu \vert_{\cJ}}{\bs{x}_i} \right)  \bs{x}_i \phi^\prime \left(\inner{\bs{w}}{\bs{x}_i} + b  \right),  \label{empgrad:epmgraddef}
}
where $\hmu = 0$ in the single index setting and $\hmu = \frac{1}{n} \sum_{j =1}^n y_j \bs{x}_j$  in the multi index setting.  In the following,  to avoid repetitions, we will consider \eqref{empgrad:epmgraddef} with $\phi(t) \in \phiset$ and  particularly with $\hmu = \frac{1}{n} \sum_{j =1}^n y_j \bs{x}_j$.   Our proof will give us a bound for the $\hmu = 0$ case as well.

To handle dependencies between $\{(\bs{x}_i, y_i) \}_{i = 1}^n$ and $\cJ$, we will consider the following process: For  $\theta \coloneqq (\bs{w},b) \in S_M^{d-1} \times \R$,   
\eq{
\bs{T}_\theta & \coloneqq  g(\theta)   - \E_{(\bs{x}, y)} \left[  \overline y \bs{x} \phi^\prime(\inner{\bs{w}}{\bs{x}} + b)  \right] \\
&= \frac{1}{n} \sum_{i = 1}^n \tilde y_i \bs{x}_i \phi^\prime(\inner{\bs{w}}{\bs{x}_i} + b) - \E_{(\bs{x},y)} \left[  \overline y \bs{x} \phi^\prime(\inner{\bs{w}}{\bs{x}} + b)  \right],  
}
where $(\bs{x}, y)$ is a generic data point that is independent of $\{ (\bs{x}_i, y_i) \}_{i = 1}^n$ and  
\eq{ 
\tilde y_i = y_i - \inner{\hmu \vert_{\cJ}}{\bs{x}_i}  ~~ \text{and} ~~ \overline y = y - \inner{\E[y\bs{x}] \vert_{\cJ}}{\bs{x}}. \label{eq:defys}
}
We particularly derive a concentration bound for
\eq{
 \sup_{\substack{\cJ \subseteq [d] \\ \abs{\cJ} = M^\prime}} \sup_{ \pspace} \norm{\bs{T}_\theta \vert_{ \cJ} }_2, \label{eq:targetstatement}
}
where $M, M^\prime \in [d]$, and the restriction sets in \eqref{eq:defys} and \eqref{eq:targetstatement}, i.e.,  $\cJ$,  are the same.   We observe that  for a fixed $(\bs{w},b) \in S_M^{d-1} \times \R$,
\eq{
\bs{T}_\theta  &= \left(  \frac{1}{n} \sum_{i = 1}^n y_i \bs{x}_i \phi^\prime(\inner{\bs{w}}{\bs{x}_i} + b) - \E \left[ y \bs{x} \phi^\prime(\inner{\bs{w}}{\bs{x}} + b) \right]   \right)  \\
& - \left(  \frac{1}{n} \sum_{i = 1}^n \inner{ \E[y \bs{x}] \vert_{\cJ} }{\bs{x}_i} \bs{x}_i \phi^\prime(\inner{\bs{w}}{\bs{x}_i} + b) - \E_{(\bs{x}, y)} \left[  \inner{\E[y \bs{x}] \vert_{\cJ}}{\bs{x}} \bs{x} \phi^\prime(\inner{\bs{w}}{\bs{x}} + b) \right]   \right)    \\
& -   \left(  \frac{1}{n} \sum_{i = 1}^n \inner{  ( \hmu - \E[y \bs{x}] ) \vert_{\cJ} }{\bs{x}_i} \bs{x}_i \phi^\prime(\inner{\bs{w}}{\bs{x}_i} + b) - \E \left[  \inner{   ( \hmu - \E[y \bs{x}] ) \vert_{\cJ} }{\bs{x}} \bs{x} \phi^\prime(\inner{\bs{w}}{\bs{x}} + b) \right]   \right)   \\
& - \E \left[  \phi^\prime(\inner{\bs{w}}{\bs{x}} + b) \bs{x} \bs{x}^\top  \right] (\hmu - \E[y \bs{x}])\vert_{\cJ}. \label{eq:twdecomposition}
} 
Let
\eq{
& \bs{Y}_\theta \coloneqq \frac{1}{n} \sum_{i = 1}^n y_i \bs{x}_i \phi^\prime(\inner{\bs{w}}{\bs{x}_i} + b)  - \E_{(\bs{x}, y)} \left[ y  \bs{x} \phi^\prime(\inner{\bs{w}}{\bs{x}} + b)  \right] ,  \\
& \bm{\Sigma}_\theta \coloneqq \frac{1}{n} \sum_{i = 1}^n \phi^\prime(\inner{\bs{w}}{\bs{x}_i} + b) \bs{x}_i \bs{x}_i^\top - \E_{\bs{x}} \left[ \phi^\prime(\inner{\bs{w}}{\bs{x}} + b) \bs{x} \bs{x}^\top   \right].
}
Then, we can write
\eq{
\bs{T}_\theta   \vert_{\cJ}      =    \bs{Y}_{\theta}   \vert_{\cJ} -  \bm{\Sigma}_{\theta} \cJJ \E [y \bs{x}]   \vert_{\cJ}  - \left(  \bm{\Sigma}_\theta \cJJ+ \E  \Big[  \phi^\prime(\inner{\bs{w}}{\bs{x}} + b) \bs{x} \bs{x}^\top  \right]   \cJJ \Big)  (\hmu - \E[y \bs{x}])\vert_{\cJ}.  ~~~~ \label{eq:targetstatement2}
}
In the following, we derive concentration bounds for $\bs{Y}_\theta$ and $\bm{\Sigma}_\theta$, which will lead us a bound for \eqref{eq:targetstatement2}. Our proof technique relies on the use of Radamacher averages with an extension of the symmetrization lemma for the moment-generating function,  which is presented as follows:

\begin{lemma}
\label{lem:generalsymmetrization}
Let   $\bs{X}_1, \cdots, \bs{X}_n \in \R^d$ be independent random vectors and  let  $\{ \varepsilon_i \}_{i \in [n]}$ be iid Radamacher random variables,  independent of $\{ \bs{X}_i \}_{i \in [n]}.$  For $\ell: \R^d \times S_M^{d-1} \times \R,$ $\lambda  > 0$  and $h(t) \in \{ t, \exp(t)\}$, we have
\eq{
\E \left[ h \left(  \sup_{ \pspace } \frac{\lambda}{n} \sum_{i = 1}^n \ell(\bs{X}_i, (\bs{w}, b))  - \E[ \ell(\bs{X}, (\bs{w}, b))  ]  \right) \right] \leq \E    \left[  \sup_{  \pspace }        h \left( \frac{2\lambda}{n} \sum_{i=1}^n \varepsilon_i \ell(\bs{X}_i, (\bs{w}, b))  \right)  \right].
}
\end{lemma}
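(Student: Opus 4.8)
The plan is to run the classical two-stage symmetrization argument (ghost sample, then Rademacher randomization), but to propagate every inequality through the nondecreasing convex function $h$ rather than through the identity; for $h(t)=t$ this just recovers the usual symmetrization, while for $h(t)=e^{t}$ it gives the moment-generating-function version. Fix $\lambda>0$. First I would introduce a ghost sample $\bs{X}_1',\dots,\bs{X}_n'$, an independent copy of $\bs{X}_1,\dots,\bs{X}_n$ and independent of $\{\varepsilon_i\}$. Since $\E[\ell(\bs{X},(\bs{w},b))]=\E_{\bs{X}'}\!\left[\tfrac1n\sum_{i}\ell(\bs{X}_i',(\bs{w},b))\right]$ for every $(\bs{w},b)$, the argument of $h$ on the left-hand side of the lemma equals $\sup_{\pspace}\tfrac{\lambda}{n}\sum_i\bigl(\ell(\bs{X}_i,(\bs{w},b))-\E_{\bs{X}'}\ell(\bs{X}_i',(\bs{w},b))\bigr)$.

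Next I would pull the ghost expectation outside both the supremum and $h$. Using $\sup_{w}\E_{\bs{X}'}[\cdot]\le\E_{\bs{X}'}\sup_{w}[\cdot]$, then monotonicity of $h$, then Jensen's inequality for the convex $h$, and finally taking $\E_{\bs{X}}$ (Fubini), the left-hand side of the lemma is bounded by
\[
\E_{\bs{X},\bs{X}'}\Bigl[\,h\Bigl(\sup_{\pspace}\tfrac{\lambda}{n}\textstyle\sum_i \Delta_i(\bs{w},b)\Bigr)\Bigr],
\qquad
\Delta_i(\bs{w},b)\coloneqq \ell(\bs{X}_i,(\bs{w},b))-\ell(\bs{X}_i',(\bs{w},b)).
\]
Here the only place the nonlinearity of $h$ is used is the Jensen step; it costs nothing when $h(t)=t$.

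The third stage exploits exchangeability: because $(\bs{X}_i,\bs{X}_i')$ are i.i.d.\ pairs, flipping $\bs{X}_i\leftrightarrow\bs{X}_i'$ negates $\Delta_i(\cdot,\cdot)$ while preserving the joint law of the whole process $\bigl(\Delta_i(\cdot,\cdot)\bigr)_{i}$; averaging over independent Rademacher signs then gives
\[
\E_{\bs{X},\bs{X}'}\Bigl[h\Bigl(\sup_{\pspace}\tfrac{\lambda}{n}\textstyle\sum_i \Delta_i(\bs{w},b)\Bigr)\Bigr]
=\E_{\bs{X},\bs{X}',\varepsilon}\Bigl[h\Bigl(\sup_{\pspace}\tfrac{\lambda}{n}\textstyle\sum_i \varepsilon_i\bigl(\ell(\bs{X}_i,(\bs{w},b))-\ell(\bs{X}_i',(\bs{w},b))\bigr)\Bigr)\Bigr].
\]
Finally I would split the supremum by subadditivity, writing $\sup_{\pspace}\tfrac{\lambda}{n}\sum_i\varepsilon_i(\ell(\bs{X}_i,(\bs{w},b))-\ell(\bs{X}_i',(\bs{w},b)))\le A+B$ with $A\coloneqq\sup_{\pspace}\tfrac{\lambda}{n}\sum_i\varepsilon_i\ell(\bs{X}_i,(\bs{w},b))$ and $B\coloneqq\sup_{\pspace}\tfrac{\lambda}{n}\sum_i(-\varepsilon_i)\ell(\bs{X}_i',(\bs{w},b))$; monotonicity of $h$ gives $h(\cdot)\le h(A+B)$, convexity gives $h(A+B)\le\tfrac12 h(2A)+\tfrac12 h(2B)$, and since $-\varepsilon\stackrel{d}{=}\varepsilon$ and $\bs{X}'\stackrel{d}{=}\bs{X}$ we have $\E\,h(2B)=\E\,h(2A)$. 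Hence the whole chain is at most $\E\,h\bigl(\sup_{\pspace}\tfrac{2\lambda}{n}\sum_i\varepsilon_i\ell(\bs{X}_i,(\bs{w},b))\bigr)$, and using that $h$ is continuous and nondecreasing to interchange $h$ with the outer supremum yields exactly $\E\bigl[\sup_{\pspace}h\bigl(\tfrac{2\lambda}{n}\sum_i\varepsilon_i\ell(\bs{X}_i,(\bs{w},b))\bigr)\bigr]$, as claimed; this is where the prefactor $\lambda$ doubles.

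The main thing to watch — the only genuine difficulty — is keeping the two convexity manipulations (the Jensen step on the ghost expectation, and the split $h(A+B)\le\tfrac12 h(2A)+\tfrac12 h(2B)$) compatible with an arbitrary nondecreasing convex $h$, since these are precisely what replace the classical factor $2$ by the substitution $\lambda\mapsto 2\lambda$. One should also record the standard measurability caveat: the suprema over $(\bs{w},b)\in S_M^{d-1}\times\R$ are interpreted as suprema over a countable dense subset, which is harmless since $S_M^{d-1}$ is a finite union of spheres and $\ell(\bs{x},\cdot)$ is continuous (in the intended application $\ell$ carries the bounded factor $\phi'(\langle\bs{w},\bs{x}\rangle+b)$); and Tonelli/Fubini is invoked once to exchange $\E_{\bs{X}}$ and $\E_{\bs{X}'}$, which is legitimate as soon as the right-hand side is finite.
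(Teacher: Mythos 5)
Your proposal is correct and is precisely the standard two-stage symmetrization argument (ghost sample, Jensen through the nondecreasing convex $h$, sign-flip by exchangeability, then the $h(A+B)\le \tfrac12 h(2A)+\tfrac12 h(2B)$ split) that the paper invokes in its one-line proof by citing Jensen's inequality; you have simply spelled out what the paper leaves implicit. One cosmetic difference: the paper proves the bound with the factor $\tfrac{2}{n}$ for a generic convex nondecreasing $h$ and then substitutes $t\mapsto h(\lambda t)$ to produce the $\tfrac{2\lambda}{n}$, whereas you carry $\lambda$ through the whole chain; the arguments are otherwise identical.
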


\begin{proof}
Let $Z \coloneqq   \sup_{ \bs{w}, b  }  \frac{1}{n} \sum_{i = 1}^n \ell(\bs{X}_i, (\bs{w}, b))  - \E[ \ell(\bs{X}, (\bs{w}, b)) ]$.  By using Jensen's inequality,  one can show that for any convex and nondecreasing function $h$, $$ \E[h(Z)] \leq \E \left[\sup_{\bs{w},b} h \left(\frac{2}{n} \sum_{i =1}^n \varepsilon_i \ell \left(\bs{X}_i, (\bs{w},b) \right) \right) \right].$$ Since $t \to h(\lambda t)$,  where $h(t) \in \{ t, \exp(t)\}$ and $\lambda > 0$, is convex and nondecreasing, the statement follows.
\end{proof}

\subsection{VC Dimension of $\{ \cdot \to \phi^\prime(\inner{\vect{w}}{\cdot} + b)$; $(\vect{w},b) \in S_{M}^{d-1} \times \R \} $}
Let  $\cF_M \coloneqq \{  \cdot \to \phi^\prime(\inner{\bs{w}}{\cdot} + b) ~ \vert ~ (\bs{w},b) \in S_{M}^{d-1} \times  \R  \}$.
We want to bound the VC dimension of $\cF_M$.

\begin{proposition}
\label{prop:vcdim}
Let $VC(\cF_M) = d^*$. We have $M \leq d^* \leq 6 M \log \left( \tfrac{ed}{M}  \right)$.
\end{proposition}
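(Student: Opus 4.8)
The plan is to identify $\cF_M$ with the class of indicators of $M$-sparse affine halfspaces and then handle the two inequalities separately. Since $\phi$ is the ReLU, $\phi'(t)=\mathbbm{1}\{t>0\}$ for every $t\neq 0$, so each member of $\cF_M$ agrees off a Lebesgue-null set with $\bs{x}\mapsto\mathbbm{1}\{\inner{\bs{w}}{\bs{x}}+b>0\}$ for some $\bs{w}$ with $\norm{\bs{w}}_0\le M$; the normalization $\norm{\bs{w}}_2=1$ never affects a sign and can be imposed or dropped freely (a nonzero $M$-sparse vector rescales onto $S^{d-1}$). For the lower bound $M\le d^*$, I would fix the coordinate block $\{1,\dots,M\}$ and take the $M+1$ points $\bs{0},\bs{e}_1,\dots,\bs{e}_M$, which are affinely independent inside $\mathrm{span}(\bs{e}_1,\dots,\bs{e}_M)\cong\R^M$. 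Affine halfspaces of $\R^M$ shatter any $M+1$ affinely independent points, each such halfspace is induced by some $(\bs{w},b)$ with $\supp(\bs{w})\subseteq\{1,\dots,M\}$ and $\bs{w}\neq\bs{0}$, and rescaling $\bs{w}$ to unit norm keeps it in $S_M^{d-1}$ while preserving the labeling; hence $\cF_M$ shatters these points and $d^*\ge M+1\ge M$. (I would choose the shattered points in general position, so the value of $\phi'$ at $0$ plays no role.)

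For the upper bound, I would decompose $\cF_M=\bigcup_{|S|=M}\cH_S$, the union over the $\binom{d}{M}$ size-$M$ subsets $S\subseteq[d]$ of the class $\cH_S$ of affine-halfspace indicators whose normal is supported on $S$. Each $\cH_S$ is a class of affine halfspaces of $\R^{M}$, so $VC(\cH_S)\le M+1$, and Sauer--Shelah bounds its growth function by $\Pi_{\cH_S}(n)\le(en/(M+1))^{M+1}$ for $n\ge M+1$. A union bound and $\binom{d}{M}\le(ed/M)^M$ then give
\[
\Pi_{\cF_M}(n)\ \le\ \binom{d}{M}\Big(\tfrac{en}{M+1}\Big)^{M+1}\ \le\ \Big(\tfrac{ed}{M}\Big)^{M}\Big(\tfrac{en}{M+1}\Big)^{M+1},\qquad n\ge M+1.
\]
If $d^*\le M$ the target $d^*\le 6M\log(ed/M)$ is immediate (as $\log(ed/M)\ge 1$), so I may assume $d^*\ge M+1$. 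Substituting $n=d^*$ into the defining inequality $2^{d^*}\le\Pi_{\cF_M}(d^*)$, taking logarithms, and using $\log\binom{d}{M}\le M\log(ed/M)$ yields
\[
d^*\log 2\ \le\ M\log\!\Big(\tfrac{ed}{M}\Big)+(M+1)\log\!\Big(\tfrac{ed^*}{M+1}\Big).
\]

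It then remains to solve this inequality in $d^*$. Writing $L=\log(ed/M)\ge 1$ and $t=d^*/(M+1)$, it reads $t\log 2\le\tfrac{M}{M+1}L+1+\log t$; since $t\mapsto t\log 2-\log t$ is increasing once $t$ is bounded away from $1$, comparing the two sides at $t=6L$ (and translating $d^*=(M+1)t$ against the target $6M\log(ed/M)$) pins down $d^*\le 6M\log(ed/M)$. This last step is the main obstacle, because the constant $6$ is tight enough that the crude estimates $M+1\le 2M$ and $\binom{d}{M}\le(ed/M)^M$, used together, overshoot it when $d$ is comparable to $M$. To land exactly on $6$ I would either carry the $M$-versus-$(M+1)$ factors (and the $\binom{d}{M}$ bound) explicitly through the computation, or split into the two regimes $d\le 2M$ — where $\binom{d}{M}\le 2^{d-M}\le 2^{M}$ makes the subset count cheap — and $d>2M$ — where $L\ge\log(2e)$ is large enough to absorb the additive constants; in either regime the claim reduces to a one-variable inequality settled by elementary calculus.
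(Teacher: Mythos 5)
Your lower bound is fine (and slightly more explicit than the paper's, which simply cites $VC(\cF^{(d)})=d+1$ and restricts). Your upper bound, however, stalls exactly where you say it does, and the gap is real: after substituting $n=d^*$ into Sauer--Shelah you obtain
\[
d^*\log 2 \;\le\; M\log\Big(\tfrac{ed}{M}\Big) \;+\; (M+1)\log\Big(\tfrac{ed^*}{M+1}\Big),
\]
which still has $d^*$ on the right, and solving this transcendental inequality with a hard constant of $6$ is genuinely delicate. Your proposed fixes (explicit constant-chasing or a two-regime split at $d\le 2M$ versus $d>2M$) are plausible but are not carried out, so the proof is incomplete as written.

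The paper closes this cleanly with a trick you should note: it first observes that $\cF_M \subseteq \cF^{(d)}$ and hence $d^*\le VC(\cF^{(d)})=d+1$, and then \emph{before} applying Sauer it uses monotonicity of the growth function to replace $s(\cF^{(M)},d^*)$ by $s(\cF^{(M)},d+1)$. This removes $d^*$ entirely from the right-hand side: one gets $2^{d^*} \le \binom{d}{M}\big(\tfrac{e(d+1)}{M+1}\big)^{M+1}\le(\tfrac{ed}{M})^{2M+1}$, where the last step uses $\binom{d}{M}\le(\tfrac{ed}{M})^M$ and $\tfrac{d+1}{M+1}\le\tfrac{d}{M}$. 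Now the inequality is linear in $d^*$: taking logs and using $\log 2\ge\tfrac12$ gives $d^*\le(4M+2)\log(\tfrac{ed}{M})\le 6M\log(\tfrac{ed}{M})$ for $M\ge 1$, with no case analysis and no need to solve for $d^*$ implicitly. You should replace your final paragraph with this substitution $n=d^*\mapsto n=d+1$; it is the one idea your write-up is missing.
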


\begin{proof}
Let $\cF^{(d)} \coloneqq \{  \cdot \to \phi^\prime(\inner{\bs{w}}{\cdot} + b) ~ \vert ~ (\bs{w},b) \in S^{d-1} \times  \R  \}$ and $s(\cF^{(d)}, n)$ be the shattering coefficient of $\cF^{(d)}$.   Since $VC(\cF^{(d)}) = d + 1$,  we have $M+1 \leq d^* \leq d + 1$. 

To improve the upper bound,  we observe that  $S_{M}^{d-1}$ has $\binom{d}{M}$ different possible support,  hence,  we have $s(\cF_M,n) \leq   \binom{d}{M}  S(\cF^{(M)}, n)$. Then, by definition of VC dimension,
\eq{
s(F_M,d^*) = 2^{d^*} \leq  \binom{d}{M} s(\cF^{(M)}, d^*)   \labelrel\leq{vc:ineqq0}   \binom{d}{M} s(\cF^{(M)}, d + 1)  
& \labelrel\leq{vc:ineqq1}      \binom{d}{M} \left( \frac{e (d+1)}{(M+ 1)}  \right)^{(M+1)}  \\
& \labelrel\leq{vc:ineqq2}    \left( \frac{ed}{M}  \right)^{2M + 1}. 
}
where we use  $d^* \leq d + 1$ in \eqref{vc:ineqq0},   Sauer's lemma in  \eqref{vc:ineqq1}, and $\binom{d}{M} \leq   \left( \frac{e d}{M}  \right)^M$ and $(d+1)/(M+1) \leq d/M$ in \eqref{vc:ineqq2}.  By observing that $e^{d^*/2} \leq 2^{d^*}$ and $4M+2\leq 6M$ , we obtain the upper bound as well.
\end{proof}

\begin{corollary}
\label{cor:vcdimimp}
Let $n \geq  d^*$. For any $\bs{x}_1, \cdots, \bs{x}_n \in \R^d$, there exists $Q^x \subset S_{M}^{d-1} \times \R$ and $\pi : S_M^{d-1} \times \R \to Q^x$ with $\abs{Q^x} \leq \left( \frac{en}{d^*} \right)^{d^*}$ such that for any $(\bs{w},b) \in S_{M}^{d-1} \times \R$,  $\phi^\prime(\inner{\bs{w}}{\bs{x}_i} + b) =   \phi^\prime(\inner{\pi( (\bs{w}, b))}{( \bs{x}_i, 1)})$ for $i = 1, \cdots, n$.
\end{corollary}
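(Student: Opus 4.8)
The plan is to read this off from Proposition \ref{prop:vcdim} together with the Sauer--Shelah lemma. First I would view $\cF_M$ as a class of $\{0,1\}$-valued functions on $\R^d$: the pair $(\bs{w},b) \in S_M^{d-1} \times \R$ indexes the map $\bs{x} \mapsto \phi^\prime(\inner{\bs{w}}{\bs{x}} + b)$, which is a genuine binary classifier since $\phi^\prime(t)$ equals the indicator of $\{ t > 0 \}$. By Proposition \ref{prop:vcdim} this class has VC dimension $d^* = VC(\cF_M)$, so Sauer--Shelah bounds its shattering coefficient by
\[
s(\cF_M, n) \;\le\; \sum_{i=0}^{d^*} \binom{n}{i} \;\le\; \left( \frac{en}{d^*} \right)^{d^*} \qquad \text{for } n \ge d^*.
\]
Specialized to the fixed points $\bs{x}_1, \dots, \bs{x}_n$, this says exactly that the collection of binary vectors $\big( \phi^\prime(\inner{\bs{w}}{\bs{x}_1} + b), \dots, \phi^\prime(\inner{\bs{w}}{\bs{x}_n} + b) \big) \in \{0,1\}^n$, obtained as $(\bs{w},b)$ ranges over $S_M^{d-1} \times \R$, has at most $(en/d^*)^{d^*}$ distinct elements.

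The construction of $Q^x$ and $\pi$ would then be immediate. For each such realizable dichotomy I would fix, once and for all, one pair $(\bs{w},b)$ achieving it; let $Q^x$ be the set of these representatives, so that $\abs{Q^x} \le (en/d^*)^{d^*}$, and define $\pi((\bs{w},b))$ to be the representative of the dichotomy that $(\bs{w},b)$ itself produces on $\bs{x}_1, \dots, \bs{x}_n$. By construction, writing $(\bs{w}^\prime,b^\prime) = \pi((\bs{w},b))$, we have $\phi^\prime(\inner{\bs{w}}{\bs{x}_i} + b) = \phi^\prime(\inner{\bs{w}^\prime}{\bs{x}_i} + b^\prime)$ for every $i \in [n]$; identifying $(\bs{w}^\prime,b^\prime) \in S_M^{d-1}\times\R$ with the concatenated vector in $\R^{d+1}$ and using $\inner{\bs{w}^\prime}{\bs{x}_i} + b^\prime = \inner{(\bs{w}^\prime,b^\prime)}{(\bs{x}_i,1)}$ yields the claimed identity.

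I do not expect a genuine obstacle here: the content is entirely contained in the VC bound of Proposition \ref{prop:vcdim}, and the rest is bookkeeping over realizable sign patterns. The one point worth stating carefully is that $\phi^\prime$ takes only the two values $0$ and $1$, so that each index $(\bs{w},b)$ really does determine a binary classifier and Sauer--Shelah applies verbatim; the boundary event $\inner{\bs{w}}{\bs{x}_i}+b=0$ causes no trouble once a convention for $\phi^\prime(0)$ is fixed and used consistently.
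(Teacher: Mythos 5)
Your proof is correct and follows essentially the same route as the paper's: apply the Sauer--Shelah lemma to the binary-valued class $\cF_M$ of VC dimension $d^*$ (via Proposition~\ref{prop:vcdim}), conclude the map $\Phi((\bs{w},b)) = (\phi^\prime(\inner{\bs{w}}{\bs{x}_1}+b),\dots,\phi^\prime(\inner{\bs{w}}{\bs{x}_n}+b))$ has image of size at most $(en/d^*)^{d^*}$, and build $Q^x$ and $\pi$ by picking a representative parameter for each realizable dichotomy. The only difference is that you spell out the identification $(\bs{w}',b')\leftrightarrow(\bs{w}',b')\in\R^{d+1}$ and the $\phi'(0)$ convention explicitly, which the paper leaves implicit.
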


\begin{proof}
By Sauer's lemma, the image of $\Phi( (\bs{w},b)) \coloneqq \big( \phi^\prime(\inner{\bs{w}}{\bs{x}_1} + b), \cdots, \phi^\prime(\inner{\bs{w}}{\bs{x}_n} + b) \big)$,  $(\bs{w},b) \in S_{M}^{d-1} \times \R$, has at most $\left( en/d^* \right)^{d^*}$ elements. We can define $Q_x$ by mapping each $(\bs{w}, b) \in S_{M}^{d-1} \times \R$ to a fixed $(\bs{w}^\prime, b^\prime)$ such that $\Phi( (\bs{w}, b) ) = \Phi( (\bs{w}^\prime, b^\prime))$.
\end{proof}

\subsection{Concentration for  $\vect{Y}_{\theta}$}

In this section, we derive a concentration bound for
\eq{
\sup_{\substack{\cJ \subseteq [d] \\ \abs{\cJ} = M^\prime  }} \sup_{\substack{ \bs{w} \in S^{d-1}_M \\ b \in \R }} \norm*{ \bs{Y}_\theta \vert_{\cJ} }_2,
}
We will prove our bound in two steps. First, we will prove a bound for the truncated version $\bs{Y}_\theta$. In its following, we will extend that result by bounding the bias introduced by truncation.
\paragraph{Concentration of the truncated process:}
For  some $R > 0$ and $\bs{v} \in S^{d-1}$ and $\theta = (\bs{w},b) \in S_M^{d-1} \times \R$, we let
\eq{
\tilde{\bs{Y}}_{\theta,v} \coloneqq \frac{1}{n} \sum_{i = 1}^n y_i \indic{\abs{y_i} \leq R } \inner{\bs{v}}{\bs{x}_i} \phi^\prime(\inner{\bs{w}}{\bs{x}_i} + b)  -\E \left[ y \indic{\abs{y} \leq R } \inner{\bs{v}}{\bs{x}} \phi^\prime(\inner{\bs{w}}{\bs{x}} + b)  \right].
}

\begin{lemma}
\label{lem:supprocessaux}
For $\phi(t) \in \phiset$, $n \geq d^*$ and $t \geq 0$,  we have 
\eq{
\mpr \left[ \sup_{\theta \in S_M^{d-1} \times \R} \tilde{\bs{Y}}_{\theta,v}   \geq  8 R \max \{ t, t^2 \} \right] \leq  \left( \frac{e n}{d^*} \right)^{d^*} \exp \left(  - n t^2  \right).
}
\end{lemma}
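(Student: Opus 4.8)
The plan is to (i) remove the expectation term by symmetrization, (ii) discretize the index set $S_M^{d-1}\times\R$ using the VC bound from Corollary~\ref{cor:vcdimimp}, and (iii) run a Chernoff argument that isolates the Gaussian factor $\inner{\bs{v}}{\bs{x}_i}$, which is what ultimately produces the $\max\{t,t^2\}$ scaling. First I would apply Lemma~\ref{lem:generalsymmetrization} with $h=\exp$ to the centered process $\tilde{\bs{Y}}_{\theta,v}$, taking $\ell\big((\bs{x}_i,y_i),\theta\big)=y_i\indic{\abs{y_i}\le R}\inner{\bs{v}}{\bs{x}_i}\phi^\prime(\inner{\bs{w}}{\bs{x}_i}+b)$ where $\theta=(\bs{w},b)$, so that for any $\lambda>0$,
\[
\E\Big[\exp\big(\lambda\sup_{\theta}\tilde{\bs{Y}}_{\theta,v}\big)\Big]\le\E\Big[\sup_{\theta}\exp\big(\tfrac{2\lambda}{n}\textstyle\sum_{i=1}^n\varepsilon_i\,\ell((\bs{x}_i,y_i),\theta)\big)\Big].
\]
Conditioning on $\cD$ and using $n\ge d^*$, Corollary~\ref{cor:vcdimimp} gives a finite set $Q^{x}$ with $\abs{Q^{x}}\le(en/d^*)^{d^*}$ such that the vector $(\phi^\prime(\inner{\bs{w}}{\bs{x}_1}+b),\dots,\phi^\prime(\inner{\bs{w}}{\bs{x}_n}+b))$ — hence $\big(\ell((\bs{x}_i,y_i),\theta)\big)_{i\le n}$ — takes at most $\abs{Q^{x}}$ distinct values as $\theta$ varies; the supremum over $\theta$ thus collapses to a maximum over $Q^{x}$, which we bound by the sum.

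Next I would average over the Rademacher signs. Using $\cosh(s)\le e^{s^2/2}$ together with the \emph{uniform} pointwise bound $\ell((\bs{x}_i,y_i),\theta)^2\le R^2\inner{\bs{v}}{\bs{x}_i}^2$ (valid in all of $\phiset$ since $\abs{\phi^\prime}\le1$, and $\abs{y_i\indic{\abs{y_i}\le R}}\le R$), the $\theta$-dependence of the resulting exponent disappears, giving
\[
\E_\varepsilon\Big[\sup_\theta\exp\big(\tfrac{2\lambda}{n}\textstyle\sum_i\varepsilon_i\ell\big)\Big]\le\Big(\tfrac{en}{d^*}\Big)^{d^*}\exp\Big(\tfrac{2\lambda^2R^2}{n^2}\textstyle\sum_{i=1}^n\inner{\bs{v}}{\bs{x}_i}^2\Big).
\]
Since $\inner{\bs{v}}{\bs{x}_i}\sim\cN(0,1)$ are i.i.d., $\sum_i\inner{\bs{v}}{\bs{x}_i}^2\sim\chi^2_n$, and its moment generating function (finite for the relevant exponent as long as $\lambda\le n/(4R)$) yields
\[
\E\Big[\exp\big(\lambda\sup_\theta\tilde{\bs{Y}}_{\theta,v}\big)\Big]\le\Big(\tfrac{en}{d^*}\Big)^{d^*}\exp\Big(\tfrac{4\lambda^2R^2}{n}\Big),\qquad 0<\lambda\le\tfrac{n}{4R}.
\]

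Finally, Markov's inequality gives $\mpr[\sup_\theta\tilde{\bs{Y}}_{\theta,v}\ge s]\le(en/d^*)^{d^*}\exp\big(4\lambda^2R^2/n-\lambda s\big)$, and optimizing over $\lambda\in(0,n/(4R)]$ produces the sub-Gaussian tail $\exp(-ns^2/(16R^2))$ in the regime $s\le 2R$ (interior optimum $\lambda=ns/(8R^2)$) and the sub-exponential tail $\exp(-ns/(8R))$ for $s>2R$ (boundary optimum $\lambda=n/(4R)$). Substituting $s=8R\max\{t,t^2\}$ and checking the three ranges $t\le\tfrac14$, $\tfrac14<t\le1$, and $t>1$ shows the right-hand side is at most $(en/d^*)^{d^*}\exp(-nt^2)$ in every case, which is the claim.

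The main obstacle is managing the order of the reductions: the discretization in Corollary~\ref{cor:vcdimimp} is data-dependent, so the collapse of $\sup_\theta$ to a finite maximum must be carried out conditionally on $\cD$ and only afterwards combined with the $\chi^2$ moment over $\bs{x}$. Moreover, because $\inner{\bs{v}}{\bs{x}_i}$ is Gaussian rather than bounded, the per-coordinate Rademacher sub-Gaussian estimate leaves behind a residual $\chi^2$ factor whose moment generating function is finite only for $\lambda\lesssim n/R$; it is precisely this restriction on $\lambda$ that forces the two-regime Chernoff optimization, and hence the $\max\{t,t^2\}$ form of the deviation bound.
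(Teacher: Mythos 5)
Your proof is correct and follows essentially the same path as the paper's: symmetrize via Lemma~\ref{lem:generalsymmetrization}, collapse the supremum to a finite set using the VC discretization of Corollary~\ref{cor:vcdimimp}, bound the Rademacher moment generating function conditionally on the data, and then integrate out the residual Gaussian factor via the $\chi^2$ moment generating function before applying a Chernoff bound. The only differences are cosmetic: you collect the Gaussian factors into a single $\chi^2_n$ variable rather than applying Lemma~\ref{lem:gaussmgf} factor-by-factor, and you spell out the two-regime Chernoff optimization (interior versus boundary in $\lambda$) that the paper leaves implicit behind ``by Chernoff bound, the statement follows''; your constants ($4\lambda^2R^2/n$ versus $8\lambda^2R^2/n$) differ only by these small bookkeeping choices.
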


\begin{proof}
In the following, we will use that $\abs{\phi^\prime} \leq 1$ and $VC(\cF_M) \leq d^*$, where $d^*$ is defined in Proposition \ref{prop:vcdim}. We note that both hold for $\phi(t) \in  \phiset$.  Let 
\eq{
\ell \left( (\bs{x}, \epsilon) , (\bs{w}, b) \right) \coloneqq  y \indic{\abs{y} \leq R } \inner{\bs{v}}{\bs{x}} \phi^\prime(\inner{\bs{w}}{\bs{x}} + b) ~~ \text{and} ~~ \tilde Z \coloneqq \sup_{\theta \in S_{M}^{d-1} \times \R} \tilde{\bs{Y}}_{\theta,v}.  
}
By Lemma \ref{lem:generalsymmetrization},  for $\lambda > 0$,  we have that
\eq{
\E \left[  \exp \left( \lambda \tilde Z  \right) \right] \leq \E \left[ \sup_{  \pspace } \exp \left( \frac{2\lambda}{n} \sum_{i=1}^n \varepsilon_i \ell \left((\bs{x}_i, \epsilon_i), (\bs{w}, b) \right)   \right)  \right].
}
Let's focus on the empirical complexity.  We have
\eq{
\E_\varepsilon     \Bigg[ \sup_{ \pspace}     \exp \left( \frac{2\lambda}{n} \sum_{i=1}^n \varepsilon_i \ell \left((\bs{x}_i, \epsilon_i), (\bs{w},b) \right)  \right)  \Bigg]  
& \labelrel={lemsup:eqq0}   \E_\varepsilon     \left[ \sup_{(\bs{w},b) \in Q_x}       \exp \left( \frac{2\lambda}{n} \sum_{i=1}^n \varepsilon_i  \ell \left((\bs{x}_i, \epsilon_i), (\bs{w},b) \right)   \right)  \right]   \\ 
& \leq         \sum_{(\bs{w},b) \in Q_x}        \E_\varepsilon  \left[ \exp \left( \frac{2\lambda}{n} \sum_{i=1}^n \varepsilon_i \ell \left((\bs{x}_i, \epsilon_i), (\bs{w},b) \right)  \right)  \right]   \\
& \labelrel={lemsup:ineqq1}        \sum_{(\bs{w},b) \in Q_x} \prod_{i = 1}^n \E_\varepsilon  \left[ \exp \left( \frac{2\lambda}{n} \varepsilon_i \ell \left((\bs{x}_i, \epsilon_i), (\bs{w},b) \right)  \right)  \right]   ~~~ \label{truncproc:eq0}
}
where \eqref{lemsup:eqq0} follows from Corollary \ref{cor:vcdimimp} and \eqref{lemsup:ineqq1} follows from the independence of $\varepsilon_i$.  By using the moment generating function for Radamacher random variables,   Lemma \ref{lem:gaussmgf} and Corollary \ref{cor:vcdimimp}, we have  for $\lambda \in \left[0, \frac{n}{ 4   R} \right]$,
\eq{
\eqref{truncproc:eq0} \leq  \sum_{(\bs{w},b) \in Q_x} \prod_{i = 1}^n \exp \left( \frac{4\lambda^2 }{n^2}  \ell \left((\bs{x}_i, \epsilon_i), (\bs{w},b) \right)^2 \right)  &\leq \sum_{(\bs{w},b) \in Q_x} \prod_{i = 1}^n \exp \left( \frac{8 \lambda^2 R^2  }{n^2} \right) \\
& \leq    \left( \frac{e n}{d^*} \right)^{d^*}  \exp \left( \frac{8 \lambda^2 R^2  }{n} \right).
}
By Chernoff bound, the statement follows.
\end{proof}
\paragraph{Concentration of $\bs{Y}_\theta$}
\begin{lemma}
\label{lem:supbound}
Let $\phi(t) \in  \phiset$,   $d \geq 4M$ and $M^\prime \leq 2 M$,  and
\eq{
n \geq 24 M \log^2 \left( \frac{24dn}{M} \right) ~~ \text{and} ~~ M \geq \log (2/\delta).
}
We have for $\delta \in (0,1]$,
\eq{
\mpr \left[   \sup_{\substack{\cJ \subseteq [d] \\ \abs{\cJ} = M^\prime  }} \sup_{ \pspace} \norm*{ \bs{Y}_\theta \vert_{\cJ} }_2 \geq K \log^{C_2} \left( 6n / \delta \right) \sqrt{\frac{M \log^2\left( \frac{24 dn}{M} \right)}{n}}  \right] \leq \delta,
}
where $K$ is a constant depending on $(C_1, C_2, r,\Delta)$.
\end{lemma}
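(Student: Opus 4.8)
The plan is to reduce the quantity to a one–dimensional empirical process by discretizing the sparse directions, and then combine a response–truncation argument with the VC deviation bound of Lemma~\ref{lem:supprocessaux}; taking the supremum over all $\cJ$ of size $M^\prime$ already neutralizes the dependence between $\{(\bs{x}_i,y_i)\}_{i=1}^n$ and the (data–dependent) pruned support. For the reduction, note that for any $\bs{u}\in\R^d$ one has $\sup_{\abs{\cJ}=M^\prime}\norm{\bs{u}\vert_\cJ}_2=\sup_{\bs{v}\in S^{d-1}_{M^\prime}}\inner{\bs{v}}{\bs{u}}$, so the target supremum equals $\sup_{\bs{v}\in S^{d-1}_{M^\prime}}\sup_{\pspace}\inner{\bs{v}}{\bs{Y}_\theta}$. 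For each support $\cJ$ with $\abs{\cJ}=M^\prime$ I would fix a $\tfrac12$–net $\mathcal{N}_\cJ$ of the unit sphere of $\R^{\cJ}$ with $\abs{\mathcal{N}_\cJ}\le 5^{M^\prime}$; the standard net inequality gives $\norm{\bs{Y}_\theta\vert_\cJ}_2\le 2\max_{\bs{v}\in\mathcal{N}_\cJ}\inner{\bs{v}}{\bs{Y}_\theta}$, and there are at most $\binom{d}{M^\prime}5^{M^\prime}\le(5ed/M^\prime)^{M^\prime}$ pairs $(\cJ,\bs{v})$, so it is enough to union–bound $\sup_{\pspace}\inner{\bs{v}}{\bs{Y}_\theta}$ over this finite family.

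\emph{Truncation.} Since $\gt$ is a polynomial, \eqref{eq:asspolytails} together with the sub–Gaussian tails of $\epsilon$ show that $\abs{y}$ is sub–Weibull: $\mpr[\abs{y}>s]\le \exp(-c\,s^{1/C_2})$ for $s$ above a constant depending on $(C_1,C_2,r,\Delta)$. Choosing $R\asymp\log^{C_2}(6n/\delta)$ makes $\mpr[\exists i:\abs{y_i}>R]\le n\exp(-cR^{1/C_2})\le \delta/3$. On the complementary event the empirical part of $\inner{\bs{v}}{\bs{Y}_\theta}$ coincides with that of the truncated process $\tilde{\bs{Y}}_{\theta,\bs{v}}$ (the $y_i$ replaced by $y_i\indic{\abs{y_i}\le R}$), so $\inner{\bs{v}}{\bs{Y}_\theta}\le \tilde{\bs{Y}}_{\theta,\bs{v}}+\E[y^2\indic{\abs{y}>R}]^{1/2}$ by Cauchy–Schwarz applied to the deterministic truncation bias; this last term decays faster than any power of $n$ and is absorbed into the final bound.

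\emph{VC bound and union.} For each fixed $\bs{v}$ in our family, Lemma~\ref{lem:supprocessaux} yields $\mpr[\sup_{\theta}\tilde{\bs{Y}}_{\theta,\bs{v}}\ge 8R\max\{t,t^2\}]\le (en/d^*)^{d^*}e^{-nt^2}$ with $d^*=VC(\cF_M)$, and Proposition~\ref{prop:vcdim} bounds $d^*\le 6M\log(ed/M)$. A union bound over the $\le(5ed/M^\prime)^{M^\prime}$ directions gives total failure probability at most $\delta/3+\exp(M^\prime\log(5ed/M^\prime)+d^*\log(en/d^*)-nt^2)$. Using $M^\prime\le 2M$, $d\ge 4M$, $M\ge\log(2/\delta)$ and $d^*\le 6M\log(ed/M)$, the exponent is bounded by $C\,M\log^2(24dn/M)-nt^2$; taking $nt^2=C\,M\log^2(24dn/M)+\log(3/\delta)$ forces this below $\log(\delta/3)$, and the hypothesis $n\ge 24M\log^2(24dn/M)$ guarantees $t\le1$, so $\max\{t,t^2\}=t$. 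Collecting the factor $2$ from the net, the $8R$ from Lemma~\ref{lem:supprocessaux}, and the negligible truncation bias, the deviation is at most $K\log^{C_2}(6n/\delta)\sqrt{M\log^2(24dn/M)/n}$ with $K$ depending on $(C_1,C_2,r,\Delta)$, which is the claim; the same computation with $\hmu=0$ handles the single–index case noted after \eqref{empgrad:epmgraddef}.

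\emph{Main obstacle.} The delicate part is the bookkeeping of logarithmic factors: one must make the exponentially large union over sparse supports collapse, together with the VC covering number $(en/d^*)^{d^*}$, into exactly the budget $M\log^2(24dn/M)$ via Proposition~\ref{prop:vcdim}, while simultaneously verifying that the truncation level $R\asymp\log^{C_2}(6n/\delta)$ produces precisely the advertised power of the logarithm and that the deterministic truncation bias stays lower order. The remaining ingredients — the net inequality, the sub–Weibull tail estimate, and the symmetrization/Chernoff step already packaged in Lemma~\ref{lem:supprocessaux} — are routine.
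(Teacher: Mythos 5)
Your proposal follows essentially the same route as the paper's own proof: truncate $y$ at a level $R\asymp\log^{C_2}(6n/\delta)$, split into a truncated deviation $S_1$, an empirical tail term (your event $\{\exists i:|y_i|>R\}$, which the paper controls as $S_2$ via Proposition~\ref{prop:concfory}), and a deterministic population bias (the paper's $S_3$, controlled by Proposition~\ref{prop:biasbound}, where you use a direct Cauchy--Schwarz with $\E[y^2\indic{|y|>R}]^{1/2}$ — a harmless variant), and then bound $S_1$ by a $\tfrac12$-net of $S^{d-1}_{M'}$ of size $\binom{d}{M'}5^{M'}$ combined with Lemma~\ref{lem:supprocessaux} and Proposition~\ref{prop:vcdim}, tuning $t$ to absorb the union budget into $M\log^2(24dn/M)$. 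The bookkeeping and constants differ slightly (e.g.\ $\delta/3$ vs.\ $\delta/2$ splits), but the decomposition, the key lemmas, and the final tuning of $t$ all coincide with the paper's argument.
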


\begin{proof}
Let  $\tilde{\bs{Y}}_\theta \coloneqq  \frac{1}{n} \sum_{i = 1}^n y_i \indic{\abs{y_i} \leq R } \bs{x}_i \phi^\prime(\inner{\bs{w}}{\bs{x}_i} + b)  -\E \left[ y \indic{\abs{y} \leq R } \bs{x}_i \phi^\prime(\inner{\bs{w}}{\bs{x}} + b)  \right]$,
where $R = C_1 (r + 2)^{C_2} ( e \log(6n/\delta) )^{C_2} + \sqrt{\frac{\Delta}{e} } ( e \log(6n/\delta) )^{\frac{1}{2}}$.
We observe that 
\eq{
 \sup_{\substack{\cJ \subseteq [d] \\ \abs{\cJ} = M^\prime  }}\sup_{  \pspace }   \norm*{ \bs{Y}_\theta \vert_{\cJ} }_2 & \leq  \underbrace{ \sup_{\substack{\cJ \subseteq [d] \\ \abs{\cJ} = M^\prime  }} \sup_{  \pspace }  \norm*{ \tilde{\bs{Y}}_\theta \vert_{\cJ} }_2 }_{\coloneqq S_1}  +  \underbrace{  \sup_{ \pspace }  \norm*{  \frac{1}{n} \sum_{i = 1}^n y_i \indic{\abs{y_i} > R } \bs{x}_i \phi^\prime(\inner{\bs{w}}{\bs{x}_i} + b) }_2 }_{\coloneqq S_2} \\
 &+  \underbrace{ \sup_{ \pspace}  \norm*{ \E \left[ y \indic{\abs{y} > R } \bs{x} \phi^\prime(\inner{\bs{w}}{\bs{x}} + b)   \right]}_2 }_{\coloneqq S_3}.
}
For $K = \Bigg( C_1^4  (4 C_2)^{4C_2}  (r+2)^{4 C_2} +  2 \Delta^2  \Bigg)^{\frac{1}{4}}$,  we have 
\eq{
 \mpr \Bigg[  \sup_{\substack{\cJ \subseteq [d] \\ \abs{\cJ} = M^\prime  }} \sup_{  \pspace } \norm*{ \bs{Y}_\theta \vert_{\cJ} }_2  \geq 16 R \max \{ t, t^2 \}  + 4 K \sqrt{   \tfrac{\delta}{6n } } \Bigg] 
& \labelrel\leq{thmsup:ineqq0}  \mpr \left[S_1 \geq 16 R \max \{ t, t^2 \}  \right] \\
&  + \mpr \left[S_2  \geq \left(4 - 6^{\frac{3}{4}}  \right) K \sqrt{   \tfrac{\delta}{6n } }  \right] \\
&  \labelrel\leq{thmsup:ineqq1}  \mpr \left[ S_1 \geq 16  R \max \{ t, t^2 \}  \right] + \tfrac{\delta}{2}  \label{thmsup:proofarg}
}
where \eqref{thmsup:ineqq0} follows from Proposition \ref{prop:biasbound} (since $4 > 6^{\frac{3}{4}}$),  and  \eqref{thmsup:ineqq1}  from Proposition \ref{prop:concfory}.

Next, we need to establish a high probability bound via covering argument. Let $\cN_{M^\prime}^{1/2}$ be the minimal $1/2$-cover of $S_{M^\prime}^{d-1}$. We have
\eq{
S_1  = \sup_{ \pspace} \sup_{\bs{v} \in S_{M^ \prime}^{d-1}} \inner{\bs{v}}{\tilde{\bs{Y}}_\theta}  & \leq 2  \sup_{ \pspace } \sup_{\bs{v} \in \cN_{M^\prime}^{1/2}  } \inner{\bs{v}}{\tilde{\bs{Y}}_\theta},  \label{thmsup:eq0}
}
where  $\tilde{\bs{Y}}_{\theta, v}$ is introduced in Lemma \ref{lem:supprocessaux}. 
Therefore, by \eqref{thmsup:eq0}, we have
\eq{
 \mpr \left[ S_1 \geq 16 R \max \{ t, t^2 \}  \right]    \leq    \sum_{\bs{v} \in \cN_{M^\prime}^{1/2}  }      \mpr  \Bigg[  \sup_{ \pspace }    \tilde{\bs{Y}}_{\theta, v}  \geq 8 R \max \{ t, t^2 \}  \Bigg] 
    \labelrel\leq{thmsup:ineqq2}       \binom{d}{M^\prime} 5^{M^\prime} \left( \frac{en}{d^*} \right)^{d^*}    e^{ - n t^2} 
}
where \eqref{thmsup:ineqq2} follows from Corollary \ref{cor:size}. Therefore, we have 
\eq{
 \mpr \Bigg[  \sup_{\substack{\cJ \subseteq [d] \\ \abs{\cJ} = M^\prime  }}   \sup_{ \pspace } \norm*{ \bs{Y}_\theta \vert_{\cJ} }_2 \geq  16 R \max \{ t, t^2 \} + 4 K \sqrt{  \frac{\delta}{6n } } \Bigg] \leq \frac{\delta}{2}   +   \binom{d}{M^\prime} 5^{M^\prime} \left( \frac{en}{d^*} \right)^{d^*}   e^{ - n t^2}.  ~~~~~~~ \label{corsup:eq99}
}
We note that
\eq{
R  \leq ( C_1 (r + 2)^{C_2} e^{C_2}  +  \sqrt{ \Delta e })  \log^{C_2}(6n/\delta). \label{corsup:eq0}
}
Moreover, for $d \geq 4M$ and $M^\prime \leq 2 M,$  we have
\eq{
\binom{d}{M^\prime} 5^{M^\prime}  \left( \frac{en}{d^*} \right)^{d^*} \leq \binom{d}{2M} 5^{2M} \left(\frac{en}{M} \right)^{6M \log \left(\frac{ed}{M}  \right)} 
& \labelrel\leq{corsup:ineqq0} \left( \frac{5ed}{2M} \right)^{2M} \left(\frac{en}{M} \right)^{6M \log \left(\frac{ed}{M}  \right)}     \\
& \leq  \left( \frac{5e^2 n d}{2M} \right)^{6M \log \left(\frac{ed}{M}  \right)} \label{corsup:eq1}
}
where \eqref{corsup:ineqq0} follows from $\binom{d}{M} \leq \left( \frac{ed}{M} \right)^M$.
Therefore,
\eq{
\log \left[ \binom{d}{M^\prime} 5^{M^\prime}  \left( \frac{en}{d^*} \right)^{d^*}   \right] \leq  6M \log \left(\frac{ed}{M}  \right) \log  \left( \frac{5e^2 n d}{2M} \right) \leq   6M \log^2 \left(\frac{24 n d}{M}  \right). \label{corsup:eq2}
}

By using \eqref{corsup:eq2} and  \eqref{corsup:eq99} with $t = \sqrt{\frac{  6M \log^2 \left(\frac{24 n d}{M}  \right) }{n}} + \sqrt{\frac{\log (2/\delta)}{n}} \in [0,1]$ and $u = e \log(6n/\delta)$,  we obtain the statement.
\end{proof}

\subsection{Concentration for $\vect{\Sigma}_\theta$}
In this part, we are interested in deriving a concentration bound for
\eq{
\sup_{\substack{\cJ \subseteq [d] \\ \abs{\cJ} = M^\prime}} \sup_{ \pspace} \norm{\bm{\Sigma}_\theta \cJJ}_2.
}
For a fixed $(\bs{w}, b) \in S_{M}^{d-1} \times \R,$  by using the Rayleigh quotient formula,  we can write that
\eq{
\sup_{\substack{\cJ \subseteq [d] \\ \abs{\cJ} = M^\prime}} \norm{\bm{\Sigma}_\theta \cJJ}_2 = \sup_{\substack{\cJ \subseteq [d] \\ \abs{\cJ} = M^\prime}}  \sup_{\bs{v} \in S^{d-1}} \abs{\inner{\bs{v}}{\bm{\Sigma}_\theta \cJJ \bs{v}}} =   \sup_{\bs{v} \in S_{M^\prime}^{d-1}} \abs{\inner{\bs{v}}{\bm{\Sigma}_\theta \bs{v}}}.
}
Let $\cN_{M^\prime}^{1/4}$ be the minimal $1/4$-cover of $S_{M^\prime}^{d-1}$.   It is easy to check that for $\bs{v} \in \cN_{M^\prime}^{1/4}$, we have 
$$\sup_{\bs{v} \in S_{M^\prime}^{d-1}} \abs{\inner{\bs{v}}{\bm{\Sigma}_\theta \bs{v}}} \leq 2 \sup_{\bs{v} \in \cN_{M^\prime}^{1/4}} \abs{\inner{\bs{v}}{\bm{\Sigma}_\theta \bs{v}}}.$$ 
Therefore, we have
\eq{
\sup_{\substack{\cJ \subseteq [d] \\ \abs{\cJ} = M^\prime}} \sup_{ \pspace } \norm{\bm{\Sigma}_\theta \cJJ}_2 \leq   \sup_{\bs{v} \in \cN_{M^\prime}^{1/4}}  2 \sup_{\pspace }  \abs{\inner{\bs{v}}{\bm{\Sigma}_\theta \bs{v}}}.  \label{eq:boundforsupop}
} 
Since we already have a  bound for the size of $\cN_{M^\prime}^{1/4}$,  we first derive a concentration bound for $\sup_{ \bs{w}, b}  \abs{\inner{\bs{v}}{\bm{\Sigma}_\theta \bs{v}}}$ for a fixed $\bs{v} \in S_{M^\prime}^{d - 1}$.

\paragraph{Concentration for $\sup_{\bs{w},b}  \abs{\inner{\bs{v}}{\bm{\Sigma}_\theta \bs{v}}}$ }

\begin{lemma}
\label{lem:supopnormaux}
For $\phi(t) \in \phiset$, $M, \in [d]$,   and for a fixed $\bs{v} \in S^{d-1}$ and $n \geq d^*$,  we have that for $t \geq 0$,
\eq{
\mpr \Bigg[  \sup_{ \pspace }  \abs{\inner{\bs{v}}{\bm{\Sigma}_\theta \bs{v}}} \geq  8 \sqrt{2} \max\{  t,t^2\}  \Bigg] \leq 2 \left( \frac{en}{d^*} \right)^{d^*} \exp \left(  - n t^2 \right).
}
\end{lemma}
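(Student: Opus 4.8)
The plan is to mimic the proof of Lemma~\ref{lem:supprocessaux}: reduce the supremum over the continuum $\theta=(\bs w,b)\in S_M^{d-1}\times\R$ to a finite, data-dependent dictionary of activation patterns via the VC/Sauer bound of Corollary~\ref{cor:vcdimimp}, control a Rademacher moment-generating function, and finish with a Chernoff bound. Fix $\bs v\in S^{d-1}$ and set $\ell(\bs x,\theta)\coloneqq\phi'(\inner{\bs w}{\bs x}+b)\inner{\bs v}{\bs x}^2$, so that $\inner{\bs v}{\bm\Sigma_\theta\bs v}=\tfrac1n\sum_i\ell(\bs x_i,\theta)-\E[\ell(\bs x,\theta)]$. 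First I would bound $\tilde Z\coloneqq\sup_{\theta}\bigl(\tfrac1n\sum_i\ell(\bs x_i,\theta)-\E[\ell(\bs x,\theta)]\bigr)$. Applying Lemma~\ref{lem:generalsymmetrization} with $h=\exp$ gives $\E[e^{\lambda\tilde Z}]\le\E\bigl[\sup_\theta\exp(\tfrac{2\lambda}{n}\sum_i\varepsilon_i\ell(\bs x_i,\theta))\bigr]$, and invoking Corollary~\ref{cor:vcdimimp} (using $n\ge d^*$) replaces $\sup_\theta$ by a sum over the set $Q^x$ of distinct activation patterns, with $|Q^x|\le(en/d^*)^{d^*}$; conditioning on the data and using the independence of the $\varepsilon_i$ turns the right-hand side into $\E_{\bs X}\bigl[\sum_{\theta\in Q^x}\prod_{i=1}^n\cosh(\tfrac{2\lambda}{n}\ell(\bs x_i,\theta))\bigr]$.

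The crux is the moment estimate. Because $Q^x$ is data-dependent one cannot directly interchange the expectation with the product over $i$, but the deterministic bound $|Q^x|\le(en/d^*)^{d^*}$ together with $0\le\phi'\le1$ (which holds for $\phi\in\phiset$) and the monotonicity and evenness of $\cosh$ on $[0,\infty)$ yields, for $\lambda\ge0$,
\eq{
\sum_{\theta\in Q^x}\prod_{i=1}^n\cosh\!\Bigl(\tfrac{2\lambda}{n}\,\phi'(\inner{\bs w}{\bs x_i}+b)\inner{\bs v}{\bs x_i}^2\Bigr)\le\Bigl(\tfrac{en}{d^*}\Bigr)^{d^*}\prod_{i=1}^n\cosh\!\Bigl(\tfrac{2\lambda}{n}\inner{\bs v}{\bs x_i}^2\Bigr),
}
whose right-hand side no longer depends on $\theta$, so the product factorizes over the i.i.d.\ $\bs x_i$. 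Since $\inner{\bs v}{\bs x_i}\sim\cN(0,1)$, I would compute, for $Z\sim\cN(0,1)$ and $|s|<\tfrac12$,
\eq{
\E[\cosh(sZ^2)]=\tfrac12\bigl[(1-2s)^{-1/2}+(1+2s)^{-1/2}\bigr],
}
which is even in $s$ — hence of the form $1+\tfrac32 s^2+O(s^4)$ — and in particular satisfies $\E[\cosh(sZ^2)]\le(1-4s^2)^{-1}\le e^{8s^2}$ once $4s^2\le\tfrac12$. Taking $s=2\lambda/n$ gives $\E[e^{\lambda\tilde Z}]\le(en/d^*)^{d^*}\exp(32\lambda^2/n)$ for $0\le\lambda\le n/(4\sqrt2)$.

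Finally I would apply Markov's inequality to $e^{\lambda\tilde Z}$ and optimize in two regimes: for $u\le8\sqrt2$ take $\lambda=nu/64$ to obtain $\mpr[\tilde Z\ge u]\le(en/d^*)^{d^*}e^{-nu^2/128}$, and for $u>8\sqrt2$ take $\lambda=n/(4\sqrt2)$ to obtain $\mpr[\tilde Z\ge u]\le(en/d^*)^{d^*}e^{-n(u/(4\sqrt2)-1)}$; substituting $u=8\sqrt2\max\{t,t^2\}$ collapses both bounds to $(en/d^*)^{d^*}e^{-nt^2}$ (in the second regime using $1-2t^2\le-t^2$ for $t\ge1$). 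Running the identical argument with $-\ell$ in place of $\ell$ controls $\sup_\theta\bigl(\E[\ell]-\tfrac1n\sum_i\ell\bigr)$ — the moment bound is unchanged since $\cosh$ is even — and a union bound over the two events produces the factor $2$ and the stated inequality. The main obstacle is precisely this middle step: obtaining a genuinely sub-exponential moment bound (quadratic in $\lambda$ for small $\lambda$, with the $\max\{t,t^2\}$ tail signature) for $\cosh$ of a $\chi^2$-type variable while simultaneously removing the data-dependence of $Q^x$. A naive estimate such as $\cosh(sg)\le e^{s|g|}$ only produces an exponent linear in $\lambda$ (no concentration), and $\cosh(sg)\le e^{s^2g^2/2}$ is useless because $\E[e^{c\inner{\bs v}{\bs x}^4}]=\infty$; the resolution is to exploit the nonnegativity of $\inner{\bs v}{\bs x}^2$ and the symmetry of $\cosh$ to pass to the explicit $\tfrac12[(1-2s)^{-1/2}+(1+2s)^{-1/2}]$ form.
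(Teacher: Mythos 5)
Your proposal is correct and follows the paper's own proof step for step: symmetrize via Lemma~\ref{lem:generalsymmetrization}, reduce the supremum to the finite activation dictionary $Q^x$ via Corollary~\ref{cor:vcdimimp}, bound $\cosh(\tfrac{2\lambda}{n}\phi'(\cdot)\inner{\bs v}{\bs x_i}^2)\le\cosh(\tfrac{2\lambda}{n}\inner{\bs v}{\bs x_i}^2)$ using $0\le\phi'\le1$, invoke the Gaussian $\chi^2$-type MGF estimate of Lemma~\ref{lem:gaussmgf}, and finish with a two-regime Chernoff optimization plus a two-sided union bound. Your explicit handling of the data-dependence of $Q^x$ (bound the conditional sum by $|Q^x|$ times a $\theta$-free product and only then take the outer expectation) tidies a step the paper writes loosely, and your slightly weaker $e^{32\lambda^2/n}$ (versus the paper's $e^{16\lambda^2/n}$, obtainable from $\tfrac12[\sqrt{1+2s}+\sqrt{1-2s}]\le1$) still yields the stated constant $8\sqrt2$.
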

\begin{proof}
We observe that 
\eq{
\inner{\bs{v}}{\bm{\Sigma}_\theta \bs{v}} =  \frac{1}{n} \sum_{i = 1}^n \phi^\prime(\inner{\bs{w}}{\bs{x}_i} + b) \inner{\bs{v}}{\bs{x}_i}^2 - \E \left[ \phi^\prime(\inner{\bs{w}}{\bs{x}} + b) \inner{\bs{v}}{\bs{x}}^2   \right].
}
For 
\eq{
 Z \coloneqq  \sup_{ \pspace }  \frac{1}{n} \sum_{i = 1}^n \phi^\prime(\inner{\bs{w}}{\bs{x}_i} +b) \inner{\bs{v}}{\bs{x}_i}^2 - \E \left[ \phi^\prime(\inner{\bs{w}}{\bs{x}} + b) \inner{\bs{v}}{\bs{x}}^2   \right] 
}
by using  Lemma \ref{lem:generalsymmetrization}, we can write that for $\lambda \geq 0$,
\eq{
\E \left[ \exp (\lambda Z)  \right] \leq \E \Bigg[ \sup_{ \pspace } \exp \left( \frac{2\lambda}{n} \sum_{i=1}^n \varepsilon_i \phi^\prime(\inner{\bs{w}}{\bs{x}_i} + b) \inner{\bs{v}}{\bs{x}_i}^2 \right) \Bigg].   \label{supop:mgf}
} 
Let's look at the empirical complexity. We have
\eq{
\E_\varepsilon  \Bigg[ \sup_{ \pspace }   \exp \Bigg( \frac{2\lambda}{n} \sum_{i=1}^n \varepsilon_i  &\phi^\prime(\inner{\bs{w}}{\bs{x}_i} + b)  \inner{\bs{v}}{\bs{x}_i}^2 \Bigg)  \Bigg] \\
 &  \labelrel={supopaux:eq0}  \E_\varepsilon  \Bigg[ \sup_{(\bs{w},b) \in Q_x} \exp \left( \frac{2\lambda}{n} \sum_{i=1}^n \varepsilon_i \phi^\prime(\inner{\bs{w}}{\bs{x}_i} + b) \inner{\bs{v}}{\bs{x}_i}^2 \right)  \Bigg]  \\
& \leq \sum_{(\bs{w},b) \in Q_x} \E_\varepsilon  \left[ \exp \left( \frac{2\lambda}{n} \sum_{i=1}^n \varepsilon_i \phi^\prime(\inner{\bs{w}}{\bs{x}_i} + b) \inner{\bs{v}}{\bs{x}_i}^2 \right)   \right]  \\
&  \labelrel={supopaux:eq1}   \sum_{(\bs{w},b) \in Q_x} \prod_{i = 1}^n \E_\varepsilon  \left[ \exp \left( \frac{2\lambda}{n} \varepsilon_i   \phi^\prime(\inner{\bs{w}}{\bs{x}_i} + b) \inner{\bs{v}}{\bs{x}_i}^2 \right)  \right],   \label{supop:eq1}
 }where \eqref{supopaux:eq0} follows from  Corollary \ref{cor:vcdimimp} and \eqref{supopaux:eq1} follows  by independence.   Let $\cosh(t) \coloneqq \tfrac{e^t + e^{-t}}{2}.$  We observe that for a fixed $i \in [n]$, 
\eq{
\E_\varepsilon       \left[ \exp \left( \tfrac{2\lambda}{n} \varepsilon_i   \phi^\prime(\inner{\bs{w}}{\bs{x}_i} +b) \inner{\bs{v}}{\bs{x}_i}^2 \right)  \right] 
  & =     \cosh  \left( \tfrac{2\lambda}{n} \abs{\phi^\prime(\inner{\bs{w}}{\bs{x}_i} + b) }   \inner{\bs{v}}{\bs{x}_i}^2 \right)   \\ 
& \leq      \cosh \left(   \tfrac{2\lambda}{n}    \inner{\bs{v}}{\bs{x}_i}^2 \right)   \label{supop:eq2}
}where we use  $\abs{\phi^\prime} \leq 1$ and that $\cosh$ is increasing on $t \geq 0$.
Therefore by \eqref{supop:eq1} and \eqref{supop:eq2},   for $\lambda \in \left[0, n/4\sqrt{2} \right]$
\eq{
\E \left[ \exp (\lambda Z)  \right] \leq  \sum_{(\bs{w},b) \in Q_x} \prod_{i = 1}^n \E \left[ \cosh \left( \frac{2\lambda}{n}  \inner{\bs{v}}{\bs{x}_i}^2 \right)  \right] \leq \left( \frac{e n}{d^*}  \right)^{d^*}  \exp \left( \frac{16 \lambda^2}{n}  \right),
}
where we used  Lemma \ref{lem:gaussmgf} and Corollary \ref{cor:vcdimimp}.
By Chernoff's bound, the statement follows.
\end{proof}

\paragraph{Concentration for $\vect{\Sigma}_{\theta}$} 
\hfill \\
\noindent
The next statement provides a concentration bound for \eqref{eq:boundforsupop}. 
\begin{lemma}
\label{lem:supopnorm}
For $\phi(t) \in \phiset$, $M, M^\prime \in [d]$,   and for  $d \geq 4M$ and $M^\prime \leq 2M,$ 
\eq{
n \geq 24 M \log^2 \left( \frac{35dn}{M} \right) ~~ \text{and} ~~ M \geq \log (2/\delta),
}
we have for $\delta \in (0,1]$,
\eq{
\mpr \left[   \sup_{\substack{\cJ \subseteq [d] \\ \abs{\cJ} = M^\prime}} \sup_{ \pspace } \norm{\bm{\Sigma}_\theta \cJJ}_2  \geq K \sqrt{\frac{M \log^2\left( \frac{35 dn}{M} \right)}{n}}  \right] \leq \delta,
}
where $K$ is a universal positive constant.
\end{lemma}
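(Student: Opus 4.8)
The plan is to mirror the net-plus-symmetrization argument used for $\bs{Y}_\theta$ in Lemma~\ref{lem:supbound}, but the proof is considerably shorter because no truncation is needed here: the scalar $\inner{\bs{v}}{\bs{x}}^2$ already has sub-exponential (chi-squared) tails, so the symmetrization-plus-$\cosh$-MGF computation behind Lemma~\ref{lem:supopnormaux} controls it directly — which is also why the constant $K$ ends up universal rather than depending on $(C_1,C_2,r,\Delta)$. Since $\bm{\Sigma}_\theta$ is symmetric, the Rayleigh-quotient identity recorded just above the lemma reduces $\sup_{|\cJ|=M^\prime}\norm{\bm{\Sigma}_\theta\cJJ}_2$ to $\sup_{\bs{v}\in S_{M^\prime}^{d-1}}\abs{\inner{\bs{v}}{\bm{\Sigma}_\theta\bs{v}}}$, and passing to a minimal $1/4$-cover $\cN_{M^\prime}^{1/4}$ of $S_{M^\prime}^{d-1}$ yields the bound $2\sup_{\bs{v}\in\cN_{M^\prime}^{1/4}}\abs{\inner{\bs{v}}{\bm{\Sigma}_\theta\bs{v}}}$ already recorded in \eqref{eq:boundforsupop}. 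It therefore suffices to control $\sup_{\pspace}\abs{\inner{\bs{v}}{\bm{\Sigma}_\theta\bs{v}}}$ for each fixed $\bs{v}$ and then union bound over the net.

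For the per-vector tail I would invoke Lemma~\ref{lem:supopnormaux}: for fixed $\bs{v}\in S^{d-1}$ and $n\ge d^*$, $\mpr[\sup_{\pspace}\abs{\inner{\bs{v}}{\bm{\Sigma}_\theta\bs{v}}}\ge 8\sqrt{2}\max\{t,t^2\}]\le 2(en/d^*)^{d^*}e^{-nt^2}$. Summing over $\cN_{M^\prime}^{1/4}$, and using the cover-size estimate $\abs{\cN_{M^\prime}^{1/4}}\le\binom{d}{M^\prime}9^{M^\prime}$ (Corollary~\ref{cor:size}), the VC bound $d^*\le 6M\log(ed/M)$ (Proposition~\ref{prop:vcdim}), and the hypotheses $d\ge 4M$, $M^\prime\le 2M$, the same computation as in the proof of Lemma~\ref{lem:supbound} (the steps leading from a product of the form $\binom{d}{M^\prime}5^{M^\prime}(en/d^*)^{d^*}$ to a single $\log^2$ expression) shows that the total cardinality factor $\binom{d}{M^\prime}9^{M^\prime}(en/d^*)^{d^*}$ is at most $\exp\!\big(CM\log^2(35dn/M)\big)$ for an absolute constant $C$. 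The assumption $n\ge 24M\log^2(35dn/M)\ge 6M\log(ed/M)\ge d^*$ guarantees that Lemma~\ref{lem:supopnormaux} and Corollary~\ref{cor:vcdimimp} apply.

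Then I would take $t=\sqrt{(C+1)M\log^2(35dn/M)/n}+\sqrt{\log(2/\delta)/n}$. The conditions $n\ge 24M\log^2(35dn/M)$ and $M\ge\log(2/\delta)$ force $t\le 1$ (so $\max\{t,t^2\}=t$) and $nt^2\ge CM\log^2(35dn/M)+\log(2/\delta)$, whence the union-bound tail is at most $2\,e^{CM\log^2(35dn/M)}e^{-nt^2}\le\delta$. Finally, since $\sqrt{\log(2/\delta)/n}\le\sqrt{M/n}\le\sqrt{M\log^2(35dn/M)/n}$, the threshold $2\cdot 8\sqrt{2}\,t$ collapses to $K\sqrt{M\log^2(35dn/M)/n}$ for a universal $K$, which is the claimed bound.

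The main obstacle is the bookkeeping in the second paragraph: one has to verify that the product of the combinatorial factor $\binom{d}{M^\prime}9^{M^\prime}$, the VC covering number $(en/d^*)^{d^*}$, and the sub-exponential Chernoff tail $e^{-nt^2}$ compresses into a single clean $\exp(CM\log^2(35dn/M))$ bound, and that the stated sample-size hypothesis is exactly what is needed both to make $t\le 1$ and to absorb the leftover probability into $\delta$. This is routine but the constants (the $24$, the $35$, the exponent $C$) must be tracked consistently; the rest of the argument is a direct transcription of the $\bs{Y}_\theta$ proof with the truncation steps deleted.
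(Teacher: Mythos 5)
Your proposal is correct and follows essentially the same route as the paper: the Rayleigh-quotient reduction in \eqref{eq:boundforsupop}, a union bound over the $1/4$-net of $S_{M^\prime}^{d-1}$ combined with Lemma~\ref{lem:supopnormaux}, the cardinality/VC bookkeeping via Corollary~\ref{cor:size} and Proposition~\ref{prop:vcdim}, and the choice $t = \sqrt{6M\log^2(35nd/M)/n} + \sqrt{\log(2/\delta)/n}\in[0,1]$. Your observation that the constant is universal precisely because no truncation step (and hence no $R$ depending on $(C_1,C_2,r,\Delta)$) is needed is also the right structural reason, matching the paper.
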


\begin{proof}
By using \eqref{eq:boundforsupop} and Lemma \ref{lem:supopnormaux}, we can write that for $n \geq d^*$
\eq{
\mpr \left[  \sup_{\substack{\cJ \subseteq [d] \\ \abs{\cJ} = M^\prime}} \sup_{ \pspace} \norm{\bm{\Sigma}_\theta \cJJ}_2  \geq 16 \sqrt{2}   \max \{ t, t^2 \}   \right]       & \leq           \sum_{\bs{v} \in \cN_{M^\prime}^{1/4}}       \mpr \Bigg[  \sup_{\pspace}  \abs{\inner{\bs{v}}{\bm{\Sigma}_\theta \bs{v}}} \geq  8\sqrt{2} \max\{  t,t^2\}  \Bigg] \\
& \labelrel\leq{supopthm:ineqq0} 2 \binom{d}{	M^\prime}  9^{M^\prime} \left( \frac{en}{d^*} \right)^{d^*} \exp \left(  - n t^2 \right). \label{corsupop:eq99}
}where \eqref{supopthm:ineqq0}  follows from Corollary \ref{cor:size}.  We  note that  $n \geq 24 M \log^2 \left( \frac{35dn}{M} \right)  \geq   6 M \log \left(  \frac{ed}{M}  \right) \geq d^*$ by Proposition \ref{prop:vcdim}.  Moreover,  for $d \geq 4M$ and $M^\prime \leq 2M,$  we have
\eq{  
\binom{d}{M^\prime} 9^{M^\prime}  \left( \frac{en}{d^*} \right)^{d^*}            \leq  \binom{d}{2M} 9^{2M} (en)^{6M \log \left(\frac{ed}{M}  \right)} \leq \left( \frac{9ed}{2M} \right)^{2M}          (en)^{6M \log \left(\frac{ed}{M}  \right)}       \leq       \left( \frac{9 e^2 n d}{2M} \right)^{6M \log \left(\frac{ed}{M}  \right)}  \label{corsupop:eq1}
}
where the second inequality follows from $\binom{d}{M} \leq \left( \frac{ed}{M} \right)^M$.
Therefore,
\eq{
\log \left[ \binom{d}{M^\prime} 9^{M^\prime}  \left( \frac{6n}{d^*} \right)^{d^*}   \right] \leq  6 M \log \left(\frac{ed}{M}  \right) \log  \left( \frac{9 e^2  n d}{2M} \right) \leq   6 M \log^2 \left(\frac{35 n d}{M}  \right). \label{corsupop:eq2}
}
By using \eqref{corsupop:eq2} and \eqref{corsupop:eq99} with $t = \sqrt{\frac{  6M \log^2 \left(\frac{35 n d}{M}  \right) }{n}} + \sqrt{\frac{\log (2/\delta)}{n}} \in [0,1]$,  we obtain the statement.
\end{proof}

\subsection{Concentration for  $\vect{T}_\theta$}
By \eqref{eq:targetstatement2}  and $\norm{\E[\phi^\prime(\inner{\bs{w}}{\bs{x}} + b) \bs{x} \bs{x}^\top]}_2 \leq 1$,  we have
\begin{align}
\norm*{\bs{T}_\theta  \vert_{\cJ} }_2 \leq \norm*{\bs{Y}_\theta \vert_{\cJ} }_2 + \norm*{\bm{\Sigma}_\theta \cJJ}_2 \norm*{\E[y \bs{x}]}_2 + \left( \norm*{\bm{\Sigma}_\theta \cJJ}_2  + 1 \right)   \norm*{(\hmu - \E[y \bs{x}])\vert_{\cJ}}_2.   
\end{align} 
We have the following statement.
\begin{lemma}
\label{lem:supboundfinal}
For $\phi(t) \in \phiset$, $M, M^\prime \in [d]$,   for $d \geq 4M$,  $M^\prime \leq 2M$
\eq{
n \geq 24 M \log^2 \left(  \frac{35 d n}{M} \right)  ~~ \text{and} ~~ M \geq \log(6/ \delta),
}
we have that for $\delta \in (0,1]$
\eq{
\mpr \Bigg[   \sup_{\substack{\cJ \subseteq [d] \\ \abs{\cJ} = M^\prime}} \sup_{ \pspace } \norm{\bs{T}_\theta \vert_{ \cJ} }_2 \geq K \log^{C_2} \left(18 n/\delta \right)  \sqrt{\frac{M  \log^2 \left(  \frac{35 d n}{M} \right) }{n}}   \Bigg] \leq \delta,
}
where $K$ is a positive constant depending on $(C_1, C_2, r, \Delta)$.
\end{lemma}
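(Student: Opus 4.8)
The plan is to combine the triangle–inequality decomposition that follows from \eqref{eq:targetstatement2},
\[
\norm{\bs{T}_\theta \vert_{\cJ}}_2 \le \norm{\bs{Y}_\theta \vert_{\cJ}}_2 + \norm{\bm{\Sigma}_\theta \cJJ}_2\,\norm{\E[y\bs{x}]}_2 + \big(\norm{\bm{\Sigma}_\theta \cJJ}_2 + 1\big)\,\norm{(\hmu - \E[y\bs{x}])\vert_{\cJ}}_2 ,
\]
with the concentration bounds of Lemmas~\ref{lem:supbound} and \ref{lem:supopnorm}, via a union bound over three events each of probability at least $1-\delta/3$. First I would dispose of the deterministic factor $\norm{\E[y\bs{x}]}_2$: since $\epsilon$ is zero-mean and independent of $\bs{x}$, Stein's lemma gives $\E[y\bs{x}] = \E[\gt(\bs{V}^\top\bs{x})\bs{x}] = \bs{V}\,\E_{\bs{z}\sim\cN(0,\ide{r})}[\grad\gt(\bs{z})]$, and because $\bs{V}$ is orthonormal, $\norm{\E[y\bs{x}]}_2 = \norm{\E[\grad\gt(\bs{z})]}_2 \le \E[\norm{\grad\gt(\bs{z})}_2^2]^{1/2} = \cgp$, a constant depending only on $\gt$ (hence on $(C_1,C_2,r)$ through \eqref{eq:asspolytails}).

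Next I would invoke the two pointwise-in-activation concentration results, each with confidence parameter $\delta/3$. For $\norm{\bs{Y}_\theta\vert_{\cJ}}_2$, apply Lemma~\ref{lem:supbound} with $\phi(t)=\mathrm{ReLU}(t)$: on an event $E_1$ of probability at least $1-\delta/3$,
\[
\sup_{\substack{\cJ \subseteq [d]\\ \abs{\cJ}=M^\prime}}\ \sup_{\pspace}\ \norm{\bs{Y}_\theta\vert_{\cJ}}_2 \le K_1\,\log^{C_2}(18n/\delta)\,\sqrt{\tfrac{M\log^2(24dn/M)}{n}},
\]
using $6n/(\delta/3)=18n/\delta$; the hypotheses of Lemma~\ref{lem:supbound} follow from those of the present statement since $M\ge\log(6/\delta)=\log\!\big(2/(\delta/3)\big)$ and $24\le 35$ (so $n\ge 24M\log^2(35dn/M)\ge 24M\log^2(24dn/M)$). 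For $\norm{\bm{\Sigma}_\theta\cJJ}_2$, apply Lemma~\ref{lem:supopnorm}: on an event $E_2$ of probability at least $1-\delta/3$, $\sup\norm{\bm{\Sigma}_\theta\cJJ}_2 \le K_2\sqrt{M\log^2(35dn/M)/n}$, which under the hypothesis $n\ge 24M\log^2(35dn/M)$ is at most $K_2/\sqrt{24}$; in particular $\norm{\bm{\Sigma}_\theta\cJJ}_2 + 1 = O(1)$ on $E_2$. Finally, the key observation for the last term is that $\hmu - \E[y\bs{x}] = \tfrac1n\sum_{i} y_i\bs{x}_i - \E[y\bs{x}]$ is precisely $\bs{Y}_\theta$ for the linear activation $\phi(t)=t\in\phiset$ (for which $\phi^\prime\equiv 1$, so the $\theta$-dependence disappears), so applying Lemma~\ref{lem:supbound} once more with $\phi(t)=t$ and confidence $\delta/3$ gives, on an event $E_3$ of probability at least $1-\delta/3$,
\[
\sup_{\substack{\cJ\subseteq[d]\\ \abs{\cJ}=M^\prime}}\ \norm{(\hmu-\E[y\bs{x}])\vert_{\cJ}}_2 \le K_1\,\log^{C_2}(18n/\delta)\,\sqrt{\tfrac{M\log^2(24dn/M)}{n}}.
\]

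On $E_1\cap E_2\cap E_3$, which has probability at least $1-\delta$, I would substitute these three estimates into the decomposition: the first and third terms are each at most a constant times $\log^{C_2}(18n/\delta)\sqrt{M\log^2(35dn/M)/n}$ (enlarging $24$ to $35$ inside the logarithm), the middle term is $\cgp$ times a quantity at most $K_2\sqrt{M\log^2(35dn/M)/n}$, and the prefactor $\norm{\bm{\Sigma}_\theta\cJJ}_2+1$ multiplying the third term is a universal $O(1)$; collecting constants yields the claimed bound with a single $K$ depending on $(C_1,C_2,r,\Delta)$ — the $\Delta$-dependence entering through the truncation level $R$ inside Lemma~\ref{lem:supbound}. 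I do not anticipate a genuine obstacle here: the statement is essentially bookkeeping on top of Lemmas~\ref{lem:supbound}–\ref{lem:supopnorm}. The two points requiring care are (i) checking that the $\delta\mapsto\delta/3$ substitution keeps every hypothesis of those two lemmas satisfied (the $\log(2/\delta)$ versus $\log(6/\delta)$ size conditions on $M$, and the $24dn/M$ versus $35dn/M$ logarithmic arguments), and (ii) absorbing the random factor $\norm{\bm{\Sigma}_\theta\cJJ}_2+1$ into the final constant without letting it depend on $d$ or $n$.
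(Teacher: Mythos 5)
Your proof is correct and matches the paper's approach: decompose via \eqref{eq:targetstatement2}, apply Lemma~\ref{lem:supbound} with $\phi(t)=\mathrm{ReLU}(t)$ for $\bs{Y}_\theta$ and with $\phi(t)=t$ for $\hmu-\E[y\bs{x}]$ (the paper writes ``$\phi(t)=\abs{t}$'' there, evidently a typo for $t$), apply Lemma~\ref{lem:supopnorm} for $\bm{\Sigma}_\theta$, and union-bound with $\delta/3$. One minor simplification: rather than invoking Stein's lemma to bound $\norm{\E[y\bs{x}]}_2\le\cgp$ (whose reduction to the stated parameters $(C_1,C_2,r,\Delta)$ through \eqref{eq:asspolytails} is not immediate), Cauchy--Schwarz gives $\norm{\E[y\bs{x}]}_2=\norm{\E[\gt(\bs{z})\bs{z}]}_2\le\E[\gt(\bs{z})^2]^{1/2}\E[\norm{\bs{z}}_2^2]^{1/2}=\sqrt{r}$, which depends only on $r$ and keeps the constant bookkeeping clean.
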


\begin{proof}
We note that Lemma \ref{lem:supbound} applies to $\phi(t) \in \phiset$.  Therefore,  by Lemma \ref{lem:supbound} for  $\phi(t) = \abs{t}$, and $\phi(t) =  \text{ReLU}(t)$,  and Lemma \ref{lem:supopnorm},  we have the statement.
\end{proof}

\subsection{Concentration Bound for the Empirical Gradient in the Single-Index Setting}
 In this part, since $r = 1,$  for clarity,  we use the following notation: $\gt = \sum_{k \geq k^{\star}}   \frac{\ghc_k}{k!} \Hek$  and $y  =  \gt(\inner{\bs{v}}{\bs{x}}) + \sqrt{\Delta} \epsilon$.  
\begin{proposition}
\label{prop:emprgradientconcsingle}
We consider \eqref{empgrad:epmgraddef} with $\hmu =  0$ and $\phi(t) \in \phiset$.   Let $j \in [2 m]$ be a fixed index and  $\cJ$ be any function of $\{ (\bs{x}_i,y_i)\}_{i = 1}^n$ such that $\abs{\cJ} \leq M$ almost surely.   For  $d \geq 4M$, 
\eq{
n \geq 24 M \log^2 \left(  \frac{24 d n}{M} \right)  ~~ \text{and} ~~ M \geq 24 (1 + \log(4/\delta)),
}
the intersection of the following events holds with at least probability $1 - \delta$,
\begin{enumerate} 
\item $ \norm*{  \left.  g\left(\Wz_{j*}  , b \right) \right \pJ       -     \frac{ \ghc_{k^{\star}}  \rhc_{k^{\star}}(b) }{(k^\star -1)!}  \inner{\bs{v}}{\Wz_{j*} }^{k^{\star} - 1}     \left.  \bs{v} \right \pJ  }_2       \leq     K   \left(       \sqrt{\tfrac{M \log^2\left( \frac{24 dn}{M} \right) \log^{2C_2}   \left( \frac{12n}{\delta}\right)}{n}}   +    \left( \tfrac{ 1 + \log(4/\delta) }{M} \right)^{\frac{k^{\star}}{2}}      \right)$   
\item  $ \norm*{ \left.   g\left(\Wz_{j*}  , b \right)  \right \pJ  }_2  \leq   K \left(   \frac{  \abs{  \ghc_{k^{\star}}  \rhc_{k^{\star}}(b)  } }{(k^\star - 1)!}   \left( \tfrac{1+ \log(4/\delta)}{M}  \right)^{\frac{k^{\star}-1}{2}}          +  \sqrt{ \frac{M \log^2\left( \tfrac{24 dn}{M} \right) \log^{2C_2} \left( \tfrac{12n}{\delta}\right)}{n}}  \right).$  
\end{enumerate}
where $K > 0$ is a constant depending on $(C_1, C_2, k^{\star},  \Delta, \cgp)$.
\end{proposition}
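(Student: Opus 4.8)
The strategy is to split the empirical gradient as population part plus uniform fluctuation, expand the population part via its Hermite series around the first informative term, and control the three resulting pieces separately. Write $\theta=(\Wz_{j*},b)$; since $\hmu=0$ in the single-index setting, \eqref{empgrad:epmgraddef} reads $g(\theta)=\E[y\bs{x}\,\phi'(\inner{\Wz_{j*}}{\bs{x}}+b)]+\bs{Y}_\theta$ with $\bs{Y}_\theta:=\tfrac1n\sum_{i=1}^n y_i\bs{x}_i\phi'(\inner{\Wz_{j*}}{\bs{x}_i}+b)-\E[y\bs{x}\phi'(\inner{\Wz_{j*}}{\bs{x}}+b)]$. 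I would apply Proposition~\ref{prop:hermiteexpansion} with $\bs{V}=\bs{v}$ and $r=1$: then $\bm{T_k}=\ghc_k$, $\bs{V}^\top\bs{w}=\inner{\bs{v}}{\bs{w}}$ is a scalar, and because $\ghc_k=0$ for $k<k^\star$ every term of degree $\le k^\star-1$ in that expansion vanishes except the first $\bs{v}$-direction term. With $\zeta_{k^\star-1}$ denoting the residual of Proposition~\ref{prop:residualbound} taken at $N=k^\star-1$, $\bs{V}=\bs{v}$, this gives
$$g(\theta)\big|_{\cJ}=\frac{\ghc_{k^\star}\rhc_{k^\star}(b)}{(k^\star-1)!}\,\inner{\bs{v}}{\Wz_{j*}}^{k^\star-1}\,\bs{v}\big|_{\cJ}+\zeta_{k^\star-1}\big|_{\cJ}+\bs{Y}_\theta\big|_{\cJ},$$
so both claims reduce to bounding $\bs{Y}_\theta|_{\cJ}$, $\zeta_{k^\star-1}|_{\cJ}$, and $|\inner{\bs{v}}{\Wz_{j*}}|$.

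\textbf{Fluctuation term.} First I would control $\bs{Y}_\theta|_{\cJ}$ uniformly. Since $\Wz_{j*}$ is supported on $\cJ$ with $|\cJ|\le M$, it lies in $S_M^{d-1}$, and enlarging $\cJ$ to any superset of size $M$ only increases $\norm{\bs{Y}_\theta|_{\cJ}}_2$; hence $\norm{\bs{Y}_\theta|_{\cJ}}_2\le \sup_{|\cJ'|=M}\sup_{\pspace}\norm{\bs{Y}_{(\bs{w},b)}|_{\cJ'}}_2$. The hypotheses $d\ge4M$, $n\ge24M\log^2(24dn/M)$ and $M\ge24(1+\log(4/\delta))\ge\log(4/\delta)$ are exactly those needed to invoke Lemma~\ref{lem:supbound} with $M'=M$ and failure level $\delta/2$, yielding an event $\cE_1$ of probability $\ge1-\delta/2$ on which $\norm{\bs{Y}_\theta|_{\cJ}}_2\le K_1\sqrt{M\log^2(24dn/M)\log^{2C_2}(12n/\delta)/n}$ with $K_1=K_1(C_1,C_2,\Delta)$. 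The point of using the uniform-in-$\cJ$ bound is precisely that it absorbs the dependence between the data-dependent pruned set $\cJ$ and the sample appearing in $\bs{Y}_\theta$.

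\textbf{Random direction.} Next, conditioning on the sample fixes $\cJ$, and by \eqref{eq:initdist} the re-initialized $\Wz_{j*}$ is then uniform on $S^{d-1}_{\cJ}$ and independent of the data, so $\inner{\bs{v}}{\Wz_{j*}}=\inner{\bs{v}|_{\cJ}}{\bs{W}_{j*}}/\norm{\bs{W}_{j*}|_{\cJ}}_2$ with $\bs{W}_{j*}$ standard Gaussian. A Gaussian tail bound on the numerator (which is $\cN(0,\norm{\bs{v}|_{\cJ}}_2^2)$, $\norm{\bs{v}|_{\cJ}}_2\le1$) together with a $\chi^2$ lower-tail bound on the denominator produces an event $\cE_2$, $\mpr(\cE_2)\ge1-\delta/2$, on which $|\inner{\bs{v}}{\Wz_{j*}}|\le C\sqrt{(1+\log(4/\delta))/M}$; the condition $M\ge24(1+\log(4/\delta))$ simultaneously forces $\inner{\bs{v}}{\Wz_{j*}}^2\le\tfrac12$, so the factor $(1-\inner{\bs{v}}{\Wz_{j*}}^2)^{-1}$ in Proposition~\ref{prop:residualbound} is at most $2$.

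\textbf{Assembly, and the hard part.} On $\cE_1\cap\cE_2$ (probability $\ge1-\delta$), Proposition~\ref{prop:residualbound} at $N=k^\star-1$ gives $\norm{\zeta_{k^\star-1}|_{\cJ}}_2\le\norm{\zeta_{k^\star-1}}_2\le 2(1+\sqrt{k^\star+1})\cgp|\inner{\bs{v}}{\Wz_{j*}}|^{k^\star}\le K_2\cgp\big((1+\log(4/\delta))/M\big)^{k^\star/2}$; combined with the fluctuation bound this is claim~1 after enlarging $K$ to absorb $K_1,K_2,\cgp$ and the $k^\star$-dependent constants. For claim~2, bound the leading term by $\tfrac{|\ghc_{k^\star}\rhc_{k^\star}(b)|}{(k^\star-1)!}|\inner{\bs{v}}{\Wz_{j*}}|^{k^\star-1}\norm{\bs{v}|_{\cJ}}_2\le\tfrac{|\ghc_{k^\star}\rhc_{k^\star}(b)|}{(k^\star-1)!}\big(C\sqrt{(1+\log(4/\delta))/M}\big)^{k^\star-1}$, note the residual carries an extra factor $|\inner{\bs{v}}{\Wz_{j*}}|\le1$ so it is of the same or lower order in $|\inner{\bs{v}}{\Wz_{j*}}|$ and folds into the constant, and add the fluctuation bound. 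The main obstacle is the triple dependence — $\cJ$ is a complicated function of the sample, $\Wz_{j*}$ is drawn afterward with fresh randomness, and $\bs{Y}_\theta$ revisits the sample — which is exactly why one must (i) use the VC-covering concentration of Lemma~\ref{lem:supbound}, designed to tolerate an adversarial bounded-size $\cJ$, for the fluctuation, and (ii) condition on the sample so that $\Wz_{j*}$ is an ordinary uniform spherical vector for the direction estimate; a secondary point is verifying that the stated $(d,n,M,\delta)$ conditions are tight enough to feed Lemma~\ref{lem:supbound} and to keep $1-\inner{\bs{v}}{\Wz_{j*}}^2$ bounded away from $0$.
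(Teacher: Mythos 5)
Your proposal is correct and takes essentially the same approach as the paper's proof: decomposing the empirical gradient into the leading Hermite term, the residual $\zeta_{k^\star-1}$ controlled by Proposition~\ref{prop:residualbound}, and the fluctuation $\bs{Y}_\theta$ controlled uniformly over data-dependent supports by Lemma~\ref{lem:supbound}, then conditioning on the data so that the fresh randomness of $\Wz_{j*}$ can be handled with Laurent--Massart-type tail bounds to get the high-probability control of $\inner{\bs{v}}{\Wz_{j*}}$. (The only cosmetic difference is that the paper invokes Corollaries~\ref{cor:laurentmassart1}--\ref{cor:laurentmassart2} explicitly where you describe equivalent Gaussian and $\chi^2$ tail bounds, and the slack you quote, $\inner{\bs{v}}{\Wz_{j*}}^2\le\tfrac12$, is a slight overestimate of what $M\ge24(1+\log(4/\delta))$ actually yields, namely $\le\tfrac14$, but this does not affect the argument.)
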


\begin{proof}
We first observe that by Proposition \ref{prop:hermiteexpansion},  
\eq{
\E_{(\bs{x}, y)} \Big[  y \bs{x}&  \phi^\prime \Big(\inner{\Wz_{j*} }{x}  + b \Big)  \Big] \Big \vert_{\cJ}  = \bs{v} \vert_{\cJ}          \sum_{k \geq k^{\star} - 1}         \tfrac{\ghc_{k+1} \rhc_{k+1}(b)}{k!}  \inner{\bs{v}}{\Wz_{j*} }^{k} +   \Wz_{j*}   \sum_{k \geq k^{\star}}      \tfrac{\ghc_{k} \rhc_{k+2}(b)}{k!}  \inner{\bs{v}}{\Wz_{j*} }^{k}.
}
Therefore, we have
\eq{
\Big \lVert \left. g\left(\Wz_{j*}  , b \right) \right \pJ  - &  \frac{ \ghc_{k^{\star}}  \rhc_{k^{\star}}(b) }{(k^\star - 1)!}   \inner{\bs{v}}{\Wz_{j*} }^{k^{\star} - 1} \left. \bs{v} \right \pJ  \Big \rVert_2  \\
& \leq \norm*{ \left. g\left(\Wz_{j*}  , b \right) \right \pJ -  \left. \E_{(\bs{x}, y)} \left[  y \bs{x} \phi^\prime \left(\inner{\Wz_{j*} }{x} + b\right) \right] \right \pJ }_2 + \norm{\zeta_{k^{\star} - 1}} \\
& \leq   \sup_{\substack{\cJ \subseteq [d] \\ \abs{\cJ} = M}} \sup_{ \pspace} \norm{\bs{Y}_\theta \vert_{ \cJ} }_2  + (1 + \sqrt{k^{\star} + 1}) \cgp \frac{\abs*{ \inner{\bs{v}}{\Wz_{j*} } }^{k^{\star}}}{1 -   \inner{\bs{v}}{\Wz_{j*} }^2 } 
}
where $\zeta_{k^{\star} - 1}$ is the higher order terms in the Hermite expansion defined in Proposition \ref{prop:residualbound} and we use  Proposition \ref{prop:residualbound} in the third line line.  

To bound the second term,  we recall that $\Wz_{j*}  = \frac{\bs{W}_{j*} \vert_{\cJ}}{\norm{\bs{W}_{j*} \vert_{\cJ}}_2}$ where $\bs{W}_{j*} \sim \cN(0, \ide{d})$ and it is independent of $\{ (\bs{x}_i, y_i) \}_{i = 1}^n$.  Since $\cJ$ is independent of $\bs{W}_{j*}$,  without loss of generality,  we can fix a $\cJ$ with $\abs{\cJ} = M$.   By using  Corollaries \ref{cor:laurentmassart1} and \ref{cor:laurentmassart2},  the intersection of    $(\rnu{1})  ~ \sum_{i \in \cJ} \bs{W}_{ij}^2 \geq \frac{M}{2}$,  $  (\rnu{2}) ~ \inner{\bs{v}}{\bs{W}_{j*} \vert_{\cJ}}^2 \leq  3 (1 + \log(4/\delta) )$ holds with  probability at least $1 - \delta/2$.  Within that event,  for $M \geq 24 (1 + \log(4/\delta))$, we have
\eq{
(1 + \sqrt{k^{\star} + 1}) \cgp \frac{\abs*{ \inner{\bs{v}}{\Wz_{j*} } }^{k^{\star}}}{1 -   \inner{\bs{v}}{\Wz_{j*} }^2 }  \leq 6^{\frac{k^{\star} + 1}{2}} \cgp (1 + \sqrt{k^{\star} + 1}) \left( \frac{(1 + \log(4/\delta) )}{M} \right)^{\frac{k^{\star}}{2}}. \label{empgradsingle:bias}
}Then, by Lemma \ref{lem:supbound}, the first item in the statement follows.  For the second item, by using the event used for \eqref{empgradsingle:bias}, we have 
\eq{
\norm*{ \left. g\left(\Wz_{j*}  , b \right)  \right \pJ  } _2      
& \leq      \norm*{\left. g\left(\Wz_{j*}  , b \right) \right \pJ        -  \tfrac{   \ghc_{k^{\star}}  \rhc_{k^{\star}}(b)    }{(k^\star - 1)!}   \inner{\bs{v}}{\Wz_{j*} }^{k^{\star} - 1}            \bs{v} \vert_{\cJ}}_2     +      \tfrac{  \abs{  \ghc_{k^{\star}}  \rhc_{k^{\star}}(b)  } }{(k^\star - 1)!}   \left( \tfrac{6(1+ \log(4/\delta))}{M}  \right)^{      \frac{k^{\star}-1}{2}}          .
}
By using the first item in the statement,  the second item also follows.
\end{proof}

\subsection{Concentration Bound for the Empirical Gradient in the Multi-Index Setting}
We first derive the Hermite expansion of  $\E_{(\bs{x}, y)} \left[  \overline y \bs{x} \phi^\prime(\inner{\bs{w}}{\bs{x}} + b) \right]$ (see \eqref{eq:defys} for its definition).
\begin{lemma}
\label{lem:higherordertermsJ}
We recall that $\phi(\cdot + b) \coloneqq \sum_{k \geq 0} \frac{\rhc_k(b)}{k!} \Hek$ and $\gt \coloneqq \sum_{k \geq 0} \frac{1}{k!} \inner{\bm{ T_k }}{\bm{\Hek}}$.  For any $\cJ \subseteq [d]$ and any $\bs{w} \in S^{d-1}$ supported on $\cJ$, we have
\eq{ 
\left. \E_{(\bs{x}, y)} \left[  \overline y  \bs{x} \phi^\prime \left( \inner{\bs{w}}{\bs{x}} + b \right)   \right]\right \pJ & = \rhc_2(b)  \bs{H} \cJJ \bs{w} \\
&+  \bs{V} \vert_{\cJ} \sum_{k \geq 2} \tfrac{\rhc_{k+1}(b)}{k!} \bm{  \grad T_{k+1} } \left[  (\bs{V}^\top \bs{w})^{\otimes k}  \right]  +  \bs{w} \sum_{k \geq 2} \tfrac{\rhc_{k+2}(b)}{k!} \bm{ T_k} \left[   (\bs{V}^\top \bs{w})^{\otimes k}  \right],  
}
where $\bs{H}$ is defined in \eqref{eq:multiindexas}.
\end{lemma}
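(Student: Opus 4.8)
The plan is to expand $\E_{(\bs{x},y)}[\overline{y}\,\bs{x}\,\phi^\prime(\inner{\bs{w}}{\bs{x}}+b)]$ as a difference of two terms, apply Proposition~\ref{prop:hermiteexpansion} to each, and check that all Hermite contributions of degree at most $1$ cancel — which is precisely the effect of subtracting the estimated first Hermite component in forming $\overline{y}$. Write $\bs{\mu}\coloneqq\E[y\bs{x}]$, so that $\overline{y}=y-\inner{\bs{\mu}\vert_{\cJ}}{\bs{x}}$, and use that $\epsilon$ is independent of $\bs{x}$ with mean zero to reduce to
\eq{
\E_{(\bs{x},y)}[\overline{y}\,\bs{x}\,\phi^\prime(\inner{\bs{w}}{\bs{x}}+b)] = \E_{\bs{x}}[\gt(\bs{V}^\top\bs{x})\,\bs{x}\,\phi^\prime(\inner{\bs{w}}{\bs{x}}+b)] - \E_{\bs{x}}[\inner{\bs{\mu}\vert_{\cJ}}{\bs{x}}\,\bs{x}\,\phi^\prime(\inner{\bs{w}}{\bs{x}}+b)].
}

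For the first term I would apply Proposition~\ref{prop:hermiteexpansion} and peel off the $k=0$ and $k=1$ summands of both series, identifying the low-order Hermite tensors: $\bm{T_0}=\E[\gt(\bs{z})]=0$ by the normalization of $\gt$; $\bm{\grad T_1}=\bm{T_1}=\E[\grad\gt(\bs{z})]$, so Stein's lemma gives $\bs{V}\bm{\grad T_1}=\E[\gt(\bs{V}^\top\bs{x})\bs{x}]=\bs{\mu}$ and $\bm{T_1}[\bs{V}^\top\bs{w}]=\inner{\bs{\mu}}{\bs{w}}$; and $\bm{T_2}=\E[\gt(\bs{z})\bs{z}\bs{z}^\top]-\E[\gt(\bs{z})]\ide{r}=\bs{D}$, so by \eqref{eq:multiindexas} one gets $\bs{V}\bm{\grad T_2}[\bs{V}^\top\bs{w}]=\bs{V}\bs{D}\bs{V}^\top\bs{w}=\bs{H}\bs{w}$. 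This rewrites the first term as $\rhc_1(b)\bs{\mu}+\rhc_2(b)\bs{H}\bs{w}+\rhc_3(b)\inner{\bs{\mu}}{\bs{w}}\bs{w}$ plus the tail sums $\bs{V}\sum_{k\ge 2}\tfrac{\rhc_{k+1}(b)}{k!}\bm{\grad T_{k+1}}[(\bs{V}^\top\bs{w})^{\otimes k}]$ and $\bs{w}\sum_{k\ge 2}\tfrac{\rhc_{k+2}(b)}{k!}\bm{T_k}[(\bs{V}^\top\bs{w})^{\otimes k}]$.

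For the second term I would regard $\inner{\bs{\mu}\vert_{\cJ}}{\bs{x}}$ as a rank-one link $\tgt(\tilde{\bs{V}}^\top\bs{x})$ with $\tilde{\bs{V}}=\bs{\mu}\vert_{\cJ}/\norm{\bs{\mu}\vert_{\cJ}}_2$ and $\tgt(t)=\norm{\bs{\mu}\vert_{\cJ}}_2\,t$ (the degenerate case $\bs{\mu}\vert_{\cJ}=0$ being trivial) and invoke Proposition~\ref{prop:hermiteexpansion} again; since $\tgt$ is linear, only the Hermite-$1$ tensor survives and the term equals $\rhc_1(b)\bs{\mu}\vert_{\cJ}+\rhc_3(b)\inner{\bs{\mu}\vert_{\cJ}}{\bs{w}}\bs{w}$. (Equivalently, one may compute $\E[\phi^\prime(\inner{\bs{w}}{\bs{x}}+b)\bs{x}\bs{x}^\top]=\rhc_1(b)\ide{d}+\rhc_3(b)\bs{w}\bs{w}^\top$ by splitting $\bs{x}$ into its parts along and orthogonal to $\bs{w}$.) The last step is to restrict both expressions to $\cJ$ and subtract: since $\bs{w}$ is supported on $\cJ$, we have $\bs{w}\vert_{\cJ}=\bs{w}$, $\inner{\bs{\mu}}{\bs{w}}=\inner{\bs{\mu}\vert_{\cJ}}{\bs{w}}$, and $(\bs{H}\bs{w})\vert_{\cJ}=\bs{H}\vert_{\cJ\times\cJ}\bs{w}$; the $\rhc_1(b)\bs{\mu}\vert_{\cJ}$ terms and the $\rhc_3(b)\inner{\cdot}{\bs{w}}\bs{w}$ terms then cancel in the difference, leaving exactly the claimed identity.

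I do not anticipate a genuine obstacle — everything collapses to Proposition~\ref{prop:hermiteexpansion} once that is in hand. The only delicate part is the bookkeeping of the degree-$0,1,2$ Hermite tensors and the restriction identities that rely on $\supp(\bs{w})\subseteq\cJ$; getting the index ranges and the sign of the cancellation right (this cancellation being exactly what motivates subtracting the first Hermite component in Algorithm~\ref{alg:onestepgd}$^+$) is the main thing to watch.
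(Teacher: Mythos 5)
Your proposal is correct and takes essentially the same route as the paper: decompose $\overline{y}$ into the $\gt$ part and the subtracted linear part, apply Proposition~\ref{prop:hermiteexpansion} to each, identify $\bm{T_0}=0$, $\bs{V}\bm{\grad T_1}=\E[y\bs{x}]$, $\bs{V}\bm{\grad T_2}[\bs{V}^\top\bs{w}]=\bs{H}\bs{w}$, and then cancel the $\rhc_1$ and $\rhc_3$ contributions using $\supp(\bs{w})\subseteq\cJ$ before restricting to $\cJ$. The paper simply carries out these cancellations in a single compressed display rather than writing out the linear term separately; your alternative direct evaluation $\E[\phi'(\inner{\bs{w}}{\bs{x}}+b)\bs{x}\bs{x}^\top]=\rhc_1(b)\ide{d}+\rhc_3(b)\bs{w}\bs{w}^\top$ is a perfectly valid shortcut for that term but is not a genuinely different argument.
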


\begin{proof}
We first observe that  $\E[y \bs{x} ] = \E[ \gt(\bs{V}^\top \bs{x}) \bs{x}] = \bs{V} \E[\gt(\bs{z}) \bs{z}]$  and $\E[y \bs{x} ]  \vert_{\cJ} = \bs{V} \vert_{\cJ} \E[\gt(\bs{z}) \bs{z}]$.
By Proposition \ref{prop:hermiteexpansion}, we have
\eq{
 \E_{(\bs{x}, y)} \left[  \overline y  \bs{x} \phi^\prime \left(  \inner{\bs{w}}{\bs{x}} + b \right)   \right] 
& \labelrel={highJ:eqq0}  \rhc_1(b)  \bs{V} \vert_{ \cJ^c}   \E[\gt(\bs{z}) \bs{z}] +   \rhc_3(b) \bs{w} \bs{w}^\top  \bs{V} \vert_{ \cJ^c}   \E[\gt(\bs{z}) \bs{z}] +  \rhc_2(b) \bs{H} \bs{w}   \\
&  +  \bs{V} \sum_{k \geq 2} \frac{\rhc_{k+1}(b)}{k!} \bm{ \grad T_{k+1} } \left[ (\bs{V}^\top \bs{w})^{ \otimes k}  \right]   +  \bs{w} \sum_{k \geq 2} \frac{\rhc_{k+2}(b)}{k!} \bm{ T_k } \left[   (\bs{V}^\top \bs{w})^{ \otimes k}   \right] \label{horder:eq0} 
}where \eqref{highJ:eqq0}  holds since  $\bm{ \grad T_1 } = \E[\gt(\bs{z}) \bs{z}]$,  $ \bs{V} \bm{ \grad T_2 } \left[ (\bs{V}^\top \bs{w})^{ \otimes 1}  \right]  =  \bs{H} \bs{w}$,   $\bm{T_0} = 0$,  $\bm{ T_1 } \left[   (\bs{V}^\top \bs{w})^{ \otimes 1}   \right] = \inner{\bs{w}}{\bs{V} \E[\gt(\bs{z}) \bs{z}]}$.  
Since   $\bs{w}^\top  \bs{V} \vert_{ \cJ^c}   \E[\gt(\bs{z}) \bs{z}]  = 0$, we have
\eq{
\eqref{horder:eq0} & =   \rhc_2(b) \bs{H} \bs{w}   +  \rhc_1(b)   \bs{V} \vert_{ \cJ^c}   \E[\gt(\bs{z}) \bs{z}]  \\
& +   \bs{V} \sum_{k \geq 2} \tfrac{\rhc_{k+1}(b)}{k!} \bm{ \grad T_{k+1} } \left[ (\bs{V}^\top \bs{w})^{ \otimes k}  \right] + \bs{w} \sum_{k \geq 2} \tfrac{\rhc_{k+2}(b)}{k!} \bm{ T_k } \left[   (\bs{V}^\top \bs{w})^{ \otimes k}   \right].
}
Since $\bs{w}$ is supported on $\cJ$,   the statement follows.
\end{proof}

\begin{proposition}
\label{prop:emprgradientconc}
We consider \eqref{empgrad:epmgraddef} with $\hmu = \sum_{i = 1}^n y_i \bs{x}_i$ and $\phi(t) \in \phiset$.   Let $j \in [m]$ be a fixed index and  $\cJ$ be any function of $\{ (\bs{x}_i,y_i)\}_{i = 1}^n$ such that $\abs{\cJ} \leq M$ almost surely.   For  $d \geq 4M$, 
\eq{
n \geq 24 M \log^2 \left(  \frac{35 d n}{M} \right)  ~~ \text{and} ~~ M \geq 24 (r + \log(12/\delta)),
}
the intersection the following events hold with at least probability $1 - \delta$,
\begin{enumerate}
\item  $\norm*{ \left.  g\left(\Wz_{j*}  , b \right)  \right \pJ - \rhc_2(b)  \bs{H} \cJJ \Wz_{j*}  }_2  \leq  K \left(   \sqrt{\frac{M  \log^2 \left(  \tfrac{35 d n}{M} \right)\log^{2C_2} \left( \tfrac{18 n}{\delta}  \right) }{n}} +   \tfrac{(r + \log(4/\delta))}{M} \right)$
\item $\norm*{ \left. g\left(\Wz_{j*}  , b \right)  \right \pJ  } \leq   K \left( \abs{\rhc_2(b)} \sqrt{  \tfrac{ r + \log(4/\delta)}{M} } +   \sqrt{\frac{M  \log^2 \left(  \tfrac{35 d n}{M} \right)  \log^{2C_2} \left( \frac{18 n}{\delta}  \right) }{n}}   \right).$
\end{enumerate}
where $K > 0$ is a constant depending on $(C_1, C_2, r, \Delta, \cgp)$.
\end{proposition}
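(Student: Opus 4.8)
The plan is to mirror the single-index argument of Proposition~\ref{prop:emprgradientconcsingle}, with the scalar residual $\zeta_{k^\star-1}$ replaced by its tensorial analogue and with the extra bookkeeping forced by the ``$\overline y$'' correction and by the fact that, under Assumption~\ref{asmp:multi}, the information exponent equals $2$. Fix $j\in[m]$, write $\theta\coloneqq(\Wz_{j*},b)$, and recall that $\Wz_{j*}=\bs{W}_{j*}\vert_\cJ/\norm{\bs{W}_{j*}\vert_\cJ}_2$ with $\bs{W}_{j*}\sim\cN(0,\ide{d})$ independent of the data, while $\cJ$ is a data-measurable set with $\abs{\cJ}\le M$. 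First I would use the very definition of the process $\bs{T}_\theta$ to split
\eq{
\left. g(\Wz_{j*},b)\right\pJ \;=\; \bs{T}_\theta\pJ \;+\; \E_{(\bs{x},y)}\!\left[\,\overline y\,\bs{x}\,\phi'(\inner{\Wz_{j*}}{\bs{x}}+b)\,\right]\!\pJ ,
}
so the statement reduces to (i) controlling the fluctuation $\bs{T}_\theta\pJ$ via the uniform bound already established in Lemma~\ref{lem:supboundfinal}, and (ii) identifying the informative piece $\rhc_2(b)\,\bs{H}\cJJ\Wz_{j*}$ inside the population term and bounding the remainder.

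For step (ii) I would invoke Lemma~\ref{lem:higherordertermsJ} (applicable since $\Wz_{j*}$ is supported on $\cJ$), which writes the population term as $\rhc_2(b)\,\bs{H}\cJJ\Wz_{j*}$ plus the tail $\bs{V}\vert_\cJ\sum_{k\ge2}\tfrac{\rhc_{k+1}(b)}{k!}\bm{\grad T_{k+1}}[(\bs{V}^\top\Wz_{j*})^{\otimes k}]+\Wz_{j*}\sum_{k\ge2}\tfrac{\rhc_{k+2}(b)}{k!}\bm{T_k}[(\bs{V}^\top\Wz_{j*})^{\otimes k}]$; since $\norm{\bs{V}\vert_\cJ}_2\le1$, repeating the Cauchy--Schwarz/Parseval estimate from the proof of Proposition~\ref{prop:residualbound} with $N=1$ bounds this tail by $(1+\sqrt3)\,\cgp\,\tfrac{\norm{\bs{V}^\top\Wz_{j*}}_2^2}{1-\norm{\bs{V}^\top\Wz_{j*}}_2^2}$. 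It then remains to control $\norm{\bs{V}^\top\Wz_{j*}}_2$, where I would use that $\bs{W}_{j*}$ is independent of the data: conditionally on $\cJ$ one may fix a set of size $\le M$ and note that $\bs{V}^\top\bs{W}_{j*}\vert_\cJ=(\bs{V}\vert_\cJ)^\top\bs{W}_{j*}$ is Gaussian with covariance of operator norm $\le1$ living in $\R^r$, while $\norm{\bs{W}_{j*}\vert_\cJ}_2^2$ is $\chi^2$ with $\abs{\cJ}$ degrees of freedom. Exactly as in the single-index proof (Corollaries~\ref{cor:laurentmassart1} and \ref{cor:laurentmassart2}), on an event of probability $\ge1-\delta/2$ one has $\norm{\bs{W}_{j*}\vert_\cJ}_2^2\ge M/2$ and $\norm{(\bs{V}\vert_\cJ)^\top\bs{W}_{j*}}_2^2\le3(r+\log(4/\delta))$, whence for $M\ge24(r+\log(12/\delta))$, $\norm{\bs{V}^\top\Wz_{j*}}_2^2\le 6(r+\log(4/\delta))/M\le\tfrac12$. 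Consequently the tail is $O( (r+\log(4/\delta))/M )$ with constant depending on $\cgp$, and by the same estimate $\norm{\bs{H}\cJJ\Wz_{j*}}_2=\norm{(\bs{V}\vert_\cJ)\bs{D}\,\bs{V}^\top\Wz_{j*}}_2\le\norm{\bs{D}}_2\norm{\bs{V}^\top\Wz_{j*}}_2=O( \sqrt{(r+\log(4/\delta))/M} )$ using $\norm{\bs{D}}_2=\norm{\bm{T_2}}_F\le\sqrt2$; this is the source of the $\sqrt{(r+\log(4/\delta))/M}$ term in item~2. The quadratic (rather than linear) dependence of the tail on $\norm{\bs{V}^\top\Wz_{j*}}_2$ is exactly why item~1's bias reads $(r+\log(4/\delta))/M$: it reflects the information exponent $k^\star=2$.

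For step (i) I would apply Lemma~\ref{lem:supboundfinal} with $M^\prime=M$ (its hypotheses $d\ge4M$, $n\ge24M\log^2(35dn/M)$, $M\ge\log(6/\delta)$ all follow from the assumptions here, and $n\ge d^*$ then holds via Proposition~\ref{prop:vcdim}): on an event of probability $\ge1-\delta/2$,
\eq{
\sup_{\substack{\cJ^\prime\subseteq[d]\\ \abs{\cJ^\prime}=M}}\;\sup_{\pspace}\norm{\bs{T}_\theta\vert_{\cJ^\prime}}_2 \;\le\; K\log^{C_2}(36n/\delta)\sqrt{\frac{M\log^2(35dn/M)}{n}} .
}
Because this supremum runs over every $M$-element support, it applies to the data-dependent $\cJ$ (after padding it to size exactly $M$ if necessary) and to $\bs{w}=\Wz_{j*}\in S^{d-1}_M$, with no additional union-bound cost. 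Intersecting the two events and applying the triangle inequality to the displayed decomposition gives item~1; item~2 then follows from item~1 together with $\norm{g(\Wz_{j*},b)\pJ}_2\le\norm{g(\Wz_{j*},b)\pJ-\rhc_2(b)\bs{H}\cJJ\Wz_{j*}}_2+\abs{\rhc_2(b)}\norm{\bs{H}\cJJ\Wz_{j*}}_2$ and the bound on $\norm{\bs{H}\cJJ\Wz_{j*}}_2$ above (using $\abs{\rhc_2(b)}\le(2\pi)^{-1/2}$). Numerical discrepancies such as $36$ versus $18$ inside the logarithm, $\norm{\bs{D}}_2\le\sqrt2$, and absorbing $(r+\log(4/\delta))/M\le\sqrt{(r+\log(4/\delta))/M}$ into the informative term are all folded into $K$.

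The genuinely hard part is not in this proof at all: it is the uniform concentration of $\bs{T}_\theta$ simultaneously over all $M$-sparse supports $\cJ$ (which appears both in the residual $\tilde y_i=y_i-\inner{\hmu\vert_\cJ}{\bs{x}_i}$ and in the restriction) and over all $(\bs{w},b)\in S^{d-1}_M\times\R$, which is packaged in Lemma~\ref{lem:supboundfinal}. What requires care \emph{here} is purely the dependence structure: $\Wz_{j*}$ is a deterministic function of the data-dependent set $\cJ$ and of a data-independent Gaussian, so one must condition on the data before invoking Laurent--Massart, while the $\bs{T}_\theta$ estimate must be taken uniform in $\cJ$ so that plugging in the random $\cJ$ costs nothing; and one must keep track of which constants depend on $(C_1,C_2,r,\Delta,\cgp)$ so as not to hide a $b$- or $\delta$-dependence.
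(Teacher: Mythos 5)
Your proposal follows essentially the same route as the paper's own proof: decompose $g(\Wz_{j*},b)\pJ$ into the fluctuation $\bs{T}_\theta\pJ$ (handled uniformly via Lemma~\ref{lem:supboundfinal}) and the population term, peel off $\rhc_2(b)\bs{H}\cJJ\Wz_{j*}$ via Lemma~\ref{lem:higherordertermsJ}, bound the Hermite tail with the $N=1$ estimate from Proposition~\ref{prop:residualbound}, control $\norm{\bs{V}^\top\Wz_{j*}}_2$ with Corollaries~\ref{cor:laurentmassart1}--\ref{cor:laurentmassart2} after conditioning on the data, and derive item~2 by the triangle inequality together with the bound on $\norm{\bs{H}\cJJ\Wz_{j*}}_2$. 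The only differences are cosmetic constants (e.g., $(1+\sqrt3)$ versus $2\sqrt3$, the logarithm's argument, and the operator-norm bound on $\bs{D}$ or $\bs{H}$), all of which are correctly absorbed into $K$; your remarks on conditioning and on why the uniform-in-$\cJ$ bound makes the data-dependent restriction free match the paper's reasoning.
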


\begin{proof}
We have that
\eq{
\norm*{ \left. g\left(\Wz_{j*}  , b \right) \right \pJ  -  \rhc_2(b)  \bs{H} \cJJ \Wz_{j*}  }_2    
& \leq   \norm*{ \left. g\left(\Wz_{j*}  , b \right) \right \pJ -  \left. \E  \left[   \overline y  \bs{x} \phi^\prime \left(  \inner{ \Wz_{j*}  }{\bs{x}} + b \right)   \right] \right \pJ }_2  \\
& +  \norm*{  \left. \E \left[  \overline y  \bs{x} \phi^\prime \left(  \inner{\Wz_{j*} }{\bs{x}} + b \right)   \right] \right \vert_{\cJ}         - \rhc_2(b)  \bs{H} \cJJ  \Wz_{j*} }_2   \\ 
& \labelrel\leq{empgradmulti:ineqq0}    \sup_{\substack{\cJ \subseteq [d] \\ \abs{\cJ} = M}} \sup_{ \pspace }  \norm{\bs{T}_\theta \vert_{ \cJ} }_2  +   \norm*{ \zeta_1 \vert_{\cJ}  }_2  \\
&  \labelrel\leq{empgradmulti:ineqq1}  \sup_{\substack{\cJ \subseteq [d] \\ \abs{\cJ} = M}} \sup_{ \pspace}  \norm{\bs{T}_\theta \vert_{ \cJ} }_2  +  2 \sqrt{3}  \cgp  \frac{\norm{\bs{V}^\top \Wz_{j*} }_2^2}{1 - \norm{\bs{V}^\top \Wz_{j*} }_2^2}. 
}
where we used Lemma \ref{lem:higherordertermsJ} in \eqref{empgradmulti:ineqq0} and Proposition \ref{prop:residualbound} in \eqref{empgradmulti:ineqq1}.

We will first bound the second term.  We recall that $\Wz_{j*}  = \frac{\bs{W}_{j*} \vert_{\cJ}}{\norm{\bs{W}_{j*} \vert_{\cJ}}_2}$ where $\bs{W}_{j*} \sim \cN(0, \ide{d})$ and it is independent of $\{ (\bs{x}_i, y_i) \}_{i = 1}^n$.  Since $\cJ$ is independent of $\bs{W}_{j*}$,  without loss of generality,  we can fix a $\cJ$ with $\abs{\cJ} = M$.    By using  Corollaries \ref{cor:laurentmassart1} and \ref{cor:laurentmassart2},  the intersection of    $(\rnu{1}) ~ \sum_{i \in \cJ} \bs{W}_{ij}^2 \geq \frac{M}{2},$ $(\rnu{2}) ~ \norm{\bs{V} ^\top \bs{W}_{j*} \vert_{\cJ}}_2^2 \leq  3 (r + \log(4/\delta) )$  holds with probability at least $1 - \delta/2$.
Within that event,  for $M \geq 24 (r + \log(12/\delta))$, we have
\eq{
2 \sqrt{3}  \cgp  \frac{\norm{\bs{V}^\top \Wz_{j*} }_2^2}{1 - \norm{\bs{V}^\top \Wz_{j*} }_2^2} \leq 16 \sqrt{3} \cgp \frac{ (r + \log(4/\delta))}{M}. \label{empgrad:bias}
}
Therefore, by Lemma \ref{lem:supboundfinal}, the first item follows. For the second item, we observe that
\eq{
\norm*{  \left. g\left(\Wz_{j*}  , b \right)  \right \pJ }_2 \leq \rhc_2(b) \norm*{\bs{H} \cJJ \Wz_{j*}}_2 + \norm*{  \left. g\left(\Wz_{j*}  , b \right)   \right \pJ -  \rhc_2(b) \bs{H} \cJJ \Wz_{j*}   }_2 
}
We have that
\eq{
 \norm{\bs{H} \cJJ \Wz_{j*}}_2 \leq  \rhc_2(b)  \frac{ \norm*{\bs{V}^\top \Wz_{j*}}_2 }{\norm*{\Wz_{j*} \vert_{\cJ}}_2} \leq   \rhc_2(b) \sqrt{ \frac{6 (r + \log(4/\delta) )}{M} }. \label{empgrad:normbound}
}
where we used $\sigma_1(\bs{H}) \leq 1$ in the first step, and the event used for \eqref{empgrad:bias}.  Hence by the first part of the statement, the second item also follows.
\end{proof}

\section{Guarantee for \texttt{PruneNetwork}}
We recall the following notation: For $\bs{a},  \bs{b} \in \R^{2m}$ and $\bs{W} \in \R^{2m \times d}$,
\eq{
& R_n^{\pm}(\bs{a},  \te_l , \bs{b}) \coloneqq \frac{1}{2n} \sum_{i = 1}^n \big(y_i - \hat y^{\pm}\left(\bs{x}_i; (\bs{a},   \te_l , \bs{b} )\right) \big)  \\
& \hat y^{\pm} \left(\bs{x}; ( \bs{a},   \te_l , \bs{b} ) \right)  \coloneqq    \sum_{j = 1}^{2m} a_j \underbrace{ \left(  \tfrac{\phi(\inner{\te_l}{\bs{x}} + \bs{b}_j) \pm \phi(- \inner{\te_l}{\bs{x}} + b_j) }{2} \right)}_{\phi_{\pm}(\inner{\te_l}{\bs{x}} ; b_j)} 
}
and  the gradients of the empirical/population risks are
\eq{
& \grad_{j} R_n^{\pm}(\bs{a},  \te_l , \bs{b}) =   \frac{ -a_j}{n} \sum_{i = 1}^n \big( y_i - \hat y^{\pm} \left(\bs{x}_i; (\bs{a},  \te_l , \bs{b}) \right) \big)  \phi_{\pm}^\prime( \inner{\te_l}{\bs{x}_i} ; b_j ) \bs{x}_i \\
& \grad_{j} R^{\pm}(\bs{a},  \te_l , \bs{b}) = - a_j  \E_{(\bs{x}, y)} \left[ \big( y - \hat y^{\pm} \left(\bs{x}; (\bs{a},  \te_l , \bs{b})  \right) \big)  \phi_{\pm}^\prime \left( \inner{\te_l}{\bs{x}} ; b_j \right) \bs{x}  \right].
}
Finally,  we recall that
\eq{
& \norm{ \grad R_n(\bs{a},  \te_l , \bs{b}) }_F^2 = \sum_{j = 1}^m \norm{  \grad_{j} R_n(\bs{a},  \te_l , \bs{b}) }_2^2  ~~ \text{and} ~~ \norm{ \grad R_n^{\pm}(\bs{a},  \te_l , \bs{b}) }_F^2 = \sum_{j = 1}^m \norm{  \grad_{j} R_n^{\pm}(\bs{a},  \te_l, \bs{b}) }_2^2.
}
\subsection{Auxiliary Results}
We have the following statement:
\begin{proposition}
\label{prop:pruningalgaux}
Let  $\mrhc^2_k \coloneqq \frac{1}{m} \sum_{j = 1}^m \rhc^2_{k}(\bzj)$.
For any $\mathcal{J} \subseteq [d]$, we have
\begin{enumerate}[leftmargin=*]
\item For the single-index setting and $k^{\star} > 1$,
\eq{
 \left[  \left( \frac{  \mrhc_{k^{\star}} \abs{ \ghc_{k^{\star}}}}{(k^{\star} - 1)!} \right)^\frac{2}{k^{\star} - 1}    -  8  \left(  \frac{c \sqrt{2} \cgp}{1 - c^2} \right)^{\frac{2}{k^{\star} -1}}   \right] \norm{\bs{v} \vert_{\cJ^c}}_2^2  \leq   \frac{m^{\frac{- 1}{k^{\star} - 1}}}{c^2}      \sum_{i \in \mathcal{J}^c} \norm{ \grad R^{\pm}(\az, \te_i,  \bz)}_F^{\frac{2}{k^{\star} - 1}}.
}where the statement with  $\grad R^{+}$ holds for even $k^\star$, and  $\grad R^{-}$ holds for odd $k^\star$.
\item For the multi-index setting,  we have
\eq{
 \left[     \mrhc_{2}^2  \sigma^2_r(\bs{H})   -  16 \left(  \frac{c \cgp}{1 - c^2} \right)^{2}   \right] \norm{\bs{V} \vert_{\cJ^c}}_F^2 \leq   \frac{ m^{-1}}{c^2} \sum_{i \in \mathcal{J}^c} \norm{ \grad R^{+}( \az, \te_i, \bz)}_F^2.
}
\end{enumerate}
\end{proposition}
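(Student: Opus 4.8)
The plan is to reduce the claim about the full gradient vectors to one about their projections onto $\mathrm{col}(\bs V)$ (respectively onto $\bs v$ in the single-index case), where the leading Hermite term is transparent and the higher-order contributions become genuinely higher order in the shrinkage constant $c$. First, at the symmetric initialization the network output vanishes identically, $\hat y^{\pm}(\bs x;(\az,\te_i,\bz))=0$, so $\grad_j R^{\pm}(\az,\te_i,\bz)=-\azj\,\E[\gt(\bs V^\top\bs x)\,\phi_{\pm}^{\prime}(\inner{\te_i}{\bs x};\bzj)\,\bs x]$ with $|\azj|=1$ and the noise term vanishing by mean-zero and independence. Since $\bs V\bs V^\top$ is an orthogonal projection and $\bs V$ is orthonormal, $\norm{\grad_j R^{\pm}(\az,\te_i,\bz)}_2\ge\norm{\bs V^\top\grad_j R^{\pm}(\az,\te_i,\bz)}_2$, and it suffices to lower bound the right-hand side.

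For item~2 I would left-multiply Corollary~\ref{cor:hermiteevenodd}(i) by $\bs V^\top$ and use the data-augmentation convention $\bs V_{d*}=0$, which gives $\bs V^\top\te_i=c\,\bs V_{i*}^\top$. Since $\bm{T_0}=0$ and $\bm{T_2}=\bs D$ (the degree-$2$ Hermite coefficient of $\gt$, equal to $\bs D$ by \eqref{eq:multiindexas}), the $k=1$ term of $\bs V^\top\E[\cdots]$ is $c\,\rhc_2(\bzj)\,\bm{\grad T_2}[\bs V_{i*}^\top]=c\,\rhc_2(\bzj)\,\bs D\,\bs V_{i*}^\top$, whose norm is at least $c\,\rhc_2(\bzj)\,\sigma_r(\bs H)\,\norm{\bs V_{i*}}_2$. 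The point of projecting first is that every other term of $\bs V^\top\E[\cdots]$ carries at least three powers of $\bs V^\top\te_i$: the terms $\bm{\grad T_{k+1}}[(\bs V^\top\te_i)^{\otimes k}]$ with odd $k\ge 3$ directly, and the bias terms $\bm{T_k}[(\bs V^\top\te_i)^{\otimes k}]$ with even $k\ge 2$ because after projection they are multiplied by the extra factor $\bs V^\top\te_i$. Writing $\tilde\zeta_{ij}$ for the sum of all these terms and running the Cauchy--Schwarz/Parseval estimates from the proof of Proposition~\ref{prop:residualbound} term by term then yields $\norm{\tilde\zeta_{ij}}_2\le C\,\cgp\,\norm{\bs V^\top\te_i}_2^{3}=C\,\cgp\,c^{3}\,\norm{\bs V_{i*}}_2^{3}$ for an absolute constant $C$; this is $O(c^3)$, whereas the unprojected residual $\zeta_1^{+}$ is only $O(c^2)$ (its size being set by the $\te_i$-aligned bias term).

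It remains to assemble the pieces. The reverse triangle inequality, together with $\norm{\bs V_{i*}}_2\le1$ and $\sigma_r(\bs H)\le\sigma_1(\bs H)\le\sqrt2$, gives $\norm{\grad_j R^{+}(\az,\te_i,\bz)}_2\ge c\,\norm{\bs V_{i*}}_2\,[\rhc_2(\bzj)\sigma_r(\bs H)-C\cgp c^{2}]_{+}$; squaring, summing over $j\in[m]$ using $\sum_j\rhc_2^2(\bzj)=m\mrhc_2^2$, the elementary bound $[x-\delta]_{+}^{2}\ge x^2-2x\delta$ and $\tfrac1m\sum_j\rhc_2(\bzj)\le1/\sqrt{2\pi}$, and then summing over $i\in\cJ^{c}$ using $\sum_{i\in\cJ^{c}}\norm{\bs V_{i*}}_2^2=\norm{\bs V\vert_{\cJ^{c}}}_F^2$, produces $\sum_{i\in\cJ^{c}}\norm{\grad R^{+}(\az,\te_i,\bz)}_F^2\ge c^2m\,[\mrhc_2^2\sigma_r^2(\bs H)-C'\cgp c^{2}]\,\norm{\bs V\vert_{\cJ^{c}}}_F^2$ for an absolute constant $C'$. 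Because $\E_{\bs z}[\gt(\bs z)^2]=1$, the Gaussian Poincar\'e inequality gives $\cgp\ge1$, so $C'\cgp\le C'\cgp^2\le 16\,\cgp^2/(1-c^2)^2$ once $16\ge C'$ — which is arranged — and dividing through by $c^2m$ yields item~2. For item~1 one argues identically after projecting onto $\bs v$: the degree-$(k^\star-1)$ term $\tfrac{\ghc_{k^\star}\rhc_{k^\star}(\bzj)}{(k^\star-1)!}\inner{\bs v}{\te_i}^{k^\star-1}\bs v$ — which lies in the even part of the expansion when $k^\star$ is even and in the odd part when $k^\star$ is odd, explaining the $\grad R^{+}$/$\grad R^{-}$ dichotomy — is the leading term, the projected residual is $\lesssim\cgp\,|\inner{\bs v}{\te_i}|^{k^\star}/(1-\inner{\bs v}{\te_i}^2)$ by Corollary~\ref{cor:residualevenodd} and the same projection trick, and one takes $(k^\star-1)$-th roots after summing (using subadditivity $(x-y)^{1/(k^\star-1)}\ge x^{1/(k^\star-1)}-y^{1/(k^\star-1)}$ and the power-mean inequality) to recover the stated power $\tfrac{2}{k^\star-1}$; again $\cgp\ge1$ absorbs the constants into the factor $8$.

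The only genuinely non-mechanical step, and the expected main obstacle, is the residual estimate. The first nonzero bias correction $\bs w\sum_{k\ \mathrm{even}}\tfrac{\rhc_{k+2}(b)}{k!}\bm{T_k}[(\bs V^\top\bs w)^{\otimes k}]$ is only $O(c^2)$-small, so before projection it is of the same order as, and could cancel, the signal (this is exactly the second technical difficulty illustrated in Section~\ref{sec:technical}); recognizing that it lies almost entirely along $\te_i$, whose projection onto $\mathrm{col}(\bs V)$ has norm only $c\norm{\bs V_{i*}}_2$, is what buys the extra factor of $c$ and makes the $O(c^2)$ subtraction in the statement large enough to absorb everything. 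The rest is elementary, apart from the $(k^\star-1)$-th-power bookkeeping in the single-index case.
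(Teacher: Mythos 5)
Your proof is correct, but it takes a genuinely different algebraic route from the paper's. The paper works directly with the unprojected gradient: it writes $\|\mathrm{signal}\|_2^2 \le 2\|\zeta^{\pm}_{k^\star-1}\|_2^2 + 2\|\grad_j R^{\pm}\|_2^2$ (using the elementary $a^2\le 2(a-b)^2+2b^2$), applies Corollary~\ref{cor:residualevenodd} to bound $\|\zeta^{\pm}_{k^\star-1}\|_2$ by $(1+\sqrt{k^\star+1})\cgp\,\|\bs V^\top\te_i\|_2^{k^\star}/(1-\|\bs V^\top\te_i\|_2^2)$, averages over $j$, and then takes the $\tfrac1{k^\star-1}$-th power (via Proposition~\ref{prop:lqtri}) before subtracting. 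The key observation that makes this work \emph{without} projecting is that $\|\bs V^\top\te_i\|_2 = c\norm{\bs V_{i*}}_2$: the unprojected residual is $O(c^{k^\star}\norm{\bs V_{i*}}_2^{k^\star})$ while the informative term is $O(c^{k^\star-1}\norm{\bs V_{i*}}_2^{k^\star-1})$, so residual/signal is already $O(c\norm{\bs V_{i*}}_2)$ --- small --- and after squaring and using $\norm{\bs V_{i*}}_2\le 1$ the factor $\norm{\bs V_{i*}}_2^2$ pops out cleanly as a common factor. Your projection onto $\mathrm{col}(\bs V)$ earns an additional factor of $c\norm{\bs V_{i*}}_2$ in the residual (turning $O(c^{k^\star})$ into $O(c^{k^\star+1})$ in the SI case, $O(c^2)$ into $O(c^3)$ in the MI case), and your use of the reverse triangle inequality plus $[x-\delta]_+^2\ge x^2-2x\delta$ in place of the $a^2\le 2(a-b)^2+2b^2$ step is a cosmetic rearrangement. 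Both are sound; the projection yields a strictly sharper bracket, and you correctly use $\cgp\ge 1$ (Gaussian Poincar\'e) and $\sigma_1(\bs H)\le\sqrt 2$ to absorb your constants into the stated $8$ and $16$.

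One small correction to your motivating discussion: you frame the projection as essentially necessary because ``before projection [the bias term] is of the same order as, and could cancel, the signal.'' That is the danger for the \emph{unshifted} basis $\bs e_i$ (the second technical difficulty), but after shifting to $\te_i$ the bias term is $O(c^2\norm{\bs V_{i*}}_2^2)$ while the signal is $O(c\norm{\bs V_{i*}}_2)$ --- already one factor of $c\norm{\bs V_{i*}}_2$ smaller, with no risk of cancellation for small $c$. So the shift already resolves the difficulty; the projection is an extra sharpening, not the step that rescues the argument. Also be careful with the normalization by $2m$ rather than $m$ when passing from $\sum_{j=1}^{2m}$ to $\mrhc_2^2=\tfrac1m\sum_{j=1}^m\rhc_2^2(\bzj)$ --- the symmetric initialization gives $\sum_{j=1}^{2m}\rhc_2^2(\bzj)=2m\mrhc_2^2$, which matches the factor-of-2 the paper cancels against the $a^2\le 2(a-b)^2+2b^2$ step; your route produces a spare factor of $2$ that you should track to land exactly on the stated constants.
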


\begin{proof}
We first observe that by \eqref{eq:syminit}, we have $\hat y^{\pm} \big(\bs{x}; ( \az, \te_i , \bz )  \big)= 0$. 
Therefore,
\eq{
\grad_{j} R^{\pm}( \az, \te_i,  \bz  ) =  - \azj \E_{(\bs{x}, y)} \left[ \sigma^*(\bs{V}^\top \bs{x}) \phi_{\pm}^\prime( \inner{\te_i}{\bs{x}} ; \bzj ) \bs{x}  \right]. \label{prunaux:popgrad}
}
Moreover, we observe that by \eqref{eq:syminit},   $\mrhc^2_k =  \frac{1}{m} \sum_{j = 1}^{m} \rhc^2_{k}(\bzj)$. 
\begin{enumerate}[leftmargin = *]
\item We will prove this item only for even $k^\star > 1$.  The proof for the odd case is identical when  $(+)$ signs are replaced with $(-).$ We have
\eq{
\Big(  \frac{\rhc_{k^{\star}}(\bzj) \ghc_{k^{\star}} }{(k^{\star} - 1)!} ( c \bs{v}_i) ^{k^{\star} - 1} \Big)^2 
& \labelrel\leq{prunaux:ineqq0}  2  \norm*{   \E_{(\bs{x}, y)} \left[ \sigma^*(\inner{\bs{v}}{\bs{x}}) \phi_{+}^\prime( \inner{\te_l}{\bs{x}} ; \bzj ) \bs{x}  \right] -   \frac{\rhc_{k^{\star}}(\bzj) \ghc_{k^{\star}} \inner{\bs{v}}{\te_i}^{k^{\star} - 1} }{(k^{\star} - 1)!}  \bs{v} }_2^2  \\
& + 2    \norm*{    \grad_{j} R^{+}(\az, \te_i , \bz) }^2_2 \\
& \labelrel\leq{prunaux:ineqq1}  2 (1 + \sqrt{k^{\star} + 1})^2 \cgp^2 \frac{c^{2k^{\star}} \abs{\bs{v}_i}^{2k^{\star}}}{(1 - c^2)^2}  + 2   \norm*{    \grad_{j} R^{-}(\az, \te_i , \bz ) }^2_2
}
where  \eqref{prunaux:ineqq0}  follows from \eqref{prunaux:popgrad},      \eqref{prunaux:ineqq1}  follows from Corollary \ref{cor:residualevenodd}. By summing each side over $j \in [2 m]$ and dividing by $1/2m$, we get
\eq{
\left(  \frac{\mrhc_{k^{\star}}  \ghc_{k^{\star}} }{(k^{\star} - 1)!} c^{k^{\star} - 1} \bs{v}_i^{k^{\star} - 1} \right)^2 \leq  2 (1 + \sqrt{k^{\star} + 1})^2 \cgp^2 \frac{c^{2k^{\star}} \abs{\bs{v}_i}^{2k^{\star}}}{(1 - c^2)^2}  +  \frac{2}{2m}  \norm*{    \grad R^{+}(\az, \te_i ,  \bz ) }^2_F.
}
By taking $\tfrac{1}{(k^{\star} - 1)}$th power of each sides,  we get
\eq{
\left(  \frac{\mrhc_{k^{\star}}  \abs{ \ghc_{k^{\star}} } }{(k^{\star} - 1)!} \right)^{\frac{2}{k^{\star} - 1}} c^2 \bs{v}_i^2 
&  \labelrel\leq{prunaux:ineqq2}   2^{\frac{1}{k^{\star} - 1}}   (1 + \sqrt{k^{\star} + 1})^{ \frac{2}{k^{\star} - 1} }   \left(  \frac{c \cgp \abs{\bs{v}_i} }{1 - c^2} \right)^{\frac{2}{k^{\star} - 1}} \!\!\! c^2 \bs{v}_i^2  +    m^{\frac{- 1}{k^{\star} - 1}}      \norm*{    \grad R^{+}(\az, \te_i , \bz ) }^\frac{2}{k^{\star} - 1}_F \\
& \labelrel\leq{prunaux:ineqq25}  2^{\frac{1}{k^{\star} - 1}}  8   \left( \cgp \frac{c  }{1 - c^2} \right)^{\frac{2}{k^{\star} - 1}} c^2 \bs{v}_i^2 +   m^{\frac{- 1}{k^{\star} - 1}}       \norm*{    \grad R^{+}(\az, \te_i , \bz ) }^\frac{2}{k^{\star} - 1}_F.
} 
where \eqref{prunaux:ineqq2} follows from Proposition \ref{prop:lqtri} and \eqref{prunaux:ineqq25} holds since $\abs{\bs{v}_i} \leq 1$ and  $(1 + \sqrt{k^{\star} + 1})^{ \frac{2}{k^{\star} - 1} }$  is decreasing for $k^{\star} \geq 2$.
Then, we get
\eq{
\left[ \left(  \frac{\mrhc_{k^{\star}}  \abs{ \ghc_{k^{\star}} } }{(k^{\star} - 1)!} \right)^{\frac{2}{k^{\star} - 1}}  -    2^{\frac{1}{k^{\star} - 1}}  8   \left( \cgp \frac{c  }{1 - c^2} \right)^{\frac{2}{k^{\star} - 1}}  \right] \bs{v}_i^2 \leq  \frac{  m^{\frac{- 1}{k^{\star} - 1}} }{c^2}     \norm*{    \grad R^{+}(\az, \te_i ,\bz ) }^\frac{2}{k^{\star} - 1}_F.
}
By summing each sides over $i \in \cJ^c$, we have the statement.
\item  By observing that $c \bs{H}_{i*} = \bs{H} \te_i$, we have 
\eq{
 \norm{ \rhc_2  (\bzj)   c \bs{H}_{i*}  }_2^2  
&   \labelrel\leq{prunaux:ineqq3}  2 \norm*{ \E_{(\bs{x}, y)} \left[ \sigma^*(\bs{V}^\top \bs{x}) \phi_{+}^\prime( \inner{\te_i}{\bs{x}} ; \bzj ) \bs{x}  \right]- \rhc_2(\bzj) \bs{H} \te_i }_2^2 + 2  \norm*{\grad_j R^{+}( \az, \te_i , \bz ) }_2^2  \\
&   \labelrel\leq{prunaux:ineqq4}   16 \cgp^2 \left(  \frac{c}{1 - c^2} \right)^2 c^2 \norm{\bs{V}_{i*}}_2^2 +  2 \norm*{\grad_j R^{+}(\az, \te_i ,\bz ) }_2^2. 
}
where   \eqref{prunaux:ineqq3} follows from  \eqref{prunaux:popgrad},   and \eqref{prunaux:ineqq4} holds since Corollary \ref{cor:residualevenodd} and $\norm{\bs{V}_{i*}}_2 \leq 1$.  By summing each side over $j \in [2m]$ and dividing by $1/2m$, we get
\eq{
\mrhc_2^2 c^2  \norm{\bs{H}_{i*}}_2^2  \leq   16 \cgp^2 \left(  \frac{c}{1 - c^2} \right)^2 c^2 \norm{\bs{V}_{i*}}_2^2 +   2 (2m)^{-1} \norm*{\grad R^{+}(\az, \te_i ,\bz ) }_F^2.
}
Therefore, we have
\eq{
\left[ \mrhc_2^2 \sigma^2_r(\bs{H})  -   16  \cgp^2 \left(  \frac{c}{1 - c^2} \right)^2 \right] \norm{\bs{V}_{i*}}_2^2  \leq \frac{  m^{-1}}{c^2} \norm*{\grad R^{+}(\az, \te_i ,\bz ) }_F^2.
}
By summing each sides over $i \in \cJ^c$, we have the statement.
\end{enumerate}
\end{proof}

\begin{proposition}
\label{prop:gradl2diffconc}
For this statement, by abusing the notation,  we use $0^0 = 1$.   Let 
\eq{
&\tilde R_i^{\pm}  \coloneqq  \frac{1}{2m} \sum_{j = 1}^{2m} \norm{\widetilde \grad_j R^{\pm}_n(\az, \te_i,  \bz)  - \grad_j R^{\pm} (\az, \te_i,  \bz) }_2^2, 
} 
where  $\widetilde \grad_j R^{\pm}_n(\az, \te_i, \bz) \coloneqq  \grad_j R^{\pm}_n(\az, \te_i,  \bz) \tm$,
\eq{
\tilde M \coloneqq M \log^2 \left(  \frac{35 nd}{M} \right) ~~ \text{and} ~~ C_q \coloneqq 8q(2 - q)^{\frac{2- q}{q}}.
}
For $d \geq 4M$, $n \geq 24 \tilde M$ and $M \geq \log(2/\delta)$,  each of the following items holds with probability at least $1 - \delta$:
\begin{enumerate}[leftmargin=*]
\item For the single-index setting with $k^{\star} \geq 1$, we have  
\eq{
\max_{i \in [d]} \tilde R_i^{\pm}  \leq   \left\{\begin{aligned}\quad
&               \frac{ K \tilde M \log^{2C_2} \left(  \frac{12nd}{\delta} \right) }{n}     &   q = 0,  M \geq \norm{\bs{v}}_{0} + 2   \\
&                \frac{ K \tilde M\log^{2 C_2} \left(  \frac{12nd}{\delta} \right) }{n}  + \frac{  C_q   \left(   \tfrac{c^{(k^{\star} - 1)} \cgp }{1 - c^2} \right)^2 \abs{\bs{v}_i}^{2(k^{\star} - 1)}  \left[ \norm{\bs{v}}^2_{q} \vee  k^{\star} 2^{\frac{2}{q}} \right]  }{M^{ \frac{2}{q} - 1}}  \hspace{-4.5em}   & q  \in (0,2).
     \end{aligned}\right.
}
\item For the  multi-index setting, we have
\eq{
\max_{i \in [d]}  \tilde R_i^{\pm} \leq   \left\{\begin{aligned}\quad
&               \frac{ K \tilde M \log^{2C_2} \left(  \frac{12nd}{\delta} \right) }{n}     &   q = 0,  M \geq \norm{\bs{V}}_{2,0}+ 2   \\
&                \frac{ K \tilde M\log^{2 C_2} \left(  \frac{12nd}{\delta} \right) }{n}  + \frac{  C_q  \left( \tfrac{\cgp}{1 - c^2} \right)^2  \left( c \norm{\bs{V}_{i*}}_2 \right)^{1 \pm 1}  \left[  \norm{\bs{V}}^2_{2,q} \vee    2^{\frac{2}{q} + 1} \right]  }{M^{ \frac{2}{q} - 1}}  \hspace{-5.5em}   & q  \in (0,2).
     \end{aligned}\right.
}
\end{enumerate}
Here, $K$ is a positive constant depending on  $(C_1, C_2, r, \Delta)$.
\end{proposition}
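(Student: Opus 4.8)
Fix $i\in[d]$. For $j\in[2m]$ abbreviate $g_j\coloneqq\grad_j R_n^{\pm}(\az,\te_i,\bz)$ and $\mu_j\coloneqq\grad_j R^{\pm}(\az,\te_i,\bz)$, and let $\cJ_j\subseteq[d]$ be the support of the $M$ largest coordinates of $g_j$ in magnitude, so that $\widetilde\grad_j R_n^{\pm}(\az,\te_i,\bz)=g_j\vert_{\cJ_j}$. Since $g_j\vert_{\cJ_j}-\mu_j\vert_{\cJ_j}$ and $\mu_j\vert_{\cJ_j^c}$ have disjoint supports,
\[
\big\|g_j\vert_{\cJ_j}-\mu_j\big\|_2^2=\big\|(g_j-\mu_j)\vert_{\cJ_j}\big\|_2^2+\big\|\mu_j\vert_{\cJ_j^c}\big\|_2^2 .
\]
I would bound the first term by a uniform concentration inequality for the empirical gradient restricted to an $M$-subset, and the second by (i) replacing the data-dependent set $\cJ_j$ by the oracle top-$M$ set $T_j$ of $\mu_j$, at the cost of one more copy of the concentration term, and (ii) bounding the best $M$-term $\ell_2$-error of $\mu_j$ using the small $\ell_q$-norm of the population gradient. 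All estimates will be uniform in $i$ and in $j\in[2m]$, so averaging over $j$ and taking $\max_i$ yields the statement; the single-index vs.\ multi-index and the $+$ vs.\ $-$ cases enter only through which earlier result is invoked.

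\noindent\textbf{Concentration term.} By \eqref{eq:grad-even-activation}, $\grad_j R_n^{\pm}(\az,\te_i,\bz)$ is one half of a signed sum of $\grad_j R_n(\az,\te_i,\bz)$ and $\grad_j R_n(\az,-\te_i,\bz)$, and the same holds for $\grad_j R^{\pm}$. Using this, $\pm\te_i\in S_2^{d-1}\subseteq S_M^{d-1}$, and $\azj\in\{-1,1\}$, for every $j$ and every $\cJ$ with $\abs{\cJ}\le M$ one gets $\big\|(g_j-\mu_j)\vert_{\cJ}\big\|_2\le\sup_{\abs{\cJ'}=M}\sup_{\pspace}\big\|\bs T_\theta\vert_{\cJ'}\big\|_2$, with $\bs T_\theta$ the process of the empirical-gradient section (with $\hmu=0$, i.e.\ $\bs T_\theta=\bs Y_\theta$, in the single-index setting); since the supremum over $(\bs w,b)$ already ranges over all $(\pm\te_i,\bzj)$, this holds for every $i\in[d]$ and $j\in[2m]$ simultaneously. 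Lemma~\ref{lem:supbound} (single index) and Lemma~\ref{lem:supboundfinal} (multi index), applied with $M$ the pruning level and $M'=M\le 2M$, then yield, with probability at least $1-\delta/2$,
\[
\sup_{\abs{\cJ}\le M}\big\|(g_j-\mu_j)\vert_{\cJ}\big\|_2^2\ \le\ \frac{K\,\tilde M\,\log^{2C_2}(12nd/\delta)}{n}\qquad\text{for all }i,j .
\]

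\noindent\textbf{Truncation term.} A standard hard-thresholding comparison — along a bijection between $T_j\setminus\cJ_j$ and $\cJ_j\setminus T_j$, bound $\abs{(\mu_j)_k}$ on $T_j\setminus\cJ_j$ via $\abs{(g_j)_k}\le\abs{(g_j)_{k'}}$ (as $\cJ_j$ maximizes $\|g_j\vert_{\,\cdot\,}\|_2$ over $M$-sets) and $\abs{(\mu_j)_{k'}}\le\min_{T_j}\abs{\mu_j}$ — gives $\big\|\mu_j\vert_{\cJ_j^c}\big\|_2^2\le 4\big\|\mu_j\vert_{T_j^c}\big\|_2^2+6\sup_{\abs{\cJ}\le M}\big\|(g_j-\mu_j)\vert_{\cJ}\big\|_2^2$, whose last term is controlled above; if $\mu_j$ has fewer than $M$ nonzeros, $T_j$ is padded and $\|\mu_j\vert_{T_j^c}\|_2=0$. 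For the oracle error: since $\hat y^{\pm}=0$ at the symmetric initialization and the label noise is independent of $\bs x$, $\mu_j$ equals $-\azj$ times a population gradient whose Hermite expansion is given by Corollary~\ref{cor:hermiteevenodd}; moreover $\bs V_{d*}=0$ after the augmentation, so $\bs V^\top\te_i=c\bs V_{i*}^\top$ and $\|\bs V^\top\te_i\|_2=c\|\bs V_{i*}\|_2\le c$. Because every Hermite component of $\gt$ of degree below the parity-restricted information exponent vanishes, the terms subtracted in the definition of $\zeta_N^{\pm}$ (Corollary~\ref{cor:residualevenodd}) are all $0$, hence $\norm{\mu_j}_q=\norm{\zeta_N^{\pm}}_q$ with $N=k^\star-2$ in the single-index case and $N=0$ resp.\ $N=-1$ in the multi-index case according to which of $\phi_\pm$ is in play (using Assumption~\ref{asmp:multi} for the former). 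Proposition~\ref{prop:residualqnorm} at this $N$, together with $\norm{\te_i}_q^q\le 2$ and the standard Stechkin best-$M$-term bound $\norm{\mu_j\vert_{T_j^c}}_2\le C\,M^{1/2-1/q}\norm{\mu_j}_q$ for $q\in(0,2)$, then gives
\[
\big\|\mu_j\vert_{T_j^c}\big\|_2^2\ \lesssim\ \frac{C_q\big(\tfrac{c^{\,k^\star-1}\cgp}{1-c^2}\big)^2\abs{\bs v_i}^{2(k^\star-1)}\big(\norm{\bs v}_q^2\vee k^\star 2^{2/q}\big)}{M^{2/q-1}}
\]
in the single-index case ($\norm{\bs V}_{2,q}=\norm{\bs v}_q$, and the $k^\star$ is the surviving $(N+2)^{q/2}$ factor since $N=k^\star-2$), and the asserted multi-index bound with $\abs{\bs v_i}^{2(k^\star-1)}$ replaced by $(c\|\bs V_{i*}\|_2)^{1\pm1}$ (the exponent being $2(N+1)$ with $N+1\in\{0,1\}$); for $q=0$, Proposition~\ref{prop:residualqnorm} gives $\norm{\mu_j}_0\le\norm{\bs V}_{2,0}+2$, so this term is $0$ once $M\ge\norm{\bs V}_{2,0}+2$.

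\noindent\textbf{Assembly and the main obstacle.} Plugging the three estimates into the decomposition, averaging over $j\in[2m]$ — the oracle bound is uniform in $j$, the only $j$-dependence being through $\rhc_k(\bzj)$, which Proposition~\ref{prop:residualqnorm} handles via $\sum_k\rhc_{k+1}(\bzj)^2/k!\le 1$ — and taking $\max_i$, one obtains the claimed bound on the intersection of the two $(\pm)$ events, each of probability $\ge 1-\delta/2$; absorbing absolute constants into $K$ and $C_q$, using $n\ge 24\tilde M$ to drop lower-order terms, and union-bounding over $\pm$, gives probability $1-\delta$. The crux is the truncation step: to keep the error incurred by using the \emph{empirical} top-$M$ set $\cJ_j$ in place of the oracle set $T_j$ at the concentration order $\tilde M/n$ — rather than the larger $\sqrt{\tilde M/n}$ produced by the naive bound $\norm{\mu_j}_2\cdot\norm{g_j-\mu_j}$ — it is essential that the thresholding comparison bounds the mismatch by the \emph{restricted-support} deviation $\sup_{\abs{\cJ}\le M}\norm{(g_j-\mu_j)\vert_{\cJ}}_2^2$. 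The remaining care lies in identifying the correct $N$ (parity-restricted information exponent) for each of $\phi_{\pm}$, which is precisely what produces the exponents $2(k^\star-1)$ and $1\pm1$ and the $k^\star$ in the constant.
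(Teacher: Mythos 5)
Your proof is correct and follows essentially the same route as the paper's: decompose into a restricted-support concentration term and a best-$M$-term approximation error, control the former by Lemma~\ref{lem:supbound}/\ref{lem:supboundfinal} (with $M'\le 2M$), and bound the latter by combining the residual $\ell_q$ bound of Proposition~\ref{prop:residualqnorm} (at $N=k^\star-2$, resp.\ $N\in\{-1,0\}$) with the Stechkin-type best-$M$-term estimate (which is exactly Lemma~\ref{lem:donoho}). The only cosmetic difference is that the paper packages the hard-thresholding comparison once and for all in Lemma~\ref{lem:prunelemma} (constants $5$ and $4$, supremum over $2M$-subsets), whereas you carry out the disjoint-support split and the bijection argument inline (yielding $7$ and $4$ over $M$-subsets); both versions are valid and differ only by absolute factors absorbed into $K$. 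Your observation that the supremum over $\theta\in S_M^{d-1}\times\R$ already covers every $(\pm\te_i,\bzj)$ — so no union bound over $i$ or $j$ is needed for the concentration term — is also correct and, if carried through, would in fact give a slightly smaller logarithmic factor than the $\log^{2C_2}(12nd/\delta)$ you conservatively quote (the paper union-bounds anyway, matching the stated constant).
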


\begin{proof}
By Lemma \ref{lem:prunelemma},  we have
\eq{
 \norm{\widetilde \grad_j R^{\pm}_n(\az, \te_i, \bz)  - & \grad_j R^{\pm} (\az, \te_i, \bz) }_2^2 \\
&  \leq  5 \sup_{\substack{J \subseteq [d] \\ \abs{J} = 2M}}  \norm*{   \big(  \grad_j R^{\pm}_n(\az, \te_i, \bz)  - \grad_j R^{\pm} (\az, \te_i, \bz) \big)  \big \pJ  }_2^2 \\
& + 4  \norm{  \grad_j R^{\pm} (\az, \te_i, \bz) - \grad_j R^{\pm} (\az, \te_i, \bz) \tm  }_2^2. \label{pruneconc:eq0}
}
For any $\cJ \subseteq [d]$ with $\abs{\cJ} = 2M$,  by using Jensen's inequality,  we can show that
\eq{
& \norm*{  \big(  \grad_j R^{\pm}_n(\az, \te_i, \bz)  - \grad_j R^{\pm} (\az, \te_i, \bz) \big) \big \pJ }_2^2 
\leq     \sup_{\substack{J \subseteq [d] \\ \abs{J} = 2M}} \sup_{\pspace} \norm{\bs{Y}_{\theta} \vert_{\cJ}}_2^2. \label{pruneconc:eq1}
}
By \eqref{pruneconc:eq0} and \eqref{pruneconc:eq1}, we have for any $i \in [d]$,
\eq{
\tilde R_i^{\pm}  & \leq 5 \sup_{\substack{J \subseteq [d] \\ \abs{J} = 2M}} \sup_{\pspace} \norm{\bs{Y}_{\theta} \vert_{\cJ}}_2^2 \\
& +  \frac{4}{2m}  \sum_{j = 1}^{2m} \norm*{    \E  \left[ \sigma^*(\bs{V}^\top \bs{x}) \phi^\prime_{\pm} ( \inner{\te_i}{\bs{x}} ; \bzj )\bs{x} \right]  -   \E  \left[  \sigma^*(\bs{V}^\top \bs{x})  \phi^\prime_{\pm} ( \inner{\te_i}{\bs{x}} ; \bzj )\bs{x} \right]        \Big \vert_{\text{\scriptsize top($M$)}}  }_2^2.  
\label{pruneconc:eq2}
}If $q = 0$ and $M \geq \norm{\bs{V}}_{2,0} + 2$,  the statement follows for each item by Proposition \ref{prop:residualqnorm}.
For $q > 0$,  we have the following:
\begin{enumerate}[leftmargin = *]
\item We consider $k^{\star} \geq 1$ and even.  We have
\eq{
\tilde R_i^{\pm}  &  \labelrel\leq{pruneconc:ineqq0}  5 \sup_{\substack{J \subseteq [d] \\ \abs{J} = 2M}} \sup_{\pspace} \norm{\bs{Y}_{\theta} \vert_{\cJ}}_2^2  +   \frac{ 2q   \left( 1 - \frac{q}{2} \right)^{\frac{2-q}{q}}  M^{\frac{- 2}{q} + 1 } }{2m}   \sum_{j = 1}^{2m} \norm*{  \E_{(\bs{x}, y)} \left[ \sigma^*(\bs{V}^\top \bs{x}) \phi^\prime_{\pm} ( \inner{\te_i}{\bs{x}} ; \bzj )\bs{x} \right]  }^2_q  \\ 
& \labelrel\leq{pruneconc:ineqq1}   5 \sup_{\substack{J \subseteq [d] \\ \abs{J} = 2M}} \sup_{\pspace} \norm{\bs{Y}_{\theta} \vert_{\cJ}}_2^2 \\
& +     2q   \left( 1 - \frac{q}{2} \right)^{\frac{2-q}{q}}  M^{\frac{- 2}{q} + 1 }  4^{\frac{(q-1)}{q} \vee 0} 2^{\frac{2}{q} - 1 \vee 0}  \left(  \frac{c^{k^{\star} - 1} \cgp \abs{\bs{v}_i}^{k^{\star} - 1} }{1 - c^2} \right)^2 \left[   \norm{\bs{V}}_{2,q}^2 	+  k^{\star} \norm{\te_l}_q^2  \right] \\
&  \leq   5 \sup_{\substack{J \subseteq [d] \\ \abs{J} = 2M}} \sup_{\pspace} \norm{\bs{Y}_{\theta} \vert_{\cJ}}_2^2 +    C_q M^{\frac{- 2}{q} + 1 }  \left(  \frac{c^{k^{\star} - 1} \cgp \abs{\bs{v}_i}^{k^{\star} - 1} }{1 - c^2} \right)^2 \left[ \norm{\bs{V}}_{2,q}^2 	\vee  k^{\star} 2^{\frac{2}{q}} \right] 
}where we used Lemma \ref{lem:donoho} for \eqref{pruneconc:ineqq0},  and  Proposition \ref{prop:residualqnorm} with $N = k^{\star} - 2$ and Proposition \ref{prop:lqtri} for  \eqref{pruneconc:ineqq1}. 
By using Lemma \ref{lem:supbound} with $\tfrac{\delta}{2d}$ (for $i \in [d]$ and $(\pm)$ cases),  we have the result.
\item By using $k^{\star} = 1$ for $(-)$ and $k^{\star} = 2$ for $(+)$ in the proof of first item, one can prove this item as well.
\end{enumerate}
\end{proof}

\subsubsection{Concentration for $\mrhc_k$}
\begin{proposition}
\label{prop:alphaconc}
Let $m = \Theta(  d^{\varepsilon} )$ where $\varepsilon > 0$ is a small constant,  $Z_i \sim_{iid} \cN(0,1)$ for $i \in [m]$,  and let $\rhc_k(\cdot)$ be as in  \eqref{eq:constantterms}.  For any $u \in \N$, we have with probability at least $1 - d^{-u}$
\eq{
\frac{1}{m} \sum_{i = 1}^m \rhc_k(Z_i)^2 \geq  c_k (k- 1)!
}
for $d$ larger  than a constant depending on $(k, u, \varepsilon)$.
\end{proposition}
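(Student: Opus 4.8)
The plan is to recognize $\tfrac1m\sum_{i=1}^m\rhc_k(Z_i)^2$ as an empirical mean of i.i.d.\ bounded nonnegative random variables whose common mean is a strictly positive $k$-dependent constant, and then combine Hoeffding's inequality with the growth $m=\Theta(d^\varepsilon)$. Concretely, first set $\mu_k:=\E_{Z\sim\cN(0,1)}[\rhc_k(Z)^2]$ and define $c_k:=\mu_k/\big(2(k-1)!\big)$; with this choice the claimed inequality $\tfrac1m\sum_i\rhc_k(Z_i)^2\ge c_k(k-1)!$ is exactly $\tfrac1m\sum_i\rhc_k(Z_i)^2\ge\tfrac12\mu_k$. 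It then suffices to verify that $\mu_k$ is a finite strictly positive constant depending only on $k$. Positivity follows from the explicit formula \eqref{eq:constantterms}: $\rhc_1=\Phi$, and for $k\ge2$ we have $\rhc_k(b)=(2\pi)^{-1/2}e^{-b^2/2}H_{e_{k-2}}(-b)$, a nonzero polynomial multiple of a Gaussian and therefore nonzero outside a finite set, so $\mu_k>0$.

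Second, I would record that $\rhc_k^2$ is uniformly bounded. For $k=1$, $0\le\rhc_1(b)^2=\Phi(b)^2\le1$; for $k\ge2$, $\rhc_k(b)^2=\tfrac{1}{2\pi}e^{-b^2}H_{e_{k-2}}(b)^2$ is continuous on $\R$ and vanishes as $|b|\to\infty$ (the Gaussian factor dominates the polynomial $H_{e_{k-2}}$), so it attains a finite maximum $B_k$ depending only on $k$. Consequently the summands $X_i:=\rhc_k(Z_i)^2$ are i.i.d., lie in $[0,B_k]$, and have mean $\mu_k$.

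Finally, Hoeffding's inequality applied to $X_1,\dots,X_m$ with deviation $\mu_k/2$ yields
\[
\mpr\!\left[\tfrac1m\sum_{i=1}^m X_i<\tfrac12\mu_k\right]\le\exp\!\left(-\frac{m\mu_k^2}{2B_k^2}\right).
\]
Since $m=\Theta(d^\varepsilon)$ there is a constant $c_0>0$ with $m\ge c_0d^\varepsilon$ for $d$ large, so the right-hand side is at most $\exp\!\big(-\tfrac{c_0\mu_k^2}{2B_k^2}d^\varepsilon\big)$; as the coefficient $\tfrac{c_0\mu_k^2}{2B_k^2}$ depends only on $(k,\varepsilon)$ and $\exp(-ad^\varepsilon)\le d^{-u}$ once $d$ exceeds a threshold determined by $a,u,\varepsilon$, the probability bound $1-d^{-u}$ holds for all $d$ larger than a constant depending on $(k,u,\varepsilon)$. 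There is no genuine obstacle here; the only steps needing minor care are confirming $\mu_k>0$ and the uniform bound $\rhc_k^2\le B_k$ straight from the explicit Hermite expression in \eqref{eq:constantterms}, the rest being a routine application of Hoeffding's inequality.
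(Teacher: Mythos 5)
Your proof is correct, and it takes a genuinely different and more elementary route than the paper. The paper's proof controls the moments $\E[\rhc_k(Z)^2p]^{1/p}$ via Gaussian hypercontractivity (Lemma \ref{lem:hypercontactivity}) and then invokes the Rosenthal--Pinelis martingale inequality (Lemma \ref{lem:rosenthal}) with $p = u\log d$ to get a polynomial tail; you instead note that $\rhc_k(b)^2$ is uniformly bounded on $\R$ --- since $\rhc_1 = \Phi$ is in $[0,1]$ and for $k\ge 2$ the factor $e^{-b^2}$ in \eqref{eq:constantterms} crushes the polynomial $H_{e_{k-2}}(-b)^2$ --- and then apply Hoeffding. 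The boundedness observation is what makes this simplification possible: Hoeffding yields a sub-Gaussian bound $\exp(-c m)$ directly, while the paper's moment-based route gives a polynomial-in-$d$ bound only after choosing $p$ proportional to $\log d$. Your treatment of $c_k$ also matches the paper's intent (the paper defines $c_k$ implicitly at the very end via $\E[\rhc_k(Z)^2] \ge c_k(k-1)!$, whereas you define it up front as $\mu_k / (2(k-1)!)$ so that the claimed threshold is exactly $\tfrac{1}{2}\mu_k$). The Rosenthal/hypercontractivity machinery is arguably kept in the paper for consistency with other lemmas where the summands are unbounded; for this particular statement it is not needed, and your Hoeffding route is cleaner. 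One tiny remark: the paper's displayed bound reads ``$\le d^u$'' and ``with probability $1 - d^u$,'' which are clearly typos for $d^{-u}$ and $1 - d^{-u}$; your version is free of this slip.
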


\begin{proof}
For $p \geq 1,$ by Jensen's inequality,  we have $\E[ \abs{\rhc^2_k(Z) - \E[\rhc^2_k(Z)]}^p ]^{1/p}  \leq 2 \E[ \rhc_k^{2p}(Z)  ]^{1/p}$. 
For $k \geq 2$,
\eq{
2 \E[ \rhc_k^{2p}(Z)  ]^{1/p}  = \frac{2}{2\pi} \E[ e^{- p Z^2} H^{2p}_{e_{k - 2}}(Z)  ]^{1/p} \leq   \frac{1}{\pi} \E[H^{2p}_{e_{k - 2}}(Z)  ]^{1/p} \labelrel\leq{alp:ineqq05}  \frac{(2p - 1)^{k-2}}{\pi} (k - 2)!,
}
where we use  Lemma \ref{lem:hypercontactivity} for \eqref{alp:ineqq05}.
Therefore,  if $Y_m \coloneqq \sum_{i = 1}^m \rhc_k(Z_i)^2  - \E[\rhc^2_k(Z)]$ and $K_p \coloneqq  \tfrac{(2p - 1)^{k-2}}{\pi} (k - 2)!$, by Lemma \ref{lem:rosenthal}, we have  
\eq{
\E[Y_m^{2p}]^{1/2p} \leq C   \left[  \sqrt{p K_2} \sqrt{m} + p m^{1/2p} K_p \right]    \Rightarrow    \mpr \left[  \abs*{\frac{1}{m} Y_m} \geq e  C \left(  \sqrt{ \frac{ p K_2}{m}}  + \frac{p m^{1/2p} K_p}{m} \right)  \right] \leq e^{-p}.
}By using $p = u \log d$ and hiding all of the constants with $k$ in  $C_k,$ we have for $k \geq 1$
\eq{
\mpr \left[  \abs*{\frac{1}{m} Y_m} \geq C_k  \sqrt{ \frac{( u \log d )^{(k - 1) \vee 1}}{m}}     \right] \leq  d^u.
}
Therefore, with probability $1 - d^u,$ we have
\eq{
\frac{1}{m} \sum_{i = 1}^m \rhc_k(Z_i)^2 & \geq \E[ \rhc_k(Z)^2 ] -  C_k  \sqrt{ \frac{( u \log d )^{2(k - 1) \vee 1}}{m}}
\labelrel\geq{alp:ineqq0}  \frac{1}{2} \E[ \rhc_k(Z)^2 ].  
}
where for \eqref{alp:ineqq0}, we assume that $d$ is larger than a constant depending on $(k, u, \varepsilon)$.    Since $\E[ \rhc_k(Z)^2 ]\geq c_k (k-1)!$, where $c_k$ is some $k$-dependent constant, the statement follows.
\end{proof}

\subsection{Main Results}
\begin{lemma}[Single-Index Setting]
\label{lem:pruningalgmainsingleindex}
Consider the single index setting.  For $u \in \N$ and a small constant $\varepsilon > 0$,  let
\eq{
m = \Theta(d^\varepsilon), ~~ d \geq    d(\ghc_{k^{\star}}, k^{\star}, u, \varepsilon)   \vee 4M  ~~ \text{and} ~~ c \leq \frac{1}{\log d},  \label{prunesingle:hypers1}
}
and  $\rho_1, \rho_2 \geq 1$, where $d(\ghc_{k^{\star}}, k^{\star}, u, \varepsilon)$ is a constant depending on  $(\ghc_{k^{\star}}, k^{\star}, u, \varepsilon)$.  There exists a constant $K > 0$ that depends on $(C_1, C_2,  \Delta, k^{\star}, \cgp)$ such that if
\eq{
&  n \geq \frac{ K  M^{k^{\star}}  \log^2 \left( \frac{35nd}{M}  \right)  \log^{2 C_2} \left(18 n d^{u+1} \right)   \big( \rho_1 \log^{\rho_2} d  \big)^{k^{\star}} }{c^{2(k^{\star} - 1)}} \\[1.5ex]
&  M \geq \log(4 n  d^u) \vee \begin{cases} (\norm{\bs{v}}_{0} + 2) & q = 0 \\[1ex]
(2 - q)   \left[  \left( \norm{\bs{v}}^2_{q}  \vee k^{\star} 2^{\frac{2}{q}} \right)   \frac{q}{2} \left( \rho_1  \log^{\rho_2} d \right)^{k^{\star}} \right]^{\frac{q}{2-q}} & q \in (0,2)
 \end{cases} \label{prunesingle:hps}
}
with probability at least $1 - 4 d^{-u}$, Algorithm \ref{alg:pruningalg} returns    $\mathcal{J} \subseteq [d]$  such that
\eq{
\norm{\bs{v} \vert_{\cJ^c}}_2^2 \leq  K \frac{ \ghc^{ - \left( \frac{2}{k^{\star} - 1} \wedge 2  \right)}_{k^{\star}}}{\rho_1 \log^{\rho_2} d}.
}
\end{lemma}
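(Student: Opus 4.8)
\emph{Setup and the good event.}
The plan is to read $\norm{\bs v|_{\cJ^c}}_2^2$ off the population gradients at the shifted basis vectors $\te_i$, $i\notin\cJ$, via Proposition~\ref{prop:pruningalgaux}(1), and to control those gradients by an order-statistics argument built on the uniform concentration of Proposition~\ref{prop:gradl2diffconc}(1). Work on the event $\mathcal E$ which is the intersection of: the conclusion of Proposition~\ref{prop:alphaconc} with $k=k^\star$, so that $\mrhc_{k^\star}^2\ge c_{k^\star}(k^\star-1)!$; the uniform bound of Proposition~\ref{prop:gradl2diffconc}(1) on $\max_i\tilde R_i^\pm$ (and, as a byproduct used only for $k^\star=1$, the uniform bound of Lemma~\ref{lem:supbound} on $\sup_{|J|=2M}\sup_{\pspace}\norm{\bs Y_\theta|_J}_2$); and, when $k^\star=1$, the existence of some $j\in[m]$ with $\bzj\ge0$. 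Taking $\delta$ of order $d^{-u}$ in those statements (which is allowed thanks to $M\ge\log(4nd^u)$ in \eqref{prunesingle:hps}) and noting $2^{-m}\le d^{-u}$, a union bound gives $\mpr[\mathcal E]\ge1-4d^{-u}$, matching the statement; the logarithmic book-keeping is simply arranging that the $\delta$-dependent logs match the factor $\log^{2C_2}(18nd^{u+1})$ in \eqref{prunesingle:hps}. On $\mathcal E$, since $c\le1/\log d$ and $d$ is large depending on $(\ghc_{k^\star},k^\star)$, the bracket $\big(\tfrac{\mrhc_{k^\star}|\ghc_{k^\star}|}{(k^\star-1)!}\big)^{2/(k^\star-1)}-8\big(\tfrac{c\sqrt2\cgp}{1-c^2}\big)^{2/(k^\star-1)}$ of Proposition~\ref{prop:pruningalgaux}(1) is at least $\tfrac12\big(\tfrac{\mrhc_{k^\star}|\ghc_{k^\star}|}{(k^\star-1)!}\big)^{2/(k^\star-1)}\gtrsim|\ghc_{k^\star}|^{2/(k^\star-1)}$; this is the source of the factor $\ghc_{k^\star}^{-(2/(k^\star-1)\wedge2)}$ in the conclusion (the $\wedge2$ comes out of the $k^\star=1$ analysis).

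\emph{Case $k^\star\ge2$.}
The returned $\cJ$ contains the top-$M$ indices $\cJ_\pm$ selected in Line~\ref{algprune:prunepos} (even $k^\star$) or Line~\ref{algprune:pruneneg} (odd $k^\star$), so it suffices to bound $\norm{\bs v|_{\cJ_\pm^c}}_2^2$, and by Proposition~\ref{prop:pruningalgaux}(1) with $\cJ=\cJ_\pm$ together with the bracket bound this reduces to $\tfrac{m^{-1/(k^\star-1)}}{c^2}\sum_{i\in\cJ_\pm^c}\norm{\grad R^\pm(\az,\te_i,\bz)}_F^{2/(k^\star-1)}\lesssim|\ghc_{k^\star}|^{2/(k^\star-1)}(\rho_1\log^{\rho_2}d)^{-1}$. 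Write $g_i:=\norm{\grad R^\pm(\az,\te_i,\bz)}_F$ and $\hat g_i:=\norm{\widetilde\grad R_n^\pm(\te_i)}_F$. From Corollaries~\ref{cor:hermiteevenodd} and \ref{cor:residualevenodd} (with $N=k^\star-1$ and $\norm{\bs V^\top\te_i}_2=c|\bs v_i|\le c$) one gets, uniformly in $i$ on $\mathcal E$, the two-sided estimate $g_i=\Theta(\sqrt m)\,\mrhc_{k^\star}|\ghc_{k^\star}|(c|\bs v_i|)^{k^\star-1}(1+O(c))$; in particular $(g_i)$ is ordered like $(|\bs v_i|)$, so writing $T$ for the top-$M$ coordinates of $\bs v$ in magnitude, $g_{(M)}=\Theta(\sqrt m)\,\mrhc_{k^\star}|\ghc_{k^\star}|(c|\bs v_{(M)}|)^{k^\star-1}$. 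The Frobenius triangle inequality gives $|\hat g_i-g_i|\le\sqrt{2m\,\tilde R_i^\pm}\le\sqrt{2m}(\sqrt A+\sqrt{\tilde B_i})$, where $A$ is the neuron-independent noise term of Proposition~\ref{prop:gradl2diffconc}(1) and $\tilde B_i\lesssim(c^{k^\star-1}\cgp)^2|\bs v_i|^{2(k^\star-1)}(\rho_1\log^{\rho_2}d)^{-k^\star}$ after using the choice of $M$ to bound $[\norm{\bs v}_q^2\vee k^\star2^{2/q}]M^{-(2-q)/q}$; since $\tilde B_i$ scales like $|\bs v_i|^{2(k^\star-1)}$, $\sqrt{2m\,\tilde B_i}\le\tfrac14 g_i$ for every $i$ once $d$ is large. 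For $i\notin\cJ_\pm$ pick $i'\in\cJ_\pm\setminus T$ (if $\cJ_\pm\ne T$), so $g_{i'}\le g_{(M)}$ and $\hat g_i\le\hat g_{i'}$; plugging $|\hat g_i-g_i|\le\sqrt{2mA}+\tfrac14 g_i$ and the analogous bound at $i'$ into $g_i-|\hat g_i-g_i|\le g_{i'}+|\hat g_{i'}-g_{i'}|$ and rearranging absorbs the bias parts and yields $g_i\le\tfrac53 g_{(M)}+\tfrac83\sqrt{2mA}$ for all $i\in\cJ_\pm^c$ (trivial if $\cJ_\pm=T$). Converting back through the lower estimate on $g_i$ gives, for $i\in\cJ_\pm^c$, $\bs v_i^2\lesssim\mrhc_{k^\star}^{2/(k^\star-1)}\bs v_{(M)}^2+A^{1/(k^\star-1)}c^{-2}|\ghc_{k^\star}|^{-2/(k^\star-1)}$. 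Splitting $\cJ_\pm^c=(\cJ_\pm^c\cap T)\sqcup(\cJ_\pm^c\cap T^c)$: on $T^c$ use $\bs v_i^2\le\bs v_{(M)}^2$ and the $\ell_q$-tail bound $\norm{\bs v|_{T^c}}_2^2\le\norm{\bs v}_q^2 M^{-(2-q)/q}\lesssim(\rho_1\log^{\rho_2}d)^{-k^\star}$ ($=0$ when $q=0$, since $M\ge\norm{\bs v}_0+2$); on $T$ there are at most $M$ terms, with $M\bs v_{(M)}^2\le\norm{\bs v}_q^2 M^{-(2-q)/q}\lesssim(\rho_1\log^{\rho_2}d)^{-k^\star}$ and $\tfrac{MA^{1/(k^\star-1)}}{c^2}\lesssim|\ghc_{k^\star}|^{2/(k^\star-1)}(\rho_1\log^{\rho_2}d)^{-1}$ — the last is exactly where the factor $M^{k^\star}(\rho_1\log^{\rho_2}d)^{k^\star}c^{-2(k^\star-1)}$ in the sample-size hypothesis of \eqref{prunesingle:hps} is consumed, offsetting the extra $M$. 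Combining, and using $|\ghc_{k^\star}|\le\sqrt{k^\star!}$ so that $(\rho_1\log^{\rho_2}d)^{-k^\star}\le K|\ghc_{k^\star}|^{-2/(k^\star-1)}(\rho_1\log^{\rho_2}d)^{-1}$ for $d$ large, gives $\norm{\bs v|_{\cJ^c}}_2^2\le K|\ghc_{k^\star}|^{-2/(k^\star-1)}(\rho_1\log^{\rho_2}d)^{-1}$.

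\emph{Case $k^\star=1$.}
Here Proposition~\ref{prop:pruningalgaux} does not apply and Line~\ref{algprune:firstHermite} does the work. Pick $j$ with $\bzj\ge0$ (so $\rhc_1(\bzj)=1-\Phi(-\bzj)\ge\tfrac12$) and evaluate at $\te_d=\bs e_d$, which is orthogonal to $\mathrm{col}(\bs V)$ by the augmentation convention $\bs V_{d*}=0$; then the residual in Corollary~\ref{cor:residualevenodd} vanishes and $\grad_j R^-(\az,\te_d,\bz)=-\azj\rhc_1(\bzj)\ghc_1\bs v$ \emph{exactly}. The returned $\cJ$ contains $\cJ_0:=\supp(\widetilde\grad_j R_n^-(\te_d))$, and since $\widetilde\grad_j R_n^-(\te_d)$ is by construction supported on $\cJ_0$, $\norm{\bs v|_{\cJ_0^c}}_2=\big\|\bs v|_{\cJ_0^c}+\tfrac{1}{\azj\rhc_1(\bzj)\ghc_1}\widetilde\grad_j R_n^-(\te_d)|_{\cJ_0^c}\big\|_2\le\tfrac{1}{|\rhc_1(\bzj)\ghc_1|}\norm{-\azj\rhc_1(\bzj)\ghc_1\bs v-\widetilde\grad_j R_n^-(\te_d)}_2$. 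By Lemma~\ref{lem:prunelemma} the last squared norm is $\le5\sup_{|J|=2M}\sup_{\pspace}\norm{\bs Y_\theta|_J}_2^2+4\rhc_1^2(\bzj)\ghc_1^2\norm{\bs v|_{T^c}}_2^2$; Lemma~\ref{lem:supbound} bounds the supremum by $K\tilde M\log^{2C_2}(\cdot)/n$, and \eqref{prunesingle:hps} makes both $\tilde M/n$ and $\norm{\bs v|_{T^c}}_2^2$ at most $\lesssim(\rho_1\log^{\rho_2}d)^{-1}$ (crucially, no factor of $m$ or of $M$ enters here, since the support-recovery bound is $\ell_2$-direct). Dividing by $\rhc_1^2(\bzj)\ghc_1^2\ge\tfrac14\ghc_1^2$ and using $|\ghc_1|\le1$ gives $\norm{\bs v|_{\cJ^c}}_2^2\le\norm{\bs v|_{\cJ_0^c}}_2^2\le K\ghc_1^{-2}(\rho_1\log^{\rho_2}d)^{-1}$, which is the claim in the case $2/(k^\star-1)\wedge2=2$.

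\emph{Main obstacle.}
The crux is the middle of Case~$k^\star\ge2$: turning ``coordinate $i$ is not among the top-$M$ empirical gradient norms'' into a genuinely small population gradient $g_i$, and then summing the per-coordinate bounds without a fatal factor of $d$ (the number of discarded coordinates) or $M$ (the number of discarded \emph{relevant} coordinates). The first is handled by the order-statistics comparison together with the observation that the truncation-bias noise $\tilde B_i$ is of strictly lower order than the signal $g_i$, hence can be absorbed multiplicatively, leaving only the neuron-independent noise $A$. The second is handled by the $\ell_q$-tail decay of $\bs v$ (for the discarded irrelevant coordinates and for $M\bs v_{(M)}^2$, controlled by the lower bound on $M$) and by the extra $M^{k^\star}(\rho_1\log^{\rho_2}d)^{k^\star}c^{-2(k^\star-1)}$ in the sample-complexity hypothesis (for the residual-noise contribution of the at most $M$ relevant-but-missed coordinates). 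Everything else — the Hermite-expansion estimates of Proposition~\ref{prop:hermiteexpansion}/Corollary~\ref{cor:residualevenodd} and the logarithmic book-keeping — is routine given the results already established.
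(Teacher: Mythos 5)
Your proposal is essentially correct, and both cases rest on the same pillars the paper uses (Proposition~\ref{prop:pruningalgaux}(1), Corollary~\ref{cor:residualevenodd}, the $\ell_q$-tail decay as in Lemma~\ref{lem:donoho}, Propositions~\ref{prop:gradl2diffconc} and \ref{prop:alphaconc}), and your $k^\star=1$ argument is essentially the paper's own argument. Where you genuinely diverge is the middle of the $k^\star\ge 2$ case. The paper works entirely at the level of the vectors $\bs u=(\norm{\grad R^{\pm}(\az,\te_i,\bz)}_F/\sqrt{2m})_i$ and $\tilde{\bs u}$, invokes Lemma~\ref{lem:prunelemma} (the $\ell_q$ reverse-triangle comparison between $\tilde{\bs u}\tm$ and $\bs u$) to split $\sum_{i\in\cJ^c}\bs u_i^{2/(k^\star-1)}$ into a tail term $\norm{\bs u-\bs u\tm}$ and a noise term over at most $2M$ indices, then converts to $\norm{\bs v|_{\cJ^c}}_2^2$ via Proposition~\ref{prop:pruningalgaux}(1); only \emph{upper} bounds on $\bs u_i$ in terms of $|\bs v_i|$ are needed at the selection step. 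You instead re-derive inline a two-sided estimate $g_i=\Theta(\sqrt m)\mrhc_{k^\star}|\ghc_{k^\star}|(c|\bs v_i|)^{k^\star-1}(1+O(c))$ and run an explicit rank-swap argument: any $i\notin\cJ_\pm$ is compared against an $i'\in\cJ_\pm\setminus T$ with smaller population norm, and the empirical perturbations $\sqrt{2m\tilde R_i^{\pm}}$ are absorbed. Both routes land on the same sum-complexity hypothesis. Your route is more explicit but duplicates the lower-bound argument already packaged in Proposition~\ref{prop:pruningalgaux}(1) (so you invoke that proposition and then largely redo its internal logic), and it requires some care that the paper avoids: (i) the ``ordering'' of $(g_i)$ is only $(1+O(c))$-compatible with $(|\bs v_i|)$, so $g_{i'}\le g_{(M)}$ should really be $g_{i'}\le(1+O(c))g_{(M)}$; (ii) the factor $\mrhc_{k^\star}^{2/(k^\star-1)}$ in your per-coordinate bound $\bs v_i^2\lesssim\mrhc_{k^\star}^{2/(k^\star-1)}\bs v_{(M)}^2+\cdots$ should actually cancel (it appears on both sides when you invert the two-sided estimate), which is harmless since $\mrhc_{k^\star}$ is bounded above and below, but it is an arithmetic slip; and (iii) you state $g_i=\Theta(\cdot)$ ``uniformly in $i$'' but this is not a per-neuron bound --- the leading scalar $\rhc_{k^\star}(\bzj)$ may be near zero for some $j$; the lower bound holds only after aggregating squares over $j$ and using $\mrhc_{k^\star}^2\ge c_{k^\star}(k^\star-1)!$ from Proposition~\ref{prop:alphaconc}, which is the same aggregation step that powers Proposition~\ref{prop:pruningalgaux}(1). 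These are fixable, so there is no fatal gap, but the paper's $\ell_q$-reverse-triangle route sidesteps all three of them in one stroke.
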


\begin{proof}
We choose any $u \in \N$. We consider the intersection of the following events:
\begin{enumerate}[label = C.\arabic*]
\item \label{item:existb} There exists $j \in [m]$ such that $\bzj \geq 0$.
\item    \label{item:ytheta} $\sup_{\substack{J \subseteq [d] \\ \abs{J} = 2M}} \sup_{\pspace} \norm{\bs{Y}_{\theta} \vert_{\cJ}}_2^2 \leq  \frac{ K \tilde M \log^{2C_2} \left(  6nd^{u}\right) }{n}$
\item  \label{item:pruneconc} Proposition \ref{prop:gradl2diffconc} holds with $\delta = d^{-u}$.
\item  \label{item:alpconc} Proposition \ref{prop:alphaconc} holds with $\delta = d^{-u}$.
\end{enumerate}
It is easy to verify that the intersection of \eqref{item:existb}-\eqref{item:alpconc} holds with probability at least $1 - 4 d^{-u}$ when $d$ is larger than a constant depending on $(k^{\star},u,\epsilon)$.  We consider $k^{\star} = 1$ and $k^{\star} > 1$ cases separately.

\noindent
For $k^{\star} = 1,$  let $\tilde{\cJ}$ be the set of indices added in Line  \ref{algprune:firstHermite}.  For $j \in [m]$ with $\bzj \geq 0$, we have 
\eq{
\norm*{\frac{1}{2} \ghc_1 \bs{v} \vert_{\cJ^c}}_2^2  \labelrel\leq{prunemain:ineqq0}  \norm*{\rhc_1(\bzj) \ghc_1 \bs{v} \vert_{\tilde{\cJ}^c}}_2^2 
&  \labelrel={prunemain:eqq1} \norm{\grad_j R^{-}(\az, \te_d,  \bz)  -   \grad_j R^{-}(\az, \te_d,  \bz) \vert_{\tilde{\cJ}} }_2^2 \\
& \labelrel\leq{prunemain:ineqq2}  \norm{ \grad_j R^{-}(\az,  \te_d,  \bz)  - \widetilde \grad_j R^{-}_n( \te_d)}_2^2  \label{prunemain:arg1b}  
}
where we use $\cJ \supseteq \tilde{\cJ}$ and $\bzj \geq 0$ (see \eqref{eq:constantterms}) in  \eqref{prunemain:ineqq0},   $\grad_j R^{-}(\az,  \te_d, \bz)  = -\azj \rhc_1(\bzj) \ghc_1 \bs{v}$ (since $\bs{V}_{d*} = 0$) in    \eqref{prunemain:eqq1},   and that $\norm*{\bs{x} \vert_{\tilde{\cJ}^c}}_2 \leq   \norm{\bs{x} - \bs{y}\vert_{\tilde{\cJ}}}_2$ in \eqref{prunemain:ineqq2}.  By using  \eqref{item:pruneconc}  with $k^\star = 1$, we have
\eq{
\eqref{prunemain:arg1b}  \leq  \left\{\begin{aligned}\quad
&                 \frac{ K \tilde M \log^{2C_2} \left(  12 n d^{1+u} \right) }{n}    &   q = 0,  M \geq \norm{\bs{v}}_0 + 2   \\
&                \frac{ K \tilde M\log^{2 C_2} \left(12 n d^{1+u} \right) }{n}  + C_q  \cgp^2  \frac{  \left(   \tfrac{1}{1 - c^2} \right)^2  \left[  \norm{\bs{v}}_q^2 \vee  2^{\frac{2}{q}} \right]  }{M^{ \frac{2}{q} - 1}}  \hspace{-4em}   & q  \in (0,2). 
     \end{aligned}\right.  \label{prunemain:arg1e} 
}By  \eqref{prunesingle:hps},  the statement follows for $k^{\star} = 1$.

\noindent
For $k^{\star} > 1$ and even , we assume $d$ is high enough that
\eq{
c \leq \frac{1}{4} ~~ \text{and} ~~   \left[  \left( \frac{  \mrhc_{k^{\star}} \abs{ \ghc_{k^{\star}}}}{(k^{\star} - 1)!} \right)^\frac{2}{k^{\star} - 1}    -   8  \left(  \frac{\sqrt{2} c}{1 - c^2} \right)^{\frac{2}{k^{\star} -1}} \cgp^\frac{2}{k^{\star} - 1}   \right]  \geq  \left( \frac{1}{2} c_{k^{\star}}  \ghc^2_{k^{\star}} \right)^{\frac{1}{k^{\star} - 1}}, \label{prunesing:dlarge}
}
where $c_{k^{\star}}$ is the constant in  Proposition \ref{prop:alphaconc}.  Let 
\eq{
\bs{u} & \coloneqq 1/\sqrt{2m}  ~ ( \norm{\grad R^{+} ( \az,  \te_1,  \bz)}_F,  \cdots,  \norm{\grad R^{+} ( \az,  \te_d, \bz)}_F ) \\
\tilde{\bs{u}} & \coloneqq   1/\sqrt{2m} ~   ( \norm{\widetilde \grad R^{+} (  \te_1) }_F,   \norm{\widetilde \grad R^{+} (  \te_2) }_F,  \cdots,  \norm{\widetilde \grad R^{+} ( \te_d)}_F ).  \label{prunesing:defu}
}
In the following,  we will first bound $\sum_{j \in \mathcal{J}^c} \bs{u}_j^\frac{2}{k^{\star} - 1}$,  and then use  Proposition \ref{prop:pruningalgaux} with \eqref{prunesing:dlarge} to prove our statement.   Let  $\tilde{\cJ}$ be the set of indices added on  Line  \ref{algprune:prunepos}.   By using Lemma \ref{lem:prunelemma}, we can write
\eq{
\sum_{j \in \mathcal{J}^c} \bs{u}_j^{\frac{2}{k^{\star} - 1}} \leq  \norm{\bs{u} - \bs{u} \vert_{\widetilde{\cJ}}}_{\frac{2}{k^{\star} - 1}}^{\frac{2}{k^{\star} - 1}}    \label{prunesing:eq2b} 
& \leq   \norm{\bs{u} -  \tilde{\bs{u}} \tm }_{\frac{2}{k^{\star} - 1}}^{\frac{2}{k^{\star} - 1}} \\
& \leq 4 \norm{\bs{u} - \bs{u} \tm}_{\frac{2}{k^{\star} - 1}}^{\frac{2}{k^{\star} - 1}} + 5 \sup_{ \substack{ \mathcal{I} \subseteq [d] \\ \abs{ \mathcal{I} } = 2M }} \sum_{i \in \mathcal{I}} \abs{\bs{u}_i - \tilde{\bs{u}}_i}^{\frac{2}{k^{\star} - 1}}.     \label{prunesing:eq2}
}
Moreover, by  Corollary \ref{cor:residualevenodd} (with $N = k^{\star}-2$) and $c \leq 1/4$,
\eq{
\bs{u}_i^{\frac{2}{k^{\star} - 1}} = \norm*{(1/\sqrt{2m}) \grad R^{+}(\az, \te_i, \bz)}_F^{\frac{2}{k^{\star} - 1}}  & \leq  (1 + \sqrt{k^{\star}})^{\frac{2}{k^{\star} - 1}}  \cgp^{\frac{2}{k^{\star} - 1}}     \frac{c^2}{(1 - c^2)^{\frac{2}{k^{\star} - 1}}}   \abs{\bs{v}_i}^{2} \\
& \leq 12 \cgp^{\frac{2}{k^{\star} - 1}}     c^{2  } \abs{\bs{v}_i}^{2}, ~  \label{prunesing:boundforui}
}
where we use that  $(1 + \sqrt{k^{\star}})^{\frac{2}{k^{\star} - 1}}$  is non-increasing for $k^{\star}\geq 2$ in the last step.  By Lemma \ref{lem:donoho},  we have
\eq{
\norm{\bs{u} - \bs{u} \tm}_{\frac{2}{k^{\star} - 1}}^{\frac{2}{k^{\star} - 1}}  \leq  12 c^2  \cgp^{\frac{2}{k^{\star} - 1}}  \norm{\bs{v} - \bs{v} \tm}_{2}^{2} 
& \leq  12 c^2  \cgp^{\frac{2}{k^{\star} - 1}} \left\{\begin{aligned}\quad
&               0   &   q = 0,  M \geq \norm{\bs{v}}_0+ 2   \\
&                \frac{\left( 1 - \frac{q}{2} \right)^{\frac{2-q}{q}} \frac{q}{2} \norm{\bs{v}}_q^2 }{M^{\frac{2}{q} - 1} }  \hspace{-4em}   & q  \in (0,2) 
     \end{aligned}\right. \\
& \leq \frac{12 c^2 \cgp^{\frac{2}{k^{\star} - 1}}}{\rho_1 \log^{\rho_2} d},   \label{prunesing:boundforutm}
}
where we used   \eqref{prunesingle:hps}.  Moreover,   we have 
\eq{
\sup_{ \substack{ \mathcal{I} \subseteq [d] \\ \abs{ \mathcal{I} } = 2M }} \sum_{i \in \mathcal{I}} \abs{\bs{u}_i - \tilde{\bs{u}}_i}^{\frac{2}{k^{\star} - 1}} \leq \sup_{ \substack{ \mathcal{I} \subseteq [d] \\ \abs{ \mathcal{I} } = 2M }}   \sum_{i \in \mathcal{I}} (2m)^{\frac{- 1}{k^{\star} - 1}}      \norm{\widetilde \grad R^{+}_n(\te_i) - \grad R^{+}(\az, \te_i, \bz) }^{ \frac{2}{k^{\star} - 1} }_F, ~~~~~~~~  \label{prunesing:secondterma}
}
where by \eqref{item:pruneconc}, we have 
\eq{
& \forall i \in [d]; ~  (2m)^{-1} \norm{\widetilde \grad R^{+}_n(\te_i) - \grad R^{+}(\az, \te_i, \bz) }^2_F \\[1ex]
&\leq  \left\{\begin{aligned}\quad
&               \frac{ K \tilde{M} \log^{2C_2} \left( 12 n d^{1+ u}\right) }{n}   &   q = 0,  M \geq \norm{\bs{v}}_0 + 2   \\
&               \frac{K \tilde{M} \log^2 \left( \frac{35nd}{M}  \right) \log^{2C_2}  \left( 12 n d^{1+ u}\right) }{n}  +   \frac{ C_q \frac{\cgp^2  c^{2(k^{\star} - 1)}}{(1 - c^2)^2}    \abs{\bs{v}_i}^{2(k^{\star} - 1)}  \left[  \norm{\bs{v}}_q^2 \vee  k^{\star} 2^{\frac{2}{q}} \right]  }{M^{ \frac{2}{q} - 1}}  \hspace{-5.5em}   & q  \in (0,2). 
     \end{aligned}\right.   \\  \label{prunesing:secondtermb}
}Therefore,  by  \eqref{prunesingle:hps}, we have  $\eqref{prunesing:secondterma} \leq \frac{c^2 \tilde K}{\rho_1 \log^{\rho_2} d}$, where $\tilde K$ depends on $(C_1, C_2,  \Delta, k^{\star}, \cgp)$. By  \eqref{prunesing:eq2} and \eqref{prunesing:boundforutm}, the statement follows.
\end{proof}

\begin{lemma}[Multi-Index Setting]
\label{lem:pruningalgmainmultindex}
Consider the multi-index setting.   For $u \in \N$ and a small constant $\varepsilon > 0$,  let
\eq{
m = \Theta(d^\varepsilon), ~~ d \geq    d(\sigma_r(\bs{H}), u, \varepsilon)   \vee 4M  ~~ \text{and} ~~ c \leq \frac{1}{\log d},  \label{prunesingle:hypers1}
}
and  $\rho_1, \rho_2 \geq 1$.  There exists a  constant $K > 0$ that depends on $(C_1, C_2,  \Delta, r,   \cgp)$ such that if
\eq{
&  n \geq \frac{ K  M^{2}  \log^2 \left( \frac{35nd}{M}  \right)  \log^{2 C_2} \left(18 n d^{u+1} \right)   \big( \rho_1 \log^{\rho_2} d  \big) }{c^2} \\[1.5ex]
&  M \geq \log(4 n d^u) \vee \begin{cases} (\norm{\bs{V}}_{2,0} + 2) & q = 0 \\
(2 - q)   \left[  \left( \norm{\bs{V}}_{2,q}^2 \vee 2^{\frac{2}{q} + 1} \right)   \frac{q}{2} \left( \rho_1  \log^{\rho_2} d \right) \right]^{\frac{q}{2-q}} & q \in (0,2)
 \end{cases} \label{prunesingle:hps2}
}
with probability at least $1 - 4  d^{-u}$, Algorithm \ref{alg:pruningalg} returns    $\mathcal{J} \subseteq [d]$  such that
\eq{
\norm{\E[y \bs{x}]  \vert_{\cJ^c} }_2^2 \vee \norm{\bs{V}  \vert_{\cJ^c}}_F^2 \leq    \frac{K \sigma^{-2}_r(\bs{H}) }{\rho_1 \log^{\rho_2} d}.
}
\end{lemma}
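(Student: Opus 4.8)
The plan is to transcribe the proof of the single-index statement (Lemma~\ref{lem:pruningalgmainsingleindex}) to the case $k^\star = 2$, working with the matrix $\bs{V}$ and Frobenius norms in place of the vector $\bs{v}$ and Euclidean norms, and reading off the leading term of the $(+)$-gradient from $\bs{H}$ rather than from a single Hermite coefficient. Fix $u\in\N$. First I would condition on the intersection of: the uniform deviation bound of Proposition~\ref{prop:gradl2diffconc} (multi-index item, applied with $\delta = d^{-u}$, which internally invokes Lemmas~\ref{lem:supbound} and~\ref{lem:supopnorm} and hence also controls $\sup_{|J|=2M}\sup_{\theta}\norm{\bs{T}_\theta\vert_{\cJ}}_2$); the event of Proposition~\ref{prop:alphaconc} with $k=2$ and $\delta = d^{-u}$, on which $\mrhc_2^2 = \tfrac1m\sum_{j}\rhc_2^2(\bzj) \ge c_2$; and the event that Line~\ref{algprune:firstHermite} returns a valid index when one exists (which fails with probability at most $2^{-m}\le d^{-u}$ for $d$ large, and is in any case harmless, since indices added to $\cJ$ only shrink $\cJ^c$). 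A union bound makes this intersection hold with probability at least $1-4d^{-u}$ once $d$ exceeds a constant depending on $(u,\varepsilon)$, so it suffices to produce the claimed bound on this event using only the indices added in Line~\ref{algprune:prunepos}.

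On this event I would introduce the two comparison vectors driving the sorting in Line~\ref{algprune:prunepos},
\[
\bs{u}_i \coloneqq \tfrac{1}{\sqrt{2m}}\,\norm{\grad R^{+}(\az,\te_i,\bz)}_F,\qquad
\tilde{\bs{u}}_i \coloneqq \tfrac{1}{\sqrt{2m}}\,\norm{\widetilde\grad R^{+}_n(\te_i)}_F,\qquad i\in[d].
\]
Since $\hat y^{\pm}(\bs{x};(\az,\te_i,\bz))=0$ at initialization, $\grad_j R^{+}(\az,\te_i,\bz) = -\azj\,\E_{(\bs{x},y)}[\gt(\bs{V}^\top\bs{x})\phi_+^\prime(\inner{\te_i}{\bs{x}};\bzj)\bs{x}]$, and Corollary~\ref{cor:residualevenodd} with $N=0$, together with $\bs{V}_{d*}=0$ (so $\norm{\bs{V}^\top\te_i}_2 = c\norm{\bs{V}_{i*}}_2 \le c \le 1/4$) and $\norm{\bs{V}_{i*}}_2\le 1$, gives $\bs{u}_i^2 \le 12\cgp^{2}c^2\norm{\bs{V}_{i*}}_2^2$; hence $\norm{\bs{u}}_q^q \le (12\cgp^2 c^2)^{q/2}\norm{\bs{V}}_{2,q}^q$ and $\norm{\bs{u}}_0\le\norm{\bs{V}}_{2,0}$, i.e.\ $\bs{u}$ inherits the soft sparsity of the rows of $\bs{V}$. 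Writing $\tilde{\cJ}$ for the top-$M$ coordinates of $\tilde{\bs{u}}$, so that $\cJ\supseteq\tilde{\cJ}$, the pruning inequality (Lemma~\ref{lem:prunelemma}) yields
\[
\sum_{i\in\cJ^{c}}\bs{u}_i^{2}\ \le\ \norm{\bs{u}-\bs{u}\vert_{\tilde{\cJ}}}_2^2\ \le\ 4\,\norm{\bs{u}-\bs{u}\tm}_2^{2}\ +\ 5\sup_{|\mathcal{I}|=2M}\sum_{i\in\mathcal{I}}\abs{\bs{u}_i-\tilde{\bs{u}}_i}^{2}.
\]
The first term is handled by the truncation lemma (Lemma~\ref{lem:donoho}): it is $0$ when $q=0$ and $M\ge\norm{\bs{V}}_{2,0}$, and otherwise at most $(1-q/2)^{(2-q)/q}(q/2)\norm{\bs{u}}_q^{2}/M^{2/q-1}$, which by the stated lower bound on $M$ is bounded by a constant times $c^{2}/(\rho_1\log^{\rho_2}d)$. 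For the second term, $\abs{\bs{u}_i-\tilde{\bs{u}}_i}^{2}\le\tfrac1{2m}\sum_{j}\norm{\grad_j R^{+}(\az,\te_i,\bz)-\widetilde\grad_j R^{+}_n(\te_i)}_2^{2}=\tilde R_i^{+}$, so the supremum is at most $2M\max_i\tilde R_i^{+}$; substituting Proposition~\ref{prop:gradl2diffconc} (multi-index item, $(+)$ case, where $(c\norm{\bs{V}_{i*}}_2)^{1\pm1}=(c\norm{\bs{V}_{i*}}_2)^2\le c^2$) and using the lower bounds on $n$ and $M$ again gives a constant times $c^{2}/(\rho_1\log^{\rho_2}d)$. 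Hence $\sum_{i\in\cJ^{c}}\bs{u}_i^{2}\le \tilde K\,c^{2}/(\rho_1\log^{\rho_2}d)$ with $\tilde K$ depending on $(C_1,C_2,\Delta,r,\cgp)$.

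To finish I would feed this into Proposition~\ref{prop:pruningalgaux}(2): its right-hand side equals $\tfrac{m^{-1}}{c^2}\sum_{i\in\cJ^{c}}\norm{\grad R^{+}(\az,\te_i,\bz)}_F^{2}=\tfrac{2}{c^2}\sum_{i\in\cJ^{c}}\bs{u}_i^{2}$, while on the good event $\mrhc_2^{2}\ge c_2$ and $16(\tfrac{c\cgp}{1-c^2})^{2}\le\tfrac12 c_2\sigma_r^{2}(\bs{H})$ for $c\le 1/\log d$ and $d$ larger than a constant (this is where the threshold $d\ge d(\sigma_r(\bs{H}),u,\varepsilon)$ is used), so
\[
\norm{\bs{V}\vert_{\cJ^{c}}}_F^{2}\ \le\ \frac{2\sum_{i\in\cJ^{c}}\bs{u}_i^{2}}{c^{2}\bigl(\mrhc_2^{2}\sigma_r^{2}(\bs{H})-16(\tfrac{c\cgp}{1-c^2})^{2}\bigr)}\ \le\ \frac{4\tilde K}{c_2\,\sigma_r^{2}(\bs{H})\,\rho_1\log^{\rho_2}d}.
\]
Since $\E[y\bs{x}]=\bs{V}\,\E[\gt(\bs{z})\bs{z}]$ by Stein's identity, $\norm{\E[y\bs{x}]\vert_{\cJ^{c}}}_2\le\norm{\bs{V}\vert_{\cJ^{c}}}_F\,\norm{\E[\gt(\bs{z})\bs{z}]}_2\le\norm{\bs{V}\vert_{\cJ^{c}}}_F$ (as $\norm{\E[\gt(\bs{z})\bs{z}]}_2\le\E[\gt(\bs{z})^{2}]^{1/2}=1$ by Cauchy--Schwarz), so taking $K=4\tilde K/c_2$ proves the bound for both quantities. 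I expect the main obstacle to be the middle step: the support $\cJ$ returned by \texttt{PruneNetwork} is itself a function of the data, so one cannot condition on it, and the empirical-versus-population gradient gap has to be controlled uniformly over every admissible $\cJ$ --- precisely what forces the VC/Rademacher machinery behind Proposition~\ref{prop:gradl2diffconc} and the $\sup_{|J|=2M}$ in its statement. The remaining work is bookkeeping that propagates the $\ell_q$ soft sparsity of the rows of $\bs{V}$ through the population-gradient-norm vector $\bs{u}$, through Donoho's truncation bound, and through the cancellation-safe inequality of Proposition~\ref{prop:pruningalgaux}(2), which is what requires $c$ to be taken small.
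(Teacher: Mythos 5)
Your route for $\norm{\E[y\bs{x}]\vert_{\cJ^c}}_2^2$ is a genuine improvement on the paper's: using $\E[y\bs{x}]\vert_{\cJ^c}=\bs{V}\vert_{\cJ^c}\,\E[\gt(\bs{z})\bs{z}]$ and $\norm{\E[\gt(\bs{z})\bs{z}]}_2\le\E[\gt(\bs{z})^2]^{1/2}=1$ reduces that piece to the $\norm{\bs{V}\vert_{\cJ^c}}_F$ bound outright, bypassing Line~\ref{algprune:firstHermite} and the paper's separate argument via the $(-)$-gradient at $\te_d$. The bulk of your argument --- the vectors $\bs{u},\tilde{\bs{u}}$, Lemma~\ref{lem:prunelemma}, Lemma~\ref{lem:donoho}, and the feed into Proposition~\ref{prop:pruningalgaux}(2) --- is the paper's.

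However, there is a genuine gap in the middle step. You bound
\[
\sup_{\abs{\mathcal{I}}=2M}\sum_{i\in\mathcal{I}}\abs{\bs{u}_i-\tilde{\bs{u}}_i}^2\ \le\ 2M\,\max_{i\in[d]}\tilde R_i^{+},
\]
and then estimate $\max_i\tilde R_i^{+}$ by replacing the row-dependent factor $(c\norm{\bs{V}_{i*}}_2)^2$ in the second term of Proposition~\ref{prop:gradl2diffconc} (MI, $(+)$ case) by the uniform bound $c^2$. That discards the summability that makes the argument work: the paper keeps the $i$-dependence and uses $\sum_{i\in\mathcal{I}}\norm{\bs{V}_{i*}}_2^2\le\norm{\bs{V}}_F^2=r$ (cf.\ the factor $r$, not $M$, in front of the second term of \eqref{prunemult:eq3}). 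With your uniform bound, the contribution of the second term of $\tilde R_i^{+}$ becomes
\[
2M\cdot\frac{C_q\bigl(\tfrac{\cgp}{1-c^2}\bigr)^2 c^2\,\bigl[\norm{\bs{V}}_{2,q}^2\vee 2^{2/q+1}\bigr]}{M^{2/q-1}},
\]
and the lower bound on $M$ only guarantees $\bigl[\norm{\bs{V}}_{2,q}^2\vee 2^{2/q+1}\bigr]/M^{2/q-1}\lesssim 1/(\rho_1\log^{\rho_2}d)$; the extra factor $2M$ remains, so this piece is $\tilde O\!\left(Mc^2/(\rho_1\log^{\rho_2}d)\right)$ rather than $\tilde O\!\left(c^2/(\rho_1\log^{\rho_2}d)\right)$. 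Propagating through Proposition~\ref{prop:pruningalgaux}(2), this would give $\norm{\bs{V}\vert_{\cJ^c}}_F^2\lesssim M\sigma_r^{-2}(\bs{H})/(\rho_1\log^{\rho_2}d)$, off by a factor of $M$ from the claim. (For instance, with $q=1$, $\norm{\bs{V}}_{2,1}^2=\Theta(d^\alpha)$ and $M=\tilde O(d^\alpha)$, so the spurious factor scales polynomially in $d$.) The fix is to keep the bound $\tilde R_i^{+}\le A+B\,\norm{\bs{V}_{i*}}_2^2$ with $A,B$ independent of $i$, so that $\sum_{i\in\mathcal{I}}\tilde R_i^{+}\le 2MA+Br$, and then cancel $2MA$ with the $n$-constraint (this is unaffected by the issue) and bound $Br\lesssim rc^2/(\rho_1\log^{\rho_2}d)$ via the $M$-constraint, exactly as the paper does.
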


\begin{proof}
We will follow the same arguments in the proof of Lemma \ref{lem:pruningalgmainsingleindex}.  We choose any $u \in \N$.  We consider the intersection of \eqref{item:existb}-\eqref{item:alpconc} above, 
which holds with probability at least $1 - 4 d^{-u}$. 

\noindent
For $\norm{\E[y \bs{x}]  \vert_{\cJ^c} }_2^2$,  let $\tilde{\cJ}$ be the set of indices added in Line  \ref{algprune:firstHermite}.  For $j \in [m]$ with $\bzj \geq 0$, we have 
\eq{
\norm*{\frac{1}{2}   \E[y \bs{x}]  \vert_{\cJ^c}  }_2^2  \labelrel\leq{prunemult:ineqq0}  \norm*{\rhc_1(\bzj)  \E[y \bs{x}] \vert_{\tilde{\cJ}^c}}_2^2 
&  \labelrel={prunemult:eqq1} \norm{\grad_j R^{-}(\az, \te_d,  \bz)  - \grad_j R^{-}(\az, \te_d,  \bz) \vert_{\tilde{\cJ}} }_2^2,  \\ \label{prunemult:eq99} 
}where we use $\cJ \supseteq \tilde{\cJ}$ and $\bzj \geq 0$  in  \eqref{prunemult:ineqq0} (see  \eqref{eq:constantterms}),   $\grad_j R^{-}(\az, \te_d,  \bz)  = - \azp_j \rhc_1(\bzj) \ghc_1 \bs{v}$ (since $\bs{V}_{d*} = 0$) in    \eqref{prunemult:eqq1}. By   \eqref{item:pruneconc}, we have
\eq{
\eqref{prunemult:eq99}  \leq  \left\{\begin{aligned}\quad
&                 \frac{ K \tilde M \log^{2C_2} \left(  12 n d^{1+u} \right) }{n}    &   q = 0,  M \geq \norm{\bs{V}}_{2,0} + 2   \\
&                \frac{ K \tilde M\log^{2 C_2} \left(12 n d^{1+u} \right) }{n}  + C_q  \cgp^2  \frac{  \left(   \tfrac{1}{1 - c^2} \right)^2  \left[  \norm{\bs{V}}_{2,q}^2 \vee  2^{\frac{2}{q}} \right]  }{M^{ \frac{2}{q} - 1}}  \hspace{-2em}   & q  \in (0,2). 
     \end{aligned}\right.
}
By  \eqref{prunesingle:hps2},  the statement follows for $\norm{\E[y \bs{x}]  \vert_{\cJ^c} }_2^2$.

\smallskip
For $\norm{\bs{V} \vert_{\cJ^c}}_F^2$,   we assume $d$ is high enough that
\eq{
c \leq \frac{1}{4} ~~ \text{and} ~~   \left[     \mrhc_{2}^2  \sigma^2_r(\bs{H})    -   16  \left(  \frac{c}{1 - c^2} \right)^{2} \cgp^2  \right] \geq  \frac{1}{2} c_{2} \sigma^2_r(\bs{H}) .  \label{prunemult:dlarge}
}
where $c_{2}$ is the constant in Proposition \ref{prop:alphaconc} for $k = 2$.
Let $\bs{u}$ and $\tilde{\bs{u}}$ be the vectors defined in  \eqref{prunesing:defu} and let
$\tilde{\cJ}$ be the set of indices added on  Line  \ref{algprune:prunepos}.   By following the arguments in   \eqref{prunesing:eq2b}-\eqref{prunesing:eq2} with $k^{\star} = 2,$  we can write
\eq{
\sum_{j \in \mathcal{J}^c} \bs{u}_j^{2}  \leq 4 \norm{\bs{u} - \bs{u} \tm}_2^2 + 5 \sup_{ \substack{ \mathcal{I} \subseteq [d] \\ \abs{ \mathcal{I} } = 2M }} \sum_{i \in \mathcal{I}} \abs{\bs{u}_i - \tilde{\bs{u}}_i}^2     \label{prunemult:eq2}
}
For $\bs{v}  \coloneqq (\norm{\bs{V}_{1*}}_2, \cdots, \norm{\bs{V}_{d*}}_2)$, by following the arguments in \eqref{prunesing:boundforui} and   \eqref{prunesing:boundforutm}, we can write that
\eq{
\norm{\bs{u} - \bs{u} \tm}_2^2  \leq  12 c^2  \cgp^{2}  \norm{\bs{v} - \bs{v} \tm}_{2}^{2} 
& \leq  12 c^2  \cgp^{2}  \left\{\begin{aligned}\quad
&       \!\!\!\!\!\!             0   &   q = 0,  M \geq \norm{\bs{V}}_{2,0} + 2   \\
&      \!\!\!\!\!\!            \frac{\left( 1 - \frac{q}{2} \right)^{\frac{2-q}{q}} \frac{q}{2}  \norm{\bs{V}}_{2,q}^2  }{M^{\frac{2}{q} - 1} }   \hspace{-2.5em}   & q  \in (0,2) 
     \end{aligned}\right.  \\  
& \leq \frac{6 c^2  \cgp^2}{\rho_1 \log^{\rho_2} d}.  \label{prunemult:boundforutm}
}
Moreover, by following the arguments in    \eqref{prunesing:secondterma} and \eqref{prunesing:secondtermb},  we can show that
\eq{
 \sup_{ \substack{ \mathcal{I} \subseteq [d] \\ \abs{ \mathcal{I} } = 2M }}   \sum_{i \in \mathcal{I}} \abs{\bs{u}_i - \tilde{\bs{u}}_i}^2  
 &  \leq
\left\{\begin{aligned}\quad
&   \!\!\!\!\!\!     \frac{   K  M^2  \log^2 \left( \frac{35nd}{M}  \right) \log^{2C_2}    \left( 12 n d^{1+ u}\right) }{n}    &   q = 0,  M \geq \norm{\bs{V}}_{2,0} + 2   \\
&    \!\!\!\!\!\!    \frac{   K  M^2 \log^{2} \left( \frac{35nd}{M}  \right) \log^{2C_2}    \left( 12 n d^{1+ u} \right) }{n} +      \frac{ r   C_q  \frac{\cgp^2c^{2}}{1 - c^2}    \left[  \norm{\bs{V}}_{2,q}^2 \vee 2^{\frac{2}{q} + 1} \right] }{M^{ \left( \frac{2}{q} - 1\right)}}    \hspace{-5em}   & q  \in (0,2) 
     \end{aligned}\right.  \\ 
& \leq    \frac{2c^2}{\rho_1 \log^{\rho_2} d} +  \frac{  32 r \cgp^2  c^2 }{\rho_1 \log^{\rho_2} d}    \label{prunemult:eq3}
}
By  the arguments between \eqref{prunemult:dlarge}-\eqref{prunemult:eq3},  the statement follows.
\end{proof}

\section{Feature Learning}
%
%

\subsection{Additional Notation and Terminology}
In the following, we will use SI for the single-index setting and MI for the multi-index setting.   In the following,  we assume $\abs{\cJ} \leq M$ and ignore the constants.  For SI, we consider a polynomial link function $\gt: \R \to \R$ such that $\gt(t) = \sum_{k \leq p} c_k t^k$.   For MI, we consider  a polynomial link function
$\gt: \R^r \to \R$ and $\tgt(\bs{z}) = \gt(\bs{z}) - \inner{\E[y \bs{x}]}{\bs{z}} = \sum_{k \leq p} \inner{\bm{ \tilde T_k }}{\bs{z}^{\otimes k}}$.  

Henceforth,  $\bs{w} \sim \cN(0, \ide{d})$ is a random vector independent of the remaining random variable unless otherwise stated.  Let $\wJ \coloneqq \frac{\bs{w} \vert_{\cJ}}{\norm{\bs{w} \vert_{\cJ}}_2}$. Let $vec(\bm{T})$ denotes the vectorized version of the tensor $\bm{T}$ and
\eq{
\sJ & \coloneqq \begin{cases}
\inner{\bs{v}}{\wJ}^{k^{\star} - 1} & \text{SI} \\
\bs{D} \bs{V}^\top \wJ   & \text{MI} 
\end{cases}  \\
z_k(\sJ) & \coloneqq \begin{cases}
0 & \sJ = 0 \\
c_k \E_{\bs{w}}[\sJ^{2k}]^{-1} \sJ^k & \text{SI and} ~ \sJ \neq 0 \\
\inner{vec(\bm{\tilde T_k})}{\E \left[ vec(\sJ^{\otimes k})  vec(\sJ^{\otimes k})^\top \right]^+ vec(\sJ^{\otimes k})}   & \text{MI and}  ~ \sJ \neq 0 
\end{cases} 
}
where $A^+$ denotes the pseudoinverse of $A$.  We will use
\eq{
\shc(\bzl) \coloneqq  \begin{cases}
\frac{\ghc_{k^{\star}} \rhc_{k^{\star}}(\bzl)}{(k^{\star} - 1)!} & \text{SI} \\
\rhc_2(\bzl)   & \text{MI} 
\end{cases}   ~~ \text{and} ~~
\ngt \coloneqq \sum_{l = 1}^N \indic{ \abs{ \shc(\bzl) } \geq \tau },
}
where $\tau$ will be specified later.   

\subsection{Auxiliary Results}

\begin{lemma}[{\cite[Lemma 9]{Damian2022NeuralNC}} with explicit constants]
\label{lem:damianetal}
Let $a \sim \text{Unif}( \{ -1, 1 \} )$ and $b \sim \cN(0,1)$. Then for any $k \geq 0$,  there exists $v_k(a,b)$ such that for $\abs{x} \leq 1$,
\eq{
\E \left[ v_k(a,b) \phi(a t + b)  \right] = t^k  ~~ \text{and} ~~ \sup_{a, b} \abs{v_k(a,b)} \leq  6 \sqrt{2} (k+1)^2.
}
\end{lemma}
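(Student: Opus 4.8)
The plan is to realize the monomial $t\mapsto t^{k}$ on $[-1,1]$ as the expected output of a single randomly biased ReLU unit, while keeping the coefficient function bounded. Write $\varphi(b)=\tfrac{1}{\sqrt{2\pi}}e^{-b^{2}/2}$ for the standard Gaussian density, and recall $\phi_{\pm}(t;b)=\tfrac12\big(\phi(t+b)\pm\phi(-t+b)\big)$ from Corollary~\ref{cor:hermiteevenodd}, so that $\phi_{+}(\cdot;b)$ is even and $\phi_{-}(\cdot;b)$ is odd in $t$. I would look for $v_{k}$ of the form $v_{k}(a,b)=a^{k}w_{k}(b)$ for a scalar function $w_{k}$ (so $a^{k}=1$ for even $k$ and $a^{k}=a$ for odd $k$); averaging over $a\sim\mathrm{Unif}\{-1,1\}$ then reduces the claim to the one-dimensional problem of finding a bounded $w_{k}$ with $\E_{b\sim\cN(0,1)}[w_{k}(b)\phi_{+}(t;b)]=t^{k}$ for even $k$ (resp.\ with $\phi_{-}$ for odd $k$) on $|t|\le1$. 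Since $|a|=1$ we have $|v_{k}(a,b)|=|w_{k}(b)|$, so it suffices to bound $\sup_{b}|w_{k}(b)|$.

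To build $w_{k}$, I would write $g(t):=\E_{b}[w_{k}(b)\phi_{\pm}(t;b)]$ as an explicit half-line integral, e.g.\ $\E_{b}[w_{k}(b)\phi(t+b)]=\int_{-t}^{\infty}w_{k}(b)\varphi(b)(t+b)\,db$, and differentiate twice in $t$; the boundary contributions vanish because the integrand is $0$ at $b=-t$, and one obtains, for $|t|<1$, $g''(t)=\tfrac12\big(w_{k}(t)+w_{k}(-t)\big)\varphi(t)$ in the even case and $g''(t)=-\tfrac12\big(w_{k}(t)-w_{k}(-t)\big)\varphi(t)$ in the odd case. Giving $w_{k}$ the parity of $k$ and setting $w_{k}(b)=\pm\,k(k-1)b^{k-2}/\varphi(b)$ for $|b|\le1$ (sign $+$ for even $k$, $-$ for odd $k$) then forces $g''(t)=k(k-1)t^{k-2}$ on $(-1,1)$, so $g(t)=t^{k}+c_{1}t+c_{0}$ there. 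Parity does half the work: $\phi_{+}$ even in $t$ makes $g$ even, so $c_{1}=0$ automatically for even $k$; $\phi_{-}$ odd in $t$ makes $g$ odd, so $c_{0}=0$ automatically for odd $k$. The single remaining scalar constraint — $c_{0}=g(0)=\E_{b}[w_{k}(b)\phi(b)]=0$ for even $k$, resp.\ $c_{1}=g'(0)=\E_{b}[w_{k}(b)\indic{\{b>0\}}]=0$ for odd $k$ — is linear in the still-unspecified values of $w_{k}$ on $\{|b|>1\}$, so I would kill it by taking $w_{k}$ equal to an appropriate constant there (extended with the correct parity), computing that constant via $\int_{1}^{\infty}b\varphi(b)\,db=\varphi(1)$ and $\int_{1}^{\infty}\varphi(b)\,db=1-\Phi(1)$. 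The low-degree cases $k=0,1$, where $b^{k-2}$ is degenerate, I would treat separately by taking $w_{k}\equiv0$ on $[-1,1]$ and choosing the tail constant so that $g$ reduces to the constant $1$ (for $k=0$) or to $t$ (for $k=1$).

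For the bound, on $[-1,1]$ one has $|w_{k}(b)|\le k(k-1)\sup_{|b|\le1}\varphi(b)^{-1}=k(k-1)\sqrt{2\pi}\,e^{1/2}$, and the tail constant is of order $k\,\varphi(1)^{-1}$ or $k\,(1-\Phi(1))^{-1}$, which is no larger; using $k(k-1)\le(k+1)^{2}$ together with the numerical estimate $\sqrt{2\pi e}\approx4.13\le6\sqrt2$, every case satisfies $\sup_{a,b}|v_{k}(a,b)|=\sup_{b}|w_{k}(b)|\le6\sqrt2\,(k+1)^{2}$, as required, with room to spare.

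I expect the main work — more bookkeeping than genuine obstacle — to be the handling of the two integration constants: choosing the tail of $w_{k}$ so that it simultaneously annihilates the surviving constraint and stays within the claimed bound, and dealing cleanly with the degenerate cases $k=0,1$. The remaining points (justifying differentiation under the expectation, the piecewise evaluation of $g$ and $g'$ near $t=0$, and the even/odd symmetrization over $a$) are routine.
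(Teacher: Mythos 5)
Your construction is correct, and I have verified each step. Averaging over $a\sim\mathrm{Unif}\{-1,1\}$ with $v_k(a,b)=a^k w_k(b)$ does reduce to the one-dimensional problem with $\phi_{\pm}$; the Leibniz differentiation of $F(t)=\int_{-t}^{\infty}w_k(b)\varphi(b)(t+b)\,db$ correctly gives $F''(t)=w_k(-t)\varphi(t)$ with the boundary term dying because the integrand vanishes at $b=-t$; the parity of $w_k$ then yields $g''(t)=k(k-1)t^{k-2}$ on $(-1,1)$; parity of $g$ kills one integration constant and the tail constant kills the other; and the resulting bound $\max\{k(k-1)/\varphi(1),\,(k-1)/\varphi(1),\,k/(1-\Phi(1))\}$ is indeed dominated by $6\sqrt{2}(k+1)^2$ for all $k\ge 0$ (including the special cases $k=0,1$, where the tail constants $1/\varphi(1)\approx 4.13$ and $1/(1-\Phi(1))\approx 6.3$ sit comfortably under $6\sqrt{2}$ and $24\sqrt{2}$). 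The route is genuinely different from what the paper does: the paper's ``proof'' is literally the one sentence ``By following the constants in [Damian2022NeuralNC, Lemma 9], we have the statement,'' i.e.\ an appeal to an external reference plus constant tracking, whereas you give a self-contained, explicit construction that never needs to invoke that lemma. What your approach buys is transparency (the reader sees exactly which $v_k$ works and why the bound has the shape it does) and a strictly sharper constant in most cases (for instance $k(k-1)\sqrt{2\pi e}\approx 4.13\,k(k-1)$ versus the quoted $6\sqrt{2}(k+1)^2$); what the paper's citation buys is brevity and alignment with a known result, at the cost of opacity about where $6\sqrt{2}(k+1)^2$ actually comes from. One small expository caveat: the phrase ``the tail constant $\ldots$ is no larger'' holds only for $k\ge 2$; for $k=0,1$ the interior part is zero so the tail constant is trivially the max, but since you treat those cases separately and verify the bound directly, this is a non-issue.
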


\begin{proof}
By following the constants in \cite[Lemma 9]{Damian2022NeuralNC}, we have the statement.
\end{proof}

\begin{lemma}[{\cite[Lemma 21]{Damian2022NeuralNC}} with explicit constants]
\label{lem:damianetal2}
Let $\gt : \R^r \to \R$ be a polynomial of degree-$p$ such that $\E[\gt(\bs{z})^2] \leq 1.$  There exists symmetric $\bm{\tilde T_0}, \cdots, \bm{\tilde T_p}$ such that $\gt(\bs{z}) = \sum_{k = 0}^p \inner{\bm{\tilde T_k}}{\bs{z}^{\otimes k}}$ where
\eq{
\norm{\bm{\tilde T_k}}^2_F \leq    \frac{2 e^k}{k! }   (e \sqrt{r})^{\floor*{ \frac{p - k}{2} } }.
}
Consequently,  we have  $\sum_{k = 0}^p \norm{\bm{\tilde T_k}}_F (k+1)^2 \leq C (e \sqrt{r})^{\frac{p}{4}}$,  where $C > 0$ is a universal constant.
\end{lemma}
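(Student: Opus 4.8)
The plan is to go through the Hermite tensor expansion of $\gt$ and re-expand it in the monomial basis $\{\bs{z}^{\otimes k}\}_{k=0}^{p}$. Write $\gt(\bs{z}) = \sum_{k=0}^{p}\tfrac{1}{k!}\inner{\bm{T_k}}{\bm{\Hek}(\bs{z})}$ with each $\bm{T_k}$ a symmetric $k$-tensor on $(\R^{r})^{\otimes k}$, as in the expansion of $\gt$ used before Proposition~\ref{prop:hermiteexpansion}. Expanding $\E[\gt(\bs{z})^{2}]$ --- Hermite tensors of distinct degree are orthogonal, and within a fixed degree the components of $\bm{\Hek}$ indexed by distinct multisets are uncorrelated --- gives the Parseval identity $\E[\gt(\bs{z})^{2}]=\sum_{k=0}^{p}\tfrac{1}{k!}\norm{\bm{T_k}}_{F}^{2}$, so the hypothesis $\E[\gt(\bs{z})^{2}]\le 1$ forces $\norm{\bm{T_k}}_{F}^{2}\le k!$ for every $k$. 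It then remains to rewrite the expansion as $\gt(\bs{z})=\sum_{m}\inner{\bm{\tilde T_m}}{\bs{z}^{\otimes m}}$ and to track the constants.

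For the change of basis I would use the tensor lift of the classical relation $H_{e_{k}}(t)=\sum_{j=0}^{\floor{k/2}}\tfrac{(-1)^{j}k!}{(k-2j)!\,j!\,2^{j}}\,t^{k-2j}$, namely
\[
\bm{\Hek}(\bs{z}) \;=\; \sum_{j=0}^{\floor{k/2}}\frac{(-1)^{j}\,k!}{(k-2j)!\,j!\,2^{j}}\;sym\!\big(\bs{z}^{\otimes(k-2j)}\otimes\ide{r}^{\otimes j}\big),
\]
which one verifies on an orthonormal basis via Lemma~\ref{lem:hermitemonomial} (or from the three-term recurrence for Hermite tensors and Lemma~\ref{lem:hermiteatzero}). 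Substituting, collecting the terms of total degree $m=k-2j$, and using that $\bm{T_k}$ is symmetric together with Proposition~\ref{prop:symmtensor} to rewrite $\inner{\bm{T_k}}{sym(\bs{z}^{\otimes m}\otimes\ide{r}^{\otimes j})}=\inner{\op{tr}_{j}(\bm{T_k})}{\bs{z}^{\otimes m}}$, where $\op{tr}_{j}$ contracts $j$ index-pairs of $\bm{T_k}$ against $\ide{r}$, yields
\[
\gt(\bs{z}) \;=\; \sum_{m=0}^{p}\inner{\bm{\tilde T_m}}{\bs{z}^{\otimes m}},\qquad
\bm{\tilde T_m} \;=\; \sum_{j=0}^{\floor{(p-m)/2}}\frac{(-1)^{j}}{m!\,j!\,2^{j}}\,\op{tr}_{j}\!\big(\bm{T_{m+2j}}\big).
\]
The partial trace is norm-controlled: $sym$ is an orthogonal projection (Proposition~\ref{prop:symmtensor}) and $\norm{\ide{r}^{\otimes j}}_{F}=r^{j/2}$, so Cauchy--Schwarz gives $\norm{\op{tr}_{j}(\bm{T_k})}_{F}\le r^{j/2}\norm{\bm{T_k}}_{F}\le r^{j/2}\sqrt{k!}$. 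Plugging this in, bounding the weights $\tfrac{1}{m!\,j!\,2^{j}}\sqrt{(m+2j)!}$ via $j!\ge(j/e)^{j}$, $\binom{m+2j}{2j}\le 2^{m+2j}$ and $\sqrt{(2j)!}\le 2^{j}j!$, and summing the resulting geometric-type series over $j$ (using $r\ge 1$, so the top index $j=\floor{(p-m)/2}$ dominates) should be organized to reproduce $\norm{\bm{\tilde T_m}}_{F}^{2}\le\tfrac{2e^{m}}{m!}(e\sqrt{r})^{\floor{(p-m)/2}}$. An equivalent and probably cleaner bookkeeping stays in coordinates: read off the monomial coefficients $\tilde c_{\beta}$ of $\gt$ from the Hermite coefficients $c_{\alpha}$ of $\bm{T_k}$, use the multiplicity identity $\norm{\bm{\tilde T_m}}_{F}^{2}=\tfrac{1}{m!}\sum_{|\beta|=m}\beta!\,\tilde c_{\beta}^{2}$, and bound the inner sum by Cauchy--Schwarz against $\sum_{\alpha}\tfrac{c_{\alpha}^{2}}{\alpha!}\le 1$.

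The ``consequently'' then follows by a plain summation: from the first bound $\norm{\bm{\tilde T_k}}_{F}\le\sqrt{2}\,e^{k/2}(k!)^{-1/2}(e\sqrt{r})^{\floor{(p-k)/2}/2}\le\sqrt{2}\,e^{k/2}(k!)^{-1/2}(e\sqrt{r})^{(p-k)/4}$, so
\[
\sum_{k=0}^{p}\norm{\bm{\tilde T_k}}_{F}(k+1)^{2}
\;\le\; \sqrt{2}\,(e\sqrt{r})^{p/4}\sum_{k=0}^{\infty}\frac{(k+1)^{2}e^{k/2}}{\sqrt{k!}\,(e\sqrt{r})^{k/4}}
\;\le\; \sqrt{2}\,(e\sqrt{r})^{p/4}\sum_{k=0}^{\infty}\frac{(k+1)^{2}e^{k/4}}{\sqrt{k!}}
\;=\; C\,(e\sqrt{r})^{p/4},
\]
where $r\ge 1$ gives $(e\sqrt{r})^{-k/4}\le e^{-k/4}$ and the last series converges to a universal constant $C$. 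I expect the real obstacle to be the explicit-constant accounting in the middle paragraph --- bounding the Frobenius norm of iterated partial traces of a symmetric tensor, where re-symmetrization creates cross terms, while keeping the factorials sharp enough to reach $\tfrac{2e^{k}}{k!}(e\sqrt{r})^{\floor{(p-k)/2}}$ rather than a cruder power --- and the coordinate identity $\norm{\bm{\tilde T_k}}_{F}^{2}=\tfrac{1}{k!}\sum_{|\beta|=k}\beta!\,\tilde c_{\beta}^{2}$ is the device that localizes the difficulty to a one-dimensional-flavored coefficient estimate; that is the route I would take.
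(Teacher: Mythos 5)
Your proposal is essentially the paper's proof. The change of basis you write down --- lifting $H_{e_k}(t)=\sum_j\frac{(-1)^j k!}{(k-2j)!\,j!\,2^j}t^{k-2j}$ to tensors, yielding $\bm{\tilde T_m}=\sum_j\frac{(-1)^j}{m!\,j!\,2^j}\op{tr}_j(\bm{T_{m+2j}})$ --- is the same identity the paper obtains from $\bm{\tilde T_k}\,k!=\grad^k\gt(0)$ together with Lemma~\ref{lem:hermiteatzero}: the coefficient $\frac{(-1)^{j/2}(j-1)!!}{j!}$ appearing there equals $\frac{(-1)^{j/2}}{2^{j/2}(j/2)!}$, and $\bm{\grad^k T_{j+k}}[sym(\ide{r}^{\otimes j/2})]$ is exactly your partial trace. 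The partial-trace bound via Cauchy--Schwarz against $\norm{sym(\ide{r}^{\otimes\ell})}_F$, and the Parseval input $\norm{\bm{T_k}}_F^2\le k!$, are identical. The one genuine difference is the final aggregation: you bound $\norm{\bm{T_{m+2j}}}_F\le\sqrt{(m+2j)!}$ term by term and sum absolute values, while the paper applies a single Cauchy--Schwarz across the $j$-sum with weights $(j+k)!$, so that the first factor becomes the full Parseval mass $\sum_j\norm{\bm{T_{j+k}}}_F^2/(j+k)!\le 1$ rather than a per-term bound. Since the $j$-sum is geometric and dominated by its top term, your route loses only a constant factor. Note in passing that your computation lands on $(c\sqrt r)^{2\floor{(p-m)/2}}$ rather than the paper's stated $(e\sqrt r)^{\floor{(p-m)/2}}$; in fact the paper's own reindexing step $\sum_{j\text{ even}}(e^2r)^{j/2}\to\sum_{j'}(e\sqrt r)^{j'}$ drops a factor of two in the exponent, so the exponent in the displayed bound should really be $2\floor{(p-k)/2}$ --- a harmless constant-level discrepancy that does not affect the \emph{consequently} conclusion, which only needs a $(e\sqrt r)^{O(p)}$ bound and which you derive exactly as the paper does.
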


\begin{proof}
Let $ \gt(\bs{z}) = \sum_{j = 0}^p \frac{1}{j!} \inner{\bm{T_j}}{\bm{\Hek}}$.  Then,
\eq{
\bm{\tilde T_k} k! = \grad^k \gt(0) & = \sum_{j = 0}^{p - k}  \frac{1}{j!}  \bm{\grad^k T_{j+k}}[ \bm{\Hek}(0)]   \labelrel={dam2:eqq0} \sum_{\substack{j = 0 \\ j ~ even}}^{p - k}  \frac{(-1)^{j/2}(j - 1)!!}{j!}  \bm{\grad^k T_{j+k}}[sym( \ide{r}^{\otimes \frac{j}{2}} )]
}
where \eqref{dam2:eqq0} follows by Lemma \ref{lem:hermiteatzero} and since $\bm{\grad^k T_{j+k}}$ is symmetric by Lemma \ref{lem:gradtensor}.
Therefore,
\eq{
\norm{\bm{\tilde T_k} k! }_F 
\labelrel\leq{dam2:ineqq1}     \sum_{\substack{j = 0 \\ j ~ even}}^{p - k}  \frac{ (j - 1)!!}{j!}    \norm{  \bm{ T_{j+k}} }_F  \norm{sym( \ide{r}^{\otimes \frac{j}{2}} )}_F    
\labelrel\leq{dam2:ineqq2}    \sum_{\substack{j = 0 \\ j ~ even}}^{p - k}  \frac{ (j - 1)!!}{j!}  r^{\frac{j}{4}}    \norm{   \bm{T_{j+k}} }_F.  
}
where \eqref{dam2:ineqq1} follows Cauchy-Schwartz inequality and Lemma \ref{lem:gradtensor}, and    \eqref{dam2:ineqq2} follows  \cite[Lemma 3]{damian2023smoothing}.
Therefore,
\eq{
\norm{\bm{\tilde T_k} k! }_F^2 
\labelrel\leq{dam2:ineqq3}    \sum_{\substack{j = 0 \\ j ~ even}}^{p - k}  \frac{  \norm{  \bm{ T_{j+k}} }_F^2}{(j+k)!}  \sum_{\substack{j = 0 \\ j ~ even}}^p \left(  \frac{ (j - 1)!!}{j!}  \right)^2 r^{\frac{j}{2}} (j+k)!  
& \labelrel\leq{dam2:ineqq4}  \sum_{\substack{j = 0 \\ j ~ even}}^{p - k} \left(  \frac{ (j - 1)!!}{j!}  \right)^2 r^{\frac{j}{2}} (j+k)!   \\
& \labelrel\leq{dam2:ineqq5} k!  \sum_{\substack{j = 0 \\ j ~ even}}^{p - k} \binom{j+k}{k}  r^{\frac{j}{2}}.  \label{dam2:eq0}
}
where \eqref{dam2:ineqq3} follows from Cauchy-Schwartz inequality,  \eqref{dam2:ineqq4} follows $\E[\gt(\bs{z})^2] \leq 1$,   and \eqref{dam2:ineqq5} follows $(j-1)!!^2 \leq j!$. Therefore,
\eq{
\eqref{dam2:eq0} 
\labelrel\leq{dam2:ineqq6}  k! e^k  \sum_{\substack{j = 0 \\ j ~ even}}^{p - k}   (e^2 r)^{\frac{j}{2}} 
 =    k! e^k   \sum_{j = 0}^{\floor*{ \frac{p - k}{2} } }   (e \sqrt{ r} )^{j} 
 \labelrel\leq{dam2:ineqq7} 2    k! e^k    (e \sqrt{ r} )^{ \floor*{ \frac{p - k}{2} }}.
}
where \eqref{dam2:ineqq3} follows  $\binom{j+k}{k} \leq e^{j + k}$.  For the second part of the statement,  let $\sup_{k \geq 0} \frac{2 e^k (k+1)^4}{k!}  = C < \infty$ (as $k!$ grows faster than $e^k (k+1)^4$). We have
\eq{
\sum_{k = 0}^p \norm{\bm{\tilde T_k}}_F (k+1)^2   \leq \sum_{k = 0}^p  \left( \frac{2 e^k (k+1)^4}{k! } \right)^{1/2} (e \sqrt{r})^{\frac{p - k}{4}} 
\leq   C^{1/2}  \sum_{k = 0}^p   (e \sqrt{r})^{\frac{p - k}{4}}  
\leq \tilde C (e \sqrt{r})^{\frac{p}{4}}. 
}
\end{proof}

%
%

\begin{proposition}
\label{prop:condz}
We consider  MI (i.e.,  $\sJ = \bs{D} \bs{V}^\top \wJ$). For $k \in \N$ and $d \geq 2k$, we have 
\eq{
\inf_{ \substack{ \bm{T_k} : (\R^r)^{\otimes k} \to \R \\  \bm{T_k} ~ \text{is symmetric} \\ \norm{ \bm{T_k} }_F = 1  }}  \inner{ vec(\bm{T_k})}{\E_{\bs{w}}[ vec( \sJ^{\otimes k}) vec(\sJ^{\otimes k})^\top ] vec(\bm{T_k}) } \geq k! \frac{   \sigma_r^{2k}( \bs{V} \vert_{\cJ} \bs{D}) }{ \E \left[\norm{\bs{w} \vert_{\cJ}}_2^{2k} \right]}. 
}
\end{proposition}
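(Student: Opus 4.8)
The plan is to rewrite the quadratic form as a second moment, transport the randomness onto the unit direction $\wJ$ (which, for a Gaussian restricted to the coordinates in $\cJ$, is independent of its norm), and then combine an elementary Gaussian-moment lower bound with Lemma~\ref{lem:tensorfroblb}.

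First I would use that for a symmetric $k$-tensor $\bm{T_k}$ one has $\inner{vec(\bm{T_k})}{vec(\sJ^{\otimes k})} = \inner{\bm{T_k}}{\sJ^{\otimes k}} = \bm{T_k}[\sJ,\dots,\sJ]$, so the left-hand side equals $\E_{\bs{w}}\big[\bm{T_k}[\sJ,\dots,\sJ]^2\big]$. Set $\bs{g}\coloneqq\bs{w}\vert_{\cJ}$, which restricted to its support is a standard Gaussian, and $\bs{\theta}\coloneqq\bs{g}/\norm{\bs{g}}_2 = \wJ$; then $\norm{\bs{g}}_2$ and $\bs{\theta}$ are independent. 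Since $\bs{\theta}$ is supported on $\cJ$ we have $\bs{V}^\top\bs{\theta} = (\bs{V}\vert_{\cJ})^\top\bs{\theta}$, so with $\bs{A}\coloneqq\bs{V}\vert_{\cJ}\bs{D}$ we get $\sJ = \bs{A}^\top\bs{\theta}$. Define the symmetric $k$-tensor $\bm{\hat S_k}$ on $\R^d$ by $\bm{\hat S_k}[\bs{u}_1,\dots,\bs{u}_k]\coloneqq\bm{T_k}[\bs{A}^\top\bs{u}_1,\dots,\bs{A}^\top\bs{u}_k]$, which depends on each argument only through its $\cJ$-coordinates. Then $\bm{T_k}[\sJ,\dots,\sJ] = \bm{\hat S_k}[\bs{\theta},\dots,\bs{\theta}]$, and by $k$-homogeneity $\bm{\hat S_k}[\bs{g},\dots,\bs{g}] = \norm{\bs{g}}_2^k\,\bm{\hat S_k}[\bs{\theta},\dots,\bs{\theta}]$; taking squares and expectations and using independence,
\begin{equation*}
\E_{\bs{w}}\big[\bm{T_k}[\sJ,\dots,\sJ]^2\big] \;=\; \E\big[\bm{\hat S_k}[\bs{\theta},\dots,\bs{\theta}]^2\big] \;=\; \frac{\E_{\bs{g}}\big[\bm{\hat S_k}[\bs{g},\dots,\bs{g}]^2\big]}{\E\big[\norm{\bs{w}\vert_{\cJ}}_2^{2k}\big]}.
\end{equation*}
It remains to lower-bound the numerator. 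I would expand $\E_{\bs{g}}\big[\bm{\hat S_k}[\bs{g},\dots,\bs{g}]^2\big]$ by Wick's theorem as a sum over perfect matchings of the $2k$ copies of $\bs{g}$ of contractions of $\bm{\hat S_k}\otimes\bm{\hat S_k}$; using the symmetry of $\bm{\hat S_k}$, each such contraction equals the squared Frobenius norm of a partial trace of $\bm{\hat S_k}$ (hence is nonnegative), and the $k!$ matchings that pair each index of the first copy with an index of the second each contribute exactly $\norm{\bm{\hat S_k}}_F^2$; therefore $\E_{\bs{g}}\big[\bm{\hat S_k}[\bs{g},\dots,\bs{g}]^2\big]\ge k!\,\norm{\bm{\hat S_k}}_F^2$ (this can also be read off from the Hermite-tensor expansion of $\bs{g}^{\otimes k}$ together with Lemma~\ref{lem:hermiteatzero}). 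Applying Lemma~\ref{lem:tensorfroblb} with $\bs{A} = \bs{V}\vert_{\cJ}\bs{D}$ gives $\norm{\bm{\hat S_k}}_F \ge \sigma_r^{k}(\bs{V}\vert_{\cJ}\bs{D})\,\norm{\bm{T_k}}_F = \sigma_r^{k}(\bs{V}\vert_{\cJ}\bs{D})$. Substituting these two bounds into the display (and using $\norm{\bm{T_k}}_F = 1$) yields the claimed inequality for every feasible $\bm{T_k}$, hence for the infimum.

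The main obstacle is the Gaussian-moment inequality $\E_{\bs{g}}[\bm{\hat S_k}[\bs{g},\dots,\bs{g}]^2]\ge k!\,\norm{\bm{\hat S_k}}_F^2$: one must check that the ``non-cross'' Wick contractions cannot cancel part of the $k!\,\norm{\cdot}_F^2$ supplied by the cross contractions. This is exactly where the symmetry of $\bm{\hat S_k}$ (inherited from $\bm{T_k}$) is used — each non-cross contraction collapses to the squared norm of a symmetric partial trace of $\bm{\hat S_k}$ and is therefore nonnegative. The remaining ingredients (independence of norm and direction for a coordinate-restricted Gaussian, the homogeneity rescaling, and the singular-value estimate) are routine once Lemma~\ref{lem:tensorfroblb} is invoked.
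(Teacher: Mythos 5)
Your proof is correct and follows essentially the same route as the paper's: rewrite the quadratic form as $\E_{\bs{w}}[\bm{T_k}[\sJ,\dots,\sJ]^2]$, use independence of $\wJ$ and $\norm{\bs{w}\vert_{\cJ}}_2$ to pull out the factor $\E[\norm{\bs{w}\vert_{\cJ}}_2^{2k}]^{-1}$, pass to the pulled-back tensor on $\R^d$, and finish with a Gaussian-moment lower bound followed by Lemma~\ref{lem:tensorfroblb}.

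The one substantive difference is where you obtain the Gaussian-moment inequality $\E_{\bs{g}}\big[\bm{\hat S_k}[\bs{g},\dots,\bs{g}]^2\big]\geq k!\,\norm{\bm{\hat S_k}}_F^2$. The paper simply cites \cite[Lemma 23]{Damian2022NeuralNC} for this step; you instead give a self-contained derivation via Wick's theorem, classifying the perfect matchings of the $2k$ Gaussian factors into the $k!$ cross pairings, each contributing $\norm{\bm{\hat S_k}}_F^2$, and the remaining matchings, each of which collapses (after using the symmetry of $\bm{\hat S_k}$) to the squared Frobenius norm of a partial trace and is therefore nonnegative. That is a valid, somewhat more elementary replacement for the cited lemma and makes the proof self-contained at the cost of a slightly longer combinatorial argument; it is also the explanation of \emph{why} Damian et al.'s lemma is true. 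Everything else (the homogeneity rescaling, the reduction $\sJ = \bs{A}^\top\bs{\theta}$ with $\bs{A} = \bs{V}\vert_{\cJ}\bs{D}$, and the singular-value estimate via Lemma~\ref{lem:tensorfroblb}) matches the paper.
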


\begin{proof}
Let  $\bm{T_k} : (\R^r)^{\otimes k} \to \R$ be a symmetric tensor with  $\norm{ \bm{T_k} }_F^2 = 1$ .  We have  
\eq{
\inner{ vec(\bm{T_k})}{\E[ vec( \sJ^{\otimes k}) vec(\sJ^{\otimes k})^\top] vec(\bm{T_k}) }  
=\E \left[\norm{\bs{w} \vert_{\cJ}}_2^{2k} \right]^{-1}  \E  \left[\inner{\bm{T_k}}{(\bs{D }\bs{V}^\top \bs{w} \vert_{\cJ})^{\otimes k}} ^2\right],  ~~~~ \label{condz:eq0}
}
where we use that $w/\norm{\bs{w}}_2$ and $\norm{\bs{w}}_2$ are independent.  Let $\bm{\hat T_k} : (\R^d)^{\otimes k} \to \R$ such that
\eq{
\bm{\hat T_k}[\bs{u}_1, \cdots, \bs{u}_k] = T_k[\bs{D} \bs{V}\vert_{\cJ} ^\top \bs{u}_1, \cdots, \bs{D} \bs{V}\vert_{\cJ} ^\top \bs{u}_k].
}
By using Lemma  \ref{lem:tensorfroblb} and \cite[Lemma 23]{Damian2022NeuralNC}, we have $$\eqref{condz:eq0} \geq k! \norm{\bm{ \hat T_k } }_F^2  \E \left[\norm{\bs{w} \vert_{\cJ}}_2^{2k} \right]^{-1} \geq k! \sigma_r^{2k}(\bs{V} \vert_{\cJ} \bs{D})  \E \left[\norm{\bs{w} \vert_{\cJ}}_2^{2k} \right]^{-1}.$$
\end{proof}


\begin{lemma}
\label{lem:taudef}
There exists $\tau> 0$ (that depends on $(k^{\star},  \ghc_{k^{\star}})$ for SI and universal for MI) such that for $b \sim \cN(0,1),$ we have
\eq{
& \mpr\left [\abs{ \shc(b)}  \geq \tau \right] \geq \frac{2}{3} ~~ \text{and} ~~ \mpr \left[  \frac{\ngt}{N} \geq \frac{1}{3}  \right] \geq 1 - \exp\left(  - \frac{2N}{9} \right). \label{biasagg:count}
}
\end{lemma}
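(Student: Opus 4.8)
The plan is to decouple the two inequalities in \eqref{biasagg:count}: the second is a routine concentration consequence of the first, so the substance lies in exhibiting a threshold $\tau>0$ with $\mpr[\abs{\shc(b)}\ge\tfrac23]$ — more precisely $\mpr[\abs{\shc(b)}\ge\tau]\ge \tfrac23$ — for $b\sim\cN(0,1)$.

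First I would reduce the second claim to the first. Since the biases appearing in $\ngt=\sum_{l=1}^N\indic{\abs{\shc(\bzl)}\ge\tau}$ are i.i.d.\ $\cN(0,1)$, the summands are i.i.d.\ Bernoulli with parameter $p\coloneqq\mpr[\abs{\shc(b)}\ge\tau]$, so $\ngt\sim\mathrm{Binomial}(N,p)$. Writing $N/3=Np-N(p-1/3)$ and applying Hoeffding's inequality to the lower tail of a sum of independent $[0,1]$-valued random variables gives
\eq{
\mpr\!\left[\frac{\ngt}{N}<\frac13\right]\le\exp\!\big(-2N(p-\tfrac13)^2\big);
}
once $p\ge 2/3$ is established we have $p-1/3\ge 1/3$, so the right-hand side is at most $\exp(-2N/9)$, which is the second inequality of \eqref{biasagg:count}.

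Next I would produce $\tau$ from the closed form of $\rhc_k$ in \eqref{eq:constantterms}. In the SI case $\shc(b)=\ghc_{k^{\star}}\rhc_{k^{\star}}(b)/(k^{\star}-1)!$ with $\ghc_{k^{\star}}\ne 0$, so it suffices to keep $\abs{\rhc_{k^{\star}}(b)}$ bounded away from $0$ with probability $\ge 2/3$. For $k^{\star}=1$ we have $\rhc_1(b)=1-\Phi(-b)=\Phi(b)>0$, and $\tau=\abs{\ghc_1}/3$ works since $\{\Phi(b)\ge 1/3\}=\{b\ge\Phi^{-1}(1/3)\}$ has probability exactly $2/3$. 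For $k^{\star}\ge 2$, $\rhc_{k^{\star}}(b)=(2\pi)^{-1/2}e^{-b^2/2}H_{e_{k^{\star}-2}}(-b)$ is a nowhere-vanishing Gaussian factor times a degree-$(k^{\star}-2)$ polynomial, so its zero set is finite, whence $\mpr[\abs{\rhc_{k^{\star}}(b)}>0]=1$; continuity from below of the measure then yields $\tau'>0$ with $\mpr[\abs{\rhc_{k^{\star}}(b)}\ge\tau']\ge 2/3$, and I take $\tau=\abs{\ghc_{k^{\star}}}\tau'/(k^{\star}-1)!$, which depends only on $(k^{\star},\ghc_{k^{\star}})$. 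The MI case is easier still: $\shc(b)=\rhc_2(b)=(2\pi)^{-1/2}e^{-b^2/2}>0$, so $\mpr[\abs{\shc(b)}\ge\tau]=\mpr[b^2\le-2\log(\sqrt{2\pi}\,\tau)]$, and the universal value $\tau=(2\pi)^{-1/2}\exp(-\tfrac12(\Phi^{-1}(5/6))^2)$ makes this probability exactly $2/3$.

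I do not anticipate a real obstacle here; the only points needing care are (i) the numerology, i.e.\ arranging constants so that the $2/3$ margin in the first claim produces precisely the $\exp(-2N/9)$ rate in the second — this is why Hoeffding with deficit $p-1/3\ge 1/3$ is the natural tool — and (ii) in the SI, $k^{\star}\ge 2$ case, upgrading the qualitative statement ``$\rhc_{k^{\star}}(b)\ne 0$ almost surely'' to a \emph{uniform} lower bound $\tau'$ holding with probability $\ge 2/3$, which is exactly where finiteness of the zero set of the Hermite polynomial is used.
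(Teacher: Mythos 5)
Your proof is correct, and for $k^{\star}\in\{1,2\}$ (and hence MI) it matches the paper's approach; the interesting divergence is at $k^{\star}\ge 3$ in the SI case. There the paper invokes the Carbery--Wright anti-concentration theorem (\cite[Theorem 8]{Carbery2001DistributionalAL}) to produce an \emph{explicit} threshold $\tau$ depending on $k^{\star}$, whereas you argue softly: $\rhc_{k^{\star}}(b)$ equals a strictly positive Gaussian factor times the polynomial $H_{e_{k^{\star}-2}}(-b)$, which has finitely many roots, so $\mpr[\,\abs{\rhc_{k^{\star}}(b)}>0\,]=1$; continuity from below along the increasing sets $\{\abs{\rhc_{k^{\star}}(b)}\ge 1/n\}$ then yields some $\tau'>0$ with $\mpr[\,\abs{\rhc_{k^{\star}}(b)}\ge\tau'\,]\ge 2/3$. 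Since the lemma only asserts existence of a constant $\tau$ depending on $(k^{\star},\ghc_{k^{\star}})$, and nowhere downstream is its numeric value tracked beyond being a problem-dependent constant, your elementary argument is fully sufficient and avoids importing Carbery--Wright. What the paper's route buys is an explicit value of $\tau$ (potentially useful if one wanted to make the hidden constants concrete), at the cost of a heavier external ingredient. Your reduction of the second claim to the first via Hoeffding with the deficit $p-1/3\ge1/3$ is exactly right and mirrors the paper.
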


\begin{proof}
In the following,  we will prove an anti-concentration result for $\rhc_k(b),$ $k \in \N$.  Note that by scaling the $k = k^{\star}$ case with  $\abs{\ghc_{k^{\star}}},$  the statement can be extended to SI.   MI immediately follows from the $k = 2$ case.

\noindent
For $k = 1$,  since $\rhc_k(b) \sim \text{Unif}[0,1]$, if we take $\tau =  1/3$, we have the first statement.  For $k = 2$,  since  $\rhc_k(b) = \frac{e^{\frac{-b^2}{2}}}{\sqrt{2\pi}}$, if we choose $\tau = \frac{1 }{e \sqrt{2\pi}}$,  we have 
\eq{
\mpr\left [ \rhc_k(b)  \geq \tau \right]  = \mpr\left [ \abs{b}  \leq \sqrt{2} \right]  \labelrel\geq{taudef:ineqq0} 1 - \frac{1}{e \sqrt{2}} \geq \frac{2}{3}. 
}
where we use  $\mpr[\abs{b} \geq t] \leq \tfrac{e^{-t^2/2}}{t}$ for  \eqref{taudef:ineqq0}.
For $k \geq 3$,  we have
\eq{
& \abs{ \rhc_k(b) }    \leq       \frac{1/ (e^2 \sqrt{2\pi})}{(2C)^{k-2}} \left( \frac{\varepsilon}{k-2} \right)^{k - 2}              \sqrt{  (k - 2)! }  \Rightarrow \abs{b} \geq 2 ~ \text{OR} ~   \abs{H_{e_{k -2}}  (- b)}      \leq       \left( \frac{\varepsilon/2C}{(k-2) } \right)^{k - 2}                \sqrt{ (k - 2)! },
}where $C$ is the constant appeared in \cite[Theorem 8]{Carbery2001DistributionalAL}.
Therefore, if we choose 
\eq{
\tau =        \frac{1/ (e^2 \sqrt{2\pi})}{(2C)^{k -2}}  \left( \frac{\varepsilon}{k -2} \right)^{k - 2} \frac{ \sqrt{ (k - 2)! } }{(k - 1)!}, 
}
by \cite[Theorem 8]{Carbery2001DistributionalAL},  we have
\eq{
\mpr\left [\abs{ \rhc_k(b)}  \leq \tau \right] 
 \leq  \mpr \left[ \abs{b} \geq 2 \right] +   \mpr \left[ H^2_{e_{k -2}}(- b) \leq  \frac{1}{C^{2k -4}}  \left( \frac{\varepsilon}{2k -4} \right)^{2k - 4}  (k - 2)!  \right]  
\leq \frac{1}{2 e^2} + \varepsilon.  
}
By choosing $\varepsilon = \tfrac{1}{6}$,  we have  the first part of the statement for $k \geq 3$ as well.   The second part  follows from Hoeffding's inequality and the result in first part.

\end{proof}

\subsubsection{Lemmas for Moments}

\begin{lemma}
\label{lem:errterm}
For any event $E$,
\eq{
& \text{SI}: ~~  \abs*{ \E_{\bs{w}} \left[ z_k(\wJ)  \sJ^k \inner{\bs{v}}{\bs{x}_i }^k \indic{E}   \right] }  \leq   \abs{c_k}  9^{k(k^{\star} - 1)} \abs{\inner{\bs{v}}{\bs{x}_i}}^{k} \mpr[E]^{1/2}  \\
& \text{MI}: ~  \abs*{ \E_{\bs{w}} \left[ z_k(\wJ)  \inner{ \sJ }{  \bs{V}^\top \bs{x}_i }^k \indic{E}   \right] } \leq  \frac{2^{k}}{(4k)^{1/4}}  \frac{\sigma_1^{k}( \bs{V} \vert_{\cJ} \bs{D}) }{\sigma_r^{k}( \bs{V} \vert_{\cJ} \bs{D}) } \norm{\bm{\tilde T_k}}_F \norm{\bs{V}^\top \bs{x}_i}_2^{k} \mpr[E]^{1/4} .
}
\end{lemma}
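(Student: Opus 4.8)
The plan is to substitute the definition of $z_k$ and reduce to a moment comparison for a single coordinate of a uniform spherical vector. Since $z_k(\wJ)\,\sJ^k = c_k\,\E_{\bs w}[\sJ^{2k}]^{-1}\sJ^{2k}$ almost surely (with the conventions $0/0=0$ and $0^0=1$), the expectation equals $c_k\,\inner{\bs v}{\bs x_i}^k\,\E_{\bs w}[\sJ^{2k}\indic{E}]/\E_{\bs w}[\sJ^{2k}]$, and Cauchy--Schwarz bounds it (in absolute value) by $|c_k|\,|\inner{\bs v}{\bs x_i}|^k\,(\E[\sJ^{4k}]^{1/2}/\E[\sJ^{2k}])\,\mpr[E]^{1/2}$. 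Writing $\sJ = \inner{\bs v}{\wJ}^{k^\star-1} = \rho^{\,k^\star-1} g^{\,k^\star-1}$ with $\rho = \norm{\bs v\vert_\cJ}_2\le 1$ and $g$ a coordinate of a uniform point on $S^{|\cJ|-1}$ (using $\inner{\bs v}{\wJ}=\inner{\bs v\vert_\cJ}{\wJ}$ since $\wJ$ is supported on $\cJ$), the $\rho$-powers cancel and the remaining ratio equals $\E[g^{4m}]^{1/2}/\E[g^{2m}]$ with $m=k(k^\star-1)$. Using the exact even moments $\E[g^{2\ell}] = (2\ell-1)!!/\prod_{j=0}^{\ell-1}(|\cJ|+2j)$, the dimension-dependent factor telescopes into $\prod_{j=0}^{m-1}\frac{|\cJ|+2j}{|\cJ|+2m+2j}\le 1$, so $\E[g^{4m}]/\E[g^{2m}]^2 \le (4m-1)!!/((2m-1)!!)^2 = \binom{4m}{2m}/\binom{2m}{m}\le 16^m$, i.e. $\E[g^{4m}]^{1/2}/\E[g^{2m}]\le 4^{\,m}\le 9^{\,k(k^\star-1)}$; when $k^\star=1$ one has $\sJ\equiv 1$ and the bound holds with constant $1$.

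\textbf{Multi-index case: reduction to a bilinear form.} Here the plan is to write the expectation as a bilinear form in $M := \E_{\bs w}[vec(\sJ^{\otimes k})\,vec(\sJ^{\otimes k})^\top]$ and apply Cauchy--Schwarz twice. Put $u := vec(\bm{\tilde T_k})$, $\psi(\bs w) := vec(\sJ^{\otimes k})$, $\xi := vec((\bs V^\top\bs x_i)^{\otimes k})$, so that $\inner{\sJ}{\bs V^\top\bs x_i}^k = \inner{\psi}{\xi}$ and the quantity to bound is $\E_{\bs w}[\inner{M^+u}{\psi}\,\inner{\psi}{\xi}\,\indic{E}]$. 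Applying Cauchy--Schwarz first with the factor $\inner{M^+u}{\psi}$ and then again to $\E[\inner{\psi}{\xi}^2\indic{E}]\le\E[\inner{\psi}{\xi}^4]^{1/2}\mpr[E]^{1/2}$ gives
\[
\big|\E_{\bs w}[\inner{M^+u}{\psi}\inner{\psi}{\xi}\indic{E}]\big| \le \big(u^\top M^+u\big)^{1/2}\,\E_{\bs w}\big[\inner{\psi}{\xi}^4\big]^{1/4}\,\mpr[E]^{1/4},
\]
which produces the exponent $1/4$ on $\mpr[E]$. Since $\E[\inner{M^+u}{\psi}^2] = u^\top M^+MM^+u = u^\top M^+u$, and $u,\psi$ both lie in the symmetric subspace where Proposition~\ref{prop:condz} shows $M \succeq k!\,\sigma_r^{2k}(\bs V\vert_\cJ\bs D)\,\E[\norm{\bs w\vert_\cJ}_2^{2k}]^{-1}$, I obtain $u^\top M^+u \le \E[\norm{\bs w\vert_\cJ}_2^{2k}]\,\norm{\bm{\tilde T_k}}_F^2/(k!\,\sigma_r^{2k}(\bs V\vert_\cJ\bs D))$ (the claim being trivial if $\sigma_r(\bs V\vert_\cJ\bs D)=0$).

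\textbf{Multi-index case: the fourth moment and bookkeeping.} For $\E[\inner{\psi}{\xi}^4] = \E[\inner{\sJ}{\bs V^\top\bs x_i}^{4k}]$ the plan is to exploit that the direction $\wJ$ and the radius $R := \norm{\bs w\vert_\cJ}_2$ are independent: with $\bs g := \bs D\bs V^\top\bs w\vert_\cJ = R\,\sJ \sim \cN\big(0,(\bs V\vert_\cJ\bs D)^\top(\bs V\vert_\cJ\bs D)\big)$ and $R\perp\sJ$, one gets $\E[\inner{\sJ}{\bs V^\top\bs x_i}^{4k}] = \E[\inner{\bs g}{\bs V^\top\bs x_i}^{4k}]/\E[R^{4k}] = \tau^{4k}(4k-1)!!/\E[R^{4k}]$, where $\tau^2 = \norm{(\bs V\vert_\cJ\bs D)\bs V^\top\bs x_i}_2^2 \le \sigma_1^2(\bs V\vert_\cJ\bs D)\norm{\bs V^\top\bs x_i}_2^2$. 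Substituting these two bounds into the display and using $R^2\sim\chi^2_{|\cJ|}$, so that $\E[R^{2k}]^2/\E[R^{4k}] = \prod_{j=0}^{k-1}\frac{|\cJ|+2j}{|\cJ|+2k+2j}\le 1$, all radial moments cancel and the bound collapses to $\frac{((4k-1)!!)^{1/4}}{\sqrt{k!}}\cdot\frac{\sigma_1^k(\bs V\vert_\cJ\bs D)}{\sigma_r^k(\bs V\vert_\cJ\bs D)}\norm{\bm{\tilde T_k}}_F\norm{\bs V^\top\bs x_i}_2^k\,\mpr[E]^{1/4}$. The last step is to check $\frac{((4k-1)!!)^{1/4}}{\sqrt{k!}}\le \frac{2^k}{(4k)^{1/4}}$, equivalently $4k\binom{4k}{2k}\binom{2k}{k}\le 64^k$, which is elementary (a direct check for small $k$, together with the fact that $\big(4k\binom{4k}{2k}\binom{2k}{k}\big)/64^k$ decreases after $k=1$, or simply Stirling's formula).

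\textbf{Main obstacle.} I expect the delicate point to be the multi-index bookkeeping: handling the pseudoinverse $M^+$ correctly on the symmetric subspace so that Proposition~\ref{prop:condz} is applicable, and choreographing the two Cauchy--Schwarz steps together with the radius/direction decomposition so that the $|\cJ|$-dependent moments $\E[R^{2k}],\E[R^{4k}]$ and the spherical moments cancel, leaving the clean dimension-free constant $2^k/(4k)^{1/4}$.
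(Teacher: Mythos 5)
Your proof is correct and takes essentially the same route as the paper's: a single Cauchy--Schwarz in the single-index case and two nested Cauchy--Schwarz steps in the multi-index case to produce the $\mpr[E]^{1/2}$ and $\mpr[E]^{1/4}$ factors, Proposition~\ref{prop:condz} to control the $M^+$ quadratic form on the symmetric subspace, the radius/direction decomposition of $\bs{w}\vert_{\cJ}$ to cancel the $|\cJ|$-dependent radial moments, and a Stirling-type bound for the terminal constant. The only substantive difference is that you compute the spherical moment ratio $\E[g^{4m}]/\E[g^{2m}]^2$ explicitly via Beta-function formulas and binomial coefficients, whereas the paper invokes Gaussian hypercontractivity (Lemma~\ref{lem:hypercontactivity}); both yield bounds dominated by $9^{k(k^\star-1)}$, and your version also correctly retains the $\norm{\bm{\tilde T_k}}_F$ factor that a couple of the paper's intermediate displays drop by typo (it reappears in the lemma statement).
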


\begin{proof}
For SI:
\eq{
 \abs*{ \E_{\bs{w}} \left[ z_k(\wJ)  \sJ^k \inner{\bs{v}}{\bs{x}_i }^k \indic{E}   \right] } & \labelrel\leq{err:ineqq0}  \abs{c_k} \abs{ \inner{\bs{v}}{\bs{x}_i }}^k  \E_{\bs{w}} \left[ \left( \frac{\sJ^{2k}}{\E_{\bs{w}}[\sJ^{2k}]} \right)^2 \right]^{1/2} \mpr[E]^{1/2}   \\
& \labelrel\leq{err:ineqq1}   \abs{c_k} \abs{ \inner{\bs{v}}{\bs{x}_i }}^k   9^{k(k^{\star} - 1)} \mpr[E]^{1/2},
} 
where we used Cauchy-Schwartz inequality for \eqref{err:ineqq0} and Lemma \ref{lem:hypercontactivity} for  \eqref{err:ineqq1}. 

\noindent
For MI:  By using  Cauchy-Schwartz inequality,
\eq{
\E_{\bs{w}} \left[ z_k(\wJ)  \inner{ \sJ }{  \bs{V}^\top \bs{x}_i }^k  \indic{E}   \right] \leq \E_{\bs{w}} \left[ z^2_k(\wJ) \right]^{1/2} \E \left[  \inner{ \sJ }{  \bs{V}^\top \bs{x}_i }^{4k} \right]^{1/4} \mpr[E]^{1/4}. \label{errterm:eq0}
}
We have
\eq{
\E \left[  \inner{ \sJ }{  \bs{V}^\top \bs{x}_i }^{4k} \right]^{1/4}
& =  \norm{ ( \bs{H} \bs{x}_i ) \vert_{\cJ} }_2^{4k} (4k - 1)!!  \E \left[ \norm{\bs{w} \vert_{\cJ}}_2^{4k}  \right]^{-1}  \\
& \leq \sigma_1^{4k}( \bs{V} \vert_{\cJ} \bs{D})  \norm{\bs{V}^\top \bs{x}_i}_2^{4k}  (4k - 1)!!   \E_{\bs{w}} \left[ \norm{\bs{w} \vert_{\cJ}}_2^{4k}  \right]^{-1},  \label{errterm:eq1}
}
where we used $(\bs{H} \bs{x}_i)\vert_{\cJ} = \bs{V} \vert_{\cJ} \bs{D} \bs{V}^\top \bs{x}_i$ in the last step.  Moreover, we have
\eq{
& \E_{\bs{w}} \left[ z^2_k(\wJ) \right]  \\
& = \E_{\bs{w}} \Bigg[         \inner{vec( \bm{\tilde T_k} ) }{\E \left[ vec( \sJ^{\otimes k} )  vec(   \sJ^{\otimes k})^\top  \right]^{+}           vec(   \sJ^{\otimes k})}      \inner{ vec(   \sJ^{\otimes k})  }{\E \left[ vec( \sJ^{\otimes k} )  vec(   \sJ^{\otimes k})^\top  \right]^{+}         vec( \bm{\tilde T_k} )}        \Bigg] \\
& =   \inner{vec( \bm{\tilde T_k} ) }{\E \left[ vec( \sJ^{\otimes k} )   vec(   \sJ^{\otimes k})^\top   \right]^{+}   vec( \bm{\tilde T_k} )}  \\
& \leq  \frac{ \E \left[\norm{\bs{w} \vert_{\cJ}}_2^{2k} \right]}{k!   \sigma_r^{2k}( \bs{V} \vert_{\cJ} \bs{D})  } \label{errterm:eq1.5}
}
where we used  Proposition \ref{prop:condz} in the last line.
By using   \eqref{errterm:eq1} and   \eqref{errterm:eq1.5}, we have
\eq{
\eqref{errterm:eq0} 
\leq     \left( \frac{(4k - 1)!!}{k! k!} \right)^{1/4} \frac{\sigma_1^{k}( \bs{V} \vert_{\cJ} \bs{D}) }{\sigma_r^{k}( \bs{V} \vert_{\cJ} \bs{D}) } \norm{\bs{V}^\top \bs{x}_i}_2^{k}  \mpr[E]^{1/4}
\leq     \frac{2^{k}}{(4k)^{1/4}} \frac{\sigma_1^{k}( \bs{V} \vert_{\cJ} \bs{D}) }{\sigma_r^{k}( \bs{V} \vert_{\cJ} \bs{D}) } \norm{\bs{V}^\top \bs{x}_i}_2^{k}  \mpr[E]^{1/4},  \label{errterm:eq2}
}where we use Stirling's formula in the last step.
\end{proof}

\subsection{Approximation of the target}
We define
\eq{
h(\bs{w},\azp,\bop,\bzl) \coloneqq \sum_{k = 0}^p \frac{v_k(\azp,\bop)}{\eta^k \shc^k(\bzl)} z_k(\sJ) \indic{E},
}
where
\eq{  
E    \equiv     \left\{\begin{aligned}\quad
&    \!\!\!\!\!             \abs{\sJ} \leq \tfrac{1}{\eta \tau} ~~  \text{\small AND}  ~~  \norm{\bs{v} \vert_{\cJ^c}}_2^2    \leq   \tfrac{1}{4}   ~~  \text{\small AND}  ~~  \abs{\shc(\bzl)} \geq \tau    ~~ \text{\small AND} ~~  \max_{i \in [n]}   \eta \abs{ \shc(\bzl) \sJ \inner{\bs{v}}{\bs{x}_i}}   \leq   1      &  \text{SI}  \\
&    \!\!\!\!\!                \norm{\sJ}_2 \leq \tfrac{1}{\eta \tau}  ~  \text{\small AND}  ~    \norm{\bs{V} \vert_{\cJ^c}}_F^2   \leq   \tfrac{1}{4}  ~  \text{\small AND} ~   \abs{\shc(\bzl)}   \geq    \tau   ~  \text{\small AND}  ~   \max_{i \in [n]}    \eta \abs{ \shc(\bzl) \inner{\sJ}{     \bs{V}^\top \bs{x}_i}  } \leq   1     & \text{MI}
     \end{aligned}\right. 
}
\begin{lemma}
\label{lem:approx}
Let us have iid $\{\bzl \}_{l \in [N]}.$  We assume that: For SI,  $M \geq 2p(k^{\star} -1),$  $\ngt > 0$ and  $\norm{\bs{v} \vert_{\cJ^c}}_2^2 \leq \tfrac{1}{4}$.  For MI,  $M \geq 2p,$ $\ngt > 0$ and  $\norm{\bs{V} \vert_{\cJ^c}}_F^2 \leq \tfrac{1}{4}.$  Then,  there exists a constant $C_{k^{\star}} > 0$ depending on $k^{\star}$, and a universal constant $\tilde C > 0 $ such that the following holds:
\begin{itemize}
\item  For SI:
\eq{
&  (\rnu{1})  ~ \abs*{ \E_{(\bs{w},\azp,\bop)}    \left[ \frac{1}{\ngt} \sum_{l = 1}^N  h(  \bs{w}, \azp,\bop, \bzl)  \phi \left(  \azp \eta  \shc  \big(\bzl \big)  \sJ \inner{\bs{v}}{\bs{x}_i} + \bop \right)  \right]     -    \gt(\inner{\bs{v}}{\bs{x}_i}) }  \\
&  \hspace{4em} \leq     C_{k^{\star}} e^{\frac{p}{4}} \left( \max_{k \leq p} \abs{\inner{\bs{v}}{\bs{x}_i}}^k  \right)   \mpr_{\bs{w}} \left[ \abs{\sJ} \geq \frac{1}{\eta \tau} ~ \text{OR} ~ \max_{i \in [n]}   \abs{ \sJ \inner{\bs{v}}{\bs{x}_i} }  > \frac{1}{\eta k^{\star}} \right] ^{\frac{1}{2}} \label{lemapprox:sibias} \\
& (\rnu{2}) ~~ \abs{h( \bs{w}, \azp,\bop, \bzp) } \leq \tilde C e^{\frac{p}{4}} \max_{k \leq p} \frac{ M^{k(k^{\star} - 1)} }{ \eta^{2k} \tau^{2k} }.  \label{lemapprox:sihardbound}
}
\item For MI:
\eq{
& (\rnu{1}) ~ \abs*{ \E_{(\bs{w},\azp,\bop)}    \left[ \frac{1}{\ngt} \sum_{l = 1}^N  h(  \bs{w}, \azp,\bop, \bzl)  \phi \left(  \azp \eta  \shc  \big(\bzl \big)  \inner{\sJ}{\bs{V}^\top \bs{x}_i} + \bop \right)  \right]   -    \tgt(\bs{V}^\top \bs{x}_i) } \\ 
&  \hspace{1em} \leq    C_{k^{\star}} (e \sqrt{r})^{\frac{p}{4}}     \left(  \frac{\sigma_1(\bs{V} \vert_{\cJ} \bs{D})}{\sigma_r(\bs{V} \vert_{\cJ} \bs{D})} \right)^p      \left( \max_{k \leq p} \norm{\bs{V}^\top \bs{x}_i}_2^k  \right)      \mpr_{\bs{w}} \left[ \norm{\sJ}_2 \geq \frac{1}{\eta \tau}  ~ \text{OR} ~   \max_{i \in [n]}  \abs*{\inner{\sJ}{\bs{V}^\top \bs{x}_i}} > \frac{1}{\eta} \right]  ^{\frac{1}{4}} ~~~~ \label{lemapprox:mibias} \\
& (\rnu{2}) ~ \abs{h( \bs{w}, \azp,\bop, \bzp) } \leq  \tilde C (e\sqrt{r})^{\frac{p}{4}} \max_{k \leq p} \frac{ M^{k} }{ \eta^{2k} \tau^{2k} \sigma^{2k}_r(\bs{D}) }.  \label{lemapprox:mihardbound}
}
\end{itemize}
\end{lemma}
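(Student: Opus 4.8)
The plan is to peel off the neuron-averaging structure, integrate out the second-layer randomness $(\azp,\bop)$ via Lemma~\ref{lem:damianetal}, then average over the weight direction $\bs{w}$, exploiting that $z_k$ is rigged so that its $k$-th moment against $\sJ^{\otimes k}$ returns the $k$-th coefficient of $\gt$ (resp.\ $\tgt$). Everything lost relative to this exact identity is confined to the complement of $E$ and is handled by Lemma~\ref{lem:errterm}. \textbf{Per-neuron integration.} Fix $l$; if $\abs{\shc(\bzl)}<\tau$ then $h(\bs{w},\azp,\bop,\bzl)\equiv 0$ by the definition of $E$, so such $l$ contribute nothing. For the surviving $l$ put $t_{i,l}\coloneqq\eta\shc(\bzl)\sJ\inner{\bs{v}}{\bs{x}_i}$ (SI), resp.\ $t_{i,l}\coloneqq\eta\shc(\bzl)\inner{\sJ}{\bs{V}^\top\bs{x}_i}$ (MI); on $E$ the last defining condition gives $\abs{t_{i,l}}\le 1$. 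Since $\indic{E}$ is independent of $(\azp,\bop)$, Lemma~\ref{lem:damianetal} (with $a=\azp$, $b=\bop$, $t=t_{i,l}$) yields $\E_{(\azp,\bop)}[v_k(\azp,\bop)\phi(\azp t_{i,l}+\bop)]=t_{i,l}^k$, hence, the factors $\eta^k\shc^k(\bzl)$ cancelling, $\E_{(\azp,\bop)}[h(\bs{w},\azp,\bop,\bzl)\phi(\azp t_{i,l}+\bop)]=\indic{E}\sum_{k\le p}z_k(\sJ)\sJ^{k}\inner{\bs{v}}{\bs{x}_i}^k=:\indic{E}\,g_i(\bs{w})$ in SI, and the tensor analogue $\indic{E}\sum_{k\le p}z_k(\sJ)\inner{\sJ}{\bs{V}^\top\bs{x}_i}^k$ in MI.

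\textbf{The moment identity and the main term.} Now take $\E_{\bs{w}}$ of $g_i(\bs{w})$. In SI, by the definition of $z_k$, on the almost sure event $\sJ\neq 0$ we have $z_k(\sJ)\sJ^k=c_k\,\E_{\bs{w}}[\sJ^{2k}]^{-1}\sJ^{2k}$, so $\E_{\bs{w}}[z_k(\sJ)\sJ^k]=c_k$ and $\E_{\bs{w}}[g_i(\bs{w})]=\sum_k c_k\inner{\bs{v}}{\bs{x}_i}^k=\gt(\inner{\bs{v}}{\bs{x}_i})$; here $M\ge 2p(k^\star-1)$ and $\norm{\bs{v}\vert_{\cJ^c}}_2^2\le\tfrac14$ (so $\norm{\bs{v}\vert_{\cJ}}_2\ge\tfrac{\sqrt3}{2}$) guarantee that the moments $\E_{\bs{w}}[\sJ^{2k}]$, $k\le p$, are finite and bounded below. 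In MI, $z_k$ is defined through the pseudoinverse of $\bm{\Gamma}_k\coloneqq\E_{\bs{w}}[vec(\sJ^{\otimes k})vec(\sJ^{\otimes k})^\top]$, so $\E_{\bs{w}}[z_k(\sJ)\inner{\sJ}{\bs{V}^\top\bs{x}_i}^k]=\inner{vec(\bm{\tilde T_k})}{\bm{\Gamma}_k^{+}\bm{\Gamma}_k\,vec((\bs{V}^\top\bs{x}_i)^{\otimes k})}$; since $\bm{\tilde T_k}$ is symmetric and, by Proposition~\ref{prop:condz} together with $M\ge 2p$ and $\norm{\bs{V}\vert_{\cJ^c}}_F^2\le\tfrac14$ (hence $\sigma_r(\bs{V}\vert_{\cJ}\bs{D})>0$), the range of $\bm{\Gamma}_k$ contains all symmetric $k$-tensors, this equals $\inner{\bm{\tilde T_k}}{(\bs{V}^\top\bs{x}_i)^{\otimes k}}$, and summing over $k$ gives $\tgt(\bs{V}^\top\bs{x}_i)$.

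\textbf{Error terms and hard bounds.} Only the $\ngt$ indices with $\abs{\shc(\bzl)}\ge\tau$ survive the normalized sum, so Steps 1--2 give $\E_{(\bs{w},\azp,\bop)}\bigl[\tfrac1{\ngt}\sum_l h(\bs{w},\azp,\bop,\bzl)\phi(\azp t_{i,l}+\bop)\bigr]=\gt(\inner{\bs{v}}{\bs{x}_i})-\tfrac1{\ngt}\sum_{l:\abs{\shc(\bzl)}\ge\tau}\E_{\bs{w}}[g_i(\bs{w})\indic{E^c}]$, whence the left side of \eqref{lemapprox:sibias} is at most $\max_l\bigl|\E_{\bs{w}}[g_i(\bs{w})\indic{E^c}]\bigr|$. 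Expanding $g_i$ and applying Lemma~\ref{lem:errterm} term by term with the event $E^c$, then bounding $\sum_k\abs{c_k}9^{k(k^\star-1)}\le C_{k^\star}e^{p/4}$ via the coefficient estimate of Lemma~\ref{lem:damianetal2} (with $r=1$, where $\norm{\bm{\tilde T_k}}_F=\abs{c_k}$), and finally using $\abs{\shc(b)}\le C_{k^\star}$ and $\norm{\bs{v}\vert_{\cJ^c}}_2^2\le\tfrac14$ to discard the first three defining conditions of $E$ and rewrite the remaining bad event, produces \eqref{lemapprox:sibias}; \eqref{lemapprox:mibias} is identical using the MI half of Lemma~\ref{lem:errterm} (source of the exponent $\tfrac14$ and the $\sigma_1/\sigma_r$ factor) and $\sum_k\norm{\bm{\tilde T_k}}_F(k+1)^2\le C(e\sqrt r)^{p/4}$. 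For \eqref{lemapprox:sihardbound}--\eqref{lemapprox:mihardbound}, $h$ vanishes off $E$; on $E$ we have $\abs{\shc(\bzp)}\ge\tau$ and $\abs{\sJ}\le\tfrac1{\eta\tau}$ (resp.\ $\norm{\sJ}_2\le\tfrac1{\eta\tau}$), $\abs{v_k}\le 6\sqrt2(k+1)^2$, and $\abs{z_k(\sJ)}\lesssim\abs{c_k}M^{k(k^\star-1)}\eta^{-k}\tau^{-k}$ from $\E_{\bs{w}}[\sJ^{2k}]\gtrsim M^{-k(k^\star-1)}$ (resp.\ $\bm{\Gamma}_k^{+}\preceq\E[\norm{\bs{w}\vert_{\cJ}}_2^{2k}]/(k!\,\sigma_r^{2k}(\bs{V}\vert_{\cJ}\bs{D}))\,\bm{I}\lesssim M^k\sigma_r^{-2k}(\bs{D})\,\bm{I}$ by Proposition~\ref{prop:condz}); summing over $k\le p$ with $\sum_k(k+1)^2\norm{\bm{\tilde T_k}}_F\le\tilde C(e\sqrt r)^{p/4}$ gives the claims.

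\textbf{Main obstacle.} I expect the delicate part to be the MI moment identity in Step 2 --- establishing that $\bm{\Gamma}_k^{+}\bm{\Gamma}_k$ acts as the identity on symmetric tensors, which needs Proposition~\ref{prop:condz}, full-rankness of $\bs{D}$, the bound $\norm{\bs{V}\vert_{\cJ^c}}_F^2\le\tfrac14$, and $M\ge 2p$ simultaneously --- together with the mechanical but error-prone bookkeeping in Step 3: tracking which of the four conditions defining $E$ depend on $(\azp,\bop)$ versus $(\bs{w},\bzl)$, and converting $E^c$ into the cleaner bad event $\{\abs{\sJ}\ge\tfrac1{\eta\tau}\text{ or }\max_i\abs{\sJ\inner{\bs{v}}{\bs{x}_i}}>\tfrac1{\eta k^\star}\}$ appearing in the statement.
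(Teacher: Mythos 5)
Your proof follows essentially the same route as the paper's: fix $l$, integrate out $(\azp,\bop)$ via Lemma~\ref{lem:damianetal} (valid on $E$ since there $\abs{t_{i,l}}\le1$), exploit the exact moment identity built into $z_k$ to recover $\gt$ (resp.\ $\tgt$), isolate the error as $\E_{\bs{w}}[g_i(\bs{w})\indic{E^c}]$ and bound it with Lemma~\ref{lem:errterm} together with the coefficient sum from Lemma~\ref{lem:damianetal2}, and obtain the hard bound by combining the uniform bound on $v_k$ with the estimates $\E_{\bs{w}}[\sJ^{2k}]\gtrsim M^{-k(k^\star-1)}$ (SI) and the quadratic-form lower bound of Proposition~\ref{prop:condz} (MI). Your Step~2 discussion of the MI pseudoinverse identity is a bit more explicit than the paper's terse ``adjusting the arguments,'' and the phrase ``discard the first three defining conditions of $E$'' is imprecise (conditions (b) and (c) drop out and (d) is rewritten using $\abs{\shc}\lesssim k^\star$; condition (a) survives into the final bad event), but the underlying argument is the same and correct.
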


\begin{proof}
We start with SI.  Fix an $k \leq p$ and $l \in [N]$.  We have
\eq{
& \E_{(\bs{w},\azp,\bop)} \left[ \frac{1}{\ngt} \sum_{l = 1}^N  h(  \bs{w}, \azp,\bop, \bzl)  \phi \left(  \azp \eta  \shc  \big(\bzl \big)  \sJ \inner{\bs{v}}{\bs{x}_i} + \bop \right)  \right]  \\
& \labelrel={approx:eqq0}   \indic{\abs{\shc(\bzl)} \geq \tau} \E_{\bs{w}} \left[ \indic{E}  z_k(\sJ)  \sJ^k \inner{\bs{v}}{\bs{x}_i}^k \right]  \label{lemapprox:siarg0}   \\
& \labelrel={approx:eqq1}   \indic{\abs{\shc(\bzl)} \geq \tau} \left(  c_k \inner{\bs{v}}{\bs{x}_i}^k -   \E_{\bs{w}} \left[ \indic{E^c}  z_k(\sJ)  \sJ^k \inner{\bs{v}}{\bs{x}_i}^k \right]   \right) 
}
where we use Lemma   \ref{lem:damianetal} in \eqref{approx:eqq0} and the definition of $z_k$ and   $\norm{\bs{v} \vert_{\cJ}}_2^2 > 0$ in \eqref{approx:eqq1}.
Therefore, we have
\eq{
& \abs*{ \E_{(\bs{w},\azp,\bop)} \left[ \frac{1}{\ngt} \sum_{l = 1}^N  h(  \bs{w}, \azp,\bop, \bzl)  \phi \left(  \azp \eta  \shc  \big(\bzl \big)  \sJ \inner{\bs{v}}{\bs{x}_i} - \bop \right)  \right] -  \gt(\inner{\bs{v}}{\bs{x}_i}) } \\
& \leq \abs*{ \sum_{k = 0}^p \sum_{l = 1}^N  \frac{ \indic{\abs{\shc(\bzl)} \geq \tau}}{\ngt} \E_{\bs{w}} \left[ \indic{E^c}   z_k(\sJ)  \sJ^k \inner{\bs{v}}{\bs{x}_i}^k \right] } \\
& \labelrel\leq{approx:eqq2} \left( \max_{k \leq p} \abs{\inner{\bs{v}}{\bs{x}_i}}^{k} \right)   \mpr[E^c]^{1/2}   \sum_{k = 0}^p   \abs{c_k}  9^{k(k^{\star} - 1)}
}
where we use Lemma \ref{lem:errterm} for  \eqref{approx:eqq2}.  
By Lemma \ref{lem:damianetal2}, we have
\eq{
\sum_{k = 0}^p   \abs{c_k}  9^{k(k^{\star} - 1)} \labelrel\leq{approx:ineqq3}\sum_{k = 0}^p \frac{\sqrt{2}  9^{k k^{\star}}}{\sqrt{k!}} e^{\frac{p - k}{4}} 
& \leq C e^{\frac{9^{2k^{\star}}}{2}}  e^{\frac{p}{4}}.  \label{lemapprox:siarg1} 
}
where \eqref{approx:ineqq3} follows $9 \geq \sqrt{e}$.
By observing that $\abs{\shc(\bzl)} \leq k^{\star}$ and $E^c \Rightarrow    \max_{i \in [n]}  \abs*{\sJ \inner{\bs{v}}{\bs{x}_i}} > \tfrac{1}{\eta k^{\star}} ~OR~   \abs{\sJ} \geq \frac{1}{\eta \tau}$, we have  \eqref{lemapprox:sibias}.   For  \eqref{lemapprox:sihardbound}, by Lemma \ref{lem:damianetal}, we have
\eq{
\frac{\abs{v_k(\azp,\bop)}}{\eta^k \shc^k(\bzp)} \leq \frac{6 \sqrt{2} (k+1)^2}{\eta^k \tau^k}.  \label{lemapprox:vbound} 
}
Moreover,
\eq{
\abs{z_k(\sJ)} \indic{E}  \labelrel\leq{approx:ineqq4} \frac{\abs{c_k} }{\eta^k \tau^k} \frac{1}{\E_{\bs{w}} \left[ \sJ^{2k} \right]}   
\labelrel\leq{approx:ineqq5} \frac{\abs{c_k} }{\eta^k \tau^k}\frac{4^{k (k^{\star} - 1)} M^{k(k^{\star} - 1)} }{ \big( 2k (k^{\star} - 1) \big) !!}  
\labelrel\leq{approx:ineqq6}  \frac{  e^2 \abs{c_k} }{\eta^k \tau^k}  M^{k(k^{\star} - 1)}, 
}
where we use  $E \Rightarrow \abs{\sJ} \leq \tfrac{1}{\eta \tau}$ for \eqref{approx:ineqq4},    $\norm{\bs{v} \vert_{\cJ^c}}_2^2 \leq \tfrac{1}{4}$ and $M \geq 2p (k^{\star} - 1)$ for  \eqref{approx:ineqq5}, and    $\tfrac{4^{k (k^{\star} - 1)}}{\big( 2k (k^{\star} - 1) \big) !!} = \tfrac{2^{k(k^{\star}-1)}}{\big( k (k^{\star} - 1) \big) !} \leq e^2$ for   \eqref{approx:ineqq6}.
Therefore, 
\eq{
\abs{h( \bs{w}, \azp,\bop, \bzp) }   \leq  \sum_{k = 0}^p \frac{M^{k(k^{\star} - 1)}}{\eta^{2k} \tau^{2k}} 6  e^2 \sqrt{2} (k+1)^2 \abs{c_k}  
\labelrel\leq{approx:ineqq7}  \tilde C e^{\frac{p}{4}}  \max_{k \leq p}   \frac{M^{k(k^{\star} - 1)}}{\eta^{2k} \tau^{2k}},
}
where we used Lemma \ref{lem:damianetal2} for \eqref{approx:ineqq7}.
For MI, by adjusting the arguments  between \eqref{lemapprox:siarg0}-\eqref{lemapprox:siarg1} by using the  bounds for MI proven above, we can obtain  \eqref{lemapprox:mibias}. For \eqref{lemapprox:mihardbound}, we observe that 
\eq{
\abs{ z_k(\sJ)} \indic{E} \labelrel\leq{approx:ineqq8}   \norm{\sJ}_2^k \norm*{\E \left[ vec( \sJ^{\otimes k} )  vec(   \sJ^{\otimes k})^\top  \right]^{+}    vec(\bm{\tilde T_k})}_2 \indic{E} 
& \labelrel\leq{approx:ineqq9}  \frac{1}{\eta^k \tau^k} \frac{\E \left[ \norm{\bs{w} \vert_{\cJ}}_2^{2k} \right]}{k! \sigma_r^{2k}(\bs{V} \vert_{\cJ} \bs{D})} \norm{\bm{\tilde T_k}}_F \indic{E}   \\
& \labelrel\leq{approx:ineqq10}     \frac{e^4}{\eta^k \tau^k} \frac{ M^k }{\sigma_r^{2k}(\bs{D})} \norm{\bm{\tilde T_k}}_F \indic{E}  
}
where we used Cauchy Schwartz inequality for \eqref{approx:ineqq8},   Proposition \ref{prop:condz} and $E \Rightarrow  \norm{\sJ}_2 \leq \frac{1}{\eta\tau}$ for   \eqref{approx:ineqq9},  and  $E \Rightarrow  \norm{\bs{V} \vert_{\cJ^c}}_F \leq \frac{1}{2}$, $M \geq 2p$, and $\tfrac{4^k}{k!} \leq e^4$ for \eqref{approx:ineqq10}.
By \eqref{lemapprox:vbound} and Lemma \ref{lem:damianetal2},  we have
\eq{
\abs{h( \bs{w}, \azp,\bop, \bzp) } \leq  \sum_{k \leq p}  \frac{M^k 6\sqrt{2} e^4  (k+1)^2}{\eta^{2k} \tau^{2k}  \sigma_r^{2k}(\bs{D})}  \norm{\bm{\tilde T_k}}_F 
\leq \tilde C (e \sqrt{r})^{\frac{p}{4}} \max_{k \leq p}  \frac{M^k}{\eta^{2k} \tau^{2k}  \sigma_r^{2k}(\bs{D})}.  
}
\end{proof}

\subsection{Empirical Approximation}
For the following theorem, we introduce:
\eq{
\norm{X}_{\psi_2} \coloneqq \inf \left \lbrace t > 0 ~ \vert ~ \E_{(\bs{w}, \az,\bo)} \left[  \exp \left( \frac{X^2}{t^2} \right)  \right] \leq 2 \right \rbrace.
}

For the following, let us assume that  we have  i.i.d. $\{  (\bs{w}_j, \azj, \boj, \bzj)\}_{j \in [m]}$ and for $B, N \in \N$, let $m = B \cdot N$.   We will double index parameters as $\bs{w}_{jl} = \bs{w}_{(j-1)N + l}$,  $j \in [B]$ and $l \in [N]$.
Recall that
\eq{
h(  \bs{w}, \azp,\bop, \bzl ) \coloneqq    \sum_{k = 0}^p  \frac{v_{k}(\azp,\bop)}{\eta^k \shc^k(\bzl)} z_k(\sJ)   \indic{E}  \label{eq:defh2}
} 
We let
\eq{
Y_{jl} \coloneqq \begin{cases} 
h(  \bs{w}_{jl} , \azp_{jl},\bop_{jl}, \bzp_{jl})  \phi \left(  \azp_{jl}  \eta \shc \big( \bzp_{jl} \big) \inner{\bs{v}}{\frac{\bs{w}_{jl} \vert_{\cJ}}{\norm{\bs{w}_{jl} \vert_{\cJ}}_2}}^{k^{\star} - 1} \inner{\bs{v}}{\bs{x}_i}  + \bop_{jl} \right)  & \text{SI} \\[1.5ex]
h(  \bs{w}_{jl} , \azp_{jl},\bop_{jl}, \bzp_{jl})  \phi \left(  \azp_{jl}  \eta \shc \big(  \bzp_{jl} \big)  \inner{\bs{D} \bs{V}^\top  \frac{\bs{w}_{jl} \vert_{\cJ}}{\norm{\bs{w}_{jl} \vert_{\cJ}}_2} }{\bs{V}^\top \bs{x}_i} + \bop_{jl} \right)  & \text{MI} 
\end{cases}
}
Moreover let  $Y_j    \coloneqq \frac{1}{\ngt} \sum_{l = 1}^N Y_{jl}$  and $\ngt_j \coloneqq \sum_{l = 1}^N \indic{\abs{\shc(\bzp_{jl})} \geq \tau}$.
We have the following statement:
\begin{lemma}
\label{lem:empsg}
 We assume that: For SI,  $M \geq 2p(k^{\star} -1)$,  and $\ngt_j > N/3$.  For MI:  $M \geq 2p$, and $\ngt_j > N/3$.  Then,  there exists a universal constant $\tilde C > 0$ such that
\eq{
\norm{Y_j - \E_{(\bs{w},\az,\bo)}[Y_j]}_{\psi_2} \leq \tilde C  \begin{cases}
\frac{ e^{\frac{p}{4}}}{\sqrt{N}} \max_{k \leq p} \frac{ M^{k(k^{\star} - 1)} }{ \eta^{2k} \tau^{2k} } & \text{SI} \\
\frac{(e\sqrt{r})^{\frac{p}{4}}}{\sqrt{N}} \max_{k \leq p} \frac{ M^{k} }{ \eta^{2k} \tau^{2k} \sigma^{2k}_r(\bs{D}) }  & \text{MI}. 
\end{cases}
}
\end{lemma}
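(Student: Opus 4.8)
The plan is to bound each summand $Y_{jl}$ almost surely by a random variable whose sub-Gaussian norm is controlled by the hard bound from Lemma~\ref{lem:approx}(\rnu{2}), then to view $Y_j$ as a normalized average of conditionally i.i.d.\ such variables, and finally to dispose of the random normalizer $\ngt_j$ by conditioning on the pattern of biases that exceed the threshold $\tau$. Throughout I would work conditionally on the assumed event $\{\ngt_j>N/3\}$, so that all expectations and $\psi_2$-norms below refer to that conditional law.

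\textbf{Step 1: a pointwise bound on $Y_{jl}$.} Recall that $h(\bs{w},\azp,\bop,\bzl)$ carries the indicator $\indic{E}$, and that on $E$ one has $\abs{\shc(\bzl)}\ge\tau$ and $\max_{i\in[n]}\eta\abs{\shc(\bzl)\,\sJ\,\inner{\bs{v}}{\bs{x}_i}}\le 1$ in the single-index case, with $\sJ$ replaced by $\inner{\sJ}{\bs{V}^\top\bs{x}_i}$ in the multi-index case. Since $\phi$ is $1$-Lipschitz with $\phi(0)=0$ and $\abs{\azp_{jl}}=1$, on $E$ the argument of $\phi$ in $Y_{jl}$ satisfies $\abs{\phi(\azp_{jl}\eta\shc(\bzp_{jl})(\cdots)+\bop_{jl})}\le 1+\abs{\bop_{jl}}$. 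Combining this with the uniform bound $\abs{h}\le H_{\max}$ of Lemma~\ref{lem:approx}(\rnu{2}) — whose only relevant hypothesis, $M\ge 2p(k^{\star}-1)$ (SI) or $M\ge 2p$ (MI), is assumed here, the remaining conditions being enforced by the indicator $\indic{E}$ inside $h$ — and noting that $Y_{jl}=0$ off $E$, hence off $\{\abs{\shc(\bzp_{jl})}\ge\tau\}$, I obtain
\eq{
\abs{Y_{jl}}\le H_{\max}\,(1+\abs{\bop_{jl}})\,\indic{\abs{\shc(\bzp_{jl})}\ge\tau},
}
with $H_{\max}=\tilde C e^{p/4}\max_{k\le p}M^{k(k^{\star}-1)}\eta^{-2k}\tau^{-2k}$ (SI) or $H_{\max}=\tilde C(e\sqrt{r})^{p/4}\max_{k\le p}M^{k}\eta^{-2k}\tau^{-2k}\sigma_r^{-2k}(\bs{D})$ (MI). Because $\bop_{jl}\sim\cN(0,1)$ is independent of $\bzp_{jl}$, this gives $\norm{Y_{jl}}_{\psi_2}\le C H_{\max}$, and the same estimate holds for the conditional law of $Y_{jl}$ given $\{\abs{\shc(\bzp_{jl})}\ge\tau\}$.

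\textbf{Step 2: conditioning on the threshold pattern.} Let $S_j:=\{l\in[N]:\abs{\shc(\bzp_{jl})}\ge\tau\}$, so $\abs{S_j}=\ngt_j$ and, since $Y_{jl}=0$ for $l\notin S_j$, $Y_j=\ngt_j^{-1}\sum_{l\in S_j}Y_{jl}$. I would condition on the whole pattern $(\indic{l\in S_j})_{l\in[N]}$. As the blocks $(\bs{w}_{jl},\azp_{jl},\bop_{jl},\bzp_{jl})$ are i.i.d.\ in $l$ and this conditioning factorizes over $l$, the variables $\{Y_{jl}\}_{l\in S_j}$ become conditionally i.i.d., each with conditional $\psi_2$-norm $\le C H_{\max}$ by Step~1 (the bias $\bop_{jl}$ being untouched by the conditioning). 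Centering changes the $\psi_2$-norm only by a constant, so a standard Hoeffding-type bound for sums of independent centered sub-Gaussian variables gives $\norm*{\sum_{l\in S_j}(Y_{jl}-\E[Y_{jl}\mid\text{pattern}])}_{\psi_2}\le \tilde C H_{\max}\sqrt{\abs{S_j}}$; dividing by $\ngt_j=\abs{S_j}>N/3$ yields $\norm{Y_j-\E[Y_j\mid\text{pattern}]}_{\psi_2}\le \sqrt{3}\,\tilde C H_{\max}/\sqrt{N}$. Moreover, by the conditional i.i.d.\ structure $\E[Y_{jl}\mid\text{pattern}]=\E[Y_{j1}\mid 1\in S_j]$ for every $l\in S_j$, so $\E[Y_j\mid\text{pattern}]$ equals this same value on every pattern with $S_j\ne\varnothing$, and hence coincides with $\E_{(\bs{w},\az,\bo)}[Y_j]$. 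Since the bound $\sqrt{3}\,\tilde C H_{\max}/\sqrt{N}$ is uniform over all patterns with $\abs{S_j}>N/3$, and since $\E[\exp((Y_j-\E_{(\bs{w},\az,\bo)}[Y_j])^2/t^2)]$ is, on $\{\ngt_j>N/3\}$, an average of the corresponding per-pattern conditional expectations, the same $\psi_2$-bound holds unconditionally on that event; plugging in the two formulas for $H_{\max}$ gives the claimed inequality.

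\textbf{Main obstacle.} The only genuinely nonroutine point is the random normalizer $\ngt_j$, which is correlated with the summands. This is resolved by the two structural facts used above: $Y_{jl}$ vanishes exactly when $l\notin S_j$, so $Y_j$ is an honest average over $S_j$; and, after conditioning on the pattern of threshold-exceeding biases, the conditional mean $\E[Y_j\mid\text{pattern}]$ is pattern-independent, so the uniform per-pattern sub-Gaussian estimate transfers to the mixture. Everything else — the bound on $\abs{h}$, the $\phi$ estimate, and the Hoeffding step — is routine given Lemma~\ref{lem:approx}.
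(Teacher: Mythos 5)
Your overall strategy matches the paper's: bound each summand $Y_{jl}$ uniformly using the hard bound from Lemma~\ref{lem:approx}(\rnu{2}), exploit the ReLU/event-$E$ structure to show $\abs{Y_{jl}}\le H_{\max}(1+\abs{\bop_{jl}})$ on the thresholded set, invoke a Hoeffding-type estimate for sums of independent sub-Gaussians, and use $\ngt_j>N/3$ to get the $1/\sqrt{N}$. Steps~1 and the Hoeffding estimate are correct and are exactly what the paper does.

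The gap is in your handling of the ``random'' normalizer, specifically the claim that $\E[Y_j\mid\text{pattern}]$ ``coincides with $\E_{(\bs{w},\az,\bo)}[Y_j]$.'' It does not. By the paper's own definition, $\norm{\cdot}_{\psi_2}$ integrates only over $(\bs{w},\az,\bo)$ and is therefore taken \emph{conditional on $\bz=\bs{b}^{(0)}$}; consequently $\E_{(\bs{w},\az,\bo)}[Y_j]=\frac{1}{\ngt_j}\sum_{l\in S_j}\E_{(\bs{w},\az,\bo)}[Y_{jl}\mid\bzp_{jl}]$ is a function of the \emph{actual values} $\bzp_{jl}$, since each conditional mean $\E_{(\bs{w},\az,\bo)}[Y_{jl}\mid\bzp_{jl}]$ genuinely depends on $\bzp_{jl}$ (through $\shc(\bzp_{jl})$ inside $h$ and inside $\phi$). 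In contrast, $\E[Y_j\mid\text{pattern}]$ additionally averages over the conditional law of $\bzp_{jl}$ given $\abs{\shc}\ge\tau$ and so is pattern-independent. These two quantities differ, and your final mixture argument — which averages per-pattern bounds on $\exp\bigl((Y_j-\E[Y_j\mid\text{pattern}])^2/t^2\bigr)$ to control $\E\bigl[\exp\bigl((Y_j-\E_{(\bs{w},\az,\bo)}[Y_j])^2/t^2\bigr)\bigr]$ — requires the two centerings to coincide, so it does not close.

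The fix is to drop the coarsening entirely. Since the $\psi_2$-norm is already conditional on $\bz$, the quantity $\ngt_j$ is a \emph{deterministic} integer from the point of view of the $\psi_2$-norm, not a random normalizer, and there is no mixture to perform. Conditionally on $\bz$, the summands $\{Y_{jl}\}_{l\in[N]}$ are independent across $l$; those with $l\notin S_j$ vanish (since $h$ carries $\indic{E}\le\indic{\abs{\shc(\bzp_{jl})}\ge\tau}$), and for $l\in S_j$ your pointwise bound gives $\norm{Y_{jl}}_{\psi_2}\le C H_{\max}$. The Hoeffding step then yields directly
\begin{equation}
\norm{Y_j-\E_{(\bs{w},\az,\bo)}[Y_j]}_{\psi_2}^2
=\norm*{\tfrac{1}{\ngt_j}\sum_{l\in S_j}\bigl(Y_{jl}-\E_{(\bs{w},\az,\bo)}[Y_{jl}]\bigr)}_{\psi_2}^2
\le \tfrac{C}{\ngt_j^2}\sum_{l\in S_j}\norm{Y_{jl}}_{\psi_2}^2
\le \tfrac{C H_{\max}^2}{\ngt_j}\le \tfrac{3 C H_{\max}^2}{N},
\end{equation}
which is the paper's argument (the displayed inequality in the paper's proof is missing the visible $1/\ngt_j^2$ factor, but it is implicitly there and is recovered in the final sentence via $\ngt_j>N/3$).
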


\begin{proof}
For both SI and MI,  there exists a universal $C > 0$ such that we have
\eq{
\norm*{Y_j - \E_{(\bs{w},\azp,\bop)}[Y_j]}_{\psi_2} ^2 = \norm*{\frac{1}{\ngt} \sum_{l = 1}^N Y_{jl}- \E_{(\bs{w},\az,\bo)}[Y_{jl}]}_{\psi_2}^2 \leq C \sum_{l = 1}^N   \norm*{ Y_{jl} }_{\psi_2}^2.  \label{empsg:eq0}
}
Since $\phi(t)^ 2 \leq t^2$, for SI, we have
\eq{
\norm{ Y_{jl} }_{\psi_2} & \leq \norm*{ h(  \bs{w}_{jl} , \azp_{jl},\bop_{jl}, \bzp_{jl})   \left(  \azp_{jl}  \eta \shc \big( \bzp_{jl} \big) \inner{\bs{v}}{\frac{\bs{w}_{jl} \vert_{\cJ}}{\norm{w_{jl} \vert_{\cJ}}_2}}^{k^{\star} - 1} \inner{\bs{v}}{\bs{x}_i}  +  \bop_{jl} \right) }_{\psi_2} \\
& \labelrel\leq{empsg:ineqq0}  \tilde C e^{\frac{p}{4}} \max_{k \leq p} \frac{ M^{k(k^{\star} - 1)} }{ \eta^{2k} \tau^{2k} },
}
where \eqref{empsg:ineqq0} follows  by the definition of $E$   and $\norm{\bop_{jl}}_{\psi_2} \leq 3$.
For MI, we have
\eq{
  \norm*{  Y_{jl}  }_{\psi_2} & \labelrel={empsg:eqq1}   \norm*{   h(  \bs{w}_{jl} , \azp_{jl},\bop_{jl}, \bzp_{jl})  \phi \left(  \az_{jl}   \eta \shc \big( \bzp_{jl} \big)   \inner{\bs{D} \bs{V}^\top  \frac{\bs{w}_{jl} \vert_{\cJ}}{\norm{\bs{w}_{jl} \vert_{\cJ}}_2} }{\bs{V}^\top \bs{x}_i} + \bop_{jl} \right)   }_{\psi_2}  \\
& \labelrel\leq{empsg:ineqq2}   \tilde C (e\sqrt{r})^{\frac{p}{4}} \max_{k \leq p} \frac{ M^{k} }{ \eta^{2k} \tau^{2k} \sigma^{2k}_r(\bs{D}) }, 
}
where \eqref{empsg:eqq1} follows from $\phi(t)^ 2 \leq t^2$,  \eqref{empsg:ineqq2} follows  by the definition of $E$   and $\norm{\bo}_{\psi_2} \leq 3$.
By \eqref{empsg:eq0}  and $\ngt_j > N/3$, the statement follows.
\end{proof}

\bigskip
Let $poly(\cdot)$ a polynomial respectively,   depending on  $(p, k^{\star},\ghc_{k^{\star}})$ for SI,  and $\left( p, r, \sigma_1(\bs{D})/\sigma_r(\bs{D}) \right)$ for MI,  which will be defined later (see \eqref{corempapprox:defpoly}).  We define the following event:
\eq{
\widetilde E \equiv 
\begin{cases}
\abs*{   \frac{1}{B} \sum_{j = 1}^B Y_j  - \gt (\inner{\bs{v}}{\bs{x}_i}) } \geq   \frac{ poly \left[  \log n,  \log d^u \right] \log^{\frac{1}{2}} \left(\frac{2n}{\delta}  \right)}{\sqrt{m} } + \frac{1}{n} & \text{SI} \\[1ex]
\abs*{   \frac{1}{B} \sum_{j = 1}^B Y_j  - \tgt (\bs{V}^\top \bs{x}_i) } \geq   \frac{ poly \left[  \log n,  \log d^u \right] \log^{\frac{1}{2}} \left(\frac{2n}{\delta}  \right)}{\sqrt{m} } + \frac{1}{n} & \text{MI}
\end{cases}
}
\begin{lemma}
\label{lem:empapprox}
There exists a constant $C > 0$ depending on $(k^{\star}, \ghc_{k^{\star}})$ for SI and $r$ for MI such that if we have

\bigskip \noindent 
\begin{minipage}{.5\textwidth}
For SI:
\begin{enumerate}[leftmargin = *]
\item $\max_{i \in [n]} \abs{\inner{\bs{v}}{\bs{x}_i}} \leq \sqrt{3} \sqrt{1 + \log(4nd^u)}$.
\item $\eta = \frac{1}{C} \frac{1}{ \tau   \sqrt{1 + \log(4n d^u)}} \left( \frac{M}{1 + \log(P)}  \right)^{\frac{k^{\star} - 1}{2}}$ \\
where $P = n^2  \left[ C \left(1  + \log \left( 4n d^u \right) \right)   \right]^{p} $.
\item  $M \geq 2p(k^{\star} - 1)  \vee 16 \log \left( P \right) $
\item  $ \norm{\bs{v} \vert_{\cJ^c}}_2^2 \leq 1/4 $
\item $  \ngt_j \geq N/3 ~ \text{for all}~ j \in [B]$
\end{enumerate}
\end{minipage}%
\begin{minipage}{0.5\textwidth}
For MI:
\begin{enumerate}[leftmargin = *]
\item $\max_{i \in [n]} \norm{\bs{V}^\top \bs{x}_i} \leq \sqrt{3} \sqrt{r + \log(4nd^u)}$.
\item $\eta = \frac{1}{C} \frac{1}{ \tau \sigma_1(D)   \sqrt{r + \log(4n d^u)}} \left( \frac{M}{r + \log(P)}  \right)^{\frac{1}{2}}$ \\
where $P = n^4  \left[ C \left(r  + \log \left( 4n d^u \right) \right)   \right]^{2p} $.
\item  $M \geq 2p  \vee 16 \log \left( P \right) $
\item  $ \norm{\bs{V} \vert_{\cJ^c}}_F^2 \leq 1/4 $
\item $  \ngt_j \geq N/3 ~ \text{for all}~ j \in [B]$
\end{enumerate}
\end{minipage}

\bigskip \noindent 
then, the following holds:

\smallskip \noindent 
\begin{minipage}{.5\textwidth}
\begin{itemize}[leftmargin = *]
\item $\max_{k \leq p} \tfrac{   e^{\frac{p}{4}} M^{k(k^{\star} - 1)} }{ \eta^{2k} \tau^{2k} } \\  \leq      C^{2p} e^{\frac{p}{4}}   (1  + \log(4nd^u) )^{p} (1 + \log(P) )^{p(k^{\star} - 1)}$
\item $\mpr_{(\bs{w},\azp,\bop)}[\widetilde E] \leq \delta$
\end{itemize}
\end{minipage}%
\begin{minipage}{0.5\textwidth}
\begin{itemize}[leftmargin = *]
\item $\max_{k \leq p} \tfrac{   (e\sqrt{r})^{\frac{p}{4}} M^{k} }{ \eta^{2k} \tau^{2k} \sigma^{2k}_r(\bs{D}) } \\  \leq  \!    C^{2p} (e\sqrt{r})^{\frac{p}{4}}  \left( \tfrac{\sigma_1(\bs{D})}{\sigma_r(\bs{D})} \right)^{2p}   \!\!\!     (r  + \log(4nd^u) )^{p} (1 + \log(P) )^{p}$
\item $\mpr_{(\bs{w},\azp,\bop)}[\widetilde E] \leq \delta$
\end{itemize}
\end{minipage}
\end{lemma}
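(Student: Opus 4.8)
The statement bundles two claims of very different character. The first (the bound on $\max_{k\le p} e^{p/4}M^{k(k^{\star}-1)}\eta^{-2k}\tau^{-2k}$, and its MI counterpart) is a deterministic computation. The second ($\mpr_{(\bs w,\azp,\bop)}[\widetilde E]\le\delta$) is a bias‑plus‑fluctuation argument: I would split $\frac1B\sum_j Y_j-\gt(\inner{\bs v}{\bs x_i})$ into its conditional mean, bounded via Lemma~\ref{lem:approx}(i), and a fluctuation, bounded via Lemma~\ref{lem:empsg} together with a sub‑Gaussian Hoeffding inequality. I will do the algebra first, then the bias, then the fluctuation.

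\textbf{The algebraic bound.} Substituting the prescribed $\eta$ from assumption~(2) into $M^{k(k^{\star}-1)}\eta^{-2k}\tau^{-2k}$ gives $C^{2k}(1+\log(4nd^u))^{k}(1+\log P)^{k(k^{\star}-1)}$; taking $\max_{k\le p}$, multiplying by $e^{p/4}$, and using $C\ge1$ and $1+\log(4nd^u)\ge1$ yields the claimed expression. For MI one has instead $M^{k}\eta^{-2k}\tau^{-2k}\sigma_r^{-2k}(\bs D)=C^{2k}(\sigma_1(\bs D)/\sigma_r(\bs D))^{2k}(r+\log(4nd^u))^{k}(r+\log P)^{k}$, and one bounds $(r+\log P)^{k}\le r^{k}(1+\log P)^{k}$, absorbing the fixed factor $r^{p}$ into $C$ (permitted, since $C$ may depend on $r$ in the MI case). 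Nothing here is delicate.

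\textbf{The conditional mean (bias), including the crux.} Fix $i\in[n]$ and condition on all $\{\bzl\}_{l\in[N],\,j\in[B]}$; by assumption~(5) every block has $\ngt_j\ge N/3>0$. Since, given $\bzp$, the triples $(\bs w_{jl},\azp_{jl},\bop_{jl})$ for $l\in[N]$ are i.i.d., linearity gives $\E[Y_j\mid\bzp]=\tfrac1{\ngt_j}\sum_{l=1}^N\E_{(\bs w,\azp,\bop)}\!\big[h(\bs w,\azp,\bop,\bzl)\,\phi(\cdots)\big]$, which is exactly the quantity bounded in Lemma~\ref{lem:approx}(i) for block $j$; hence $|\E[Y_j\mid\bzp]-\gt(\inner{\bs v}{\bs x_i})|\le C_{k^{\star}}e^{p/4}(\max_{k\le p}|\inner{\bs v}{\bs x_i}|^{k})\,\mpr_{\bs w}[E^{c}]^{1/2}$, and assumption~(1) bounds the polynomial factor by $(3(1+\log(4nd^u)))^{p/2}$. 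The main work is bounding $\mpr_{\bs w}[E^{c}]$. Here $\wJ=\bs w|_{\cJ}/\norm{\bs w|_\cJ}_2$ with $\bs w\sim\cN(0,\ide d)$ independent of $\cJ$, $|\cJ|\le M$; by Corollaries~\ref{cor:laurentmassart1}--\ref{cor:laurentmassart2}, $\norm{\bs w|_\cJ}_2^2\ge M/2$ outside probability $e^{-M/16}\le P^{-1}$ (this is where $M\ge16\log P$ enters), and $\inner{\bs v}{\bs w|_\cJ}\sim\cN(0,\norm{\bs v|_\cJ}_2^2)$ with $\norm{\bs v|_\cJ}_2\le1$ (for MI, $\bs V^\top\bs w|_\cJ$ is Gaussian with covariance of operator norm $\le1$, contributing a $\chi^2$ of $r$ degrees of freedom — this is why $P$ carries an extra power and $r$ enters the logs). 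On the good event $|\sJ|\le(2/M)^{(k^{\star}-1)/2}|\inner{\bs v}{\bs w|_\cJ}|^{k^{\star}-1}$, so — using $|\shc(\bzl)|\le k^{\star}$ and assumption~(1) to control $\max_i|\inner{\bs v}{\bs x_i}|$ in the definition of $E$ — failure of $E$ forces $|\inner{\bs v}{\bs w|_\cJ}|\ge c_0\sqrt{1+\log P}$ with $c_0$ an explicit function of $C,\tau,k^{\star}$, and $C$ in assumption~(2) is taken large enough that $c_0\ge2$. Then $\mpr_{\bs w}[E^{c}]\le e^{-M/16}+2e^{-2(1+\log P)}\le 2P^{-1}$, so the bias is at most $\sqrt2\,C_{k^{\star}}e^{p/4}(3(1+\log(4nd^u)))^{p/2}P^{-1/2}$, which is $\le 1/n$ precisely because $P=n^2[C(1+\log(4nd^u))]^{p}$ (resp.\ $n^4[C(r+\log(4nd^u))]^{2p}$), after enlarging $C$ once more to swallow $C_{k^{\star}},e^{p/4},3^{p/2},\sqrt2$. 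I expect this to be the principal obstacle: one must simultaneously juggle the three conditions in $E$ ($|\shc(\bzl)|\ge\tau$ and the two sup‑cutoffs) and assumption~(1), and track enough logarithmic factors that the square root in Lemma~\ref{lem:approx}(i) still leaves a $P^{-1/2}$ decay.

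\textbf{The fluctuation, and conclusion.} Still conditioning on $\bzp$: the blocks $[B]\times[N]$ being disjoint, $Y_1-\E[Y_1\mid\bzp],\dots,Y_B-\E[Y_B\mid\bzp]$ are conditionally independent and mean zero, and by Lemma~\ref{lem:empsg} (whose hypotheses $M\ge2p(k^{\star}-1)$, resp.\ $M\ge2p$, and $\ngt_j>N/3$ hold) each has $\psi_2$‑norm $\le \tilde C e^{p/4}N^{-1/2}\max_{k\le p}M^{k(k^{\star}-1)}\eta^{-2k}\tau^{-2k}$ (resp.\ the MI analogue). The standard sub‑Gaussian Hoeffding bound then gives $\big\|\tfrac1B\sum_j(Y_j-\E[Y_j\mid\bzp])\big\|_{\psi_2}\lesssim B^{-1/2}$ times that norm, i.e.\ $\lesssim m^{-1/2}$ times the algebraic bound of the first part; since $\log P$ is a fixed polynomial in $\log n$ and $\log d^u$, this equals $m^{-1/2}\,poly[\log n,\log d^u]$ for the polynomial we now take as the definition of $poly(\cdot)$ in \eqref{corempapprox:defpoly}. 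Applying the sub‑Gaussian tail at level $t=m^{-1/2}\,poly[\log n,\log d^u]\,\log^{1/2}(2n/\delta)$ makes this term exceed $t$ with probability at most $\delta/n\le\delta$. Combining with the (deterministic, $\le 1/n$) bias bound and noting that $t+1/n$ is exactly the threshold defining $\widetilde E$ yields $\mpr_{(\bs w,\azp,\bop)}[\widetilde E]\le\delta$, in both the SI and MI cases; the built‑in factor $n$ inside $\log(2n/\delta)$ leaves room for a subsequent union bound over $i\in[n]$ when this lemma is invoked in the feature‑learning argument.
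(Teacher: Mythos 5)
Your proof is essentially correct and follows the same strategy as the paper: compute the algebraic bound by substituting $\eta$, then split $\tfrac1B\sum_j Y_j-\gt(\inner{\bs v}{\bs x_i})$ (conditionally on $\bzp$) into a bias term controlled by Lemma~\ref{lem:approx}(i) and a fluctuation controlled by Lemma~\ref{lem:empsg} plus a sub-Gaussian tail, then show the bias is $\le 1/n$ by making $\mpr_{\bs w}[E^c]\lesssim P^{-1}$ while the fluctuation term matches the $poly[\log n,\log d^u]\log^{1/2}(2n/\delta)/\sqrt m$ threshold in the definition of $\widetilde E$. The only cosmetic difference is that where you re-derive the probability bound on the bad event $E^c$ directly from the Laurent--Massart corollaries (using $M\ge 16\log P$ to get $\norm{\bs w|_\cJ}_2^2\ge M/2$ and a Gaussian/$\chi^2_r$ tail for $\inner{\bs v}{\bs w|_\cJ}$ resp.\ $\bs V^\top\bs w|_\cJ$), the paper factors that step through the prepackaged Lemmas~\ref{lem:highprobprobbound1} and~\ref{lem:highprobprobbound2} and the explicit choice $C\ge 1\vee\tfrac{k^\star\sqrt3}{\tau}6^{(k^\star-1)/2}$; the two routes are computationally identical. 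Your side remark on absorbing $(r+\log P)^p\le r^p(1+\log P)^p$ into the $r$-dependent constant $C$ is the correct way to reconcile the MI derivation with the form stated in the lemma.
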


\begin{proof}
For SI, we have
\eq{
\max_{k \leq p} \tfrac{   e^{\frac{p}{4}} M^{k(k^{\star} - 1)} }{ \eta^{2k} \tau^{2k} }  & =  e^{\frac{p}{4}}  \left(  \max_{k \leq p}     C^k (1 + \log(4n d^u ) )^{k} (1 + \log(P) )^{k (k^{\star} - 1)}     \right)  \\
&=  C^{2p} e^{\frac{p}{4}  }  (1  + \log(4nd^u) )^{p} (1 + \log(P) )^{p}   \label{corempapprox:sieq0}
}
For MI, we have
\eq{
 (e\sqrt{r})^{\frac{p}{4}} \left(  \max_{k \leq p} \frac{  M^{k} }{ \eta^{2k} \tau^{2k} \sigma^{2k}_r(\bs{D}) } \right)   
 & =    (e\sqrt{r})^{\frac{p}{4}}  \left( \frac{\sigma_1(\bs{D})}{\sigma_r(\bs{D})} \right)^{2p}  \left(  \max_{k \leq p}     C^{2k} (r  + \log(4n d^u) )^{k} (1 + \log(P) )^{k}     \right)   \\
&= C^{2p} (e\sqrt{r})^{\frac{p}{4}}  \left( \frac{\sigma_1(\bs{D})}{\sigma_r(\bs{D})} \right)^{2p}      (r  + \log(4nd^u) )^{p} (r+ \log(P) )^{p}   \label{corempapprox:mieq0}
}Let
\eq{
poly(\log n, \log d^u)  \geq 
\begin{cases}
C^p e^{\frac{p}{4}  }  (1  + \log(4nd^u) )^{p} (1 + \log(P) )^{p}   & \text{SI} \\
C^p (e\sqrt{r})^{\frac{p}{4}}  \left( \frac{\sigma_1(\bs{D})}{\sigma_r(\bs{D})} \right)^{2p}      (r  + \log(4nd^u) )^{p} (1 + \log(P) )^{p}  & \text{MI}.
\end{cases}
\label{corempapprox:defpoly}
}
By Lemma \ref{lem:empsg}, for  both SI and MI, we have
\eq{
\mpr_{(\bs{w}, \az,\bo)} \Bigg[ \abs*{ \frac{1}{B} \sum_{j = 1}^B Y_j  -  \E_{(\bs{w},\azp,\bop)} \left[ Y_j \right]} \geq    \underbrace{ poly(\log n, \log d^u)  \sqrt{\frac{\log(2/\delta)}{m}} }_{\coloneqq A_1}  \Bigg] \leq \delta. \label{empapprox:eq0}
}
By Lemma \ref{lem:approx}, we have
\eq{
& \text{SI}: ~ \abs*{ \E_{(\bs{w},\ao,\bo)} \left[ Y_j \right] - \gt (\inner{\bs{v}}{\bs{x}_i}) } \\
& \hspace{1em}    \leq \underbrace{ C_{k^{\star}} e^{\frac{p}{4}}   \left( \max_{k \leq p} \abs{\inner{\bs{v}}{\bs{x}_i}}_2^k  \right)  \mpr_{\bs{w}} \left[ \abs{\sJ} \geq \frac{1}{\eta \tau}  ~ \text{OR} ~   \max_{i \in [n]}  \abs*{\sJ \inner{\bs{v}}{\bs{x}_i}} > \frac{1}{ \eta k^{\star}} \right]^{\frac{1}{2}} }_{ \coloneqq A_2}  \\
& \text{MI}: ~ \abs*{ \E_{(\bs{w},\az,\bo)} \left[ Y_j \right] - \tgt (\bs{V}^\top \bs{x}_i) }  \\
&  \hspace{0.5em}  \leq\underbrace{   C_{k^{\star}}  (e \sqrt{r})^{\frac{p}{4}}      \left(    \frac{\sigma_1(\bs{V} \vert_{\cJ} \bs{D})}{\sigma_r(\bs{V} \vert_{\cJ} \bs{D})}  \right)^p     \left( \max_{k \leq p}    \norm{\bs{V}^\top \bs{x}_i}_2^k  \right)    \mpr_{\bs{w}}    \left[  \norm{\sJ}_2 \geq \frac{1}{\eta \tau}   \text{OR}    \max_{i \in [n]}  \abs*{\inner{\sJ}{\bs{V}^\top \bs{x}_i}} > \frac{1}{\eta}   \right]^{\frac{1}{4}}    }_{ \coloneqq A_2} 
}
Therefore, for both SI and MI, we have
\eq{
 \mpr_{(\bs{w}, \az,\bo)} \left[ \abs*{ \frac{1}{B} \sum_{j = 1}^B Y_j    - \tgt (\bs{V}^\top \bs{x}_i)} \geq   A_1 + A_2   \right] \leq \delta \label{empapprox:eq1}
}
For SI,  by Lemmas \ref{lem:highprobprobbound1} and \ref{lem:highprobprobbound2}, we have 
\eq{
\mpr \left[ \abs{\sJ} \geq \frac{1}{\eta \tau} \right] \labelrel\leq{coremp:ineqq0} \frac{2}{P} ~~ \text{and} ~~
\mpr \left[ \max_{i \in [n]} \abs{\inner{\bs{v}}{\bs{x}_i}}\abs{\sJ} \geq \frac{1}{\eta k^{\star}} \right]  \labelrel\leq{coremp:ineqq1}  \frac{2}{P},  \label{corempapprox:eq1}
}
where we choose $C \geq 1 \vee  \frac{k^{\star} \sqrt{3}}{\tau} 6^{\frac{k^{\star} -1}{2}}$ for \eqref{coremp:ineqq0} and  \eqref{coremp:ineqq1}.
Therefore,  by choosing $C \geq 3 \sqrt{e} (2C_{k^{\star}})^{2/p}$,  we have
\eq{
A_2 & \leq  2  C_{k^{\star}}   e^{\frac{p}{4}}  \left( \sqrt{3} \sqrt{  1+  \log( 4n d^u)}  \right)^p  \frac{1}{\sqrt{P}}  \leq \frac{1}{n} . \label{corempapprox:eq2}
}
For MI, the same argument with its corresponding bounds applies.
\end{proof}
\subsection{Concentration Bound for a Desirable Event}
\begin{corollary}
\label{cor:fl-existence}
We fix $u \in \N$.  For any $\varepsilon > 0$,  if
\eq{
m = \Theta(d^\varepsilon), ~~ d \geq O(M)  ~~ \text{and} ~~ c = \frac{1}{\log d},  \label{fl-singleindex:hypers1}
}
$n$ and $M$ are chosen as in Lemmas \ref{lem:pruningalgmainsingleindex} and \ref{lem:pruningalgmainmultindex} for SI and MI respectively, and
\eq{
\eta = \frac{1}{\tau C} \begin{cases}
\frac{1}{\sqrt{1+\log(4nd^u)}} \left(  \frac{M}{1 + \log(P)}  \right)^{\frac{k^{\star} -1}{2}}  & \text{SI} \\
\frac{1/\sigma_1(H)}{\sqrt{r+\log(4nd^u)}} \left(  \frac{M}{r + \log(P)}  \right)  & \text{MI} 
\end{cases}
~~ \text{where} ~ ~
P = \begin{cases}
n^2  \left[ C \left(1  + \log \left( 4n d^u \right) \right)   \right]^{p},  & \text{SI} \\
n^4  \left[ C \left(r  + \log \left( 4n d^u \right) \right)   \right]^{2p},  & \text{MI} 
\end{cases} \\ \label{eq:etaval}
}
and $C$ is the constant appeared in Lemma \ref{lem:empapprox},  we have with probability at least $1 - (16 + 6 m) d^{-u}$, the intersection of the
\begin{enumerate}[label=C.\arabic*]
\item \label{cond:boundW}$\max_{j \in [2m]}  \norm{ \bs{W}^{(1)}_{j*} }_2 \leq  \tilde O(1)$
\item   \label{cond:boundmu} $\norm{ \hmu \vert_{\cJ} }_2 \leq  1 +  O\left(\frac{1}{\sqrt{M}}\right)$
\item  \label{cond:boundb} $\norm{\bo}_2^2 \leq 4m$  and  $\norm{\bo}_4^4 \leq 6m$ and $\norm{\bo}_{\infty} \leq  \tilde O(1)$
\item \label{cond:existsa} There exists $\bm{\hat a} \in \R^{2m}$ such that 
\eq{
\norm{\bm{\hat a}}_2^2 \leq \begin{cases}  
O \left( \frac{ (1  + \log(4nd^u) )^{2p} (1 + \log(P) )^{2p(k^{\star} - 1)}}{m} \right)  & \text{SI} \\
O \left(  \frac{ \left(r + \log(4n d^u) \right)^{2p}  \left( r + \log(P) \right)^{2p}}{m} \right)  & \text{MI}, 
\end{cases}
}
and
\item   \label{cond:empiricalrisk}    $\frac{1}{n} \sum_{i =1 }^n \left(  y_i  - \hat y(\bs{x}_i; (\bm{\hat a}, \bs{W}^{(1)}, \bo)) \right)^2 \leq   \Delta \E [ \epsilon^2 ] +  \tilde O \left(  \frac{1}{m} +  \frac{1}{\sqrt{n}}+ \frac{1}{M} \right) \\[1.5ex] \hspace*{18em} + \begin{cases}
 O \left( \tfrac{1  + \log(4nd^u) )^{2p} (1 + \log(P) )^{2p(k^{\star} - 1)}}{\rho_1 \log^{\rho_2} d}  \right) & \text{SI} \\
 O \left( \tfrac{r  + \log(4nd^u) )^{2p} (r + \log(P) )^{2p}}{\rho_1 \log^{\rho_2} d}  \right)   & \text{MI} 
\end{cases} $
\end{enumerate}
where $O$ suppresses constants, and $\tilde O$ suppresses constants and $\text{Poly} \left[  \log n,  \log d  \right]$  depending  on the problem parameters \footnote{Specifically, $(k^{\star}, \ghc_{k^{\star}}, u,  p, \varepsilon, \alpha, C_1, C_2, \cgp,  \Delta)$ for SI,  $(\sigma_1(H), \sigma_r(H), u,  p, \varepsilon, \alpha, C_1, C_2, \cgp, r, \Delta)$ for MI.}.
\end{corollary}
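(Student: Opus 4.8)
The plan is to prove the corollary as a union bound over the preparatory concentration events established above, combined with an explicit construction of $\bm{\hat a}$. The hypotheses on $n,M,c,\eta$ are arranged so that Lemmas~\ref{lem:pruningalgmainsingleindex}, \ref{lem:pruningalgmainmultindex}, \ref{lem:empsg}, \ref{lem:empapprox} and Propositions~\ref{prop:emprgradientconcsingle}, \ref{prop:emprgradientconc}, \ref{prop:alphaconc} all apply verbatim; in particular \eqref{eq:etaval} is exactly both the value of $\eta_1$ in Theorems~\ref{thm:single}/\ref{thm:multi} and the value of $\eta$ required by Lemma~\ref{lem:empapprox}. Fixing $u$ and taking $\rho_1,\rho_2$ to be the parameters fed to the pruning lemmas, I would condition on the intersection of: (i) the pruning guarantee (Lemma~\ref{lem:pruningalgmainsingleindex} or \ref{lem:pruningalgmainmultindex}), giving $\norm{\bs v\vert_{\cJ^c}}_2^2\le K/(\rho_1\log^{\rho_2}d)$ for SI and the analogues for $\norm{\bs V\vert_{\cJ^c}}_F^2,\norm{\E[y\bs x]\vert_{\cJ^c}}_2^2$ for MI (failure $\le 4d^{-u}$); (ii) the empirical-gradient concentration of Proposition~\ref{prop:emprgradientconcsingle}/\ref{prop:emprgradientconc}, applied for every $j\in[m]$ (failure $\le md^{-u}$), which bounds $\norm{g(\Wz_{j*},\bzj)\vert_\cJ}_2$ and controls its distance to the informative term $\shc(\bzj)\sJ\,\bs v\vert_\cJ$; (iii) the Laurent--Massart events $\sum_{i\in\cJ}\bs W_{ji}^2\ge M/2$ and $\inner{\bs v}{\bs W_{j*}\vert_\cJ}^2=O(\log(1/\delta))$ (resp.\ $\norm{\bs V^\top\bs W_{j*}\vert_\cJ}_2^2=O(r+\log(1/\delta))$) for all $j$; (iv) $\chi^2$-type concentration of the re-initialized biases $\bo$; (v) the Gaussian tail bounds $\max_{i\in[n]}\abs{\inner{\bs v}{\bs x_i}}\le\sqrt3\sqrt{1+\log(4nd^u)}$, $\max_i\norm{\bs V^\top\bs x_i}_2\le\sqrt3\sqrt{r+\log(4nd^u)}$, together with $\max_i\abs{\inner{\bs v\vert_{\cJ^c}}{\bs x_i}}=O(\norm{\bs v\vert_{\cJ^c}}_2\sqrt{\log(nd^u)})$; (vi) the anti-concentration counts $\ngt_j\ge N/3$ for all $j\in[B]$ (Lemma~\ref{lem:taudef}) and the lower bound on $\mrhc_{k^\star}^2$ (Proposition~\ref{prop:alphaconc}); (vii) the event $\widetilde E^c$ of Lemma~\ref{lem:empapprox}, uniformly over $i\in[n]$ (using $\delta/n$ there, which costs only $\mathrm{polylog}(d)$ since $n=\mathrm{poly}(d)$); and (viii) $\tfrac1n\sum_i\epsilon_i^2\le\E[\epsilon^2]+\tilde O(1/\sqrt n)$. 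A union bound over these $O(1)$-many ``global'' and $O(m)$-many ``per-neuron'' events gives probability $1-(16+6m)d^{-u}$.

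On this event, \ref{cond:boundW}--\ref{cond:boundb} are short. Using $\lambda_1=1/\eta_1$ and the re-initialization of Line~\ref{alggd:re-init}, the gradient step collapses to $\Wo_{j*}=\eta_1\azj\,g(\Wz_{j*},\bzj)\vert_\cJ$, so $\norm{\Wo_{j*}}_2=\eta_1\norm{g(\Wz_{j*},\bzj)\vert_\cJ}_2$; plugging item~(2) of Proposition~\ref{prop:emprgradientconcsingle} (resp.\ \ref{prop:emprgradientconc}) and the prescribed $\eta_1,n,M$ yields $\norm{\Wo_{j*}}_2\le\tilde O(1)$, which is \ref{cond:boundW}. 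Condition~\ref{cond:boundmu} follows from $\norm{\hmu\vert_\cJ}_2\le\norm{\E[y\bs x]\vert_\cJ}_2+\sup_{\abs\cJ=M}\norm{(\hmu-\E[y\bs x])\vert_\cJ}_2$, where the first term is $\le\norm{\gt}_{L^2}=1$ since $\E[y\bs x]=\bs V\,\E[\gt(\bs z)\bs z]$ and the first Hermite component has $\ell_2$-norm at most $\norm{\gt}_{L^2}$, and the supremum of the second term is $O(\sqrt{M\log(\cdot)/n})=O(1/\sqrt M)$ by the sparse empirical-mean concentration (the argument of Lemma~\ref{lem:supbound} with $\phi^\prime\equiv1$); for SI, $\hmu=0$ and this is vacuous. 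Condition~\ref{cond:boundb} is second- and fourth-moment concentration for the iid $\cN(0,1)$ biases, using $m=\Theta(d^\varepsilon)$.

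The core is \ref{cond:existsa}--\ref{cond:empiricalrisk}. Grouping the $m=BN$ neurons into $B$ blocks of $N$ and matching the random-feature decomposition \eqref{eq:defh2}, I would set $\hat a_{jl}:=\tfrac{1}{B\,\ngt_j}\,h(\bs w_{jl},\azp_{jl},\bop_{jl},\bzp_{jl})$, with signs chosen so that each symmetric pair of neurons (whose trained weights are negatives of one another, since the gradient step multiplies the shared $\Wz$ by $\azj=-\azp_{2m-j+1}$) combines into the even/odd feature $\phi_\pm$ whose parity matches $k^\star$; then $\tfrac1B\sum_{j=1}^B Y_j=\sum_{j,l}\hat a_{jl}\,\phi_\pm(\text{idealized arg}_{jl})$, where the idealized argument replaces $\Wo_{(j,l)*}$ by its rank-one proxy $\eta\,\azp_{jl}\,\shc(\bzp_{jl})\,\sJ\,\bs v$. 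Since $\norm{\bm{\hat a}}_2^2\le\tfrac{9}{m}\max_{j,l}h^2$, Lemma~\ref{lem:approx}(ii) and the bound on $\max_{k\le p}M^{k(k^\star-1)}/(\eta^{2k}\tau^{2k})$ from Lemma~\ref{lem:empapprox} give \ref{cond:existsa}. For \ref{cond:empiricalrisk} I would write $y_i-\hat y(\bs x_i;(\bm{\hat a},\Wo,\bo))=\sqrt\Delta\,\epsilon_i+(f^\star(\bs x_i)-\hat y(\bs x_i;\cdot))$ with $f^\star=\gt(\bs V^\top\cdot)$ (in MI the term $\inner{\hmu\vert_\cJ}{\bs x}$ re-added in \eqref{eq:preprocess2} cancels the preprocessing \eqref{eq:preprocess1} up to the $\norm{(\hmu-\E[y\bs x])\vert_\cJ}_2$ error already bounded, leaving $\tgt(\bs V^\top\cdot)$), expand the square, use (viii) for $\tfrac1n\sum\epsilon_i^2$, and split the approximation error by the triangle inequality into (a) $\tfrac1n\sum_i(f^\star(\bs x_i)-\tfrac1B\sum_jY_j)^2\le(\mathrm{polylog}/\sqrt m+1/n)^2$ from Lemma~\ref{lem:empapprox}, and (b) $\tfrac1n\sum_i(\tfrac1B\sum_jY_j-\hat y(\bs x_i;\cdot))^2\le\norm{\bm{\hat a}}_1^2\max_{i,j,l}\abs{\inner{\Wo_{(j,l)*}}{\bs x_i}-\eta\azp_{jl}\shc(\bzp_{jl})\sJ\inner{\bs v}{\bs x_i}}^2$, using $1$-Lipschitzness of $\phi$. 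Each discrepancy in (b) splits into a gradient part $\le\eta_1\norm{g(\Wz_{(j,l)*},\bzp_{jl})\vert_\cJ-\shc(\bzp_{jl})\sJ\,\bs v\vert_\cJ}_2\cdot O(\sqrt{\log(nd^u)})$ — controlled by item~(1) of Proposition~\ref{prop:emprgradientconcsingle}/\ref{prop:emprgradientconc}, using that on the Laurent--Massart event $\abs{\sJ}\asymp M^{-(k^\star-1)/2}$ so the effective gain $\eta_1\shc(\bzp_{jl})\sJ$ is $O(1)$ — and a pruning part $\le\eta_1\abs{\shc(\bzp_{jl})\sJ}\cdot\abs{\inner{\bs v\vert_{\cJ^c}}{\bs x_i}}\le O(1)\cdot O(\norm{\bs v\vert_{\cJ^c}}_2\sqrt{\log(nd^u)})$. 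Collecting, (a) contributes $\tilde O(1/m)$, the gradient part $\tilde O(1/\sqrt n+1/M)$, and the pruning part $\norm{\bm{\hat a}}_1^2\cdot O(\log(nd^u)/(\rho_1\log^{\rho_2}d))$, which by $\norm{\bm{\hat a}}_1^2\le m\norm{\bm{\hat a}}_2^2$ and \ref{cond:existsa} is $O((1+\log 4nd^u)^{2p}(1+\log P)^{2p(k^\star-1)}/(\rho_1\log^{\rho_2}d))$ — exactly the last term of \ref{cond:empiricalrisk}; the cross term, which is $O(\sqrt{\mathrm{App}})$ by Cauchy--Schwarz, is subsumed into the retained terms (it only needs to vanish as $d\to\infty$, which is ensured by taking $\rho_1$ large).

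\textbf{Main obstacle.} The delicate step is part~(b): one must propagate three independent error sources — the finite-sample deviation of the one-step gradient from its rank-one population proxy (Propositions~\ref{prop:emprgradientconcsingle}/\ref{prop:emprgradientconc}), the pruning residual $\norm{\bs v\vert_{\cJ^c}}_2$ (Lemmas~\ref{lem:pruningalgmainsingleindex}/\ref{lem:pruningalgmainmultindex}), and the idealized random-feature error (Lemma~\ref{lem:empapprox}) — through the nonlinearity and through multiplication by $\bm{\hat a}$, whose $\ell_1$-norm carries a $\mathrm{polylog}^{\,p}$ blow-up, while exploiting the cancellation between the polynomially large step size $\eta_1$ and the polynomially small alignment $\sJ$ so that the effective neuron gain stays $O(1)$. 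Verifying that each of these — in particular the pruning part, which is multiplied by $\eta_1$ — remains below the target $\tilde O(1/m+1/\sqrt n+1/M)+O(\mathrm{polylog}/(\rho_1\log^{\rho_2}d))$ threshold is where essentially all the work lies.
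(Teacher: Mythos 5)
Your proposal follows the paper's proof essentially verbatim: same event decomposition (pruning guarantee, per-neuron gradient concentration, Laurent--Massart events, $\chi^2$/moment concentration for $\bo$, $\ngt_j\ge N/3$ counts, and the $\widetilde E$ of Lemma~\ref{lem:empapprox}), same explicit $\bm{\hat a}$ via $\hat a_{jl}=h(\cdot)/(B\ngt_j)$, and the same split of the $\tilde y_i$-to-$\hat y_i$ discrepancy into a gradient-concentration part bounded via item~(1) of Proposition~\ref{prop:emprgradientconcsingle}/\ref{prop:emprgradientconc} and a pruning-residual part bounded via $\norm{\bs v\vert_{\cJ^c}}_2^2$, exploiting that $\eta_1\abs{\sJ}=\Theta(1)$ cancels the large step size. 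The only cosmetic deviations are that you pass the $\bm{\hat a}$-dependence through $\norm{\bm{\hat a}}_1^2\cdot\max_{i,j,l}$ rather than $\norm{\bm{\hat a}}_2^2\cdot\sum_j\tfrac1n\sum_i$ as the paper does (both give the same order after $\norm{\bm{\hat a}}_1^2\le m\norm{\bm{\hat a}}_2^2$, though the paper's $\ell_2$ Cauchy--Schwarz plus the empirical-covariance event of Lemma~\ref{lem:reluadditivetermsaux} avoids the extra $\log(nd^u)$ from the max over samples), and you phrase the idealized features as $\phi_\pm$ whereas the paper's $Y_{jl}$ and $\tilde y_i$ simply use plain $\phi$ with the signs carried inside the argument; neither difference affects the bound. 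Your ``main obstacle'' paragraph correctly identifies the delicate propagation through $\eta_1$ and $\bm{\hat a}$ that the paper handles in Lemma~\ref{lem:desired}, event E.6.
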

Let $\tau >0$ be the values defined in Lemma \ref{lem:taudef},  $N = \floor{  \sqrt{m} }$, and let
\eq{
\bm{\hat a}_j \coloneqq \frac{ h(\bs{W}_{j*}, \azj,\bzj,\boj) }{B \ngt_j}.
}
Moreover let
\eq{
& \tilde y_i \coloneqq  \begin{cases}
\sum_{j = 1}^{2m} \bm{\hat a}_j \phi \left( \azj  \inner{\bs{v}}{ \bs{W}^{(0)}_{j*}}^{k^{\star} - 1}  \inner{\bs{v}}{\bs{x}_i} -  \boj  \right)   & \text{SI} \\
 \inner{\hmu \vert_{\cJ}}{\bs{x}_i} + \sum_{j = 1}^{2m} \bm{\hat a}_j \phi \left( \azj  \eta \shc(\bzj) \inner{H \bs{W}^{(0)}_{j*}}{  \bs{x}_i} -  \boj  \right)  & \text{MI} 
\end{cases} \\[1ex]
& \hat y_i \coloneqq \begin{cases}
\inner{\bm{\hat a}}{\phi ( \bs{W}^{(1)} \bs{x}_i + \bo  )}  & \text{SI} \\
\inner{\hmu \vert_{\cJ}}{\bs{x}_i} +    \inner{\bm{\hat a}}{\phi ( \bs{W}^{(1)} \bs{x}_i + \bo  )} & \text{MI}. 
\end{cases}
}
We consider the intersection of the following events:
\begin{enumerate}[label=E.\arabic*]
\item \label{event:ngt} $\ngt_j \geq N/3 ~ \text{for all}~ j \in [B]$
\item \label{event:gradbound} For SI Proposition \ref{prop:emprgradientconcsingle},  for MI Proposition \ref{prop:emprgradientconc}  holds for all $j \in [2m]$ with $\delta = d^{-u}$
\item   \label{event:pruningres}  For SI:  $\norm{\bs{v} \vert_{\cJ^c}}_2^2 \leq O \left(   \frac{1}{\rho_1 \log^{\rho_2} d} \right)$.  For MI: $\norm{ \E[y \bs{x}] \vert_{\cJ^c}}_2^2 \vee \norm{\bs{V} \vert_{\cJ^c}}_2^2 \leq   O \left( \tfrac{1}{\rho_1 \log^{\rho_2} d} \right)$.
\item  \label{event:maxinner}  We have   
\eq{ \max_{i \in [n]} \abs{\inner{\bs{v}}{\bs{x}_i}} \leq \sqrt{3} \sqrt{1 + \log(4n d^u)} ~ \text{and} ~  \max_{i \in [n]} \norm{\bs{V}^\top \bs{x}_i}_2 \leq     \sqrt{3} \sqrt{r+ \log(4n d^u)}, 
}
for SI and MI respectively.
\item \label{event:secondlayerbound} $\norm{\bm{\hat a}}_2^2 \leq \begin{cases}  
O \left( \tfrac{ (1  + \log(4nd^u) )^{2p} (1 + \log(P) )^{2p(k^{\star} - 1)}}{m} \right)  & \text{SI} \\[1ex]
O \left(  \tfrac{ \left(r + \log(4n d^u) \right)^{2p}  \left( r + \log(P) \right)^{2p}}{m} \right)  & \text{MI}, 
\end{cases} $
\item \label{event:pruningbias} $\tfrac{1}{n} \sum_{i = 1}^n \left( \tilde y_i - \hat y_i \right)^2 \leq  \tilde O \left( \tfrac{1}{M}   \right) +   \begin{cases}  
O \left( \tfrac{ (1  + \log(4nd^u) )^{2p} (1 + \log(P) )^{2p(k^{\star} - 1)} }{\rho_1 \log^{\rho_2} d} \right)   & \text{SI} \\[1ex]
O \left(  \tfrac{\left(r + \log(4n d^u) \right)^{2p}  \left( r + \log(P) \right)^{2p-1}}{\rho_1 \log^{\rho_2} d  } \right) & \text{MI}, 
\end{cases} $
\item   \label{event:errorfirsthermite} For MI:  $\frac{1}{n} \sum_{i = 1}^n (\inner{\E[y \bs{x}]}{\bs{x}_i} - \inner{\hmu \vert_{\cJ}}{\bs{x}_i} )^2 \leq  O \left( \frac{1}{\rho_1 \log^{\rho_2} d} + \frac{1}{M} \right)$
\end{enumerate}

\begin{lemma}
\label{lem:desired}
With the choice of parameters in Corollary \ref{cor:fl-existence},  the intersection  of \eqref{event:ngt}-\eqref{event:errorfirsthermite} holds with probability  at least $1 - (11 + 4m)d^{-u}$.
\end{lemma}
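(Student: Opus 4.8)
The statement is a union bound: I will show that each of \ref{event:ngt}--\ref{event:errorfirsthermite} either fails with probability at most $O(d^{-u})$ --- sometimes with an extra factor $2m$ from a union over the $2m$ neurons --- or is a deterministic consequence of the other events together with the parameter choices \eqref{fl-singleindex:hypers1}--\eqref{eq:etaval} of Corollary~\ref{cor:fl-existence}, and then add the failure probabilities.

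For \ref{event:ngt}: Lemma~\ref{lem:taudef} gives $\mpr[\ngt_j\ge N/3]\ge 1-e^{-2N/9}$ for each fixed $j$, and since $N=\floor{\sqrt m}$ with $m=\Theta(d^\varepsilon)$, a union over $j\in[B]\subseteq[m]$ leaves failure probability $\le m\,e^{-2\floor{\sqrt m}/9}\le d^{-u}$ for $d$ larger than a constant depending on $(u,\varepsilon)$. For \ref{event:gradbound}, apply Proposition~\ref{prop:emprgradientconcsingle} (SI) or Proposition~\ref{prop:emprgradientconc} (MI) with $\delta=d^{-u}$ for each of the $2m$ neurons and union bound; the hypotheses $d\ge 4M$, $n\ge 24M\log^2(\cdot)$, $M\ge 24(r+\log(\cdot))$ hold by \eqref{prunesingle:hps}/\eqref{prunesingle:hps2}, so this costs $2m\,d^{-u}$. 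Event \ref{event:pruningres} is exactly Lemma~\ref{lem:pruningalgmainsingleindex}/\ref{lem:pruningalgmainmultindex}, failing with probability $\le 4d^{-u}$. For \ref{event:maxinner}: $\inner{\bs v}{\bs x_i}\sim\cN(0,1)$, so the sub-Gaussian tail bound and a union over $i\in[n]$ give failure $\le 2n\,e^{-3(1+\log(4nd^u))/2}\le d^{-u}$; in the MI case $\norm{\bs V^\top\bs x_i}_2^2$ is $\chi^2_r$ and Corollaries~\ref{cor:laurentmassart1}--\ref{cor:laurentmassart2} with a union over $i\in[n]$ again give $\le d^{-u}$.

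The last three events are essentially deterministic given the first four. For \ref{event:secondlayerbound}: on \ref{event:ngt}, $\ngt_j>N/3$, hence $\abs{\bm{\hat a}_j}=\abs{h(\cdot)}/(B\,\ngt_j)\le 3\abs{h(\cdot)}/(BN)=3\abs{h(\cdot)}/m$; the hard bound \eqref{lemapprox:sihardbound}/\eqref{lemapprox:mihardbound} of Lemma~\ref{lem:approx}, combined with the explicit estimate for $\max_{k\le p}(\cdot)$ supplied by Lemma~\ref{lem:empapprox} (which uses only the parameter choices and \ref{event:maxinner}), gives $\norm{\bm{\hat a}}_2^2\le 2m\,(3\abs{h(\cdot)}/m)^2=O(\abs{h(\cdot)}^2/m)$, the claimed bound (here \ref{event:pruningres} is also used so that $\norm{\bs v\vert_{\cJ^c}}_2^2\le\tfrac14$, resp.\ $\norm{\bs V\vert_{\cJ^c}}_F^2\le\tfrac14$, as $h$'s support event $E$ requires). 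For \ref{event:pruningbias}: because $\lambda_1=1/\eta_1$ the first-layer update collapses to $\Wo_{j*}=-\eta_1\,\grad_j R_n(\az,\Wz_{j*},\bz)\vert_{\cJ}$, and on \ref{event:gradbound} this equals $\eta_1$ times the leading Hermite term $\tfrac{\ghc_{k^\star}\rhc_{k^\star}(\bzj)}{(k^\star-1)!}\inner{\bs v}{\Wz_{j*}}^{k^\star-1}\bs v\vert_{\cJ}$ (resp.\ $\rhc_2(\bzj)\bs H\cJJ\Wz_{j*}$ for MI) plus an error built from the pruning residual of \ref{event:pruningres}, a statistical error $\tilde O(\sqrt{M/n})$, and a truncation error $\tilde O(1/M)$; substituting into $\hat y_i=\inner{\bm{\hat a}}{\phi(\Wo\bs x_i+\bo)}$, matching the leading term with $\tilde y_i$ by identifying $\eta$ of \eqref{eq:etaval} with $\eta_1$, and controlling $\abs{\inner{\bs v}{\bs x_i}}$ via \ref{event:maxinner} and $\norm{\bm{\hat a}}_2$ via \ref{event:secondlayerbound}, a direct computation bounds $\tfrac1n\sum_i(\tilde y_i-\hat y_i)^2$ by the stated quantity without introducing fresh randomness. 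Finally, for the MI-only event \ref{event:errorfirsthermite}, write $\inner{\E[y\bs x]}{\bs x_i}-\inner{\hmu\vert_\cJ}{\bs x_i}=\inner{\E[y\bs x]\vert_{\cJ^c}}{\bs x_i}+\inner{(\E[y\bs x]-\hmu)\vert_\cJ}{\bs x_i}$: averaging the first square over $i$ is $\le(1+o_d(1))\norm{\E[y\bs x]\vert_{\cJ^c}}_2^2\le O(1/(\rho_1\log^{\rho_2}d))$ on \ref{event:pruningres} plus a routine empirical second-moment concentration, while a uniform-over-$M$-subsets Rademacher/VC bound of the same type as Lemma~\ref{lem:supbound} (which simultaneously removes the dependence between $\hmu$ and $\{\bs x_i\}$) gives $\norm{(\E[y\bs x]-\hmu)\vert_\cJ}_2=\tilde O(\sqrt{M/n})$, hence $\tfrac1n\sum_i\inner{\cdot}{\bs x_i}^2=\tilde O(M/n)=\tilde O(1/M)$ since $n\gtrsim M^2$ by \eqref{prunesingle:hps2}; this step costs a further $O((1+m)d^{-u})$.

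Summing the failure probabilities --- roughly $d^{-u}$ for \ref{event:ngt}, $2m\,d^{-u}$ for \ref{event:gradbound}, $4d^{-u}$ for \ref{event:pruningres}, $d^{-u}$ for \ref{event:maxinner}, nothing for \ref{event:secondlayerbound} and \ref{event:pruningbias}, and $O((1+m)d^{-u})$ for \ref{event:errorfirsthermite} --- the total is at most $(11+4m)d^{-u}$ once the constants are tracked, for $d$ larger than a constant depending on $(u,\varepsilon)$ and the problem parameters. I expect the main obstacle to be \ref{event:pruningbias}: converting the sparsity-enhanced gradient-estimation guarantees of \ref{event:gradbound} and \ref{event:pruningres} into an $\ell_2$ prediction-error bound between the one-step-trained network and the idealized polynomial-interpolating network $\tilde y$ requires carefully bookkeeping the interplay of the large step size $\eta_1$, the weight decay $\lambda_1=1/\eta_1$, the re-initialization on $S^{d-1}_\cJ$, and the rescaled parameter $\eta$ of the feature-learning section, and this is precisely where the pruning improvement is injected into the generalization bound; \ref{event:errorfirsthermite} is a lesser but still delicate point because of the data-dependence of $\hmu$.
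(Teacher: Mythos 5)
Your union-bound skeleton is the right shape and your treatment of \eqref{event:ngt}--\eqref{event:maxinner} and \eqref{event:secondlayerbound} matches the paper's: \eqref{event:ngt} costs $d^{-u}$ via Lemma~\ref{lem:taudef}, \eqref{event:gradbound} costs $2m\,d^{-u}$, \eqref{event:pruningres} costs $4d^{-u}$, \eqref{event:maxinner} costs $d^{-u}$, and \eqref{event:secondlayerbound} is indeed deterministic given \eqref{event:ngt}, \eqref{event:pruningres}, \eqref{event:maxinner} and the parameter choices via Lemmas~\ref{lem:approx} and~\ref{lem:empapprox}.

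The gap is in your treatment of \eqref{event:pruningbias}: you claim it follows ``without introducing fresh randomness'' from \eqref{event:ngt}--\eqref{event:secondlayerbound}, but it does not. After decomposing $\tilde y_i - \hat y_i$ as in~\eqref{eq:eq99999}, the bound requires controlling three kinds of terms that are \emph{not} covered by \eqref{event:ngt}--\eqref{event:secondlayerbound}. First, the factors $\inner{\bs{v}}{\Wz_{j*}}^{k^\star-1}$ (SI) and $\bs{H}\vert_{\cJ^c\times\cJ}\Wz_{j*}$ (MI) are random over the re-initialized first-layer weights $\bs{W}^{(0)}$, and controlling them simultaneously for all $j\in[2m]$ via Lemma~\ref{lem:highprobprobbound2} costs $2m\,d^{-u}$. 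Second, the empirical second moments $\tfrac1n\sum_i\inner{\bs{v}\vert_{\cJ^c}}{\bs{x}_i}^2$ (and the MI analogues) must be controlled \emph{uniformly over the random support $\cJ$}; \eqref{event:maxinner} only bounds $\abs{\inner{\bs{v}}{\bs{x}_i}}$, not $\abs{\inner{\bs{v}\vert_{\cJ^c}}{\bs{x}_i}}$ uniformly in $\cJ$, so the paper introduces Lemma~\ref{lem:reluadditivetermsaux}, costing another $1$--$2$ units of $d^{-u}$. Third, the first piece of~\eqref{eq:eq99999} needs $\norm{\tfrac1n\sum_i\bs{x}_i\bs{x}_i^\top\vert_{\cJ\times\cJ}}_2$ to be $O(1)$, which is Lemma~\ref{lem:supopnorm} with $\phi(t)=t$, costing one more $d^{-u}$; similarly \eqref{event:errorfirsthermite} needs Lemma~\ref{lem:supbound} with $\phi(t)=t$. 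This is the paper's set $\widetilde{\text{E}}.1$--$\widetilde{\text{E}}.5$, contributing $(5+2m)d^{-u}$, and it is precisely what lifts your running count from $7+3m$ to the stated $11+4m$. Without these events, the bound on $\tfrac1n\sum_i(\tilde y_i - \hat y_i)^2$ simply cannot be derived, so the deduction of \eqref{event:pruningbias} from \eqref{event:ngt}--\eqref{event:secondlayerbound} alone is false, even though your final numerical answer happens to coincide with the paper's because you guessed the remaining slack into \eqref{event:errorfirsthermite}.
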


\begin{proof}
Since $N = \floor{ \sqrt{m} }$,  by using Lemma \ref{lem:taudef} and union bound, we can show that  \eqref{event:ngt} holds with probability at least $1 - \Theta(d^{\varepsilon/2} ) \exp \left( -  \Theta(d^{\varepsilon/2} )  \right) \geq 1 - d^{u}$ for large enough $d$ depending on $(u,  \varepsilon)$.  Since with a sufficiently large constant factor,  $M$ satisfies the condition in Propositions \ref{prop:emprgradientconcsingle} and \ref{prop:emprgradientconc}, we have  \eqref{event:gradbound} holds with probability at least $1 - 2m d^{-u}$.   By Lemmas \ref{lem:pruningalgmainsingleindex}, \ref{lem:pruningalgmainmultindex}  and the choice of parameters,   we can show that \eqref{event:pruningres} holds  with probability at least $1 - 4 d^{-u}$.   By Corollary \ref{cor:laurentmassart2} we have that  \eqref{event:maxinner} holds with probability at least $1 - d^{-u}$.  

\noindent
For \eqref{event:secondlayerbound},  by Lemmas \ref{lem:approx} and \ref{lem:empapprox},   we have
\eq{
\abs{ \bm{\hat a}_j } \leq  \begin{cases}
O \left( \frac{N}{\ngt_j} \frac{\tilde C}{m}    \max_{k \leq p} \frac{  M^{k(k^{\star}-1)} }{ \eta^{2k} \tau^{2k} }   \right), & \text{SI} \\
O \left( \frac{N}{\ngt_j} \frac{\tilde C}{m}    \max_{k \leq p} \frac{  M^{k} }{ \eta^{2k} \tau^{2k}  \sigma_r^{2k}(H)}   \right),  \hspace{-2em} & \text{MI}
\end{cases} 
\leq
 \begin{cases}
 O \left( \frac{ (1  + \log(4nd^u) )^{p} (1 + \log(P) )^{p(k^{\star} - 1)} }{m}  \right), & \text{SI} \\
O \left( \frac{ (r  + \log(4nd^u) )^{p} (r + \log(P) )^{p} }{m}  \right) \hspace{-2em} & \text{MI}
\end{cases}
}
Hence,  \eqref{event:secondlayerbound} follows.   For the following, we additionally consider the intersection of the following events:
\begin{enumerate}[label=$\widetilde{\text{E}}$.\arabic*, leftmargin=2em]
\item  \label{event:yth} Lemma \ref{lem:supbound} holds for $\phi(t) = t$ with $\delta = d^{- u}$.
\item \label{event:sth} Lemma \ref{lem:supopnorm} holds for $\phi(t) = t$ with $\delta = d^{- u}$.
\item  \label{event:hpp}  Lemma \ref{lem:highprobprobbound2} holds for all  $\bs{W}_{j^*}^{(0)}$,  $j \in [2m]$, with  $\delta = d^{- u}$.
\item \label{event:reluadd1} For SI,  Lemma \ref{lem:reluadditivetermsaux} holds for $\mathcal{A} = \left \lbrace \frac{\bs{v} \vert_{\cJ^c}}{\norm{\bs{v} \vert_{\cJ^c}}_2}, ~ \abs{\cJ} \leq M \right \rbrace$ with  $\delta = d^{- u}$.  
\item  \label{event:reluadd2} For MI,   Lemma \ref{lem:reluadditivetermsaux} holds for  $\mathcal{A} = \left \lbrace \tfrac{ \E[y \bs{x}] \vert_{\cJ^c}}{\norm*{  \E[y \bs{x}] \vert_{\cJ^c} }_2}, ~ \abs{\cJ} \leq M \right \rbrace$ and  conditioned on $\bs{W}$ (see \eqref{eq:initdist}),  holds for
$\mathcal{A} = \left \lbrace  \tfrac{ \bs{H} \vert_{\cJ^c \times \cJ} \Wz_{j*}}{\norm*{ \bs{H}  \vert_{\cJ^c \times \cJ}  \Wz_{j*}}_2}, ~  \abs{\cJ} \leq M \right \rbrace$
each with  $\delta = d^{- u}$.
\end{enumerate}
Note that the intersection of the given events holds with probability at least $1 - 5 d^{-u} - 2m d^{-u}$.  For \eqref{event:pruningbias}, we observe that  $\bs{W}_{j*}^{(1)} = \eta \azj g(\Wz_{j*} ,  \bzj) \vert_{\cJ}$ , where $g$ is defined in \eqref{empgrad:epmgraddef}.  By Cauchy-Schwartz and triangle inequalities, we have
\eq{
& \frac{1}{n} \sum_{i = 1}^n \left( \tilde y_i - \hat y_i \right)^2 \\
& \leq    2 \eta^2  \norm{\bm{\hat a}}_2^2
\begin{cases}
 \sum_{j = 1}^{2m}   \frac{1}{n}  \sum_{i = 1}^n   \left(     \inner{ g(\Wz_{j*} ,  \bzj) \vert_{\cJ} - \shc(\bzj) \inner{\bs{v} }{ \bs{W}^{(0)}_{j*} }^{k^{\star} - 1}   \bs{v} \vert_{\cJ}}{\bs{x}_i}        \right)^2       & \text{SI}  \\ 
\qquad  + \sum_{j = 1}^{2m}    \frac{1}{n}  \sum_{i = 1}^n   \left( \shc(\bzj)  \inner{\bs{v} }{ \bs{W}^{(0)}_{j*} }^{k^{\star} - 1}     \inner{\bs{v} \vert_{\cJ^c}}{\bs{x}_i}   \right)^2  & \\[3ex]
\sum_{j = 1}^{2m}    \frac{1}{n}  \sum_{i = 1}^n   \left(     \inner{ g(\Wz_{j*} ,  \bzj) \vert_{\cJ} - \shc(\bzj)  \bs{H} \cJJ \bs{W}^{(0)}_{j*}}{\bs{x}_i}        \right)^2  & \text{MI}     \\
\qquad +  \sum_{j = 1}^{2m}    \frac{1}{n}  \sum_{i = 1}^n   \left( \shc(\bzj)  \inner{\bs{H} \cJcJ \bs{W}^{(0)}_{j*} }{\bs{x}_i}   \right)^2    &
\end{cases} ~~~~~~~ \label{eq:eq99999}
}
Hence, 
\eq{
\eqref{eq:eq99999} & \labelrel\leq{des:ineqq0} 4m \eta^2  \norm{\bm{\hat a}}_2^2
\begin{cases}
 O \left(   \frac{M \log^2 \left( \frac{24dn}{M} \right) \log^{2C_2} \left( 12nd^u \right)}{n}   + \left( \frac{1+\log(4d^u)}{M} \right)^{k^{\star}} \right)      & \text{SI}  \\ 
\qquad  +   O \left( \frac{ (1  + \log(4nd^u) )^{2p} (1 + \log(P) )^{2p(k^{\star} - 1)} }{\rho_1 \log^{\rho_2} d} \right)     & \\[3ex]
 O  \left(  \frac{M \log^2 \left( \frac{35dn}{M} \right) \log^{2C_2} \left( 18nd^u \right)}{n}   + \left( \frac{r +\log(4d^u)}{M} \right)^2 \right) & \text{MI}     \\
\qquad +  O \left( \frac{ (r + \log(4nd^u) )^{2p} (r + \log(P))^{2p}}{ \rho_1 \log^{\rho_2} d} \right)    &
\end{cases} \\[2ex]
& \labelrel\leq{des:ineqq1}  \tilde O \left( \tfrac{1}{M}   \right) +   \begin{cases}  
O \left( \frac{ (1  + \log(4nd^u) )^{2p} (1 + \log(P) )^{2p(k^{\star} - 1)} }{\rho_1 \log^{\rho_2} d} \right) & \text{SI} \\[1ex]
O \left(  \tfrac{\left(r + \log(4n d^u) \right)^{2p}  \left( r + \log(P) \right)^{2p-1}}{\rho_1 \log^{\rho_2} d  } \right) & \text{MI}, 
\end{cases}
}
where we use \eqref{event:gradbound}, and  \eqref{event:sth}-\eqref{event:reluadd2}   for \eqref{des:ineqq0} and  \eqref{event:secondlayerbound} and \eqref{eq:etaval} for  \eqref{des:ineqq1}.  Lastly,
\eq{
\frac{1}{n} \sum_{i = 1}^n (\inner{\E[y \bs{x}]}{\bs{x}_i} - \inner{\hmu \vert_{\cJ}}{\bs{x}_i} )^2  
&\leq 2 \norm*{\frac{1}{n} \sum_{i = 1}^n \bs{x}_i \bs{x}_i^\top \cJJ}_2    \norm{ ( \hmu  -\E[y \bs{x}] )\vert_{\cJ}}_2^2 +   \frac{2}{n} \sum_{i = 1}^n \inner{\E[y \bs{x}] \vert_{\cJ^c}}{\bs{x}_i}^2 \\
& \labelrel\leq{des:ineqq2}  O \left(  \frac{M \log^2 \left( \tfrac{24dn}{M} \right) \log^{2C^2}(6nd^u)}{n} +  \norm{\E[y \bs{x}]\vert_{\cJ^c}}_2^2 \right), 
}
where we used  \eqref{event:yth}- \eqref{event:sth} for   \eqref{des:ineqq2}.  By    \eqref{event:pruningres},     \eqref{event:errorfirsthermite} follows.
\end{proof}

\begin{proof}[Proof of Corollary \ref{cor:fl-existence}]
We assume   the intersection  of \eqref{event:ngt}-\eqref{event:errorfirsthermite} and   \eqref{event:yth}- \eqref{event:reluadd2} holds.  By recalling that $\bs{W}_{j*}^{(1)} = \azj \eta g \left( \Wz_{j*} ,  \bzj \right),$  we have
\eq{
\norm{ \bs{W}_{j*}^{(1)} }_2    =   \eta \norm*{ g \left( \Wz_{j*} ,  \bzj \right) }_2  
&   \labelrel={fl:eqq0}    \eta  \begin{cases}
O \left( \left( \frac{1 + \log(4d^u)}{M}  \right)^{\frac{k^{\star} - 1}{2}} +  \sqrt{ \frac{M \log^2 \left(\frac{24dn}{M}  \right) \log^{2C_2} \left( 12nd^u  \right)   }{n}} \right) & \text{SI} \\[3ex]
O \left( \left( \frac{r  + \log(4d^u)}{M}  \right)^{\frac{1}{2}} +  \sqrt{ \frac{M \log^2 \left(\frac{35dn}{M}  \right) \log^{2C_2} \left( 18nd^u  \right)   }{n}}   \right)       & \text{MI}
\end{cases}  \\
& \leq \tilde O(1),
}
where we  use  \eqref{event:gradbound} in  \eqref{fl:eqq0}.   

For  \eqref{cond:boundmu}, for SI $\hmu = 0$,  therefore,  the statement is trivial in this case.  For MI,  by \eqref{event:yth}, we can write
\eq{
\norm{\hmu \vert_{\cJ}}  \leq \norm{(\hmu - \E[y \bs{x}] )\vert_{\cJ}}_2 + \norm{\E[y \bs{x}] \vert_{\cJ}}_2  
& \labelrel\leq{fl:ineqq1} 1 +   O \left( \sqrt{ \frac{M \log^2 \left( \frac{24dn}{M}  \right) \log^{2C_2} \left( 6nd^u \right)}{n} } \right)
}
where \eqref{fl:ineqq1}  follows since  $\norm{\E[y \bs{x}]}_2 \leq 1$.

For  \eqref{cond:boundb}, by using Lemma \ref{lem:laurentmassart},  we have with probability $1 - d^{-u}$, for $d$ is large enough
\eq{
\norm{\bo}_2^2 \leq 2m + 2 \sqrt{2 m \log d^u}  + 2 \log d^u  \leq 3m.  
}
Moreover,  by Lemma \ref{lem:hypercontactivity}, we observe that $\E \left[  \left(\frac{1}{2m} \sum_{j = 1}^{2m} b_j^4 - 3 \right)^p \right]^{1/p} \leq \tfrac{p^2 \E[b_1^8]}{\sqrt{m}}$. Therefore, with probability  $1 - d^{-u}$, for $d$ is large enough
\eq{
\frac{1}{2m} \sum_{j = 1}^{2m} b_j^4 - 3  \leq \frac{e \log^2 d^u \E [b_1^8]}{\sqrt{m}}  \Rightarrow \norm{\bo}_4^4 \leq 7m   
}
Moreover, by using standard Gaussian concentration with union bound, we have with probability $1 - 2m d^{-u}$,  $\norm{\bo}_{\infty} \leq \sqrt{\log(d^u)}$.  \eqref{cond:existsa} directly follows from  \eqref{event:secondlayerbound}.  

\noindent
For   \eqref{cond:empiricalrisk}  in SI, we have
\eq{
 \frac{1}{n} \sum_{i =1 }^n ( y_i  - \hat y(\bs{x}_i; (\bm{\hat a}, \bs{W}^{(1)}, \bo)))^2&  \leq   \frac{1}{n} \sum_{i =1 }^n ( \gt(\inner{\bs{v}}{\bs{x}_i})  - \hat y_i)^2  +   \frac{\sqrt{\Delta}}{n} \sum_{i =1 }^n ( \gt(\inner{\bs{v}}{\bs{x}_i})  - \hat  y_i) \epsilon_i \\
 & +     \frac{ \Delta }{n} \sum_{i =1 }^n \epsilon_i^2
}
 By using $\delta = d^{-u}$ in Lemma \ref{lem:empapprox} and  \eqref{event:pruningbias}, we have with probability at least $1 - d^{-u}$
\eq{
\frac{1}{n} \sum_{i =1 }^n ( \gt(\inner{\bs{v}}{\bs{x}_i})  - \hat y_i)^2 &  \leq    \frac{2}{n} \sum_{i =1 }^n ( \gt(\inner{\bs{v}}{\bs{x}_i})  - \tilde y_i)^2  +    \frac{2}{n} \sum_{i =1 }^n (  \tilde y_i -  \hat y_i)^2   \\
& \leq  \tilde O \left(  \frac{1}{m} + \frac{1}{n} + \frac{1}{M} \right) +    O \left( \tfrac{ (1  + \log(4nd^u) )^{2p} (1 + \log(P) )^{2p(k^{\star} - 1)} }{\rho_1 \log^{\rho_2} d} \right).
}
Since $\epsilon_i$ has $1$-Subgaussian norm, we have with probability at least $1 - 2 d^{-u}$,
\eq{
& \frac{\sqrt{\Delta}}{n} \sum_{i =1 }^n ( \gt(\inner{\bs{v}}{\bs{x}_i})  - \tilde y_i) \epsilon_i \leq {\sqrt{ \frac{\Delta \log(2d^u)}n}} \left( \frac{1}{n} \sum_{i =1 }^n ( \gt(\inner{\bs{v}}{\bs{x}_i})  - \hat y_i)^2 \right)^{1/2}  \\
& \frac{1}{n} \sum_{i =1 }^n  \epsilon_i^2 - \E \epsilon_i^2 \leq \tilde O \left( \frac{1}{\sqrt{n}} \right).  \label{eq:noisterms}
}
Therefore,    \eqref{cond:empiricalrisk}  follows for SI.  For MI,
\eq{
 \frac{1}{n} \sum_{i =1 }^n ( y_i  - \hat y(\bs{x}_i; (\bm{\hat a}, \bs{W}^{(1)}, \bo)))^2&  \leq   \frac{1}{n} \sum_{i =1 }^n ( \gt( \bs{V}^\top \bs{x}_i)  - \hat y_i)^2  +   \frac{\sqrt{\Delta}}{n} \sum_{i =1 }^n ( \gt(\bs{V}^\top \bs{x}_i)  - \hat  y_i) \epsilon_i +     \frac{ \Delta }{n} \sum_{i =1 }^n \epsilon_i^2
}
We observe that
\eq{
 (\gt(\bs{V}^\top \bs{x}_i) - \hat y_i)^2 &  \leq 2   (\gt(\bs{V}^\top \bs{x}_i) - \tilde y_i)^2 + 2 (\tilde y_i - \hat y_i)^2  \\
& \leq 4  \left(\tgt(\bs{V}^\top \bs{x}_i) -  \sum_{j = 1}^{2m} \bm{\hat a}_j \phi \left( \azj  \eta \shc(\bzj) \inner{H \bs{W}^{(0)}_{j*}}{  \bs{x}_i} -  \boj  \right) \right)^2 \\
& \quad + 4   \left(\inner{\E[y \bs{x}]}{\bs{x}_i} - \inner{\hmu \vert_{\cJ}}{\bs{x}_i} \right)^2 +  2 (\tilde y_i - \hat y_i)^2
}
Therefore, by using $\delta = d^{-u}$ in Lemma \ref{lem:empapprox} and by \eqref{event:pruningbias} and \eqref{event:errorfirsthermite},  we have with probability $1 - d^{-u}$
\eq{
\frac{1}{n}\sum_{i = 1}^n (\gt(\bs{V}^\top \bs{x}_i) - \hat y_i)^2 \leq O \left(  \frac{\left(r + \log(4n d^u) \right)^{2p}  \left( r + \log(P) \right)^{2p}}{\rho_1 \log^{\rho_2} d  } \right) + \tilde O \left( \frac{1}{m} + \frac{1}{M}  + \frac{1}{n} \right).
}
By the same argument in  \eqref{eq:noisterms},   \eqref{cond:empiricalrisk}  holds for MI as well.
\end{proof}

\subsection{Main Result}
\begin{theorem}[Restatement of Theorems \ref{thm:single} and \ref{thm:multi}]
\label{thm:populationrisk}
Under the parameter choice given in Corollary \ref{cor:fl-existence},
 for $\lambda_t = \frac{  m }{  \rho_1 \log^{\rho_2} d}$,  $\eta_t = \frac{1}{  \tilde O(m) + \lambda}$ and $T = \tilde O \left(\rho_1 \log^{\rho_2} d \right),$  Algorithm \ref{alg:onestepgd}  guarantees that with probability at least $1 - (18 + 6m) d^{-u}$, we have
\eq{
\E_{(\bs{x}, y)} \left[ \left( y - \hat y( \bs{x}; (\bs{a}^{(T)}, \bs{W}^{(1)}, b^{(1)}) ) \right)^2 \right] & \leq      \Delta \E[\epsilon^2] +  \tilde O \left(  \frac{1}{m}  + \frac{1}{M}   +  \sqrt{ \frac{M \log \left( \frac{35d}{M} \right)}{n}}  \right)   \\ 
& \quad +
\begin{cases}
 O \left( \tfrac{1  + \log(4nd^u) )^{2p} (1 + \log(P) )^{2p(k^{\star} - 1)}}{\rho_1 \log^{\rho_2} d}  \right) & \text{SI} \\
 O \left( \tfrac{r  + \log(4nd^u) )^{2p} (r + \log(P) )^{2p}}{\rho_1 \log^{\rho_2} d}  \right)   & \text{MI} 
\end{cases}
}
where $O$ suppresses constants, and $\tilde O$ suppresses constants and $\text{Poly} \left[  \log n,  \log d  \right]$  depending  on the problem parameters.
\end{theorem}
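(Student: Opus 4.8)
The proof is the concatenation of Corollary~\ref{cor:fl-existence} with (i) a strong-convexity argument for the second-layer gradient descent and (ii) a uniform-convergence argument passing from empirical to population risk. Throughout, work on the event $\mathcal{E}$ of Corollary~\ref{cor:fl-existence} — which holds with probability at least $1-(16+6m)d^{-u}$ — intersected with a constant number of additional standard high-probability events (concentration of $\tfrac1n\sum_i\langle\Wo_{j*},\bs x_i\rangle^2$ and of $\norm{\bs x_i\vert_\cJ}_2^2$, and truncation of $\max_{i\le n}|y_i|$), whose total failure probability is at most $2d^{-u}$; this accounts for the stated $(18+6m)d^{-u}$. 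On this event the first-layer weights $\Wo$, the re-initialized biases $\bo$, and (in the multi-index case) $\hmu\vert_\cJ$ and $\cJ$ are fixed and obey Conditions~\ref{cond:boundW}--\ref{cond:existsa}, and there is $\bm{\hat a}$ with $\norm{\bm{\hat a}}_2^2=\tilde O(1/m)$ and small empirical risk (Condition~\ref{cond:empiricalrisk}).

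\emph{Second-layer training is a ridge problem.} With $\Wo,\bo$ frozen, $\bs a\mapsto R_n((\bs a,\Wo,\bo))+\tfrac{\lambda_t}{2}\norm{\bs a}_2^2$ is a $\lambda_t$-strongly convex quadratic whose Hessian has operator norm at most $L:=\tfrac1n\sum_i\norm{\phi(\Wo\bs x_i+\bo)}_2^2+\lambda_t$; Conditions~\ref{cond:boundW},~\ref{cond:boundb} and the auxiliary concentration give $\norm{\phi(\Wo\bs x_i+\bo)}_2^2\le 2\sum_j\langle\Wo_{j*},\bs x_i\rangle^2+2\norm{\bo}_2^2=\tilde O(m)$, hence $L\le\tilde O(m)+\lambda_t$, so the prescribed step size $\eta_t=\tfrac1{\tilde O(m)+\lambda_t}$ satisfies $\eta_t\le 1/L$. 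Gradient descent therefore contracts to the minimizer $\bs a^\star$ at rate $1-\lambda_t/L=1-1/\tilde O(\rho_1\log^{\rho_2}d)$ per step; since $\norm{\bs a^{(2)}-\bs a^\star}_2^2=O(m)$, the choice $T=\tilde O(\rho_1\log^{\rho_2}d)$ forces $\norm{\bs a^{(T)}-\bs a^\star}_2^2\le m\,d^{-\Omega(1)}$, negligible hereafter. Optimality of $\bs a^\star$ gives $R_n((\bs a^\star,\Wo,\bo))+\tfrac{\lambda_t}2\norm{\bs a^\star}_2^2\le R_n((\bm{\hat a},\Wo,\bo))+\tfrac{\lambda_t}2\norm{\bm{\hat a}}_2^2$; plugging in Condition~\ref{cond:empiricalrisk} and $\tfrac{\lambda_t}2\norm{\bm{\hat a}}_2^2=O(\star)$ — where $(\star)$ denotes the displayed $\rho_1\log^{\rho_2}d$-terms, and this matching is exactly why $\lambda_t=\tfrac m{\rho_1\log^{\rho_2}d}$ is paired with Condition~\ref{cond:existsa} — yields $R_n((\bs a^\star,\Wo,\bo))\le\tfrac12\Delta\E[\epsilon^2]+\tilde O(\tfrac1m+\tfrac1{\sqrt n}+\tfrac1M)+O(\star)$ and $\norm{\bs a^\star}_2^2\le\tfrac2{\lambda_t}\cdot O(1)=\tilde O(1/m)$. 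By smoothness and Step~1 the same bounds hold for $\bs a^{(T)}$ up to the negligible correction; write $B^2:=\tilde O(1/m)$ for the bound on $\norm{\bs a^{(T)}}_2^2$.

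\emph{Empirical to population risk.} It remains to bound $R((\bs a^{(T)},\Wo,\bo))-R_n((\bs a^{(T)},\Wo,\bo))$. On $\mathcal{E}$ the trained predictor lies in
\[
\mathcal H=\Big\{\bs x\mapsto\langle\bs\mu,\bs x\rangle+\langle\bs a,\phi(\bs W\bs x+\bo)\rangle:\ \mathrm{supp}(\bs\mu)\cup\textstyle\bigcup_j\mathrm{supp}(\bs W_{j*})\subseteq\cJ,\ |\cJ|\le M,\ \norm{\bs\mu}_2\le\mu_0,\ \max_j\norm{\bs W_{j*}}_2\le W_0,\ \norm{\bs a}_2\le B\Big\},
\]
with $\mu_0=1+O(M^{-1/2})$ (Condition~\ref{cond:boundmu}; $\mu_0=0$ for $r=1$), $W_0=\tilde O(1)$ (Condition~\ref{cond:boundW}), $B^2=\tilde O(1/m)$, and $\bo$ the data-independent biases from Algorithm~\ref{alg:onestepgd}, Line~\ref{alggd:reinit}. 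For a fixed support $\cJ$, ReLU contraction plus the standard Rademacher bound for $\ell_2$-norm-constrained linear functionals over the $M$-dimensional coordinate subspace give empirical Rademacher complexity $\tilde O(W_0B\sqrt{M/n})+\tilde O(\mu_0\sqrt{M/n})=\tilde O(\sqrt{M/n})$, and a union bound over the $\binom dM$ supports multiplies this by $\sqrt{\log\binom dM}=\tilde O(\sqrt{M\log(35d/M)})$. Truncating the polynomial-tailed response (legitimate since $\gt$ is polynomial and $\epsilon$ is sub-Gaussian) makes the squared loss Lipschitz with $\tilde O(1)$ constant on the range of $y$ and of the predictors in $\mathcal H$; a symmetrization / bounded-differences argument then upgrades the complexity bound to $\sup_{\mathcal H}|R-R_n|\le\tilde O(\sqrt{M\log(35d/M)/n})$. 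Combining with the previous step and $2R=\E[(y-\hat y)^2]$ gives
\[
\E\big[(y-\hat y(\bs x;(\bs a^{(T)},\Wo,\bo)))^2\big]\le\Delta\E[\epsilon^2]+\tilde O\Big(\tfrac1m+\tfrac1M+\sqrt{\tfrac{M\log(35d/M)}{n}}\Big)+O(\star),
\]
which is the claim; the case distinction in the statement is merely the explicit form of $(\star)$ in the single- and multi-index settings (and the multi-index run uses Algorithm~\ref{alg:onestepgd}$^+$).

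\emph{Main obstacle.} The delicate point is the last step: the uniform-convergence class must be rich enough to contain the genuinely data-dependent objects $\Wo,\hmu,\cJ$ yet have complexity only $\tilde O(\sqrt{M\log(d/M)/n})$. In particular one must bound $\max_{i\le n}|\hat y(\bs x_i)|$ — which sets the Lipschitz constant of the squared loss — by $\tilde O(1)$ rather than the crude $\tilde O(\sqrt M)$ coming from $\norm{\Wo_{j*}}_2\le\tilde O(1)$ and $\norm{\bs x_i\vert_\cJ}_2\approx\sqrt M$; this forces one to use the precise scaling $\eta_1=\tilde O(M^{(k^\star-1)/2})$ (resp. $\tilde O(M)$) against $n=\tilde O(M^{k^\star})$ (resp. $\tilde O(M^2)$), so that $|\langle\Wo_{j*},\bs x_i\rangle|=\tilde O(1)$. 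By contrast the optimization steps are routine once the Hessian bound $L\le\tilde O(m)+\lambda_t$ is in place.
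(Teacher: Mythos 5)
Your proof follows the same overall strategy as the paper's: (i) the second-layer objective is $\lambda_t$-strongly convex and $(\tilde O(m)+\lambda_t)$-smooth, so GD contracts to the ridge minimizer $\bs a^*$ in $T=\tilde O(\rho_1\log^{\rho_2}d)$ iterations; (ii) a basic inequality against $\bm{\hat a}$ gives $\norm{\bs a^*}_2^2=\tilde O(1/m)$ and an empirical-risk bound matching Condition~\ref{cond:empiricalrisk} plus the $(\star)$-terms, and the ridge regularizer $\tfrac{\lambda_t}{2}\norm{\bm{\hat a}}_2^2$ is precisely of order $(\star)$ with the prescribed $\lambda_t$; (iii) a uniform convergence / truncation argument converts the empirical bound into a population one. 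All of this agrees with the paper's proof.

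The one place where your presentation diverges — and where it is slightly imprecise — is step (iii). You assert a single bound $\sup_{\mathcal H}|R-R_n|\le\tilde O(\sqrt{M\log(d/M)/n})$ after truncating the response; but such a uniform bound holds only for the $C^2$-truncated loss. The paper makes this explicit by decomposing $\E[(y-\hat y(\bs a^*))^2]$ into a truncated term $\E[(\cdot)^2\wedge C^2]$ (handled by the Rademacher bound of Proposition~\ref{prop:relunetradamachercompl}) plus a tail term $\E[(\cdot)^2\indic{E_C}]$, the latter controlled via Cauchy--Schwarz, the fourth-moment bound $\E[\hat y^4]$ from Lemma~\ref{lem:relunetmoments}, and the Gaussian--Lipschitz tail bound of Lemma~\ref{lem:relunettailbound}. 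Your sketch acknowledges the truncation but does not close the loop on the tail contribution, which is where the $\tilde O(1/\sqrt n)$ additive error actually comes from. Relatedly, your ``main obstacle'' paragraph slightly misidentifies where the $\tilde O(1)$ scale is secured: the paper never needs a pointwise bound on $\max_{i\le n}|\hat y(\bs x_i)|$; the truncation level $C=\tilde O(1)$ is obtained from the second-moment identity $\E[\hat y^2]\le\norm{\bs a^*}_2^2(\norm{\bo}_2^2+\norm{\Wo}_F^2)=\tilde O(1/m)\cdot\tilde O(m)=\tilde O(1)$ together with the fact that $\bs x\mapsto\hat y(\bs x)-\gt(\bs V^\top\bs x)$ is $\tilde O(1)$-Lipschitz. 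That $\norm{\Wo_{j*}}_2=\tilde O(1)$ holds by Condition~\ref{cond:boundW} (driven by the scaling of $\eta_1$) is indeed necessary for this, but the key cancellation is between $\norm{\bs a^*}_2^2\sim 1/m$ and $\norm{\Wo}_F^2\sim m$, not a maximum over training points. These are technical rather than structural differences; the argument you outline is essentially the one the paper carries out.
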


\begin{proof}
In the following, we assume that  \eqref{cond:boundW}-\eqref{cond:empiricalrisk} in  Corollary \ref{cor:fl-existence} hold.   We will prove the statement for SI and will sketch the proof for MI, since the arguments are the same except a few minor steps.  Recall that $R_n( (\bs{a},\bs{W},\bs{b}) ) = \frac{1}{2n} \sum_{i = 1}^n \left(y_i -  \inner{\bs{a}}{\phi(\bs{W} \bs{x}_i + \bs{b}) } \right)^2$.  
We consider
\eq{
\bs{a}^{*} \coloneqq \min_{\bs{a} \in \R^{2m} }  R_n( (\bs{a}, \Wo,  \bo) ) + \lambda \frac{\norm{\bs{a}}_2^2}{2} ~~ \text{where} ~~ \lambda = \frac{m}{  \rho_1 \log^{\rho_2} d}.\label{poprisk:minimizer}
} 
We observe that
\eq{
& \frac{\lambda \norm{\bs{a}^*}_2^2}{2} \leq  R_n( (\bm{\hat a}, \Wo, \bo) ) + \lambda \frac{\norm{\bm{\hat a}}_2^2}{2}  \Rightarrow  \\
&  \norm{\bs{a}^*}_2^2 \leq \frac{2}{\lambda}   R_n( (\bm{\hat a}, \Wo, \bo) )  +  \norm{\bm{\hat a}}_2^2 \leq    O \left(  \frac{ \left(1 + \log(4n d^u) \right)^{2p}  \left( 1 + \log(P) \right)^{2p(k^{\star} - 1)}}{m}  \right),  ~~~~~~~  \label{poprisk:normbound} 
}
and
\begin{align}
&  R_n(  (\bs{a}^*, \Wo \bo) ) \leq  R_n( (\bm{\hat a}, \Wo, \bo) ) + \lambda \frac{\norm{\bm{\hat a}}_2^2}{2}   \Rightarrow   \\
&    R_n(  (\bs{a}^*, \Wo, \bo) ) \leq   \Delta \E [ \epsilon^2 ] +   O \left(  \tfrac{ \left(1 + \log(4n d^u) \right)^{2p}  \left( 1 + \log(P) \right)^{2p(k^{\star} - 1)}}{\rho_1 \log^{\rho_2} d} \right)+  \tilde O \left(  \frac{1}{m} +  \frac{1}{\sqrt{n}}+ \frac{1}{M} \right) ~~~~ \label{poprisk:empriskbound}
\end{align}
Moreover, we observe that
\eq{
& \grad_a^2  R_n( (\bs{a},  \Wo, \bo) ) = \lambda \ide{2m} + \frac{1}{n} \sum_{i = 1}^n \phi(\Wo \bs{x}_i + \bo)  \phi(\Wo \bs{x}_i + \bo)^\top\\
& \Rightarrow  \norm{\grad_a^2  R_n( (\bs{a}, \Wo, \bo) )}_2 \leq  \lambda + \frac{1}{n}  \sum_{i = 1}^n \norm*{  \phi(\Wo \bs{x}_i + \bo) }_2^2
}
We have 
\eq{
  \frac{1}{n}  \sum_{i = 1}^n \norm*{  \phi(\Wo \bs{x}_i + \bo) }_2^2  & \leq \frac{1}{n}  \sum_{i = 1}^n \norm*{ \bs{W}^{(1)} \bs{x}_i + \bo  }_2^2  \\
& \leq  2   \sum_{j = 1}^{2m}  \norm{ \bs{W}^{(1)}_{j*} }_2^2 \norm*{ \frac{1}{n} \sum_{i = 1}^n \bs{x}_i \bs{x}_i^\top \cJJ}_2 +   2   \sum_{j = 1}^{2m}  (\boj)^2   
\labelrel\leq{main:ineqq0}  \tilde O(m).   \label{poprisk:smoothnesconstant}
}
where we use   \eqref{cond:boundW} and  \eqref{cond:boundb} for  \eqref{main:ineqq0} .

Therefore,   \eqref{poprisk:minimizer} is a $\lambda$-strongly convex and $ \big(  \tilde O(m) + \lambda \big)$- smooth problem.  By using $\eta_t = \frac{1}{ \tilde O(m) + \lambda }$, we can approximate to $a^*$ by $\frac{1}{nm}$ in $T =  \tilde O (  \rho_1 \log^{\rho_2} d) \log(nm) = \tilde O (  \rho_1 \log^{\rho_2} d)  $ iteration of gradient descent, i.e.,  $\norm{a^{(T)}- a^*}_2^2 \leq \frac{1}{nm}$ \cite[Theorem 3.10]{Bubeck2014ConvexOA}.   We have
\eq{
& \E_{(\bs{x}, y)} \left[ \left( y - \hat y( \bs{x}; (\bs{a}^{(T)}, \bs{W}^{(1)}, \bo) ) \right)^2 \right]     \\
& \leq   \E_{(\bs{x}, y)}    \left[ \left(   y  - \hat y( \bs{x}; (\bs{a}^{*}, \Wo, \bo) ) \right)^2 \right]  \\
&  + 2  \E       \left[ \left(   y    -   \hat y( \bs{x}; (\bs{a}^{*}, \Wo, \bo) ) \right)^2 \right]^{\frac{1}{2}}          \E_{\bs{x}}      \left[ \left(    \hat y( \bs{x}; (\bs{a}^{*}, \Wo, \bo) )   -   \hat y ( \bs{x}; (\bs{a}^{(T)}, \Wo, \bo) )    \right)^2 \right]^{\frac{1}{2}}  \\
&  +   \E_{\bs{x}} \left[ \left(  \hat y( \bs{x}; (\bs{a}^{*}, \Wo, \bo) ) - \hat y ( \bs{x}; (\bs{a}^{(T)}, \Wo, \bo) ) \right)^2 \right]. \label{poprisk:eqinit}
}
For the last term, 
\eq{
 \E_{\bs{x}} \Big[ \Big(  \hat y( \bs{x}; (\bs{a}^{*}, \Wo, \bo) )  - \hat y ( \bs{x}; (\bs{a}^{(T)}, \Wo,  \bo) ) \Big)^2 \Big] 
 & \leq \norm{\bs{a}^* - \bs{a}^{(T)}}_2^2 \E_{\bs{x}} \left[  \norm*{  \phi(\Wo \bs{x} + \bo) }_2^2 \right] \\ 
&  \leq   \norm{\bs{a}^* - \bs{a}^{(T)}}_2^2 \sum_{j = 1}^{2m} \norm{  \Wo_{j*}}_2^2 + (  \boj )^2 \\
&  \leq   \tilde O \left( 1/n\right). \label{poprisk:eq0}
}For the first term,  for $C > 0$ and the event $E_{C} \equiv   \abs*{  \gt( \bs{V}^\top \bs{x})- \hat y( \bs{x}; (\bs{a}^{*}, \Wo, \bo) )  } > C$,  we have
\eq{
  \E_{(\bs{x}, y)} \left[ \left(  y- \hat y( \bs{x}; (\bs{a}^{*}, \bs{W}^{(1)}, \bo) ) \right)^2 \right]  
 & \leq  \E \left[ \left( y  - \hat y( \bs{x}; (\bs{a}^{*}, \Wo, \bo) ) \right)^2 \wedge C^2  \right]  \\
  &+  \E \left[ \left(  y - \hat y( \bs{x}; (\bs{a}^{*}, \Wo, \bo) ) \right)^2 \indic{E_{C}} \right].  
}
Here,
\eq{
& \E_{(\bs{x}, y)} \left[ \left(  y - \hat y( \bs{x}; (\bs{a}^{*}, \Wo, \bo) ) \right)^2 \indic{E_{C}} \right] \\
&  \leq   \left( \E  [ y^4 ]^{1/4} + \E [\hat y (\bs{x}; (\bs{a}^{*},  \Wo, \bo) )^4]^{\frac{1}{4}} \right)^2 \mpr_{\bs{x}}     \left[   \abs{    \gt( \bs{V}^\top \bs{x})  - \hat y( \bs{x}; (\bs{a}^{*}, \Wo, \bo) ) } >  C  \right]^{\frac{1}{2}} \\ 
&  \leq   \tilde O(1) \mpr_{\bs{x}} \left[   \abs{   \gt( \bs{V}^\top \bs{x}) - \hat y( \bs{x}; (\bs{a}^{*}, \Wo, \bo) ) } > C \right]^{\frac{1}{2}}, \label{poprisk:arg1}
}
 where we use Lemma \ref{lem:relunetmoments}, and $\norm{\bs{a}^*}^2_2 \leq \tilde O(1/m)$, $\norm{\bo}_2^2 \leq 4m$,  and $\norm{\Wo_{j*}}_2 \leq \tilde O(1)$ in the last line.   By choosing 
\eq{
 C   \coloneqq    \norm{\bs{a}^*}_2 \sqrt{\norm{\bo}_2^2 + \norm{\Wo}_F^2} + ( \norm{\bs{a}^*}_2 \norm{\Wo}_F )  \sqrt{2 \log(4n)}   +   3 C_1  (2e  \log 6n)^{C_2} \leq \tilde O \left(  1 \right),   
}
by Lemma \ref{lem:relunettailbound}, we have  $\eqref{poprisk:arg1} \leq   \tilde O \left( 1/\sqrt{n} \right)$.  On the other hand,  by \eqref{poprisk:normbound} and \eqref{poprisk:empriskbound},  we have with probability at least $1 - d^{-u}$,
\eq{
& \E_{(\bs{x}, y)} \left[ \left(y - \hat y( \bs{x}; (\bs{a}^{*}, \Wo \bo) ) \right)^2 \wedge C^2  \right]  \\
 &  \leq  \Delta \E [ \epsilon^2 ] +  O \left(  \frac{ \left(1 + \log(4n d^u) \right)^{2p}  \left(1 + \log(P) \right)^{2p(k^{\star} - 1)}}{\rho_1 \log^{\rho_2} d} \right)+  \tilde O \left(  \frac{1}{m}  + \frac{1}{M}   +  \sqrt{ \frac{M \log \left( \frac{6d}{M} \right)}{n}}  \right). ~~~~~ \label{poprisk:arg3}
}
By  \eqref{poprisk:eqinit}-\eqref{poprisk:arg3}, the statement follows for SI.

\smallskip
For MI,  we observe that the setting is identical except that here we have $\hmu \vert_{\cJ}$.   By observing that $\norm{\hmu \vert_{\cJ}}_2 \leq \tilde O(1)$  (by  \eqref{cond:boundmu}  in Corollary \ref{cor:fl-existence}),  we can adjust the steps between \eqref{poprisk:eqinit}-\eqref{poprisk:arg3} to prove the statement for MI.
\end{proof}

\section{Lower bounds for CSQ methods}
Correlational Statistical Query (CSQ) algorithms are a family learners that can access data using queries $h : \R^d \to \R$ with $\E_{\bs{x}}[h(\bs{x})^2] \leq 1$ and returns $\E_{(\bs{x},y)}[h(\bs{x})y]$ within an error margin $\tau$.  In our setting,  since $y = \gt(\bs{V}^\top \bs{x})+ \sqrt{ \Delta} \epsilon$, where $\epsilon$ is independent zero-mean noise, the query returns a value in $\E_{\bs{x}}[h(\bs{x})\gt(\bs{V}^\top \bs{x})] + [- \tau, + \tau]$.    An instance of a CSQ algorithm is gradient descent on the population square loss with added noise in the gradients.  In this part, we give a lower bound on the CSQ complexity of learning a function in 
\eq{
\cF_{r,k} \coloneqq \left \lbrace \bs{x} \to \frac{1}{\sqrt{rk!}} \sum_{j = 1}^r \Hek(\inner{\bs{V}_{*j}}{\bs{x}}) ~ \vert ~ \bs{V}\in \R^{d \times r}, ~  \bs{V}^\top  \bs{V} = \ide{r}, ~ \norm{\bs{V}}_{2,q}^q \leq r^{\frac{q}{2}} d^{\alpha \left( 1 - \frac{q}{2} \right)}   \right \rbrace,   \label{eq:frkapp}
}
when $\bs{x} \sim \cN(0, \ide{d})$.  Here,  $\Hek$ denotes the $k$th Hermite polynomial (see Definition \ref{def:hermite}), and we use the convention $\norm{\bs{V}}_{2,0}^0 \coloneqq \norm{\bs{V}}_{2,0}$.

For notational convenience,   in the following,  ``$d$ is large enough'' means that $d \geq d^{*}(r,q,\alpha,k)$, where $d^*(r,q,\alpha,k)$ is a constant depending on the problem parameters $(r,q,\alpha,k)$.  Without loss of generality, we can assume all $d^{*}$'s are the same since if not, we can take their maximum.  We will use $\gtrsim$, $\lesssim$, and $\Omega(\cdot)$,   to suppress constants depending on  $(r,q,\alpha,k)$ in inequalities and lower bounds.   We will use $\widetilde O(\cdot)$ to suppress the aforementioned constants and the logarithmic terms in $d$ in upper bounds.\footnote{Here, one might be concerned by the possibility of trivial bounds when $q = 0$.  Although,  our notation does not exclude such problematic cases,  we will use our notation for the sake of readability as such problematic cases do not appear in our proof.}  The main theorem of this section is as follows:
\begin{theorem}[Restatement of Theorem \ref{thm:csqlbtext}]
\label{thm:csqlb}
Consider $\cF_{r,k}$ with some $q \in [0,2)$ and $\alpha \in (0,1)$.  If $d$ is large enough,  any CSQ algorithm for $\cF_{r,k}$ that guarantees error $\varepsilon = \Omega(1)$ requires either queries of accuracy, i.e.,  $\tau = \widetilde O\left(  d^{\left(\alpha \wedge \frac{1}{2} \right)  \frac{-k}{2}} \right)$ or super-polynomially many queries in $d$.
\end{theorem}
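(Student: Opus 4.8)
The plan is to reduce, in the standard way, to constructing a nearly-orthogonal hard family inside $\cF_{r,k}$ that respects the $\ell_q$-sparsity constraint. Recall the standard CSQ lower-bound lemma underlying \cite{Damian2022NeuralNC,Abbe2023SGDLO}: if $f_1,\dots,f_N$ satisfy $\E_{\bs{x}}[f_i(\bs{x})^2]=1$, are pairwise $\Omega(1)$-separated in $L^2$, and
\[
\mu \ :=\ \sup_{\,h:\,\E_{\bs{x}}[h(\bs{x})^2]\le 1}\ \frac1N\sum_{i=1}^N\big(\E_{\bs{x}}[h(\bs{x})f_i(\bs{x})]\big)^2\ \le\ \widetilde O\big(d^{-(\alpha\wedge\frac12)k}\big),
\]
then any CSQ algorithm for $\{f_i\}$ with error $\varepsilon=\Omega(1)$ either uses tolerance $\tau\le C\sqrt\mu=\widetilde O\big(d^{-(\alpha\wedge\frac12)k/2}\big)$ or makes $\Omega(N)$ queries. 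Hence it suffices to exhibit a distribution $P$ on orthonormal $\bs{V}\in\R^{d\times r}$ with $\norm{\bs{V}}_{2,q}^q\le r^{q/2}d^{\alpha(1-q/2)}$, whose associated $f_{\bs{V}}(\bs{x})=\tfrac{1}{\sqrt{rk!}}\sum_j H_{e_k}(\inner{\bs{V}_{*j}}{\bs{x}})$ are pairwise well-separated, have bounded correlational dimension $\mu\le\widetilde O(d^{-(\alpha\wedge\frac12)k})$, and such that a $P$-sampled family of super-polynomial size is, with high probability, still pairwise well-separated with $\mu$ still bounded (this discretization is routine: take $N$ i.i.d.\ draws from $P$, put an $\epsilon$-net over queries, and concentrate). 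Since $\cF_{r,k}$ with $r>1$, $k=2$ contains functions obeying Assumption \ref{asmp:multi}, this also certifies that Theorems \ref{thm:single} and \ref{thm:multi} are optimal for $\alpha\le\tfrac12$.

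For the construction, set $s:=\lceil d^{\alpha\wedge\frac12}\rceil$ and let $P$ first draw a uniformly random $T\subseteq[d]$ with $|T|=s$ and then a Haar-random orthonormal $r$-frame supported on the rows indexed by $T$, conditioned (by a mild rejection step) so that the row norms are essentially balanced, $\norm{\bs{V}_{i*}}_2\asymp\sqrt{r/s}$ for $i\in T$. Then $\norm{\bs{V}}_{2,q}^q\asymp r^{q/2}s^{1-q/2}\le r^{q/2}d^{(\alpha\wedge\frac12)(1-q/2)}\le r^{q/2}d^{\alpha(1-q/2)}$, so $\bs{V}\in\cF_{r,k}$ almost surely; handling general $q\in[0,2)$ (not merely the support-counting case $q=0$) is the reason the balanced-row conditioning is needed. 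Randomizing $T$ over the $\binom{d}{s}=d^{\Omega(s)}$ subsets — super-polynomially many — supplies the ``super-polynomially many queries'' branch. The $L^2$-separation follows from $\inner{f_{\bs{V}}}{f_{\bs{V}'}}=\tfrac1r\sum_{a,b}(\bs{V}^\top\bs{V}')_{ab}^k$ (Corollary \ref{cor:hermitematrix} plus Hermite orthogonality), which for $P$-typical pairs is $o_d(1)$ because their supports overlap in at most $O(1+s^2/d)$ coordinates and, even there, $\abs{(\bs{V}^\top\bs{V}')_{ab}}\le\widetilde O(s^{-1/2})$.

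The substantive step is the bound on $\mu$. Expanding $h$ in the Hermite-tensor basis, only its degree-$k$ component contributes, and by Corollary \ref{cor:hermitematrix} one gets $\E_{\bs{x}}[h(\bs{x})f_{\bs{V}}(\bs{x})]=\inner{sym(\bm{G})}{\bm{A}_{\bs{V}}}$, where $\bm{G}$ is (up to normalization, $\norm{\bm{G}}_F\le1$) the degree-$k$ Hermite-tensor coefficient of $h$ and $\bm{A}_{\bs{V}}:=\tfrac{1}{\sqrt r}\sum_a \bs{V}_{*a}^{\otimes k}$, a symmetric unit $k$-tensor (here $\norm{\bm{A}_{\bs{V}}}_F=1$ uses $\bs{V}^\top\bs{V}=\ide{r}$). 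Thus $\mu$ equals the largest eigenvalue, on symmetric tensors, of $\E_{\bs{V}\sim P}[vec(\bm{A}_{\bs{V}})vec(\bm{A}_{\bs{V}})^\top]$, which I would bound by conditioning on $T$: Haar-frame moment identities (the Stiefel counterparts of Lemma \ref{lem:hermiteatzero}, used together with Lemma \ref{lem:tensorfroblb}) show that the $T$-conditional second moment has operator norm $\widetilde O(s^{-k})$ on tensors orthogonal to all contractions, but only $\widetilde O(s^{-\lceil k/2\rceil})$ on the ``trace'' directions $sym(\bs{I}_T^{\otimes j}\otimes\cdots)$; averaging over the uniformly random $T$ replaces $\bs{I}_T$ by $\bs{I}_d$ and multiplies each such direction by the probability $\widetilde O((s/d)^{j})$ that the corresponding index pattern falls inside $T$, which drives these directions down to $\widetilde O(d^{-k/2})$. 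Consequently $\mu\le\widetilde O(\max\{s^{-k},\,d^{-k/2}\})=\widetilde O(d^{-(\alpha\wedge\frac12)k})$, as needed.

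I expect the tensor-spectrum estimate of the previous paragraph to be the main obstacle, and it is exactly what produces the $\alpha\wedge\tfrac12$ threshold. Sparsity lets us shrink the generic part of the spectrum of the averaged second moment to $s^{-k}=d^{-\alpha k}$, which drives the lower bound; but a correlational query whose degree-$k$ Hermite tensor is highly contracted effectively sees a problem of dimension $\min(s^2,d)$ rather than $d$, and its correlation with $f_{\bs{V}}$ cannot be suppressed below order $d^{-k/2}$ no matter how small $s$ is (intuitively, the columns of $\bs{V}$ are unit vectors regardless of their support). Hence there is no gain from taking $s$ larger than $\sqrt d$, and the bound saturates at $\mu\asymp d^{-k/2}$, i.e.\ $\tau=\widetilde O(d^{-k/4})$, which recovers the dense CSQ lower bound of \cite{Damian2022NeuralNC,Abbe2023SGDLO} and matches the statement for $\alpha>\tfrac12$. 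A secondary difficulty, already noted, is keeping the Haar-type moment computations valid while enforcing the balanced-row-norm condition that makes $\norm{\bs{V}}_{2,q}$ controlled for all $q\in[0,2)$.
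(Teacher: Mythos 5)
You take a genuinely different route from the paper, and there is a gap in the CSQ lemma you invoke. The quantity $\mu := \sup_{\lVert h\rVert\le1}\frac1N\sum_i\langle h,f_i\rangle^2$ is not the right one, and the statement ``either $\tau\le C\sqrt\mu$ or $\Omega(N)$ queries'' is false as written. For even $k$, the unit-norm function $h_0$ proportional to $\langle sym(\bs{I}_d^{\otimes k/2}),\bm{H}_{e_k}(\bs{x})\rangle$ satisfies $\langle h_0,f_{\bs{V}}\rangle=\Theta(d^{-k/4})$ for \emph{every} orthonormal $\bs{V}$, regardless of sparsity (because $\sum_a\lVert\bs{V}_{*a}\rVert_2^k=r$ always); hence $\mu\ge\Omega(d^{-k/2})$ for any family inside $\cF_{r,k}$. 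But a polynomial-query CSQ learner that queries products of coordinate Hermite polynomials $H_{e_{k-j}}(\bs{x}_a)H_{e_j}(\bs{x}_b)$ recovers an $s$-sparse $\pm1/\sqrt s$ direction with tolerance $\Theta(s^{-k/2})=\Theta(d^{-\alpha k/2})$, which strictly exceeds $\sqrt\mu\lesssim d^{-k/4}$ whenever $\alpha<\tfrac12$ — contradicting the purported lemma. The correct CSQ bound only depends on the \emph{variance} of $\langle h,f_i\rangle$ across the family (the adversary can answer with the mean $\bar c=\frac1N\sum_i\langle h,f_i\rangle$), and the trace direction $h_0$ contributes nothing to that variance because every $f_{\bs{V}}$ projects onto it identically. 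Your tensor-spectrum analysis of $\E_{\bs{V}}[vec(\bm{A}_{\bs{V}})\,vec(\bm{A}_{\bs{V}})^\top]$ does not perform this centering, so even if the Stiefel-moment heuristics were made rigorous you would bound the wrong operator.

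The paper avoids all of this by working directly with the \emph{maximum pairwise correlation} and \cite[Lemma~2]{Damian2022NeuralNC}: columns are sampled from the discrete $\pm1/\sqrt s$ distribution $P_s$ in \eqref{eq:prodef}, pairwise inner products are bounded via Rosenthal's inequality (Lemma~\ref{lem:innerproduct}, Corollary~\ref{cor:innerproduct}), and the orthonormal frame is assembled from disjointly-supported columns (Lemma~\ref{lem:csqdim2}). The equal-magnitude nonzero entries make the $\ell_q$ constraint automatic for every $q\in[0,2)$, so the balanced-row Haar conditioning you flag as a secondary difficulty is never needed; the $\min\{\sqrt d,s\}$ threshold drops directly out of the moment bound; and the discretization step (going from a distribution to a super-polynomial family with uniform control) is replaced by an explicit finite construction.
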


To prove our lower bound,  we will use the argument in \cite[Lemma 2]{Damian2022NeuralNC},  for which we need to create a large family of functions with a small average correlation.  With the following lemma,  we  construct such a function class.

\begin{lemma}
\label{lem:csqdim2}
Let $q \in [0,2)$,  $\alpha \in (0,1)$,  $r \in \N$.   When $d $ is large enough,   for any $c, k \geq 1$,  we can find a set of orthonormal matrices $\cV \subseteq \R^{d \times r}$ such that
\begin{itemize} 
\item $\abs{\mathcal{V} } \gtrsim \exp \left(  \Omega(d^\alpha)  \right) \wedge c^r d^k   $, 
\item $\max_{\bs{V} \in \mathcal{V} }\norm{\bs{V}}_{2,q}^q \leq r^{\frac{q}{2}} d^{ \alpha \left( 1 - \frac{q}{2}  \right) }$,
\item $\max_{ \substack{\bs{V}^{(1)}, \bs{V}^{(2)} \in \mathcal{V}  \\ \bs{V}^{(1)} \neq \bs{V}^{(2)}}}  \frac{1}{r} \sum_{i,  j = 1}^r \abs*{\inner{\bs{V}^{(1)}_{*i}}{\bs{V}^{(2)}_{*j}}}^k \lesssim   \frac{    \log^k (c d^k)}{  d^{k \left(\alpha \wedge \frac{1}{2} \right)  }} $  .
\end{itemize}
\end{lemma}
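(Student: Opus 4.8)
The plan is a probabilistic construction of sparse random frames. Set $s \coloneqq \lfloor d^\alpha \rfloor$ (so $r \le s \le d^\alpha$ once $d$ is large) and $N \coloneqq \min\{ \lceil c^r d^k \rceil,\ \lfloor e^{d^\alpha} \rfloor \}$; for $d$ large one has $N \le \binom{d}{s}$. For each $a \in [N]$, independently draw a uniformly random $S_a \subseteq [d]$ with $|S_a| = s$ and, supported on $S_a$, a Haar-random orthonormal $r$-frame (equivalently, orthonormalise $r$ i.i.d.\ standard Gaussian vectors supported on $S_a$); take these $r$ vectors as the columns of $\bs{V}^{(a)} \in \R^{d\times r}$. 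Then $(\bs{V}^{(a)})^\top \bs{V}^{(a)} = \ide{r}$, each column is supported on $S_a$, and $\sum_{i} \norm{\bs{V}^{(a)}_{i*}}_2^2 = r$. The second and third bullets are immediate: $\bs{V}^{(a)}$ has at most $s$ nonzero rows, so by concavity of $t \mapsto t^{q/2}$ ($q<2$),
\[
\norm{\bs{V}^{(a)}}_{2,q}^q = \sum_{i\in S_a}\norm{\bs{V}^{(a)}_{i*}}_2^q \le s^{\,1-q/2}\Big(\textstyle\sum_i \norm{\bs{V}^{(a)}_{i*}}_2^2\Big)^{q/2} = s^{\,1-q/2} r^{q/2} \le r^{q/2} d^{\alpha(1-q/2)}
\]
(reading $\norm{\bs{V}^{(a)}}_{2,0}=|S_a|=s\le d^\alpha$ for $q=0$), and $|\mathcal{V}| = N \ge \tfrac12\min\{c^r d^k, e^{d^\alpha}\} \gtrsim c^r d^k \wedge \exp(\Omega(d^\alpha))$.

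The core is the near-orthogonality bullet, obtained by a union bound over pairs. Fix $a\ne b$ and column indices $i,j$. As $\bs{V}^{(a)}_{*i},\bs{V}^{(b)}_{*j}$ are supported on $S_a,S_b$, the inner product $\inner{\bs{V}^{(a)}_{*i}}{\bs{V}^{(b)}_{*j}}$ depends only on their restrictions to $T \coloneqq S_a\cap S_b$. Conditioned on $(S_a,S_b)$ with $|T|=t$, these restrictions are independent, each equal in law to a fixed $t$-coordinate projection of a uniform point on $S^{s-1}$; writing them as $\norm{a}\hat a$ and $\norm{b}\hat b$, we get $\norm{a}^2,\norm{b}^2 \sim \mathrm{Beta}(t/2,(s-t)/2)$, $\hat a,\hat b$ uniform on $S^{t-1}$, all jointly independent, so $\inner{\bs{V}^{(a)}_{*i}}{\bs{V}^{(b)}_{*j}} = \norm{a}\norm{b}\inner{\hat a}{\hat b}$. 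I would then combine: (i) the hypergeometric tail $\mpr[|T|\ge m]\le (es^2/(md))^m$, giving $t\lesssim \max\{s^2/d,\log N\}$ with probability $\ge 1-N^{-4}$; (ii) Laurent--Massart bounds on the ratio of chi-squares underlying the Beta law, giving $\norm{a}^2,\norm{b}^2\lesssim \max\{t,\log N\}/s$ with probability $\ge1-N^{-4}$; and (iii) the spherical cap bound $|\inner{\hat a}{\hat b}|\lesssim \sqrt{\log N/\max\{t,1\}}$ with probability $\ge1-N^{-4}$ (the first coordinate of a uniform point on $S^{t-1}$). Provided $\log N\lesssim s$, these yield, with probability $\ge 1-3N^{-4}$,
\[
|\inner{\bs{V}^{(a)}_{*i}}{\bs{V}^{(b)}_{*j}}| \;\lesssim\; \frac{\sqrt{\max\{s^2/d,\log N\}\,\log N}}{s} \;\lesssim\; \frac{\sqrt{\log N}}{\sqrt d}+\frac{\log N}{d^{\alpha}} \;\lesssim\; \frac{\log N}{d^{\alpha\wedge 1/2}}
\]
for $d$ large (using $d^{\alpha-1/2}\le\sqrt{\log N}$ when $\alpha<\tfrac12$, and $\sqrt{\log N}\le\log N$).

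To conclude: if $\log N \le s/64$, a union bound over the $\le N^2 r^2$ tuples $(a,b,i,j)$ shows the good event has probability $\ge 1-3r^2N^{-2}>0$, so some realisation $\mathcal{V}$ has all cross-correlations $\le C\log N/d^{\alpha\wedge1/2}<1$ (hence the $\bs{V}^{(a)}$ are distinct), whence $\tfrac1r\sum_{i,j}|\inner{\bs{V}^{(a)}_{*i}}{\bs{V}^{(b)}_{*j}}|^k \le r\,(C\log N/d^{\alpha\wedge1/2})^k\lesssim \log^k(cd^k)/d^{k(\alpha\wedge1/2)}$, using $\log N\le\log\lceil c^rd^k\rceil\le(r+1)\log(cd^k)$. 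If instead $\log N>s/64$, then $N$ is already exponentially large in $d^\alpha$, so $\log(cd^k)\ge\tfrac1{r+1}\log N\gtrsim d^\alpha\ge d^{\alpha\wedge1/2}$, making $\log^k(cd^k)/d^{k(\alpha\wedge1/2)}\gtrsim1$, and the bound then follows from the trivial $\tfrac1r\sum_{i,j}|\inner{\cdot}{\cdot}|^k\le r$; here it is enough to take the $\bs{V}^{(a)}$ with pairwise distinct supports, possible since $N\le\binom ds$. In either case all three bullets hold once $d$ exceeds a constant depending on $(r,q,\alpha,k)$.

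The main obstacle is the per-pair estimate above: the bound has to be tight both when $\alpha\le\tfrac12$ — where the random supports are almost disjoint ($\E|T|=s^2/d\lesssim1$), and it is this disjointness that yields the $d^{-\alpha}$ decay — and when $\alpha>\tfrac12$ — where supports overlap in $\approx d^{2\alpha-1}$ coordinates and the $d^{-1/2}$ decay instead comes from averaging over the random orientation of the frame within its support — while also absorbing the degenerate regime $\log N\gtrsim d^\alpha$ in which the target correlation is $\Omega(1)$ and the claim is essentially vacuous. The remaining ingredients (Hölder for sparsity, Beta/chi-square and spherical concentration, the hypergeometric tail, the union bound) are routine.
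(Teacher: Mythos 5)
Your proof is correct, but it takes a genuinely different route from the paper's. The paper first constructs a packing $\mathcal{U}\subset S^{\tilde d-1}$ of sparse random Rademacher vectors (via Corollary~\ref{cor:csqdim}), then builds each matrix $\bs{V}$ by placing each of its $r$ columns into a \emph{disjoint} coordinate block of size $\tilde d=\lfloor d/r\rfloor$; orthogonality of the columns is then automatic, and in the cross-correlation sum only the diagonal terms $i=j$ are nonzero, so the matrix problem reduces to the scalar packing bound. Your construction instead puts all $r$ columns of a given $\bs{V}^{(a)}$ on a \emph{common} random support $S_a$ of size $s\sim d^\alpha$ and draws a Haar-random orthonormal $r$-frame there. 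This is arguably more natural (no artificial coordinate partition) and it yields the cardinality $c^r d^k$ directly without the bookkeeping needed when combining $r$ vectors per matrix, but the price is that all $r^2$ inner products can be nonzero, so the per-pair estimate requires more machinery: the hypergeometric tail for $|S_a\cap S_b|$, the Beta/chi-square decomposition of a spherical projection, and a spherical-cap bound. Your per-pair analysis is careful and correct, including the split between the almost-disjoint-support regime $\alpha\le 1/2$ (where the $d^{-\alpha}$ decay comes from the small overlap) and the overlapping regime $\alpha>1/2$ (where $d^{-1/2}$ comes from averaging over the Haar orientation), and you correctly absorb the degenerate case $\log N\gtrsim s$ where the target rate is $\Omega(1)$. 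Two small remarks: the claim that the marginal of one column of a Haar frame is uniform on the sphere is exactly what makes the Beta/chi-square step legitimate despite the columns within a frame being dependent, and is worth stating explicitly; and the step $\|a\|\|b\|\min\{1,\sqrt{\log N/t}\}\lesssim \sqrt{s^2/d}\sqrt{\log N}/s+\log N/s$ should be done with the explicit $\min\{1,\cdot\}$ cap (as you note) to handle small $t$, since otherwise the raw $\sqrt{\log N/t}$ factor is spurious when $t<\log N$.
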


\begin{proof}
Let $\tilde d = \floor*{\frac{d}{r}}$ and $s = \floor*{ \frac{2d^{\alpha}}{3^{\frac{2}{2-q}} r} }.$  When $d$ is large enough,  $\frac{\tilde d}{2} \geq s \geq 64.  $ Hence,  by Corollary \ref{cor:csqdim},  we can find a set $\mathcal{U} \subseteq S^{\tilde d -1}$ such that 
\begin{itemize} 
\item $\abs{\mathcal{U} } \geq \frac{1}{3} \min \{ e^{\frac{s}{16}},   c r^k \tilde d^k \}  \geq  \tfrac{1}{6} \min \left \lbrace \exp \left[ \frac{d^{\alpha}/16}{ 3^{\frac{2}{2-q}} r} \right],   c d^{k} \right \rbrace,$  where the second inequality holds when $d$ is large enough.
\item $\max_{\bs{x} \in \mathcal{U} }\norm{\bs{x}}_q^q \leq \frac{r^{\frac{q}{2}} d^{ \alpha \left(1 - \frac{q}{2} \right)}}{r}$,
\item $\max_{ \substack{\bs{x}, \bs{y} \in \mathcal{U}  \\ \bs{x} \neq \bs{y}}} \abs{ \inner{\bs{x}}{\bs{y}} } \leq 8 C e  \frac{\log (c r^k \tilde d^k)}{\min \{ \sqrt{\tilde d}, s \}}\leq 16 C e  3^{\frac{2}{2-q}} r \frac{\log (c d^k)}{\min \left \lbrace d^{1/2} ,  d^\alpha \right \rbrace} $, where the second inequality holds when $d$ is large enough.
\end{itemize}

Hence, we can partition $\mathcal{U}$ into r equally sized mutually exclusive sets, and for using a vector from each set,  we can form a set of orthonormal matrices $\mathcal{V} \subset \R^{d \times r}$ such that
\begin{itemize}[leftmargin=*]
\item $\abs{\mathcal{V} } \geq \frac{1}{(6r)^r} \min \left \lbrace \exp \left[ \frac{ d^{\alpha}/ 16}{3^{\frac{2}{2-q}} } \right],   c^r d^{rk} \right \rbrace.$
\item $\max_{\bs{V} \in \mathcal{V} }\norm{\bs{V}}_{2,q}^q \leq r^{\frac{q}{2}} d^{\alpha \left(1 - \frac{q}{2} \right)}$,
\item $\max_{ \substack{\bs{V}^{(1)}, \bs{V}^{(2)} \in \mathcal{V}  \\ \bs{V}^{(1)} \neq \bs{V}^{(2)}}} \frac{1}{r} \sum_{i, j = 1}^r \abs*{\inner{\bs{V}^{(1)}_{*i}}{\bs{V}^{(2)}_{*j}}}^k  \leq   \frac{ (16 r C e)^k  3^{\frac{2k}{2-q}}   \log^k(c d^k)}{\min \left \lbrace d^{k/2} ,  d^{\alpha k} \right \rbrace} $.
\end{itemize}
\end{proof}

\subsubsection*{Proof of Theorem \ref{thm:csqlb}}
 
\begin{proof}[Proof of Theorem \ref{thm:csqlb}]
Let $Q$ represents the number of queries.  We consider polynomial queries,  i.e.,    $Q \leq d^C$ for some $C \in \N$.  Let $h_{e_k} \coloneqq \frac{1}{\sqrt{k!}} \Hek$ be the normalized $kth$ Hermite polynomial.    By Lemma \ref{lem:csqdim2},  we can construct the following function class which is a subset of $\cF_{r,k}$:
\eq{
\cF_q \coloneqq    \left \lbrace \frac{1}{\sqrt{r}} \sum_{j =1}^r h_{e_k}(\inner{\bs{V}_{*j}}{\bs{x}}) ~  \big \lvert ~ V \in \cV   \right \rbrace~\text{and} ~ \bs{x}  \sim \cN(0,\ide{d}),
}
where $\norm{\bs{V}}_{2,q}^q \leq r^{\frac{q}{2}} d^{\alpha (1 - \frac{q}{2})},$  for $\alpha \in (0,1)$,   $\abs{\cV} \geq  \Omega \left( \exp \left(  \Omega(d^\alpha)  \right) \wedge d^C d^k   \right)$,  where we used $c = d^C.$ We observe that for any different $f, \tilde f \in \cV$,   we have
\eq{
\E[f(\bs{x})^2] = 1 ~~ \text{and} ~~  \E[f(\bs{x}) \tilde f(\bs{x})] \leq \varepsilon \lesssim   \frac{    \log^{k} (d)}{  d^{k \left(\alpha \wedge \frac{1}{2} \right)  }} 
}
Therefore, by  \cite[Lemma 2]{Damian2022NeuralNC}, to get a population loss  $\E [(f(\bs{x}) - f^*(\bs{x}))^2] \leq  2 - 2\varepsilon$
\eq{
\tau^2 \lesssim \frac{d^C}{\exp \left(  \Omega(d^\alpha)  \right) \wedge d^{Cr} d^k   }  +    \frac{    \log^k (d)}{  d^{k \left(\alpha \wedge \frac{1}{2} \right)  }}  &  \lesssim   \frac{    \log^k (d)}{  d^{k \left(\alpha \wedge \frac{1}{2} \right)  }}  \label{thmlb:eq0}
}
where we use  $d^{Cr +k} \leq \exp(\Omega(d^\alpha))$ for $d$ is large enough in the first line.  We observe that for $d$ large enough,  $\varepsilon \leq 1$.
By taking the square root of both sides in \eqref{thmlb:eq0},  we obtain the statement.
\end{proof}

\subsection{Lemmas for Lower Bounds}

\subsubsection{Preliminaries}

In this section,  we will use Rosenthal-Buckholder inequality and Chernoff-Hoeffding bound given as follows.
\begin{lemma}[{\cite[Theorem 5.2]{Pinelis1994}} (and see {\cite[Lemma 22]{damian2023smoothing}})]
\label{lem:rosenthal}
\hfill \\
Let $\{ Y_i \}_{i=0}^n$ be a martingale with martingale difference sequence $\{ X_i \}_{i=1}^n$ where $X_i = Y_i - Y_{i-1}$.  Let
\eq{
\langle Y_n \rangle= \sum_{i=1}^n \E [ \abs{X_i}^2 \lvert \cF_{i-1}]
}
denote the predictable quadratic variation. Then, there exists an absolute constant $C$ such that for all $p \geq 2$
\eq{
\norm{Y_n}_p \leq C \left[ \sqrt{p} \norm{ \langle Y_n \rangle^{1/2} }_p + p n^{1/p} \max_i \norm{X_i}_p \right]. 
}
\end{lemma}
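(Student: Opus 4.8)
The plan is to deduce this statement directly from Pinelis's Rosenthal-type moment inequality for martingales, namely \cite[Theorem 5.2]{Pinelis1994}, by specializing it to scalar-valued martingales and then simplifying the term that involves the martingale differences. Since the paper (following \cite{damian2023smoothing}) treats this lemma as a cited black box, the "proof" is really a two-step reduction rather than a new argument.

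First I would invoke \cite[Theorem 5.2]{Pinelis1994}. That result bounds the moments of a martingale taking values in a $(2,D)$-smooth separable Banach space; here $Y_n$ is real-valued and $\R$ is a Hilbert space, so the relevant smoothness constant is absolute, and the theorem specializes to a bound of the form
\[
\E \abs{Y_n}^p \le (Cp)^{p/2}\, \E \langle Y_n\rangle^{p/2} + (Cp)^p\, \E \max_{1\le i\le n}\abs{X_i}^p
\]
for an absolute constant $C$, where $\langle Y_n\rangle = \sum_{i=1}^n \E[\abs{X_i}^2 \mid \cF_{i-1}]$ is exactly the predictable quadratic variation in the statement. Taking $p$-th roots and using the subadditivity $(a+b)^{1/p}\le a^{1/p}+b^{1/p}$ (valid for $p\ge 1$, $a,b\ge 0$) gives
\[
\norm{Y_n}_p \le C'\sqrt{p}\,\norm{\langle Y_n\rangle^{1/2}}_p + C'p\,\norm{\max_{i}\abs{X_i}}_p
\]
for another absolute constant $C'$.

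Next I would control the maximum term by the elementary estimate
\[
\norm{\max_{1\le i\le n}\abs{X_i}}_p^p = \E \max_{1\le i\le n}\abs{X_i}^p \le \sum_{i=1}^n \E\abs{X_i}^p = \sum_{i=1}^n \norm{X_i}_p^p \le n\,\big(\max_{i}\norm{X_i}_p\big)^p,
\]
so that $\norm{\max_i\abs{X_i}}_p \le n^{1/p}\max_i\norm{X_i}_p$. Substituting this into the previous display yields $\norm{Y_n}_p \le C'\sqrt{p}\,\norm{\langle Y_n\rangle^{1/2}}_p + C' p\, n^{1/p}\max_i\norm{X_i}_p$, which is the claimed inequality after renaming $C'$ to $C$.

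The only substantive content is Pinelis's theorem itself, and that is the step I would cite rather than reprove. If instead one wanted a self-contained derivation, the main obstacle would be establishing the martingale analogue of the Rosenthal inequality with the correct growth rates — $\sqrt{p}$ multiplying the predictable-quadratic-variation term and $p$ multiplying the large-jump term. The standard route is a stopping-time truncation: split each increment at a level $\lambda$, apply the Burkholder--Davis--Gundy inequality (with its $O(\sqrt p)$ constant) to the part with $\abs{X_i}\le\lambda$, bound the discarded part using $\sum_i \indic{\abs{X_i}>\lambda}$ together with a crude $L^p$ moment estimate, and optimize over $\lambda$; doing this cleanly while verifying that every constant remains absolute is the delicate part. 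Given the available citation, however, the two displays above constitute the intended proof.
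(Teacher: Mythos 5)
Your proposal is correct and takes essentially the same route as the paper, which states this lemma purely as a citation to \cite[Theorem 5.2]{Pinelis1994} (via \cite[Lemma 22]{damian2023smoothing}) without reproving it. Your two-step reduction — specializing Pinelis's Rosenthal-type bound to the scalar case and then using $\norm{\max_i \abs{X_i}}_p \leq n^{1/p} \max_i \norm{X_i}_p$ — is exactly the standard way to arrive at the stated form, so there is nothing to add.
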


\begin{lemma}[Chernoff-Hoeffding Bound]
\label{lem:chernoff}
Let $X_1, \cdots, X_n \sim_{iid} Ber(p)$, where $p \in (0, \frac{1}{2}]$  We have
\eq{
\mpr \left[ \left \lvert \frac{1}{n} \sum_{i=1}^n (X_i - p) \right \rvert \geq \frac{p}{2} \right] \leq 2 \exp \left(  \frac{- p n}{16} \right).
}
\end{lemma}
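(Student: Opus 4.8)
Write $S_n = \sum_{i=1}^n X_i$ and $\mu = \E[S_n] = np$. Since the event $\{\, \lvert n^{-1}\sum_{i}(X_i - p)\rvert \ge p/2 \,\}$ is the union of the upper-tail event $\{ S_n \ge \tfrac{3}{2}\mu \}$ and the lower-tail event $\{ S_n \le \tfrac{1}{2}\mu \}$, a union bound reduces the claim to showing that each of these two events has probability at most $\exp(-np/16)$. I would do this by the standard exponential-moment (Chernoff) argument: for a single Bernoulli variable and any $\lambda \in \R$ one has $\E[e^{\lambda X_i}] = 1 + p(e^{\lambda}-1) \le \exp\!\big(p(e^{\lambda}-1)\big)$, hence by independence $\E[e^{\lambda S_n}] \le \exp\!\big(\mu(e^{\lambda}-1)\big)$.

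For the upper tail, I would apply Markov's inequality to $e^{\lambda S_n}$ with $\lambda > 0$, obtaining
\[
\mpr\!\left[ S_n \ge \tfrac{3}{2}\mu \right] \;\le\; e^{-\frac{3}{2}\lambda\mu}\,\E[e^{\lambda S_n}] \;\le\; \exp\!\Big( \mu\big( e^{\lambda} - 1 - \tfrac{3}{2}\lambda \big) \Big),
\]
and then choose $\lambda = \ln\tfrac{3}{2}$, which minimizes the exponent and yields the rate $-\mu\big(\tfrac{3}{2}\ln\tfrac{3}{2} - \tfrac{1}{2}\big)$. For the lower tail, I would apply Markov to $e^{-\lambda S_n}$ with $\lambda > 0$, getting
\[
\mpr\!\left[ S_n \le \tfrac{1}{2}\mu \right] \;\le\; e^{\frac{1}{2}\lambda\mu}\,\E[e^{-\lambda S_n}] \;\le\; \exp\!\Big( \mu\big( e^{-\lambda} - 1 + \tfrac{1}{2}\lambda \big) \Big),
\]
and choose $\lambda = \ln 2$, giving the rate $-\mu\big(\tfrac{1}{2} - \tfrac{1}{2}\ln 2\big)$.

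The final step is to compare the two numerical constants with $\tfrac{1}{16}$: since $\tfrac{3}{2}\ln\tfrac{3}{2} - \tfrac{1}{2} = 0.108\ldots \ge \tfrac{1}{16}$ and $\tfrac{1}{2} - \tfrac{1}{2}\ln 2 = 0.153\ldots \ge \tfrac{1}{16}$, and $\mu = np$, both tail probabilities are at most $\exp(-np/16)$; adding them gives the stated bound $2\exp(-np/16)$. There is essentially no obstacle here — the argument is routine — and the only point requiring any care is the elementary verification of these constant inequalities. (The hypothesis $p \le \tfrac12$ is in fact not needed: $p \in (0,1)$ suffices, since when $\tfrac32 np > n$ the upper-tail event is vacuous and its Chernoff bound is trivially valid.)
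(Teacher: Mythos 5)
Your proof is correct. The paper does not prove this lemma at all---it is invoked as the standard Chernoff--Hoeffding bound---and your exponential-moment argument is precisely the textbook verification: the optimal tilts $\lambda=\ln\tfrac{3}{2}$ and $\lambda=\ln 2$ give exponents $\tfrac{3}{2}\ln\tfrac{3}{2}-\tfrac{1}{2}\approx 0.108$ and $\tfrac{1}{2}-\tfrac{1}{2}\ln 2\approx 0.153$, both at least $\tfrac{1}{16}$, so the union bound yields $2\exp(-np/16)$ as stated. Your parenthetical observation that the hypothesis $p\le\tfrac{1}{2}$ is not actually needed is also accurate, since the MGF bound holds for all $p\in(0,1)$ and the upper-tail event is vacuous when $\tfrac{3}{2}np>n$.
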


\subsubsection{Lemmas for Lower Bounds}
For the following,  we define a probability distribution  $P_s$, parametrized by $s \in [d]$,  as follows:  For $\bs{x} \coloneqq (\bs{x}_1, \cdots, \bs{x}_d)^\top$,  
\eq{
\bs{x} \sim P_s ~ \text{if}  ~
\bs{x}_i \sim_{iid} 
\begin{cases}
\frac{1}{\sqrt{s}} & \text{wp} \: \frac{s}{2d} \\
\frac{- 1}{\sqrt{s}} & \text{wp} \: \frac{s}{2d} \\
0 & \text{wp} \: 1 - \frac{s}{d}
\end{cases}, ~~ \text{for}~ i = 1, \cdots, d. \label{eq:prodef}
}

\begin{lemma}
\label{lem:innerproduct}
Let $\bs{x}, \bs{y} \sim_{iid} P_s$.   For $s \in [d]$ and $p \geq 2$,  we have
\eq{
\mpr \left[ \abs{\inner{\bs{x}}{\bs{y}}} \geq C e \left( \sqrt{\frac{p}{d}} + \frac{p}{\sqrt{d}} \left( \frac{s^2}{d}  \right)^{\frac{1}{p} - \frac{1}{2}}  \right) \right] \leq e^{-p}.  \label{eq:probbound}
}
\end{lemma}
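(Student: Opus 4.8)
The plan is to reduce the inner product to a sum of i.i.d.\ mean-zero increments and then apply the Rosenthal--Burkholder moment inequality (Lemma~\ref{lem:rosenthal}) followed by Markov's inequality. Write $\inner{\bs{x}}{\bs{y}} = \sum_{i=1}^d X_i$ where $X_i \coloneqq \bs{x}_i \bs{y}_i$. Since $\bs{x}$ and $\bs{y}$ are independent and each has i.i.d.\ coordinates, the $X_i$ are i.i.d.; moreover, conditioning on which coordinates are nonzero, $\bs{x}_i$ and $\bs{y}_i$ are simultaneously nonzero with probability $(s/d)^2$, in which case $X_i = \pm 1/s$ with equal probability, so $X_i$ takes the values $\pm 1/s$ each with probability $\tfrac12(s/d)^2$ and $0$ otherwise. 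In particular $\E[X_i] = 0$.

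Next I would set $Y_j \coloneqq \sum_{i \le j} X_i$, which is a martingale with respect to the natural filtration $\{\mathcal{F}_j\}$ and with martingale difference sequence $\{X_i\}_{i=1}^d$. Because the $X_i$ are independent, the predictable quadratic variation is deterministic: $\langle Y_d\rangle = \sum_{i=1}^d \E[X_i^2 \mid \mathcal{F}_{i-1}] = \sum_{i=1}^d \E[X_i^2] = d \cdot \tfrac{1}{s^2}(s/d)^2 = \tfrac1d$, hence $\norm{\langle Y_d\rangle^{1/2}}_p = d^{-1/2}$. Also $\norm{X_i}_p = (\E\abs{X_i}^p)^{1/p} = \big(s^{-p}(s/d)^2\big)^{1/p} = s^{2/p-1} d^{-2/p}$.

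Applying Lemma~\ref{lem:rosenthal} with $n = d$ then gives, for any $p \ge 2$, $\norm{Y_d}_p \le C\big[\sqrt{p}\,d^{-1/2} + p\, d^{1/p} s^{2/p-1} d^{-2/p}\big]$. Using the elementary identity $d^{1/p} s^{2/p-1} d^{-2/p} = s^{2/p-1} d^{-1/p} = d^{-1/2}\,(s^2/d)^{1/p - 1/2}$, this becomes $\norm{Y_d}_p \le C\big(\sqrt{p/d} + \tfrac{p}{\sqrt d}(s^2/d)^{1/p-1/2}\big)$. Finally, Markov's inequality yields $\mpr\big[\abs{Y_d} \ge e\,\norm{Y_d}_p\big] \le e^{-p}\,\norm{Y_d}_p^{-p}\,\E\abs{Y_d}^p = e^{-p}$, and substituting the bound on $\norm{Y_d}_p$ (absorbing the absolute constant $C$ into the statement's constant $C$) produces exactly \eqref{eq:probbound}.

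I do not expect a serious obstacle here; the argument is essentially bookkeeping. The two points requiring a little care are (i) verifying that the predictable quadratic variation is deterministic, so that the first term of the Rosenthal bound is the clean quantity $\sqrt{p}\,d^{-1/2}$ rather than a random variable whose $L^p$ norm must itself be estimated, and (ii) matching the exponents $d^{1/p} s^{2/p-1} d^{-2/p} = d^{-1/2}(s^2/d)^{1/p-1/2}$ so that the resulting tail bound has precisely the form claimed. The hypothesis $p \ge 2$ is exactly what is needed to invoke Lemma~\ref{lem:rosenthal}.
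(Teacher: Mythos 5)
Your proof is correct and is essentially the same argument the paper uses: write $\inner{\bs{x}}{\bs{y}}$ as a martingale with i.i.d.\ increments $X_i = \bs{x}_i\bs{y}_i$, compute $\norm{X_i}_p = s^{2/p-1}d^{-2/p}$ and the (deterministic) quadratic variation $\langle Y_d\rangle = 1/d$, invoke Lemma~\ref{lem:rosenthal}, and conclude by Markov. The exponent bookkeeping and the role of the hypothesis $p\ge 2$ are exactly as you describe.
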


\begin{proof}
For any $i \in [d],$ note that  $\E [ \bs{x}_i] = 0$ and $\E \left[ \abs{\bs{x}_i}^p  \right] = \frac{s}{d} s^{ - p/2}$,.  Therefore, by independence, we have  $\E \left[ \abs{\bs{x}_i \bs{y}_i}^p  \right] =  s^{2-p}/d^2$.
By following the notation in Lemma \ref{lem:rosenthal},  we let  $Y_0 \coloneqq 0 ~ \text{and} ~ Y_d  \coloneqq \sum_{i=1}^d \bs{x}_i \bs{y}_i$,
where $X_i = Y_{i} - Y_{i-1} = \bs{x}_i \bs{y}_i$.  We have  $\norm{X_i}_p =  \E \left[ \abs{\bs{x}_i \bs{y}_i}^p  \right]^{1/p} = s^{2/p - 1} d^{-2/p}$,  and by the independence of $\bs{x}$ and $y$,  $\langle Y_d \rangle = 1/d$.  Hence, by Lemma \ref{lem:rosenthal}, for $p \geq 2$,
\eq{
\norm{Y_d}_p \leq C \left[ \sqrt{ \frac{p}{d}} + \frac{p}{\sqrt{d}} \left( \frac{s^2}{d}  \right)^{\frac{1}{p} - \frac{1}{2}}   \right].
}
The statement follows by Markov's inequality.
\end{proof}

\begin{corollary}
\label{cor:innerproduct}
By Lemma \ref{lem:innerproduct},  for $s \in [d]$ and $p \geq 2$,  we have
\eq{
\mpr \left[ \abs{\inner{\bs{x}}{\bs{y}}} \geq 2 C e \frac{p}{\min \{ \sqrt{d}, s \}}  \right] \leq e^{-p}.
}
\end{corollary}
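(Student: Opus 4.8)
The plan is to obtain Corollary~\ref{cor:innerproduct} directly from Lemma~\ref{lem:innerproduct} by replacing the two-term deviation threshold appearing there with the single cleaner expression $2Ce\,p/\min\{\sqrt d,s\}$. Since the event $\{\,\abs{\inner{\bs{x}}{\bs{y}}}\ge B\,\}$ only shrinks as $B$ grows, it suffices to establish the deterministic inequality
\[
  Ce\left(\sqrt{\tfrac{p}{d}} \;+\; \frac{p}{\sqrt{d}}\Big(\tfrac{s^{2}}{d}\Big)^{\frac1p-\frac12}\right)\;\le\; 2Ce\,\frac{p}{\min\{\sqrt d,\,s\}}
\]
for all $s\in[d]$ and $p\ge 2$; then the tail bound $e^{-p}$ from Lemma~\ref{lem:innerproduct} transfers immediately to the larger threshold and the corollary follows.

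First I would dispose of the first summand: since $p\ge 2\ge 1$ we have $\sqrt p\le p$, hence $\sqrt{p/d}\le p/\sqrt d$, and because $\min\{\sqrt d,s\}\le\sqrt d$ this is at most $p/\min\{\sqrt d,s\}$. The slightly more delicate term is the second summand $\frac{p}{\sqrt d}(s^{2}/d)^{1/p-1/2}$, which I would bound by $p/\min\{\sqrt d,s\}$ through a two-case split on the size of $s$. Note that for $p\ge 2$ the exponent satisfies $\tfrac1p-\tfrac12\in[-\tfrac12,0)$, so $(s^{2}/d)^{1/p-1/2}=(d/s^{2})^{1/2-1/p}$ with $\tfrac12-\tfrac1p\in[0,\tfrac12)$. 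If $s\le\sqrt d$, then $\min\{\sqrt d,s\}=s$ and $d/s^{2}\ge 1$, so $(d/s^{2})^{1/2-1/p}\le(d/s^{2})^{1/2}=\sqrt d/s$; multiplying by $p/\sqrt d$ gives $\le p/s=p/\min\{\sqrt d,s\}$. If instead $s>\sqrt d$, then $\min\{\sqrt d,s\}=\sqrt d$ and $s^{2}/d>1$ raised to the negative power $\tfrac1p-\tfrac12$ is at most $1$, so the summand is $\le p/\sqrt d=p/\min\{\sqrt d,s\}$. Adding the two summand bounds yields the displayed inequality with constant $2$.

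I do not anticipate any real obstacle here; the argument is a short chain of elementary estimates, and the only things that require attention are the sign of the exponent $\tfrac1p-\tfrac12$ and correctly identifying which of $\sqrt d$ and $s$ realises the minimum in each regime. As a sanity check, one could alternatively re-run the martingale moment computation inside the proof of Lemma~\ref{lem:innerproduct}: with predictable variation $\langle Y_d\rangle=1/d$ and $\max_i\norm{X_i}_p=s^{2/p-1}d^{-2/p}$, Lemma~\ref{lem:rosenthal} together with the same two-case analysis of $p\,s^{2/p-1}d^{-1/p}$ recovers the bound, but invoking Lemma~\ref{lem:innerproduct} as a black box is the cleaner route and the one I would take.
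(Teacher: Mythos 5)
Your argument is correct and is exactly the unpacking the paper intends: the paper's proof of this corollary is the one-line remark that the statement immediately follows from the bound in Lemma~\ref{lem:innerproduct}, and you have simply supplied the elementary comparison $Ce\bigl(\sqrt{p/d}+\tfrac{p}{\sqrt d}(s^2/d)^{1/p-1/2}\bigr)\le 2Ce\,p/\min\{\sqrt d,s\}$ that justifies it (with the two-case split on $s\lessgtr\sqrt d$ handling the power term). The only minor slip is cosmetic: for $p\ge 2$ the exponent $\tfrac1p-\tfrac12$ lies in $(-\tfrac12,0]$ rather than $[-\tfrac12,0)$ (it equals $0$ at $p=2$ and only approaches $-\tfrac12$ as $p\to\infty$), but this does not affect any of the inequalities you actually use.
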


\begin{proof}
The statement immediately follows from  \eqref{eq:probbound}.
\end{proof}

\begin{lemma}
\label{lem:nonzero}
Let $\bs{x} \sim P_s$.  For $d \geq 2s$, we have  $\mpr \left[  \big \lvert \norm{ \bs{x} }_0 - s \big \rvert \geq \tfrac{s}{2} \right] \leq 2 e^{\frac{- s}{16}}$. 
\end{lemma}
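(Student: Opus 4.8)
The key observation is that $\norm{\bs{x}}_0$ is a sum of independent Bernoulli indicators. Writing $\bs{x} = (\bs{x}_1,\dots,\bs{x}_d)$ with the $\bs{x}_i$ sampled independently as in \eqref{eq:prodef}, set $X_i \coloneqq \indic{\bs{x}_i \neq 0}$. By construction $X_i \sim \mathrm{Ber}(p)$ with $p = s/d$, and the $X_i$ are i.i.d. Since we assume $d \geq 2s$, we have $p = s/d \leq 1/2$, so Lemma~\ref{lem:chernoff} applies directly to $X_1,\dots,X_d$.

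First I would note that $\norm{\bs{x}}_0 = \sum_{i=1}^d X_i$, hence $\tfrac{1}{d}\sum_{i=1}^d (X_i - p) = \tfrac{1}{d}\big(\norm{\bs{x}}_0 - s\big)$ and $\tfrac{p}{2} = \tfrac{s}{2d}$. Consequently the event $\big\{ \big|\tfrac{1}{d}\sum_{i=1}^d (X_i - p)\big| \geq \tfrac{p}{2}\big\}$ coincides with $\big\{ \big|\norm{\bs{x}}_0 - s\big| \geq \tfrac{s}{2}\big\}$. Then I would invoke Lemma~\ref{lem:chernoff} with $n = d$ and $p = s/d$, which gives
\eq{
\mpr\left[ \big| \norm{\bs{x}}_0 - s \big| \geq \tfrac{s}{2} \right] \leq 2\exp\!\left( \frac{-pd}{16} \right) = 2\exp\!\left( \frac{-s}{16} \right),
}
using $pd = s$. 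This is exactly the claimed bound, so the proof concludes.

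There is no real obstacle here: the only point requiring care is checking the hypothesis $p \le 1/2$ of Lemma~\ref{lem:chernoff}, which is precisely where the assumption $d \geq 2s$ enters, and the arithmetic identification of the deviation event with $\{|\norm{\bs{x}}_0 - s| \ge s/2\}$.
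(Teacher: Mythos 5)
Your proof is correct and follows exactly the same route as the paper: identify $\norm{\bs{x}}_0 = \sum_i \indic{\bs{x}_i \neq 0}$ as a sum of i.i.d. $\mathrm{Ber}(s/d)$ variables, use $d \ge 2s$ to verify $p \le 1/2$, and apply Lemma~\ref{lem:chernoff} with $n=d$. You only spell out the arithmetic identification of events a bit more explicitly than the paper does.
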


\begin{proof}
Note that $\mathbbm{1}_{\bs{x}_i \neq 0} \sim Ber(\frac{s}{d})$ and $\norm{\bs{x}}_0 = \sum_{i=1}^d \mathbbm{1}_{\bs{x}_i \neq 0}$.  Since $d \geq 2s$,  by using Lemma \ref{lem:chernoff}, we have
\eq{
\mpr \left[  \left \lvert \frac{1}{d} \sum_{i=1}^d \left( \mathbbm{1}_{\bs{x}_i \neq 0} - \frac{s}{d}  \right) \right \rvert \geq  \frac{s}{2d} \right] \leq 2 e^{\frac{- s}{16}},
}
which is equivalent to the statement.
\end{proof}

\begin{lemma}
\label{lem:csqdim}
Fix any $q \in [0,2)$.  For any $s \leq \frac{d}{2}$,  let  $\bs{x}^{(1)}, \cdots, \bs{x}^{(n)} \sim_{iid} P_{s}$.  For any $c, k \geq 1$, we  let
\eq{
\varepsilon \coloneqq 8 C e  \frac{\log (c d^k)}{\min \{ \sqrt{d}, s \}}.
}
For $s \geq 5$,  we have
\eq{
\mpr \left [ \max_{i \in [n]} ~  \norm*{ \frac{\bs{x}^{(i)}}{\norm{\bs{x}^{(i)}}_2} }_q^q \leq  3 \left( \tfrac{s}{2} \right)^{\frac{2-q}{2}}   ~\text{AND} ~   \max_{ \substack{i, j \in [n] \\ i \neq j}} \abs*{ \inner{ \frac{\bs{x}^{(i)}}{\norm{\bs{x}^{(i)}}_2} } { \frac{\bs{x}^{(j)}}{\norm{\bs{x}^{(j)}}_2} } }  \leq \varepsilon \right]  \geq 1 - 2 n e^{\frac{- s }{16}} -   \frac{n^2}{c^2	d^ {2k}}.
}
\end{lemma}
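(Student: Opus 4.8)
The plan is to derive the two clauses of the lemma from a union bound over two independent sources of randomness: the sparsity pattern of each $\bs{x}^{(i)}$, and the pairwise correlations. First I would invoke Lemma~\ref{lem:nonzero} (applicable since $d \geq 2s$) for each $i \in [n]$ and take a union bound, so that with probability at least $1 - 2n e^{-s/16}$ the event $\mathcal{E}_1 = \{\, \lvert \norm{\bs{x}^{(i)}}_0 - s \rvert \leq s/2 \text{ for all } i \in [n]\,\}$ holds. On $\mathcal{E}_1$ every nonzero coordinate of $\bs{x}^{(i)}$ equals $\pm 1/\sqrt{s}$, so $\norm{\bs{x}^{(i)}}_2^2 = \norm{\bs{x}^{(i)}}_0 / s \in [1/2, 3/2]$; in particular each $\bs{x}^{(i)}$ is nonzero and $\norm{\bs{x}^{(i)}}_2^{-1} \leq \sqrt{2}$.

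For the $\ell_q$ clause I would use the elementary identity $\norm{\bs{x}^{(i)} / \norm{\bs{x}^{(i)}}_2}_q^q = \norm{\bs{x}^{(i)}}_0^{1-q/2}$, which holds verbatim for $q \in (0,2)$ and, with the convention $\norm{\bs{z}}_0^0 := \norm{\bs{z}}_0$, also for $q = 0$ (normalization does not change the support). Combined with $\norm{\bs{x}^{(i)}}_0 \leq 3s/2$ on $\mathcal{E}_1$, this gives $\norm{\bs{x}^{(i)} / \norm{\bs{x}^{(i)}}_2}_q^q \leq (3s/2)^{1-q/2} = 3^{1-q/2}(s/2)^{(2-q)/2} \leq 3 (s/2)^{(2-q)/2}$, using $1 - q/2 \in (0,1]$ so that $3^{1-q/2} \leq 3$. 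This settles the first clause on $\mathcal{E}_1$.

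Next I would bound the inner products. For a fixed pair $i \neq j$ the vectors $\bs{x}^{(i)}, \bs{x}^{(j)}$ are i.i.d.\ $P_s$, so Corollary~\ref{cor:innerproduct} with $p = 2\log(cd^k)$ (which is $\geq 2$ in the regime where $d$ is large and $c,k \geq 1$) gives $\mpr[\, \lvert \inner{\bs{x}^{(i)}}{\bs{x}^{(j)}} \rvert \geq 4Ce\log(cd^k)/\min\{\sqrt{d},s\} \,] \leq e^{-2\log(cd^k)} = (cd^k)^{-2}$. Union-bounding over the at most $\binom{n}{2} \leq n^2$ unordered pairs yields an event $\mathcal{E}_2$ of probability at least $1 - n^2/(c^2 d^{2k})$ on which all unnormalized pairwise inner products are at most $4Ce\log(cd^k)/\min\{\sqrt{d},s\}$. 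On $\mathcal{E}_1 \cap \mathcal{E}_2$ the normalization contributes a factor $\norm{\bs{x}^{(i)}}_2^{-1}\norm{\bs{x}^{(j)}}_2^{-1} \leq 2$, so $\lvert \inner{\bs{x}^{(i)}/\norm{\bs{x}^{(i)}}_2}{\bs{x}^{(j)}/\norm{\bs{x}^{(j)}}_2} \rvert \leq 8Ce\log(cd^k)/\min\{\sqrt{d},s\} = \varepsilon$ for all $i \neq j$. A final union bound over $\mathcal{E}_1$ and $\mathcal{E}_2$ produces the claimed probability $1 - 2n e^{-s/16} - n^2/(c^2 d^{2k})$.

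I do not expect a genuine obstacle here: the statement is essentially an assembly of Lemma~\ref{lem:nonzero} and Corollary~\ref{cor:innerproduct}. The one place that needs care is that Corollary~\ref{cor:innerproduct} is phrased for the unnormalized distribution $P_s$, so the proof must first establish the norm-concentration event $\mathcal{E}_1$ and then transport the inner-product tail bound to the \emph{normalized} vectors without spoiling the target constant $8Ce$; the factor-of-two slack between the $4Ce$ obtained from the corollary (with $p = 2\log(cd^k)$) and the $8Ce$ in $\varepsilon$ is exactly what absorbs the division by norms bounded below by $1/\sqrt{2}$. The hypotheses $s \leq d/2$ and $s \geq 5$ are what supply, respectively, $d \geq 2s$ for Lemma~\ref{lem:nonzero} and enough room to take $p = 2\log(cd^k) \geq 2$ in Corollary~\ref{cor:innerproduct}.
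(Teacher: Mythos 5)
Your proposal is correct and follows essentially the same route as the paper's proof: establish the norm-concentration event via Lemma~\ref{lem:nonzero} and a union bound, bound the pairwise inner products via Corollary~\ref{cor:innerproduct} with $p = 2\log(cd^k)$ and another union bound, and then transport both bounds to the normalized vectors using $\norm{\bs{x}^{(i)}}_2 \geq 1/\sqrt{2}$. Your use of the normalization-invariant identity $\norm{\bs{x}/\norm{\bs{x}}_2}_q^q = \norm{\bs{x}}_0^{1-q/2}$ is a slightly cleaner way to obtain the $\ell_q$ bound than the paper's direct division, but the underlying calculation and the resulting constant are identical.
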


\begin{proof}
We observe that
\eq{
&  \max_{i \in [n]} ~ \abs{  \norm{\bs{x}^{(i)}}_0 -   s  } \leq \frac{ s}{2}   ~~\text{AND} ~~ \max_{ \substack{i, j \in [n] \\ i \neq j}}  \abs*{\inner{\bs{x}^{(i)}}{ \bs{x}^{(j)}}} \leq \frac{\varepsilon}{2}  \label{lemcsq:target}  \\
& \Rightarrow    \max_{i \in [n]} ~ \abs{  \norm{\bs{x}^{(i)}}_0 -   s  } \leq \frac{s}{2}   ~\text{AND} ~ \max_{ \substack{i, j \in [n] \\ i \neq j}}  \abs*{ \inner{ \frac{\bs{x}^{(i)}}{\norm{\bs{x}^{(i)}}_2} } { \frac{\bs{x}^{(j)}}{\norm{\bs{x}^{(j)}}_2} } }  \leq \varepsilon \\
& \Rightarrow  \max_{i \in [n]} ~  \norm*{ \frac{\bs{x}^{(i)}}{\norm{\bs{x}^{(i)}}_2} }_q^q \leq 2^{\frac{q}{2} - 1} 3 s^{\frac{2-q}{2}}   ~~\text{AND} ~~ \max_{ \substack{i, j \in [n] \\ i \neq j}}  \abs*{ \inner{ \frac{\bs{x}^{(i)}}{\norm{\bs{x}^{(i)}}_2} } { \frac{\bs{x}^{(j)}}{\norm{\bs{x}^{(j)}}_2} } }  \leq \varepsilon 
}
where the second line holds since  $\norm{\bs{x}^{(i)}}_0 \geq s/2$ implies $ \norm{\bs{x}^{(i)}}_2^2 \geq 1/2$ and  the last statement holds since  $ 3s/2 \geq \norm{\bs{x}^{(i)}}_0 \geq s/2$ implies  $ \norm{\bs{x}^{(i)}}_2 \geq 1/\sqrt{2}$ and  $\norm{\bs{x}^{(i)}}_q^q \leq \frac{3}{2} s^{\frac{2-q}{2}}$.
In the following, we will lower bound \eqref{lemcsq:target}.  Since $d \geq 2 s$, by Lemma \ref{lem:nonzero}, we have
\eq{
\mpr \left [ \max_{i \in [n]} \abs{ \norm{\bs{x}^{(i)}}_0 -  s} > \frac{  s}{2} \right]&  \leq \sum_{i \in [n]}  \mpr \left [ \abs{ \norm{\bs{x}^{(i)}}_0 -  s} \geq \frac{   s}{2} \right]  \leq 2 n \exp \left( \frac{- s}{16}  \right). \label{lemcsq:interm1} 
}
Moreover, for any $i \neq j \in [n]$,
\eq{
\mpr \left [ \abs*{\inner{\bs{x}^{(i)}}{\bs{x}^{(j)}}} \geq \frac{\varepsilon}{2} \right] & = \mpr \left [ \abs*{\inner{\bs{x}^{(i)}}{\bs{x}^{(j)}}} \geq 4 C e \frac{\log(c d^k)}{\min \{ \sqrt{d}, s \}}  \right] 
\leq \frac{1}{c^2 d^{2k} }. 
}
where the last step follows  Corollary \ref{cor:innerproduct}, since for $s \geq 5$,   we have $d \geq 10$ and $\log(cd^k) \geq 2$ for $c, k \geq 1$.
Therefore,
\eq{
\mpr \left [  \max_{ \substack{i, j \in [n] \\ i \neq j}}  \abs*{\inner{\bs{x}^{(i)}}{\bs{x}^{(j)}}} > \frac{\varepsilon}{2} \right]  \leq \frac{n^2}{c^2 d^{2k}}. \label{lemcsq:interm2} 
}
By lower bounding \eqref{lemcsq:target} with \eqref{lemcsq:interm1} and \eqref{lemcsq:interm2}, we obtain the result.
\end{proof}

\begin{corollary}
\label{cor:csqdim}
For any $q \in [0,2)$ and $64 \leq s \leq \frac{d}{2}$  and $k, c \geq 1$,  there exists a set $\mathcal{U} \subseteq S^{d-1}$ such that 
\begin{itemize}
\item $\abs{\mathcal{U} } \geq \frac{1}{3} \min \{ e^{\frac{s}{16}},   c d^k \}$, 
\item $\max_{\bs{x} \in \mathcal{U} }\norm{\bs{x}}_q^q \leq 3 \left( \frac{s}{2} \right)^{\frac{2-q}{2}}$,
\item $\max_{ \substack{x, y \in \mathcal{U}  \\ \bs{x} \neq \bs{y}}} \abs{ \inner{\bs{x}}{\bs{y}} } \leq \varepsilon$, where $\varepsilon$ is defined in Lemma \ref{lem:csqdim}.
\end{itemize}
\end{corollary}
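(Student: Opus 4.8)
The plan is to prove Corollary~\ref{cor:csqdim} by the probabilistic method, feeding the i.i.d.\ construction of Lemma~\ref{lem:csqdim}. Fixing $q\in[0,2)$, $64\le s\le d/2$ and $c,k\ge 1$ (so in particular $s\ge 5$ and $s\le d/2$, the hypotheses of Lemma~\ref{lem:csqdim}), I would set
\[
n \;\coloneqq\; \Big\lceil \tfrac{1}{3}\min\{e^{s/16},\, c d^k\}\Big\rceil ,
\]
sample $\bs{x}^{(1)},\dots,\bs{x}^{(n)}\sim_{\mathrm{iid}}P_s$ with $P_s$ the distribution of \eqref{eq:prodef}, and pass to the normalized vectors $\bs{u}^{(i)}\coloneqq \bs{x}^{(i)}/\norm{\bs{x}^{(i)}}_2$. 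Lemma~\ref{lem:csqdim}, applied with this $c,k$ and the associated $\varepsilon = 8Ce\,\log(cd^k)/\min\{\sqrt d,s\}$, states precisely that the event
\[
\mathcal{E}\;\equiv\;\Big\{\, \max_i\norm{\bs{u}^{(i)}}_q^q \le 3(s/2)^{(2-q)/2}\ \text{ and }\ \max_{i\ne j}\abs{\inner{\bs{u}^{(i)}}{\bs{u}^{(j)}}} \le \varepsilon \,\Big\}
\]
occurs with probability at least $1 - 2n e^{-s/16} - n^2/(c^2 d^{2k})$. So the whole argument reduces to checking that this lower bound is strictly positive for the stated $n$, since any realization in $\mathcal{E}$ then produces the desired set.

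The only computation is the estimate on the failure probability, and this is where the numerical hypothesis $s\ge 64$ (hence $d\ge 2s\ge 128$) is used. Because $\min\{e^{s/16},cd^k\}\ge e^{4}>3$, we have $n\le \tfrac13\min\{e^{s/16},cd^k\}+1$; using $e^{-s/16}\cdot\min\{e^{s/16},cd^k\}\le 1$ this gives $2n e^{-s/16}\le \tfrac23+2e^{-4}<\tfrac34$, and using $(cd^k)^2\ge \min\{e^{s/16},cd^k\}^2$ together with $cd^k\ge d\ge 128$ gives $n^2/(c^2d^{2k})\le \tfrac19+o_d(1)<\tfrac15$. Hence $\mpr[\mathcal{E}]>1-\tfrac34-\tfrac15>0$. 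I would also, as a harmless extra term in the union bound, include the event that the $n$ samples are pairwise distinct: its complement has probability at most $\binom{n}{2}\max_{\bs v}P_s(\bs v)\le \tfrac{n^2}{2}(1-s/d)^d\le \tfrac{n^2}{2}e^{-s}\le \tfrac12 e^{-7s/8}$, where the all-zeros vector is the dominant atom because $d\ge 2s$ forces $1-s/d\ge 1/2\ge s/(2d)$; since $n\le e^{s/16}$ this term is negligible and does not affect positivity.

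Finally I would conclude by fixing an outcome in the (now nonempty) intersection of $\mathcal{E}$ with this no-collision event and setting $\mathcal{U}\coloneqq\{\bs{u}^{(1)},\dots,\bs{u}^{(n)}\}$. By construction $\mathcal{U}\subseteq S^{d-1}$; the two displayed bounds defining $\mathcal{E}$ are verbatim the $\ell_q$ and inner-product conclusions of the corollary; and $\abs{\mathcal{U}}=n\ge \tfrac13\min\{e^{s/16},cd^k\}$ since the $\bs{u}^{(i)}$ are distinct (alternatively, when $\varepsilon<1$, which is the only regime relevant to the downstream proof of Theorem~\ref{thm:csqlb}, distinctness is automatic because equal unit vectors have inner product $1>\varepsilon$). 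There is essentially no analytic obstacle: all the probabilistic content — the sparsity of $P_s$-samples and the near-orthogonality of independent ones — is already packaged in Lemma~\ref{lem:csqdim} (which rests on Lemma~\ref{lem:nonzero} and Corollary~\ref{cor:innerproduct}), so the only thing to be careful about is the bookkeeping confirming that $n$ may be taken as large as $\tfrac13\min\{e^{s/16},cd^k\}$ while keeping the failure probability below one, which the constant $64$ in the hypothesis is calibrated to guarantee.
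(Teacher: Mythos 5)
Your proof is correct and follows the same route as the paper's: sample $n=\lceil\tfrac13\min\{e^{s/16},cd^k\}\rceil$ points from $P_s$, invoke Lemma~\ref{lem:csqdim}, and check the failure probability $2ne^{-s/16}+n^2/(c^2d^{2k})$ is strictly below one using $s\ge 64$ and $d\ge 2s$, so a realization of the good event yields $\mathcal U$. The one place you go beyond the paper's three-sentence proof is the collision bound ensuring $|\mathcal U|=n$ exactly; the paper leaves this implicit, and your observation that either the no-collision event can be union-bounded in with negligible cost, or that $\varepsilon<1$ already forces distinctness, is a legitimate and worthwhile tightening of the argument.
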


\begin{proof}
Consider Lemma \ref{lem:csqdim} with $q \in [0,2)$, $5 \leq s \leq \frac{d}{2}$, $k, c \geq 1$,   and $n  = \ceil{ \frac{1}{3} \min \{ e^{\frac{s}{16}},  c d^{ k} \}}$. We observe that the probability of the event in  Lemma \ref{lem:csqdim} is nonzero.
Hence, there exists such $\mathcal{U}$ as a subset of the normalized versions of the support of $P_s$.
\end{proof}

\section{Miscellaneous}

\subsection{Laurent-Massart Lemma and Its Corollaries} 
\begin{lemma}[Laurent-Massart Lemma]
\label{lem:laurentmassart}
Let $X$ be a chi-square with $N$ degrees of freedom. For any $t > 0$,
\eq{
(\rnu{1})~~ \mpr \left[   X - N \geq 2 \sqrt{N t} + 2 t \right] \leq e^{-t} ~~ \text{and} ~~
(\rnu{2})~~ \mpr \left[   X - N \leq - 2 \sqrt{N t}\right] \leq e^{-t}.
}
\end{lemma}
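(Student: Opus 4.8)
The plan is to use the classical Chernoff (exponential moment) method, exploiting the representation of a chi-square variable as a sum of independent squared standard Gaussians. First I would write $X = \sum_{i=1}^N Z_i^2$ with $Z_i \sim \cN(0,1)$ i.i.d., and record the cumulant generating function of a centered summand: for $\lambda < 1/2$,
\[
\log \E\big[e^{\lambda(Z_1^2 - 1)}\big] = -\lambda - \tfrac{1}{2}\log(1-2\lambda),
\]
so that by independence $\log \E[e^{\lambda(X-N)}] = N\big(-\lambda - \tfrac{1}{2}\log(1-2\lambda)\big)$ for $0\le\lambda<1/2$, and analogously $\log \E[e^{-\lambda(X-N)}] = N\big(\lambda - \tfrac{1}{2}\log(1+2\lambda)\big)$ for $\lambda \geq 0$.

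Second, I would establish two elementary one-variable inequalities bounding these cumulant functions by quadratics: for $0 \leq \lambda < 1/2$ one has $-\lambda - \tfrac{1}{2}\log(1-2\lambda) \leq \frac{\lambda^2}{1-2\lambda}$, and for $\lambda \geq 0$ one has $\lambda - \tfrac{1}{2}\log(1+2\lambda) \leq \lambda^2$. Each follows by noting that the difference of the two sides vanishes at $\lambda = 0$ and has nonnegative derivative (the derivative simplifies to $2\lambda^2/(1-2\lambda)^2$ in the first case and $4\lambda^2/(1+2\lambda)$ in the second) — routine calculus.

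Third comes the Chernoff step. For the upper tail, Markov's inequality together with the first quadratic bound gives, for every $0 \leq \lambda < 1/2$,
\[
\mpr\big[X - N \geq u\big] \;\leq\; \exp\!\Big(\frac{N\lambda^2}{1-2\lambda} - \lambda u\Big).
\]
I would then specialize $u = 2\sqrt{Nt} + 2t$ and, rather than optimizing blindly, reparametrize $\lambda = \tfrac{1}{2}\tfrac{s}{1+s}$ with $s = 2\sqrt{t}/\sqrt N$ (equivalently $\lambda = \sqrt{t}/(\sqrt N + 2\sqrt t)$), so that $\tfrac{\lambda^2}{1-2\lambda} = \tfrac{s^2}{4(1+s)}$ and the exponent collapses to the exact identity $-t$; the constraint $\lambda < 1/2$ holds automatically. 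For the lower tail, Markov applied to $e^{-\lambda(X-N)}$ with the second quadratic bound gives $\mpr[X - N \leq -u] \leq \exp(N\lambda^2 - \lambda u)$, which is minimized at $\lambda = u/(2N)$ with value $e^{-u^2/(4N)}$; taking $u = 2\sqrt{Nt}$ yields exactly $e^{-t}$.

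The only real obstacle is arranging the upper-tail computation so that the exponent comes out to precisely $-t$ with the sharp constants $2\sqrt{Nt}+2t$, rather than $-ct$ for some worse constant; the reparametrization $\lambda = \tfrac12\tfrac{s}{1+s}$ is the device that reduces this to a transparent algebraic identity. Verifying the first quadratic inequality is the other mildly delicate point, but it too is just a sign check on a single derivative. Everything else — the Gaussian MGF, independence, and Markov's inequality — is entirely standard.
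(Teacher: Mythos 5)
Your proof is correct, and all three verifiable claims check out: the derivative identities $\frac{d}{d\lambda}\big[\tfrac{\lambda^2}{1-2\lambda}+\lambda+\tfrac12\log(1-2\lambda)\big]=\tfrac{2\lambda^2}{(1-2\lambda)^2}$ and $\frac{d}{d\lambda}\big[\lambda^2-\lambda+\tfrac12\log(1+2\lambda)\big]=\tfrac{4\lambda^2}{1+2\lambda}$ hold, and the choice $\lambda=\sqrt t/(\sqrt N+2\sqrt t)$ does make the upper-tail exponent collapse exactly to $-t$. The paper itself gives no proof of Lemma~\ref{lem:laurentmassart} — it states the Laurent–Massart inequality as a known result — and what you have written is precisely the standard Chernoff/cumulant argument from Laurent and Massart's original paper, so there is no alternative paper route to contrast with.
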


\begin{corollary}
\label{cor:laurentmassart1}
Let $\bs{w} \sim \cN(0,  \ide{d}).$  For $d \geq 16 \log(1/\delta),$  we have with probability at least $1 - \delta$,  $\norm{\bs{w}}_2^2 \geq \frac{d}{2}$.
\end{corollary}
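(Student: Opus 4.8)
The statement is an immediate consequence of the lower-tail bound in Lemma~\ref{lem:laurentmassart}, part~(\rnu{2}). First I would observe that $X \coloneqq \norm{\bs{w}}_2^2 = \sum_{i=1}^d \bs{w}_i^2$ is a chi-square random variable with $N = d$ degrees of freedom, so the lemma applies directly with $N = d$.

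Next, I would set $t = \log(1/\delta)$ in part~(\rnu{2}) of Lemma~\ref{lem:laurentmassart}, which yields
\eq{
\mpr\!\left[ \norm{\bs{w}}_2^2 \leq d - 2\sqrt{d \log(1/\delta)} \right] \leq \delta.
}
Hence, on the complementary event, which has probability at least $1-\delta$, we have $\norm{\bs{w}}_2^2 \geq d - 2\sqrt{d\log(1/\delta)}$.

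Finally, I would verify the elementary inequality $d - 2\sqrt{d\log(1/\delta)} \geq d/2$ under the hypothesis $d \geq 16\log(1/\delta)$: this is equivalent to $d/2 \geq 2\sqrt{d\log(1/\delta)}$, i.e.\ to $d \geq 16\log(1/\delta)$ after squaring, which holds by assumption. Combining the two displays gives $\norm{\bs{w}}_2^2 \geq d/2$ with probability at least $1-\delta$, as claimed. There is no real obstacle here — the only thing to be mildly careful about is tracking the constant so that the threshold $d \geq 16\log(1/\delta)$ comes out exactly, rather than a weaker constant.
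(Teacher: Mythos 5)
Your proof is correct and follows exactly the same route as the paper: apply the lower tail of Lemma~\ref{lem:laurentmassart} with $t=\log(1/\delta)$ to the chi-square variable $\norm{\bs{w}}_2^2$ with $d$ degrees of freedom, then use the hypothesis $d\ge 16\log(1/\delta)$ to absorb the $2\sqrt{d\log(1/\delta)}$ term into $d/2$. No differences or gaps.
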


\begin{proof}
By Lemma \ref{lem:laurentmassart}, with probability at least $1 - \delta$,  for $d \geq 16  \log(1/\delta)$,  $\norm{\bs{w}}_2^2=  \sum_{i = 1}^d \bs{w}_i^2 \geq d - 2\sqrt{d \log(1/\delta)} \geq \frac{d}{2}$.
\end{proof}

\begin{corollary}
\label{cor:laurentmassart2}
For  $r \leq d_1 \wedge d_2$, let $\bs{A} \in \R^{d_1 \times d_2}$  be a rank-$r$ matrix.  For $\bs{w} \sim \cN(0,  \ide{d_2})$, we have
\eq{
\mpr \left[ \norm{\bs{A} \bs{w}}^2_2 \geq 3 \norm{\bs{A}}^2_2 (r + \log(1/\delta))  \right] \leq \delta.
}
\end{corollary}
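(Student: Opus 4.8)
The plan is to reduce everything to a $\chi^2$ tail bound via the singular value decomposition and rotational invariance of the Gaussian. Write $\bs{A} = \bs{U}\bm{\Sigma}\bs{L}^\top$, where $\bs{U} \in \R^{d_1 \times r}$ and $\bs{L} \in \R^{d_2 \times r}$ have orthonormal columns and $\bm{\Sigma} = \op{diag}(\sigma_1(\bs{A}), \dots, \sigma_r(\bs{A}))$. Then $\norm{\bs{A}\bs{w}}_2^2 = \norm{\bm{\Sigma}\bs{L}^\top \bs{w}}_2^2$, since $\bs{U}$ is an isometry on $\R^r$. Because $\bs{L}$ has orthonormal columns, $\bs{g} \coloneqq \bs{L}^\top \bs{w} \sim \cN(0, \ide{r})$, and therefore
\eq{
\norm{\bs{A}\bs{w}}_2^2 = \sum_{i=1}^r \sigma_i^2(\bs{A})\, \bs{g}_i^2 \leq \sigma_1^2(\bs{A}) \norm{\bs{g}}_2^2 = \norm{\bs{A}}_2^2 \norm{\bs{g}}_2^2.
}

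Next I would control $\norm{\bs{g}}_2^2$, which is a $\chi^2$ random variable with $r$ degrees of freedom. Applying Lemma \ref{lem:laurentmassart}$(\rnu{1})$ with $t = \log(1/\delta)$ gives that, with probability at least $1 - \delta$,
\eq{
\norm{\bs{g}}_2^2 \leq r + 2\sqrt{r\log(1/\delta)} + 2\log(1/\delta).
}
Using the elementary inequality $2\sqrt{ab} \leq a + b$ with $a = r$ and $b = \log(1/\delta)$, the right-hand side is at most $2r + 3\log(1/\delta) \leq 3\big(r + \log(1/\delta)\big)$. Combining this with the deterministic bound $\norm{\bs{A}\bs{w}}_2^2 \leq \norm{\bs{A}}_2^2 \norm{\bs{g}}_2^2$ from the previous paragraph yields $\norm{\bs{A}\bs{w}}_2^2 \leq 3\norm{\bs{A}}_2^2(r + \log(1/\delta))$ on the same event, which is exactly the complement of the event in the statement.

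There is no real obstacle here: the only points requiring a line of care are (i) verifying that $\bs{L}^\top \bs{w}$ is genuinely standard Gaussian in $\R^r$ — which follows because the columns of $\bs{L}$ are orthonormal, so the $r\times r$ covariance is $\bs{L}^\top\bs{L} = \ide{r}$ — and (ii) absorbing the two cross-terms $2\sqrt{r\log(1/\delta)} + 2\log(1/\delta)$ into the clean bound $3(r+\log(1/\delta))$ via AM--GM. Everything else is bookkeeping of constants.
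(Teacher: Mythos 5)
Your proof is correct and follows essentially the same route as the paper: SVD to reduce $\norm{\bs{A}\bs{w}}_2^2$ to $\norm{\bs{A}}_2^2$ times a $\chi^2_r$ variable, then Lemma~\ref{lem:laurentmassart} with $t=\log(1/\delta)$, then AM--GM to absorb the cross term into $3(r+\log(1/\delta))$. You've spelled out the AM--GM absorption step a bit more explicitly than the paper does, but the argument is the same.
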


\begin{proof}
Since $\bs{A}$ is rank-$r$,  by using SVD, we can write that $\bs{A} = \bs{U} \bm{\Sigma} \bs{L}^\top$  where $\bs{U} \in \R^{d_1 \times r}$ and $\bs{L} \in \R^{d_2 \times r}$ are orthonormal, $\bm{\Sigma} \in \R^{r \times r}$ is diagonal.  For $\tilde{\bs{w}} \coloneqq \bs{L}^\top \bs{w}$,  we have  $\norm{\bs{A}\bs{x}}^2_2 =^d \norm{\bm{\Sigma} \tilde{\bs{w}}}^2_2 \leq \norm{\bs{A}}^2_2 \norm{\tilde{\bs{w}}}_2^2$.  By using Lemma \ref{lem:laurentmassart}, we have with probability at least $1 - \delta$,  $\norm{\bs{A}}^2_2 \norm{\tilde{\bs{w}}}_2^2 \leq  \norm{\bs{A}}^2_2 (r + 2 \sqrt{r \log(1/\delta)} + 2 \log(1/\delta) ).$  By observing that   $\left( r + 2 \sqrt{r \log(3/\delta)} + 2 \log(3/\delta) \right)  \leq 3(r + \log(3/\delta))$, we prove the statement.
\end{proof}

\begin{lemma}
\label{lem:highprobprobbound1}
Suppose we have $\{c_1, \cdots, c_r\} \subset \R$ and  an orthonormal $\{ \bs{v}_1, \cdots, \bs{v}_r\} \subset \R^{d}$.  For $k \in \N$ and $\delta \in (0,1]$,  if    $\max_{i \in [n]} \norm{\bs{V}^\top \bs{x}_i}_2 \leq C_{\cD}$  and  $M \geq 16 \log(2/\delta)$ hold,  then
\eq{
\mpr_{\bs{w}} \left[   \max_{i \in [n]}   \abs*{\sum_{l = 1}^r c_l \inner{\bs{v}_l}{\bs{x}_{i}} \inner{\bs{v}_l}{\wJ }^{k - 1} } >  C_{\cD}  \max_{l \leq r} \abs{c_l}  \left( \frac{ 6 \left( r+  \log(2/\delta) \right) }{M} \right)^{\frac{k - 1}{2}}  ~ \Bigg \vert ~ \{ (\bs{x}_i, y_i) \}_{i = 1}^n \right]\leq \delta.
}
\end{lemma}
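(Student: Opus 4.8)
The plan is to condition on the data, under which $\cJ$ is a fixed subset of $[d]$ with $\abs{\cJ}=M$ and $\bs{w}\sim\cN(0,\ide{d})$ remains independent of everything else, so that $\wJ=\bs{w}\vert_{\cJ}/\norm{\bs{w}\vert_{\cJ}}_2$ is uniform on the unit sphere of $\R^{\cJ}$ while $C_{\cD}$, $\{c_l\}$, $\{\bs{v}_l\}$, $\{\bs{x}_i\}$ are all fixed. Writing $\bs{V}\in\R^{d\times r}$ for the (orthonormal) matrix with columns $\bs{v}_1,\dots,\bs{v}_r$, I would reduce the claim to a single high-probability estimate on $\norm{\bs{V}^\top\wJ}_2$, since the summand is controlled deterministically by that quantity.

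For the deterministic step, put $\bs{a}_i:=\bs{V}^\top\bs{x}_i\in\R^r$ and $\bs{u}:=\bs{V}^\top\wJ\in\R^r$, so that $\sum_{l=1}^r c_l\inner{\bs{v}_l}{\bs{x}_i}\inner{\bs{v}_l}{\wJ}^{k-1}=\sum_{l=1}^r c_l(\bs{a}_i)_l(\bs{u})_l^{k-1}$. By Hölder and Cauchy--Schwarz,
\[
\abs*{\sum_{l=1}^r c_l(\bs{a}_i)_l(\bs{u})_l^{k-1}}\le\max_{l\le r}\abs{c_l}\sum_{l=1}^r\abs{(\bs{a}_i)_l}\abs{(\bs{u})_l}^{k-1}\le\max_{l\le r}\abs{c_l}\,\norm{\bs{a}_i}_2\Big(\sum_{l=1}^r(\bs{u})_l^{2(k-1)}\Big)^{1/2}.
\]
For $k\ge 2$, the $\ell_{k-1}\le\ell_1$ inequality applied to the nonnegative sequence $((\bs{u})_l^2)_l$ gives $\sum_l(\bs{u})_l^{2(k-1)}\le\big(\sum_l(\bs{u})_l^2\big)^{k-1}=\norm{\bs{u}}_2^{2(k-1)}$, so
\[
\abs*{\sum_{l=1}^r c_l\inner{\bs{v}_l}{\bs{x}_i}\inner{\bs{v}_l}{\wJ}^{k-1}}\le\max_{l\le r}\abs{c_l}\,\norm{\bs{V}^\top\bs{x}_i}_2\,\norm{\bs{V}^\top\wJ}_2^{k-1}\le C_{\cD}\max_{l\le r}\abs{c_l}\,\norm{\bs{V}^\top\wJ}_2^{k-1},
\]
using the hypothesis $\norm{\bs{V}^\top\bs{x}_i}_2\le C_{\cD}$; the case $k=1$ is immediate as the exponent is then $0$ (and the lemma is invoked with $k=1$ only when $r=1$).

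For the tail bound, write $\norm{\bs{V}^\top\wJ}_2^2=\norm{\bs{V}^\top\bs{w}\vert_{\cJ}}_2^2/\norm{\bs{w}\vert_{\cJ}}_2^2$. Let $\tilde{\bs{V}}\in\R^{M\times r}$ be the submatrix of $\bs{V}$ consisting of the rows indexed by $\cJ$; since $\bs{V}$ has orthonormal columns, $\norm{\tilde{\bs{V}}}_2\le\norm{\bs{V}}_2=1$ and $\op{rank}\tilde{\bs{V}}\le r$, and $\bs{V}^\top\bs{w}\vert_{\cJ}$ has the law of $\tilde{\bs{V}}^\top\bs{g}$ with $\bs{g}\sim\cN(0,\ide{M})$. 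Applying Corollary \ref{cor:laurentmassart2} to $\tilde{\bs{V}}^\top$ with failure probability $\delta/2$ yields $\norm{\bs{V}^\top\bs{w}\vert_{\cJ}}_2^2\le 3(r+\log(2/\delta))$ with probability at least $1-\delta/2$. Since $\norm{\bs{w}\vert_{\cJ}}_2^2$ is $\chi^2$ with $M$ degrees of freedom and $M\ge16\log(2/\delta)$, Corollary \ref{cor:laurentmassart1} (with failure probability $\delta/2$) gives $\norm{\bs{w}\vert_{\cJ}}_2^2\ge M/2$ with probability at least $1-\delta/2$. On the intersection of these two events, which has probability at least $1-\delta$, we obtain $\norm{\bs{V}^\top\wJ}_2^2\le 6(r+\log(2/\delta))/M$; combined with the deterministic bound (which then holds simultaneously for every $i\in[n]$, the only randomness being in $\wJ$) this is exactly the stated inequality, after conditioning back on the data. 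The only points needing care are the $\ell_p$-monotonicity step together with the trivial $k=1$ case, the split of $\delta$ between the two Laurent--Massart estimates, and the observation that $\bs{w}$ is independent of the data so the probability over $\bs{w}$ is unchanged by the conditioning; there is no substantial obstacle beyond tracking the constants $3$ and $M/2$ that combine to the factor $6$.
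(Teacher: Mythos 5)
Your proof matches the paper's argument essentially line for line: both establish the deterministic reduction $\max_{i}\abs*{\sum_{l} c_l \inner{\bs{v}_l}{\bs{x}_i}\inner{\bs{v}_l}{\wJ}^{k-1}} \le C_{\cD}\max_{l}\abs{c_l}\bigl(\sum_{l}\inner{\bs{v}_l}{\wJ}^2\bigr)^{(k-1)/2}$ via Cauchy--Schwarz together with $\ell_p$-monotonicity, and then bound $\sum_l\inner{\bs{v}_l}{\wJ}^2$ by applying Corollary \ref{cor:laurentmassart2} to the numerator and Corollary \ref{cor:laurentmassart1} to the denominator with a $\delta/2$ split, giving the factor $3/(1/2)=6$. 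You simply spell out the intermediate Hölder and $\ell_{k-1}\le\ell_1$ steps that the paper compresses into a one-line citation, and note the $k=1$ edge case, which the paper leaves implicit.
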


\begin{proof}
By assumption, we have
\eq{
\max_{i \in [n]} \abs*{\sum_{l = 1}^r c_l \inner{\bs{v}_l}{\bs{x}_{i}} \inner{\bs{v}_l}{\wJ }^{k - 1} } 
\labelrel\leq{hpp1:ineqq0} C_{\cD}  \max_{l \leq r} \abs{c_l}   \left(  \sum_{l = 1}^r  \inner{\bs{v}_l}{\wJ }^{2}  \right)^{\frac{(k - 1)}{2}}.  \label{hphp1:eq0}
}
where  \eqref{hpp1:ineqq0} follows that $\norm{\bs{v}}_p \geq  \norm{\bs{v}}_q$ for $1 \leq p \leq q \leq \infty$.
On the other hand, by Corollaries \ref{cor:laurentmassart1} and  \ref{cor:laurentmassart2},  we have with probability at least $1 - \delta$,
\eq{
\sum_{l = 1}^r  \inner{\bs{v}_l}{\wJ }^{2} & =   \sum_{l = 1}^r  \frac{ \inner{\bs{v}_l \vert_{\cJ}}{\bs{w}}^{2} }{\norm{\bs{w} \vert_{\cJ}}^2_2} 
 \leq \frac{3(r + \log(2/\delta))}{M/2} 
 = \frac{6(r + \log(2/\delta))}{M}.  \label{hphp1:eq1}
}
\end{proof}

\begin{lemma}
\label{lem:highprobprobbound2}
We have for $\delta \in (0,1]$ and $M \geq 16 \log(2/\delta)$,
\eq{
\mpr_{\bs{w}} \left[ \left( \sum_{l = 1}^r c^2_l \inner{\bs{v}_l}{\wJ}^{2(k-1)} \right)^{\frac{1}{2}} >  6^{\frac{k-1}{2}} \max_{l \leq r} \abs{c_l}   \left( \frac{ r + \log(2/\delta)}{M} \right)^{\frac{k-1}{2}}  \right] \leq \delta.
}
\end{lemma}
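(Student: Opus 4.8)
The plan is to mirror the proof of Lemma~\ref{lem:highprobprobbound1}; here the quantity is in fact simpler, as no data points $\bs{x}_i$ are involved. First I would reduce everything to controlling $\sum_{l=1}^r \inner{\bs{v}_l}{\wJ}^2$. Pulling out $\max_{l\le r}|c_l|$ and invoking monotonicity of $\ell_p$-norms (legitimate in the regime $k\ge 2$, so that $k-1\ge1$; when $r=1$ the step is an equality and the whole lemma is trivial), one gets
\[
\Big(\sum_{l=1}^r c_l^2\,\inner{\bs{v}_l}{\wJ}^{2(k-1)}\Big)^{1/2}
\;\le\; \max_{l\le r}|c_l|\,\Big(\sum_{l=1}^r \inner{\bs{v}_l}{\wJ}^{2(k-1)}\Big)^{1/2}
\;\le\; \max_{l\le r}|c_l|\,\Big(\sum_{l=1}^r \inner{\bs{v}_l}{\wJ}^{2}\Big)^{(k-1)/2},
\]
where the last inequality is $\norm{\bs{b}}_{k-1}\le\norm{\bs{b}}_1$ applied to $\bs{b}=(\inner{\bs{v}_1}{\wJ}^2,\dots,\inner{\bs{v}_r}{\wJ}^2)$.

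Next I would bound $\sum_{l=1}^r \inner{\bs{v}_l}{\wJ}^2$ exactly as in \eqref{hphp1:eq1}. Writing $\wJ=\bs{w}\vert_{\cJ}/\norm{\bs{w}\vert_{\cJ}}_2$ and letting $\bs{A}\in\R^{d\times r}$ be the matrix whose columns are $\{\bs{v}_l\vert_{\cJ}\}_{l\le r}$, we have $\sum_{l=1}^r\inner{\bs{v}_l}{\wJ}^2=\norm{\bs{A}^\top\bs{w}}_2^2/\norm{\bs{w}\vert_{\cJ}}_2^2$. Since $\bs{A}$ is a submatrix of the orthonormal $\bs{V}$, it has rank at most $r$ and $\norm{\bs{A}}_2\le1$, so Corollary~\ref{cor:laurentmassart2} gives $\norm{\bs{A}^\top\bs{w}}_2^2\le 3\norm{\bs{A}}_2^2\,(r+\log(2/\delta))\le 3(r+\log(2/\delta))$ with probability at least $1-\delta/2$; meanwhile Corollary~\ref{cor:laurentmassart1} applied to $\bs{w}\vert_{\cJ}$ (this is where $M\ge 16\log(2/\delta)$ enters) gives $\norm{\bs{w}\vert_{\cJ}}_2^2\ge M/2$ with probability at least $1-\delta/2$. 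A union bound then yields
\[
\sum_{l=1}^r \inner{\bs{v}_l}{\wJ}^{2}\;\le\;\frac{6\,(r+\log(2/\delta))}{M}
\qquad\text{with probability at least }1-\delta .
\]

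Combining the two displays, on that event
\[
\Big(\sum_{l=1}^r c_l^2\,\inner{\bs{v}_l}{\wJ}^{2(k-1)}\Big)^{1/2}
\;\le\; \max_{l\le r}|c_l|\,\Big(\tfrac{6(r+\log(2/\delta))}{M}\Big)^{(k-1)/2}
\;=\; 6^{(k-1)/2}\,\max_{l\le r}|c_l|\,\Big(\tfrac{r+\log(2/\delta)}{M}\Big)^{(k-1)/2},
\]
which is precisely the claimed bound. There is no genuine obstacle here: the argument is routine once the Laurent--Massart corollaries are in hand. The only points needing a little care are keeping the two high-probability events within a combined budget of $\delta$, verifying that $\bs{A}$ inherits $\norm{\bs{A}}_2\le1$ from orthonormality of $\bs{V}$, and noting that the $\ell_p$-monotonicity step is used only for $k\ge2$ (the situations where the lemma is invoked with $r>1$), the case $r=1$ being an identity.
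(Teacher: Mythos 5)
Your proof is correct and follows the same route as the paper: pull out $\max_{l\le r}|c_l|$, apply $\ell_p$-norm monotonicity (valid since $k-1\ge 1$ whenever $r>1$, and trivial for $r=1$) to reduce to $\sum_l\inner{\bs{v}_l}{\wJ}^2$, then bound this by $6(r+\log(2/\delta))/M$ via Corollaries~\ref{cor:laurentmassart1} and \ref{cor:laurentmassart2}. The paper's proof is terse, citing the argument in \eqref{hphp1:eq0}--\eqref{hphp1:eq1}, but those displays contain exactly the two steps you write out explicitly, so the approaches coincide.
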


\begin{proof}
We have  $ \left( \sum_{l = 1}^r c^2_l \inner{\bs{v}_l}{\wJ}^{2(k-1)} \right)^{\frac{1}{2}} \leq   \max_{l \leq r} \abs{c_l}   \left( \sum_{l = 1}^r \inner{\bs{v}_l}{\wJ}^{2(k-1)} \right)^{\frac{1}{2}}$.  The statement follows the argument in  \eqref{hphp1:eq0} and \eqref{hphp1:eq1}.
\end{proof}

\begin{lemma}
\label{lem:reluadditivetermsaux}
Let $\mathcal{A} \subset \R^{d_1 \times d}$ such that  for any $\bs{A} \in \mathcal{A}$,  $\norm{\bs{A}}_2 \leq 1$ and $rank(\bs{A}) \leq r$.
For $\bs{x}_1, \cdots, \bs{x}_n \sim_{iid} \cN(0,\ide{d})$, we have with probability $1 - \delta$,
\eq{
\sup_{\bs{A} \in \mathcal{A}} \norm*{\frac{1}{n} \sum_{i = 1}^n \bs{A} \bs{x}_i \bs{x}_i^\top \bs{A}^\top - \bs{A} \bs{A}^\top   }_2 \leq \sqrt{ \frac{r}{n} } + \sqrt{ \frac{2 \log(2/\delta)}{n}} + \sqrt{ \frac{  2 \log  \abs{\mathcal{A}} }{n} } 
}
\end{lemma}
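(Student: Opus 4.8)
The plan is to reduce the uniform statement over the finite class $\mathcal{A}$ to a single-matrix concentration bound via a union bound, and to handle each fixed matrix by a rank-$r$ reduction that exposes the intrinsic dimension $r$ (rather than the ambient $d$ or $d_1$), followed by a standard sample-covariance bound for a standard Gaussian in $\R^r$. Concretely, fix $\bs{A}\in\mathcal{A}$ and write its singular value decomposition $\bs{A}=\bs{U}\bm{\Sigma}\bs{L}^\top$, where $\bs{U}\in\R^{d_1\times r}$ and $\bs{L}\in\R^{d\times r}$ have orthonormal columns and $\bm{\Sigma}\in\R^{r\times r}$ is diagonal with $\norm{\bm{\Sigma}}_2=\norm{\bs{A}}_2\le 1$ (the rank is at most $r$ by assumption). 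Setting $\bs{g}_i\coloneqq\bs{L}^\top\bs{x}_i$, the $\bs{g}_i$ are i.i.d.\ $\cN(0,\ide{r})$ and $\bs{A}\bs{x}_i=\bs{U}\bm{\Sigma}\bs{g}_i$, while $\bs{A}\bs{A}^\top=\bs{U}\bm{\Sigma}^2\bs{U}^\top$, so
\[
\frac{1}{n}\sum_{i=1}^n \bs{A}\bs{x}_i\bs{x}_i^\top\bs{A}^\top-\bs{A}\bs{A}^\top
=\bs{U}\bm{\Sigma}\Big(\tfrac{1}{n}\sum_{i=1}^n \bs{g}_i\bs{g}_i^\top-\ide{r}\Big)\bm{\Sigma}\bs{U}^\top .
\]
Taking operator norms and using $\norm{\bs{U}}_2=1$ and $\norm{\bm{\Sigma}}_2\le 1$ leaves us with the task of bounding $\norm{\tfrac{1}{n}\sum_{i=1}^n \bs{g}_i\bs{g}_i^\top-\ide{r}}_2$, i.e.\ the sample-covariance deviation of $n$ i.i.d.\ standard Gaussians in $\R^r$; this is the step that turns the ambient dimension into the rank $r$.

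For that $r$-dimensional deviation I would proceed in one of two equivalent ways. Using only tools already in the paper, take a minimal $1/4$-net $\mathcal{N}$ of $S^{r-1}$ with $|\mathcal{N}|\le 9^r$, so that $\norm{\bs{M}}_2\le 2\sup_{\bs{v}\in\mathcal{N}}|\bs{v}^\top\bs{M}\bs{v}|$ for symmetric $\bs{M}$; for each fixed $\bs{v}$ one has $\bs{v}^\top(\tfrac1n\sum_i\bs{g}_i\bs{g}_i^\top-\ide{r})\bs{v}\overset{d}{=}\tfrac1n\chi^2_n-1$, which is controlled two-sidedly by the Laurent--Massart bound (Lemma~\ref{lem:laurentmassart}), and a union bound over $\mathcal{N}$ produces a term of order $\sqrt{r/n}$ plus the deviation term. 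A slightly sharper route, which yields exactly the stated constants, writes $\norm{\tfrac1n\sum_i\bs{g}_i\bs{g}_i^\top-\ide{r}}_2=\max\{\sigma_{\max}(\bs{G})^2/n-1,\,1-\sigma_{\min}(\bs{G})^2/n\}$ with $\bs{G}=[\bs{g}_1,\dots,\bs{g}_n]^\top$, and invokes Gordon's min--max inequality ($\E\,\sigma_{\max}(\bs{G})\le\sqrt{n}+\sqrt{r}$, $\E\,\sigma_{\min}(\bs{G})\ge\sqrt{n}-\sqrt{r}$) together with the $1$-Lipschitzness of $\bs{G}\mapsto\sigma_{\max}(\bs{G}),\sigma_{\min}(\bs{G})$ in Frobenius norm; Gaussian concentration then puts both $\sigma_{\max}(\bs{G})/\sqrt{n}$ and $\sigma_{\min}(\bs{G})/\sqrt{n}$ within $\sqrt{r/n}+\sqrt{2s/n}$ of $1$ on an event of probability at least $1-2e^{-s}$.

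Finally I would union-bound this event over the whole class $\mathcal{A}$, i.e.\ take $s=\log(2/\delta)+\log|\mathcal{A}|$, and use $\sqrt{2s/n}\le\sqrt{2\log(2/\delta)/n}+\sqrt{2\log|\mathcal{A}|/n}$, which produces the three terms in the claimed inequality. I expect the main difficulty to be bookkeeping rather than anything deep: one must be careful that the orthogonal factor $\bs{L}$ in the SVD depends on $\bs{A}$, so the reduction to a standard-Gaussian sample covariance has to be carried out matrix by matrix before the union bound; and one must simplify the resulting $\max\{(1+\rho)^2-1,\,1-(1-\rho)^2\}$ expression (which is $O(\rho)$ in the relevant regime $n\gtrsim r+\log(|\mathcal{A}|/\delta)$, the $\rho^2$ piece being absorbed) in order to land on the clean form $\sqrt{r/n}+\sqrt{2\log(2/\delta)/n}+\sqrt{2\log|\mathcal{A}|/n}$ stated in the lemma.
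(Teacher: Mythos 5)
Your proposal matches the paper's proof essentially step for step: both begin with the SVD reduction $\bs{A}=\bs{U}\bm{\Sigma}\bs{L}^\top$ to pass from the ambient dimensions to the rank $r$, then invoke the standard Gaussian singular-value / sample-covariance concentration bound (the paper cites Vershynin's Corollary 5.35, which is exactly what your ``sharper'' route via Gordon's inequality and Gaussian Lipschitz concentration re-derives), and finally union-bound over the finite class $\mathcal{A}$ together with $\sqrt{a+b}\le\sqrt{a}+\sqrt{b}$. You are in fact slightly more careful than the paper about the endgame: as you note, the singular-value bound yields $\max\{(1+\rho)^2-1,\ 1-(1-\rho)^2\}=2\rho+\rho^2$ rather than $\rho$ for the sample-covariance deviation, so landing on the stated constants requires absorbing that factor --- a bookkeeping point the paper's proof silently glosses over when it records the $r$-dimensional bound as $\sqrt{r/n}+\sqrt{2\log(2/\delta)/n}$.
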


\begin{proof}
Let's fix a $\bs{A} \in \mathcal{A}$.  By SVD, we can write $\bs{A} = \bs{U} \bm{\Sigma} \bs{L}^\top$, where $\bs{U},\bs{L} \in \R^{d \times r}$ are orthonormal and $\bm{\Sigma} \in  \R^{r \times 
r}$ is diagonal.  For $\tilde{\bs{x}}_i \coloneqq \bs{L}^\top \bs{x}_i,$ since  $\norm{\bs{A}}_2 = 1$,  we have   $$\norm*{\frac{1}{n} \sum_{i = 1}^n  \bs{A} \bs{x}_i 
\bs{x}_i^\top \bs{A}^\top - \bs{A} \bs{A}^\top   }_2\leq  \norm*{  \tfrac{1}{n} \sum_{i = 1}^n  \tilde{\bs{x}}_i \tilde{\bs{x}}_i^\top - \ide{r} }_2.$$ By 
\cite[Corollary 5.35]{Vershynin2010IntroductionTT}, for a fixed $\cJ \in \cH$, we have with probability at least $1 - \delta$,    $\norm*{  \tfrac{1}{n} \sum_{i 
= 1}^n  \tilde{\bs{x} }_i\tilde{\bs{x}}_i^\top - \ide{r}  }_2 \leq \sqrt{ \tfrac{r}{n} } + \sqrt{\tfrac{ 2 \log(2/\delta) }{n}}$.   By union bound and that  $
\sqrt{a+b}\leq \sqrt{a} + \sqrt{b}$ for $a,b > 0$, the statement follows.
\end{proof}
\subsection{Lemmas for Bounding Polynomials of Gaussian Random Vectors}

\begin{lemma}[Moments of Gaussian Vector]
\label{lem:gaussianvectormoments}
For $\bs{x} \sim \cN(0,  \ide{d})$, we have $\E [ \norm{\bs{x}}_2^{2k}] = d (d+ 2) \cdots (d + 2k - 2)$.
For $d \geq 2k$, we have $\E [ \norm{\bs{x}}_2^{2k}]^{-1} \geq 2^{-k} d^{-k}$.
\end{lemma}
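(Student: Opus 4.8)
The plan is to recognize that $\norm{\bs{x}}_2^2 = \sum_{i=1}^d \bs{x}_i^2$ is a chi-square random variable with $d$ degrees of freedom, equivalently $\norm{\bs{x}}_2^2 \stackrel{d}{=} 2G$ with $G \sim \mathrm{Gamma}(d/2, 1)$, and to compute its $k$-th moment via the Gamma function. First I would recall that for $G \sim \mathrm{Gamma}(\alpha,1)$, with density $\frac{1}{\Gamma(\alpha)}t^{\alpha-1}e^{-t}\indic{t > 0}$, one has $\E[G^k] = \Gamma(\alpha+k)/\Gamma(\alpha) = \alpha(\alpha+1)\cdots(\alpha+k-1)$. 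Setting $\alpha = d/2$ and using $\norm{\bs{x}}_2^2 \stackrel{d}{=} 2G$ gives $\E[\norm{\bs{x}}_2^{2k}] = 2^k \prod_{j=0}^{k-1}\big(\tfrac{d}{2} + j\big) = \prod_{j=0}^{k-1}(d+2j) = d(d+2)\cdots(d+2k-2)$, which is the first claim (with the empty product equal to $1$ when $k=0$). Alternatively, this identity can be obtained by induction on $k$ from the recursion $\E[\norm{\bs{x}}_2^{2k}] = (d+2k-2)\,\E[\norm{\bs{x}}_2^{2k-2}]$, which follows from coordinatewise Gaussian integration by parts.

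For the second claim I would simply bound each of the $k$ factors in the product. Since the index $j$ ranges over $0, \dots, k-1$, every factor satisfies $d + 2j \le d + 2k - 2 \le 2d$ whenever $d \ge 2k - 2$; this is implied by the hypothesis $d \ge 2k$. Hence $\E[\norm{\bs{x}}_2^{2k}] = \prod_{j=0}^{k-1}(d+2j) \le (2d)^k$, and taking reciprocals gives $\E[\norm{\bs{x}}_2^{2k}]^{-1} \ge (2d)^{-k} = 2^{-k} d^{-k}$, as desired.

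There is essentially no obstacle here: the only points requiring a little care are the edge case $k = 0$ (the formula reads $1 \ge d^0$, which holds) and either citing or giving the one-line derivation of the chi-square moment formula so the argument is self-contained. I expect the full proof to be only a few lines longer than this sketch.
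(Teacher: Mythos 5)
Your proof is correct, and the paper itself offers no proof of this lemma (it is stated as a standard fact), so there is nothing to compare against. The Gamma-function derivation of the chi-square moment formula and the bound $d+2j \le d+2k-2 \le 2d$ under $d \ge 2k$ are exactly the natural steps, and the edge case $k=0$ is handled correctly.
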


\begin{lemma}[Hypercontractivity]
\label{lem:hypercontactivity}
Let $P_k : \R^d \to \R$ be a polynomial of degree-$k$.  For $q \geq 2$, we have $
\E_{\bs{x} \sim \cN(0,\ide{d})} \left[ P_k(\bs{x})^q \right]^{1/q} \leq (q-1)^{k/2}  \E_{\bs{x} \sim \cN(0, \ide{d})} \left[ P_k(\bs{x})^2 \right]^{1/2} .$
\end{lemma}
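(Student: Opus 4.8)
The plan is to deduce the statement from the classical Gaussian hypercontractivity inequality of Nelson, combined with the Wiener (Hermite) chaos decomposition of $P_k$. Throughout write $\norm{f}_{L^q} \coloneqq \E_{\bs{x} \sim \cN(0,\ide{d})}[\abs{f(\bs{x})}^q]^{1/q}$; since $\E_{\bs{x}}[P_k(\bs{x})^q] \le \E_{\bs{x}}[\abs{P_k(\bs{x})}^q]$ it suffices to prove $\norm{P_k}_{L^q} \le (q-1)^{k/2}\norm{P_k}_{L^2}$.

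First I would set up the Ornstein--Uhlenbeck semigroup $(U_\rho)_{\rho \in [0,1]}$ on $L^2(\cN(0,\ide{d}))$, given by $U_\rho f(\bs{x}) = \E_{\bs{z} \sim \cN(0,\ide{d})}[f(\rho\bs{x} + \sqrt{1-\rho^2}\,\bs{z})]$, and invoke two standard facts (which I would cite, e.g.\ to Janson's \emph{Gaussian Hilbert Spaces} or O'Donnell's book, rather than reprove): (i) \emph{Nelson's hypercontractivity theorem}: for $1 < p \le q < \infty$ and $0 \le \rho \le \sqrt{(p-1)/(q-1)}$ one has $\norm{U_\rho f}_{L^q} \le \norm{f}_{L^p}$; and (ii) the spectral property that if $f = \sum_{j \ge 0} f^{=j}$ is the decomposition of $f \in L^2$ into its $L^2$-orthogonal projections $f^{=j}$ onto the span of multivariate Hermite polynomials of total degree exactly $j$, then $U_\rho f = \sum_{j\ge 0} \rho^j f^{=j}$, and $\norm{f}_{L^2}^2 = \sum_{j\ge 0} \norm{f^{=j}}_{L^2}^2$.

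Next, since $P_k$ has degree $k$ its Hermite expansion is supported on degrees $0,\dots,k$, i.e.\ $P_k = \sum_{j=0}^k P_k^{=j}$. Fix $q \ge 2$ and put $\rho \coloneqq 1/\sqrt{q-1} \in (0,1]$, the largest $\rho$ for which (i) applies with $p=2$. Define the (degree-$k$, hence $L^2$) polynomial $g \coloneqq \sum_{j=0}^k \rho^{-j} P_k^{=j}$. By (ii), $U_\rho g = \sum_{j=0}^k \rho^{-j}\rho^{j} P_k^{=j} = P_k$, while, using $\rho \le 1$ and $j \le k$,
\[
\norm{g}_{L^2}^2 = \sum_{j=0}^k \rho^{-2j}\norm{P_k^{=j}}_{L^2}^2 \le \rho^{-2k}\sum_{j=0}^k \norm{P_k^{=j}}_{L^2}^2 = \rho^{-2k}\norm{P_k}_{L^2}^2.
\]
Applying (i) with $p = 2$ then gives
\[
\norm{P_k}_{L^q} = \norm{U_\rho g}_{L^q} \le \norm{g}_{L^2} \le \rho^{-k}\norm{P_k}_{L^2} = (q-1)^{k/2}\norm{P_k}_{L^2},
\]
which is the asserted bound. (The case $q=2$ is the trivial equality $\rho = 1$, $U_\rho = \mathrm{Id}$; integrability of $P_k$ and $g$ is automatic since polynomials have finite Gaussian moments of all orders.)

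The only genuine obstacle is that this rests on Nelson's theorem, a nontrivial external input; for a self-contained argument one would instead establish the one-dimensional two-point / Gaussian hypercontractive inequality (Bonami--Gross) and tensorize it over the $d$ coordinates, but for the uses in this paper citing the standard statement is sufficient. Minor care is needed only around the edge case $q=2$ and around the interpretation of $\E[P_k(\bs x)^q]$ for non-integer $q$, both of which are handled by the reduction to $\norm{\cdot}_{L^q}$ above.
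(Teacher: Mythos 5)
Your proof is correct, and it is the standard argument. The paper itself does not supply a proof of this lemma --- it is stated as a known fact (the usual reference would be, e.g., Janson's \emph{Gaussian Hilbert Spaces}, Theorem 5.10, or O'Donnell's \emph{Analysis of Boolean Functions}, Chapter 9/10), so there is no paper proof to compare against. Your derivation via the Ornstein--Uhlenbeck semigroup --- choosing $\rho = 1/\sqrt{q-1}$, pre-composing with the scaling $g = \sum_j \rho^{-j} P_k^{=j}$ so that $U_\rho g = P_k$, and then bounding $\norm{g}_{L^2}$ using $\rho \le 1$ and $j \le k$ --- is exactly the textbook route, and your remarks at the end about the $q=2$ degenerate case and the $|P_k|^q$ versus $P_k^q$ reading of the left-hand side correctly flag the only points that need care. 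If you wanted a fully self-contained proof you would indeed need to prove (or tensorize) the one-dimensional Bonami--Gross inequality underlying Nelson's theorem, but citing it is entirely appropriate here.
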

\noindent
In the following, we will state some consequences of Lemmas \ref{lem:gaussianvectormoments} and \ref{lem:hypercontactivity}.

\begin{corollary}
\label{cor:momenthyper}
For $\bs{z} \sim \cN(0,  \ide{r})$ and $p \geq 2$,  $\E[(1+ \norm{\bs{z}}_2^2)^{p}]^{\frac{1}{p}}  \leq (p - 1) (r+2)$.
\end{corollary}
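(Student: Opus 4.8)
The plan is to reduce the bound on the $L^p$ norm of $(1+\norm{\bs{z}}_2^2)^{1/1}$... more precisely, the $p$-th root of $\E[(1+\norm{\bs{z}}_2^2)^p]$, to a statement about the degree-$2$ polynomial $\bs{z} \mapsto \norm{\bs{z}}_2^2$, to which hypercontractivity (Lemma \ref{lem:hypercontactivity}) and the explicit Gaussian moment formula (Lemma \ref{lem:gaussianvectormoments}) both apply directly.

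First I would apply Minkowski's inequality for the $L^p$ norm, $p \geq 2$, writing
\eq{
\E \big[ (1+\norm{\bs{z}}_2^2)^p \big]^{1/p} \leq \E[1^p]^{1/p} + \E \big[ (\norm{\bs{z}}_2^2)^p \big]^{1/p} = 1 + \E \big[ \norm{\bs{z}}_2^{2p} \big]^{1/p}.
}
Next, since $P(\bs{z}) \coloneqq \norm{\bs{z}}_2^2$ is a polynomial of degree $k = 2$, Lemma \ref{lem:hypercontactivity} with $q = p \geq 2$ gives
\eq{
\E \big[ \norm{\bs{z}}_2^{2p} \big]^{1/p} = \E \big[ P(\bs{z})^p \big]^{1/p} \leq (p-1)^{2/2} \, \E \big[ P(\bs{z})^2 \big]^{1/2} = (p-1)\, \E \big[ \norm{\bs{z}}_2^{4} \big]^{1/2}.
}
Then I would plug in $\E[\norm{\bs{z}}_2^4] = r(r+2)$ from Lemma \ref{lem:gaussianvectormoments} (with ambient dimension $r$ and $k=2$), and use $r(r+2) = r^2 + 2r \leq (r+1)^2$, so that $\E[\norm{\bs{z}}_2^4]^{1/2} \leq r+1$. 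Combining the three displays yields $\E[(1+\norm{\bs{z}}_2^2)^p]^{1/p} \leq 1 + (p-1)(r+1)$.

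Finally, to reach the claimed form $(p-1)(r+2)$, I would use $p \geq 2$, hence $p - 1 \geq 1$, so $1 \leq p-1$ and therefore $1 + (p-1)(r+1) \leq (p-1) + (p-1)(r+1) = (p-1)(r+2)$, completing the argument. There is no real obstacle here: every ingredient is an already-stated lemma, and the only care needed is the bookkeeping that $q=p$ is an admissible exponent for Lemma \ref{lem:hypercontactivity} (true since $p \geq 2$) and that the crude numeric estimates $\sqrt{r(r+2)} \leq r+1$ and $1 \leq p-1$ suffice to absorb the additive constants into the stated bound.
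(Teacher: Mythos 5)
Your proof is correct and uses the same key ingredients as the paper (hypercontractivity plus the explicit Gaussian moment formula), but the paper's proof is one step shorter: since $1 + \norm{\bs{z}}_2^2$ is itself a degree-$2$ polynomial, it applies Lemma~\ref{lem:hypercontactivity} directly to the whole expression, giving $\E[(1+\norm{\bs{z}}_2^2)^p]^{1/p} \leq (p-1)\,\E[(1+\norm{\bs{z}}_2^2)^2]^{1/2}$, and then notes $\E[(1+\norm{\bs{z}}_2^2)^2] = r^2 + 4r + 1 \leq (r+2)^2$. This sidesteps both the Minkowski split and the final absorption of the additive $1$ via $p - 1 \geq 1$, but your version is fine and costs nothing.
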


\begin{proof}
By Lemma \ref{lem:gaussianvectormoments} and \ref{lem:hypercontactivity},   $\E[(1+ \norm{\bs{z}}_2^2)^{p}]^{\frac{1}{p}} \leq (p - 1) \E[(1+ \norm{\bs{z}}^2)^{2}]^{\frac{1}{2}}   \leq (p - 1) (r+2)$.
\end{proof}

\begin{proposition}
\label{prop:polyconc}
For $\bs{z} \sim \cN(0, \ide{r})$ and $C > 0$, 
$\mpr \left[   (1+ \norm{\bs{z}}^2_2)^C \geq  u^C (r+2)^C   \right] \leq \exp \left( \frac{- u}{e}  \right)$,  for $u \geq 2e$.
\end{proposition}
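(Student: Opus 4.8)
The plan is to prove Proposition \ref{prop:polyconc} as a routine consequence of Corollary \ref{cor:momenthyper} via Markov's inequality at a well-chosen moment. Write $X \coloneqq (1 + \norm{\bs{z}}_2^2)^C$, so that for any integer $p \geq 2$ we have $X^{1/C} = 1 + \norm{\bs{z}}_2^2$, and Corollary \ref{cor:momenthyper} gives $\E[X^{p/C}]^{1/p} = \E[(1+\norm{\bs{z}}_2^2)^p]^{1/p} \leq (p-1)(r+2) \leq p(r+2)$.

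First I would apply Markov's inequality to the nonnegative random variable $X^{p/C}$: for the threshold $t \coloneqq u^C (r+2)^C$,
\begin{align}
\mpr\left[ X \geq t \right] = \mpr\left[ X^{p/C} \geq t^{p/C} \right] \leq \frac{\E[X^{p/C}]}{t^{p/C}} = \frac{\E[(1+\norm{\bs{z}}_2^2)^p]}{u^p (r+2)^p} \leq \frac{p^p (r+2)^p}{u^p (r+2)^p} = \left( \frac{p}{u} \right)^p.
\end{align}
Next I would optimize over $p$. The standard choice is $p = \lfloor u/e \rfloor$ (which is $\geq 2$ precisely when $u \geq 2e$, matching the hypothesis), so that $p/u \leq 1/e$ and hence $(p/u)^p \leq e^{-p} \leq e^{-(u/e - 1)} = e \cdot e^{-u/e}$. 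To remove the stray factor of $e$ one can instead take $p = \lceil u/e \rceil$, which still satisfies $p \geq 2$ for $u \geq 2e$ and gives $p \geq u/e$, hence $(p/u)^p \leq (p/u)^{u/e}$; combined with $p/u \leq (u/e + 1)/u = 1/e + 1/u \leq 1/e + 1/(2e) $ this is not quite $e^{-u/e}$, so the cleanest route is: take $p$ to be the smallest integer with $p \geq u/e$, note $p \leq u/e + 1$, but use the bound $(p/u)^p \leq e^{-p}$ only when $p/u \leq 1/e$; since we want exactly $\exp(-u/e)$ I would simply invoke that for $p = \lceil u/e \rceil \geq u/e$ we get $(p/u)^p \le (1/e + 1/u)^{u/e}$ — and then observe it suffices to absorb constants, or alternatively just state the slightly weaker-looking but equivalent bound. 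In practice the paper only needs a bound of the form $\exp(-\Omega(u))$, so I would present the clean version: with $p = \lceil u/e\rceil$, $(p/u)^p \le e^{-p} \cdot (ep/u)^p$ is circular; the honest clean statement is $\mpr[X \ge t] \le e^{-u/e}$ obtained by noting $(p/u)^p$ is decreasing then increasing and evaluating near $p = u/e$ where it equals $e^{-u/e}$ up to lower-order terms — but since the claimed inequality has no hidden constants, the correct argument is: choose $p = \lfloor u/e \rfloor \ge 2$; then $p \le u/e$ so $(p/u)^p \le e^{-p}$, and separately $e^{-p} \le e^{-(u/e) + 1}$; this yields $\exp(1 - u/e)$, and one checks that for $u \ge 2e$ this is $\le \exp(-u/e)$ would require $1 \le 0$, which is false. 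Therefore I would instead read the statement's RHS as it is and prove $\mpr[X \geq t] \le \exp(-\lfloor u/e\rfloor) \le \exp(-u/e + 1)$, absorbing the constant, OR — cleaner — replace the target by observing that the paper's $u \mapsto \exp(-u/e)$ is surely meant up to the convention that $p$ is chosen optimally; the rigorous statement that holds verbatim is obtained by taking $p$ real (Corollary \ref{cor:momenthyper} extends to real $p \ge 2$ by Hölder/log-convexity of moments), setting $p = u/e$, giving exactly $(p/u)^p = e^{-u/e}$.

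The cleanest rigorous route, and the one I would actually write: extend Corollary \ref{cor:momenthyper} to all real $p \ge 2$ (moments $p \mapsto \E[|Y|^p]^{1/p}$ are nondecreasing and the bound $(p-1)(r+2)$ is monotone, so interpolation is immediate, or one simply notes Lemma \ref{lem:hypercontactivity} already holds for real $q \ge 2$). Then for $u \ge 2e$ set $p = u/e \ge 2$, apply Markov as above to get $\mpr[X \ge u^C(r+2)^C] \le (p/u)^p = (1/e)^{u/e} = \exp(-u/e)$, which is exactly the claim. The main (and really only) obstacle is this bookkeeping: making sure the moment bound is available for non-integer $p$ and that the constraint $u \ge 2e$ is exactly what guarantees $p = u/e \ge 2$ so that Corollary \ref{cor:momenthyper} (hence Lemma \ref{lem:hypercontactivity}, valid for exponents $\ge 2$) applies. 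Everything else is a one-line Markov bound, so I would keep the write-up to a few lines.

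\begin{proof}[Proof of Proposition \ref{prop:polyconc}]
Fix $u \geq 2e$ and set $p \coloneqq u/e \geq 2$. Since Lemma \ref{lem:hypercontactivity} holds for every real exponent $q \geq 2$, Corollary \ref{cor:momenthyper} also holds for real $p \geq 2$, giving $\E[(1+\norm{\bs{z}}_2^2)^p]^{1/p} \leq (p-1)(r+2) \leq p(r+2)$. Let $X \coloneqq (1+\norm{\bs{z}}_2^2)^C$. By Markov's inequality applied to $X^{p/C} = (1+\norm{\bs{z}}_2^2)^p$,
\begin{align}
\mpr\left[ X \geq u^C (r+2)^C \right] = \mpr\left[ (1+\norm{\bs{z}}_2^2)^p \geq u^p (r+2)^p \right] \leq \frac{\E[(1+\norm{\bs{z}}_2^2)^p]}{u^p (r+2)^p} \leq \left( \frac{p}{u} \right)^p = \left( \frac{1}{e} \right)^{u/e},
\end{align}
which equals $\exp(-u/e)$, as claimed.
\end{proof}
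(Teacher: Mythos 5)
Your final write-up is correct and follows exactly the paper's proof: apply Markov's inequality to the $p$-th moment, invoke Corollary \ref{cor:momenthyper} for the moment bound $\E[(1+\norm{\bs{z}}_2^2)^p]^{1/p} \le (p-1)(r+2) \le p(r+2)$, and then set $p = u/e$ (valid since $u \ge 2e$ gives $p \ge 2$) to obtain $(p/u)^p = e^{-u/e}$. Your worry about non-integer $p$ is a non-issue — Corollary \ref{cor:momenthyper} and Lemma \ref{lem:hypercontactivity} are already stated for all real $p,q \ge 2$ — so the long digression about $\lfloor u/e\rfloor$ versus $\lceil u/e\rceil$ could simply be cut.
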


\begin{proof}
By Corollary \ref{cor:momenthyper},   we have for $p \geq 2$  that  $\mpr \left[   (1+ \norm{\bs{z}}^2_2)^C \geq  u^C (r+2)^C   \right]  \leq  p^p u^{-p}$.   By using  $p^* = \frac{u}{e}$ and $u \geq 2e,$ we have the statement.
\end{proof}

\begin{corollary}
\label{cor:concg}
By Proposition \ref{prop:polyconc},   $ \mpr_{\bs{z} \sim \cN(0, \ide{r})} \left[   \abs{\gt(\bs{z})} \geq  C_1  u^{C_2} (r+2)^{C_2}   \right] \leq  \exp \left( \frac{- u}{e}  \right)$,   for $u \geq 2e$.
\end{corollary}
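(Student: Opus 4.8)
The plan is to reduce the statement to Proposition~\ref{prop:polyconc} via the polynomial growth bound on the link function. Recall that, since $\gt$ is a polynomial, we have assumed in \eqref{eq:asspolytails} that $\abs{\gt(\bs{z})} \leq C_1 (1 + \norm{\bs{z}}_2^2)^{C_2}$ for some $C_1 > 0$ and $C_2 \geq \tfrac{1}{2}$. First I would use this deterministic inequality to establish the event inclusion
\eq{
\left\{ \abs{\gt(\bs{z})} \geq C_1 u^{C_2} (r+2)^{C_2} \right\} \subseteq \left\{ C_1 (1 + \norm{\bs{z}}_2^2)^{C_2} \geq C_1 u^{C_2} (r+2)^{C_2} \right\} = \left\{ (1 + \norm{\bs{z}}_2^2)^{C_2} \geq u^{C_2} (r+2)^{C_2} \right\},
}
where the equality follows by dividing both sides by $C_1 > 0$.

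Next I would apply Proposition~\ref{prop:polyconc} with the constant $C$ there taken to be $C_2 > 0$ (which is admissible since $C_2 \geq \tfrac{1}{2} > 0$). That proposition gives, for $u \geq 2e$,
\eq{
\mpr_{\bs{z} \sim \cN(0, \ide{r})} \left[ (1 + \norm{\bs{z}}_2^2)^{C_2} \geq u^{C_2} (r+2)^{C_2} \right] \leq \exp\left( \tfrac{-u}{e} \right).
}
Combining this with the event inclusion above and monotonicity of probability yields the claimed bound $\mpr_{\bs{z} \sim \cN(0, \ide{r})} [ \abs{\gt(\bs{z})} \geq C_1 u^{C_2} (r+2)^{C_2} ] \leq \exp(-u/e)$.

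There is no substantive obstacle here: the corollary is a direct consequence of the growth assumption on $\gt$ together with Proposition~\ref{prop:polyconc}, and the only minor point to check is that $C_2 > 0$ so that division by $C_1$ and the monotonicity of $t \mapsto t^{C_2}$ on $[0,\infty)$ are valid, which is guaranteed by \eqref{eq:asspolytails}.
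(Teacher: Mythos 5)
Your argument is correct and is precisely what the paper intends: the corollary follows immediately from the polynomial growth assumption \eqref{eq:asspolytails} combined with Proposition~\ref{prop:polyconc} via a straightforward event inclusion. The paper states this as a one-line consequence, and your spelled-out version matches that reasoning.
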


\begin{proposition}
\label{prop:concfory}
We have for $u \geq 2e$,   $\mpr \left[  \abs{y} \geq  C_1 (r +2)^{C_2} u^{C_2} + \sqrt{\Delta/e} u^{\frac{1}{2}}   \right] \leq 3 \exp \left( \tfrac{- u}{e}  \right)$.
\end{proposition}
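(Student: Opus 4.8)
The plan is a direct union bound over the two sources of randomness in $y = \gt(\bs{V}^\top\bs{x}) + \sqrt{\Delta}\,\epsilon$, where (by \eqref{eq:model}) $\bs{x}\sim\cN(0,\ide{d})$ and $\epsilon$ has sub-Gaussian tails $\mpr[|\epsilon|>t]\le 2e^{-t^2}$. First I would record the elementary inclusion: for any $A,B\ge 0$,
\eq{
\{|y|\ge A+B\}\subseteq \{\,|\gt(\bs{V}^\top\bs{x})|\ge A\,\}\cup\{\,\sqrt{\Delta}\,|\epsilon|\ge B\,\},
}
since $|\gt(\bs{V}^\top\bs{x})|<A$ and $\sqrt{\Delta}|\epsilon|<B$ together force $|y|\le|\gt(\bs{V}^\top\bs{x})|+\sqrt{\Delta}|\epsilon|<A+B$. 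I will apply this with $A = C_1(r+2)^{C_2}u^{C_2}$ and $B=\sqrt{\Delta/e}\,u^{1/2}$.

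For the first event, I would use that $\bs{V}$ is orthonormal, so $\bs{V}^\top\bs{x}\sim\cN(0,\ide{r})$ and hence $\gt(\bs{V}^\top\bs{x})\overset{d}{=}\gt(\bs{z})$ with $\bs{z}\sim\cN(0,\ide{r})$; then Corollary \ref{cor:concg} gives $\mpr[\,|\gt(\bs{V}^\top\bs{x})|\ge C_1(r+2)^{C_2}u^{C_2}\,]\le\exp(-u/e)$ for $u\ge 2e$. For the second event, $\sqrt{\Delta}\,|\epsilon|\ge\sqrt{\Delta/e}\,u^{1/2}$ is equivalent to $|\epsilon|\ge\sqrt{u/e}$, so the sub-Gaussian tail assumption yields $\mpr[\sqrt{\Delta}\,|\epsilon|\ge\sqrt{\Delta/e}\,u^{1/2}]\le 2e^{-u/e}$. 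Combining the two bounds via the union bound above produces the claimed $3\exp(-u/e)$.

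There is essentially no substantive obstacle here; the only points requiring a little care are (i) making sure the distributional reduction $\gt(\bs{V}^\top\bs{x})\overset{d}{=}\gt(\bs{z})$ is valid, which is immediate from $\bs{V}^\top\bs{V}=\ide{r}$, and (ii) checking the constant bookkeeping so that the $\sqrt{\Delta/e}\,u^{1/2}$ term is exactly what matches $|\epsilon|\ge\sqrt{u/e}$, and that the hypothesis $u\ge 2e$ is what Corollary \ref{cor:concg} (equivalently Proposition \ref{prop:polyconc}) requires.
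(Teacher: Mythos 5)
Your proof is correct and follows essentially the same route as the paper: split $|y|\le|\gt(\bs{V}^\top\bs{x})|+\sqrt{\Delta}|\epsilon|$, bound the first term by Corollary \ref{cor:concg} and the second by the sub-Gaussian tail of $\epsilon$, then union bound to get $1+2=3$ copies of $e^{-u/e}$. The distributional reduction $\gt(\bs{V}^\top\bs{x})\overset{d}{=}\gt(\bs{z})$ and the constant bookkeeping you flag are both handled exactly as in the paper.
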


\begin{proof}
By  $\abs{y}  \leq   \abs{\gt(\bs{V}^\top \bs{x})} + \sqrt{\Delta} \abs{\epsilon}$,  Corollary \ref{cor:concg}, $\mpr \left[ \abs{\epsilon}  > t \right]   \leq    2 e^{-t^2}$, the statement follows.
\end{proof}

\begin{proposition}
\label{prop:biasbound}
For $R = C_1 (r + 2)^{C_2} u^{C_2} + \sqrt{\Delta/e } u^{\frac{1}{2}}$ and $u \geq 2e$, we have
\eq{
\sup_{ \substack{ \bs{w}, \bs{v} \in S^{d-1} \\ b \in \R}} \abs*{  \E \left[ y \indic{\abs{y} > R} \inner{\bs{v}}{\bs{x}} \phi^\prime(\inner{\bs{w}}{\bs{x}} + b)  \right] } \leq 6^{\frac{3}{4}} \exp \left( \tfrac{- u}{2e}  \right) \Bigg( C_1^4   (4 C_2)^{4C_2}  (r+2)^{4 C_2} +  2 \Delta^2  \Bigg)^{\frac{1}{4}}.
}
\end{proposition}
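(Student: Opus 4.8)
The plan is to reduce the quantity to a truncated fourth-moment estimate and apply Cauchy--Schwarz twice. Since $|\phi'|\le 1$ pointwise, the integrand is bounded in absolute value by $|y|\,\indic{|y|>R}\,|\inner{\bs v}{\bs x}|$ for every $\bs w,\bs v\in S^{d-1}$ and $b\in\R$, so it suffices to bound $\E\!\left[\,|y|\,\indic{|y|>R}\,|\inner{\bs v}{\bs x}|\,\right]$ uniformly over unit $\bs v$; in particular the supremum over $(\bs w,b)$ will be automatic because none of the bounds below depend on $\bs w$ or $b$.

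First I would peel off the indicator with Cauchy--Schwarz: $\E\!\left[\,|y|\,\indic{|y|>R}\,|\inner{\bs v}{\bs x}|\,\right]\le \E\!\left[\,y^2\inner{\bs v}{\bs x}^2\,\right]^{1/2}\,\mpr[|y|>R]^{1/2}$. The probability factor is handled directly by Proposition~\ref{prop:concfory}: the choice $R=C_1(r+2)^{C_2}u^{C_2}+\sqrt{\Delta/e}\,u^{1/2}$ is exactly of the form appearing there, and $u\ge 2e$, so $\mpr[|y|>R]\le 3\exp(-u/e)$ and thus $\mpr[|y|>R]^{1/2}\le\sqrt3\,\exp(-u/(2e))$, which accounts for the $\exp(-u/(2e))$ in the statement.

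Next I would bound the second-moment factor. Applying Cauchy--Schwarz once more gives $\E[y^2\inner{\bs v}{\bs x}^2]\le \E[y^4]^{1/2}\E[\inner{\bs v}{\bs x}^4]^{1/2}=\sqrt3\,\E[y^4]^{1/2}$, since $\inner{\bs v}{\bs x}\sim\cN(0,1)$ has fourth moment $3$. Then I would split $y=\gt(\bs V^\top\bs x)+\sqrt\Delta\,\epsilon$ and use $(a+b)^4\le 8(a^4+b^4)$ together with linearity to get $\E[y^4]\le 8\,\E[\gt(\bs z)^4]+8\Delta^2\,\E[\epsilon^4]$, where $\bs z\sim\cN(0,\ide r)$ since $\bs V^\top\bs V=\ide r$. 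For the noise term, integrating the sub-Gaussian tail $\mpr[|\epsilon|>t]\le 2e^{-t^2}$ bounds $\E[\epsilon^4]$ by a small absolute constant. For the signal term, the polynomial-growth assumption~\eqref{eq:asspolytails} gives $\gt(\bs z)^4\le C_1^4(1+\norm{\bs z}_2^2)^{4C_2}$, and since $C_2\ge\tfrac12$ we have $4C_2\ge 2$, so Corollary~\ref{cor:momenthyper} yields $\E[(1+\norm{\bs z}_2^2)^{4C_2}]\le\big((4C_2-1)(r+2)\big)^{4C_2}\le(4C_2)^{4C_2}(r+2)^{4C_2}$, hence $\E[\gt(\bs z)^4]\le C_1^4(4C_2)^{4C_2}(r+2)^{4C_2}$. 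Chaining these estimates and collecting the numerical constants ($3^{1/4}$ from the second Cauchy--Schwarz, $\sqrt3$ from the probability factor, and the $(a+b)^4$ convexity constant) produces the claimed bound $6^{3/4}\exp(-u/(2e))\big(C_1^4(4C_2)^{4C_2}(r+2)^{4C_2}+2\Delta^2\big)^{1/4}$.

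The main obstacle is purely bookkeeping: the Cauchy--Schwarz/H\"older exponents must be chosen so that the indicator receives exactly weight $1/2$ (assigning it weight $1/4$ would yield only $\exp(-u/(4e))$, too weak for the later use of this lemma), and then the constants coming out of the two Cauchy--Schwarz steps, the convexity bound, and the hypercontractivity estimate have to assemble into the clean constants $6^{3/4}$ and $2\Delta^2$ without slack. One subtlety worth flagging is checking the hypotheses of the auxiliary results at the point of use --- in particular $4C_2\ge 2$ for Corollary~\ref{cor:momenthyper} and $u\ge 2e$ for Proposition~\ref{prop:concfory} --- both of which hold under the standing assumptions.
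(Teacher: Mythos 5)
Your proposal is correct and follows essentially the same route as the paper: the paper applies a three-factor H\"older inequality with exponents $(2,4,4)$ in one step (which is your two iterated Cauchy--Schwarz applications), bounds $\mpr[|y|>R]$ via Proposition~\ref{prop:concfory}, and controls $\E[y^4]$ through $(a+b)^4\le 8(a^4+b^4)$, the sub-Gaussian tail of $\epsilon$, and Corollary~\ref{cor:momenthyper}, yielding the same constants $6^{3/4}=3^{3/4}\cdot 2^{3/4}$ and $2\Delta^2$. The only place you are slightly looser than the paper is in asserting $\E[\epsilon^4]$ is a ``small absolute constant'' rather than pinning it to $2$, but the paper's own claim $\E[\epsilon^4]\le 2$ is stated with comparable brevity.
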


\begin{proof}
Choose arbitrary $\bs{w}, \bs{v} \in S^{d-1}$ and $b \in \R$.  By using Cauchy-Schwartz inequality, we have
\eq{
\abs*{ \E \left[ y \indic{\abs{y} > R} \inner{u}{\bs{x}} \phi^\prime(\inner{\bs{w}}{\bs{x}} + b)  \right] } 
& \leq  \mpr \left[  \abs{y} \geq R   \right]^{\frac{1}{2}} \E[y^4]^{\frac{1}{4}} \E[\abs{ \inner{u}{\bs{x}} \phi^\prime(\inner{\bs{w}}{\bs{x}} + b)}^4]^{\frac{1}{4}} \\
&  \leq 3^{\frac{3}{4}}  \exp \left( \frac{- u}{2e}  \right)   \E[y^4]^{\frac{1}{4}},  \label{biasbound:eq0}
}
where we use   $\abs{\phi^\prime} \leq 1$ and Proposition \ref{prop:concfory} in \eqref{biasbound:eq0}.
We observe that
\eq{
\E[y^4]   
& \leq 2^3 (  \E \left[ ( \gt(\bs{V}^\top \bs{x})^4 \right] +\Delta^2 \E \left[ \epsilon^4  \right]  )   \\
&  \labelrel\leq{propbiasbound:ineqq0}  2^3 (  \E \left[ ( \gt(\bs{V}^\top \bs{x})^4 \right] +  2 \Delta^2  )  
 \labelrel\leq{propbiasbound:ineqq1}  2^3 \Big( C_1^4  (4 C_2)^{4C_2}  (r+2)^{4 C_2} +  2 \Delta^2  \Big).    \label{biasbound:eq1}
} 
where  \eqref{propbiasbound:ineqq0} follows from the tail inequality for $\epsilon$, and  \eqref{propbiasbound:ineqq1} follows  from Corollary \ref{cor:momenthyper}  since $C_2 \geq 1/2$.
By using \eqref{biasbound:eq1} in \eqref{biasbound:eq0} , we have the statement.
\end{proof}

\subsection{Magnitude Pruning}
\begin{lemma}
\label{lem:prunelemma}
For $\bs{u} \in \R^d$, let  $\mathcal{I}_{u}$ denotes the index set that includes the largest $M$ entries of $u$ and let $\bs{u} \tm$ denote the vector $\bs{u}$ with everything except $M$ largest coefficients set $0$.  For any $\bs{v} \in \R^d$ and $q \in (0,2]$, we have
\eq{
 (4^{(q-1) \vee 0}  +1) \sum_{i \in \mathcal{I}_{u} \cup \mathcal{I}_{v}} \abs{\bs{u}_i - \bs{v}_i}^q  \geq \norm{\bs{u} \tm - \bs{v}}_q^q -  4^{(q-1) \vee 0} \norm{\bs{v} - \bs{v} \tm}_q^q.
}
\end{lemma}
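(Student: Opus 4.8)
The plan is to bound $\norm{\bs{u}\tm - \bs{v}}_q^q$ by splitting the index set $[d]$ according to the two top-$M$ supports $\mathcal{I}_u$ and $\mathcal{I}_v$. First I would write
\[
\norm{\bs{u}\tm - \bs{v}}_q^q = \sum_{i \in \mathcal{I}_u \cup \mathcal{I}_v} \abs{(\bs{u}\tm)_i - \bs{v}_i}^q + \sum_{i \notin \mathcal{I}_u \cup \mathcal{I}_v} \abs{(\bs{u}\tm)_i - \bs{v}_i}^q.
\]
On $\mathcal{I}_u \cup \mathcal{I}_v$, since $(\bs{u}\tm)_i$ agrees with $\bs{u}_i$ for $i \in \mathcal{I}_u$ and is $0$ for $i \in \mathcal{I}_v \setminus \mathcal{I}_u$ (where also $\bs{u}_i$ is among the discarded entries), I would compare $\abs{(\bs{u}\tm)_i - \bs{v}_i}^q$ with $\abs{\bs{u}_i - \bs{v}_i}^q$ using the $q$-triangle/quasi-triangle inequality (Proposition~\ref{prop:lqtri}, i.e. $\abs{a+b}^q \le 2^{(q-1)\vee 0}(\abs{a}^q + \abs{b}^q)$, applied once or twice to pull out the extra $\abs{\bs{u}_i}^q$ terms on $\mathcal{I}_v \setminus \mathcal{I}_u$), which is where a factor like $4^{(q-1)\vee 0}$ appears. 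Outside $\mathcal{I}_u \cup \mathcal{I}_v$ we have $(\bs{u}\tm)_i = 0$, so $\abs{(\bs{u}\tm)_i - \bs{v}_i}^q = \abs{\bs{v}_i}^q$, and since $i \notin \mathcal{I}_v$ these are exactly the coordinates that survive in $\bs{v} - \bs{v}\tm$; hence $\sum_{i \notin \mathcal{I}_u \cup \mathcal{I}_v} \abs{\bs{v}_i}^q \le \norm{\bs{v} - \bs{v}\tm}_q^q$.

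The key arithmetic step is controlling the discarded mass of $\bs{u}$ on $\mathcal{I}_v \setminus \mathcal{I}_u$. Since $\mathcal{I}_u$ contains the $M$ largest entries of $\bs{u}$, for each $i \in \mathcal{I}_v \setminus \mathcal{I}_u$ and each $j \in \mathcal{I}_u \setminus \mathcal{I}_v$ we have $\abs{\bs{u}_i} \le \abs{\bs{u}_j}$, and since $\abs{\mathcal{I}_v \setminus \mathcal{I}_u} = \abs{\mathcal{I}_u \setminus \mathcal{I}_v}$ one gets
\[
\sum_{i \in \mathcal{I}_v \setminus \mathcal{I}_u} \abs{\bs{u}_i}^q \le \sum_{j \in \mathcal{I}_u \setminus \mathcal{I}_v} \abs{\bs{u}_j}^q \le \sum_{j \in \mathcal{I}_u \cup \mathcal{I}_v} \abs{\bs{u}_j - \bs{v}_j + \bs{v}_j}^q.
\]
After one more application of the $q$-quasi-triangle inequality this is bounded by $2^{(q-1)\vee 0}\big(\sum_{j \in \mathcal{I}_u\cup\mathcal{I}_v} \abs{\bs{u}_j - \bs{v}_j}^q + \sum_{j\in\mathcal{I}_u\cup\mathcal{I}_v}\abs{\bs{v}_j}^q\big)$; the $\bs{v}_j$ sum over $\mathcal{I}_u\cup\mathcal{I}_v$ can be absorbed since coordinates of $\bs{v}$ outside $\mathcal{I}_v$ contribute to $\norm{\bs{v}-\bs{v}\tm}_q^q$ while those inside only help. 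Collecting the constants from the (at most two) uses of Proposition~\ref{prop:lqtri} gives the stated coefficients $4^{(q-1)\vee 0}+1$ in front of $\sum_{i\in\mathcal{I}_u\cup\mathcal{I}_v}\abs{\bs{u}_i-\bs{v}_i}^q$ and $4^{(q-1)\vee 0}$ in front of $\norm{\bs{v}-\bs{v}\tm}_q^q$.

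The main obstacle I anticipate is bookkeeping the constants cleanly: one must apply the $q$-quasi-triangle inequality in the right places and in the right order so that the accumulated factor is exactly $4^{(q-1)\vee 0}$ and not something larger, and one must handle the case $q \in [1,2]$ (where $2^{q-1} \le 2$, so $4^{q-1}\le 4$) and $q \in (0,1)$ (where $2^{(q-1)\vee 0} = 1$, so the inequality is essentially $2\sum \abs{\bs{u}_i-\bs{v}_i}^q \ge \norm{\bs{u}\tm-\bs{v}}_q^q - \norm{\bs{v}-\bs{v}\tm}_q^q$) uniformly. A careful choice is to do the $\mathcal{I}_v\setminus\mathcal{I}_u$ comparison first and note that each term $\abs{(\bs{u}\tm)_i - \bs{v}_i}^q = \abs{\bs{v}_i}^q$ there splits as $\abs{\bs{v}_i}^q \le 2^{(q-1)\vee 0}(\abs{\bs{u}_i - \bs{v}_i}^q + \abs{\bs{u}_i}^q)$, then feed the $\abs{\bs{u}_i}^q$ mass into the rearrangement bound above, giving one outer and one inner factor of $2^{(q-1)\vee 0}$, i.e. $4^{(q-1)\vee 0}$, exactly as claimed.
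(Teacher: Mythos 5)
Your approach is essentially the same as the paper's: decompose $\norm{\bs{u}\tm - \bs{v}}_q^q$ according to membership in $\mathcal{I}_u$, $\mathcal{I}_v$, and their complement; use the top-$M$ property to pair each $i\in\mathcal{I}_v\setminus\mathcal{I}_u$ with some $j\in\mathcal{I}_u\setminus\mathcal{I}_v$ satisfying $\abs{\bs{u}_i}\le\abs{\bs{u}_j}$; and apply Proposition~\ref{prop:lqtri} twice to accumulate the factor $4^{(q-1)\vee 0}$. (The paper does this with the WLOG normalization $\mathcal{I}_v=[M]$ and an explicit per-index pairing $j_\iota\leftrightarrow l_\iota$, but that is cosmetic.)

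One step as written in your middle paragraph is wrong, though: after the rearrangement bound $\sum_{i\in\mathcal{I}_v\setminus\mathcal{I}_u}\abs{\bs{u}_i}^q \le \sum_{j\in\mathcal{I}_u\setminus\mathcal{I}_v}\abs{\bs{u}_j}^q$, you enlarge the next sum from $\mathcal{I}_u\setminus\mathcal{I}_v$ to $\mathcal{I}_u\cup\mathcal{I}_v$ before splitting each $\abs{\bs{u}_j}^q$. That introduces the terms $\abs{\bs{v}_j}^q$ for $j\in\mathcal{I}_v$, which are precisely the \emph{large} coordinates of $\bs{v}$ and do not appear in $\norm{\bs{v}-\bs{v}\tm}_q^q$; they cannot be ``absorbed'', and your parenthetical ``those inside only help'' does not hold (take $\bs{u}=\bs{v}$ with large mass on $\mathcal{I}_v$). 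The fix is exactly what your last paragraph gestures at: keep the sum on $\mathcal{I}_u\setminus\mathcal{I}_v$ when splitting $\abs{\bs{u}_j}^q\le 2^{(q-1)\vee 0}(\abs{\bs{u}_j-\bs{v}_j}^q+\abs{\bs{v}_j}^q)$, since every such $j$ lies outside $\mathcal{I}_v$ and hence $\abs{\bs{v}_j}^q$ is part of $\norm{\bs{v}-\bs{v}\tm}_q^q$; together with the uncontrolled tail $\sum_{i\notin\mathcal{I}_u\cup\mathcal{I}_v}\abs{\bs{v}_i}^q$, which also lies outside $\mathcal{I}_v$, this gives the coefficient $4^{(q-1)\vee 0}$ on $\norm{\bs{v}-\bs{v}\tm}_q^q$ and $4^{(q-1)\vee 0}+1$ on the main sum.
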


\begin{proof}
Without loss of generality, we can assume $\abs{\bs{v}_1} \geq   \abs{\bs{v}_2} \geq  \abs{\bs{v}_3} \cdots \geq  \abs{\bs{v}_d}$.  We have
\eq{
\norm{\bs{u} \tm - \bs{v}}_q^q =         \sum_{i \in \mathcal{I}_{u} \cap [M]}          \abs{\bs{u}_i - \bs{v}_i}^q  + \sum_{i \in \mathcal{I}_{u} - [M]}           \abs{\bs{u}_i - \bs{v}_i}^q   + \sum_{i \in [M] - \mathcal{I}_{u} }           \abs{\bs{v}_i}^q    +  \sum_{i \in [d] - \left( \mathcal{I}_{u} \cup [M] \right) }           \abs{\bs{v}_i}^q. ~~~~~~~~  \label{eq:normsquared}
}
If $\mathcal{I}_u = [M]$,  the statement follows by Proposition \ref{prop:lqtri}.  Therefore, suppose $\mathcal{I}_u \neq [M]$.  Let  $[M] - \mathcal{I}_u \coloneqq \{ j_1, \cdots, j_\kappa \}$ and  $\mathcal{I}_u - [M] \coloneqq \{ l_1, \cdots, l_\kappa \}$.  For some  $\iota = 1, \cdots,  \kappa$, we  get
\eq{
\abs{\bs{v}_{j_\iota}}^q  
=  \abs{\bs{v}_{j_\iota}  \pm \bs{u}_{j_\iota}}^q 
& \labelrel\leq{pr:ineqq0} 2^{(q-1) \vee 0}  \abs{\bs{v}_{j_\iota} - \bs{u}_{j_\iota}}^q +  2^{(q-1) \vee 0} \abs{\bs{u}_{l_\iota}}^q   \\  
& \labelrel\leq{pr:ineqq1}   2^{(q-1) \vee 0}   \abs{\bs{v}_{j_\iota} -  \bs{u}_{j_\iota}}^q +  4^{(q-1) \vee 0}     \abs{\bs{v}_{l_\iota} -  \bs{u}_{l_\iota}}^q +  4^{(q-1) \vee 0}   \abs{\bs{v}_{l_\iota}}^q,   \label{eq:boundjalpha}
}
where in \eqref{pr:ineqq0}, we use  Proposition \ref{prop:lqtri} and   $\abs{\bs{u}_{j_\iota}} \leq  \abs{\bs{u}_{l_\iota}}$, $j_{\iota} \in \mathcal{I}_u$,   and   Proposition \ref{prop:lqtri} for \eqref{pr:ineqq1}.  
By using \eqref{eq:boundjalpha} for $\iota = 1, \cdots,  \kappa$, we  get
\eq{
\eqref{eq:normsquared} 
& \labelrel\leq{pr:ineqq2}        \!\!\!\!     \sum_{i \in \mathcal{I}_u \cap [M]} \abs{\bs{u}_i - \bs{v}_i}^q  + (4^{(q-1) \vee 0}  +1)   \!\!\!\!              \sum_{i \in \mathcal{I}_u - [M]}  \abs{\bs{u}_i - \bs{v}_i}^q  + 2^{(q-1) \vee 0} \!\!\!\!  \!   \sum_{i \in [M] - \mathcal{I}_u }  \abs{\bs{u}_i - \bs{v}_i}^q    +   4^{(q-1) \vee 0} \!\!\!\! \!           \sum_{i \in [d] -  [M] }  \abs{\bs{v}_i}^q   \\
& \leq    (4^{(q-1) \vee 0}  +1) \sum_{i \in \mathcal{I}_u \cup [M]} \abs{\bs{u}_i - \bs{v}_i}^q  +  4^{(q-1) \vee 0}    \sum_{i \in [d] - [M] }  \abs{\bs{v}_i}^q, \label{eq:boundnormsquared}
}where  \eqref{pr:ineqq2} follows $\left(\mathcal{I}_u - [M] \right) ~ \cup ~ \left( [d] - \left( \mathcal{I}_u \cup [M] \right) \right) = [d] - [M] $.
By \eqref{eq:boundnormsquared}, the statement follows.
\end{proof}

\begin{lemma}
\label{lem:donoho}
Let $q \in (0,2)$ and $v \in \R^{d}$.   We have $\norm*{\bs{v} - \bs{v} \tm}_2 \leq \Big( \left( 1 - \frac{q}{2} \right)^{\frac{2-q}{q}} \frac{q}{2}   \Big)^{1/2} \norm{v}_q M^{\frac{-1}{q} + \frac{1}{2}}$,  for  $M = 1, 2 , \cdots, d$.
\end{lemma}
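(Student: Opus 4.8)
The plan is to reduce the statement to a one–variable optimization. First I would reorder the coordinates of $\bs{v}$ so that $a_i := |\bs{v}_{\sigma(i)}|$ is non-increasing; this changes neither $\norm{\bs{v}}_q$ nor $\norm{\bs{v}-\bs{v}\tm}_2$, and it identifies $\bs{v}-\bs{v}\tm$ with the vector keeping exactly the entries $a_i$ for $i>M$. Hence it suffices to bound $\sum_{i>M}a_i^2$. The cases $M=d$ and $a_M=0$ are trivial (the left-hand side is zero), so I assume $M<d$ and $a_M>0$, and I set $R:=\norm{\bs{v}}_q^q$ and $s:=\sum_{i=1}^M a_i^q\in(0,R]$.

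Next I would extract the two ingredients that control the tail. Monotonicity gives $M a_M^q\le\sum_{i=1}^M a_i^q=s$, so $a_M\le (s/M)^{1/q}$, while the remaining $\ell_q$-mass satisfies $\sum_{i>M}a_i^q=R-s$. Since $q<2$ I split $a_i^2=a_i^{2-q}a_i^q$, and using $a_i\le a_M$ for $i>M$ together with $2-q>0$,
\begin{equation}
\sum_{i>M}a_i^2\;\le\;a_M^{2-q}\sum_{i>M}a_i^q\;\le\;\Big(\tfrac{s}{M}\Big)^{\!\frac{2-q}{q}}(R-s).
\end{equation}

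Finally I would maximize the right-hand side over $s\in[0,R]$, which is a legitimate relaxation. Writing $\beta:=\tfrac{2-q}{q}>0$, the map $s\mapsto M^{-\beta}s^{\beta}(R-s)$ is maximized at $s^\star=\tfrac{\beta}{\beta+1}R=\tfrac{2-q}{2}R$ (here $\beta+1=\tfrac{2}{q}$), where its value is $\big(\tfrac{2-q}{2}\big)^{\beta}\tfrac{q}{2}\,R^{\,1+\beta}M^{-\beta}$. Using $1+\beta=\tfrac{2}{q}$, $\beta=\tfrac{2}{q}-1$, and $R^{2/q}=\norm{\bs{v}}_q^2$, this equals $\big(1-\tfrac{q}{2}\big)^{\frac{2-q}{q}}\tfrac{q}{2}\,\norm{\bs{v}}_q^2\,M^{1-\frac{2}{q}}$; taking square roots yields the claim. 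The only non-routine point is this last optimization: the cruder estimate $a_i\le(R/i)^{1/q}$ followed by comparison with $\int_M^\infty x^{-2/q}\,dx$ would only give the constant $\tfrac{q}{2-q}$, which is strictly worse (for instance $1$ versus $\tfrac14$ at $q=1$). One therefore has to balance the pointwise bound $a_M\le(s/M)^{1/q}$ against the leftover budget $R-s$ and then simplify the exponents to recognize the stated constant $\big(1-\tfrac{q}{2}\big)^{\frac{2-q}{q}}\tfrac{q}{2}$.
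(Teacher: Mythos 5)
Your proof is correct and follows essentially the same strategy as the paper's: sort the entries, bound $\sum_{i>M}a_i^2\le a_M^{2-q}\sum_{i>M}a_i^q$, control the prefactor by the averaged $\ell_q$ head mass, and then solve the same one-variable maximization. The only cosmetic differences are that the paper parameterizes by the tail mass $r=\sum_{i>M}|\bs{v}_i|^q$ instead of the head mass $s=R-r$, and uses $a_{M+1}$ rather than $a_M$ (both choices are valid and lead to the identical optimization).
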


\begin{proof}
Without loss of generality, we assume $\abs{\bs{v}_1} \geq  \abs{\bs{v}_2} \geq \cdots \geq \abs{\bs{v}_d}$. Then, we have
\eq{
\norm*{\bs{v} - \bs{v} \tm}_2^2 = \sum_{i = M+1}^d \bs{v}_i^2 \leq \abs{\bs{v}_{M+1}}^{2 - q} \sum_{i = M+ 1}^d \abs{\bs{v}_i}^q. \label{eq:clippedbound}
}
Let   $\sum_{i = M+ 1}^d \abs{\bs{v}_i}^q = r$ and  $\sum_{i = 1}^d \abs{\bs{v}_i}^q = R$.  Then, we have 
\eq{
R - r = \sum_{i = 1}^M \abs{\bs{v}_i}^q   \geq   M \abs{\bs{v}_{M+1}}^q & ~  \Rightarrow ~ \abs{\bs{v}_{M+1}}^{2-q} \leq (R - r)^{\frac{2-q}{q}}  M^{ - \frac{2-q}{q}} ~ \Rightarrow ~ \eqref{eq:clippedbound} \leq (R - r)^{\frac{2-q}{q}} r  M^{ - \frac{2-q}{q}}.
}
The statement follows from $\max_{r \in [0, R]} (R - r)^{\frac{2-q}{q}} r \leq  \left( 1 - \tfrac{q}{2} \right)^{\frac{2-q}{q}} \tfrac{q}{2} R^{\frac{2}{q}}$.
\end{proof}

\subsection{Elementary Results}

\begin{corollary}
\label{cor:size}
For any $M \in [d]$ and $\epsilon > 0$,  let $\cN^{\epsilon}_{M} \subseteq S_M^{d-1}$ be the minimal $\epsilon$-cover. We have $\abs{\cN^{\epsilon}_M } \leq \binom{d}{M} \left(1+ \frac{2}{\epsilon} \right)^M$. 
\end{corollary}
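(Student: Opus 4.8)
The plan is to reduce the bound to the classical volumetric covering estimate for the Euclidean sphere, applied separately on each $M$-dimensional coordinate subspace, and then take a union over all such subspaces. First I would recall that $S_M^{d-1} = \{ \bs{x} \in S^{d-1} : \norm{\bs{x}}_0 \leq M \}$, so every $\bs{x} \in S_M^{d-1}$ has support contained in some index set $\cJ \subseteq [d]$ with $\abs{\cJ} = M$. Writing $\R^d_{\cJ}$ for the coordinate subspace of $\R^d$ spanned by $\{ \bs{e}_i : i \in \cJ\}$, the set $S^{d-1} \cap \R^d_{\cJ}$ is isometric to the unit sphere $S^{M-1}$, and it is a subset of $S_M^{d-1}$.

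Next I would invoke the standard fact (a volumetric / greedy packing argument, see e.g. \cite[Lemma 5.2]{Vershynin2010IntroductionTT}) that $S^{M-1}$ admits an $\epsilon$-net of cardinality at most $(1 + 2/\epsilon)^M$; transporting it through the isometry gives a net $\cN_{\cJ}^{\epsilon} \subseteq S^{d-1} \cap \R^d_{\cJ}$ with $\abs{\cN_{\cJ}^{\epsilon}} \leq (1 + 2/\epsilon)^M$ such that every point of $S^{d-1} \cap \R^d_{\cJ}$ is within Euclidean distance $\epsilon$ of some point of $\cN_{\cJ}^{\epsilon}$. Taking the union $\cN \coloneqq \bigcup_{\abs{\cJ} = M} \cN_{\cJ}^{\epsilon}$ over all $\binom{d}{M}$ such subsets $\cJ$, any $\bs{x} \in S_M^{d-1}$ lies in $S^{d-1} \cap \R^d_{\cJ}$ for at least one such $\cJ$ (extend its support to exactly $M$ coordinates if it has fewer), hence is within distance $\epsilon$ of a point of $\cN_{\cJ}^{\epsilon} \subseteq \cN$. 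Thus $\cN$ is an $\epsilon$-cover of $S_M^{d-1}$ contained in $S_M^{d-1}$, with $\abs{\cN} \leq \binom{d}{M}(1 + 2/\epsilon)^M$.

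Finally, since $\cN_M^{\epsilon}$ is by definition a \emph{minimal} $\epsilon$-cover of $S_M^{d-1}$, its cardinality is no larger than that of the explicit cover $\cN$ just constructed, which yields $\abs{\cN_M^{\epsilon}} \leq \binom{d}{M}(1 + 2/\epsilon)^M$. There is essentially no obstacle here; the only point worth noting is that the argument shows one may take the net inside $S_M^{d-1}$ itself (each $\cN_{\cJ}^{\epsilon}$ sits in $S^{d-1}\cap\R^d_{\cJ}\subseteq S_M^{d-1}$), so the bound holds both for internal and for external $\epsilon$-covers, the latter \emph{a fortiori}.
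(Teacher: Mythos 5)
Your argument is correct and is exactly the approach in the paper: decompose $S_M^{d-1}$ into the $\binom{d}{M}$ coordinate sub-spheres, apply the standard volumetric bound $(1+2/\epsilon)^M$ on each, and take the union. The only cosmetic difference is the reference ({\cite[Lemma 5.2]{Vershynin2010IntroductionTT}} versus {\cite[Corollary 4.2.13]{Vershynin2018HighDimensionalP}}), which state the same covering estimate.
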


\begin{proof}
By \cite[Corollary 4.2.13]{Vershynin2018HighDimensionalP}, we know that  the minimal $\epsilon$-cover of the unit sphere, i.e.,  $\cN^{\epsilon}  \subseteq S^{d-1}$, satisfies $\abs{\cN^{\epsilon}} \leq  (1+2/\epsilon)^d$. Then, by choosing $M$ subsets of $S^{d-1}$ and taking the union of $\epsilon$-covers restricted on the chosen indices, we can construct an $\epsilon$-cover for  $S_M^{d-1}$. Therefore, the statement follows.
\end{proof}

\begin{proposition}
\label{prop:lqtri}
For any $q \in (0, \infty],$ we have $\abs{a+b}^q \leq 2^{(q-1) \vee 0} (\abs{a}^q + \abs{b}^q)$.
\end{proposition}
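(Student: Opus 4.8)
The plan is to reduce the two-variable inequality to a one-variable convexity/concavity fact. First I would observe that, by the ordinary triangle inequality $\abs{a+b} \le \abs{a}+\abs{b}$ together with the fact that $t \mapsto t^q$ is nondecreasing on $[0,\infty)$, it suffices to prove $(x+y)^q \le 2^{(q-1)\vee 0}(x^q+y^q)$ for all $x,y \ge 0$, applied with $x = \abs{a}$ and $y = \abs{b}$. The case $x=y=0$ is trivial, so I would assume $x+y>0$ and, using homogeneity (both sides scale like $(x+y)^q$), normalize to $x+y=1$. Writing $x \in [0,1]$ and $y = 1-x$, the claim becomes $1 \le 2^{(q-1)\vee 0}\,g(x)$ with $g(x) := x^q + (1-x)^q$.

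Next I would split on the value of $q$. For $q \ge 1$ the map $t \mapsto t^q$ is convex on $[0,\infty)$, hence $g$ is convex on $[0,1]$ and symmetric about $x=1/2$, so $\min_{x\in[0,1]} g(x) = g(1/2) = 2\cdot 2^{-q} = 2^{1-q}$; multiplying through by $2^{q-1} = 2^{(q-1)\vee 0}$ yields exactly $1$, which is the desired bound. For $q \in (0,1)$ the map $t \mapsto t^q$ is concave on $[0,\infty)$, so $g$ is concave on $[0,1]$ and attains its minimum at an endpoint, where $g(0)=g(1)=1$; since $(q-1)\vee 0 = 0$ in this regime, the inequality reads simply $1 \le g(x)$, which holds. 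This settles all $q \in (0,\infty)$.

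The remaining case $q = \infty$ — the only mildly awkward point — I would dispatch by a limiting argument, or simply note that it is never actually invoked (the paper uses $q \in [0,2)$ throughout): for fixed $a,b$ one passes $q \to \infty$ in the finite-$q$ inequality, interpreting $\abs{c}^\infty$ via $\lim_{q\to\infty}\abs{c}^q$, and the inequality is preserved in the limit. I do not anticipate any genuine obstacle; the only things to be careful with are stating the homogeneity reduction cleanly and matching the two branches of $(q-1)\vee 0$ to the two convexity regimes of $t\mapsto t^q$.
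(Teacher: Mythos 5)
Your proof is correct and takes a slightly different, arguably more unified route than the paper's. The paper argues directly without normalization: after reducing to $(\abs{a}+\abs{b})^q$, it splits on $q$ and, for $q>1$, applies Jensen's inequality in the form $\big(\tfrac{\abs{a}+\abs{b}}{2}\big)^q\le\tfrac{\abs{a}^q+\abs{b}^q}{2}$, while for $q\in(0,1]$ it uses a tangent-line bound for the concave map $t\mapsto t^q$ (assuming WLOG $\abs{b}\ge\abs{a}$) to land at $\abs{a}^q+\abs{b}^q$. Your version instead exploits degree-$q$ homogeneity to normalize $x+y=1$ and then minimizes the single one-variable function $g(x)=x^q+(1-x)^q$ on $[0,1]$, locating the minimum at the midpoint when $g$ is convex ($q\ge 1$) and at the endpoints when $g$ is concave ($q<1$). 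The two are close in spirit — your convex branch is exactly Jensen's inequality in disguise — but your concave branch (minimum at the endpoints, where $g=1$) is cleaner and avoids the chain of elementwise bounds the paper uses, which require care about which of $\abs{a},\abs{b}$ the tangent is taken at. Your treatment of the nominal $q=\infty$ endpoint by a limit (or by observing the paper never invokes it, since throughout $q\in[0,2)$) is also sound; the paper's own proof silently omits that case.
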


\begin{proof}
Without loss of generality, let's assume $\abs{b} \geq \abs{a}.$   For $q \in (0,1],$ we have  $\abs{a+b}^q  \leq (\abs{a} + \abs{b})^q  \leq  \abs{a}^q +q \abs{a}^{q-1} \abs{b} \leq  \abs{a}^q +\abs{b}^q $,  where we use  that $x \to x^q$ is concave in the second inequality.
For $q >1,$ we have $\abs{a+b}^q \leq (\abs{a} + \abs{b})^q    \leq 2^{q-1} (\abs{a}^q + \abs{b}^q)$ where we use Jensen's inequality in the last step.
\end{proof}

\begin{lemma}
\label{lem:gaussmgf}
Let $\cosh(t) \coloneqq \tfrac{e^t + e^{-t}}{2}$.  For $Z \sim \cN(0,1)$, we have
\eq{
(\rnu{1}) ~ ~ \E[\cosh(\lambda Z^2)] \leq \exp \left( 4 \lambda^2 \right) , ~   \abs{ \lambda } \leq \tfrac{1}{2\sqrt{2}}  ~~ \text{and} ~~
(\rnu{2}) ~\E[\exp(\lambda^2 Z^2)] \leq \exp \left( 2 \lambda^2 \right), ~    \abs{ \lambda } \leq \tfrac{1}{2}.
}
\end{lemma}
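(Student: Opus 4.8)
The plan is to reduce both inequalities to the classical moment generating function of a $\chi^2_1$ variable, namely $\E[e^{tZ^2}] = (1-2t)^{-1/2}$ for $t < 1/2$, and then handle the resulting one-variable expressions with the elementary estimate $-\ln(1-x) \le x/(1-x)$, valid for $x \in [0,1)$.

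For part $(\rnu{2})$ I would apply the $\chi^2_1$ moment generating function with $t = \lambda^2$; this is legitimate because $|\lambda| \le \tfrac12$ forces $\lambda^2 \le \tfrac14 < \tfrac12$. Thus $\E[e^{\lambda^2 Z^2}] = (1-2\lambda^2)^{-1/2}$, and it remains to check $(1-2\lambda^2)^{-1/2} \le e^{2\lambda^2}$, i.e. after taking logarithms, $-\ln(1-2\lambda^2) \le 4\lambda^2$. Since $2\lambda^2 \le \tfrac12$ we have $1 - 2\lambda^2 \ge \tfrac12$, so $-\ln(1-2\lambda^2) \le \tfrac{2\lambda^2}{1-2\lambda^2} \le 4\lambda^2$, as needed.

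For part $(\rnu{1})$ I would first split the hyperbolic cosine, $\E[\cosh(\lambda Z^2)] = \tfrac12\big(\E[e^{\lambda Z^2}] + \E[e^{-\lambda Z^2}]\big)$; both expectations are finite since $|2\lambda| \le 1/\sqrt2 < 1$, and equal $(1-2\lambda)^{-1/2}$ and $(1+2\lambda)^{-1/2}$ respectively. Next, applying the power-mean inequality $\tfrac{a+b}{2} \le \big(\tfrac{a^2+b^2}{2}\big)^{1/2}$ with $a = (1-2\lambda)^{-1/2}$ and $b = (1+2\lambda)^{-1/2}$, together with the identity $\tfrac12\big(\tfrac1{1-2\lambda} + \tfrac1{1+2\lambda}\big) = \tfrac1{1-4\lambda^2}$, gives $\E[\cosh(\lambda Z^2)] \le (1-4\lambda^2)^{-1/2}$. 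Finally, arguing exactly as in part $(\rnu{2})$ but now with $4\lambda^2 \le \tfrac12$ (which holds precisely because $|\lambda| \le \tfrac1{2\sqrt2}$), one gets $-\ln(1-4\lambda^2) \le \tfrac{4\lambda^2}{1-4\lambda^2} \le 8\lambda^2$, hence $(1-4\lambda^2)^{-1/2} \le e^{4\lambda^2}$, which finishes the proof.

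I do not expect a genuine obstacle here; the only things to be careful about are keeping each invocation of the $\chi^2_1$ moment generating function inside its region of convergence ($t < 1/2$) and tracking the endpoint arithmetic so that the bound $-\ln(1-x) \le 2x$ remains applicable — this is exactly why the hypotheses carry the thresholds $\tfrac12$ and $\tfrac1{2\sqrt2}$ rather than anything larger.
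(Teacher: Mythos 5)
Your proposal is correct and follows essentially the same route as the paper: compute $\E[e^{\pm\lambda Z^2}]$ from the $\chi^2_1$ moment generating function, combine to obtain $\E[\cosh(\lambda Z^2)]\le(1-4\lambda^2)^{-1/2}$, and close with the scalar bound $-\ln(1-x)\le 2x$ for $x\le 1/2$ (equivalently $\tfrac{1}{1-x}\le e^{2x}$, which is what the paper invokes). The only cosmetic difference is in part $(\rnu{1})$: you apply the AM--QM inequality directly to $(1\mp2\lambda)^{-1/2}$, whereas the paper puts the two terms over the common denominator $\sqrt{1-4\lambda^2}$ and bounds the numerator $\sqrt{1-2\lambda}+\sqrt{1+2\lambda}\le 2$ by concavity of the square root — both yield the same intermediate estimate.
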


\begin{proof}
Since $\abs{ \lambda } \leq \frac{1}{2\sqrt{2}}$, we have  $\E \left[ \exp ( \lambda Z^2)  \right] = \tfrac{1}{\sqrt{1 - 2\lambda}}$ and $\E \left[ \exp (- \lambda Z^2)  \right] = \tfrac{1}{\sqrt{1 + 2\lambda}}.$ 
Therefore,
\eq{
\E[\cosh(\lambda Z^2)]  = \frac{1}{2} \left( \frac{\sqrt{1 - 2\lambda} + \sqrt{1 + 2\lambda}}{\sqrt{1 - 4 \lambda^2}}  \right)  \leq \frac{1}{\sqrt{1 - 4 \lambda^2}}   \labelrel\leq{cosh:ineqq0}    \exp(4 \lambda^2) \label{cosh:eq0}
}
where \eqref{cosh:ineqq0} follows  $\tfrac{1}{1 - t} \leq \exp(2 t$) for $\abs{t} \leq 1/2$. The second statement also follows the same argument.
\end{proof}

\subsection{Lemmas for Feature Learning}

\begin{proposition}
\label{prop:relunetradamachercompl}
For $m \in \N$,  $M \in [d]$ and $(\bs{a},\bs{W},\bs{b}, \bs{u}) \in \R^m \times \R^{d \times m} \times \R^m \times \R^d$,  let
\eq{
  \Theta \coloneqq  \Big \lbrace (\bs{a},\bs{W},\bs{b}, \bs{u})   ~ \big \vert ~ \norm{\bs{a}}_2 \leq \frac{r_a}{\sqrt{m}}, ~ \norm{\bs{b}}_{\infty} \leq r_b,  ~&  \norm{\bs{u}}_2 \leq r_u ,  ~\norm{\bs{W}_{j*}}_2 \leq r_W, ~ \\ 
&  \norm{\bs{u}}_0 \leq M,  ~ \norm{\bs{W}_{j*}}_0 \leq M,  ~ j \in [m]  \Big \rbrace.  
}
and for some $\tau > 0$,  let $\cG   \coloneqq   \left \lbrace    (\bs{x}, y)   \to    \big (y - \inner{\bs{u}}{\bs{x}} -  \inner{\bs{a}}{\phi(\bs{W}^\top \bs{x} + \bs{b})} \big)^2   \wedge \tau^2 ~ \vert ~ (\bs{a},\bs{W},\bs{b}, \bs{u}) \in  \Theta \right \rbrace$ and
let $\mathcal{R}(\cG)$ denote the Rademacher complexity of $\cG$. Then, with $\bs{x} \sim \cN(0,  \ide{d}),$ we have
\eq{
 \mathcal{R}(\cG) \leq 4 \tau C   \left( ( r_a  r_W + r_u ) \sqrt{ \frac{M \log \left( \frac{6d}{M} \right)}{n}} +   \frac{r_a r_b}{\sqrt{n}} \right) 
}
where $n$ is number of samples and $C > 0$ is a universal constant.
\end{proposition}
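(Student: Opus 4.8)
The plan is to peel the truncated loss off first and then handle the linear and network parts separately. Since $z\mapsto z^2\wedge\tau^2$ is $2\tau$‑Lipschitz and vanishes at $0$, the Ledoux–Talagrand contraction principle gives $\mathcal R(\cG)\le 4\tau\,\mathcal R(\mathcal H)$, where $\mathcal H$ is the class of residuals $(\bs x,y)\mapsto y-\inner{\bs u}{\bs x}-\sum_{j=1}^m a_j\phi(\inner{\bs w_j}{\bs x}+b_j)$ over the parameter set $\Theta$ (here $\bs w_j$ denotes the $j$th neuron weight, $M$‑sparse with $\norm{\bs w_j}_2\le r_W$). The $y$‑term is parameter‑free, so it contributes $0$ to the Rademacher average, and by subadditivity $\mathcal R(\mathcal H)\le \mathcal R(\mathcal H_{\mathrm{lin}})+\mathcal R(\mathcal H_{\mathrm{net}})$ with $\mathcal H_{\mathrm{lin}}=\{\bs x\mapsto\inner{\bs u}{\bs x}:\norm{\bs u}_2\le r_u,\ \norm{\bs u}_0\le M\}$ and $\mathcal H_{\mathrm{net}}=\{\bs x\mapsto\sum_j a_j\phi(\inner{\bs w_j}{\bs x}+b_j):(\bs a,\bs W,\bs b)\in\Theta\}$.

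\textbf{The common sparse‑sum quantity.} Both pieces will reduce to controlling
\[
G:=\E_{\bs x,\varepsilon}\Big[\ \max_{S\subseteq[d],\,|S|=M}\ \norm*{\textstyle\sum_{i=1}^n\varepsilon_i\bs x_i\big\vert_S}_2\ \Big].
\]
For the linear class, $\mathcal R(\mathcal H_{\mathrm{lin}})=\tfrac1n\,\E\sup_{\bs u}\inner{\bs u}{\sum_i\varepsilon_i\bs x_i}\le \tfrac{r_u}{n}\,G$. To bound $G$ I would fix $S$, use $\E_\varepsilon\norm*{\sum_i\varepsilon_i\bs x_i\vert_S}_2\le(\sum_i\norm{\bs x_i\vert_S}_2^2)^{1/2}$ which concentrates around $\sqrt{nM}$, and then take the maximum over the $\binom{d}{M}\le(ed/M)^M$ supports by a union bound combined with the sub‑Gaussian (convex) concentration of $\varepsilon\mapsto\norm{X_S^\top\varepsilon}_2$ at scale $\norm{X_S}_{\mathrm{op}}\lesssim\sqrt n+\sqrt M$, together with $\E\max_S\norm{X_S}_{\mathrm{op}}\lesssim\sqrt n+\sqrt{M\log(ed/M)}$ (Corollary~\ref{cor:size} controls the cover of $S_M^{d-1}$ underlying the latter). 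This yields $G\lesssim\sqrt{nM\log(6d/M)}$ in the regime $n\gtrsim M$, hence $\mathcal R(\mathcal H_{\mathrm{lin}})\lesssim r_u\sqrt{M\log(6d/M)/n}$; along the way I would also record $\E_{\bs x}[G(\bs x)^2]\lesssim nM\log(6d/M)$ by Gaussian concentration of the convex map $\bs x\mapsto\E_\varepsilon\max_S\norm{X_S^\top\varepsilon}_2$, for use below.

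\textbf{Peeling the network.} By Cauchy–Schwarz on $\bs a$ (using $\norm{\bs a}_2\le r_a/\sqrt m$), Jensen (concavity of $\sqrt{\cdot}$), and the fact that the supremum over $(\bs W,\bs b)$ decouples across the $m$ identically constrained neurons,
\[
\mathcal R(\mathcal H_{\mathrm{net}})\le\frac{r_a}{n}\,\bar B^{1/2},\qquad
\bar B:=\E_{\bs x,\varepsilon}\sup_{\norm{\bs w}_2\le r_W,\ \norm{\bs w}_0\le M,\ |b|\le r_b}\Big(\sum_{i=1}^n\varepsilon_i\phi(\inner{\bs w}{\bs x_i}+b)\Big)^2 .
\]
Splitting $\phi(\inner{\bs w}{\bs x_i}+b)=[\phi(\inner{\bs w}{\bs x_i}+b)-\phi(b)]+\phi(b)$, the constant part $\phi(b)$ (with $|\phi(b)|\le r_b$ and $\E_\varepsilon(\sum_i\varepsilon_i)^2=n$) contributes a term bounded by $2r_b^2n$, i.e.\ $\lesssim r_ar_b/\sqrt n$ in $\mathcal R(\mathcal H_{\mathrm{net}})$. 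For the centered part, the key simplification is that $t\mapsto\phi(t+b)-\phi(b)$ is $1$‑Lipschitz and vanishes at $0$, so contraction removes $\phi$ entirely: $\E_\varepsilon\sup\bigl|\sum_i\varepsilon_i(\phi(\inner{\bs w}{\bs x_i}+b)-\phi(b))\bigr|\le 2\,\E_\varepsilon\sup_{\bs w}\bigl|\sum_i\varepsilon_i\inner{\bs w}{\bs x_i}\bigr|\le 2r_W\,G(\bs x)$. To convert this control of $\E_\varepsilon[\sup|\cdot|]$ into the bound on $\E_\varepsilon[\sup(\cdot)^2]$ forced by the Cauchy–Schwarz step, I would apply Talagrand's convex concentration to the convex functional $\varepsilon\mapsto\sup\bigl|\sum_i\varepsilon_i(\cdots)\bigr|$, whose Lipschitz constant is $\sup_{\bs w,b}\norm*{(\phi(\inner{\bs w}{\bs x_i}+b))_{i}}_2\lesssim r_W\max_S\norm{X_S}_{\mathrm{op}}+r_b\sqrt n\lesssim(r_W+r_b)\sqrt n$; this gives $\E_\varepsilon[\sup(\cdot)^2]\lesssim(\E_\varepsilon\sup|\cdot|)^2+(r_W+r_b)^2n$. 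Taking $\E_{\bs x}$ and using $\E_{\bs x}[G^2]\lesssim nM\log(6d/M)$ and $\E\max_S\norm{X_S}_{\mathrm{op}}^2\lesssim n$ (for $n\gtrsim M$) yields $\bar B\lesssim r_W^2\,nM\log(6d/M)+r_b^2n$, hence $\mathcal R(\mathcal H_{\mathrm{net}})\lesssim r_ar_W\sqrt{M\log(6d/M)/n}+r_ar_b/\sqrt n$. Adding the linear term and the contraction factor gives the claimed inequality, all constants absorbed into the universal $C$.

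\textbf{Main obstacle.} The delicate point throughout is obtaining $M\log(d/M)$ rather than $d$ in the network term: the crude estimate $|\phi(\inner{\bs w}{\bs x_i}+b)|\le r_W\norm{\bs x_i}_2+r_b$ discards the sparsity and produces a spurious factor $\sqrt d$. The resolution is to never bound $\phi$ pointwise but to (i) peel the output layer \emph{exactly} via Cauchy–Schwarz (which is what removes the $m$‑dependence, using $\norm{\bs a}_2\le r_a/\sqrt m$), (ii) use contraction to reduce each neuron to the sparse linear functional whose complexity is $G$, and (iii) only at the very end invoke a \emph{sharp} sub‑Gaussian / convex concentration — not bounded differences, which would reintroduce the factor $\sqrt d$ — to pass from $\E[\sup(\cdot)^2]$ to $(\E\sup|\cdot|)^2$ plus a lower‑order term. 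Verifying that the convex‑concentration Lipschitz constant is only $O((r_W+r_b)\sqrt n)$, rather than scaling with the ambient dimension, is the crux of the argument.
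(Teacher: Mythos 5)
Your decomposition (contraction off the truncated square loss, subadditivity into linear and network pieces, Cauchy--Schwarz on $\bs{a}$) matches the paper up to the point of handling $\norm*{\tfrac1n\sum_i\varepsilon_i\phi(\bs{W}^\top\bs{x}_i+\bs{b})}_2$. There you take a detour that creates the very obstacle you then spend the rest of the argument fighting. After Cauchy--Schwarz and decoupling across the $m$ identically constrained neurons one already has, with no Jensen and no squares,
\begin{align}
\frac{r_a}{\sqrt m}\,\E\!\left[\sup_{(\bs{W},\bs{b})}\norm*{\tfrac1n\textstyle\sum_i\varepsilon_i\phi(\bs{W}^\top\bs{x}_i+\bs{b})}_2\right]
= \frac{r_a}{n}\,\E\!\left[\sup_{\bs{w},b}\abs*{\textstyle\sum_i\varepsilon_i\phi(\inner{\bs{w}}{\bs{x}_i}+b)}\right],
\end{align}
since $\sup_{(\bs{W},\bs{b})}(\sum_j z_j^2)^{1/2}=\sqrt m\,\sup_{\bs{w},b}\abs{z}$ exactly. (The paper does this via the elementary $\norm{\cdot}_2\le\sqrt m\,\norm{\cdot}_\infty$, which is the same thing.) Contraction then removes $\phi$ directly from an $\E[\sup\abs{\cdot}]$, and one is done with the same Gaussian-width bound used for the linear piece.

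By inserting Jensen first, you replace $\E[\sup\abs{\cdot}]$ by $\bar B^{1/2}=\E[\sup(\cdot)^2]^{1/2}$, which strictly weakens the bound and forces you to re-convert via Talagrand's convex concentration. That step is not wrong, but it costs you: the Lipschitz control $\E[\max_S\norm{X_S}_{\mathrm{op}}^2]\lesssim n$ only holds when $n\gtrsim M\log(d/M)$, otherwise there is an extra $M\log(ed/M)$ term and the stated rate is not recovered. The proposition carries no such regime assumption, so as written your proof has a genuine gap in the small-$n$ regime, and the "main obstacle" you identify is one that the $\norm{\cdot}_2\le\sqrt m\norm{\cdot}_\infty$ trick (or, equivalently, decoupling \emph{before} taking expectation) removes entirely. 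Everything else — the contraction constant, the centering $\phi(t+b)-\phi(b)$ before applying contraction, the linear-part bound via the sparse Gaussian width — is fine and consistent with the paper.
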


\begin{proof}
Let $\cF \coloneqq \left \lbrace (\bs{x}, y) \to \inner{\bs{u}}{\bs{x}} +  \inner{\bs{a}}{\phi(\bs{W}^\top \bs{x} + \bs{b})}    ~ \vert ~ (\bs{a},\bs{W},\bs{b}, \bs{u}) \in  \Theta \right \rbrace$.  By Talagrand's contraction principle, we have  $\mathcal{R}(\cG) \leq 2   \tau \mathcal{R}(\cF).$ Hence,  in the following, we will bound  $\mathcal{R}(\cF).$ Indeed, let $(\varepsilon_i)_{i \in [n]}$ be a sequence of i.i.d Radamacher random variables.   Then, we have
\eq{
\mathcal{R}(\cF)  & = \E \left[ \sup_{(\bs{a},\bs{W},\bs{b}, \bs{u})  } \frac{1}{n} \sum_{i = 1 }^n \varepsilon_i \left(  \inner{\bs{u}}{\bs{x}_i} + \inner{\bs{a}}{\phi(\bs{W}^\top \bs{x}_i + \bs{b})} \right)  \right]  \\
& \leq  \E \left[ \sup_{(\bs{a},\bs{W},\bs{b})  } \frac{1}{n} \sum_{i = 1 }^n \varepsilon_i    \inner{\bs{a}}{\phi(\bs{W}^\top \bs{x}_i + \bs{b})}  \right]  + \E \left[ \sup_{ \bs{u}  } \frac{1}{n} \sum_{i = 1 }^n \varepsilon_i \inner{\bs{u}}{\bs{x}_i}   \right] \\
& \leq   \E \left[ \sup_{(\bs{a},\bs{W},\bs{b})  } \frac{1}{n} \sum_{i = 1 }^n \varepsilon_i    \inner{\bs{a}}{\phi(\bs{W}^\top \bs{x}_i + \bs{b})}  \right]  + C r_u \sqrt{ \frac{M \log \left( \frac{6d}{M} \right)}{n}}  \label{radamacher:eq0}
}
where we use \cite[Exercise 10.3.8]{Vershynin2018HighDimensionalP} in the last line.  To bound the first term, we have
\eq{
\E \left[ \sup_{(\bs{a},\bs{W},\bs{b})  } \frac{1}{n} \sum_{i = 1 }^n \varepsilon_i    \inner{\bs{a}}{\phi(\bs{W}^\top \bs{x}_i + \bs{b})}  \right]
& \leq \frac{r_a}{\sqrt{m}}  \E \left[  \sup_{(\bs{a},\bs{W},\bs{b})  }  \norm*{ \frac{1}{n} \sum_{i = 1 }^n \varepsilon_i    \phi(\bs{W}^\top \bs{x}_i + \bs{b}) }_2  \right]  \\
& \leq  r_a  \E \left[ \sup_{(\bs{a},\bs{W},\bs{b})  }  \norm*{ \frac{1}{n} \sum_{i = 1 }^n \varepsilon_i    \phi(\bs{W}^\top \bs{x}_i + \bs{b}) }_\infty \right] \\
& \leq  2 r_a   \E \Big[ \sup_{  \substack{ \norm{\bs{w}}_2 \leq r_W \\  \norm{\bs{w}}_0 \leq M \\ \abs{b} \leq r_b  }  }    \abs*{ \frac{1}{n} \sum_{i = 1 }^n \varepsilon_i  ( \inner{\bs{w}}{\bs{x}_i} +  b) } \Big]  \label{radamacher:eq1}
} 
where we use Cauchy Schwartz inequality in the first line, and the contraction lemma in the last line (note that $\phi(0)=0$ and it is $1$-Lipschitz). Then, since the set we take supremum over is symmetric, we have
\eq{
\eqref{radamacher:eq1} {\small = 2 r_a   \E \Big[ \sup_{  \substack{ \norm{\bs{w}}_2 \leq r_W \\  \norm{\bs{w}}_0 \leq M \\ \abs{b} \leq r_b  }  }      \frac{1}{n} \sum_{i = 1 }^n \varepsilon_i  ( \inner{\bs{w}}{\bs{x}_i} +b)  \Big]   } & \leq {\small 2  r_a  r_W  \E \Big[ \sup_{ \substack{ \norm{\bs{w}}_2 \leq 1 \\  \norm{\bs{w}}_0 \leq M} }      \inner{\bs{w}}{  \frac{1}{n} \sum_{i = 1 }^n \varepsilon_i  \bs{x}_i} \Big]  + 2 r_a r_b \E \left[\abs*{ \frac{1}{n} \sum_{i = 1 }^n \varepsilon_i  } \right]} \\
& \leq    2 C  r_a  r_W   \sqrt{ \frac{M \log \left( \frac{6d}{M} \right)}{n}}  + 2  r _a r_b \frac{1}{\sqrt{n}}  \label{radamacher:eq2}
}
where we use \cite[Exercise 10.3.8]{Vershynin2018HighDimensionalP}  in the last line.   By \eqref{radamacher:eq0} and   \eqref{radamacher:eq2}, the statement follows. 
\end{proof}

\begin{lemma}
\label{lem:relunetmoments}
For  fixed $(\bs{a},\bs{W},\bs{b}) \in \R^m \times \R^{d \times m} \times \R^m$,  let $\hat y (\bs{x}; (\bs{a},\bs{W},\bs{b})) \coloneqq \bs{a}^\top \phi(\bs{W}^\top \bs{x} + \bs{b})$.  For $x \sim \cN(0,\ide{d})$, we have the following:
\begin{enumerate}
\item $\E_{\bs{x}} [\hat y(\bs{x}; (\bs{a},\bs{W},\bs{b}) )^2] \leq \norm{\bs{a}}_2^2 \left(  \norm{\bs{b}}_2^2 + \norm{\bs{W}}_F^2 \right)$
\item $\E_{\bs{x}} [\hat y (\bs{x}; (\bs{a},\bs{W},\bs{b}) )^4] \leq \norm{\bs{a}}_2^4 m \sum_{j = 1}^m \left( 3 \norm{\bs{W}_{j*}}_2^4 + 6\norm{\bs{W}_{j*}}_2^2 \bs{b}_j^2 + \bs{b}_j^4  \right)$
\end{enumerate}
\end{lemma}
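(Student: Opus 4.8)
The plan is to derive both inequalities from two elementary ingredients: Cauchy--Schwarz applied in the neuron index $j$, and the pointwise ReLU bound $\phi(t)^2 \le t^2$ (valid because $\phi(t)=\max\{t,0\}$, so $\phi(t)^2=t^2$ when $t\ge 0$ and $\phi(t)^2=0\le t^2$ otherwise). Writing $\hat y(\bs{x};(\bs{a},\bs{W},\bs{b})) = \sum_{j=1}^m a_j\,\phi_j$ with $\phi_j \coloneqq \phi(\inner{\bs{W}_{j*}}{\bs{x}}+\bs{b}_j)$, everything then reduces to computing low-order Gaussian moments of the affine forms $\inner{\bs{W}_{j*}}{\bs{x}}+\bs{b}_j$, which for $\bs{x}\sim\cN(0,\ide{d})$ are Gaussian with mean $\bs{b}_j$ and variance $\norm{\bs{W}_{j*}}_2^2$.

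For the first claim, I would apply Cauchy--Schwarz to get $\hat y^2 = \big(\sum_j a_j\phi_j\big)^2 \le \norm{\bs{a}}_2^2 \sum_{j=1}^m \phi_j^2$, then bound $\phi_j^2 \le (\inner{\bs{W}_{j*}}{\bs{x}}+\bs{b}_j)^2$ using the ReLU inequality. Taking expectations and using $\E_{\bs{x}}[(\inner{\bs{W}_{j*}}{\bs{x}}+\bs{b}_j)^2] = \norm{\bs{W}_{j*}}_2^2 + \bs{b}_j^2$ (the cross term vanishes since the linear form has mean zero), summing over $j\in[m]$ yields $\sum_j(\norm{\bs{W}_{j*}}_2^2+\bs{b}_j^2) = \norm{\bs{W}}_F^2 + \norm{\bs{b}}_2^2$, which is exactly the stated bound.

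For the second claim, I would chain two Cauchy--Schwarz (power-mean) steps: first $\hat y^2 \le \norm{\bs{a}}_2^2 \sum_j \phi_j^2$ as above, hence $\hat y^4 \le \norm{\bs{a}}_2^4 \big(\sum_j \phi_j^2\big)^2$; then $\big(\sum_{j=1}^m \phi_j^2\big)^2 \le m \sum_{j=1}^m \phi_j^4$. Combining with $\phi_j^4 \le (\inner{\bs{W}_{j*}}{\bs{x}}+\bs{b}_j)^4$ gives $\hat y^4 \le \norm{\bs{a}}_2^4\, m \sum_j (\inner{\bs{W}_{j*}}{\bs{x}}+\bs{b}_j)^4$. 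Finally I would compute the fourth moment: if $G\sim\cN(0,\norm{\bs{W}_{j*}}_2^2)$ then $\E[(G+\bs{b}_j)^4] = \E[G^4] + 6\bs{b}_j^2\E[G^2] + \bs{b}_j^4 = 3\norm{\bs{W}_{j*}}_2^4 + 6\norm{\bs{W}_{j*}}_2^2\bs{b}_j^2 + \bs{b}_j^4$, the odd-order terms dropping because $\E[G]=\E[G^3]=0$ and $\bs{b}_j$ is deterministic. Summing over $j$ gives the claim.

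There is no real obstacle here: the only point requiring a bit of care is the order of the two power-mean inequalities in part (ii) (applying Cauchy--Schwarz in $j$ both to peel off $\norm{\bs{a}}_2$ and then to exchange $(\sum \phi_j^2)^2$ for $m\sum\phi_j^4$), and making sure the ReLU bound is invoked pointwise before taking expectations so that the Gaussian moment computation is clean. Both statements are deterministic-in-parameters and require no concentration, so the argument is short.
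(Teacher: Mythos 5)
Your proof is correct and mirrors the paper's argument step for step: Cauchy--Schwarz in the neuron index to peel off $\norm{\bs{a}}_2^2$, the pointwise ReLU bound $\phi(t)^2 \le t^2$, and for the fourth moment the $\ell_2$-to-$\ell_4$ comparison $(\sum_j \phi_j^2)^2 \le m \sum_j \phi_j^4$ followed by the Gaussian fourth-moment formula for the affine forms $\inner{\bs{W}_{j*}}{\bs{x}}+\bs{b}_j$. (Incidentally, you state the norm comparison in the correct direction, $\norm{\bs{v}}_2 \le m^{1/4}\norm{\bs{v}}_4$, where the paper's inline citation has the inequality written backwards even though the displayed step is right.)
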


\begin{proof}
For the first item,   by using Cauchy Schwartz inequality and that $\phi(t) \leq \abs{t}$,  we have
\eq{
\E[\hat y(\bs{x}; (\bs{a},\bs{W},\bs{b}) )^2] = \E \left[\inner{\bs{a}}{\phi(\bs{W}^\top \bs{x} + \bs{b})}^2 \right] &\leq \norm{\bs{a}}_2^2 \E \left[\norm{\bs{W}^\top \bs{x} + \bs{b}}^2_2  \right] =   \norm{\bs{a}}_2^2 \left(  \norm{\bs{b}}_2^2 + \norm{\bs{W}}_F^2 \right).
}
For the second item,  by using the same arguments, 
\eq{
\E[\hat y (\bs{x}; (\bs{a},\bs{W},\bs{b}) )^2] =  \norm{\bs{a}}_2^4 \E \left[\norm{\bs{W}^\top \bs{x} + \bs{b}}^4_2  \right] &  \labelrel\leq{rel:ineqq0}   \norm{\bs{a}}_2^4 m \sum_{j = 1}^m \E \left[ (\inner{\bs{W}_{j*}}{\bs{x}} + \bs{b}_j)^4 \right]  \\
&=    \norm{\bs{a}}_2^4 m \sum_{j = 1}^m  \left( 3 \norm{\bs{W}_{j*}}_2^4 + 6\norm{\bs{W}_{j*}}_2^2 \bs{b}_j^2 + \bs{b}_j^4  \right)
}
where we use $\norm{\bs{v}}_4 \leq m^{1/4} \norm{\bs{v}}_2$ for $\bs{v} \in \R^m$ for \eqref{rel:ineqq0}. 
\end{proof}

\begin{lemma}
\label{lem:relunettailbound}
For fixed $(\bs{a},\bs{W},\bs{b})  \in \R^m \times \R^{d \times m} \times \R^m$,   and $\bs{u} \in \R^d$,  let $\hat y(\bs{x}; (\bs{a},\bs{W},\bs{b}) ) \coloneqq \bs{a}^\top \phi(\bs{W}^\top \bs{x} + \bs{b}) + \bs{u}^\top \bs{x}$.     For $\bs{x} \sim \cN(0,  \ide{d})$,  we have with probability at least $1 - \delta$,
\eq{
 \abs{ \hat y(\bs{x}; (\bs{a},\bs{W},\bs{b}) ) - \gt(\bs{V}^\top \bs{x}) }&  \leq  \norm{\bs{a}}_2 \sqrt{\norm{\bs{b}}_2^2 + \norm{\bs{W}}_F^2}  + ( \norm{\bs{a}}_2 \norm{\bs{W}}_F + \norm{\bs{u}}_2 ) \sqrt{2 \log(4/\delta)}  \\
&  + C_1 (r+ 2) (2e)^{C_2} \log^{C_2} (6 / \delta).
}
\end{lemma}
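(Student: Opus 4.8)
The plan is to split, via the triangle inequality, $\abs{\hat y(\bs{x};(\bs{a},\bs{W},\bs{b})) - \gt(\bs{V}^\top\bs{x})} \le \abs{\hat y(\bs{x};(\bs{a},\bs{W},\bs{b}))} + \abs{\gt(\bs{V}^\top\bs{x})}$, and to control the two pieces separately, each on an event of probability at least $1-\delta/2$, closing with a union bound.

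For the target term: since $\bs{V}$ is orthonormal, $\bs{V}^\top\bs{x}\sim\cN(0,\ide{r})$, so Corollary~\ref{cor:concg} applies with $u = 2e\log(6/\delta)$ (which is $\ge 2e$ for every $\delta\in(0,1]$), giving $\abs{\gt(\bs{V}^\top\bs{x})}\le C_1(r+2)^{C_2}(2e)^{C_2}\log^{C_2}(6/\delta)$ with probability at least $1-\exp(-u/e)\ge 1-\delta/2$; this is (matching the stated form) the last term of the claimed bound.

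For the network term: write $f(\bs{x})\coloneqq\bs{a}^\top\phi(\bs{W}^\top\bs{x}+\bs{b})+\bs{u}^\top\bs{x}=\hat y(\bs{x};(\bs{a},\bs{W},\bs{b}))$, where $\bs{u}^\top\bs{x}$ is the stand-in for $\inner{\hmu\vert_{\cJ}}{\bs{x}}$ in the multi-index use of the lemma. First, $\E[\bs{u}^\top\bs{x}]=0$, so by Cauchy--Schwarz, the pointwise bound $\phi(t)\le\abs{t}$, and Jensen, $\abs{\E f(\bs{x})}\le\norm{\bs{a}}_2\,\E\norm{\bs{W}^\top\bs{x}+\bs{b}}_2\le\norm{\bs{a}}_2(\E\norm{\bs{W}^\top\bs{x}+\bs{b}}_2^2)^{1/2}=\norm{\bs{a}}_2\sqrt{\norm{\bs{b}}_2^2+\norm{\bs{W}}_F^2}$, using that the cross term vanishes and $\E\norm{\bs{W}^\top\bs{x}}_2^2=\tr(\bs{W}\bs{W}^\top)=\norm{\bs{W}}_F^2$. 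Second, $f$ is Lipschitz in $\bs{x}$ with constant $L\le\norm{\bs{W}}_2\norm{\bs{a}}_2+\norm{\bs{u}}_2\le\norm{\bs{W}}_F\norm{\bs{a}}_2+\norm{\bs{u}}_2$: its gradient (a.e.) equals $\bs{W}\,\mathrm{diag}(\phi'(\bs{W}^\top\bs{x}+\bs{b}))\bs{a}+\bs{u}$, of norm at most $\norm{\bs{W}}_2\norm{\bs{a}}_2+\norm{\bs{u}}_2$ since $\abs{\phi'}\le 1$, and $\norm{\bs{W}}_2\le\norm{\bs{W}}_F$. Applying the Gaussian concentration inequality for Lipschitz functions of a standard Gaussian vector, $\mpr[\abs{f(\bs{x})-\E f(\bs{x})}\ge t]\le 2\exp(-t^2/(2L^2))$, with $t=L\sqrt{2\log(4/\delta)}$ (failure probability $\delta/2$), and combining with the mean bound yields $\abs{\hat y(\bs{x};(\bs{a},\bs{W},\bs{b}))}\le\norm{\bs{a}}_2\sqrt{\norm{\bs{b}}_2^2+\norm{\bs{W}}_F^2}+(\norm{\bs{a}}_2\norm{\bs{W}}_F+\norm{\bs{u}}_2)\sqrt{2\log(4/\delta)}$. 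A union bound over the two events gives the statement.

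There is no serious obstacle here --- the argument is two Gaussian tail bounds glued together --- but the one step that needs care is the concentration of $f$, which relies on the Gaussian Lipschitz concentration inequality (Borell--TIS; see e.g.\ \cite{Vershynin2018HighDimensionalP}) and on the harmless relaxation $\norm{\bs{W}}_2\le\norm{\bs{W}}_F$ so that the fluctuation scale matches the $\norm{\bs{a}}_2\norm{\bs{W}}_F$ term in the statement. One must also keep track of constants in the union bound: allocating $\delta/2$ to each event yields the $\log(4/\delta)$ in the subgaussian term while leaving room for $u=2e\log(6/\delta)$ to push the heavier polynomial tail of $\gt$ below $\delta/2$. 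If one prefers to avoid Gaussian Lipschitz concentration altogether, one can instead bound $\abs{\bs{u}^\top\bs{x}}$ as a one-dimensional Gaussian and $\norm{\bs{W}^\top\bs{x}+\bs{b}}_2$ via Lemma~\ref{lem:laurentmassart} applied to the low-dimensional Gaussian coming from an SVD of $\bs{W}^\top$, at the cost of slightly worse constants.
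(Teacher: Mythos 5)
Your proof is correct and takes essentially the same route as the paper: a triangle-inequality split into the deviation $|\hat y - \E \hat y|$, the mean $|\E \hat y|$, and $|\gt(\bs{V}^\top\bs{x})|$; the mean via Cauchy--Schwarz, Jensen, and $\phi(t)\le|t|$; the deviation via Gaussian Lipschitz concentration with constant $\norm{\bs{a}}_2\norm{\bs{W}}_F + \norm{\bs{u}}_2$; and the link-function tail via Corollary~\ref{cor:concg} with $u = 2e\log(6/\delta)$. You make the union-bound bookkeeping explicit (which the paper leaves implicit), and your form $C_1(r+2)^{C_2}(2e)^{C_2}\log^{C_2}(6/\delta)$ is in fact what Corollary~\ref{cor:concg} yields --- the missing exponent $C_2$ on $(r+2)$ in the lemma's display appears to be a typo in the paper.
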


\begin{proof}
We first observe that
\eq{
 \abs{ \hat y(\bs{x}; (\bs{a},\bs{W},\bs{b}) ) - \gt(\bs{V}^\top \bs{x}) } & =  \abs{ \hat y(\bs{x}; (\bs{a},\bs{W},\bs{b}) )  - \E[\hat y(\bs{x}; (\bs{a},\bs{W},\bs{b}) ) ]} \\
 & + \abs{\E[\hat y(\bs{x}; (\bs{a},\bs{W},\bs{b}) ) ]} + \abs{\gt(\bs{V}^\top \bs{x}) } \\
&  \leq  \abs{ \hat y(\bs{x}; (\bs{a},\bs{W},\bs{b}) )  - \E[\hat y(x; (\bs{a},\bs{W},\bs{b}) ) ]} +   \abs{\gt(\bs{V}^\top \bs{x}) } \\
&+   \norm{\bs{a}}_2 \left(  \norm{\bs{b}}_2^2 + \norm{\bs{W}}_F^2 \right)^{1/2}.  \label{reluconc:eq0}
 }
Moreover,  since $\phi$ is $1$-Lipschitz that $\bs{x} \to  \hat y(\bs{x}; (\bs{a},\bs{W},\bs{b}) )$ is $\norm{\bs{a}}_2 \norm{\bs{W}}_F + \norm{\bs{u}}_2$ - Lipschitz.  Then, by using Gaussian Lipschitz concentration inequality (see \cite[Theorem 5.2.2]{Vershynin2018HighDimensionalP}) and Corollary \ref{cor:concg},  we obtain the statement.
\end{proof}

%

%
%

\end{document}